\crefname{equation}{}{}
\Crefname{equation}{}{}
\definecolor{commentcolor}{RGB}{110,154,155}   %
\NewDocumentCommand{\bp}{m}{\left(#1\right)}
\NewDocumentCommand{\rp}{m}{\mleft(#1\mright)}
\newcommand{\base}{o}
\newcommand{\bs}[1]{\left[#1\right]}
\newcommand{\bc}[1]{\left\{#1\right\}}
\newcommand{\R}{\mathbb{R}}
\newcommand{\N}{\mathbb{N}}
\newcommand{\Z}{\mathbb{Z}}
\newcommand{\op}{}
\newcommand{\casework}[1]{\begin{cases}#1\end{cases}}
\newcommand{\dNorm}{\mathcal{N}}
\newcommand{\dBin}{\operatorname{Bin}}
\theoremstyle{plain}
\newtheorem{theorem}{Theorem}
\newtheorem{lemma}[theorem]{Lemma}
\newtheorem{proposition}[theorem]{Proposition}
\newtheorem{approximation}[theorem]{Approximation}
\newtheorem{assumption}[theorem]{Assumption}
\theoremstyle{remark}
\newtheorem{remark}[theorem]{Remark}
\newcommand{\e}{\bm{e}}
\newcommand{\w}{\bm{w}}
\newcommand{\x}{\bm{x}}
\newcommand{\xh}{\widehat{\bm{x}}}
\newcommand{\y}{\bm{y}}
\newcommand{\z}{\bm{z}}
\newcommand{\eps}{\varepsilon}
\newcommand{\bR}{\mathbb{R}}
\newcommand{\M}{\bm{M}}
\newcommand{\I}{\bm{I}}
\newcommand{\W}{\bm{W}}
\newcommand{\vX}{\bm{X}}
\newcommand{\vXh}{\widehat{\bm{X}}}
\newcommand{\D}{\bm{D}}
\newcommand{\Q}{\bm{Q}}
\newcommand{\A}{\bm{A}}
\renewcommand{\P}{\bm{P}}
\renewcommand{\th}{\ensuremath{^\mathrm{th}}}
\NewDocumentCommand{\logdet}{m}{\operatorname{logdet}\rp{#1}}
\NewDocumentCommand{\mat}{m}{\begin{bmatrix}#1\end{bmatrix}}
\NewDocumentCommand{\softmax}{m}{\operatorname{softmax}\rp{#1}}
\NewDocumentCommand{\diaglr}{m}{\operatorname{diag}\rp{#1}}
\NewDocumentCommand{\explr}{m}{\operatorname{exp}\rp{#1}}
\NewDocumentCommand{\given}{}{\;\ifnumequal{\currentgrouptype}{16}{\middle|}{\mid}\;}
\algnewcommand\Hyperparameter{\item[\textbf{Hyperparameter:}]}%
\algnewcommand\Parameter{\item[\textbf{Parameter:}]}%
\definecolor{keywordcolor}{rgb}{0.57,0.15,0.79}
\definecolor{commentcolor}{rgb}{0.12,0.54,0.13}
\definecolor{stringcolor}{rgb}{0.63,0.12,0.11}
\definecolor{bgcolor}{rgb}{1,1,1}
\definecolor{numbercolor}{rgb}{0.5,0.5,0.5}
\newcommand{\head}{\mathrm{head}}
\newcommand{\pre}{\mathrm{pre}}
\newcommand{\post}{\mathrm{post}}
\newcommand{\pos}{\mathrm{pos}}
\newcommand{\cls}{\texttt{[CLS]}}
\newcommand{\ISTA}[1]{\operatorname{\texttt{ISTA}}(#1)}
\newcommand{\SSA}[1]{\operatorname{\texttt{SSA}}(#1)}
\newcommand{\MSSA}[1]{\operatorname{\texttt{MSSA}}(#1)}
\newcommand{\ours}{\textsc{crate}}
\newcommand{\ourscaps}{CRATE}
\newcommand\blfootnote[1]{%
	\begingroup
	\renewcommand\thefootnote{}\footnote{#1}%
	\addtocounter{footnote}{-1}%
	\endgroup
}
\newcommand{\xmark}{\ding{55}}
\colorlet{revision}{black}
\begin{document}

\lstdefinestyle{mintedstyle}{
    morekeywords={self, class, def, return, import, from, lambda, with, as, None, True, False, pass, break, continue, in, is, not, and, or},
    backgroundcolor=\color{bgcolor},
    basicstyle=\small\ttfamily,
    keywordstyle=\color{blue},
    commentstyle=\color{green!50!black},
    stringstyle=\color{red!50!black},
    numberstyle=\tiny\color{gray},
    numbers=left,
    numbersep=10pt,
    showstringspaces=false,
    breaklines=true,
    frame=none,
    tabsize=4,
    captionpos=b,
}
\lstset{style=mintedstyle}

\title{White-Box Transformers via Sparse Rate Reduction: \\ Compression Is All There Is?}

\author{\name Yaodong Yu$^{\dagger, \star}$ \email yyu@eecs.berkeley.edu 
\AND
\name Sam Buchanan$^{\ddagger, \star}$ \email sam@ttic.edu 
\AND
\name Druv Pai$^{\dagger, \star}$ \email druvpai@berkeley.edu 
\AND
\name Tianzhe Chu$^{\dagger, \natural}$ \email chutzh@berkeley.edu 
\AND 
\name Ziyang Wu$^{\dagger}$ \email zywu@berkeley.edu 
\AND 
\name Shengbang Tong$^{\dagger}$ \email tsb@berkeley.edu 
\AND 
\name Hao Bai$^{\sharp}$ \email haob2@illinois.edu 
\AND 
\name Yuexiang Zhai$^{\dagger}$ \email simonzhai@berkeley.edu 
\AND 
\name Benjamin D. Haeffele$^{\flat}$ \email bhaeffele@jhu.edu 
\AND 
\name Yi Ma$^{\dagger, \diamondsuit}$ \email mayi@hku.hk, yima@eecs.berkeley.edu\,\,\\
\vspace{-0.1in}\\
\addr $^\dagger$ University of California, Berkeley\\
\addr $^\ddagger$ Toyota Technological Institute at Chicago\\
\addr$^\natural$ ShanghaiTech University\\
\addr$^\sharp$ University of Illinois, Urbana-Champaign\\
\addr$^\flat$ Johns Hopkins University\\
\hspace*{-0.7mm}\addr$^{\diamondsuit}$ 
 \hspace*{-0.7mm}University of Hong Kong
}

\maketitle

\vspace{-1.75em}

\blfootnote{\hspace{-3.5mm}$^ \star$\,\,\,Equal contribution.}

\begin{abstract}%
    In this paper, we contend that a natural objective of representation learning is to compress and transform the distribution of the data, say sets of tokens, towards a low-dimensional Gaussian mixture supported on incoherent subspaces. The goodness of such a representation can be evaluated by a principled measure, called \textit{sparse rate reduction}, that simultaneously maximizes the intrinsic information gain and extrinsic sparsity of the learned representation. From this perspective, popular deep network architectures, including transformers, can be viewed as realizing iterative schemes to optimize this measure. Particularly, we derive a transformer block from alternating optimization on parts of this objective: the multi-head self-attention operator compresses the representation by implementing an approximate gradient descent step on the coding rate of the features, and the subsequent multi-layer perceptron sparsifies the features. This leads to a family of \textit{white-box} transformer-like deep network architectures, named \ours{}, which are mathematically fully interpretable. We show, by way of a novel connection between denoising and compression, that the inverse to the aforementioned compressive encoding can be realized by the same class of \ours{} architectures. Thus, the so-derived white-box architectures are universal to both encoders and decoders. Experiments show that these networks, despite their simplicity, indeed learn to compress and sparsify representations of large-scale real-world image and text datasets,
    {\color{revision} and achieve strong performance across different settings: ViT, MAE, DINO, BERT, and GPT2.}
    We believe the proposed computational framework demonstrates great potential in bridging the gap between theory and practice of deep learning, from a unified perspective of data compression. Code is available at:  \url{https://ma-lab-berkeley.github.io/CRATE}.
\end{abstract}

\newpage
\tableofcontents

\newpage

\section{Introduction}\label{sec:intro}

\subsection{The Representation Learning Problem}
In recent years, deep learning has seen tremendous empirical success in processing
and modeling massive amounts of high-dimensional and multi-modal data \citep{krizhevsky2009learning,He2016-lc,Radford2021-ir,Chen2020-ha,He2021-lb}. As argued by  \cite{ma2022principles}, much of this success is owed to deep networks' ability in effectively learning compressible low-dimensional structures in the data distribution and then transforming
the distribution to a parsimonious, i.e. \textit{compact and structured},
representation. 
Such a representation then facilitates many downstream tasks, e.g., in vision, classification
\citep{He2016-lc,dosovitskiy2020image}, recognition and segmentation
\citep{Carion2020-fm,He2017-be,Kirillov2023-pm}, and generation
\citep{Karras2018-si,rombach2022high,Saharia2022-na}. 

\paragraph{Representation learning via compressive encoding and decoding.} To state the common problem behind all these practices more formally, one may view a given dataset as samples of a random vector $\x$ in a high-dimensional space, say $\mathbb{R}^D$. Typically, the distribution of $\x$ has much lower intrinsic dimension than the ambient space. Generally speaking, by {\em learning a representation}, we typically mean to learn a continuous mapping, say $f(\cdot)$, that transforms $\x$ to a so-called {\em feature vector} $\z$ in another (typically lower-dimensional) space, say $\mathbb{R}^d$. It is hopeful that through such a mapping:
\begin{equation}
     \x \in \mathbb{R}^D \xrightarrow{\hspace{2mm} f(\x)\hspace{2mm}} \z  \in \mathbb{R}^d, 
     \label{eqn:encoding}
\end{equation}
the low-dimensional intrinsic structures of $\x$ are identified and represented by $\z$ in a more compact and structured way so as to facilitate subsequent tasks such as classification or generation. The feature $\z$ can be viewed as a (learned) compact code for the original data $\x$, so the mapping $f$ is also called an \textit{encoder}. 
The fundamental question of representation learning, then, and a central problem that we will address in this work, is:
\begin{quotation}
\noindent{\em What is a principled and effective measure for the goodness of representations?}
\end{quotation}

Conceptually, the quality of a representation $\z$ depends on how well it identifies the most relevant and sufficient information of $\x$ for subsequent tasks, and how efficiently it represents this information.
For long it was believed and argued that ``sufficiency'' or ``goodness'' of a learned feature should be defined in terms of a specific task. For example, $\z$ just needs to be sufficient for predicting a class label $\y$ in a classification problem. To understand the role of deep learning or deep networks in this type of representation learning, \cite{Tishby-ITW2015} proposed the \textit{information bottleneck} framework, which suggests that a measure of feature goodness is to maximize the mutual information between $\z$ and $\y$ while minimizing the mutual information between $\z$ and $\x$. 

Nevertheless, in recent years the predominant practice has been to learn first a \textit{task-agnostic} representation by pre-training a large deep neural network, in some cases known as a \textit{foundation model} \citep{Bommasani2021-vm}. The so-learned representation can subsequently be fine-tuned for multiple specific tasks. 
This has been shown to be more effective and efficient for many practical tasks across diverse data modalities, including speech \citep{Radford2022-en}, language \citep{brown2020language}, and natural images \citep{Oquab2023-ff}. 
Notice that representation learning in this context is very different from that for a specific task, where $\z$ only needs to be good enough for predicting a specific $\y$. In a task-agnostic setting, the learned representation $\z$ needs to encode \textit{almost all essential information about the distribution of the data $\x$}. That is, the learned representation $\z$ not only is a more compact and structured representation for the intrinsic structures of $\x$, but can also recover $\x$ to a certain degree of faithfulness. 
Hence, it is natural to ask, in the task-agnostic context, what a principled measure of goodness  for a learned (feature) representation should be.\footnote{%
As we know, in recent practice of learning task-agnostic representations, one type of deep architectures, known as transformers \citep{vaswani2017attention}, have emerged as an almost universal choice for the backbone of deep networks, for either discriminative or generative tasks, from language to vision. We will review the details of this architecture momentarily. As we will see in this work, clarifying the principled measure for feature goodness is also the key to fully understand why a transformer-like architecture is suitable for task-agnostic pretraining, as well as to reveal the precise role and function of each layer in transformer-like deep networks.}

Conceptually, we argue that one effective way, perhaps the only way, to verify whether a representation $\z$ has encoded sufficient information about $\x$ is to see how well we can recover $\x$ from $\z$ through an (inverse) mapping, say $g$, known as a {\em decoder} (or a  generator): 
\begin{equation}
     \x \in \mathbb{R}^D \xrightarrow{\hspace{2mm} f(\x)\hspace{2mm}} \z  \in \mathbb{R}^d\xrightarrow{\hspace{2mm} g(\z) \hspace{2mm}} \widehat{\x} \in \mathbb{R}^D.
     \label{eqn:generative}
\end{equation}
As the encoder $f$ is typically compressive and lossy, we should not expect the inverse mapping to recover $\x$ exactly, but an approximate $\widehat{\x} = g\circ f(\x) \approx \x$. We normally seek optimal encoding and decoding mappings such that the decoded $\widehat{\x}$ is the closest to $\x$, either sample-wise---say, by minimizing the expected mean squared error---or in a relaxed distributional sense. We refer to the above process as {\em compressive encoding and decoding} or {\em compressive autoencoding}.  
This idea is highly compatible with the original goals laid out for autoencoders by \cite{Kramer1991NonlinearPC,Hinton-NIPS1993}, which can be viewed as a generalization of the classic principal component analysis \citep{Jolliffe2002} for the case where the low-dimensional structure of $\x$ is linear. 

Through tremendous empirical efforts over the last eleven years, it has become clear that deep networks %
are very effective in modeling nonlinear encoding and decoding mappings. 
{\color{revision}Many} applications of deep learning, including those mentioned above, rely on realizing such an encoding or decoding scheme partially or entirely by learning $f$ or $g$ separately or together. 
Although,  conceptually, the decoder $g$ should be the ``inverse'' to the encoder $f$, in practice it has never been clear how the architectures of encoder and decoder should be related to each other. 
In many cases, the architectural design of the decoder has little to do with that of the encoder, often chosen via empirical tests and ablations {\color{revision}(for instance, in masked autoencoders \citep{he2022masked} and latent diffusion models \citep{Esser2020-do,rombach2022high})}.
\textit{We believe a good theoretical framework for representation learning should clearly reveal relationships between architectures for the encoder and the decoder.} We strive to achieve this level of clarity in this work. 

\subsection{Review of Existing Approaches}
\paragraph{Opening the black-box of modern deep networks through compression.} Along the development of deep learning, many deep network architectures have been proposed and practiced for $f$ or $g$, from the classic LeNet~\citep{lecun1998gradient} to AlexNet~\citep{Krizhevsky2012-um}, to ResNet~\citep{He2016-lc} and then to the more recent transformer~\citep{vaswani2017attention}. Despite their popularity, these networks have largely been designed empirically and trained and used as ``black-box'' function approximators. As a result, desired properties of the learned feature representation $\z$ are not clearly specified or justified, and 
many heuristic measures or loss functions have been proposed and practiced for training task-agnostic representations with these models.

\begin{figure}
    \centering
    \includegraphics[width=\textwidth]{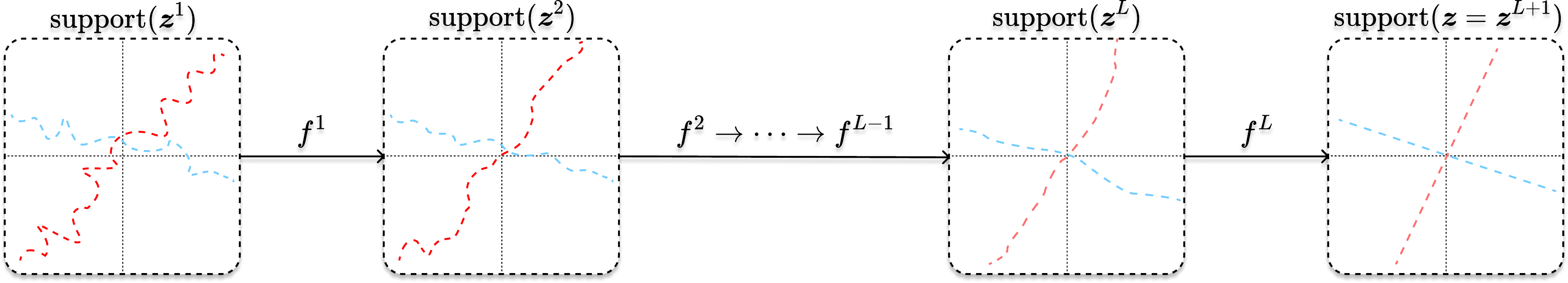}
    \caption{\textbf{Deep network layers \(f^{\ell}\) which optimize the rate reduction.} The separate components of the data distribution are transformed by the network operators to a configuration which maximizes the information gain. Here, \(f\) may be realized by a ReduNet \citep{chan2021redunet}, in which each layer implements a gradient descent iteration for optimizing the rate reduction.}
    \label{fig:mcr2_maximization}
\end{figure}

The recent work of~\citet{OriginalMCR2,chan2021redunet} has attempted to provide a principled framework that interprets the deep architectures of the ResNet and CNNs from the perspective of optimizing a measure of ``information gain'' for the learned representation. 
When the structured representation sought is a mixture of low-dimensional Gaussians, the information gain can be precisely measured by the so-called coding \textit{rate reduction}, denoted as $\Delta R(\z)$, and defined as the difference between the coding rates for \textit{the feature set as a whole} and \textit{the coding rate for its structured components}. 
It was shown that one can derive from this objective a deep network architecture, known as the ReduNet \citep{OriginalMCR2,chan2021redunet}, that shares a striking resemblance to ResNets and CNNs. 
The layers of a ReduNet are fully interpretable as realizing an iterative gradient descent method for optimizing the coding rate reduction objective $\Delta R(\z)$, as in \Cref{fig:mcr2_maximization}:
\begin{equation}
f\colon \vx
\xrightarrow{\hspace{1mm} f^{\pre} \hspace{1mm}} \vz^{1} \rightarrow \cdots \rightarrow \vz^\ell \xrightarrow{\hspace{1mm} f^\ell \hspace{1mm}} \vz^{\ell+1} \rightarrow  \cdots \xrightarrow{\hspace{1mm} f^{L}\hspace{1mm}} \vz^{L + 1} = \vz,
\label{eq:incremental-encoding}
\end{equation}
where \(f^{\pre}\) is some data pre-processing map, and
\begin{equation}
\vz^{\ell+1} = f^\ell(\vz^\ell) \approx \vz^\ell 
+ \eta\nabla\big[\Delta R(\z^{\ell})\big] %
\end{equation}
i.e., each layer $\ell$ is constructed to incrementally optimize the $\Delta R(\z^{\ell})$ by taking an approximate gradient ascent step with step size $\eta$. We will refer to such a mathematically interpretable network as a ``white-box'' deep network {in the sense that the motivation and structure of each network layer is well understood (i.e., as approximating an incremental improvement of some desired objective function)}. Although rate reduction offers a good theoretical framework for understanding architectures of existing deep networks such as ResNets and CNNs, direct implementations of ReduNet have not yet generated competitive practical performance on real-world datasets and tasks at scale. \textit{In this work, we will see how this outstanding gap between theory and practice\footnote{The gap between theory and practice is not just characteristic of the rate reduction framework. The situation is as dire for all theoretical frameworks ever proposed for understanding deep networks.} can be bridged through a generalization and improvement to the rate reduction objective such that its gradient descent operator resembles the structure of a transformer layer, in such a way that the resulting trasnformer-like architecture achieves competitive empirical performance.}

\paragraph{Transformer models and compression.} In recent years, transformers \citep{vaswani2017attention} have emerged as the most popular, nearly universal, model of choice for the encoder $f$ and decoder $g$ in learning representations for high-dimensional structured data, such as text \citep{vaswani2017attention,devlin2018bert,brown2020language}, images \citep{dosovitskiy2020image,dehghani2023scaling}, and other types of signals \citep{gong2022contrastive,arnab2021vivit}. 
In a nutshell, a transformer first converts each data point (such as a text corpus or image) into a set or sequence of \textit{tokens}, and then performs further processing on the token sets, in a medium-agnostic manner \citep{vaswani2017attention,dosovitskiy2020image}. 
A cornerstone of the transformer model is the so-called \textit{(self-)attention layer}, which exploits the statistical correlations among the sequence of tokens to refine the token representation. Yet the transformer network architecture is empirically designed and lacks a rigorous mathematical interpretation. In fact, the output of the attention layer itself has several competing interpretations \citep{vidal2022attention,li2023theoretical, DBLP:conf/aistats/SanderABP22, DBLP:journals/corr/abs-2312-10794}. 
As a result, the statistical and geometric relationship between the data $\x$ and the final representation $\z$ learned by a transformer largely remains a mysterious black box. 

Nevertheless, in practice, transformers have been highly successful in learning compact representations that perform well on many downstream tasks. 
In particular, it serves as the backbone architecture for the celebrated large language models (LLMs) such as OpenAI's GPT-4~\citep{openai2023gpt4}. Although the precise reason why it works well remains unclear, it has been hypothesized by OpenAI's researchers from a heuristic standpoint that the transformer architecture in LLMs implicitly minimizes the Kolmogorov complexity of the representations \citep{Sutskever2023Observation}, a quantitative notion of compression measured by the length of the code that can generate the data in consideration. 
However, we know that Kolmogorov complexity is largely a theoretical  concept and in general not computationally tractable for high-dimensional distributions. 
Hence, if transformers in LLMs indeed conduct compression, they should be based on a measure of complexity that is amenable to tractable and efficient computation. The design of Helmholtz machines (and Boltzman machines) based on the \textit{minimum description length principle} can be viewed as early attempts to make compression computable \citep{Hinton-NIPS1993}. \textit{In this work, we argue that a natural choice of this computable measure of compression behind transformers is precisely a combination of rate reduction and sparsity of the learned representations.} As we will see, revealing such a measure could be the key to understand the transformer architecture.

\paragraph{Denoising-diffusion models and compression.} Diffusion models
\citep{Sohl-Dickstein2015-kz,ho2020denoising,Song2019-ww,Song2020-xo,Song2020-hb}
have recently become a popular method for learning high-dimensional data distributions, particularly of natural images, which are known to be highly
structured in a manner that is notoriously difficult to model mathematically 
\citep{Ruderman1994-he,wakin2005multiscale,Donoho2005-ag}. The core concept of
diffusion models is to start with features $\z$ sampled from a Gaussian noise
distribution (or some other standard template) and \textit{denoise
and deform} the feature distribution until it converges to the original data distribution, which often has low intrinsic dimension. This process is computationally intractable if modeled {\color{revision} at just a single scale of noise \citep{Koehler2022-ed,chen2023-uo,Bovier2005-yo,Qin2023-wn}}, so it is typically broken into multiple incremental steps that 
denoise iteratively, as in \Cref{fig:diffusion_pipeline}: %
\begin{equation}
g\colon \vz = \wt{\vz}^{0}
\rightarrow \wt{\vz}^{1} \rightarrow \cdots \rightarrow \wt{\vz}^\ell \xrightarrow{\hspace{1mm} g^\ell \hspace{1mm}} \wt{\vz}^{\ell+1} \rightarrow  \cdots \rightarrow \wt{\vz}^{L} \xrightarrow{\hspace{1mm} g^{\post} \hspace{1mm}} \wh{\vx},
\label{eq:incremental-generation}
\end{equation}
where \(g^{\post}\) is a data post-processing map, and
\begin{equation}
\wt{\vz}^{\ell+1} = g^\ell(\wt{\vz}^\ell) = \wt{\vz}^\ell 
+ \tau\nabla \log q^{\ell}(\wt{\vz}^{\ell}), 
\end{equation} 
where $q^{\ell}$ is the density of \(\wt{\vz}^{\ell}\), i.e., the density of $\wt{\vz}^{L}$ after corruption with the $\ell$-th scale of Gaussian noise,
and $\nabla \log q^{\ell}$ is the so-called \textit{score
function} \citep{Hyvarinen2005-fi},  or equivalently  an estimate for the ``optimal denoising function'' for $q^{\ell}$ \citep{Efron2011-wn}. In practice, the score function is modeled using a generic black-box deep network.\footnote{The score function $\nabla \log q^{\ell}$ between two layers is typically learned by fitting relationships between $\wt{\vz}^\ell$ and $\wt{\vz}^{\ell+1}$, the data distribution at successive scales of corruption by Gaussian noise, from a large number of samples with a black-box deep network designed for denoising.} Diffusion models have shown effectiveness at learning and sampling from the data distribution \citep{karras2022elucidating,chen2022improved,rombach2022high}. However, despite
some recent efforts \citep{song2023consistency}, they generally do not establish any clear correspondence between the initial features and data samples. Hence, diffusion models themselves do not offer a parsimonious or interpretable
representation of the data distribution. Yet, conceptually, the above iterative denoising process \eqref{eq:incremental-generation} is compressing the feature distribution onto a targeted low-dimensional data distribution. \textit{In this work, we will show that if one were to compress and transform a distribution onto a standard mixture of (low-dimensional) Gaussians, the associated optimal denoising function takes an explicit form that is similar to the gradient of the rate reduction and to a transformer layer.} 
This provides a path to take a transformer-like encoder $f$ designed to compress the data distribution into a parsimonious and structured
representation, and derive its distributional inverse through a process analogous to \Cref{eq:incremental-generation}, yielding a white-box architecture for compressive autoencoding.

\begin{figure}
    \centering
    \includegraphics[width=\textwidth]{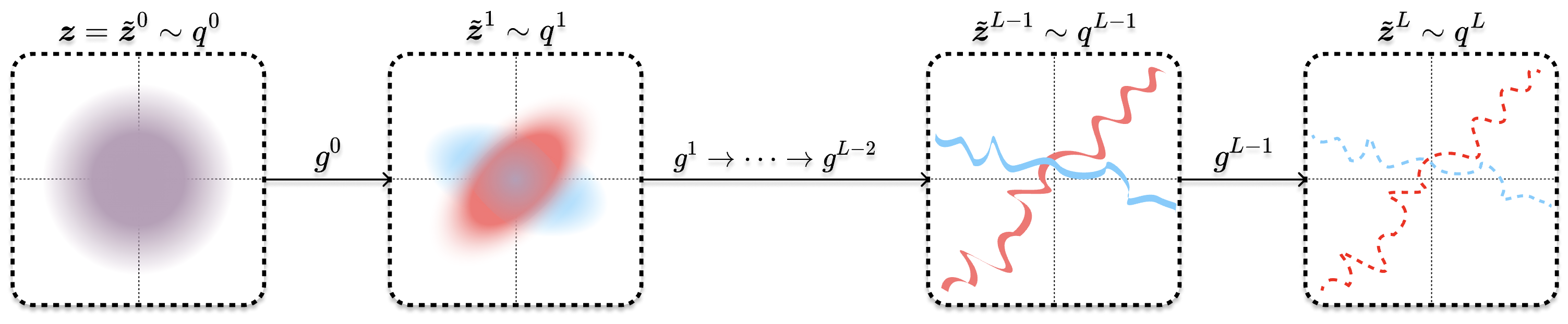}
    \caption{\small\textbf{Distribution flow in denoising-diffusion models.} Starting with generic noise \(\vz = \wt{\vz}^{0}\), the probability density of intermediate iterates is shaped towards the true distribution of \(\wt{\vz}^{L}\) locally and iteratively through the operators \(g^{\ell}\), which use the score function \(\nabla \log q^{\ell}\) at each layer \(\ell\).}
    \label{fig:diffusion_pipeline}
\end{figure}

\paragraph{Low-dimensionality promoting measures: sparsity and rate reduction.} In both of the previous popular methods, transformers and denoising-diffusion  models, a representation was learned implicitly as a byproduct of solving a downstream task (e.g., classification or generation/sampling)
using deep networks. The networks used are typically  chosen empirically. Therefore, it is difficult to rigorously ensure or impose any desired properties for the learned representation, except by trial and error. However, complementary to these popular empirical practices, a line of research has attempted to explicitly learn a desired representation of
the data distribution as a task in and of itself; this is most commonly done by
trying to explicitly identify and represent low-dimensional structures in the input data.
Classical examples of this paradigm include \textit{model-based} approaches such as
sparse coding \citep{olshausen1997sparse,chen2018sparse} and dictionary learning \citep{aharon2006k,Spielman2012-le,Gribonval2014-zr,zhai2020complete}, out of which grew
early attempts at designing and interpreting deep network architectures as learning a sparse representation 
\citep{Papyan2018-qc,Bruna2013-on}. More recent approaches build instead from a \textit{model-free}  perspective, where one learns a representation through a
sufficiently-informative pretext task such as compressing similar and
separating dissimilar data via contrastive learning \citep{tian2020makes,wang2022rethinking,bardes2022vicreg,shwartz2023compress}. Compared to black-box deep learning approaches, both model-based and model-free representation learning schemes have the advantage of being more interpretable: they allow users to explicitly design desired properties of the learned representation $\z$. To a large extent, the rate reduction framework \citep{OriginalMCR2,chan2021redunet,pai2022pursuit} strikes a good balance between the above model-based and model-free methods. Like contrastive learning, it aims to identify the data distribution by compressing similar/correlated data and separating dissimilar/uncorrelated data \citep{OriginalMCR2}. Meanwhile, like the model-based methods, it actively maps the data distribution to a family of desired representations, say a mixture of low-dimensional Gaussians  
\citep{ma2007segmentation,Vidal:Springer16}. 

\paragraph{Unrolled optimization: a unified paradigm for network interpretation \& design.} As we have discussed above, low-dimensionalty promoting  measures, such as sparsity or coding rate reduction, allow users to construct  white-box deep network
architectures
\citep{gregor2010learning,chan2021redunet} in a forward-construction fashion by
\textit{unrolling an optimization strategy for the chosen 
objective of the representations}, such that each layer of the constructed network implements an
iteration of the optimization algorithm
\citep{gregor2010learning,chan2021redunet,tolooshams2021stable}. In his recent work, \cite{hinton2022forwardforward} has also begun to hypothesize that the role of a deep network, with its forward pass, is likely to optimize certain feature goodness layer-wise. In this paradigm, the most challenging question is:
\begin{quotation}
\noindent\textit{What fundamental measure of goodness for the representations is a deep network trying to optimize in its forward pass?}
\end{quotation}
In the unrolled optimization paradigm, if the desired objectives are narrowly defined, say promoting sparsity alone \citep{Papyan2018-qc,Bruna2013-on}, it has so far proved difficult to arrive at network architectures that can achieve competitive practical performance on large real-world datasets. 
Other work has attempted to derive empirically-designed popular network architectures through unrolled optimization on a reverse-engineered learning objective for the representation, such as \citet{Yang2022-yi,Hoover2023-lc,De_Weerdt2023-ej}. In this case, the performance of the networks may remain intact, but the reverse-engineered representation learning objective is usually highly complex and not interpretable, and the properties of the optimal representation---or indeed the actually-learned representation---remain opaque. Such approaches do not retain the key desired benefits of unrolled optimization. 
{\em As we will argue in this work, to measure the goodness of a learned representation in terms of its intrinsic compactness and extrinsic simplicity, it is crucial to combine the measure of sparsity \citep{Papyan2018-qc,Bruna2013-on} and that of coding rate reduction  \citep{OriginalMCR2,chan2021redunet}}. 
As we will see, this combination will largely resolve the aforementioned limitations of extant methods that rely solely on sparsity or solely on rate reduction.

\subsection{Goals and Contributions of This Work} 
From the above discussion, we can observe that there has been an outstanding wide gap between the practice and theory of representation learning via deep networks. The fast advancement in the practice of deep learning has been primarily driven by empirical black-box models and methods that lack clear mathematical interpretations or rigorous guarantees. Yet almost all existing theoretical frameworks have only attempted to address limited or isolated aspects of practice, or only proposed and studied idealistic models that fall far short of producing practical performance that can compete with their empirical counterparts.  

\paragraph{Bridging the gap between theory and practice.} Therefore, the primary goal of this work is to remedy this situation with a more complete and unifying framework that has shown great promise in bridging this gap between theory and practice. On one hand, this new framework is able to provide a unified understanding of the many seemingly disparate approaches and methods based on deep networks, including compressive encoding/decoding (or autoencoding), rate reduction, and denoising-diffusion. {\color{revision}On the other hand, as we will see, this framework can guide us to derive or design deep network architectures that are not only mathematically fully interpretable but also obtain competitive performance on many learning tasks on large-scale real-world image or text datasets.}

\paragraph{A theory of white-box deep networks.} More specifically, we propose a unified objective, a principled  measure of goodness, for learning compact and structured  representations. 
For a learned representation, this objective aims to optimize both its intrinsic complexity in terms of coding rate reduction and its extrinsic simplicity in terms of sparsity. We call this objective the \textit{sparse rate reduction}, specified later in ~\eqref{eq:sparse-rr} and \eqref{eq:objective-sparse-rate-reduction-l1}. 
The intuition behind this objective is illustrated in \Cref{fig:sparse_rr_optima}.
To optimize this objective, we propose to learn a sequence of \textit{incremental mappings} that emulate unrolling certain gradient-descent-like iterative optimization scheme for the objective function. As we will see, this naturally leads to a transformer-like deep network architecture that is entirely a ``white box'' in the sense that its optimization objective, network operators, and learned representation are all  fully interpretable mathematically. 
We name such a white-box deep architecture  ``\ours{},'' or ``\ours{-Transformer},'' short for a \textbf{C}oding-\textbf{RATE} transformer. 
We also show mathematically that these incremental mappings are invertible in a distributional sense, and their inverses consist of essentially the same class of mathematical operators.
Hence a nearly identical \ours{} architecture can be used for realizing encoders, decoders, or together for auto-encoders.

\begin{figure}
    \centering
    \includegraphics[width=0.6\textwidth]{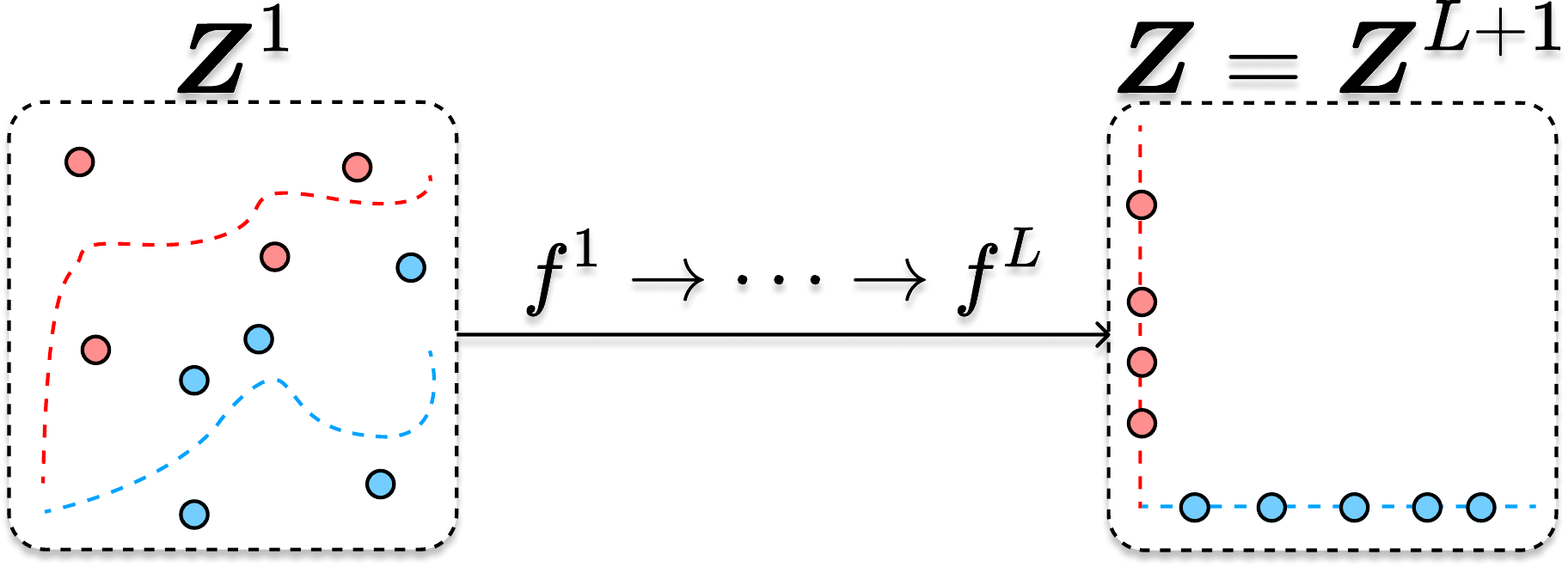}
    \caption{\small\textbf{The optima of the sparse rate reduction.} After pre-processing input data \(\vX\) into a sequence of tokens \(\vZ^{1}\), our \ours{} network attempts to optimize the sparse rate reduction of the token features \(\vZ = \vZ^{L + 1}\). The optimal representations, according to the sparse rate reduction objective, are {\color{purple!60!black}\textit{\textbf{linearized}}}---having low-dimensional linear subspace structure---{\color{green!60!black}\textit{\textbf{sparse}}}---where the subspaces are axis-aligned---and {\color{blue!60!black}\textit{\textbf{compressed}}}---adhering closely to that structure, with low or no noise. In the sequel, we discuss how \ours{} achieves {\color{revision} such} representations via constructing each layer to iteratively optimize the sparse rate reduction.}
    \label{fig:sparse_rr_optima}
\end{figure}

\paragraph{Practice of white-box deep networks.} To show that this framework can truly bridge the gap between theory and practice, we have conducted extensive experiments on both image and text data to evaluate the practical performance of the \ours{} model on a wide range of learning tasks and settings that conventional transformers have demonstrated strong performance. 
Surprisingly, despite its conceptual and structural simplicity, \ours{} has demonstrated competitive performance with respect to its black-box counterparts on {\em all} tasks and settings, including image classification via supervised learning \citep{dosovitskiy2020image},
unsupervised masked completion for imagery and language data \citep{He2021-lb,devlin2018bert,liu2019roberta}, self-supervised feature learning for imagery data \citep{caron2021emerging}, and language modeling via next-word prediction \citep{radford2018improving}.
Moreover, the \ours{} model demonstrates additional practical benefits: each layer and network operator statistically and geometrically meaningful, the learned model is significantly more interpretable compared to black-box transformers, and the features show semantic meaning,
i.e., they can be easily used to segment an object from its background and partition it into shared parts. 

Note that with limited resources, in this work we do not strive for state-of-the-art performance on all of the aforementioned tasks, which would require heavy engineering or extensive fine-tuning; nor can we implement and test our models at current industrial scales. Overall, our implementations for these tasks are basic and uniform, without significant task-specific customization. Nevertheless, we believe these experiments have convincingly verified that the derived white-box deep network \ours{} model is universally effective and sets a solid baseline for further engineering development and improvement.

\paragraph{Outline of the paper:} %
\begin{itemize}[leftmargin=1.0cm]
    \item In \Cref{sub:formulation}, we give a formal formulation for representation learning, both conceptually and quantitatively. We argue that a principled measure of goodness for a learned feature representation is the so-called \textit{sparse rate reduction} that simultaneously characterizes the representation's intrinsic information gain and its extrinsic sparsity. In \Cref{sub:unrolled_rep_learning}, we contend that the fundamental role of a deep network is to optimize such an objective by unrolling an iterative optimization scheme such as gradient descent.
    \item From \Cref{sub:compression} to \Cref{sub:architecture}, we show that a transformer-like deep architecture can be derived from unrolling an alternating minimization scheme for the sparse rate reduction objective. In particular, in \Cref{sub:compression}  we derive a multi-head self-attention layer as an unrolled gradient descent step to minimize the lossy coding rate of the token set with respect to a (learned) low-dimensional Gaussian mixture codebook. In \Cref{sub:sparse} we show that the multi-layer perceptron which immediately follows the multi-head self-attention in transformer blocks can be interpreted as (and replaced by) a layer which constructs a sparse coding of the token representations. This creates a new white-box, i.e., fully mathematically interpretable, transformer-like architecture called \ours{}, summarized in \Cref{sub:architecture}, where each layer performs a \textit{single step} of an alternating minimization algorithm to optimize the sparse rate reduction objective. 
    \item In \Cref{sec:autoencoding} we reveal a fundamental connection between compression via rate reduction and the diffusion-denoising process for learning a representation for the data distribution. In particular, we show that if one \textit{denoises} the tokens towards a family of low-dimensional subspaces, the associated score function assumes an explicit form similar to a self-attention operator seen in transformers. We also establish that the gradient descent of rate reduction essentially conducts structured denoising against the (learned) low-dimensional Gaussian mixture model for the tokens. This connection allows us to construct a white-box decoder based on a structured diffusion process, as a distributional inverse to the structured denoising process implemented by the \ours{} encoder. One can show that the decoder essentially shares the same architecture as the encoder, and they together form a symmetric white-box autoencoder that is fully mathematically interpretable. 
    \item In \Cref{sec:exp} we provide extensive experimental results to show that the \ours{} networks, despite being simple and often smaller, can already learn the desired compressed and sparse representations on large-scale real-world datasets, all while achieving performance on par with seasoned transformer networks on a wide variety of popular tasks and settings, including ViT for image classification, MAE for image completion, DINO for image segmentation with self-supervised learning, and BERT and GPT for text completion and prediction. 
    In addition, we demonstrate, both qualitatively and quantitatively, that the internal representations of \ours{} are more interpretable than vanilla vision transformers trained on image classification. 
\end{itemize}
At the end of the paper, in Appendices \ref{app:section-2} to \ref{sec:appendix-exp}, we provide adequate technical details and experimental details for the above sections, to ensure that  all our claims in the main body are verifiable and experiments are reproducible. Appendix \ref{app:code} gives PyTorch-like  pseudocode for our implementation of \ours{}.

\section{White-Box Encoding via Structured Lossy Compression}\label{sec:encoding}

In this section, we provide a technical formulation and justification for our new framework and approach. To wit, we provide a (gentle yet) complete derivation from first principles of our white-box transformer approach. While being a self-contained introduction to our framework, and providing a transparently interpretable transformer-like deep network architecture, it also foreshadows several connections between previously disparate technical approaches to representation learning. These we make clear in the next \Cref{sec:autoencoding} en route to extending our technical framework to autoencoding.

\paragraph{Notation.} We consider a general learning setup associated with real-world signals. We have some random variable \(\vX = \mat{\x_{1}, \dots, \x_{N}} \in \bR^{D \times N}\) which is our data source; each \(\x_{i} \in \bR^{D}\) is interpreted as a \textit{token}\footnote{For language transformers, tokens roughly correspond to words~\citep{vaswani2017attention}, while for vision transformers, tokens correspond to image patches~\citep{dosovitskiy2020image}.}, there are \(N\) tokens \(\vx_{i}\) in each data sample \(\vX\), and the \(\x_{i}\)'s may have arbitrary correlation structures. 
To obtain a useful representation of the input, we learn an \textit{encoder} mapping \(f \colon \bR^{D \times N} \to \bR^{d \times n}\). The features---that is, the output of the encoder---are denoted by the random variable \(\vZ \doteq f(\vX) \doteq \mat{\vz_{1}, \dots, \vz_{n}} \in \bR^{d \times n}\), whence each \(\vz_{i} \in \bR^{d}\) is a feature vector. The number of features \(n\) is typically the same as the number of tokens \(N\), or not much more (e.g., due to pre-processing), in which case there is a natural correspondence between feature vectors \(\vz_{i}\) and tokens \(\vx_{i}\). In the auto-encoding context, we also learn a \textit{decoder} mapping \(g \colon \bR^{d \times n} \to \bR^{D \times N}\), such that \(\vX \approx \vXh \doteq g(\vZ) \doteq \mat{\xh_{1}, \dots, \xh_{N}}\), whence each \(\xh_{i} \in \bR^{D}\) is the auto-encoding of token \(\x_{i}\).

As we have alluded to before, a  central question we want to answer in this work is the purpose of such an encoder and decoder in representation learning: namely, how should we design the encoder and decoder mappings to optimize a representation learning objective? As we will see, one specific form of the encoder \(f\) and the decoder \(g\), that can be naturally deduced through iterative  optimization of the objective, is composed of multiple basic operators, also known as \textit{layers} in the language of deep neural networks. In such cases, we write \(f = f^{L} \circ \cdots \circ f^{1} \circ f^{\pre}\) and \(g = g^{\post} \circ g^{L - 1} \circ \cdots \circ g^{0}\), where \(f^{\ell} \colon \bR^{d \times n} \to \bR^{d \times n}\) and \(g^{\ell} \colon \bR^{d \times n} \to \bR^{d \times n}\) are the \(\ell\th\) layer of the encoder and decoder respectively, and \(f^{\pre} \colon \bR^{D \times N} \to \bR^{d \times n}\) and \(g^{\post} \colon \bR^{d \times n} \to \bR^{D \times N}\) are the pre-~and post-processing layers respectively. The \textit{input} to the \(\ell\th\) layer of the encoder is denoted \(\vZ^{\ell} \doteq \mat{\vz_{1}^{\ell}, \dots, \vz_{n}^{\ell}} \in \bR^{d \times n}\), and the \textit{input} to the \(\ell\th\) layer of the decoder is denoted \(\wt{\vZ}^{\ell} \doteq \mat{\wt{\vz}_{1}^{\ell}, \dots, \wt{\vz}_{n}^{\ell}} \in \bR^{d \times n}\). In particular, \(\vZ^{\ell + 1} = f^{\ell}(\vZ^{\ell})\) and \(\wt{\vZ}^{\ell + 1} = g^{\ell}(\wt{\vZ}^{\ell})\). \Cref{fig:crate_autoencoding} depicts this overall process.

\subsection{Desiderata and Objective of Representation Learning}  \label{sub:formulation}

\paragraph{Representation learning via the principle of parsimony and consistency.} Following the framework of rate reduction \citep{chan2021redunet}, we contend that the goal of representation learning is to find a feature mapping \(f \colon \vX \in \bR^{D \times N} \to \vZ\in \bR^{d \times n}\) which transforms input data \(\vX \in \bR^{D \times N}\) with a potentially nonlinear and multi-modal distribution to a \textit{parsimonious} feature
representation \(\vZ \in \bR^{d \times n}\) \citep{ma2022principles}. As in~\citet{ma2022principles}, a complete desiderata for the learned representations ought to be:
\begin{enumerate}
   \item {\color{blue!60!black}\textit{\textbf{Compressed}}}: being strictly distributed according to some standard low-dimensional structures matching the intrinsic low-dimensionality of the data, so as to ensure a compact encoding of the data.
   \item {\color{purple!60!black}\textit{\textbf{Linearized}}}: the low-dimensional structures have (piecewise) linear geometry, so as to aid interpolation and extrapolation in the representation space.
   \item {\color{green!60!black}\textit{\textbf{Sparse}}}: the low-dimensional structures corresponding to different parts of the data distribution are statistically \textit{incoherent} or geometrically \textit{orthogonal}, and also \textit{axis-aligned}, so as to ensure a more compact encoding and aid downstream processing.
   \item {\color{orange!60!black}\textit{\textbf{Consistent}}}: for autoencoding/generative purposes, we desire that the learned representation is \textit{invertible}, in the sense that we can decode features to recover the corresponding input data, either on the level of individual samples or  distribution-wise.
\end{enumerate}
For the last item, specifically, we would also like to learn an inverse mapping: \(g \colon \vZ \in \bR^{d \times n} \to \vXh \in \bR^{D \times N}\) such that \(\vXh\) and \(\vX\) are quantitatively close in some sense. \Cref{fig:crate_autoencoding} illustrates the overall process and the desired four goals of such a  representation learning. In this section (\Cref{sec:encoding}), we will mainly show how to achieve the first three items on this list by developing an encoding scheme; we will address the last item in the next section (\Cref{sec:autoencoding}) by showing how the proposed encoding scheme can be naturally reversed.  

\begin{figure}
    \centering
    \includegraphics[width=\textwidth]{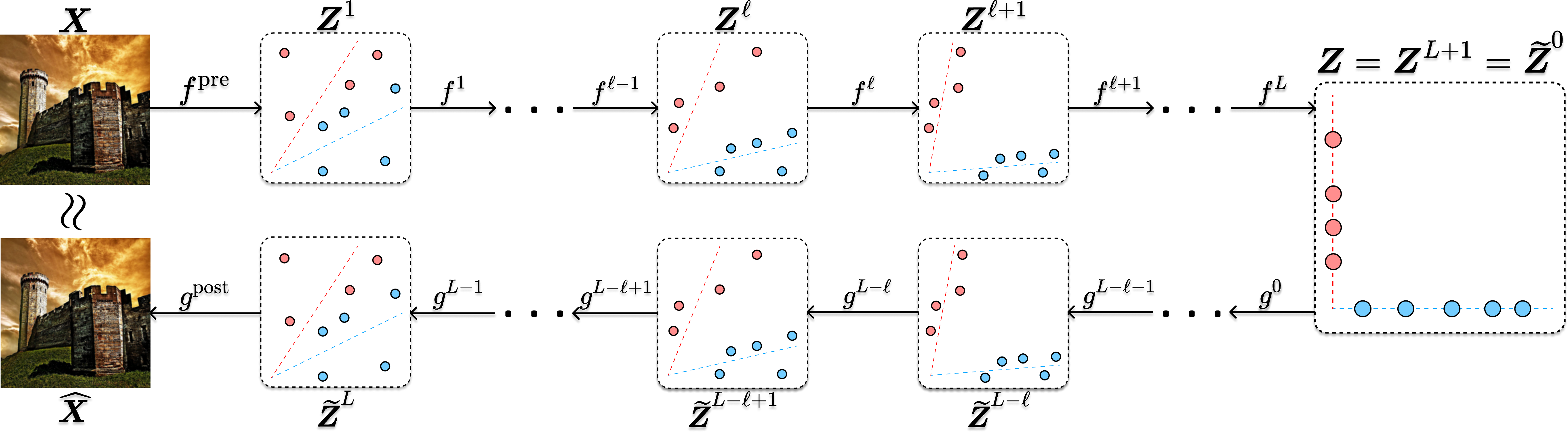}
    \caption{\small\textbf{The autoencoding process to be studied in \Cref{sec:encoding,sec:autoencoding}}. Each encoder layer \(f^{\ell}\) and decoder layer \(g^{L - \ell}\) are (partial) inverses of each other. Moreover, the overall representation \(\vZ = f(\vX)\) is parsimonious ({\color{blue!60!black}\textbf{compressed}}, {\color{purple!60!black}\textbf{linearized}}, and {\color{green!60!black}\textbf{sparse}}, as in \Cref{sub:formulation}), and the autoencoding is to be {\color{orange!60!black}\textbf{consistent}} in the sense that \(\vX \approx \wh{\vX}\).}
    \label{fig:crate_autoencoding}
\end{figure}

\paragraph{An objective which promotes parsimonious representations.} Previously, \citet{OriginalMCR2} have proposed to obtain parsimonious representations via maximizing the \textit{information gain} \citep{ma2022principles}, a principled measure of the information content of the features. A concrete instantiation of the information gain is the coding \textit{rate reduction} \citep{OriginalMCR2} of the features, i.e., 
\begin{equation}
   \Delta R(\vZ \mid \bm \Pi_{[K]}) = R(\vZ) - R^c(\vZ \mid \bm \Pi_{[K]}).
   \label{eqn:rate-reduction}
\end{equation}
The first term \(R(\vZ)\) in the above expression is an estimate of the lossy coding rate (i.e., \textit{rate distortion function}) for the whole set of features, when using a codebook adapted to Gaussians. More specifically, if we view the token feature vectors \((\vz_{i})_{i \in [n]}\) in \(\vZ \in \bR^{d\times n}\) as i.i.d.~samples from a single zero-mean Gaussian, an approximation of their (lossy) coding rate, subject to quantization precision \(\eps > 0\), is given in \citep{ma2007segmentation} as:
\begin{equation}\label{eq:coding_rate}
    R(\vZ) \doteq \frac{1}{2}\logdet{\I + \alpha\vZ\adj\vZ} = \frac{1}{2}\logdet{\I + \alpha\vZ\vZ\adj},\quad 
    \text{where}\ \alpha
    \doteq \frac{d}{n\eps^{2}}.
\end{equation}
The second term \(R^{c}\) in the rate reduction objective \eqref{eqn:rate-reduction} is also an estimate of the lossy coding rate, but under a different and more precise codebook---one which views the token feature vectors \((\vz_{i})_{i \in [n]}\) as i.i.d.~samples of a mixture of Gaussians, where assignment of tokens to a particular Gaussian is known and specified by the Boolean membership matrices \(\vPi_{[K]} = (\vPi_{k})_{k \in [K]}\), and the \(k\th\) Gaussian has \(n_{k}\) associated tokens. We obtain an estimate for the coding rate \(R^{c}\) as
\begin{equation}\label{eq:conditional_coding_rate_original}
    R^c(\vZ \mid \vPi_{[K]}) \doteq \frac{1}{2}\sum_{k=1}^{K} \logdet{\I + \gamma_{k} \vZ \vPi_{k} \vZ\adj}, \quad \text{where}\ \gamma_{k} \doteq \frac{d}{n_{k}\eps^{2}}.
\end{equation}
As shown in \citet{OriginalMCR2}, maximizing the rate reduction \(\Delta R\), i.e., the difference between \(R\) and \(R^{c}\), promotes that the token features \(\vz_{i}\) are compactly encoded as a mixture of low-dimensional Gaussian distributions, where different Gaussian are statistically \textit{incoherent}.

\paragraph{A generalized measure of rate reduction for tokens.} 
In more realistic and general scenarios, the  features \(\vZ\) can be a collection of tokens \((\vz_{i})_{i = 1}^{N}\) which have a sophisticated and task-specific joint distribution, which can encode rich information about the data\footnote{For example, co-occurrences between words in language data, or object parts in image data.} which we should also seek to capture in the final representation. 

To realize our above desiderata in this context---namely, seeking a compact representation of a complex joint distribution of the token features---we only require that \textit{the desired marginal distribution of individual tokens \(\vz_{i}\) should be a mixture of (say \(K\)) low-dimensional Gaussian distributions}. Without loss of generality, we may assume  that the \(k^{\mathrm{th}}\) Gaussian has mean \(\vZero \in \bR^{d}\), covariance \(\vSigma_{k} \succeq \vZero \in \bR^{d \times d}\), and support spanned by the orthonormal basis \(\vU_{k} \in \bR^{d \times p}\). We denote \(\vU_{[K]} = (\vU_{k})_{k=1}^{K}\) to be the set of all bases for the  Gaussians. In the sequel, we often identify the basis \(\vU_{k}\) with the subspace itself. 

For future reference, we provide a formal definition of this statistical model below. Note that we may incorporate random noise as a way to model benign deviations from the previously described idealized model.\footnote{Our noise model is standard and simple, but can be made more sophisticated at essentially no conceptual cost---the qualitative results will be the same.}

\paragraph{Low-Dimensional Gaussian Mixture Codebook:} \textit{
    Let \(\vZ = \mat{\vz_{1}, \dots, \vz_{n}} \in \bR^{d \times n}\) be a matrix-valued random variable. We impose the following statistical model on \({\vZ}\), parameterized by orthonormal bases \(\vU_{[K]} = (\vU_{k})_{k \in [K]} \in (\bR^{d \times p})^{K}\): each token \({\vz}_{i}\) has marginal distribution given by
    \begin{equation}\label{model:gaussian_tokens}
        {\vz}_{i} \stackrel{d}{=} \vU_{s_{i}}\valpha_{i}, \quad \forall i \in [n]
    \end{equation}
    where \((s_{i})_{i \in [n]} \in [K]^{n}\) are random variables corresponding to the subspace indices, and \((\valpha_{i})_{i \in [n]} \in (\bR^{p})^{n}\) are zero-mean Gaussian variables. If we optionally specify a noise parameter \(\sigma \geq 0\), we mean that we ``diffuse'' the tokens with Gaussian noise: by an abuse of notation, each token \({\vz}_{i}\) has marginal distribution given by
    \begin{equation}\label{model:gaussian_tokens_noise}
        \vz_{i} \stackrel{d}{=} \vU_{s_{i}}\valpha_{i} + \sigma \vw_{i}, \quad
        \forall i \in [n]
    \end{equation}
    where \((\vw_{i})_{i \in [n]} \in (\bR^{d})^{n}\) are i.i.d.~standard Gaussian variables, independent of $s_{i}$ and $\valpha_i$.
}

From the perspective of statistics, we may view $\vU_{[K]}$ as multiple ``principal subspaces'' \citep{Vidal:Springer16}, which, just as in principal component analysis, are preferred to be incoherent or nearly orthogonal to each other. From the perspective of signal processing, we may view \(\vU_{[K]}\) as ``local signal models'' for the input distribution. From the perspective of information theory, we may view the bases \(\vU_{[K]}\) as codebooks and the vectors 
\(\valpha_{ik} \doteq \vU_{k}\adj\vz_{i}\) as the ``codes'' of the token features \(\vz_{i}\) with respect to these codebooks. Motivated by \Cref{model:gaussian_tokens}, we desire these codes to have a Gaussian marginal distribution within each subspace; under this model, we can compute the coding rate of these codes, similar to \eqref{eq:coding_rate}, as  
\begin{equation}\label{eq:{eq:coding_rate}-subspace}
    R(\vU_{k}\adj \vZ) \doteq \frac{1}{2}\logdet{\I + \beta(\vU_k\adj\vZ)\adj(\vU_k\adj\vZ)}, \quad 
    \text{where}\ \beta \doteq \frac{p}{n\eps^{2}}.
\end{equation}

We emphasize that here, under \Cref{model:gaussian_tokens}, the joint distribution of such \(\vZ\) is underspecified, so the true optimal codebook for \(\vZ\) is unknown and so the lossy coding rate for \(\vZ\) is impossible to compute. However, since the desired marginal distribution of each token \(\vz_{i}\) is a mixture of low-dimensional Gaussians supported on subspaces \(\vU_{[K]}\), we may obtain an upper bound of the coding rate for the token set \(\vZ\), which we denote as \(R^{c}\), by projecting the tokens \(\vz_{i}\) onto these subspaces and summing up the coding rates on each subspace:
\begin{equation}\label{eq:def-mcr-parts}
    R^c(\vZ \mid \vU_{[K]}) \doteq \sum_{k=1}^{K}R(\vU_k\adj \vZ) =
    \frac{1}{2}\sum_{k=1}^{K}\logdet{\I +
    \beta(\vU_k\adj\vZ)\adj(\vU_k\adj\vZ)}.
\end{equation}

This form of the coding rate can be viewed as a generalization to the coding rate \(R^{c}\) in the original rate reduction objective defined in \Cref{eqn:rate-reduction}. In particular, the original  objective is defined with respect to a set of known membership labels \(\vPi_{[K]}\) specific to the particular data realization \(\vX\). In contrast, the  objective here is defined with respect to subspaces \(\vU_{[K]}\) which are in principle defined externally to any specific data realization, though they support the token feature distribution. Since a single token can have nonzero projection onto multiple subspaces \(\vU_{k}\), yet must belong to exactly one category defined by \(\vPi_{k}\), the coding rate defined in \Cref{eq:def-mcr-parts} may be viewed as a generalization of the coding rate defined in \Cref{eq:conditional_coding_rate_original}. We may correspondingly generalize the coding rate reduction $\Delta R$, obtaining:
\begin{equation}
   \Delta R(\vZ \mid \vU_{[K]}) \doteq R(\vZ) - R^c(\vZ \mid \vU_{[K]}).
\end{equation}

\begin{figure}
    \centering
    \includegraphics[width=0.98\textwidth]{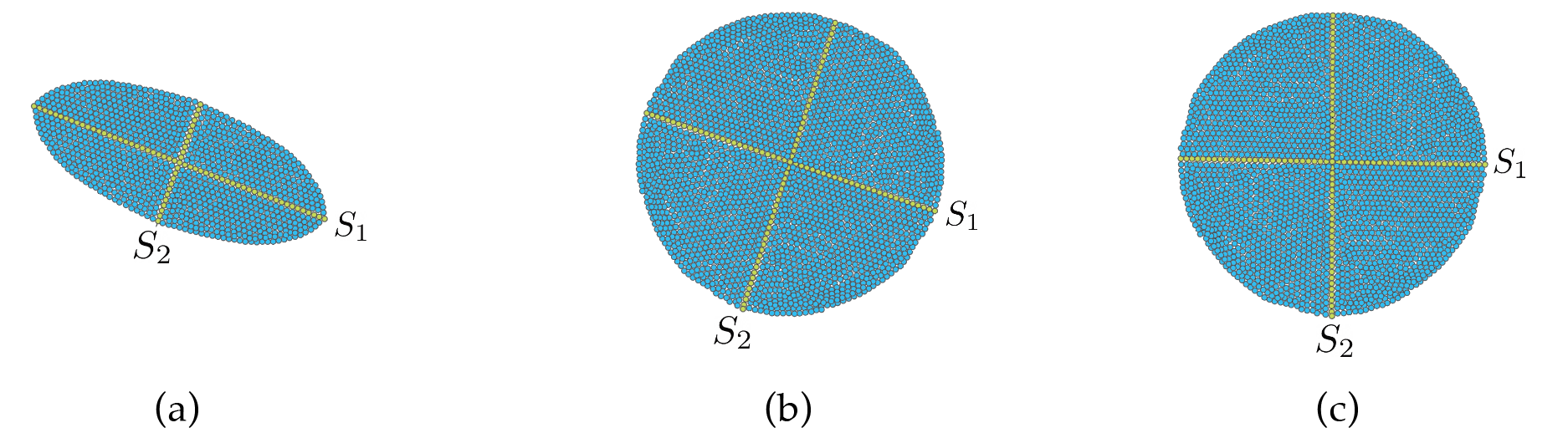}
    \vspace{-0.1in}\caption{\color{revision} \small\textbf{Comparison of three sets of  representations via rate reduction and sparsity.} Each $S_i$ represents one linear subspace, and the number of blue balls represents the difference between the coding rates $\Delta R(\vZ \mid \vU_{[K]}) = R(\vZ) - R^c(\vZ \mid \vU_{[K]})$.  }
    \label{fig:diagram_compare_compression_sparsification}
\end{figure}

\paragraph{Sparse rate reduction.} It is easy to see that the rate reduction is invariant to arbitrary joint rotations of the representations and subspaces \citep{ma2007segmentation}. In particular, optimizing the rate reduction may not naturally lead to axis-aligned (i.e., \textit{sparse}) representations. {\color{revision} For instance, consider the three sets of learned representations in \Cref{fig:diagram_compare_compression_sparsification}. The coding rate reduction increases from (a) to (b), but because it is invariant under rotations, remains the same from (b) to (c).} Therefore, we would like to transform the representations (and their supporting subspaces) so that the features $\vZ$ eventually become sparse\footnote{Concretely, having few nonzero entries.} with respect to the standard coordinates of the resulting representation space {\color{revision} as in \Cref{fig:diagram_compare_compression_sparsification}(c)}.

The combined rate reduction and sparsification process is illustrated in \Cref{fig:sparse_rr_optima} or \Cref{fig:crate_autoencoding}. Computationally, we may combine the above two goals into a unified objective for optimization:
\begin{equation}
    \max_{f \in \mathcal{F}} \mathbb{E}_{\vZ = f(\vX)}\big[ \Delta R(\vZ \mid \vU_{[K]}) - \lambda \norm{\bm{Z}}_0 \big], \label{eq:sparse-rr} 
\end{equation}
or equivalently, 
\begin{equation}\label{eq:objective-sparse-rate-reduction}
    \max_{f \in \mathcal{F}} \mathbb{E}_{\vZ = f(\vX)}\big[ R(\vZ) - {R}^c(\vZ \mid \vU_{[K]})- \lambda \norm{\bm{Z}}_0 \big],
\end{equation}
where the $\ell^{0}$ ``norm''{\color{revision}, defined as the number of nonzero values of the input vector/matrix,} promotes the sparsity of the learned token representations \(\vZ = f(\vX)\). \footnote{To simplify the notation, we will discuss the objective for one sample $\vX$ at a time with the understanding that we always mean to optimize the expectation.}

We call this objective ``\textit{sparse rate reduction.}'' In practice, one typically relaxes the $\ell^0$ norm to the $\ell^1$ norm for better computability \citep{Wright-Ma-2022}, obtaining:
\begin{equation}\label{eq:objective-sparse-rate-reduction-l1}
\max_{f \in \mathcal{F}} \mathbb{E}_{\vZ = f(\vX)}\big[ R(\vZ) - {R}^c(\vZ \mid \vU_{[K]})- \lambda \norm{\bm{Z}}_1 \big].
\end{equation}
 By a little abuse of language, we often refer to this objective function also as the \textit{sparse rate reduction}.

\begin{remark}[\textbf{Connections to likelihood maximization and energy-based models}]\label{rmk:energy_based_models}
    One natural interpretation of the Gaussian rate distortion \(R(\vZ)\) is as a lossy surrogate for the log-likelihood of \(\vZ\) under the assumption that the columns \(\vz_{i}\) are drawn i.i.d.~from a zero-mean Gaussian whose covariance is estimated using \(\vZ\) \citep{cover1999elements}. Similar interpretations hold for \(R^{c}\)---as a surrogate for the un-normalized log-likelihood of \(\vZ\) under the assumption that the columns of \(\vz_{i}\) are drawn from \Cref{model:gaussian_tokens}---and \(\Delta R\)---as the difference of these log-likelihoods. In some sense, the latter interpretations of the desired feature distribution are ``local,'' in that they manage the part of the feature distribution aligned with the \(\vU_{[K]}\).
    
    If we also interpret the sparse regularization term \(-\lambda \norm{\vZ}_{1}\) in this way, we obtain the interpretation that we prefer the features \(\vZ\) to have un-normalized log-density equal to \(-\lambda \norm{\vZ}_{1}\), so as to have density proportional to \(e^{-\lambda \norm{\vZ}_{1}}\). This is a more ``global'' interpretation of the feature distribution. In this way, regularization can be seen as ``exponentially tilting'' \citep{keener2010theoretical} the desired density towards one which is lower-entropy or more axis-aligned.

    One recently popular class of models which performs maximum-likelihood estimation is \textit{energy-based models} \citep{lecun2006tutorial}. 
    In particular, the overall objective function \eqref{eq:objective-sparse-rate-reduction-l1} has a natural interpretation as an ``energy function.'' In particular, if we assume that our surrogate likelihoods are {\color{revision} exact likelihoods} (up to constants), then the desired probability distribution of the feature set \(\vZ\) is known up to constants as 
    \begin{equation}
        p(\vZ \mid \vU_{[K]}) = Ce^{-E(\vZ \mid \vU_{[K]})} \doteq C \exp(-\lambda \|\vZ\|_1)\cdot \frac{\det(\I + \alpha\vZ\adj\vZ)}{ \prod_{k=1}^K \det\big(\I +
        \beta(\vU_k\adj\vZ)\adj(\vU_k\adj\vZ)\big)},
        \label{eqn:sparse-rate-reduction-density}
    \end{equation}
    where we define the energy function 
    \begin{equation}
        E(\vZ \mid \vU_{[K]}) = -[R(\vZ) - R^{c}(\vZ \mid \vU_{[K]}) - \lambda \norm{\vZ}_{1}],
    \end{equation}
    where the term $\det(\I + \alpha\vZ\adj\vZ)/(\prod_{k=1}^K \det(\I + \beta(\vU_k\adj\vZ)\adj(\vU_k\adj\vZ)))$ has a natural intrinsic geometric interpretation as the ratio of the ``volume'' of the whole feature set $\vZ$ and the product of ``volumes'' of its projections into the subspaces \citep{ma2007segmentation}. 
    
    Minimizing the above energy \(E(\vZ \mid \vU_{[K]})\) is {\color{revision}equivalent} to maximizing the sparse rate reduction objective \eqref{eq:objective-sparse-rate-reduction-l1}. In this sense, rate reduction-based approaches to representation learning are qualitatively similar to certain classes of energy-based models.
\end{remark}

\begin{remark}[\textbf{Intrinsic and extrinsic measures of goodness for the representations}]
Our notion of parsimony, as described above, desires the representations to have both \textit{intrinsic} and \textit{extrinsic} properties; that is, properties which are invariant to arbitrary rotations of the data distribution (e.g., compression and linearization), and those which are not (e.g., sparsity). There are separate long lines of work optimizing intrinsic measures of goodness for the representations \citep{OriginalMCR2,chan2021redunet,dai2022ctrl,pai2022pursuit} as well as extrinsic measures \citep{gregor2010learning,elad2010role,elad2010sparse,zhai2020complete,zhai2020understanding,tolooshams2021stable,Wright-Ma-2022}. Both classes of methods---that is, optimizing intrinsic and extrinsic measures of goodness of the representations---have individually been successful in learning compact and structured representations which are useful for downstream tasks. In this work, we combine and conceptually unify these perspectives on representation learning. In particular, our methodology seeks to optimize both intrinsic and extrinsic measures. Overall, we achieve even greater empirical success than previous white-box representation learning methods via learning intrinsically and extrinsically parsimonious representations.
\end{remark}

\begin{remark}[\textbf{Black-box representations learned through pretext tasks}] 
Representation learning has also been quantitatively studied as the byproduct of \textit{black-box} neural networks trained to solve pretext tasks, e.g., classification, contrastive learning, etc. Such end-to-end approaches do not explicitly attempt to learn parsimonious representations through the architecture or the objective. Meanwhile, we explicitly attempt to learn good representations which maximize the sparse rate reduction. Below, we give a concrete example of a conceptual separation between these two approaches, and their resulting representations.

Black-box representation learning has been most studied in the context of the supervised classification pretext task. Both empirical work and theoretical work has demonstrated that, under broad conditions, black-box neural networks trained with \textit{the cross-entropy loss} on \textit{supervised classification} have representations which obey \textit{neural collapse}~\citep{papyan2020prevalence, zhu2021geometric, fang2021exploring, yaras2022neural}, a phenomenon where representations of data from a given class are highly clustered around a single point, and the points from each class are maximally separated. 
\citet{wang2023understanding} (theoretically) and \citet{he2023law} (empirically) showed that a progressive neural collapse phenomenon, governed by a law of data separation, occurs from shallow to deep layers. 
This can be viewed as a form of ``compression'' of the features of each class towards a finite set of points, which form a geometric structure called a \textit{simplex equiangular tight frame}. This is distinguished from our approach to lossy compression through the sparse rate reduction in two particular ways.  
First, our representation for a data point is a token set, whereas commonly neural collapse is observed in cases where the representation is for a whole data point, so our representation is more fine-grained than those studied by neural collapse. 
Second, our proposed compression objective---sparse rate reduction---encourages the features to be \textit{diverse and expanded} within their supporting subspaces, and in particular \textit{not collapsed to individual points}. This is a more fundamental difference which suggests that our approach is at odds with neural collapse. More generally, our sparse rate reduction-based approach obtains qualitatively and conceptually different representations than black-box networks. %
\end{remark}

\subsection{Learning Parsimonious Representations  via Unrolled Optimization}\label{sub:unrolled_rep_learning}

\begin{figure}
     \centering
     \includegraphics[width=0.99\textwidth]{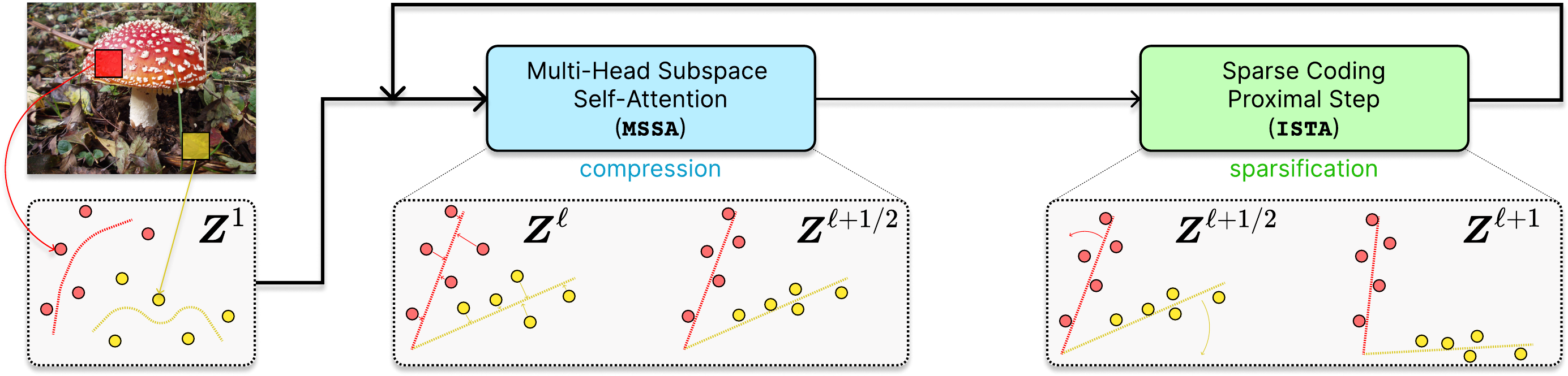}
     \caption{\small \textbf{The `main loop' of the \ourscaps{} white-box deep network design.} After preprocessing input data \(\vX\) into a sequence of tokens \(\vZ^1\), \ours{} constructs a deep network that transforms the data to a canonical configuration of low-dimensional subspaces by successive {\color{blue!60!black}\bf\textit{compression}} against a local model for the distribution, generating $\vZ^{\ell+1/2}$, and {\color{green!60!black}\bf\textit{sparsification}} against a global dictionary, generating $\vZ^{\ell+1}$. Repeatedly stacking these blocks and training the model parameters via backpropagation yields a powerful and interpretable representation of the data.
        }
        \label{fig:encoder_compression_sparsification}
\end{figure}

Although easy to state, each term of the sparse rate reduction objective proposed in the previous section, viz.
\begin{equation*}
    \max_{f \in \mathcal{F}} \mathbb{E}_{\vZ = f(\vX)}\big[ R(\vZ) - {R}^c(\vZ \mid \vU_{[K]})- \lambda \norm{\vZ}_1 \big], \tag{\ref{eq:objective-sparse-rate-reduction-l1}}
\end{equation*} 
can be computationally very challenging
to optimize. Hence it is natural to take an approximation approach that realizes the global transformation $f$  through a concatenation of multiple, say $L$, simple \textit{incremental and local} operations $f^\ell$ that push the representation distribution towards the desired parsimonious template distribution:
\begin{equation}
f\colon \vX
\xrightarrow{\hspace{1mm} f^{\pre} \hspace{1mm}} \vZ^1 \rightarrow \cdots \rightarrow \vZ^\ell \xrightarrow{\hspace{1mm} f^\ell \hspace{1mm}} \vZ^{\ell+1} \rightarrow  \cdots \to \vZ^{L+1} = \vZ,
\label{eq:incremental}
\end{equation}
where $f^{\pre}: \bR^{D \times N} \rightarrow \bR^{d \times n}$ is the pre-processing mapping that transforms the input token set $\vX \in \bR^{D \times N}$ to a first-layer representation $\vZ^{1} \in \bR^{d \times n}$, as in \Cref{fig:encoder_compression_sparsification}.

Each incremental \textit{forward mapping} $\vZ^{\ell + 1} = f^\ell(\vZ^\ell)$, or a ``layer'', transforms the token distribution to \textit{optimize} the above sparse rate reduction objective \eqref{eq:objective-sparse-rate-reduction-l1}, conditioned on a model, say a mixture of subspaces whose bases are \(\vU_{[K]}^{\ell}\), of the distribution of its input tokens $\vZ^\ell$:
\begin{equation}
    \max_{f^{\ell} \in \mathcal{F}^{\ell}}  \mathbb{E}_{\vZ^{\ell + 1} = f^{\ell}(\vZ^{\ell})}\big[R(\vZ^{\ell + 1}) - {R}^c(\vZ^{\ell + 1} \mid \vU_{[K]}^{\ell})- \lambda \norm{\vZ^{\ell + 1}}_{1}\big].
\end{equation}

Conceptually, if we follow the idea of the ReduNet \citep{chan2021redunet}, each $f^\ell$ should conduct a ``gradient-ascent'' like operation:
\begin{align}
    \vZ^{\ell+1} = f^{\ell}(\vZ^{\ell})
    &\approx \vZ^{\ell} + \eta \nabla \big[ R(\vZ^{\ell}) - {R}^c(\vZ^{\ell} \mid \vU_{[K]}^{\ell})- \lambda \norm{\vZ^{\ell}}_1 \big] ,\label{eqn:sparse-rate-reduction-gradient} \\
    &\approx \vZ^{\ell} + \eta \nabla \log p(\vZ^{\ell} \mid \vU_{[K]}^{\ell}), \label{eqn:sparse-rate-reduction-score}
\end{align}
where $p(\vZ \mid \vU_{[K]})$ was defined in \eqref{eqn:sparse-rate-reduction-density}. An acute reader might have noticed that the term $\nabla \log p(\vZ \mid \vU_{[K]})$ resembles that of a score function and the update \eqref{eqn:sparse-rate-reduction-score} resembles that of a \textit{denoising process}, i.e., it moves the current iterate \(\vZ^{\ell}\) towards the maximum-likelihood token set with respect to the signal model \(\vU_{[K]}^{\ell}\). We will thoroughly explore connections of the above gradient ascent operation to denoising and diffusion processes in \Cref{sec:autoencoding}. For now, we are interested in how to actually optimize the objective \Cref{eq:objective-sparse-rate-reduction-l1}.  

\paragraph{An alternating optimization strategy.} As already explored in the work of \citet{chan2021redunet}, it is difficult to directly compute the gradient and optimize the rate reduction term $\Delta R$,\footnote{This was part of the reason why the validity of ReduNet from \citet{chan2021redunet} could only be verified with small datasets -- it is difficult to scale the method up to produce competitive performance in practice.} let alone now with the non-smooth $\ell^1$ term $\|\vZ\|_1$. Nevertheless, from an optimization perspective, once we decide on using an incremental approach to optimizing
\eqref{eq:objective-sparse-rate-reduction-l1}, there are a variety of alternative optimization strategies. In this work we opt for perhaps the simplest possible choice that exploit the special structure of the objective. Given a model \(\vU_{[K]}^{\ell}\) for \(\vZ^{\ell}\), we opt for a two-step \textit{alternating minimization} process with a strong conceptual basis:
\begin{align}
    \vZ^{\ell + 1/2}&\ \text{is chosen to incrementally minimize}\ R^{c}(\vZ^{\ell + 1/2} \mid \vU_{[K]}^{\ell}), \label{eqn:compression} \\
    \vZ^{\ell + 1}&\ \text{is chosen to incrementally minimize}\left[\lambda \norm{\vZ^{\ell + 1}}_{0} - R(\vZ^{\ell + 1})\right].\label{eqn:sparsification}
\end{align}
For the first step \Cref{eqn:compression},  we \textit{compress} the tokens $\vZ^{\ell}$ via an approximate gradient step to minimize an estimate for the coding rate $R^c(\vZ^{\ell} \mid \vU_{[K]}^{\ell})$. Namely, \(R^{c}(\vZ^{\ell} \mid \vU_{[K]}^{\ell})\) measures the compression of \(\vZ^{\ell}\) against (i.e., adherence to) the statistical structure delineated in \Cref{model:gaussian_tokens} with subspace bases \(\vU_{[K]}^{\ell}\). Thus, taking a gradient step on \(R^{c}\) {\color{revision}with learning rate \(\kappa > 0\)} pushes the tokens towards having the desired statistics:
\begin{equation}
    \vZ^{\ell + 1/2} \approx \vZ^{\ell} - \kappa \nabla R^{c}(\vZ^{\ell} \mid \vU_{[K]}^{\ell}).
\end{equation}
Unfortunately, the gradient of the coding rate \(\nabla R^{c}\) is costly to compute, as it involves \(K\) separate matrix inverses, one for each of the \(K\) subspaces with basis \(\vU_{k}^{\ell}\). However, as we will formally derive in \Cref{sub:compression}, this gradient can be naturally approximated using a so-called \(\MSSA{\cdot}\) operator, which has a similar functional form to the multi-head self-attention operator \citep{vaswani2017attention} with \(K\) heads (i.e., one for each subspace, coming from each matrix inverse), yet has a more explicit interpretation as approximately the (negative) gradient of a compression measure $R^{c}(\vZ^{\ell} \mid \vU_{[K]}^{\ell})$. As a result, we obtain a transformed token set $\vZ^{\ell + 1/2}$ given by 
{\color{revision}
\begin{equation}
    \vZ^{\ell + 1/2} \doteq  
    (1 - \beta \kappa) \vZ^{\ell} + \beta\kappa \MSSA{\vZ^{\ell} \mid \vU_{[K]}^{\ell}} 
    \approx \vZ^{\ell} - \kappa \nabla R^{c}(\vZ^{\ell} \mid \vU_{[K]}^{\ell}),
\end{equation}
where $\beta={p}/{(n\eps^{2})}$ is defined in \eqref{eq:{eq:coding_rate}-subspace}.}

For the second step of \eqref{eqn:sparsification}, we \textit{sparsify} the compressed tokens, choosing \(\vZ^{\ell + 1}\) via a suitably-relaxed proximal gradient step to minimize the remaining term $\lambda \norm{\vZ^{\ell + 1}}_{1} - R(\vZ^{\ell + 1})$. As we will argue in detail in \Cref{sub:sparse}, we can find such a \(\vZ^{\ell + 1}\) by solving a sparse representation problem with respect to a sparsifying codebook, i.e., dictionary $\vD^{\ell}$:
\begin{equation}
  \vZ^{\ell+1} \approx \argmin_{{\vZ}} \left[\lambda \norm{\vZ}_1 + \frac{1}{2}\norm{\vZ^{\ell + 1/2} - \vD^{\ell} {\vZ}}_F^2 \right].
\end{equation}
In this work, we choose to implement this step with an iteration of the iterative shrinkage-thresholding algorithm (ISTA), which has classically been used to solve such sparse representation problems \citep{beck2009fast}. We call such an iteration the \(\ISTA{\cdot}\) operator, formally defined in \Cref{sub:sparse}. We obtain tokens $\vZ^{\ell + 1}$ given by 
\begin{equation}
    \vZ^{\ell + 1} \doteq \ISTA{\vZ^{\ell + 1/2} \mid \vD^{\ell}} \approx \argmin_{\vZ} \left[\lambda \norm{\vZ}_1 + \frac{1}{2}\norm{\vZ^{\ell + 1/2} - \vD^{\ell} {\vZ}}_F^2 \right].
\end{equation}
Both compression and sparsification are applied incrementally and repeatedly, as these operations form layers of the network 
\begin{equation}\label{eq:arch-encoder}
    f^{\ell} \colon \vZ^{\ell} \xrightarrow{\hspace{1mm} \texttt{MSSA} \hspace{1mm}} \vZ^{\ell + 1/2}  \xrightarrow{\hspace{1mm} \texttt{ISTA} \hspace{1mm}} \vZ^{\ell + 1}
\end{equation}
as in \Cref{eq:incremental}. \Cref{fig:compression_sparsification_oneiter} graphically demonstrates the idealized effect of one layer.

\begin{figure}
    \centering
    \includegraphics[width=\textwidth]{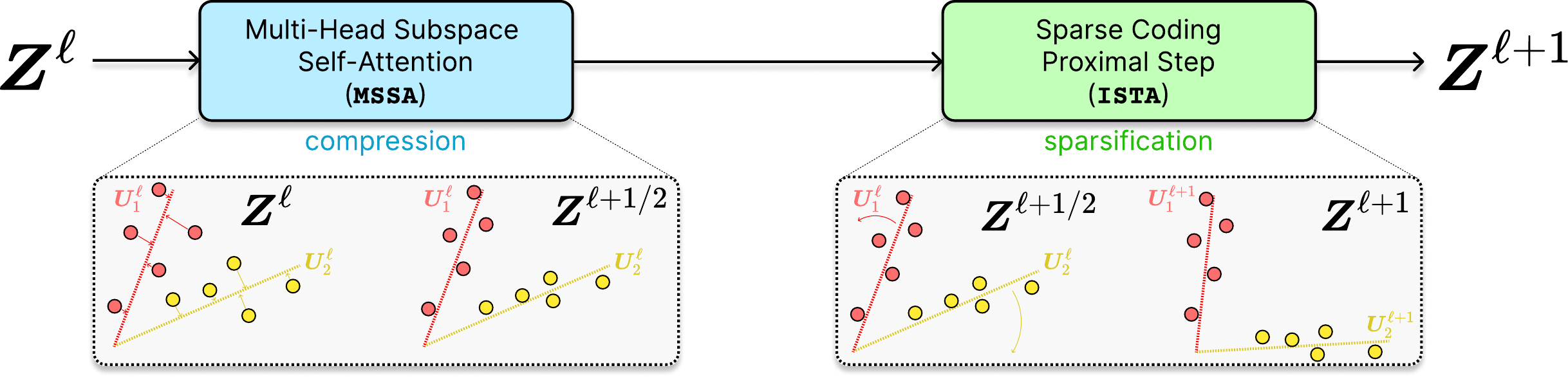}
    \caption{\small\textbf{The effect of one encoder layer \(f^{\ell}\) on the distribution of its input}. First, \(\vZ^{\ell}\) is compressed against the codebook \(\vU_{[K]}^{\ell}\) to obtain \(\vZ^{\ell + 1/2}\). Then, \(\vZ^{\ell + 1/2}\) is sparsified using the codebook \(\vD^{\ell}\) to obtain \(\vZ^{\ell + 1}\).}
    \label{fig:compression_sparsification_oneiter}
\end{figure}

\subsection{Self-Attention as Gradient Descent on Coding Rate of Tokens }\label{sub:compression}

In this subsection and the next, we provide technical details about each of the two steps mentioned in the \Cref{sub:unrolled_rep_learning}, in particular the precise forms of the \(\MSSA{\cdot}\) and \(\ISTA{\cdot}\) operators.

For the first step, where we compress the set of tokens against the \(K\) subspaces by minimizing the upper bound for the coding rate \(R^{c}\):
\begin{equation*}
    \vZ^{\ell + 1/2}\ \text{is chosen to incrementally minimize}\ R^{c}(\vZ^{\ell + 1/2} \mid \vU_{[K]}^{\ell}). \tag{\ref{eqn:compression}}
\end{equation*}
As in \Cref{sub:unrolled_rep_learning}, the compression operator takes an approximate gradient descent step on \(R^{c}\). The gradient of $R^c(\vZ \mid \vU_{[K]})$ is given by 
\begin{equation}
    \nabla R^c(\vZ \mid \vU_{[K]})
    = \beta\sum_{k=1}^K \vU_k(\vU_k\adj\vZ)\Big(\I +
    \beta(\vU_k\adj\vZ)\adj(\vU_k\adj\vZ)\Big)^{-1}.
    \label{eq:rate-gradient}
\end{equation}
The expression in \Cref{eq:rate-gradient} is highly expensive to compute exactly, since it requires \(K\) matrix inverses, making the use of naive gradient descent intractable on large-scale problems. Therefore, we seek an efficient approximation to this gradient; we choose to use the first-order Neumann series:
\begin{align}\label{eq:gd_rc_neumann}
    \nabla R^{c}(\vZ \mid \vU_{[K]}) 
    &\approx \beta \sum_{k = 1}^{K}\vU_{k}(\vU_{k}\adj\vZ)\big(\vI - \beta (\vU_{k}\adj\vZ)\adj(\vU_{k}\adj\vZ)\big) \\
    &{\color{revision}= \beta \left(\sum_{k = 1}^{K}\vU_{k}\vU_{k}\adj\right)\vZ -  \beta^2\sum_{k = 1}^{K} \vU_{k}(\vU_{k}\adj\vZ)(\vU_{k}\adj\vZ)\adj(\vU_{k}\adj\vZ)}.
\end{align}
The above approximate gradient expression \Cref{eq:gd_rc_neumann} approximates the residual of each projected token feature $\vU_k^* \vz_{i}$ regressed by other token features \(\vU_{k}\adj\vz_{j}\) \citep{chan2021redunet}. 
But, differently from \citep{chan2021redunet}, not all token features in this auto-regression are from the same subspace. Hence, to compress each token feature with token features from its own group, we can compute their similarity through an auto-correlation among the projected features as $(\vU_{k}\adj\vZ)\adj(\vU_{k}\adj\vZ)$ and convert it to a distribution of membership with a softmax, namely $\softmax{(\vU_{k}\adj\vZ)\adj(\vU_{k}\adj\vZ)}$. 
Thus, as we show in more detail in \Cref{app:proofs-compression}, if we only use similar tokens to regress and denoise each other, then a gradient step on the coding rate with learning rate \(\kappa\) can be naturally approximated as follows:
{\color{revision}\begin{equation}
    \vZ^{\ell + 1/2} = (1 - \beta \kappa) \vZ^{\ell} + \beta\kappa \MSSA{\vZ^{\ell} \mid \vU_{[K]}^{\ell}} \approx \vZ^{\ell} - \kappa\nabla R^{c}(\vZ^{\ell} \mid \vU_{[K]}^{\ell}),  \label{eq:gd-mcr-parts} 
\end{equation}}
where $\texttt{MSSA}$ is defined through an $\texttt{SSA}$ operator as:
\begin{align}
    \SSA{\vZ \mid \vU_{k}} 
    &\doteq (\vU_{k}\adj \vZ)\softmax{(\vU_{k}\adj\vZ)\adj(\vU_{k}\adj\vZ)}, \quad k \in [K], \label{eq:SSA} \\
    \MSSA{\vZ \mid \vU_{[K]}} 
    &\doteq \beta \mat{\vU_{1}, \dots, \vU_{K}}\mat{\SSA{\vZ \mid \vU_{1}} \\ \vdots \\ \SSA{\vZ \mid \vU_{K}}}.\label{eq:Multi-Head-SSA}
\end{align} 
Here the \texttt{SSA} operator in \eqref{eq:SSA} resembles the \textit{attention operator} in a typical transformer \citep{vaswani2017attention}, except that here the linear operators of value, key, and query  are all set to be \textit{the same} as the subspace basis, i.e., $\vV_{k} = \vK_{k} = \Q_{k} = \vU_k^*$. {We note that recently \cite{hinton2021represent} has surmised that it is more sensible to set the ``value, key, and query''  projection matrices in a transformer to be equal. Our derivation confirms this mathematically.} Hence, we name $\SSA{\spcdot\mid\vU_k}: \bR^{d\times n} \rightarrow \bR^{p\times n}$  the \textbf{S}ubspace \textbf{S}elf-\textbf{A}ttention (SSA) operator (more details and justification can be found in \Cref{eq:appendix-ssa-derivation} in \Cref{app:proofs-compression}). Then, the whole \texttt{MSSA} operator in \eqref{eq:Multi-Head-SSA}, formally defined as \(\MSSA{\spcdot \mid \vU_{[K]}} \colon \bR^{d \times n} \to \bR^{d \times n}\) and called the \textbf{M}ulti-Head \textbf{S}ubspace \textbf{S}elf-\textbf{A}ttention (MSSA) operator, aggregates the attention head outputs by averaging using model-dependent weights, similar in concept to the popular multi-head self-attention operator in existing transformer networks. The overall gradient step \eqref{eq:gd-mcr-parts} resembles the multi-head self-attention implemented with a skip connection in transformers. 

{\color{revision}
In our implementation, we find that replacing the term \(\beta\mat{\vU_{1},\dots,\vU_{K}}\) in the \(\texttt{MSSA}\) operator \Cref{eq:Multi-Head-SSA} with another trainable parameter \(\vW \in \bR^{d \times pK}\) largely speeds up the model training and optimization. Thus the \(\texttt{MSSA}\) block becomes
\begin{equation}\label{eq:mssa_trainable_w_main_paper}
    \MSSA{\vZ \mid \vU_{[K]}, \W} \doteq \vW\mat{\SSA{\vZ \mid \vU_{1}} \\ \vdots \\ \SSA{\vZ \mid \vU_{K}}}.
\end{equation}
}

\subsection{MLP as Proximal Gradient Descent for Sparse Coding of Tokens}\label{subsec:mlp-block-arch-design}

\label{sub:sparse}

In the previous subsection, we focused on how to compress a set of token features \(\vZ^{\ell}\) against a set of low-dimensional subspaces with orthonormal bases \(\vU_{[K]}^{\ell}\), obtaining a more compressed token set \(\vZ^{\ell + 1/2}\) which approximately minimizes \(R^{c}(\vZ^{\ell + 1/2} \mid \vU_{[K]}^{\ell})\). That is, we solved \Cref{eqn:compression} from \Cref{sub:unrolled_rep_learning}: 

\begin{equation*}
    \vZ^{\ell + 1/2}\ \text{is chosen to incrementally minimize}\ R^{c}(\vZ^{\ell + 1/2} \mid \vU_{[K]}^{\ell}). \tag{\ref{eqn:compression}}
\end{equation*}
Now, it remains to choose \(\vZ^{\ell + 1}\), by solving \Cref{eqn:sparsification} from \Cref{sub:unrolled_rep_learning}:
\begin{align}
    \vZ^{\ell + 1}\ \text{is chosen to incrementally minimize}\ 
    &\lambda \norm{\vZ^{\ell + 1}}_{0} - R(\vZ^{\ell + 1}) \nonumber \tag{\ref{eqn:sparsification}} \\ 
    &= \scalebox{0.875}{\(\lambda \norm{\vZ^{\ell + 1}}_{0} - \frac{1}{2}\logdet{\vI + \alpha (\vZ^{\ell + 1})\adj(\vZ^{\ell + 1})}\)}. 
\end{align}

On top of optimizing the remaining terms in the overall sparse rate reduction objective \Cref{eq:sparse-rr}, this step also serves an important conceptual role in itself. Namely, the term \(\norm{\vZ}_{0}\) in the objective \Cref{eqn:sparsification} serves to sparsify the compressed tokens, leading to a more compact and structured (i.e., \textit{parsimonious}) representation.  In addition, the coding rate \(R(\vZ)\) in \Cref{eqn:sparsification} promotes diversity and non-collapse of the representations, a highly desirable property. 

Similarly to \Cref{sub:unrolled_rep_learning}, the gradient \(\nabla R(\vZ)\) involves a matrix inverse \citep{chan2021redunet}, and thus naive proximal gradient to solve \Cref{eqn:sparsification} becomes intractable on large-scale problems. We therefore take a different, simplifying approach to trading off between representational diversity and sparsification: we posit a (complete) incoherent or orthogonal dictionary $\vD^{\ell} \in \bR^{d \times d}$, and ask to sparsify the intermediate iterates $\vZ^{\ell + 1/2}$ with respect to \(\vD^{\ell}\). That is, $\vZ^{\ell + 1/2} \approx \vD^{\ell} \vZ^{\ell + 1}$ where $\vZ^{\ell + 1}$ is more sparse; that is, it is a \textit{sparse encoding} of \(\vZ^{\ell + 1/2}\). The dictionary \(\vD^{\ell}\) is used to sparsify all tokens simultaneously. 
By the incoherence assumption, we have $(\vD^{\ell})\adj(\vD^{\ell}) \approx \vI$. Thus from \Cref{eq:coding_rate} we have 
\begin{equation}
    R(\vZ^{\ell + 1/2}) \approx R(\vD^{\ell}\vZ^{\ell + 1}) \approx R(\vZ^{\ell + 1}).
\end{equation}
Thus we aim to solve \Cref{eqn:sparsification} with the following program:
\begin{equation}
   \vZ^{\ell + 1} \approx \argmin_{\vZ}  \norm{\vZ}_0 \quad \mbox{subject to} \quad \vZ^{\ell + 1/2} = \vD^{\ell}\vZ.
   \label{eq:sparse}
\end{equation}
{\color{revision}The above sparse representation program is usually solved by relaxing it to an unconstrained convex program, known as LASSO \citep{tibshirani1996regression,Wright-Ma-2022}:} 
\begin{equation}
    \vZ^{\ell + 1} \approx \argmin_{\vZ} \left[\lambda \norm{\vZ}_1 + \frac{1}{2}\norm{\vZ^{\ell + 1/2} - \vD^{\ell} \vZ}_F^2 \right].
\end{equation}
In our implementation, we also add a non-negative constraint to $\vZ^{\ell + 1}$, and solve the corresponding non-negative LASSO:
\begin{equation}
    \vZ^{\ell + 1} \approx \argmin_{\vZ \geq \vZero} \left[\lambda\norm{\vZ}_1 + \frac{1}{2}\norm{\vZ^{\ell + 1/2} - \vD^{\ell} \vZ}_{F}^{2}\right].
    \label{eq:sparse-nonnegative}
\end{equation}
{\color{revision}We briefly justify the non-negativity constraint here. Given the dictionary $\bm{D}^{\ell}$, the $i$-th column of $\bm{Z}^{\ell+1}$ can be interpreted as a sparse code for approximating the $i$-th token — the $i$-th column of $\bm{Z}^{\ell+1/2}$. The non-negative value in $\bm{Z}^{\ell+1}$ indicates to what extent the dictionary atom is selected or not. There are both theoretical benefits \citep{zarka2019deep,guth2021phase} and empirical benefits \citep{sun2018supervised} to this modeling decision, mostly shown on classification problems, and validated in our own experiments in Table~\ref{tab:ablation-arch-variants-soft-thresholding}.}
We incrementally optimize \Cref{eq:sparse-nonnegative} by performing an unrolled {\em proximal gradient descent} step, known as an ISTA step \citep{beck2009fast}, to give the update:
\begin{align}\label{eq:ista-block}
    \vZ^{\ell + 1} 
    &= \ISTA{\vZ^{\ell + 1/2} \mid \vD^{\ell}}, \\
    \text{where} \quad \ISTA{\vZ \mid \vD} 
    &\doteq \operatorname{ReLU}(\vZ - \eta \D\adj(\vD\vZ - \vZ) - \eta \lambda \bm{1}).
\end{align}
In \Cref{app:proofs-sparse}, we will show one can arrive at a similar operator to the above ISTA-like update for optimizing \eqref{eqn:sparsification} by properly linearizing and approximating the coding rate \(R(\vZ)\).
    
\subsection{The Overall White-Box Transformer Architecture: CRATE}
\label{sub:architecture}

\begin{figure}
     \centering
     \includegraphics[width=0.8\textwidth]{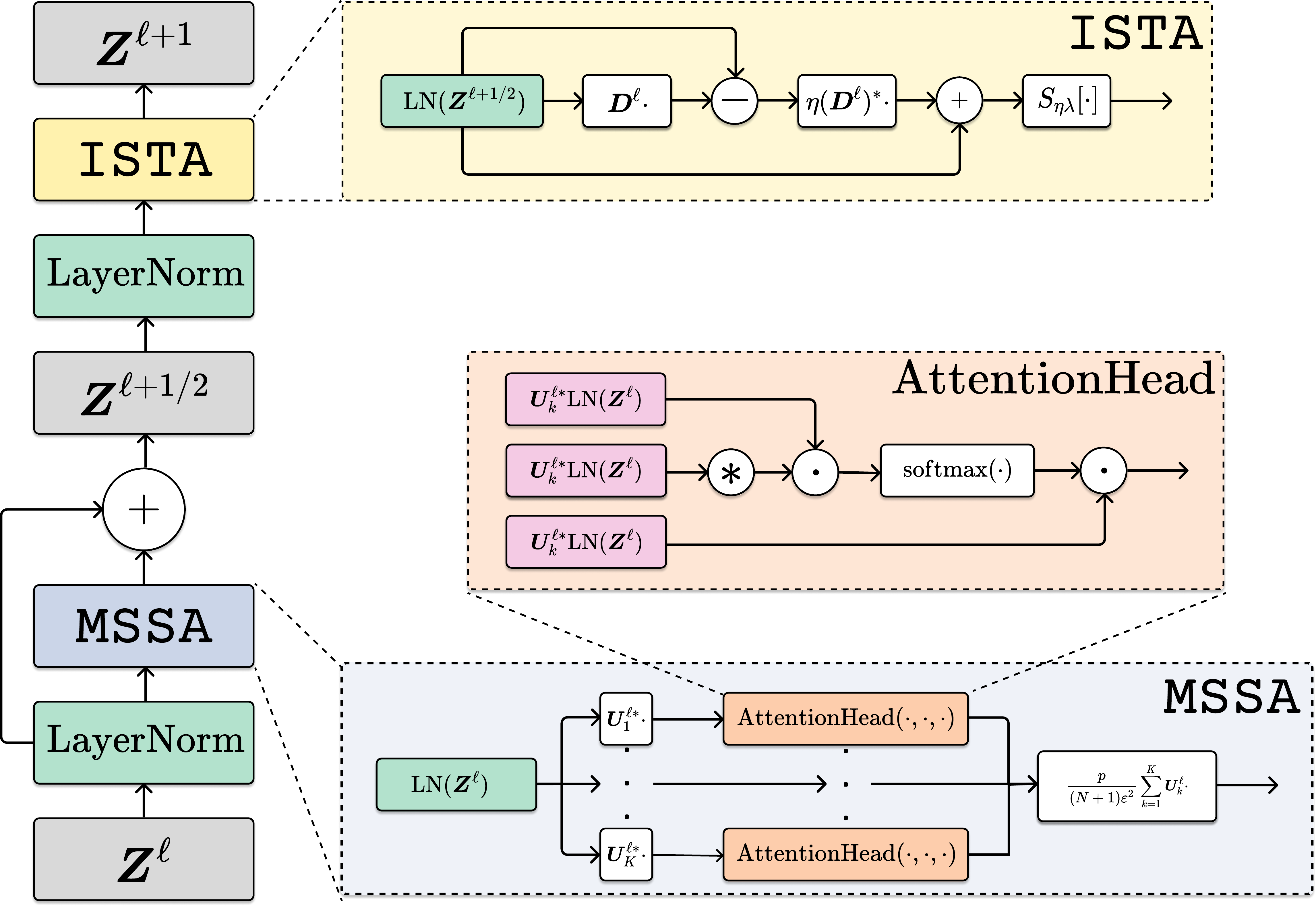}
    \caption{\small \textbf{One layer of the \ourscaps{} encoder architecture.} The full architecture is simply a concatenation of such layers, with some initial tokenizer, pre-processing head, and final task-specific head (i.e., a classification head).
    }
    \label{fig:arch}
\end{figure}

By combining the above two steps:
\begin{enumerate}[leftmargin=0.7cm]
    \item (\Cref{sub:compression}) Local compression of tokens within a sample towards a mixture-of-subspace structure, leading to the multi-head subspace self-attention block -- \texttt{MSSA};
    \item (\Cref{sub:sparse}) Global sparsification of token sets across all samples through sparse coding, leading to the sparsification block -- \texttt{ISTA};
\end{enumerate}
we can get the following rate-reduction-based transformer layer, illustrated in \Cref{fig:arch}, 
\begin{equation}
    \vZ^{\ell+1/2} \doteq \vZ^{\ell} + \texttt{MSSA}(\vZ^{\ell} \mid \vU_{[K]}^{\ell}), 
    \qquad 
    \vZ^{\ell+1}\doteq \texttt{ISTA}(\vZ^{\ell+1/2} \mid \D^\ell).
\end{equation}
Composing multiple such layers following the incremental construction of our representation in \Cref{eq:incremental}, we obtain a white-box transformer architecture that transforms the data tokens towards a compact and sparse union of incoherent subspaces. An overall flow of this architecture was shown in \Cref{fig:encoder_compression_sparsification}.

\begin{remark}[\textbf{Design choices in \ourscaps{}}]
    We note that in this work, at each stage of our network construction, we have chosen arguably the \textit{simplest possible} construction to use. We can substitute each part of this construction, so long as the new part maintains the same conceptual role, and obtain another white-box architecture. Nevertheless, our such-constructed architecture, called \ours{}, connecting to existing transformer models, is not only fully mathematically interpretable, but also obtains competitive results on real-world datasets, as we will see in Section \ref{sec:exp}.
\end{remark}

\begin{remark}[\textbf{The roles of the forward pass and backward propagation}]\label{sub:forward_backward}
    In contrast to other unrolled optimization approaches such as the ReduNet \citep{chan2021redunet}, we \textit{explicitly model} the distribution of each $\vZ^\ell$ and $\vZ^{\ell + 1/2}$ at each layer, either by a mixture of linear subspaces or sparsely generated from a dictionary. In \Cref{sub:unrolled_rep_learning}, we introduced the interpretation that at each layer \(\ell\), the learned bases for the subspaces \(\vU_{[K]}^{\ell}\) and the learned dictionaries \(\vD^{\ell}\) together serve as a \textit{codebook} or \textit{analysis filter} that encodes and transforms the intermediate representations at each layer \(\ell\). Since the input distribution to layer \(\ell\) is first modeled by \(\vU_{[K]}^{\ell}\) then transformed by \(\vD^{\ell}\), the input distribution to each layer is different, and so we require a separate code book at each layer to obtain the most parsimonious encoding. Parameters of these codebooks (i.e., the subspace bases and the dictionaries), heretofore assumed as being perfectly known, are actually learned from data (say via \textit{backward propagation} within end-to-end training).
    
    Hence, our methodology features a clear conceptual separation between forward ``optimization'' and backward ``learning'' for  the so-derived white-box deep neural network. Namely, in its forward pass, we interpret each layer as an operator which, conditioned on a learned model (i.e., a codebook) for the distribution of its input, transforms this distribution towards a more parsimonious representation. In its backward propagation, the codebook of this model, for the distribution of the input to each layer, is updated to better fit the input-output relationship. This conceptual interpretation implies a certain agnosticism of the model representations towards the particular task and loss; in particular, many types of tasks and losses will ensure that the models at each layer are fit, which ensures that the model produces parsimonious representations. To wit, we show in the sequel (\Cref{sec:exp}) that the \ours{} architecture promotes parsimonious representations and maintains layer-wise white-box operational characteristics on several different tasks, losses, and modalities.
\end{remark}

\section{White-Box Decoding via Structured Denoising and  Diffusion}\label{sec:autoencoding}

In \Cref{sec:encoding}, we have presented a principled metric for measuring the quality of learned representations---the sparse rate reduction \Cref{eq:sparse-rr}---and showed how to derive, via incremental optimization of this objective, a white-box transformer architecture (\ours{}) for general representation learning of high-dimensional data.
Conceptually, this corresponds to a (compressive) \textit{encoder}: $$f : \vX \rightarrow \vZ,$$ mapping high-dimensional data to representations preserving the distinct modes of intrinsic variability of the data.

For numerous reasons, ranging from being able to use the learned representations $\vZ$ for generation and prediction to having flexible avenues to learn the parameters $(\vU_{[K]}^{\ell})_{\ell \in [L]}$ of the white-box encoder $f$ from data, it is highly desirable to have a corresponding construction of a \textit{decoder}: $$g : \vZ \rightarrow  \vXh,$$ mapping the representations to approximations $\vXh$ of the original data distribution. 
However, it is challenging to construct a white-box decoder purely following the unrolled optimization framework that we have presented and exploited
in \Cref{sec:encoding} to derive the \ours{} encoder. Previous works, including notably the ReduNet of~\citet{chan2021redunet},
obtain white-box architectures for encoding only; on the other hand, models that have incorporated a decoder for learning (self-)consistent representations via autoencoding and \textit{closed-loop transcription} \citep{dai2022ctrl}, including in unsupervised settings, have leveraged
black-box deep network architectures for both the encoder $f$ and the decoder $g$ \citep{dai2023closed}, or limited-capacity architectures for the decoder $g$ \citep{tolooshams2021stable}.
Can compression alone, measured through the sparse rate reduction \Cref{eq:sparse-rr}, be used to derive a white-box decoder
architecture? And in such a white-box decoder architecture, what are the relevant operators for recovering the data distribution $\vXh \approx \vX$ from the representation $\vZ$, and can they be related to the operators in the encoder $f$?

In this section, we will resolve both of these fundamental questions affirmatively. We do this by establishing a powerful connection between \textit{compression}, around which we have derived the \ours{} encoder architecture, and \textit{diffusion-denoising}, the mathematical processes by which a data distribution is transformed into pure noise by incremental corruptions, and then recovered incrementally, using information about the data distribution at each corruption level. \Cref{fig:structured_diffusion} illustrates this connection with an intuitive example. This connection allows us to interpret the layers of the \ours{} encoder, which we have shown in \Cref{sec:encoding} perform compression against learnable local signal
models, say following \Cref{model:gaussian_tokens}, as performing denoising against the signal model. Since we are denoising against a highly structured input distribution, we call this process ``\textit{structured denoising}''. Given the model, this structured denoising process can be reversed in order to incrementally reconstruct the data distribution across several layers---we call this process ``\textit{structured diffusion}'', analogously but not identically to the denoising-diffusion process which underlies diffusion models. The structured denoising-diffusion processes naturally supply the construction of the first white-box decoder architecture for end-to-end representation learning.

\begin{figure}
    \centering
    \includegraphics[width=0.9\textwidth]{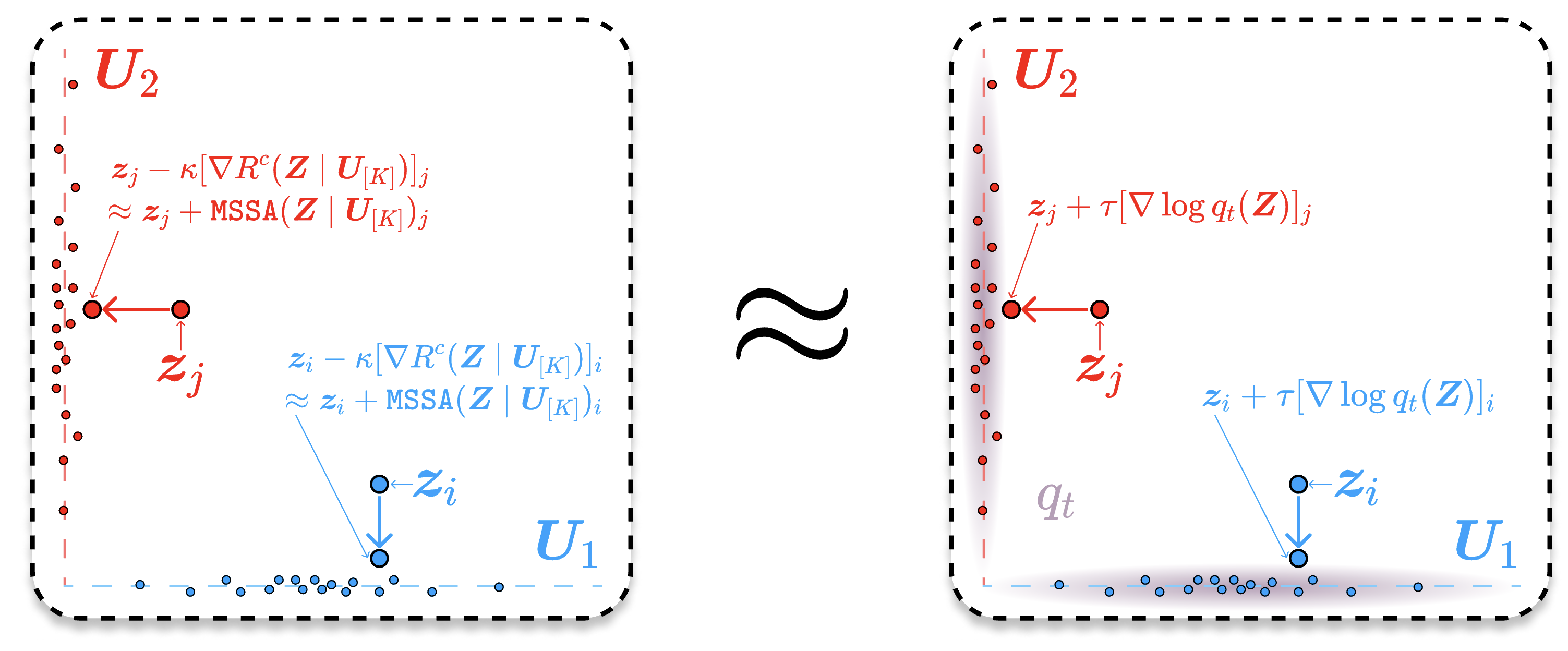}
    \caption{\small \textbf{Compression and denoising against the low-dimensional Gaussian mixture token model \Cref{model:gaussian_tokens} are equivalent.} \textit{Left:} the effect of compression against the low-dimensional Gaussian mixture model for tokens \Cref{model:gaussian_tokens}, i.e., taking gradient steps on the coding rate \(R^{c}(\cdot \mid \vU_{[K]})\)---or equivalently, using the \(\MSSA{\cdot \mid \vU_{[K]}}\) operator---which is shown in \Cref{thm:informal_rate_score} to be equivalent to projecting onto the \(\vU_{[K]}\). \textit{Right:} the effect of denoising against \Cref{model:gaussian_tokens}, i.e., taking gradient steps on the score function of the noisy model \Cref{model:gaussian_tokens_noise} at small noise levels \(\sigma\), or equivalently small times \(t\). Up to scaling factors (not pictured), these two operations are equivalent, and in any case have similar geometric and statistical interpretations as a projection onto the support of the data distribution. This connection motivates our structured denoising-diffusion framework, as elaborated in \Cref{sub:structured_diffusion}.}
    \label{fig:structured_diffusion}
\end{figure}

\subsection{Denoising-Diffusion against Low-Dimensional Structures}\label{sub:denoising}

In \Cref{sec:encoding}, we derived each layer $f^{\ell}$ of the encoder $f$ via \textit{compression} of the token distribution against a local signal model (i.e., the model \Cref{model:gaussian_tokens}), and \textit{sparsification} in the standard basis. 
To derive a corresponding white-box decoder $g$, we will make a connection between compression and \textit{denoising}, a problem with a rich mathematical theory and powerful implications for practical representation learning. 
In this section, we review the fundamental concepts of this theory in order to motivate our later developments.

\paragraph{One-step denoising via 
Tweedie's formula.} Consider, for simplicity, a single token \(\vz_{\natural}^{\ell}\) which has a particular marginal distribution, and define a noisy observation \(\vz^{\ell} \doteq \vz_{\natural}^{\ell} + \sigma^{\ell}\vw\), where \(\sigma^{\ell} > 0\) is a positive noise level, and \(\vw\) is a standard Gaussian vector independent of \(\vz_{\natural}^{\ell}\).
We imagine that $\vz_{\natural}^{\ell}$ represents the marginal distribution of any token at layer $\ell$ of the encoding process, and \(\vz^{\ell}\) has the same interpretation subject to a (small) Gaussian corruption.
To \textit{denoise} the observation $\vz^{\ell}$ is to recover, up to statistical limits, the signal (given by \Cref{model:gaussian_tokens}, which we will write here as $\vz_{\natural}^\ell$) from the noisy observation $\vz^{\ell}$.\footnote{In representation learning, we typically think of $\vz^{\ell}$ not as an ``observation'', but as a small perturbation off of the target model, whose structure matches our desiderata for representation learning. Similarly, rather than ``recovery'' of the structure from noisy observations, we are concerned with transforming the current distribution of the data to be closer to the target model. We will see in the next section how compression provides the bridge between these two perspectives; accordingly, we describe the denoising problem using language specific to either perspective according to context.}
In the mean-square sense, the optimal estimate is $\E{\vz_{\natural}^{\ell} \mid \vz^{\ell}}$. %
Letting $\vz \mapsto q^{\ell}(\vz)$ denote the density of $\vz^{\ell}$,\footnote{We emphasize that $q^{\ell}$ depends on the noise level $\sigma^{\ell}$, although we suppress this in the notation for concision.} 
Tweedie's formula \citep{Efron2011-wn} allows us to express this %
in closed-form:
\begin{equation}
    \E{\vz_{\natural}^{\ell} \mid \vz^{\ell}}
    = \vz^{\ell} + (\sigma^{\ell})^2 \nabla \log
    q^{\ell}(\vz^\ell).
    \label{eq:tweedie}
\end{equation}
Tweedie's formula expresses the optimal representation in terms of an additive correction (in general a nonlinear function of $\vz^{\ell}$) to the noisy observations by the gradient of the \textit{log-likelihood} of the distribution of the noisy observations, also known as the \textit{score function} \(\nabla \log q^{\ell}\) \citep{Hyvarinen2005-fi}. One may interpret Tweedie's formula as denoising via a gradient ascent step on the score function at noise level \(\sigma^{\ell}\). This connection is well-known in the areas of estimation theory and inverse problems \citep{Efron2011-wn,Stein1981-jv,Raphan2011-by,Milanfar2013-js,Kadkhodaie2020-fc,Venkatakrishnan2013-rt,Romano2017-rp}, and more recently has found powerful applications in the training of generative models for natural images \citep{Hyvarinen2005-fi,Vincent2011-dr,Sohl-Dickstein2015-kz,Song2020-xo,Song2020-hb}.

The practical question, of course, is then whether it is possible to efficiently {\em learn to denoise}. The additive correction with score function in \Cref{eq:tweedie} 
depends on the current noise level and the token distribution, and for general high-dimensional distributions (such as those of natural images, as above), this token distribution is unknown and can be prohibitively costly to compute. Nevertheless, in practice, the score function is often empirically modeled and approximated with a neural network (say a transformer), or another \textit{nonparametric} estimator, and estimated with a large number of samples and huge amounts of computation. Despite the empirical success of such diffusion-denoising methods in learning distributions of images \citep{rombach2022high}, there has been little  theoretical justification for why transformer-like architectures would be effective to model such score functions.

\paragraph{Denosing against a low-dimensional Gaussian mixture.} In the work of~\citet{Hyvarinen2005-fi}, the score function is used to learn a data distribution from a restricted \textit{parametric} family. As shown by \citet{Hyvarinen2005-fi}, for certain broad classes of parametric families, the score function is efficiently computable, e.g.\ for a mixture of Gaussians, independent component analysis models, over-complete dictionary learning, etc. Here (i.e., in this section and hereafter), we follow the same methodology. Namely, suppose that \(\vz_{\natural}^{\ell}\) has the low-dimensional Gaussian mixture distribution outlined in \Cref{model:gaussian_tokens}, so that \(\vz^{\ell}\) has the distribution outlined in \Cref{model:gaussian_tokens_noise} with noise level \(\sigma^{\ell}\). In this case, we can obtain a closed-form expression for
the {score function} $\nabla \log q^{\ell}$, which, when combined with Tweedie's formula \Cref{eq:tweedie} and some mild technical assumptions, %
gives the following approximation (shown in \Cref{app:proofs-denoising}):
\begin{equation}
    \E{\vz_{\natural}^{\ell} \mid \vz^{\ell}}
    \approx \mat{\vU_{1}, \dots, \vU_{K}}\left[\diag\mathopen{}\left(\softmax{\frac{1}{2(\sigma^{\ell})^{2}}\mat{\norm{\vU_{1}\adj\z^{\ell}}_{2}^{2} \\ \vdots \\ \norm{\vU_{K}\adj\z^{\ell}}_{2}^{2}}}\right) \otimes \vI\right]\mat{\vU_{1}\adj\z^{\ell} \\ \vdots \\ \vU_{K}\adj\z^{\ell}},
    \label{eq:mog-score-as-attention}
\end{equation}
where \(\otimes\) denotes the {\em Kronecker} product. In the small-noise limit $\sigma^{\ell} \to 0$, the operator implemented by \Cref{eq:mog-score-as-attention} becomes \textit{a projection of the observation $\vz^{\ell}$ onto the support of the distribution of the signal model $\vz_{\natural}^{\ell}$}, a significant characterization of the local behavior of denoising against the signal model \Cref{model:gaussian_tokens}.
Moreover, perhaps surprisingly, this operation is quite similar to the \texttt{MSSA} block derived in \Cref{sub:compression}, specialized to the case $n = 1$.
Indeed, the operation in \Cref{eq:mog-score-as-attention} resembles a self-attention layer in a standard transformer architecture with \(K\) heads, sequence length \(n = 1\), and the ``query-key-value'' constructs being replaced by a single linear projection \(\vU_{k}\adj\z^{\ell}\) of the token \(\z^{\ell}\). 

\paragraph{Stochastic denoising process.} The above approach only denoises the token \(\vz^{\ell}\) once. Much of the practical power of denoising via the score function, however, stems from the ability to \textit{iteratively denoise in small increments}. 
Starting with the token \(\vz^{\ell}\), given access to score functions of the distribution of \(\vz_{\natural}^{\ell}\) perturbed at at all noise levels up to \(\sigma^{\ell}\), iterative denoising of \(\vz^{\ell}\) produces \textit{new samples from the noiseless distribution of tokens $\vz_{\natural}^{\ell}$}. By Tweedie's formula \Cref{eq:tweedie}, this means that denoising $\vz^{\ell}$ is equivalent to representing the signal $\vz_{\natural}^\ell$ in a precise distributional sense.
In a simple instantiation, this representation process takes the following form \citep{Song2020-xo}. First, consider a \textit{diffusion process}, indexed by time $t \in [0, T]$ for $T = (\sigma^{\ell})^{2} >0$, which transforms the distribution of $\vz_{\natural}^{\ell}$ towards the noisy distribution of \(\vz^{\ell}\):
\begin{equation}
\begin{split}
    \diff \vz_t &= \diff \vw_t, \quad t \in [0, T], \\
    \vz_0 &\equid \vz_{\natural}^{\ell}.
\end{split}
\label{eq:ve-fwd-decoder}
\end{equation}
Here, $(\vw_t)_{t \in [0, T]}$ is a Wiener process, and we express this process in \Cref{eq:ve-fwd-decoder} as a stochastic differential equation (SDE); for background on SDEs, see \Cref{app:diffusion}. This SDE has a unique (strong, i.e., pathwise well-defined) solution which has distribution $\vz_t \equid \vz_{\natural}^{\ell} + \vw_{t}$. Recalling that \((\vw_{t})_{t \in [0, T]}\) is a Wiener process, \(\vw_{t}\) is unconditionally distributed as \(\sN(\vZero, t\vI)\), so that \(\vz_{T} = \vz_{(\sigma^{\ell})^{2}} \equid \vz^{\ell}\). %
As above, we write $q_t$ to denote the density of $\vz_t$. Then by the theory of time reversal for diffusion processes \citep{Haussmann1986-wa,Millet1989-ee}, the random process \((\vz_{t}^{\leftarrow})_{t \in [0, T]}\), where $\vz_{t}^{\leftarrow} \doteq \vz_{T - t}$, uniquely solves the following SDE: %
\begin{equation}
\begin{split}
    \diff \vz_t^{\leftarrow} &= \nabla \log q_{T - t}(\vz_t^{\leftarrow}) \diff t + \diff \vw^{\leftarrow}_t, \quad t \in [0, T], \\
    \vz_0^{\leftarrow} &\equid \vz^{\ell},
\end{split}
\label{eq:ve-bwd-decoder}
\end{equation}
where $\vw^{\leftarrow}_t$ is another Wiener process.\footnote{In the mathematical literature, both \Cref{eq:ve-fwd-decoder} and \Cref{eq:ve-bwd-decoder} are classified as (Markov) diffusion processes \citep{Bakry2016-pl}. By virtue of \Cref{eq:tweedie}, in this work we will refer to \Cref{eq:ve-fwd-decoder} as ``diffusion'' and \Cref{eq:ve-bwd-decoder} as ``denoising''.} Because $(\vz_{T-t})_{t \in [0, T]}$ solves \Cref{eq:ve-bwd-decoder}, it follows that this process yields a representation (via sampling) for $\vz_{\natural}^{\ell}$, as promised.
Crucially, it can be rigorously shown that an iterative denoising-diffusion process such as \Cref{eq:ve-bwd-decoder} is both necessary and sufficient for representing high-dimensional multimodal data distributions efficiently \citep{Koehler2022-ed,Ge2018-ny,Qin2023-wn}.

\paragraph{Deterministic denoising process.} In \Cref{eq:ve-bwd-decoder}, \(\vz_{t}^{\leftarrow}\) follows a noisy gradient flow to maximize (log-)likelihood of the current iterate against a smoothly time-varying, i.e.,  ``diffused,'' probabilistic model for the data distribution. We observe that each infinitesimal update of \Cref{eq:ve-bwd-decoder} is similar to Tweedie's formula \Cref{eq:tweedie}, which takes a single gradient step on a ``diffused'' log-likelihood to denoise. Thus, we interpret the process \Cref{eq:ve-bwd-decoder} as a \textit{stochastic} denoising process. In practice, and in particular towards our design of a corresponding \ours{} decoder, it will be useful to have a \textit{deterministic} analogue of this process. The probability flow ODE affords such a process: following \citet{Song2020-xo}, the dynamics of the probability density of \(\vz_{t}^{\leftarrow}\) in \Cref{eq:ve-bwd-decoder} is identical to that of the ODE:  
\begin{equation}
\begin{split}
    \diff \vz_t^{\leftarrow} &= \frac{1}{2} \nabla \log q_{T - t}(\vz_t^{\leftarrow}) \diff t, \quad t \in [0, T], \\
    \vz_0^{\leftarrow} &\equid \vz^{\ell}.
\end{split}
\label{eq:ve-bwd-decoder-pfode}
\end{equation}
It is significant that the representation for $\vz_{\natural}^{\ell}$ afforded by the deterministic process \Cref{eq:ve-bwd-decoder-pfode} can be characterized simply (again using \Cref{eq:tweedie}) as iterative denoising, across multiple noise scales. This leads to the core insight of diffusion-denoising:
\begin{quote}
\centering
   \textit{Denoising is equivalent to learning a representation of the data distribution.} 
\end{quote}

With these preliminaries in hand, we will 
develop a deeper link between compression and denoising that we can leverage to build a consistent encoder-decoder pair $f, g$ for general ($n > 1$) sets of tokens in the next section. %

\subsection{Parsimony and Consistency via Structured Denoising-Diffusion}\label{sub:structured_diffusion}

In \Cref{sub:denoising}, we have presented the core ideas of the denoising-diffusion theory in the context of the token marignal distribution of our signal model \Cref{model:gaussian_tokens}.
We now significantly extend the applicability of this theory, by connecting it to the compression framework we have introduced in \Cref{sec:encoding}.
We will use this extension to define our \textit{structured diffusion-denoising} paradigm, around which we derive a corresponding white-box decoder architecture for the encoder $f$. %

To this end, we study in detail a special case of the signal model \Cref{model:gaussian_tokens}: we assume that the 
indices $s_{i}$ specifying the subspace membership of each token $\vz_i^{\ell}$ are i.i.d.\ random variables, taking values in the set $\set{1, \dots, K}$ with equal probability, independently of the noises $\vw_i$ and the coefficients $\valpha_i$.
We have seen in \Cref{eq:mog-score-as-attention} in the previous subsection that for such a model with vanishing noise level $\sigma^{\ell} > 0$, denoising against the token marginal distribution implements a projection onto the support of the noiseless distribution.
We prove that this same property is shared by taking a gradient step on the compression objective $R^c(\vZ^{\ell} \mid \vU_{[K]}^{\ell})$, as in the construction of the \texttt{MSSA} block in our white-box encoder in \Cref{sub:compression}, confirming the qualitative picture in \Cref{fig:structured_diffusion}:

\begin{theorem}[Informal version of \Cref{lem:inverse-term} in \Cref{app:computations_rr_gradient}]\label{thm:informal_rate_score}
Suppose $\vZ^{\ell}$ follows the noisy Gaussian codebook model \Cref{model:gaussian_tokens_noise}, with infinitesimal noise level $\sigma^{\ell} > 0$ and subspace memberships
$s_{i}$ distributed as i.i.d.\ categorical random variables on the set of subspace indices $\set{1, \dots, K}$, independently of all other sources of randomness.
Suppose in addition that the number of tokens $n$, the representation dimension $d$, the number of subspaces $K$, and the subspace dimensions $p$ have relative sizes
matching those of practical transformer architectures including the \ours{} encoder (specified in detail in \Cref{ass:parameter_config}).
Then the negative compression gradient \(-\nabla_{\vz_{i}}R^{c}(\vZ^{\ell} \mid \vU_{[K]}^{\ell})\) points from $\vz_i^{\ell}$ to the nearest \(\vU_{k}^{\ell}\).
\end{theorem}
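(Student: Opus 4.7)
The plan is to begin by computing the gradient in closed form. Applying standard matrix calculus to $R^{c}(\vZ \mid \vU_{[K]}) = \tfrac{1}{2}\sum_{k} \logdet{\vI_{p} + \beta \vU_{k}\adj \vZ\vZ\adj \vU_{k}}$ (using Sylvester's determinant identity to rewrite the $n \times n$ log-determinant in the paper's display \eqref{eq:def-mcr-parts} as a $p \times p$ one), I would express the $i$-th column of the gradient as $\nabla_{\vz_{i}} R^{c} = \beta \sum_{k} \vU_{k} \vM_{k} \vU_{k}\adj \vz_{i}$ where $\vM_{k} \doteq (\vI_{p} + \beta \vU_{k}\adj \vZ \vZ\adj \vU_{k})^{-1}$. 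This reduces the problem to understanding the action of the random operator $\sum_{k} \vU_{k} \vM_{k} \vU_{k}\adj$ on the token $\vz_{i}$.

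Next I would simplify the matrix inverse $\vM_{k}$ via concentration of measure under \Cref{model:gaussian_tokens_noise}. Decompose $\vU_{k}\adj \vZ \vZ\adj \vU_{k} = \sum_{j : s_{j} = k} \vU_{k}\adj \vz_{j} \vz_{j}\adj \vU_{k} + \sum_{j : s_{j} \neq k} \vU_{k}\adj \vz_{j} \vz_{j}\adj \vU_{k}$. Under the asserted parameter configuration of \Cref{ass:parameter_config}, there are approximately $n/K$ tokens in the first sum, each contributing $\valpha_{j} \valpha_{j}\adj$ plus an $O(\sigma)$ perturbation, so that the first sum concentrates around $(n/K) \vI_{p}$; the second sum is controlled by the (near-)incoherence of the subspaces induced by the relative scales of $d$, $p$, and $K$. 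Together with Sherman--Morrison to peel off the self-contribution $\vU_{k}\adj \vz_{i} \vz_{i}\adj \vU_{k}$ (so that $\vM_{k}$ depends only on the tokens other than $\vz_{i}$), this gives $\vM_{k} \approx \gamma \vI_{p}$ with $\gamma = 1/(1 + \beta n/K)$ up to vanishing error.

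Then I would split the outer sum over $k$ according to whether $k = s_{i}$ or $k \neq s_{i}$. For $k = s_{i}$ one has $\vU_{k}\adj \vz_{i} = \valpha_{i} + \sigma \vU_{k}\adj \vw_{i}$, whose norm is $\Theta(\sqrt{p})$, and the resulting contribution is the dominant term $\beta \gamma \vU_{s_{i}} \vU_{s_{i}}\adj \vz_{i}$, which in the infinitesimal-noise limit is just $\beta \gamma \vU_{s_{i}} \valpha_{i}$. For $k \neq s_{i}$, one has $\vU_{k}\adj \vz_{i} = \vU_{k}\adj \vU_{s_{i}} \valpha_{i} + \sigma \vU_{k}\adj \vw_{i}$, whose norm is controlled by both the incoherence $\norm{\vU_{k}\adj \vU_{s_{i}}}$ and $\sigma$, and is therefore negligible under the parameter configuration. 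Summing, the negative gradient $-\nabla_{\vz_{i}} R^{c}$ concentrates in the subspace $\vU_{s_{i}}$, with negligible components in the other subspaces $\vU_{k}$, $k \neq s_{i}$; this is the precise sense in which it ``points from $\vz_{i}$ to the nearest $\vU_{k}$''.

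The main obstacle will be the concentration step: one needs uniform (in $k$) control of $\vM_{k}$ around $\gamma \vI_{p}$, with explicit tracking of how the cross-subspace Gram contributions $\vU_{k}\adj \vU_{s_{j}}$ (which can be large in Frobenius norm even when individual entries are small) interact with the parameter scaling in \Cref{ass:parameter_config}. A secondary subtlety is the Sherman--Morrison bookkeeping that separates $\vz_{i}$'s own contribution from the rest, which is necessary because $\nabla_{\vz_{i}} R^{c}$ formally depends on $\vZ\vZ\adj$ including $\vz_{i} \vz_{i}\adj$; handling this cleanly is what makes the statement rigorous rather than only heuristic.
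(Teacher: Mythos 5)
Your gradient formula is fine (your $p\times p$ form $\beta\sum_k\vU_k\vM_k\vU_k\adj\vz_i$ with $\vM_k=(\vI_p+\beta\,\vU_k\adj\vZ\vZ\adj\vU_k)^{-1}$ is equivalent, by the push-through identity, to the $n\times n$ form the paper works with), but the central simplification in your second step is a genuine gap. You claim that the within-class scatter $\sum_{j:s_j=k}\vU_k\adj\vz_j\vz_j\adj\vU_k$ concentrates around $(n/K)\,\vI_p$, so that $\vM_k\approx\gamma\vI_p$ with $\gamma=1/(1+\beta n/K)$. This is false in exactly the parameter regime the theorem stipulates: \Cref{ass:parameter_config} forces $p\geq n/K$ (for \ours{}-Base, $p=64$ while $n/K\approx 16$), so there are fewer class-$k$ tokens than the subspace dimension, the scatter $\vA_k\vA_k\adj$ has rank about $n/K<p$, and it is approximately a (scaled) projector onto the random span $\im(\vA_k)\subset\bR^p$ rather than a multiple of the identity. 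Isotropic concentration of an empirical second-moment matrix requires many more samples than dimensions, i.e.\ $n/K\gg p$ — the opposite regime. Worse, if one grants your approximation, the conclusion evaporates: since the $\vU_k$ are orthogonal and complete ($Kp=d$, so $\sum_k\vU_k\vU_k\adj=\vI_d$), you would get $-\nabla_{\vz_i}R^c\approx-\beta\gamma\,\vz_i$, a pure shrinkage toward the origin, which does not point from $\vz_i$ toward the nearest $\vU_k$. The entire content of the statement lives in the anisotropy your step erases — $\vM_k$ acts like $\tfrac{1}{1+\beta}$ on directions in $\im(\vA_k)$ but like the identity on the noise directions (within $\vU_k$ but orthogonal to the data span, and in the other subspaces), and it is this differential treatment that makes the gradient step suppress the off-structure perturbation $\vDelta$ while only mildly shrinking the signal $\vX_\natural$.

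For comparison, the paper's proof of \Cref{lem:inverse-term} never passes to the $p\times p$ covariance: it keeps the $n\times n$ token-side Gram matrix, whose class-$k$ diagonal block is $\beta^{-1}\vI+\vA_k\adj\vA_k\approx(1+\beta^{-1})\vI$ (this block, of size $\approx n/K$, \emph{is} well-conditioned and near-isotropic, unlike $\vA_k\vA_k\adj$) while the remaining diagonal blocks are $\beta^{-1}\vI$; the noise enters as a perturbation $\vXi_k$ whose Neumann expansion is controlled on a high-probability event bounding $\norm{\vDelta}$, $\norm{\vA_k}$, and $\norm{\vA_k\adj\vA_k-\vI}$, and the surviving main term is exactly the approximate projection $\sP_{\vU_{[K]}}$ acting on $\vDelta$ plus a $\tfrac{1}{1+\beta^{-1}}$ shrinkage of $\vX_\natural$. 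Your Sherman--Morrison peeling of $\vz_i$'s own contribution is a reasonable instinct but is not needed on that route (and by itself does not repair the isotropy claim); also note the paper assumes exact orthogonality of the $\vU_k$ and unit-norm tokens, so your $\Theta(\sqrt{p})$ normalization and appeal to incoherence are cosmetic mismatches, whereas the concentration claim above is the substantive one to fix.
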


\Cref{thm:informal_rate_score} establishes in a representative special case of the Gaussian codebook model \Cref{model:gaussian_tokens} that at low noise levels, \textit{compression against the local signal model $\vU_{[K]}^{\ell}$ 
is equivalent to denoising against the local signal model}. 
Viewed through the lens of the deterministic denoising process \Cref{eq:ve-bwd-decoder-pfode}, this establishes a link between the gradient of the compression term $R^c$ and the score function for the Gaussian codebook model.
Most importantly, this allows us to understand the \texttt{MSSA} operators of the \ours{} encoder, derived in \Cref{sub:compression} from a different perspective, as realizing an incremental transformation of the data distribution towards the local signal model, via denoising.
This important property guarantees that a corresponding deterministic diffusion process---namely, the time reversal of the denoising process \Cref{eq:ve-bwd-decoder-pfode}---implies an inverse operator for the compression operation implemented by \texttt{MSSA}.

\paragraph{Using this connection to construct a principled autoencoder.} The above result suggests the following approach to constructing white-box autoencoding networks. 
Given the representation of the token distribution at layer $\ell$, namely $\vZ^{\ell}$, we construct a deterministic structured denoising process, identical to \Cref{eq:ve-bwd-decoder-pfode}, 
which compresses the data towards the local signal model at layer $\ell$ of the representation $f$, namely $\vU_{[K]}^{\ell}$. %
Using the equivalence asserted in \Cref{thm:informal_rate_score}, we can express this structured denoising process in terms of $R^c$, on small timescales $T>0$ (as we work out in detail in \Cref{app:structured-diffusion-calcs}): 
\begin{equation}\label{eq:compression_ode}
    \odif{\vZ^{\ell}(t)} = -\frac{1}{2(T-t)}\nabla R^{c}(\vZ^{\ell}(t) \mid \vU_{[K]}^{\ell})\odif{t}. 
\end{equation}
This process interpolates between the signal model \Cref{model:gaussian_tokens}, at $t = 0$, to a noisy version of the signal model at $t = T$.
By the same token, time reversal of diffusion processes gives a \textit{structured diffusion} process, which transforms the signal model to 
an incrementally more noisy version:
\begin{equation}\label{eq:compression_ode_diffusion}
    \odif{\wt{\vZ}^{\ell}(t)} = \frac{1}{2t}\nabla R^{c}(\wt{\vZ}^{\ell}(t) \mid \vU_{[K]}^{\ell})\odif{t}.
\end{equation}
These two processes are inverses of one another in a distributional sense. 
To use these structured denoising and diffusion processes for representation learning, we may ambitiously treat the 
first-layer distribution $\vZ^{1} = f^{\pre}(\vX)$ itself as being a small deviation off the distribution of the first local signal model $\vU_{[K]}^1$. 
In this way, the incrementally-constructed representation \Cref{eq:incremental}, which we have been studying at a single ``layer''
$\ell$ thus far, naturally leads to the following (completely formal) structured denoising process in which layer index $\ell$ and time $t$ are unified
into a single parameter, and where $\vZ(0) = \vZ^1$ is the preprocessed data distribution:
\begin{equation}\label{eq:structured-denoising-conceptual}
    \odif{\vZ(t)}= -\frac{1}{2(T - t)}\nabla R^{c}(\vZ(t) \mid \vU_{[K]}(t))\odif{t}.
\end{equation}
Similarly, we have the (completely formal) inverse process, a structured diffusion process:
\begin{equation}\label{eq:structured-diffusion-conceptual}
    \odif{\wt{\vZ}(t)}= \frac{1}{2t}\nabla R^{c}(\wt{\vZ}(t) \mid \vU_{[K]}(T-t))\odif{t}.
\end{equation}
These two equations provide a conceptual basis for transforming data to and from a structured, parsimonious representation, via the denoising-diffusion theory. On the one hand, their similar functional forms---unified through compression, via the compression gradient $\nabla R^c$ and \Cref{thm:informal_rate_score}---demonstrate that the operators necessary for structured denoising and structured diffusion take essentially the same form.
On the other hand, the connection we have made in \Cref{sub:compression} between the gradient of the compression term of the rate reduction objective and transformer-like network layers implies that a transformer-like architecture is sufficient for both compressive encoding and decoding.
We can therefore realize compressive autoencoding with a fully mathematically-interpretable network architecture, which we now instantiate.

\subsection{Structured Denoising-Diffusion via Invertible Transformer Layers}\label{sub:structured_diffusion_impl}

In \Cref{sec:encoding}, we described a method to construct a white-box transformer-like encoder network via unrolled optimization meant to compress the data against learned geometric and statistical structures, say against a distribution of tokens where each token is marginally distributed as a Gaussian mixture supported on \(\vU_{[K]}\). 
Armed with the structured diffusion-denoising correspondence outlined in \Cref{sub:structured_diffusion} and its connection to compression, we now 
generalize the \ours{} compressive encoder architecture to a full encoder-decoder pair, with essentially identical operators transforming the data distribution from layer to layer, and thus similarly interpretable operational characteristics.

\paragraph{A white-box encoder layer which implements structured denoising.} Recall that in \Cref{sub:structured_diffusion}, we established a continuous-time deterministic dynamical system which implements \textit{structured denoising}, in that it denoises the initial data towards the desired parsimonious structure:
\begin{equation}\label{eq:structured_denoising_ode_prediscretized}
    \odif{\vZ(t)}= -\frac{1}{2(T-t)}\nabla R^{c}(\vZ(t) \mid \vU_{[K]}(t))\odif{t}.
\end{equation}
In order to construct a network architecture, we use a first-order discretization (the discretization scheme being another design choice) of this process, 
which we describe in detail in \Cref{app:structured-diffusion-calcs}. This obtains the iteration
\begin{align}\label{eq:structured_denoising_ode_discretized}
    \vZ^{\ell + 1/2} 
    &\approx \vZ^{\ell} - \kappa \nabla R^{c}(\vZ^{\ell} \mid \vU_{[K]}^{\ell}), \\
    \text{in particular}\ \vZ^{\ell + 1/2}\label{eq:compression_structured_denoising}
    &=  \vZ^{\ell} + \MSSA{\vZ^{\ell} \mid \vU_{[K]}^{\ell}},
\end{align}
where \(\MSSA{\cdot}\) was defined in \Cref{eq:Multi-Head-SSA}. In order to perform structured denoising while ensuring that the representation structures (e.g., supporting subspaces) themselves are parsimonious (i.e., sparse), similar to \Cref{sec:encoding}, we insert a sparsification step for the features. Namely, we instantiate a learnable dictionary \(\vD^{\ell} \in \bR^{d \times d}\) and sparsify against it, obtaining 
\begin{align}
    \vZ^{\ell + 1}
    &\approx \argmin_{\vZ \geq \vZero}\left[\lambda \norm{\vZ}_{1} + \frac{1}{2}\norm{\vZ^{\ell + 1/2} - \vD^{\ell}\vZ}_{F}^{2}\right], \\
    \text{in particular}\ \vZ^{\ell + 1}\label{eq:sparsification_structured_denoising}
    &= \ISTA{\vZ^{\ell + 1/2} \mid \vD^{\ell}},
\end{align}
where \(\ISTA{\cdot}\) was defined in \Cref{eq:ista-block}. This yields a two step iteration for the \(\ell\th\) encoder layer \(f^{\ell}\), where \(\vZ^{\ell + 1} = f^{\ell}(\vZ^{\ell})\):
\begin{equation}\label{eq:encoder_layer}
    \vZ^{\ell + 1/2} = \vZ^{\ell} + \MSSA{\vZ^{\ell} \mid \vU_{[K]}^{\ell}}, \qquad \vZ^{\ell + 1} = \ISTA{\vZ^{\ell + 1/2} \mid \vD^{\ell}}.
\end{equation}
This is the same layer as in the \ours{} encoder, whose conceptual behavior we have illustrated in \Cref{fig:compression_sparsification_oneiter}. Thus, we have re-derived the \ours{} encoder layer from a useful alternate perspective, that of structured denoising. Namely, we have shown an equivalence between structured denoising and unrolled optimization in this case, which stems from the fact that the diffusion probability flow \Cref{eq:structured-denoising-conceptual} is conceptually and mechanically similar to gradient flow on the compression objective in certain regimes. 
Thus, we have demonstrated a useful conceptual connection between discretized diffusion processes and unrolled optimization as iteratively compressing or denoising the signal against the learned data structures.

\paragraph{A white-box decoder layer which implements structured diffusion.} Previously, we have shown how the structured denoising approach can recover the \ours{} encoder architecture, derived through unrolled optimization in \Cref{sec:encoding}, as an encoder. Now, we go beyond the purview of unrolled optimization. Namely, we use the connections concretely described in \Cref{sub:structured_diffusion} to construct a decoder which implements \textit{structured diffusion}.

Recall that the encoder is constructed from a discretization of the \textit{structured denoising} ODE given in \Cref{eq:structured-denoising-conceptual}. Its pathwise time reversal, which inverts the transformation of the data distribution induced by the structured denoising ODE, is given formally by the \textit{structured diffusion} ODE:
\begin{equation}\label{eq:structured_diffusion_ode_prediscretized}
    \odif{\wt{\vZ}(t)}= \frac{1}{2t}\nabla R^{c}(\wt{\vZ}(t) \mid \vU_{[K]}(T-t))\odif{t}.
\end{equation}
For this reason, we use the structured diffusion ODE as the backbone of our decoder.

Indeed, a first-order discretization of this ODE %
obtains the iteration:
\begin{equation}\label{eq:structured_diffusion_discretized}
    \wt{\vZ}^{\ell + 1} = \wt{\vZ}^{\ell + 1/2} - \MSSA{\wt{\vZ}^{\ell + 1/2} \mid \vV_{[K]}^{\ell}} \approx \wt{\vZ}^{\ell + 1/2} + \kappa \nabla R^{c}(\wt{\vZ}^{\ell + 1/2} \mid \vV_{[K]}^{\ell}),
\end{equation}
where \(\vV_{[K]}^{\ell} = (\vV_{1}^{\ell}, \dots, \vV_{K}^{\ell})\) and each \(\vV_{k}^{\ell} \in \bR^{d \times p}\) are the bases of the subspaces to ``anti-compress'' against. 
To invert the effect of a sparsifying \(\ISTA{\cdot}\) step, %
we instantiate a learnable \textit{synthesis} dictionary \(\vE^{\ell} \in \bR^{d \times d}\) and multiply by it, obtaining the iteration:
\begin{equation}
    \wt{\vZ}^{\ell + 1/2} = \vE^{\ell}\wt{\vZ}^{\ell}, \qquad \wt{\vZ}^{\ell + 1} = \wt{\vZ}^{\ell + 1/2} - \MSSA{\wt{\vZ}^{\ell + 1/2} \mid \vV_{[K]}^{\ell}}.
\end{equation}
This constructs the \((\ell + 1)^{\mathrm{st}}\) layer \(g^{\ell}\) of our decoder as 
\begin{equation}\label{eq:def_decoder_layer}
    \wt{\vZ}^{\ell + 1} = g^{\ell}(\wt{\vZ}^{\ell}) \doteq \vE^{\ell}\wt{\vZ}^{\ell} - \MSSA{\vE^{\ell}\wt{\vZ}^{\ell} \mid \vV_{[K]}^{\ell}}.
\end{equation}
A graphical depiction of the encoder and decoder layers, with layer normalization added to match the implementation, is found in \Cref{fig:crate_mae_layers}.

\begin{remark}[\textbf{Lack of weight-tying in autoencoding architecture}]
    Our derivation suggests simple pre-set values for \(\vV_{[K]}^{\ell}\) and \(\vE^{\ell}\), i.e., \(\vU_{[K]}^{L - \ell}\) and \((\vD^{\ell})\adj\) respectively. However, we do \textit{not} enforce these choices, or in any way share the weights between the encoder and decoder \textit{a priori} in our implementation. This is because the discretization of the structured denoising and structured diffusion dynamics in \Cref{eq:structured-denoising-conceptual,eq:structured-diffusion-conceptual} is good yet imperfect; similarly, the rotation perspective on the \(\ISTA{\cdot}\) (i.e., from \Cref{sub:compression}) is good yet imperfect, so the such-derived inversion above is also imperfect. Thus we should not expect a \(1\)-\(1\) correspondence between codebooks in the encoder and decoder. Correspondingly, each encoder layer and its corresponding decoder layer are only approximate mutual inverses.
\end{remark}

\begin{remark}[\textbf{Alternate instantiations of structured denoising-diffusion}]
    In this section, to construct a decoder we established a specific connection between denoising against the model \Cref{model:gaussian_tokens} and compression using a gradient step on the coding rate \(R^{c}\). Mechanically, this connection was used to justify an inversion of the \(\texttt{ISTA}\) compression operator defined in \Cref{sec:encoding}. However, as indicated in \Cref{rmk:energy_based_models}, one may interpret the whole sparse rate reduction \Cref{eq:objective-sparse-rate-reduction-l1} as a compression objective. Understanding to what degree an analogue of \Cref{thm:informal_rate_score} holds for the gradient of the sparse rate reduction would be quite interesting; in particular, it would inform an analogue of this structured denoising-diffusion framework which could be used to construct potentially more powerful white-box transformer-like architectures from the sparse rate reduction. In this work, however, we use arguably the simplest possible instantiation of our structured denoising-diffusion framework, in that we make the simplest possible connection between denoising and compression.
\end{remark}

\begin{figure}
    \centering
    \includegraphics[width=1.0\textwidth]{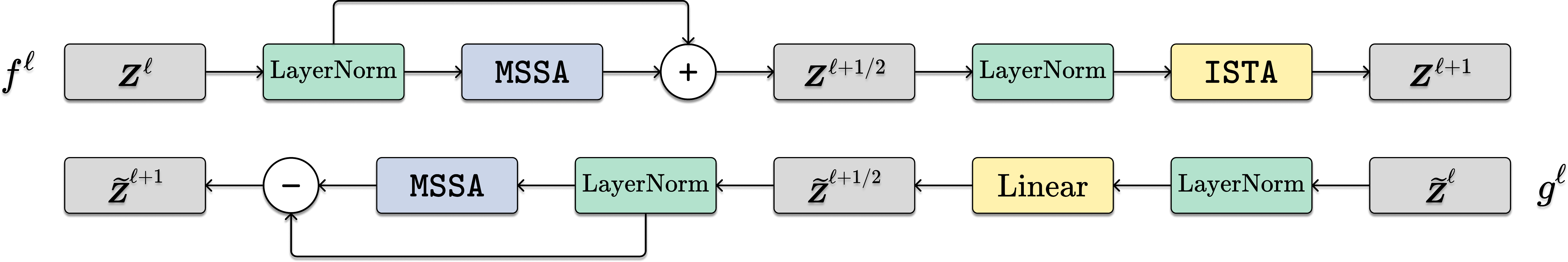}
    \caption{\small \textbf{Diagram of each encoder layer (\textit{top}) and decoder layer (\textit{bottom}) in \ourscaps{}.} Notice that the two layers are highly anti-parallel---each is constructed to do the operations of the other in reverse order. That is, in the decoder layer \(g^{\ell}\), the \(\ISTA{\cdot}\) block of \(f^{L - \ell}\) is partially inverted first using a linear layer, then the \(\MSSA{\cdot}\) block of \(f^{L - \ell}\) is reversed; this order unravels the transformation done in \(f^{L - \ell}\).%
    }
    \label{fig:crate_mae_layers}
\end{figure}

\section{Experimental Evaluations }\label{sec:exp}

In this section, we conduct experiments to study the empirical performance of our proposed white-box transformer \ours{} on real-world datasets and tasks. As the analysis in \Cref{sec:encoding} and \Cref{sec:autoencoding} suggests, the compression/denoising and sparsification steps can each be achieved through various alternative design choices or optimization strategies. The current \ours{} adopts arguably the most basic choices. So our goal with the experiments is \textit{not} to compete, using such a rudimentary design, with other empirically-designed transformer architectures which are heavily engineered on these tasks. Rather, the goal is to convince the reader that \textit{our transformer-like architecture \ours{} obtains promising performance on large-scale datasets, while being fully white-box---both mathematically and empirically interpretable---and theoretically principled.}
We aim to achieve this goal through the following three demonstrations: 
\begin{itemize}
    \item First, we verify convincingly that the white-box transformer architecture, \ours{} is practically effective and can achieve strong performance on many large-scale real-world datasets and tasks. These include supervised and self-supervised learning tasks on both vision and natural language data: ViT, MAE, DINO, BERT, and GPT. 
    \item Second, unlike most empirically designed black-box networks that can usually only be evaluated through their end-to-end performance, we show that the white-box design of our network allows us to \textit{look inside} the deep architecture and verify if individual layers and operators of the learned network indeed perform their design objective---say performing incremental optimization for the objective \Cref{eq:objective-sparse-rate-reduction-l1}. 
    \item Third, we study the white-box transformers through post hoc interpretability. Extensive qualitative and quantitative experimental results demonstrate that the internal representations of the white-box transformer are much more interpretable, with clear and easily-extractable semantic meaning, compared to black-box transformers.
\end{itemize}
In the remainder of this section we highlight a selection of results; additional experimental details and results can be found in \Cref{sec:appendix-exp}. PyTorch-like pseudocode is given in \Cref{app:code}.

\subsection{Empirical Verification of \textsc{CRATE} on Many Practical Tasks}
\subsubsection{Supervised Image Classification via ViT}\label{subsubsec:image-classification-task}
We first study the performance of \ours{} architecture on supervised image classification, where we apply \ours{} as the backbone architecture. 

\paragraph{Model architecture.} 
We implement the architecture that is described in \Cref{sub:architecture}, with minor modifications that are described in \Cref{app:subsec-experiment-supervise-classification-details}. Let \(C\) be the number of classes. We use the pre-processing map:
\begin{equation}
    f^{\pre}(\vX) = \mat{\vz_{\cls}^{1}, \vW^{\pre}\vX} + \vE_{\pos},
\end{equation}
where \(\vz_{\cls}^{1} \in \bR^{d}\) is a trainable \textit{class token} parameter, \(\vX = \mat{\x_{1}, \dots, \x_{N}} \in \bR^{D \times N}\) is the input image patches, \(\vW^{\pre} \in \bR^{d \times D}\) is a trainable matrix, and \(\vE_{\pos} \in \bR^{d \times n}\) is a trainable positional encoding matrix, where \(n = N + 1\). We also consider a classification head at the output of the \ours{} model:
\begin{equation}
    f^{\head}(\mat{\vz_{1}, \dots, \vz_{n}}) = \vW^{\head}\vz_{1},
\end{equation}
where \(\vZ = \mat{\vz_{1}, \dots, \vz_{n}}\) is the output of the encoder \ours{}, \(\vz_{1}\) in particular is the feature corresponding to the class token \(\vz_{\cls}^{1}\), and \(\vW^{\head} \in \bR^{C \times d}\) is a trainable classification head. The output of \(f^{\head}\) is a set of unnormalized log-probabilities for the different classes. 

We consider different model sizes of \ours{} by varying the token dimension $d$, number of heads $K$, and the number of layers $L$. 
We consider four model sizes in this work: \ours{-Tiny}, \ours{-Small}, \ours{-Base}, and \ours{-Large}. 
We provide details about model configurations of \ours{} in Table~\ref{tab:model_configs_classification}.
More implementation details can be found in \Cref{app:subsec-experiment-supervise-classification-details}; PyTorch-style pseudocode can be found in \Cref{app:code}. %

\paragraph{Pre-training methodology.}
Let \(H \colon \Delta_{C} \times \Delta_{C} \to \bR\), where \(\Delta_{C}\) is the set of probability distributions on \([C]\), be the cross-entropy function, defined as
\begin{equation}\label{eq:cross_entropy}
    H(\vp, \vq) = \mathbb{E}_{x \sim \vp}[-\log \vq(x)] = -\sum_{c = 1}^{C}p_{c}\log(q_{c}).
\end{equation}
The pre-training loss is 
\begin{equation}\label{eq:loss_cross_entropy}
    L_{\mathrm{CE}}(f, f^{\head}) \doteq \mathbb{E}_{\vX, \vy}\mathopen{}\left[H\big(\vy, \operatorname{softmax}\big\{(f^{\head} \circ f)(\vX)\big\}\big)\right].
\end{equation}
We pre-train on ImageNet-1K~\citep{deng2009imagenet}, using the Lion optimizer \citep{chen2023symbolic}. More details about the training and datasets can be found in \Cref{app:subsec-experiment-supervise-classification-details}.

\paragraph{Fine-tuning methodology.} For fine-tuning (possibly on a different dataset with a different number of classes), we re-initialize \(f^{\head}\) using parameters with the appropriate value of \(C\), and train on the cross-entropy loss in \Cref{eq:loss_cross_entropy}, updating both \(f\) and \(f^{\head}\). Again, we train using the Lion optimizer \citep{chen2023symbolic}, this time on a variety of commonly used datasets: CIFAR10/CIFAR100 \citep{krizhevsky2009learning}, Oxford Flowers \citep{nilsback2008automated}, and Oxford-IIIT-Pets \citep{parkhi2012cats}. More details about the training and datasets can be found in \Cref{app:subsec-experiment-supervise-classification-details}.

\paragraph{Classification accuracy.} We study the empirical performance of the proposed networks by measuring their top-1 accuracy on ImageNet-1K as well as transfer learning performance on several widely used downstream datasets. 
We summarize the results in \Cref{tab:Comparison_with_Sota}. Our transfer learning methodology is to fine-tune using \Cref{eq:loss_cross_entropy} starting from the pre-trained \(f\) and a re-initialized \(f^{\head}\).

As our designed architecture leverages parameter sharing in both the attention block (\texttt{MSSA}) and the nonlinearity block (\texttt{ISTA}), our \ours{-Base} model (22.80 million) 
has a similar number of parameters to the ViT-Small (22.05 million), and \textit{less than 30\% of the parameters of an identically configured ViT-Base} (86.54 million). From \Cref{tab:Comparison_with_Sota}, we find that with a similar number of model parameters, our proposed network achieves similar ImageNet-1K and transfer learning performance as ViT, while having a simple and principled design. 
Moreover, with the same set of training hyperparameters, we observe promising scaling behavior in \ours{}---we consistently improve the performance by scaling up the model size. 
For comparison, directly scaling ViT on ImageNet-1K does not always lead to consistent performance improvement measured by top-1 accuracy, even while such scaling behavior is touted as one of the main benefits of the transformer architecture~\citep{dosovitskiy2020image}. 
Furthermore, we also study the performance of \ours{} when pre-training on a larger dataset---ImageNet-21K~\citep{deng2009imagenet}. We then fine-tune the pre-trained \ours{} models on downstream tasks including the ImageNet-1K dataset. The results are summarized in Table~\ref{tab:Comparison_with_VIT_IN21k}. 
In particular, the \ours{-B} achieves 79.5\% top-1 accuracy on ImageNet-1K.
To summarize, we achieve promising performance on real-world large-scale datasets by directly implementing our principled architecture.

\begin{table*}[t!]
\centering
\caption{\small Top-1 classification accuracy of \ours{} on various datasets with different model scales 
when pre-trained on ImageNet-1K. For ImageNet-1K/ImageNet-1K ReaL, we directly evaluate the top-1 accuracy. For other datasets, we use models that are pre-trained on ImageNet as initialization and the evaluate the transfer learning performance via fine-tuning.}
\label{tab:Comparison_with_Sota}
\small
    \setlength{\tabcolsep}{13.6pt}
\resizebox{0.98\textwidth}{!}{%
\begin{tabular}{@{}lcccc|cc@{}}
\toprule
\textbf{Model} & \ours{-T}  &  \ours{-S} & \ours{-B} & \ours{-L} & { \color{gray} ViT-T} &  { \color{gray}ViT-S } \\ 
\midrule
\midrule
 \# parameters & 6.09M & 13.12M & 22.80M & 77.64M & { \color{gray} 5.72M} & { \color{gray} 22.05M} \\
\midrule
 ImageNet-1K & 66.7 & 69.2 & 70.8 & 71.3 & { \color{gray} 71.5} & { \color{gray} 72.4} \\
 ImageNet-1K ReaL & 74.0 & 76.0 & 76.5 & 77.4 & { \color{gray} 78.3 } & { \color{gray} 78.4} \\
 \midrule
 CIFAR10 & 95.5 & 96.0 & 96.8 & 97.2 & { \color{gray} 96.6} & { \color{gray} 97.2} \\
 CIFAR100 & 78.9 & 81.0 & 82.7 & 83.6 & { \color{gray} 81.8} & { \color{gray} 83.2}\\
 Oxford Flowers-102 & 84.6 & 87.1 & 88.7 & 88.3 & { \color{gray} 85.1} & { \color{gray} 88.5}\\
 Oxford-IIIT-Pets & 81.4 & 84.9 & 85.3 & 87.4 & { \color{gray} 88.5} & { \color{gray} 88.6} \\
 \bottomrule
\end{tabular}%
}
\end{table*}

\subsubsection{Image Completion via Masked Autoencoding}
\label{subsubsec:image-completion-task}

Now, we study how we can use the full autoencoder architecture based on \ours{} described in \Cref{sec:autoencoding}, including both the \ours{} encoder defined in \Cref{sub:architecture} and the \ours{} decoder defined in \Cref{sub:structured_diffusion_impl}, to perform \textit{masked autoencoding} \citep{he2022masked}---an unsupervised masked image completion task. 
As shorthand, we call the \ours{}-masked autoencoder \ours{-MAE}.

\paragraph{Model architecture.}
We implement the encoder and decoder architectures described in \Cref{sec:autoencoding}, with a few changes detailed in \Cref{app:subsec-experiment-MAE}. We use the pre-processing map:
\begin{equation}
    f^{\pre}(\vX) = \vW^{\pre}\vX + \vE_{\pos},
\end{equation}
where \(\vX = \mat{\x_{1}, \dots, \x_{N}} \in \bR^{D \times N}\), \(\W^{\pre} \in \bR^{d \times D}\) is a trainable matrix, and \(\vE_{\pos} \in \bR^{d \times n}\) is a trainable positional encoding matrix, where \(n = N\). Similarly we use the post-processing map:
\begin{equation}
    g^{\post}(\wt{\vZ}) = \vW^{\post}(\wt{\vZ} - \vE_{\pos}),
\end{equation}
where \(\vW^{\post} \in \bR^{D \times d}\) is another trainable matrix. 

We consider different model sizes of \ours{} by varying the token dimension \(d\), number of heads \(K\), and number of layers \(L\); such parameters will be kept the same for the encoder and decoder. 
This choice is is different from the proposed MAE architecture in \citet{he2022masked}, which uses an asymmetric encoder-decoder setup where the decoder is significantly smaller and more lightweight. 
This asymmetric approach is conceptually messier, whereas our symmetric approach is totally in line with our white-box derivation. 
We consider three model sizes: \ours{-MAE}-Small, \ours{-MAE}-Base, and \ours{-MAE}-Large. \Cref{tab:model_configs} displays the model configurations and number of parameters, with a comparison to MAE, showing that \textit{\ours{} uses around 30\% of the parameters of MAE with the same model configuration (e.g., layers \(L\), token dimension \(d\), and number of heads \(K\))}.

\begin{figure}
    \centering
    \includegraphics[width=0.95\textwidth]{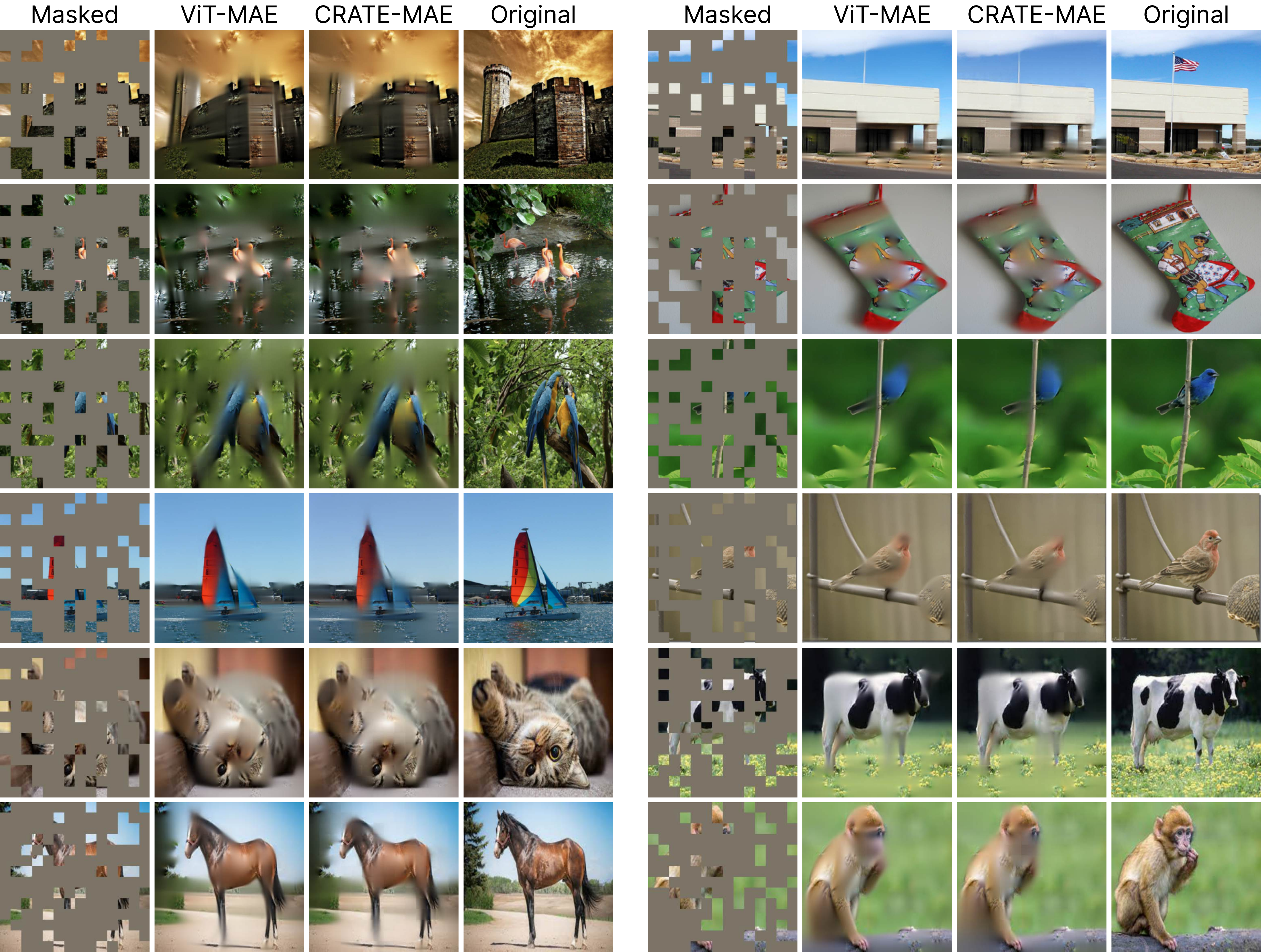}
    \caption{\small Autoencoding visualization comparison of \ours{}-Base to MAE-Base \citep{he2022masked} with 75\% patches masked. We observe that the reconstructions from \ours{}-Base are on par with the reconstructions from MAE-Base, despite using fewer than \(1/3\) of the parameters.%
    }
    \label{fig:mae_autoencoding}
\end{figure}

\paragraph{Pre-training methodology.}
\textit{Masked autoencoding}~\citep{he2022masked} ``masks out'' a large percentage of randomly selected input image tokens in the input \(\vX = \mat{\vx_{1}, \dots, \vx_{N}} \in \bR^{D \times N}\) and then attempts to reconstruct the whole image, measuring success by the resulting autoencoding reconstruction loss and performance on downstream tasks. 
The regular masked autoencoding (MAE) loss \citep{he2022masked} is defined as  
\begin{equation}\label{eq:mae_loss_unreg}
    L_{\mathrm{MAE}}(f, g) \doteq \mathbb{E}_{\vX, \Omega}\big[\norm{(g \circ f)(\operatorname{\texttt{Mask}}_{\Omega}(\vX)) - \vX}_{2}^{2}\big],
\end{equation}
where \(\texttt{Mask}_{\Omega}(\cdot)\) replaces all tokens in \(\vX\) whose indices are in \(\Omega\) with the same masked token. Namely, for each data point, we sample a subset \(\Omega \subseteq [N]\) of a fixed size, and set \(\vX^{\mathrm{mask}} = \mat{\vx_{1}^{\mathrm{mask}}, \dots, \vx_{N}^{\mathrm{mask}}}\), where \(\vx_{i}^{\mathrm{mask}} = \vx_{\mathrm{mask}}\) if \(i \in \Omega\) and \(\vx_{i}^{\mathrm{mask}} = \vx_{i}\) if \(i \notin \Omega\), where \(\vx_{\mathrm{mask}} \in \bR^{D}\) is a trainable mask token.

For pre-training, we employ the AdamW optimizer~\citep{loshchilov2017decoupled}. We mainly use ImageNet-1K \citep{deng2009imagenet} as the main pre-training dataset. More details about training and datasets can be found in \Cref{app:subsec-experiment-MAE}. 

\paragraph{Fine-tuning methodology.} Following \citet{he2022masked}, we fine-tune using supervised classification to examine the linear separability of the features. To do this, we use either full-model fine-tuning or linear probing. The classification head is a ``global pooling'' head:
\begin{equation}
    f^{\head}(\vZ) = \vW^{\head}\left(\frac{1}{n}\sum_{i = 1}^{n}\vz_{i}\right) = \frac{1}{n}\vW^{\head}\vZ\,\bm{1}_{n}, \qquad \text{where}\ \vZ = \mat{\vz_{1}, \dots, \vz_{n}},
\end{equation}
where \(\bm{1}_{n}\) is the all-ones vector in \(\bR^{n}\); note that this is different to the classification setup in \Cref{subsubsec:image-classification-task}, owing to not incorporating a class token. We train using the cross-entropy loss, where \(H\) is defined in \Cref{eq:cross_entropy}:
\begin{equation}
    L_{\mathrm{CE}}(f, f^{\head}) \doteq \mathbb{E}_{\vX, \vy}\mathopen{}\left[H(\vy, \operatorname{softmax}\{(f^{\head} \circ f)(\vX)\})\right].
\end{equation}
For full fine-tuning, we update both \(f\) and \(f^{\head}\). For linear probing, we update only \(f^{\head}\). We fine-tune \ours{-MAE} on several commonly used downstream datasets: CIFAR10/CIFAR100 \citep{krizhevsky2009learning}, Oxford Flowers \citep{nilsback2008automated}, and Oxford-IIIT-Pets \citep{parkhi2012cats}. For fine-tuning \(f^{\head}\), we use the AdamW optimizer on minibatches~\citep{loshchilov2017decoupled}. For linear probing, we use a logistic regresion solver to compute \(f^{\head}\) in one shot using the full dataset. More details about training can be found in \Cref{app:subsec-experiment-MAE}. 

\paragraph{Autoencoding performance.} 
We analyze the autoencoding reconstruction performance of \ours{-MAE} and compare with the original MAE in \citet{he2022masked}. 
In \Cref{fig:mae_autoencoding}, we qualitatively compare the masked autoencoding performance of \ours{-MAE}-Base to MAE-Base \citep{he2022masked}. We observe that both models are able to reconstruct the data well, despite \ours{} using less than a third of the parameters of MAE. In  \Cref{tab:reconstruction_loss}, we display the average reconstruction loss of \ours{-MAE}-Base and MAE-Base, showing a similar quantitative conclusion. %

\begin{table*}[t!]
    \centering
    \caption{\small Average reconstruction loss over the training and validation sets of ImageNet-1K as a function of architecture model size. We see that the performance of \ours{}-Base, while a bit worse than MAE-Base, obtains promising performance on the challenging masked autoencoding task. 
    }
    \label{tab:reconstruction_loss}
    \footnotesize
    \setlength{\tabcolsep}{13.6pt}
        \begin{tabular}{@{}lcc@{}}
            \toprule

            \textbf{Reconstruction Loss} & \ours{}-Base & MAE-Base
            \\ 
            \midrule
            \midrule
            Training & 0.266 &
            0.240
            \\
            \midrule
            Validation & 0.303 & 0.267 
            \\
            \bottomrule
        \end{tabular}%
\end{table*}

\paragraph{Fine-tuning performance.} 

In \Cref{tab:autoencode_finetuning_logreg}, we compare the fine-tuning and linear probing performance of \ours{-MAE} models pretrained on ImageNet-1K across different model sizes and datasets. We demonstrate that \ours{-MAE} has comparable finetuning performance to the usual MAE models while also saving many parameters. This demonstrates that our representations are highly geometrically and statistically structured (e.g., having linear structure), and carry semantic content that is useful for downstream tasks. 

\begin{table*}[t!]
    \centering
    \caption{\small Top-1 classification accuracy of \ours{-MAE}-Small, \ours{-MAE}-Base, (ViT-)MAE-Small, and (ViT-)MAE-Base when pre-trained on ImageNet-1K and fine-tuned on classification (either on the whole network, or using logistic regression a.k.a.~full-batch linear probing) for several smaller datasets. 
    Our results show that the representation learning in \ours{} is useful for downstream tasks on a level comparable to that of the regular MAE, on top of having several side benefits such as parameter efficiency and emergent properties (as discussed in the sequel).
    }
    \label{tab:autoencode_finetuning_logreg}
    \footnotesize
    \setlength{\tabcolsep}{13.6pt}
    \begin{tabular}{@{}lcc|cc@{}}
        \toprule 
        \textbf{Model} & \ours{-MAE}-S & \ours{-MAE}-B & {\color{gray} ViT-MAE-S} & {\color{gray} ViT-MAE-B} \\
        \midrule
        \midrule
        \# parameters & 25.4M & 44.6M & 47.6M & {\color{gray}143.8M} \\
        \midrule
        \underline{\textit{Fine-Tuning}} & & & \\
        CIFAR10 & 96.2 & 96.8 & {\color{gray} 97.6} & {\color{gray} 98.5} \\
        CIFAR100 & 79.0 & 80.3 & {\color{gray} 83.0} & {\color{gray} 87.0} \\
        Oxford Flowers-102 & 71.7 & 78.5 & {\color{gray} 84.2} & {\color{gray} 92.5} \\
        Oxford-IIIT-Pets & 73.7 & 76.7 & {\color{gray} 81.7} & {\color{gray} 90.3} \\
        \midrule 
        \underline{\textit{Linear Probing}} & & & \\
        CIFAR10 & 79.4 & 80.9 & {\color{gray} 79.9} & {\color{gray} 87.9} \\
        CIFAR100 & 56.6 & 60.1 & {\color{gray} 62.3} & {\color{gray} 68.0} \\
        Oxford Flowers-102 & 57.7 & 61.8 & {\color{gray} 66.8} & {\color{gray} 66.4} \\
        Oxford-IIIT-Pets & 40.6 & 46.2 & {\color{gray} 51.8} & {\color{gray} 80.1} \\
        \bottomrule
    \end{tabular}
\end{table*}

\subsubsection{Self-Supervised Learning via DINO Training Method}\label{sec:experiment-DINO}

\noindent
Self-supervised learning (SSL) has gained increasing popularity and become a dominant approach for pre-training deep networks. In this part, we study whether the success of SSL in transformers can be applied to the proposed \ours{} architecture. Specifically, we train \ours{} backbones with DINO~\citep{caron2021emerging}, a widely used SSL framework for ViTs that demonstrate great feature learning capabilities.

\paragraph{Model architecture.} 
To start with, we briefly introduce the self-supervised learning method DINO (self-\textbf{di}stillation with \textbf{no} labels)~\citep{caron2021emerging}. 
DINO applies knowledge distillation~\citep{hinton2015distilling}, with a ``teacher'' encoder network $f_{t}^{\mathrm{proj}} \circ f_{t}$ and ``student'' encoder network  $f_{s}^{\mathrm{proj}} \circ f_{s}$, to learn from two different ``views'' of an image. Here, for each ``backbone'' model \(f\) (either \(f_{s}\) or \(f_{t}\)), we can write (as in other experimental settings) \(f = f^{L} \circ \cdots \circ f^{1} \circ f^{\pre}\), where the pre-processing map $f^{\pre}: \bR^{D \times N} \to \bR^{d \times n}$ is defined as
\begin{equation}
    f^{\pre}(\vX) = \mat{\vz_{\cls}^{1}, \vW^{\pre}\vX} + \vE_{\pos},
\end{equation}
where $n = N + 1$ and $\vz_{\cls}^{1} \in \bR^{d}$,  $\vW^{\pre} \in \bR^{d \times D}$ and $\vE_{\pos} \in \bR^{d \times n}$ are trainable parameters, and each layer \(f^{\ell} \colon \bR^{d \times n} \to \bR^{d \times n}\) is a transformer layer, such as the \ours{} layer or ViT layer. Each projection map \(f^{\mathrm{proj}} \colon \bR^{d \times n} \to \bR^{d}\) (either \(f_{s}^{\mathrm{proj}}\) or \(f_{t}^{\mathrm{proj}}\)) is defined, as in \citet{caron2021emerging}, as 
\begin{equation}\label{eq:dino_head}
    f^{\mathrm{proj}}(\mat{\vz_{1}, \dots, \vz_{n}}) = \vW_{3}^{\mathrm{proj}}\operatorname{ReLU}(\vW_{2}^{\mathrm{proj}}\operatorname{ReLU}(\vW_{1}^{\mathrm{proj}}\vz_{1} + \vb_{1}^{\mathrm{proj}}) + \vb_{2}^{\mathrm{proj}}) + \vb_{3}^{\mathrm{proj}},
\end{equation}
where $\vW_{i}^{\mathrm{proj}} \in \bR^{d \times d}$ are trainable weights and $\vb_{i}^{\mathrm{proj}} \in \bR^{d}$ are trainable biases. 

In our experiments, for the ``backbone'' layers \(f^{\ell} \circ \cdots \circ f^{1}\) for the teacher and student models, we adopt the same model configuration as in the supervised learning experiments introduced in \Cref{subsubsec:image-classification-task}. Specially, due to the heavy computational demand for DINO, we train with two smaller model sizes for this task: \ours{-Small} and \ours{-Base} (note \ours{-Base} has a similar number of parameters as as ViT-Small). Refer to \citet{caron2021emerging} for more implementation details about DINO, including additional small task-specific alterations.

\paragraph{Pre-training methodology.} 

We parameterize \((f_{s}, f_{s}^{\mathrm{proj}})\) by a parameter set \(\vtheta_{s}\), and similarly parameterize \((f_{t}, f_{t}^{\mathrm{proj}})\) by a parameter set \(\vtheta_{t}\). Note that by construction, \(\vtheta_{s}\) and \(\vtheta_{t}\) parameterize the same architecture and thus are elements of the same Euclidean space. The training objective is to minimize
\begin{equation}\label{eq:dino-objective}
    L(\vtheta_{s}, \vtheta_{t}) \doteq \mathbb{E}_{\vX}\big[ H(P(\underbrace{\texttt{Aug}_{\texttt{g}}(\vX)}_{\text{view 1}}; \vtheta_{t}, \tau_{t}), P(\underbrace{\texttt{Aug}_{\texttt{l}}(\vX)}_{\text{view 2}}; \vtheta_{s}, \tau_{s}))\big],
\end{equation}
where $\texttt{Aug}_{\texttt{g}}(\cdot)$ denotes a \textit{global} view augmentation and  $\texttt{Aug}_{\texttt{l}}(\cdot)$ denotes a \textit{local} view augmentation\footnote{We follow the multi-crop strategy~\citep{caron2020unsupervised}. The views are distorted crops of the original image. A global view is a high-resolution (e.g. $224\times 224$) crop that covers a large (e.g. $>50\%$) area of the input image while a local view is of lower resolution (e.g. $96\times 96$) and covers a smaller (e.g. $<50\%$) area of the original image.} %
, and the probability $P(\cdot; \cdot, \cdot)$ is defined as
\begin{equation}\label{eq:dino-def-P}
    P(\vX; \vtheta, \tau) = \softmax{\tau^{-1}(f_{\vtheta}^{\head} \circ f_{\vtheta})(\vX)},
\end{equation}
where \(f_{\vtheta_{s}} = f_{s}\), and so on, and \(\tau > 0\) is the softmax temperature which controls how sharply the softmax concentrates about its largest values. By passing two different views (local and global) of the same image to the student network and teacher network respectively, the DINO objective is designed to encourage ``local-to-global'' correspondences---the features of the local view $\texttt{Aug}_{\texttt{l}}(\vX)$ for the student network are encouraged to be similar to the features of the global view $\texttt{Aug}_{\texttt{g}}(\vX)$ for the teacher network. 

The training is done in the following (slightly unintuitive) way using exponential moving averages:
\begin{align}
    \vtheta_{s}^{k + 1}
    &= \vtheta_{s}^{k} - \eta \nabla_{\vtheta_{s}}L(\vtheta_{s}^{k}, \vtheta_{t}^{k}), \\
    \vtheta_{t}^{k + 1}
    &= \lambda \vtheta_{t}^{k} + (1 - \lambda) \vtheta_{s}^{k + 1},
\end{align}
where \(\eta > 0\) is a learning rate, \(\lambda \in (0, 1)\) is an exponential moving average parameter, and \(k\) is the training iteration. \citet{caron2021emerging} applies the stop-gradient operator on the teacher network, so gradients are propagated only through the student network and not the teacher. In reality, \(L\) is replaced with an empirical version over mini-batches, and the iteration for \(\vtheta_{s}\) can use a more complex algorithm than stochastic gradient descent. In particular, we adopt a similar training recipe as reported in \citet{caron2021emerging}; we train on ImageNet-1K~\citep{deng2009imagenet} and use the AdamW optimizer \citep{loshchilov2017decoupled}, with a learning rate schedule that is composed of a linear warm-up phase followed by a cosine decay schedule. Due to the differences in the backbone architecture, we only alter the choice of base learning rates and weight decay parameters in our experiments. 
More details can be found in \Cref{app:subsec-experiment-dino}.

\paragraph{Fine-tuning methodology.}
Similar to the evaluation protocol of~\cite{caron2021emerging}, we evaluate the classification performance using the features of the pre-trained networks. Precisely, we consider the extracted class token feature \(\vz_{t}\) from the pre-trained teacher network \(f_{t}\) in the following way. Specifically, we obtain
\begin{equation}
    \vz_{t} \gets \vz_{1} \qquad \text{where}\ \mat{\vz_{1}, \dots, \vz_{n}} = f_{t}(\vX).
\end{equation}
From here, we fine-tune on supervised classification in two different ways.

First, we use the feature vectors \(\vz_{t}\) to form a weighted \(k\)-nearest neighbor classifier \citep{wu2018unsupervised}. That is, with \(C\) classes we build a classification head \(f_{k}^{\head} \colon \bR^{d} \to [C]\) which performs weighted \(k\)-nearest neighbors with respect to the training dataset. We do this for several choices of \(k\), and in our shown results we select the classifier \(f_{k}^{\head} \circ f_{t}\) with the highest accuracy on the training set. 

Second, we use the feature vectors \(\vz_{t}\) to build a linear probing model. That is, with \(C\) classes we build a classification head \(f^{\mathrm{head}} \colon \bR^{d} \to \bR^{C}\) defined as 
\begin{equation}
    f^{\head}(\vz) = \vW^{\head}\vz,
\end{equation}
for a trainable parameter \(\vW^{\head} \in \bR^{C \times d}\), by minimizing the cross-entropy loss \Cref{eq:loss_cross_entropy}
\begin{equation}
    L_{\mathrm{CE}}(f, f^{\head}) \doteq \mathbb{E}_{\vX, \vy}\mathopen{}\left[H(\vy, \operatorname{softmax}\{(f^{\head} \circ  f)(\vX)\})\right],
\end{equation}
over \(f^{\head}\) (leaving $f$ fixed). We do this by using a logistic regression solver to compute \(f^{\head}\) in one shot using the full fine-tuning dataset. 

The fine-tuning dataset in our experiments is the same as the training dataset, i.e., ImageNet-1K \citep{deng2009imagenet}.

\paragraph{Fine-tuning results.} 
We report the Top-1 test accuracy of both our fine-tuning methodologies on ImageNet-1K in~\Cref{tab:dino_linear_probe}. These results show that, with only minimal changes of hyper-parameter settings, the proposed \ours{} architecture can learn meaningful and informative features via self-supervised learning.

Later in Section \ref{sec:emergence}, we will show how the features learned by the self-supervised  DINO can already produce good image segmentation results, see Figure \ref{fig:maskcut_visualization}. 

\begin{table*}[t!]
    \centering
    \caption{\small Top-1 classification accuracy on ImageNet-1K of \ours{}-Small, \ours{}-Base, and ViT-Small when pre-trained using the DINO objective. }
    \label{tab:dino_linear_probe}
    \footnotesize
    \setlength{\tabcolsep}{13.6pt}
        \begin{tabular}{@{}lcccc@{}}
            \toprule
            \textbf{Model} & \#parameters & $k$-NN & Linear Probing
            \\ 
            \midrule
            \ours{}-Small/16 & 13.12M & 53.02\% &
            58.41\% 
            \\
            \midrule
            \ours{}-Base/8 & 22.80M & 59.91\% &
            67.42\%
            \\
            \midrule
            ViT-Small/16 & 22.05M & 69.36\% & 72.02\%  
            \\
            \midrule
            ViT-Small/8 & 22.05M & 71.98\% & 75.63\% 
            \\
            \bottomrule
        \end{tabular}%
\end{table*}

\subsubsection{Pre-Training Language Models via BERT and GPT}

We now study the performance of \ours{} architectures on text data. 
Specifically, we consider two widely used pre-training tasks for learning language representations---Bidirectional Encoder Representations from
Transformers (BERT) pre-training~\citep{devlin2018bert, liu2019roberta} and Generative Pre-Training (GPT) pre-training~\citep{radford2019language}.

\paragraph{\ourscaps{-BERT} model architecture.} 

For BERT pre-training, we apply the RoBERTa tokenizer~\citep{sennrich2015neural,liu2019roberta} with vocabulary size $V=50,265$ as the default tokenizer. We implement the \ours{} architecture that is used in image classification \Cref{subsubsec:image-classification-task}, except for the pre-processing layer and the task-specific head. Specifically, the pre-processing layer is described as:
\begin{equation}\label{eq:free-pre-text}
    f^{\pre}(\vX) = f^{\mathrm{embed}}(\vX) + \vE_{\pos},
\end{equation}
where $f^{\mathrm{embed}}$ is an embedding function which maps input tokens \(\vx_{i}\) to embedding vectors in \(\bR^{d}\), 
and $\vE_{\pos}\in\mathbb{R}^{d\times n}$ is the sinusoid positional embedding defined in \citet{vaswani2017attention}.
The final output layer takes the form of a linear head:
\begin{equation}\label{eq:f-head-text}
    f^{\head}(\vZ) = \vW^{\head}\vZ,
\end{equation}
where \(\vW^{\head} \in \bR^{V \times d}\).

We denote the \ours{} model with BERT pre-training as \ours{-BERT}.
We summarize the configurations of \ours{-BERT} models in \Cref{tab:configs-crate-text} (upper). For \ours{-BERT-Base}, we use the same width, number of attention heads, and depth as BERT-Base~\citep{liu2019roberta}. \ours{-BERT-Base} uses 60.9M parameters, while BERT-Base uses 110M parameters, nearly twice as many.

\paragraph{\ourscaps{-BERT} pre-training methodology.}
The training loss for \ours{}-BERT is masked language modeling (masked LM)\footnote{The original BERT pre-training tasks also contain next sentence prediction, but we discard this task due to observations shown by~\citet{liu2019roberta}.}, which is defined as
\begin{equation}\label{eq:training-loss-bert}
  L_{\text{BERT}}(f, f^{\head}) = \mathbb{E}_{\vX, \Omega}\left[\frac{1}{\abs{\Omega}}\sum_{i\in \Omega}  H(\bm{1}(\vx_{i}), \operatorname{softmax}\{(f^{\head} \circ f)(\vX)_{i}\})\right],
\end{equation}
where \((f^{\head} \circ f)(\vX)_{i} \in \bR^{V}\) denotes the \(i\th\) index of the model output---that is, the log-probabilities of each potential token in the vocabulary to take up the \(i\th\) index in the sequence \(\vX\). Similarly $\mathbf{1}(\vx_i) \in \{0, 1\}^{V}$ represents the one-hot embedding vector of the token \(\vx_{i}\) in the vocabulary. 

To pre-train, we use the Adam optimizer \citep{kingma2014adam}. We use the same pre-training dataset as BERT ~\citep{devlin2018bert}, including BooksCorpus~\citep{zhu2015aligning} and English Wikipedia.  Refer to \citet{devlin2018bert} for more details about the pre-training tasks.

\paragraph{\ourscaps{-GPT} model architecture.} 

For \ours{}-GPT pretraining, following the setup in the GPT-2 paper \citep{radford2019language}, we modify our \ours{} architecture to align with the next word prediction task used in GPT. 
Specifically, we replace the attention in \texttt{MSSA} with a causally masked self-attention, i.e., replacing 
\begin{align}
    \softmax{(\vU_{k}\adj\vZ)\adj(\vU_{k}\adj\vZ)} 
    &\to \softmax{\texttt{CausalMask}((\vU_{k}\adj\vZ)\adj(\vU_{k}\adj\vZ))}, \\
    \text{where}\ \texttt{CausalMask}(\vA)_{ij} 
    &= \begin{cases}A_{ij}, & i \leq j \\ -\infty, & i > j.\end{cases}
\end{align}
in \Cref{eq:SSA}. We use the same pre-processing and head architecture as \ours{-BERT}:
\begin{equation}
    f^{\pre}(\vX) = f^{\mathrm{embed}}(\vX) + \vE_{\pos}, \qquad f^{\head}(\vZ) = \vW^{\head}\vZ.
\end{equation}
We denote the \ours{} model for GPT pre-training as \ours{-GPT}. We summarize the configurations of GPT models in \Cref{tab:configs-crate-text} (lower), and set the model architecture parameters such that \ours{-GPT} has the same layer specifications as GPT2 with parameter size 60M (c.f.~GPT2's 124M, more than twice as many).

\paragraph{\ourscaps{-GPT} pre-training methodology.}
The training objective of GPT is next word prediction (causal language modeling, causal LM), which is defined as
\begin{equation}\label{eq:training-loss-gpt}
  L_{\text{GPT}}(f, f^{\text{head}}) = \mathbb{E}_{\vX}\bigg[\frac{1}{\abs{\vX} -1} \sum_{i = 1}^{\abs{\vX} - 1} H(\bm{1}(\vx_{i + 1}), \operatorname{softmax}\{(f^{\head} \circ f)(\vX)_{i}\}) \bigg],
\end{equation}
where $\abs{\vX}$ denotes the total number of tokens in the input sample $\vX$.
For GPT pretraining, we mainly follow the implementations in \citet{nanogpt}; in particular we set the block size as 1,024. 
We pre-train \ours{-GPT} models on OpenWebText~\citep{Gokaslan2019OpenWeb} using the Adam optimizer \citep{kingma2014adam}. Refer to \citet{radford2019language} for more details about the pre-training tasks.

As usual, we defer the pre-training setup details of \ours{}-BERT and \ours{}-GPT to \Cref{app:subsec-experiment-text}.

\paragraph{\ourscaps{-BERT} evaluation results.} 
Pre-training results of \ours{-BERT} is summarized in Table~\ref{tab:bert-eval}. We use the officially released BERT checkpoints for BERT evaluation~\citep{huggingface_bert}. We evaluate the \ours{-BERT} model by fine-tuning on downstream tasks from the GLUE benchmark~\citep{wang2018glue}. 
\textit{Single-task Finetuning} of Table~\ref{tab:bert-eval} means that we fine-tune the model only on the task we evaluate. 
In addition, following previous work~\citep{liu2019roberta}, models pre-trained only on the masked language modeling task should be further fine-tuned on MNLI before evaluating on MRPC, STS-B and RTE to be competitive to BERT. That is, first fine-tuning on MNLI improves the performance on these tasks.
Therefore, we also investigate this training recipe and include the corresponding results in the \textit{Further Finetuning after MNLI} section of \Cref{tab:bert-eval}. 
The \textit{AVG} score is the average of all GLUE tasks shown in the table.\footnote{We exclude the problematic CoLA task, as it has been reported multiple times that the performance depends heavily on the seed chosen for BERT \citep{huggingface_cola_issue}.}
We also present the pre-training loss and GLUE evaluation scores of \ours{-BERT} with respect to progressing pre-training steps in \Cref{fig:crate-text-evals}~(left and middle). 
The progressing GLUE average scores are calculated under the \textit{single-task finetuning} setting.\footnote{We use 3 epochs (instead of 8) for the MNLI task for all evaluation except the 24,000 and 30,000 steps because it suffices to get a good score.}
From Table~\ref{tab:bert-eval}, we find that \ours{-BERT-Base} can achieve competitive performance compared to BERT models realized by standard transformers.

\begin{table*}[t!]
\small
\setlength{\tabcolsep}{3pt} 
\def\arraystretch{1.1}
    \centering
    \caption{\small Performance on the GLUE benchmark (higher is better). All scores are obtained using the HuggingFace evaluation script~\citep{huggingface_transformers}. 
    F1 scores are reported for QQP and MRPC, Spearman correlations are reported for STS-B, and accuracy scores are reported for the other tasks.}
    \begin{tabular}{l@{\hspace{6pt}}@{\hspace{6pt}}c@{\hspace{6pt}}c@{\hspace{6pt}}c@{\hspace{6pt}}c@{\hspace{6pt}}c@{\hspace{6pt}}c@{\hspace{6pt}}c@{\hspace{6pt}}c@{\hspace{6pt}}c@{\hspace{6pt}}c}
    \hline
    &  \textbf{SST-2} & \textbf{MRPC} & \textbf{STS-B} & \textbf{QQP} & \textbf{MNLI} & \textbf{QNLI} & \textbf{RTE} & \textbf{WNLI} & \textbf{Avg} \\[-0.5em]
    & \scriptsize \textsc{Acc} & \scriptsize \textsc{F1} & \scriptsize \textsc{Sprm} & \scriptsize \textsc{F1} & \scriptsize \textsc{M/MM} & \scriptsize \textsc{Acc} & \scriptsize \textsc{Acc} & \scriptsize \textsc{Acc} & \scriptsize \textsc{All} \\
     \hline
     \multicolumn{10}{c}{\textit{Single-task Finetuning}} \\
     \hline
     \scriptsize BERT-Medium (41M) & 90.5 & 87.8 & 87.6 & 86.2 & 80.8/81.2 & 88.6 & 64.6 & 30.1 & 77.5 \\
     \scriptsize BERT-Base (110M) & 92.3 & 90.9 & 88.5 & 88.2 & 84.4/84.6 & 91.7 & 64.3 & 53.5 & 82.0 \\
     \scriptsize \ours{-BERT-Base} (61M) & 85.9 & 80.0 & 73.0 & 85.8 & 76.1/75.3 & 83.4 & 53.8 & 56.3 & 74.4 \\
     \hline
     \multicolumn{10}{c}{\textit{Further Finetuning after MNLI}} \\
     \hline
     \scriptsize BERT-Medium (41M) & \color{gray} 90.5 & 89.6 & 88.0 & \color{gray} 86.2 & \color{gray} 80.8/81.2 & \color{gray} 88.6 & 72.6 & \color{gray} 30.1 & 78.6 \\
     \scriptsize BERT-Base (110M) & \color{gray} 92.3 & 90.3 & 86.8 & \color{gray} 88.2 & \color{gray} 84.4/84.6 & \color{gray} 91.7 & 79.4 & \color{gray} 53.5 & 83.5 \\
     \scriptsize \ours{-BERT-Base} (61M)  & \color{gray} 85.9 & 84.7 & 83.2 & \color{gray} 85.8 & \color{gray} 76.1/75.3 & \color{gray} 83.4 & 65.0 & \color{gray} 56.3 & 77.3 \\
     \hline
    \end{tabular}
    \label{tab:bert-eval}
\end{table*}

\begin{figure}[t!]
     \centering
     \begin{subfigure}[b]{0.32\textwidth}
         \centering
    \includegraphics[width=\textwidth]{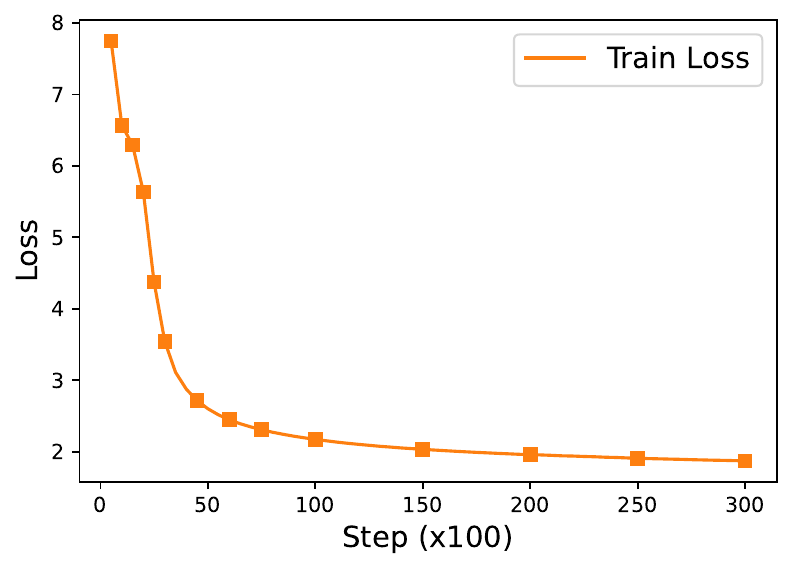}
     \end{subfigure}
     \begin{subfigure}[b]{0.32\textwidth}
         \centering
    \includegraphics[width=\textwidth]{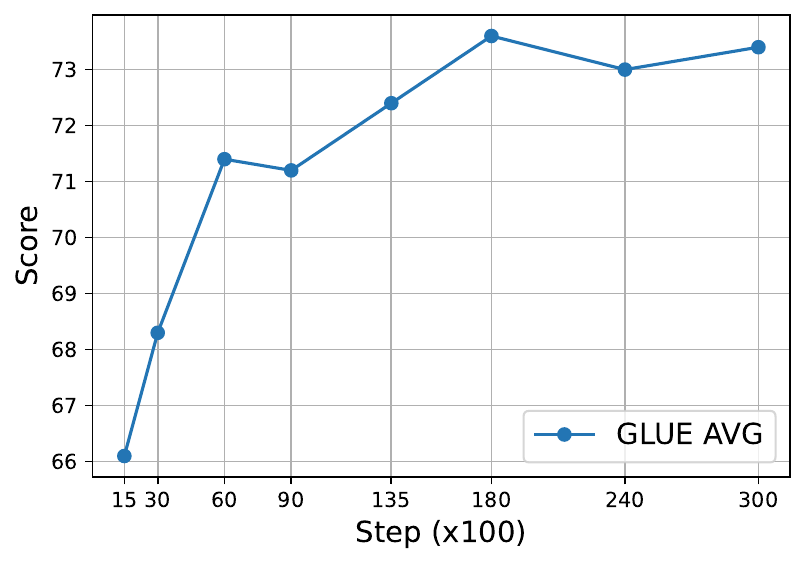}
     \end{subfigure}
     \begin{subfigure}[b]{0.32\textwidth}
         \centering
    \includegraphics[width=\textwidth]{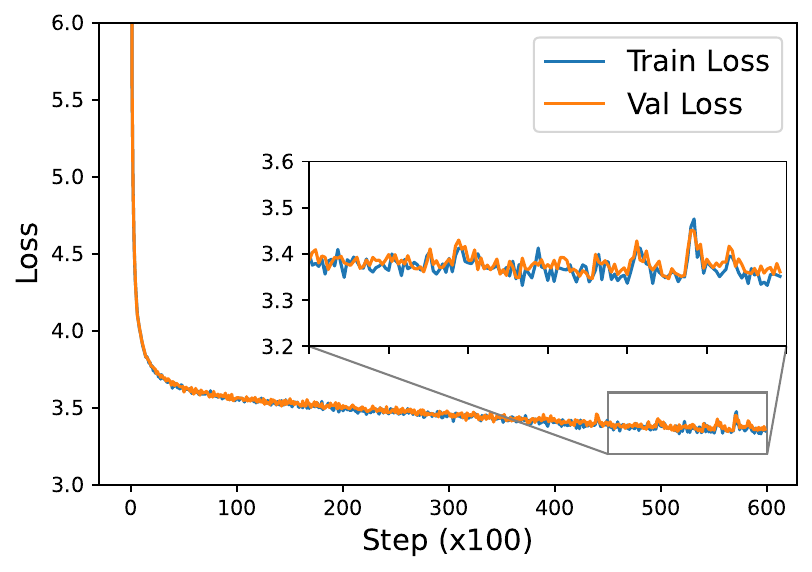}
     \end{subfigure}
        \caption{\small (\textit{left}) The training loss curve of \ours{}-BERT-Base trained on the Wikipedia and BookCorpus datasets. 
        (\textit{middle}) GLUE single-task finetuning scores evaluated after each pretraining checkpoint of \ours{}-BERT-Base. 
        (\textit{right}) The loss curve of \ours{}-GPT-Base trained on the OpenWebText dataset.}
        \label{fig:crate-text-evals}
\end{figure}

\paragraph{\ourscaps{-GPT} evaluation results.} 
We evaluate \ours{-GPT} models by measuring the zero-shot cross-entropy loss on different datasets. 
We evaluate \ours{-GPT-Base} on OpenWebText (OWT) as well as other datasets as shown in \citep{radford2019language}. 
We compare with the official release of GPT2-Base which is pre-trained on WebText \citep{huggingface_gpt}, while \ours{-GPT-Base} is pre-trained on OWT. 
{\color{revision} Meanwhile, in order to compare \ours{-GPT} and the GPT2 model under similar number of model parameters, we consider the GPT2 with a smaller model size, denoted by GPT2-Small, and apply the same training setup described in \citet{nanogpt}.}
Therefore, we first fine-tune GTP2-Base on OWT before evaluating on the OWT test dataset.
The evaluation results on the test datasets are shown in Table~\ref{tab:gpt-eval}. 
We also present the pre-training loss of \ours{-GPT} with respect to progressing
pre-training steps in Figure~\ref{fig:crate-text-evals}~(right).
From both Table~\ref{tab:gpt-eval} and Figure~\ref{fig:crate-text-evals}~(right), we can observe that the \ours{} architecture can be applied as backbones for generative language models and achieve competitive performance compared to GPT2. 

For more technical details about the evaluation of the pre-trained \ours{}-BERT and \ours{}-GPT, please refer to \Cref{app:subsec-experiment-text}.

\begin{table}[t!]
\def\arraystretch{1.1}
    \small
    \caption{\small Zero-shot cross-entropy loss of the \ours{}-GPT2-Base model and GPT2-Small, GPT2-Base model evaluated on the test split of the datasets (lower is better). 
    }
    \centering
    \begin{tabular}{ccccccc}
    \hline
    & \#parameters & \textbf{OWT} & \textbf{LAMBADA} & \textbf{WikiText} & \textbf{PTB} & \textbf{Avg} \\
     \hline
     GPT2-Base  & {\color{revision}124M} & 2.85 & 4.12 & 3.89 & 4.63 & 3.87 \\
     {\color{revision}GPT2-Small } &  {\color{revision}64M} & {\color{revision}3.04} & {\color{revision}4.49} & {\color{revision}4.31} & {\color{revision}5.15} & {\color{revision}4.25} \\
     \ours{}-GPT2-Base & {\color{revision}60M} & 3.37 & 4.91 & 4.61 & 5.53 & 4.61 \\
     \hline
    \end{tabular}
    \label{tab:gpt-eval}
\end{table}

\paragraph{Generated texts by \ourscaps{-BERT} and \ourscaps{-GPT}.} 
We also explored some qualitative examples generated by \ours{}-BERT and \ours{}-GPT, as shown in Figure \ref{fig:eval-gpt-qual}. We observe a similar behavior of \ours{} and the standard Transformers, where both answers make sense and the standard transformer slightly outperforms \ours{}.

\begin{figure}
    \centering
    \includegraphics[width=0.95\linewidth]{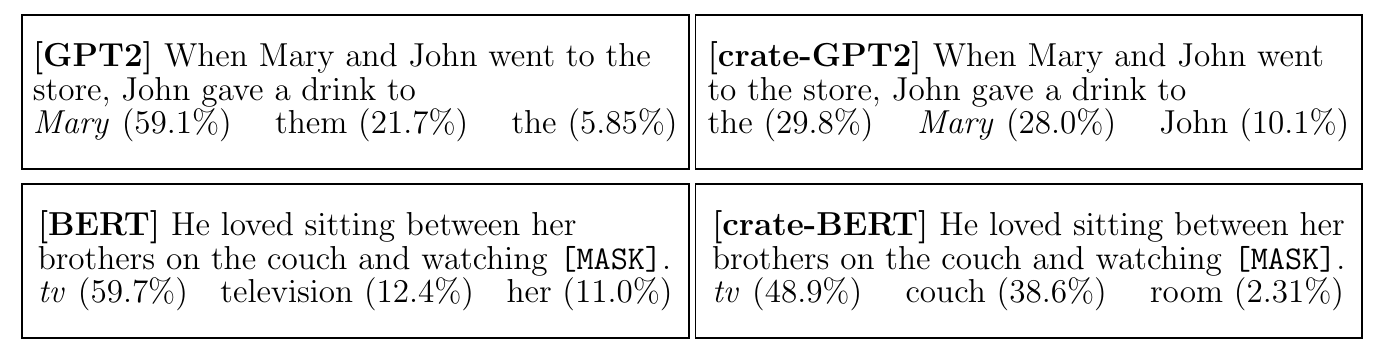}
    \caption{\small Examples of predictions made by our models and the official models. The token \textit{in Italic} is the ground truth token. We compare \ours{}-GPT2-Base to GPT2-Base on the next word prediction task and \ours{}-BERT-Base to BERT-Medium on the masked language modeling task.
    }
    \label{fig:eval-gpt-qual}
\end{figure}

\subsection{Analysis and Visualization of Learned CRATE Layers}\label{subsec:exp-in-depth-analysis}
We now study the internal representations of \ours{} at different layers---\(\{\vZ^{\ell}\}_{\ell=1}^{L}\). 
We measure the designed representation learning objective, sparse rate reduction \Cref{eq:objective-sparse-rate-reduction}, of token representations from different layers. 
We mainly consider the \ours{} encoder architectures on image classification tasks (\Cref{subsubsec:image-classification-task}) for the experiments in this subsection.

\paragraph{Do layers of \ourscaps{} achieve their design goals?} 
As described in \Cref{sub:compression} and \Cref{sub:sparse}, the \texttt{MSSA} block is designed to optimize the compression term  $R^{c}(\vZ \mid \vU_{[K]})$ and the \texttt{ISTA} block to sparsify the token representations (corresponding to the sparsification term $\|\vZ\|_0$). 
To understand whether \ours{} indeed optimizes these terms, for each layer $\ell$, we measure (i) the compression term $R^{c}(\vZ^{\ell+1/2} \mid \vU_{[K]}^{\ell})$ on the \texttt{MSSA} block outputs $\vZ^{\ell+1/2}$; and (ii) sparsity $\|\vZ^{\ell+1}\|_0$ on the \texttt{ISTA} block outputs $\vZ^{\ell+1}$. 
Specifically, we evaluate these two terms by using training/validation samples from ImageNet-1K. Both terms are evaluated at the per-sample level and averaged over \(B = 1000\) samples.

In \Cref{fig:exp-rc-sparisty-small} we show  the plots of these two key measures at all layers for the learned \ours{-small} model. We find that as the layer index $\ell$ increases, both the compression and the sparsification terms improve in most cases. The increase in the sparsity measure of the last layer is caused by the extra linear layer for classification.\footnote{ Note that the learned sparse (tokens) features need to be mixed in the last layer for predicting the class. The phenomenon of increase in the sparsity measure  at the last layer suggests that each class of objects may be associated with a number of features, and some of these features are likely to be shared across different classes.} 
These results suggest that \ours{} aligns well with the original design goals: once learned, it essentially learns to gradually compress and sparsity the representations through its layers. In addition, we also measure the compression and sparsification terms on \ours{} models with different model sizes as well as intermediate model checkpoints and the results are shown by plots in \Cref{fig:appendix-exp-rc-sparisty-all-model-size} of \Cref{subsec:appendix-exp-results}. The observations are very consistent across all different model sizes---both the compression and sparsification terms improve in most scenarios. Models with more layers tend to optimize the objectives more effectively, confirming our understanding of each layer's roles. 

To see the effect of learning, we present the evaluations on \ours{-Small} trained with different numbers of epochs in \Cref{fig:exp-rc-sparisty-small-epochs}. 
When the model is not trained enough (e.g.~untrained), the architecture does not optimize the objectives effectively. However, during training---learning better subspaces $\vU_{[K]}^{\ell}$ and dictionaries $\vD^{\ell}$---the designed blocks  start to optimize the objectives much more effectively.

\begin{figure}[t!]
     \centering
     \begin{subfigure}[b]{0.47\textwidth}
         \centering
    \includegraphics[width=\textwidth]{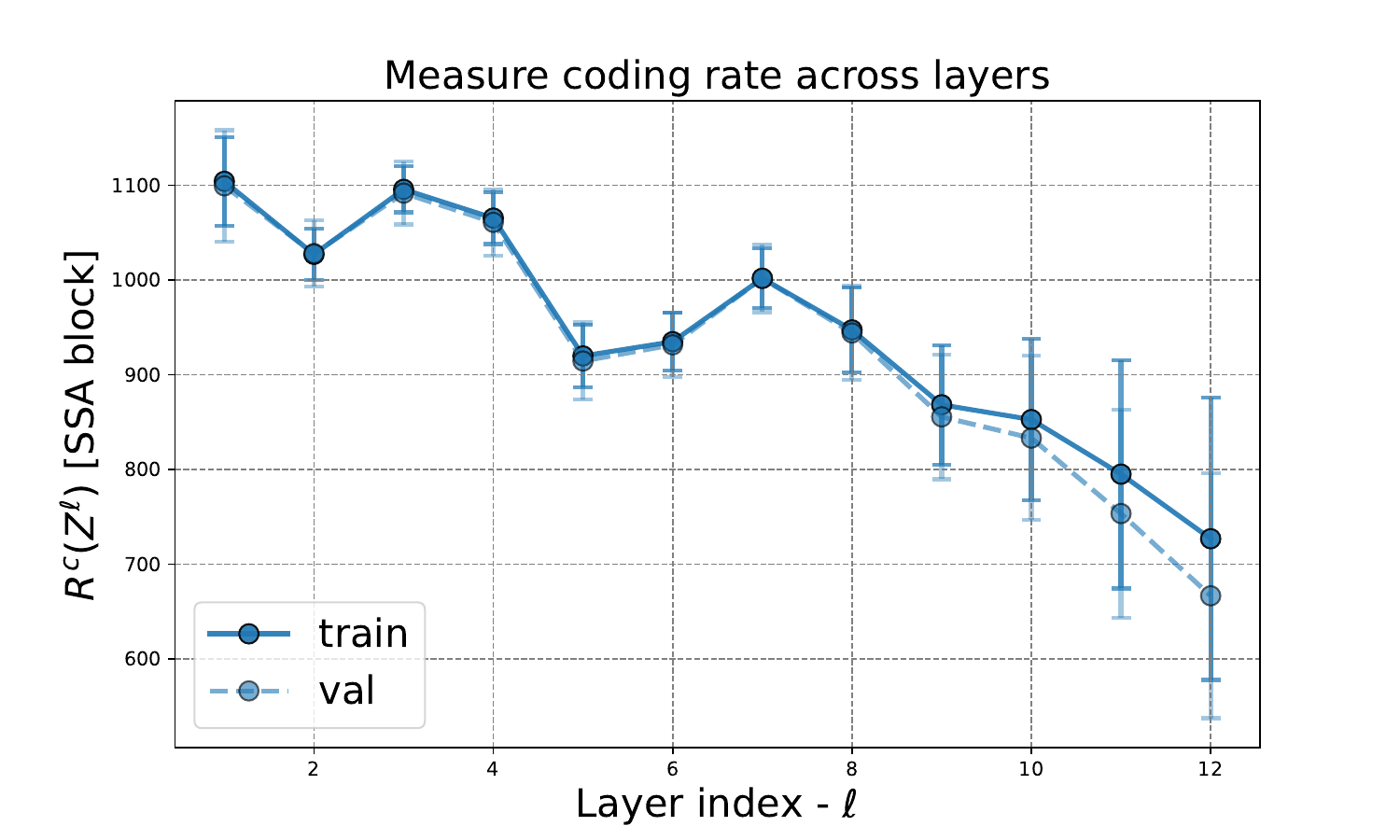}
     \end{subfigure}
     \begin{subfigure}[b]{0.482\textwidth}
         \centering
    \includegraphics[width=\textwidth]{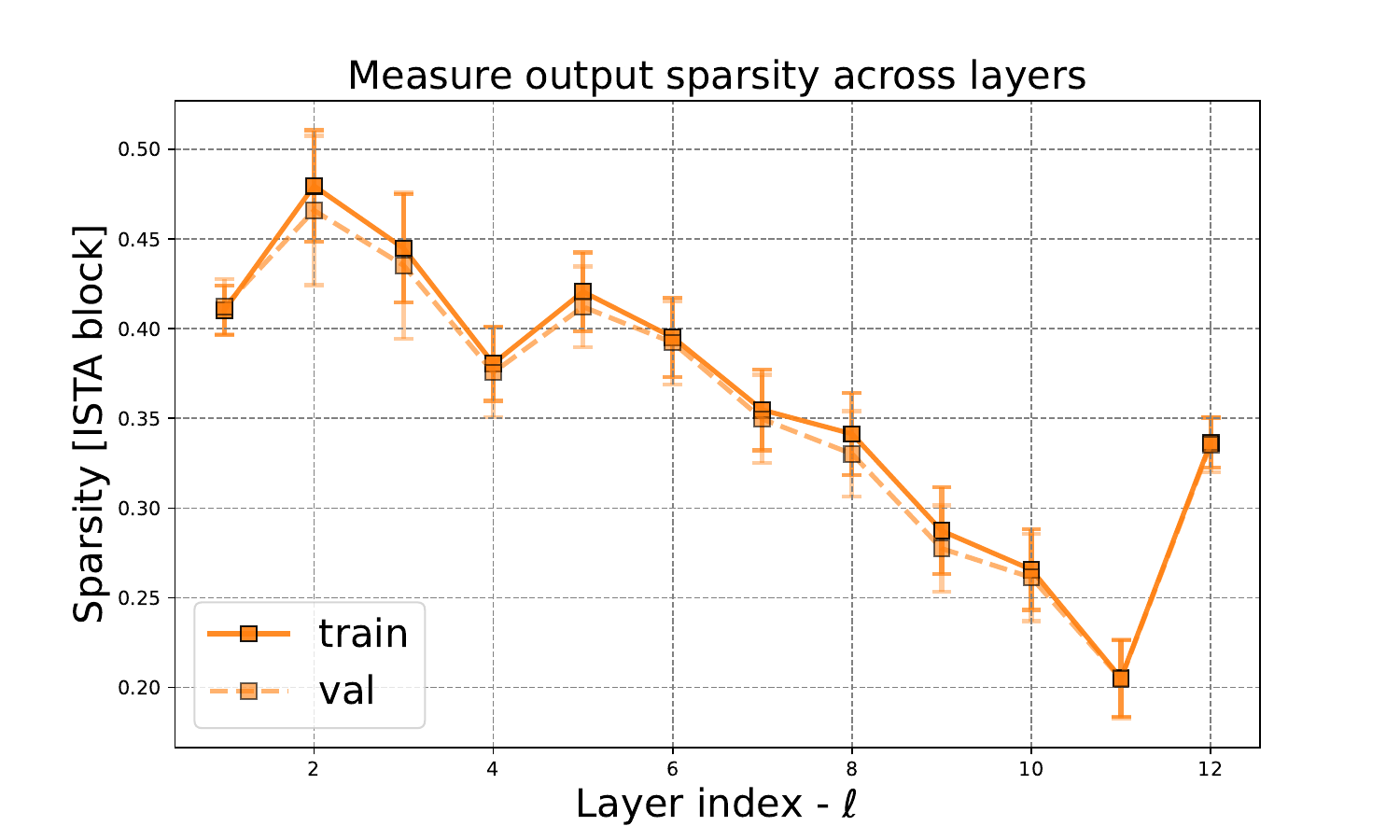}
     \end{subfigure}
        \caption{\small \textit{Left}: The compression term $R^{c}(\vZ^{\ell+1/2} \mid \vU_{[K]}^{\ell})$ of the \texttt{MSSA} outputs at different layers. \textit{Right}: the sparsity of the \texttt{ISTA} output block, $\|\vZ^{\ell+1}\|_0 / (d\cdot N)$, at different layers. 
        (Model: \ours{-Small}).}
        \label{fig:exp-rc-sparisty-small}
\end{figure}

\begin{figure}[t!]
     \centering
     \begin{subfigure}[b]{0.49\textwidth}
         \centering
    \includegraphics[width=\textwidth]{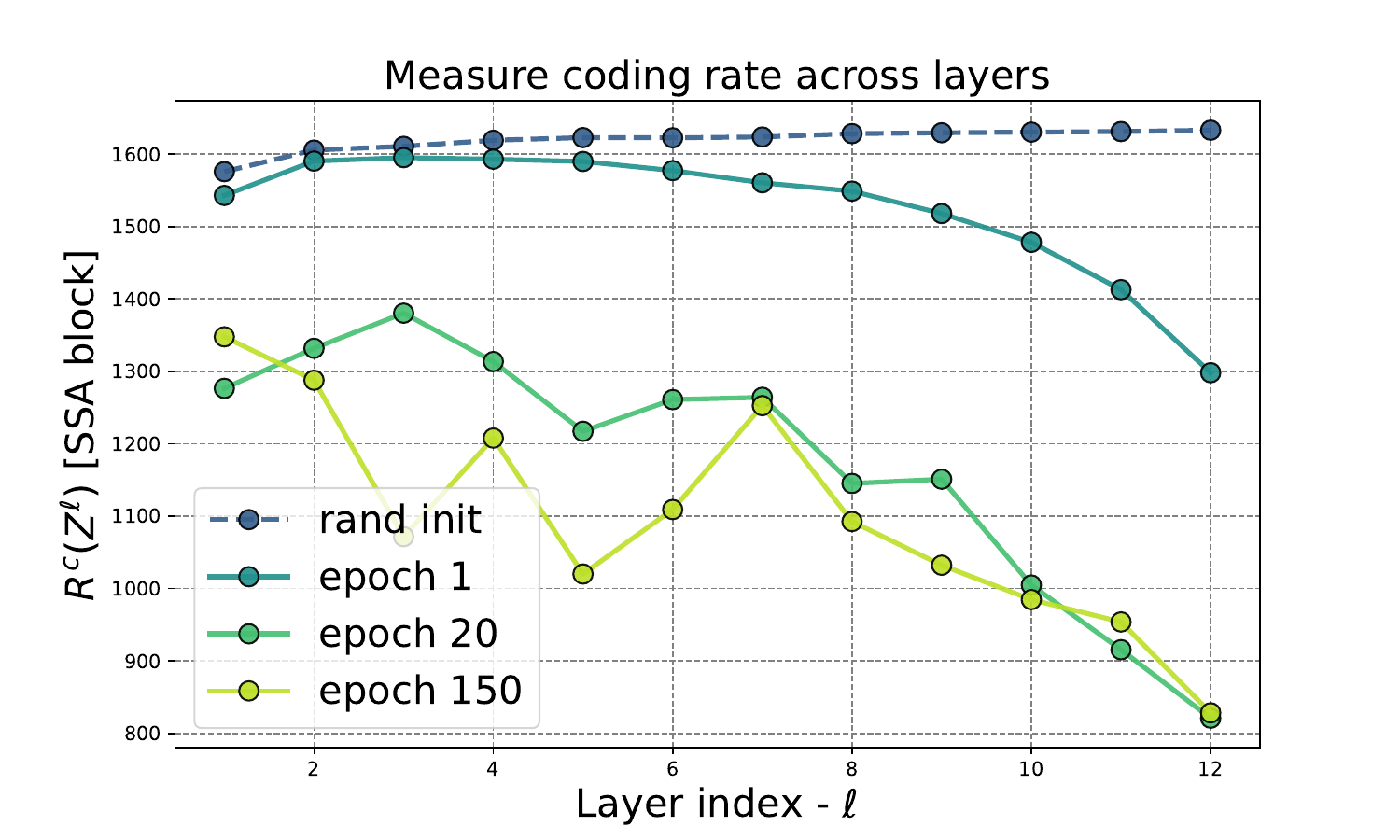}
     \end{subfigure}
     \begin{subfigure}[b]{0.49\textwidth}
         \centering
    \includegraphics[width=\textwidth]{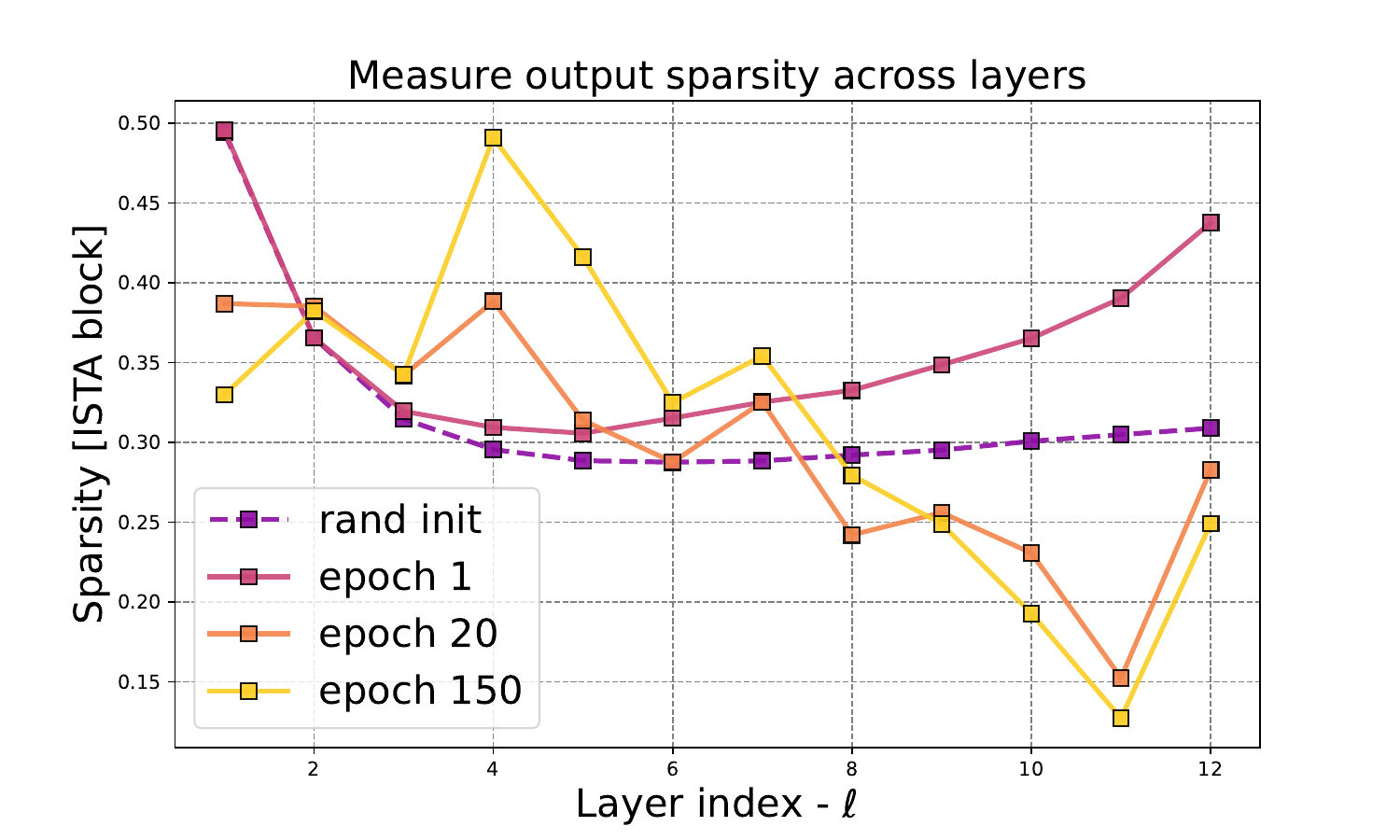}
     \end{subfigure}
        \caption{\small The compression term $R^c(\vZ^{\ell + 1/2} \mid \vU_{[K]}^{\ell})$ (\textit{left}) and sparsification term $\|\vZ^{\ell + 1}\|_0 / (d \cdot N)$ (\textit{right}) across models trained with different numbers of epochs. (Model: \ours{-Base}).}
        \label{fig:exp-rc-sparisty-small-epochs}
\end{figure}

\begin{figure}[ht]
     \centering
     \begin{subfigure}[b]{0.24\textwidth}
         \centering
    \includegraphics[width=\textwidth]{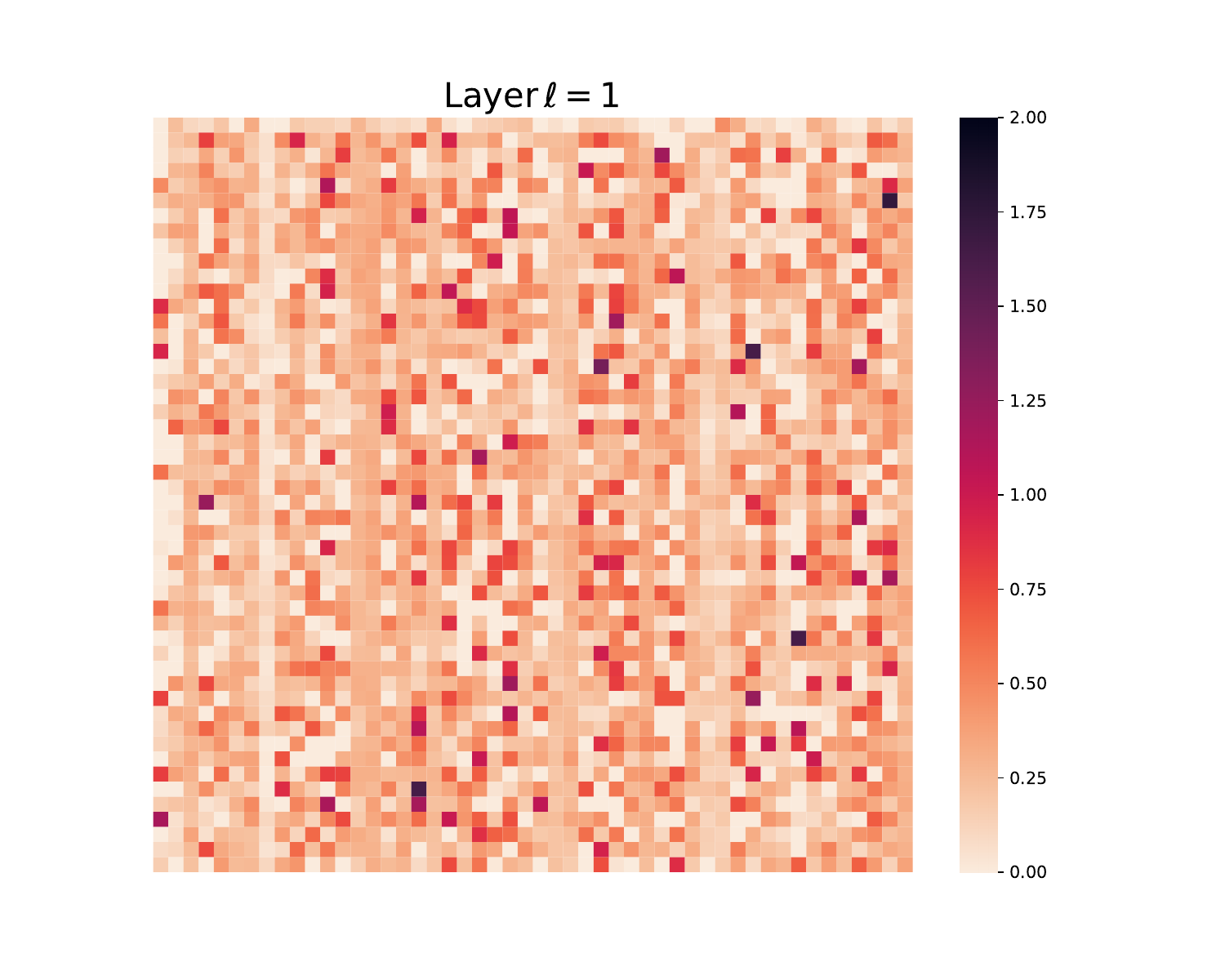}
         \caption{$\ell=1$.}
     \end{subfigure}
     \begin{subfigure}[b]{0.24\textwidth}
         \centering
    \includegraphics[width=\textwidth]{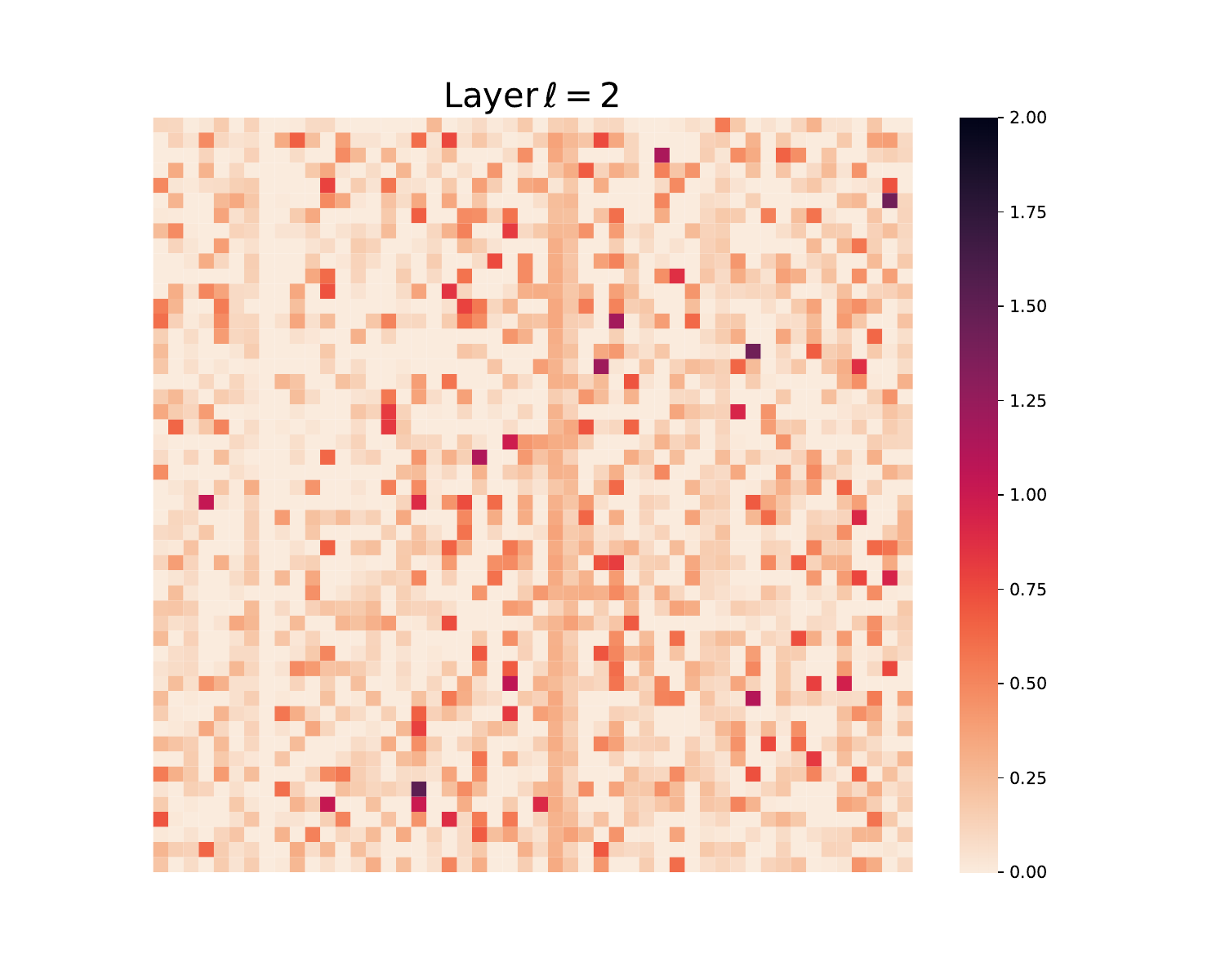}
         \caption{$\ell=2$.}
     \end{subfigure}
     \begin{subfigure}[b]{0.24\textwidth}
         \centering
    \includegraphics[width=\textwidth]{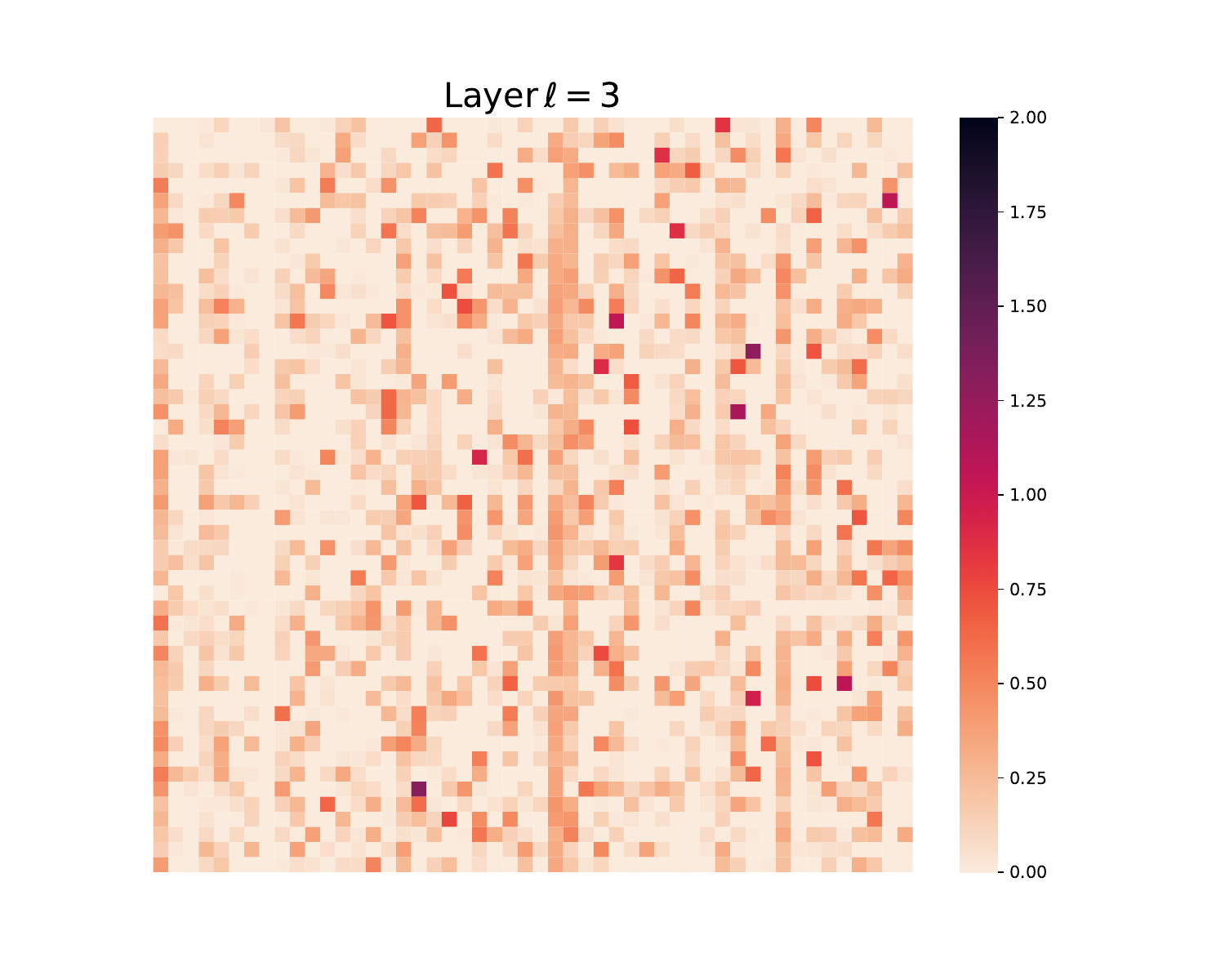}
         \caption{$\ell=3$.}
     \end{subfigure}
     \begin{subfigure}[b]{0.24\textwidth}
         \centering
    \includegraphics[width=\textwidth]{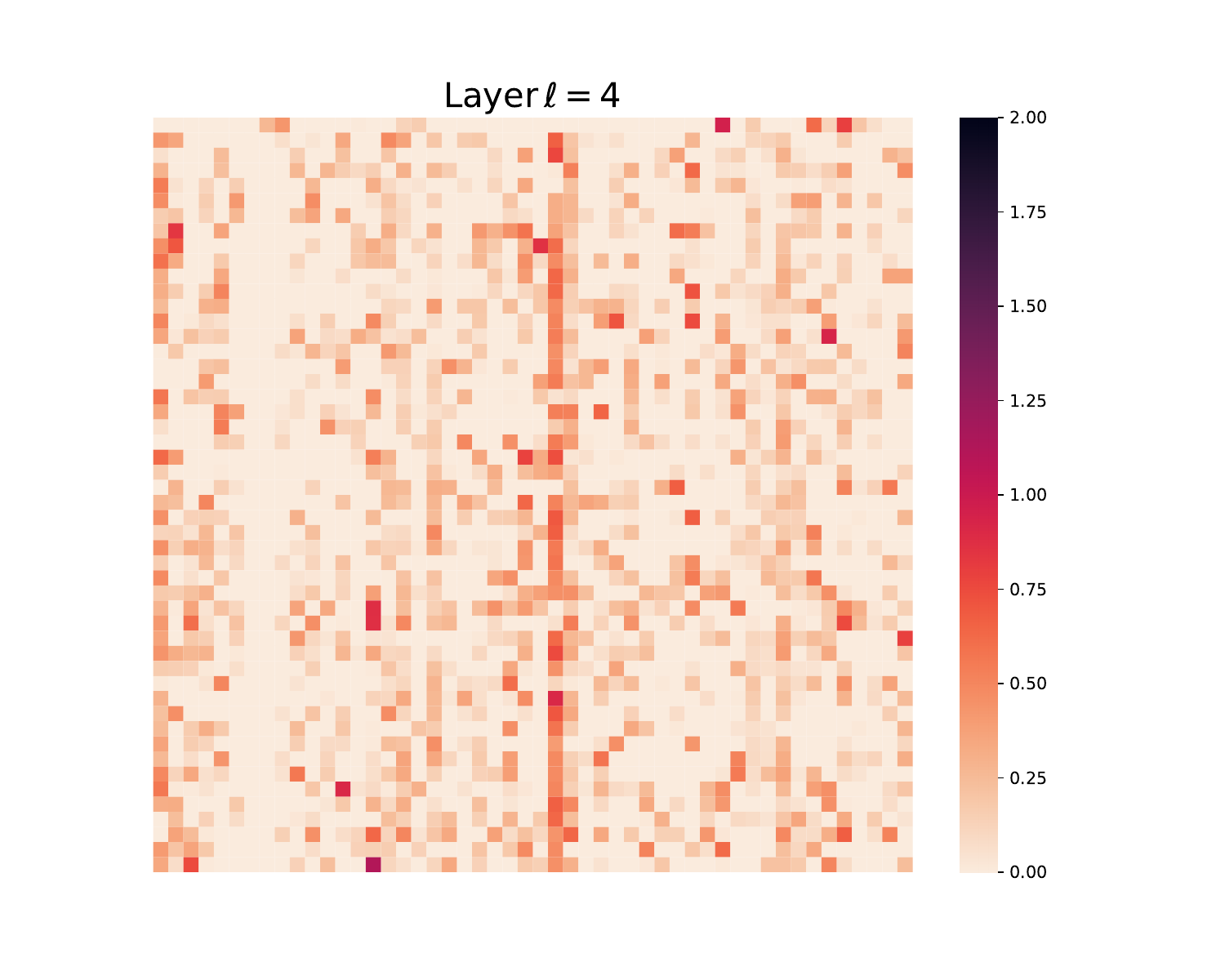}
         \caption{$\ell=4$.}
     \end{subfigure}
     \begin{subfigure}[b]{0.24\textwidth}
         \centering
    \includegraphics[width=\textwidth]{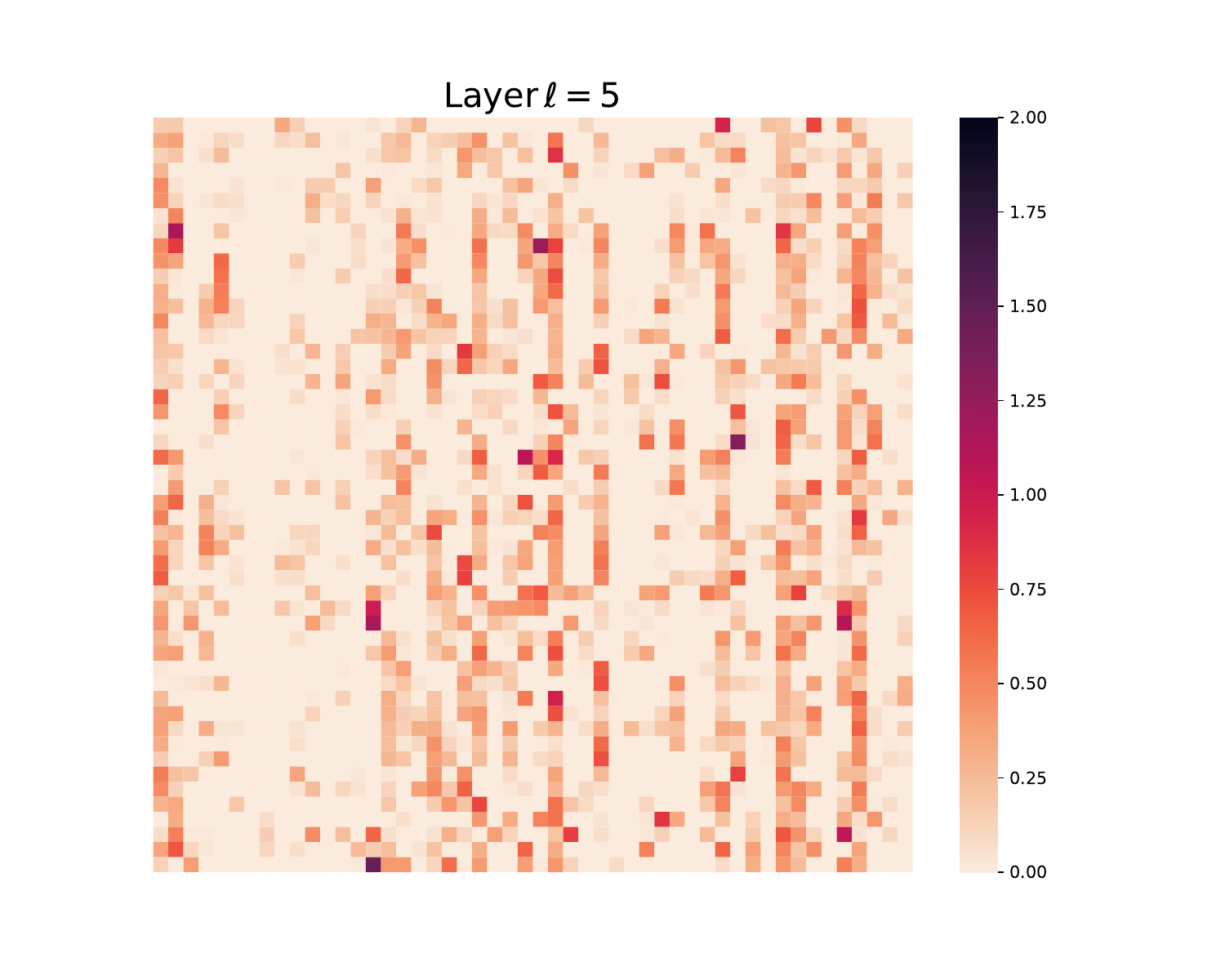}
         \caption{$\ell=5$.}
     \end{subfigure}
     \begin{subfigure}[b]{0.24\textwidth}
         \centering
    \includegraphics[width=\textwidth]{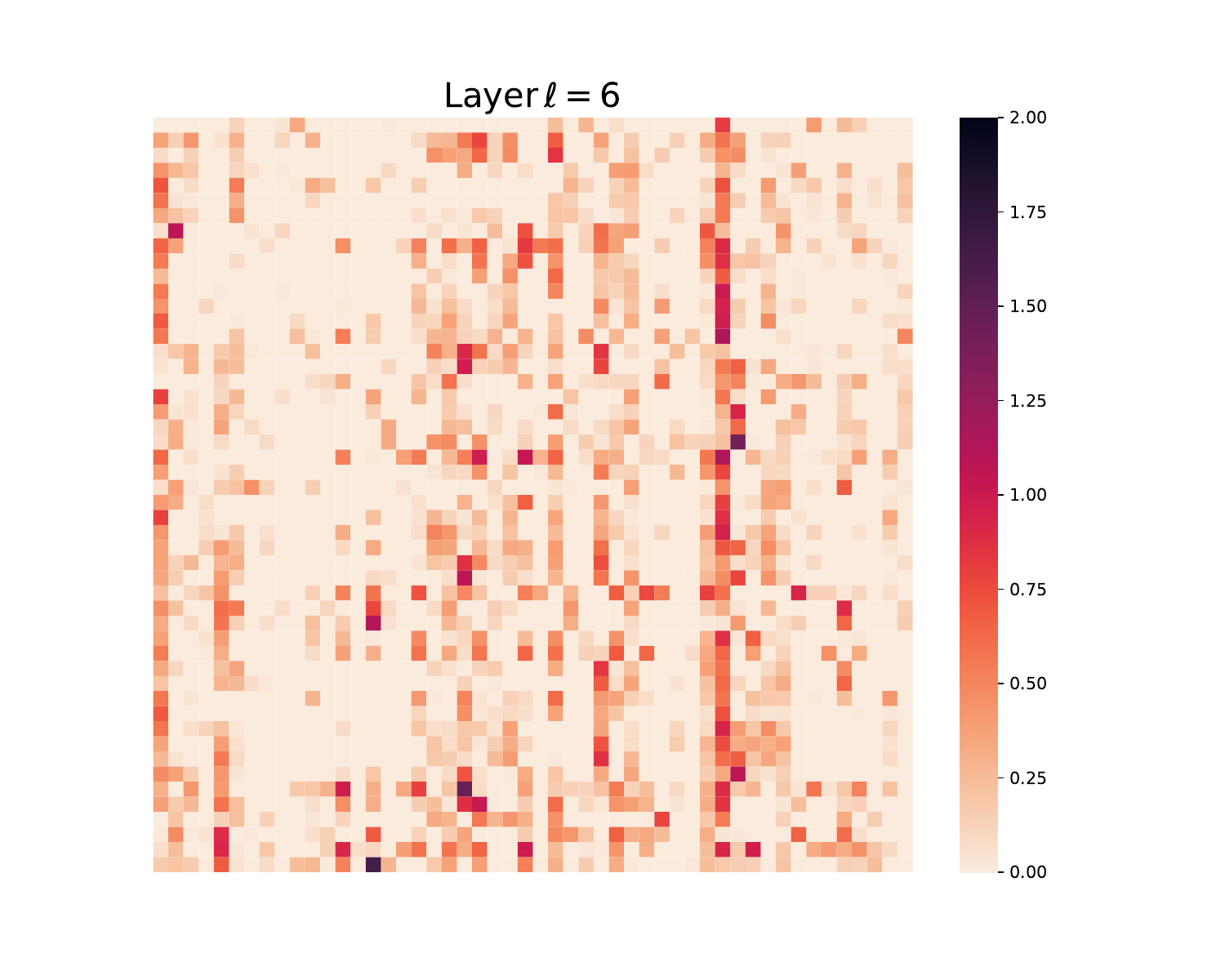}
         \caption{$\ell=6$.}
     \end{subfigure}
     \begin{subfigure}[b]{0.24\textwidth}
         \centering
    \includegraphics[width=\textwidth]{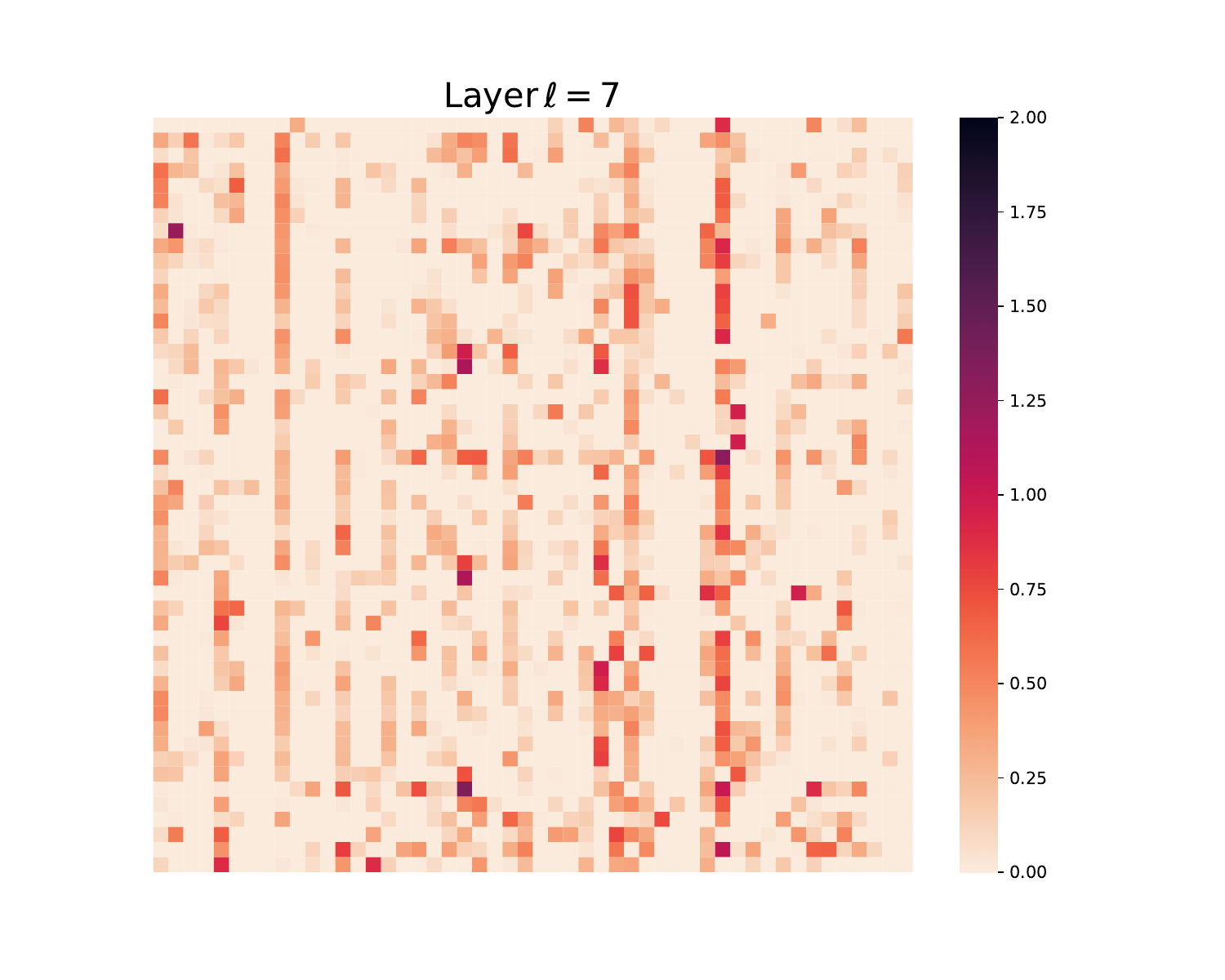}
         \caption{$\ell=7$.}
     \end{subfigure}
     \begin{subfigure}[b]{0.24\textwidth}
         \centering
    \includegraphics[width=\textwidth]{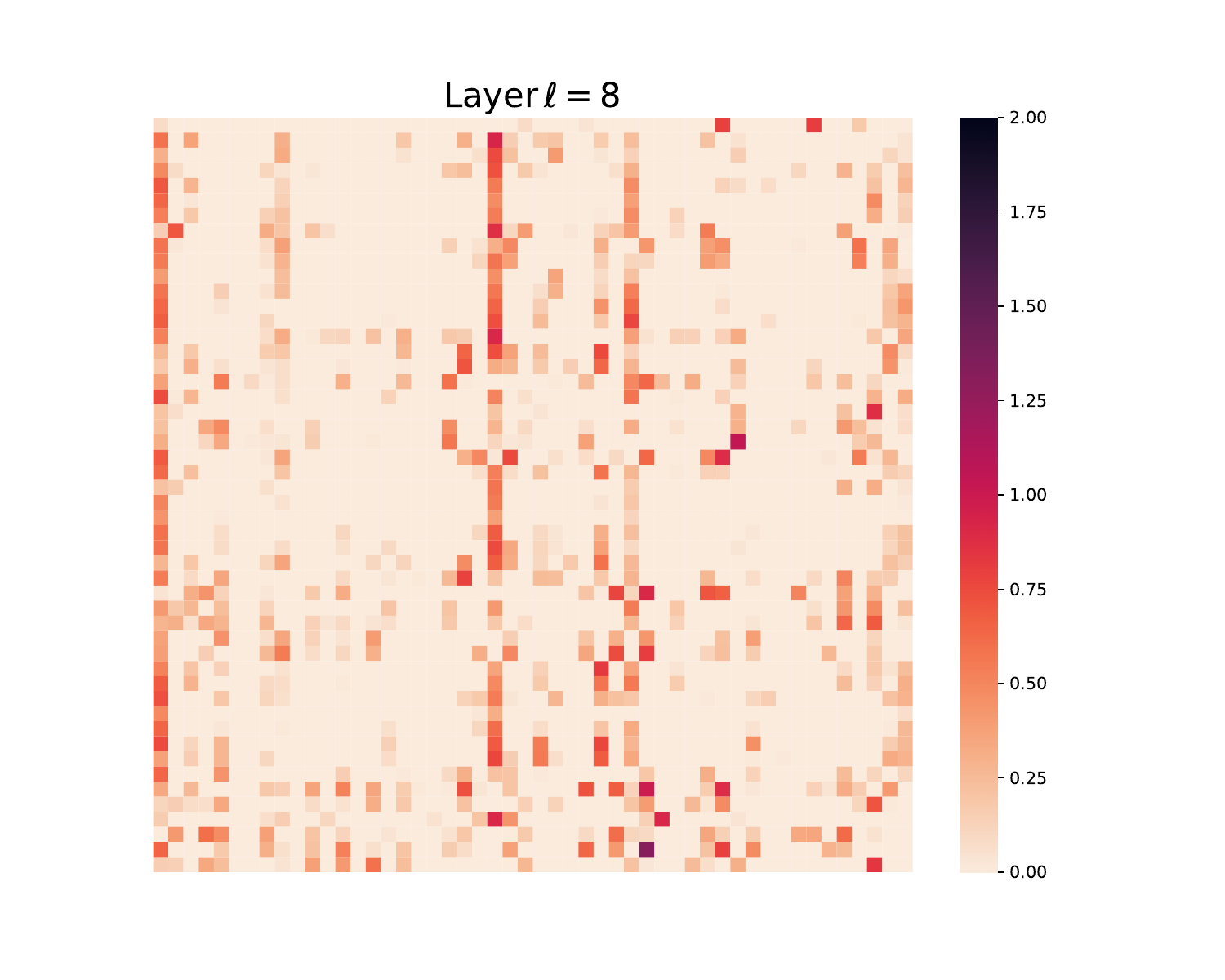}
         \caption{$\ell=8$.}
     \end{subfigure}
     \begin{subfigure}[b]{0.24\textwidth}
         \centering
    \includegraphics[width=\textwidth]{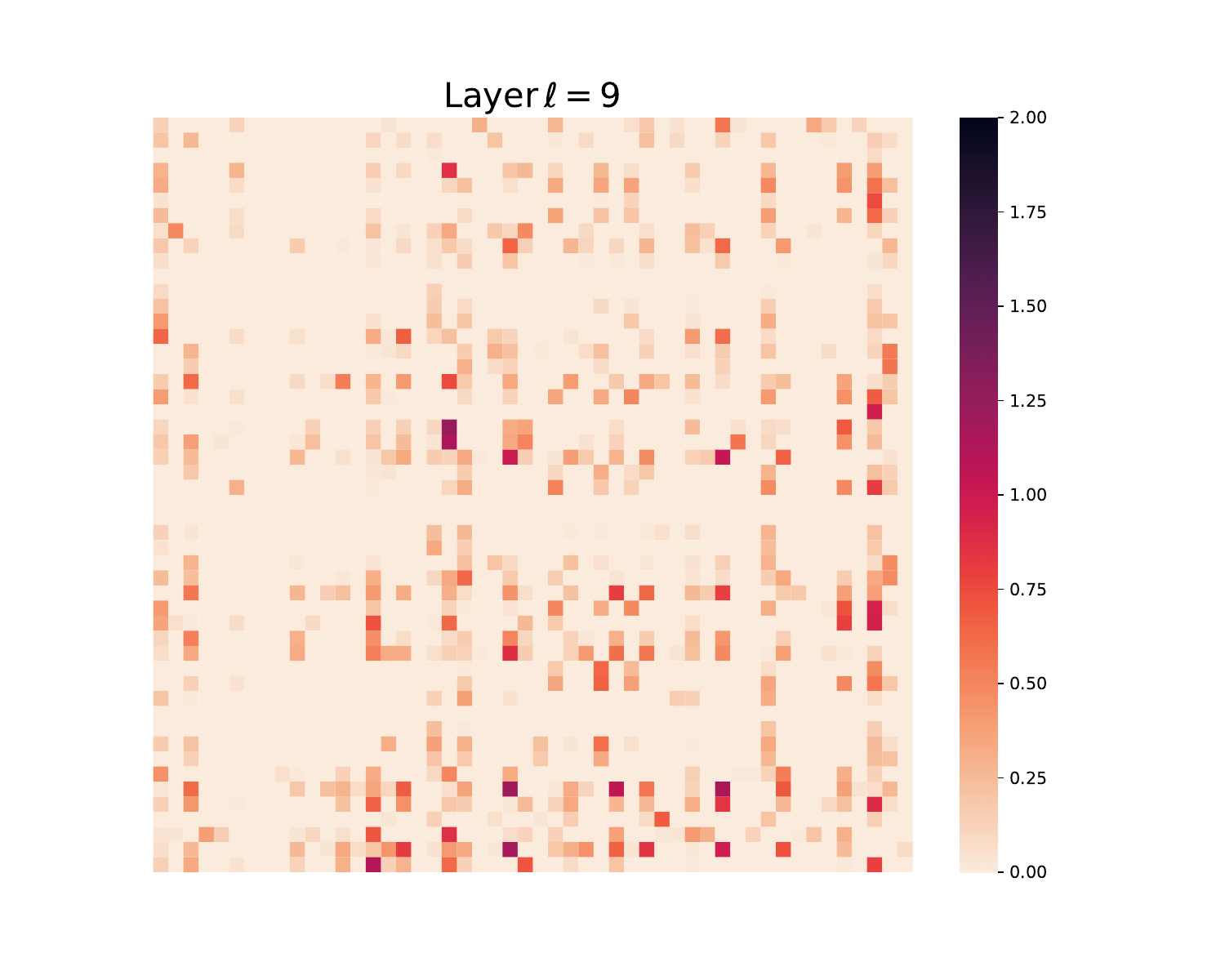}
         \caption{$\ell=9$.}
     \end{subfigure}
     \begin{subfigure}[b]{0.24\textwidth}
         \centering
    \includegraphics[width=\textwidth]{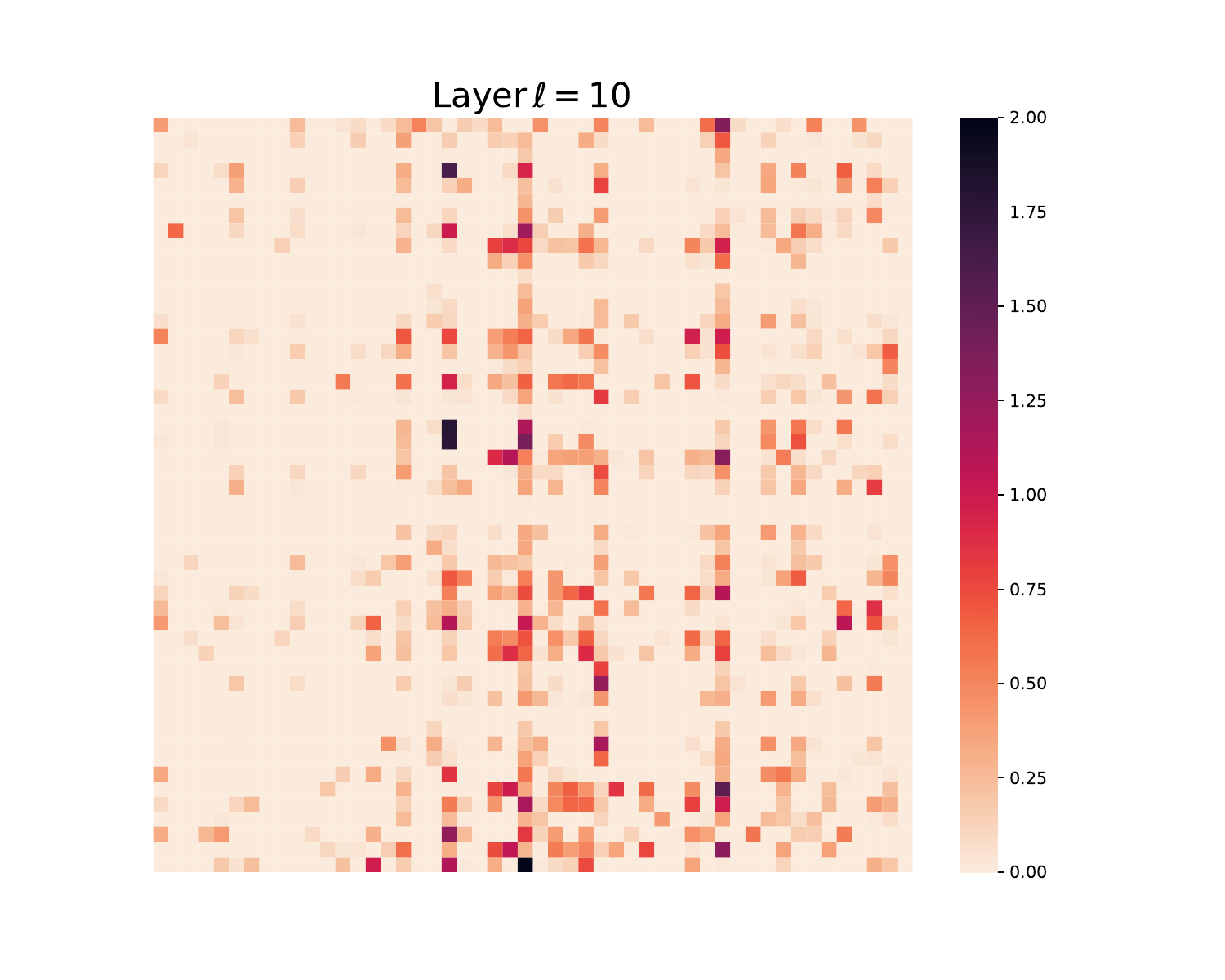}
         \caption{$\ell=10$.}
     \end{subfigure}
     \begin{subfigure}[b]{0.24\textwidth}
         \centering
    \includegraphics[width=\textwidth]{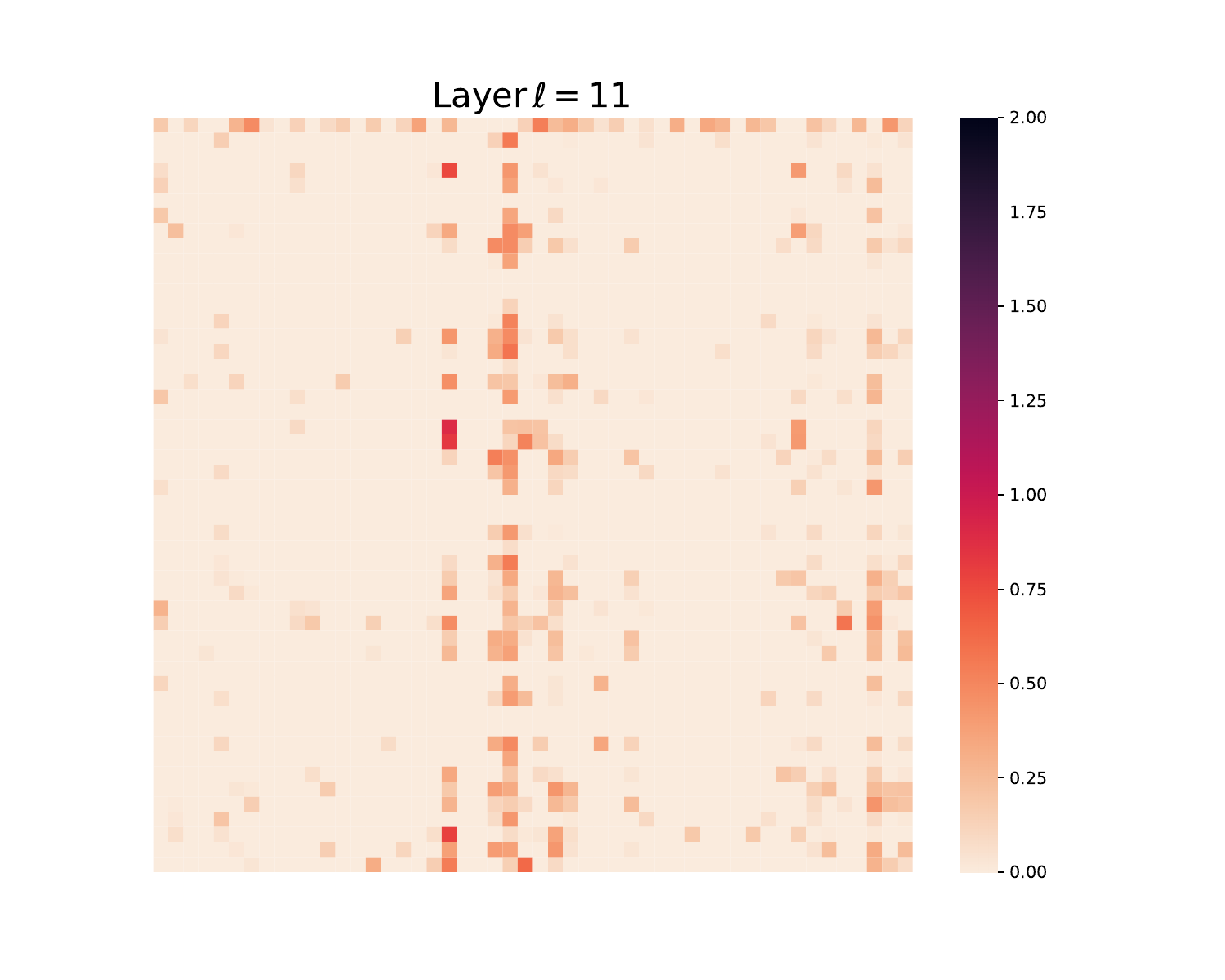}
         \caption{$\ell=11$.}
     \end{subfigure}
     \begin{subfigure}[b]{0.24\textwidth}
         \centering
    \includegraphics[width=\textwidth]{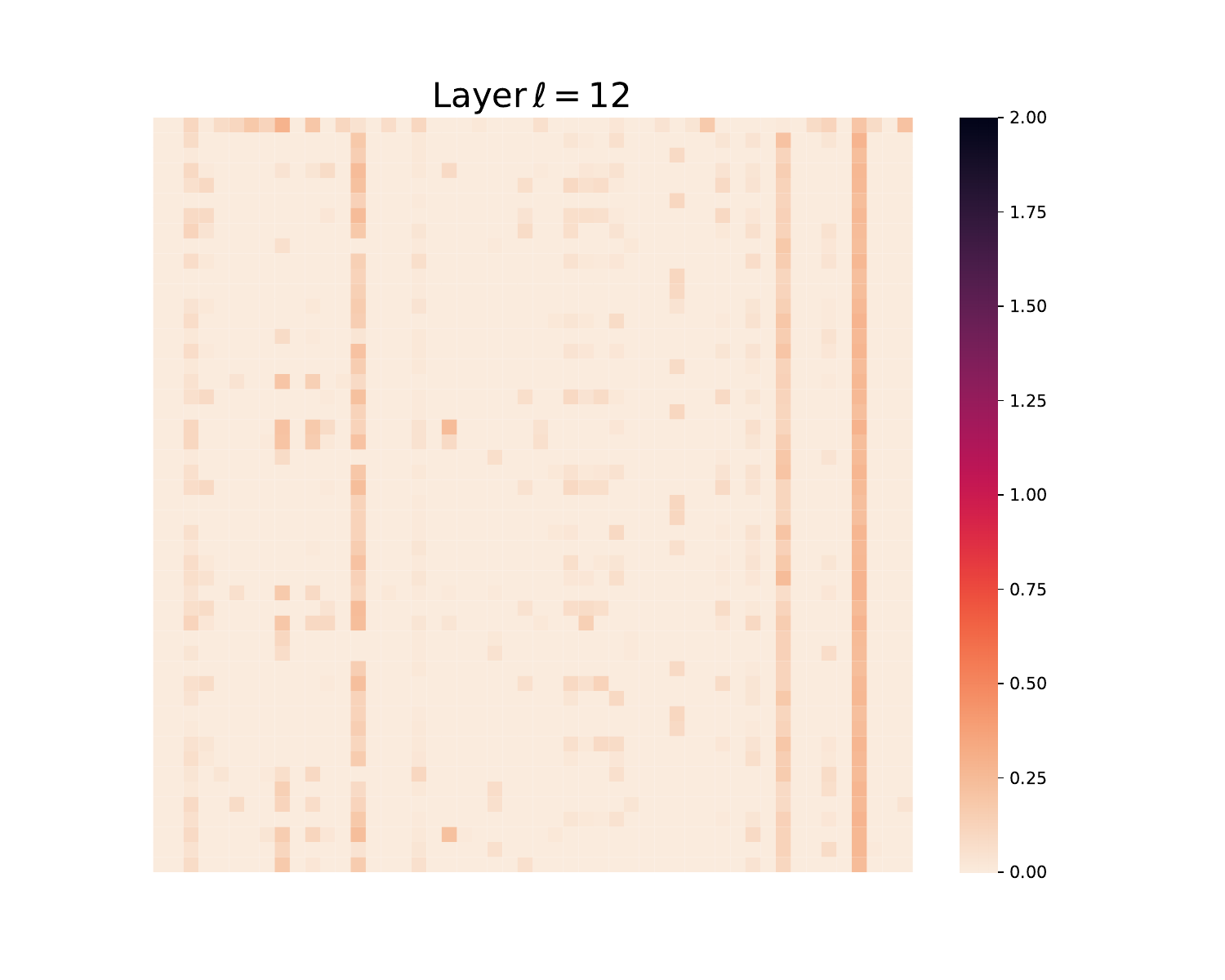}
         \caption{$\ell=12$.}
     \end{subfigure}
        \caption{\small Visualizing layer-wise token $\vZ^{\ell}$ representations at each layer $\ell$. To enhance the visual clarity, we randomly extract a 50$\times$50 sub-matrix from $\vZ^{\ell}$ for display purposes. (Model: \ours{-Tiny})}
        \label{fig:appendix-exp-ista-sparsity-heatmap}
\end{figure}

\paragraph{Visualizing layer-wise token representations.} 
To gain a better understanding of the token representations of \ours{}, we visualize the output of each \texttt{ISTA} block at layer $\ell$ in \Cref{fig:appendix-exp-ista-sparsity-heatmap}. 
Specifically, we visualize the $\vZ^{\ell+1}$ via heatmap plots. 
We observe that the output $\vZ^{\ell+1}$ becomes more sparse as the layer increases. 
Moreover, besides the sparsity, we also find that $\vZ^{\ell+1}$ becomes more structured (i.e., low-rank), which indicates that the set of token representations become closer to linear subspaces, confirming our mental picture of the geometry of each layer (as in \Cref{fig:encoder_compression_sparsification} and \Cref{fig:crate_autoencoding}).

\begin{figure}[t!]
     \centering
     \begin{subfigure}[b]{0.24\textwidth}
         \centering
    \includegraphics[width=\textwidth]{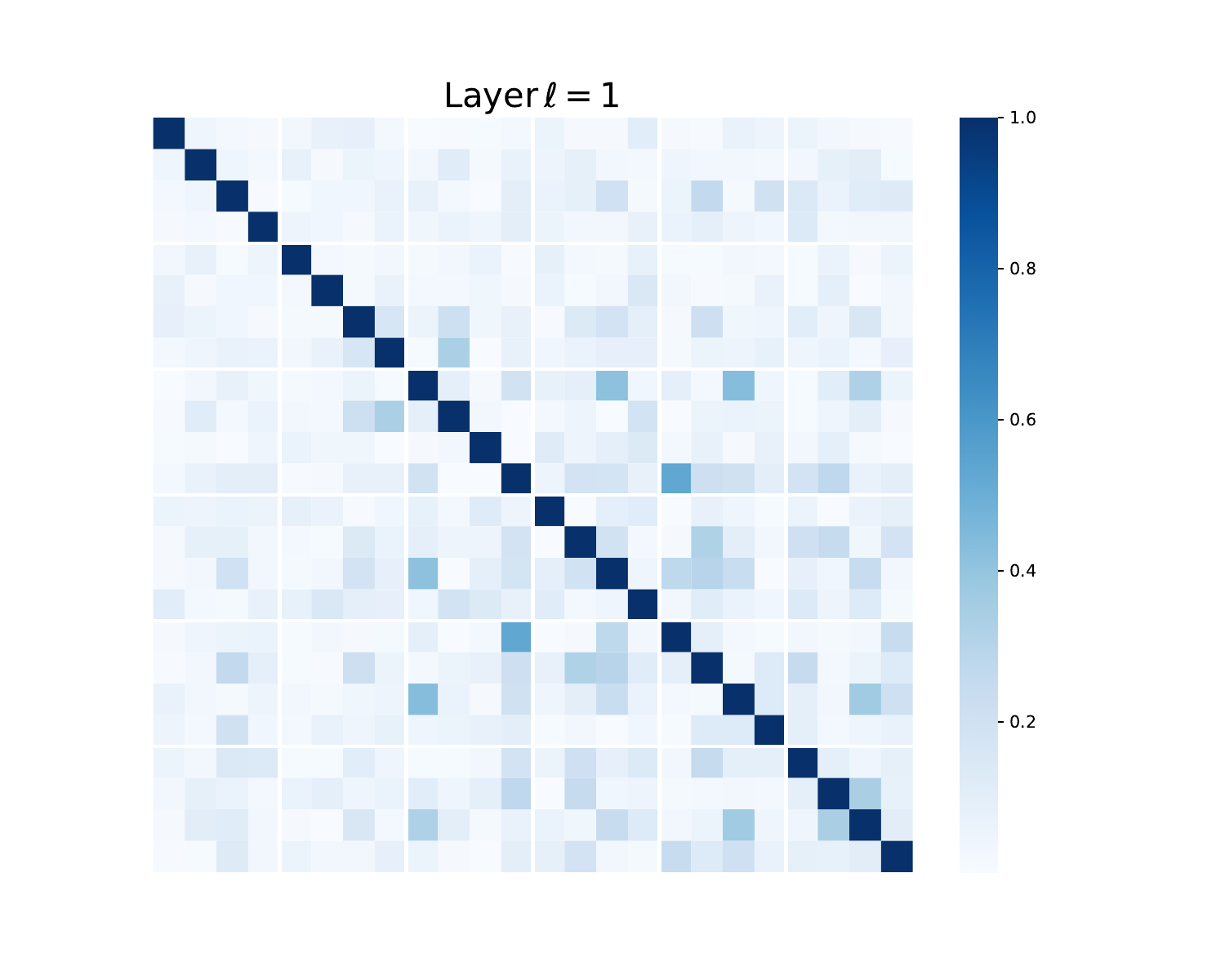}
         \caption{$\ell=1$.}
     \end{subfigure}
     \begin{subfigure}[b]{0.24\textwidth}
         \centering
    \includegraphics[width=\textwidth]{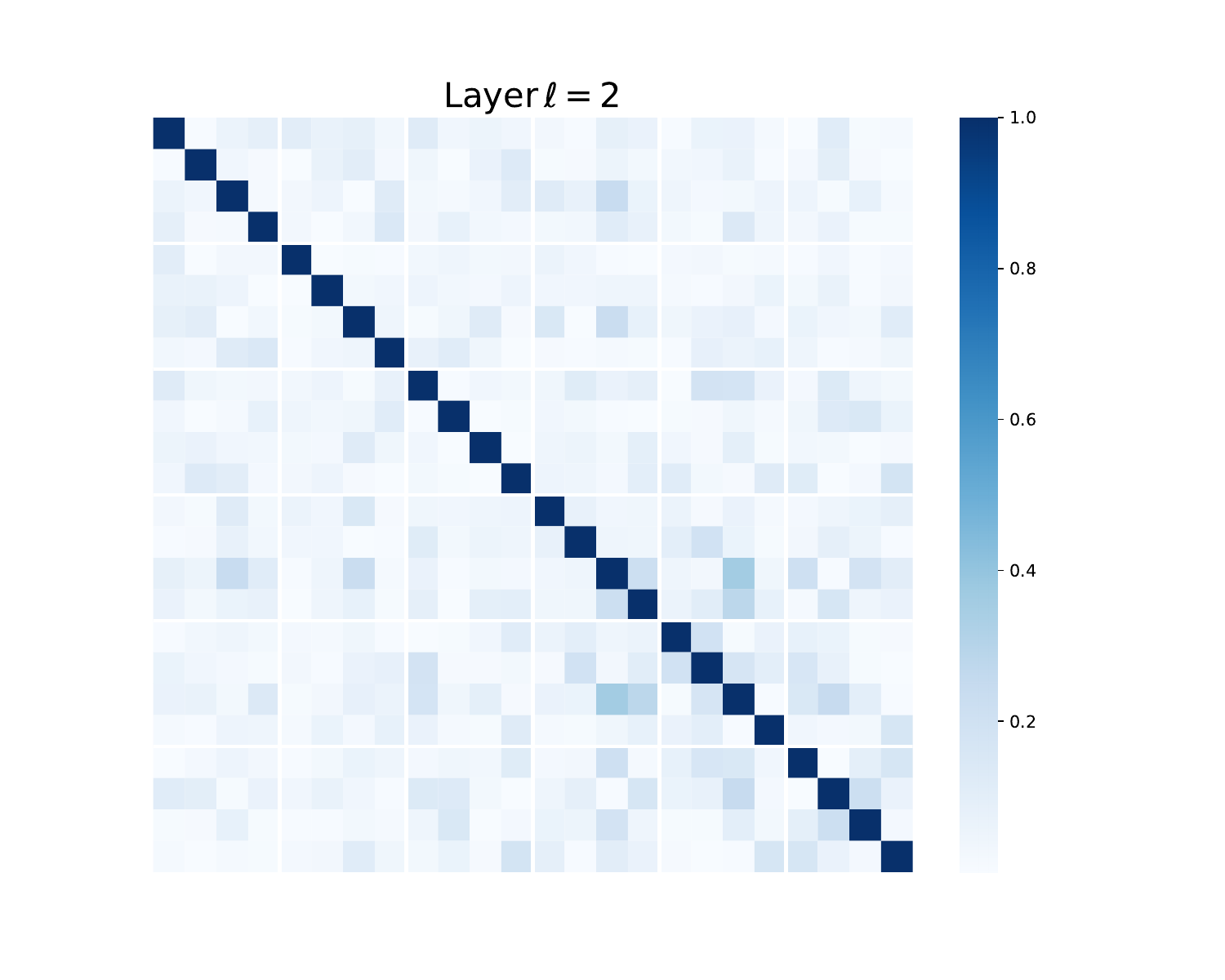}
         \caption{$\ell=2$.}
     \end{subfigure}
     \begin{subfigure}[b]{0.24\textwidth}
         \centering
    \includegraphics[width=\textwidth]{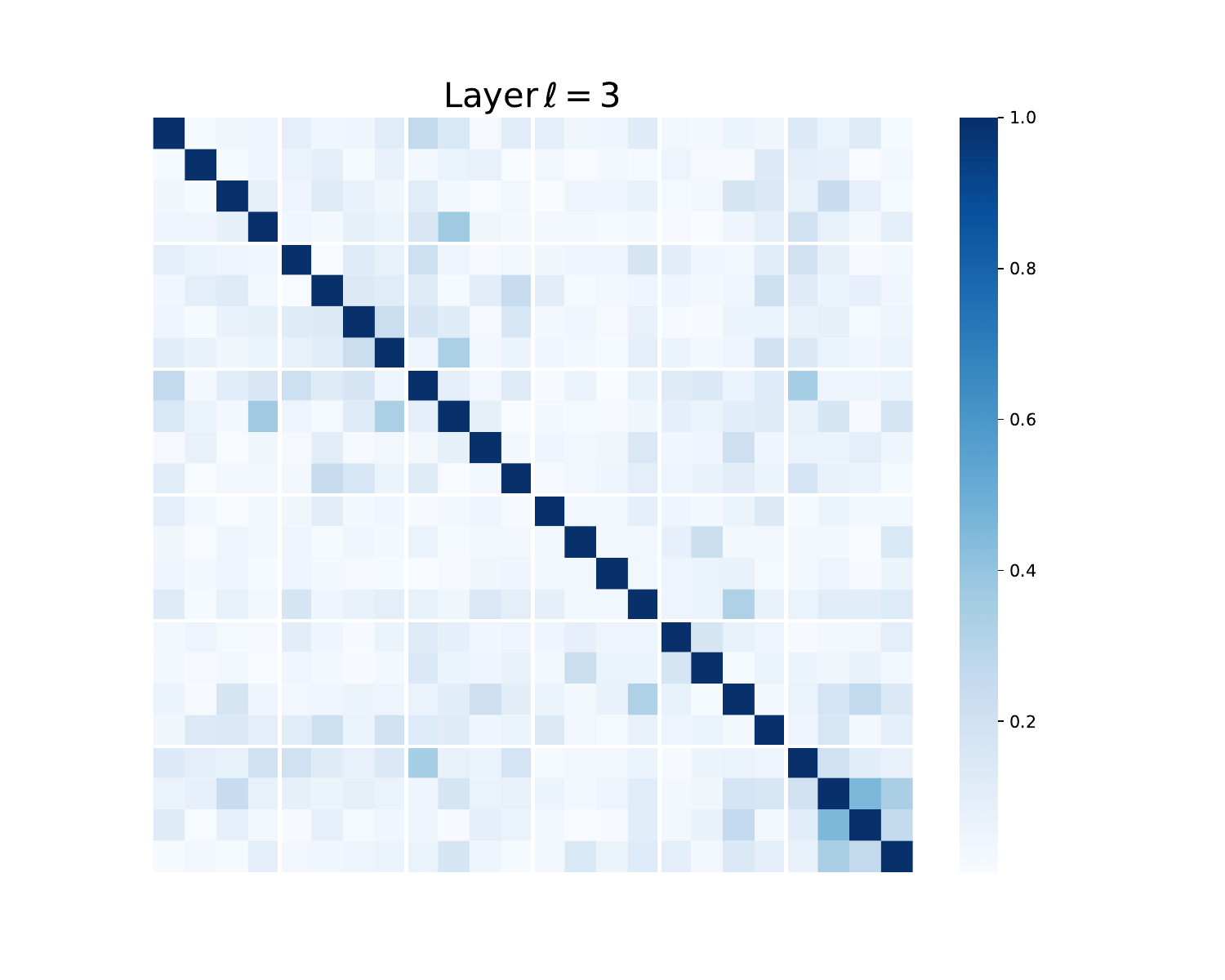}
         \caption{$\ell=3$.}
     \end{subfigure}
     \begin{subfigure}[b]{0.24\textwidth}
         \centering
    \includegraphics[width=\textwidth]{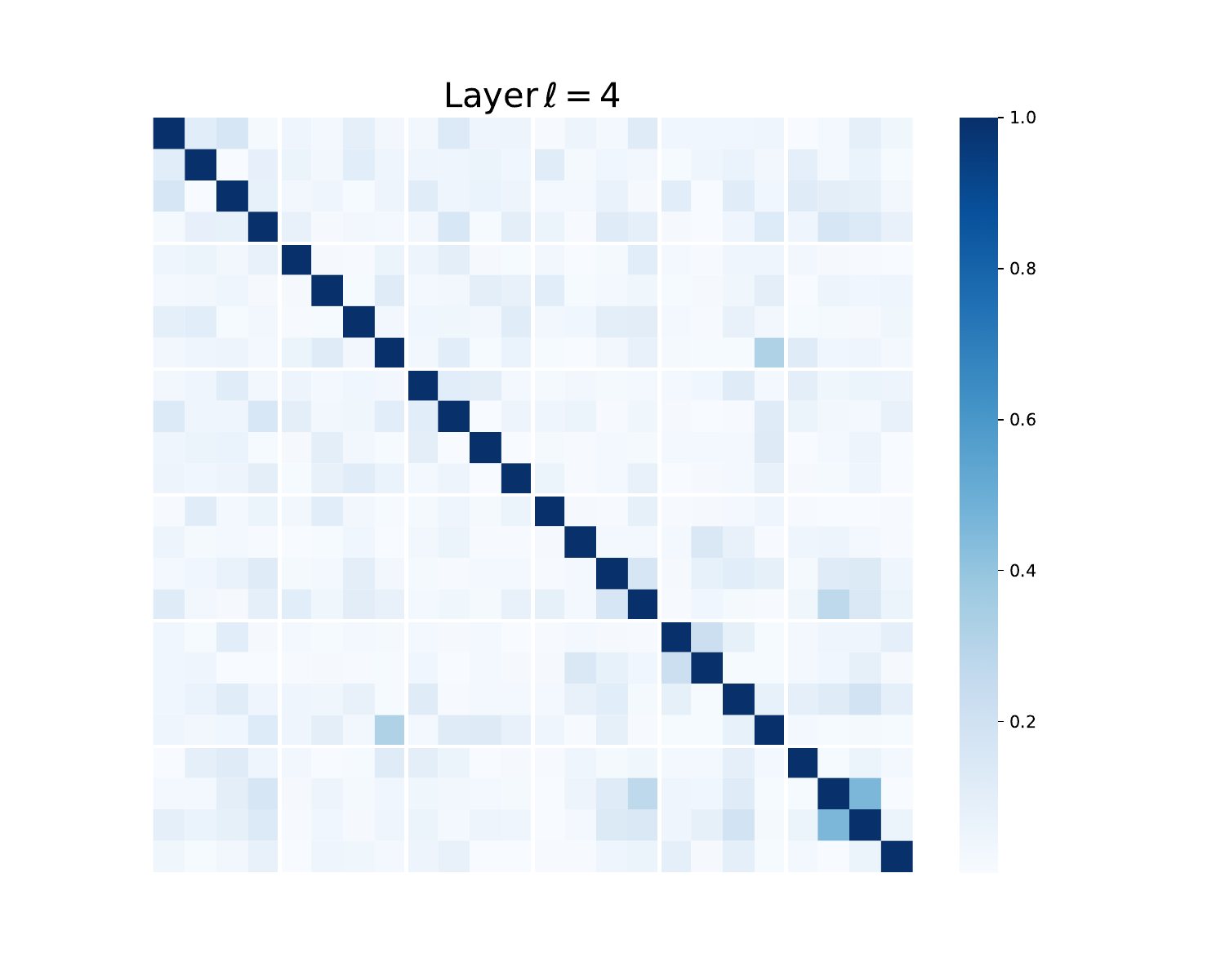}
         \caption{$\ell=4$.}
     \end{subfigure}
     \begin{subfigure}[b]{0.24\textwidth}
         \centering
    \includegraphics[width=\textwidth]{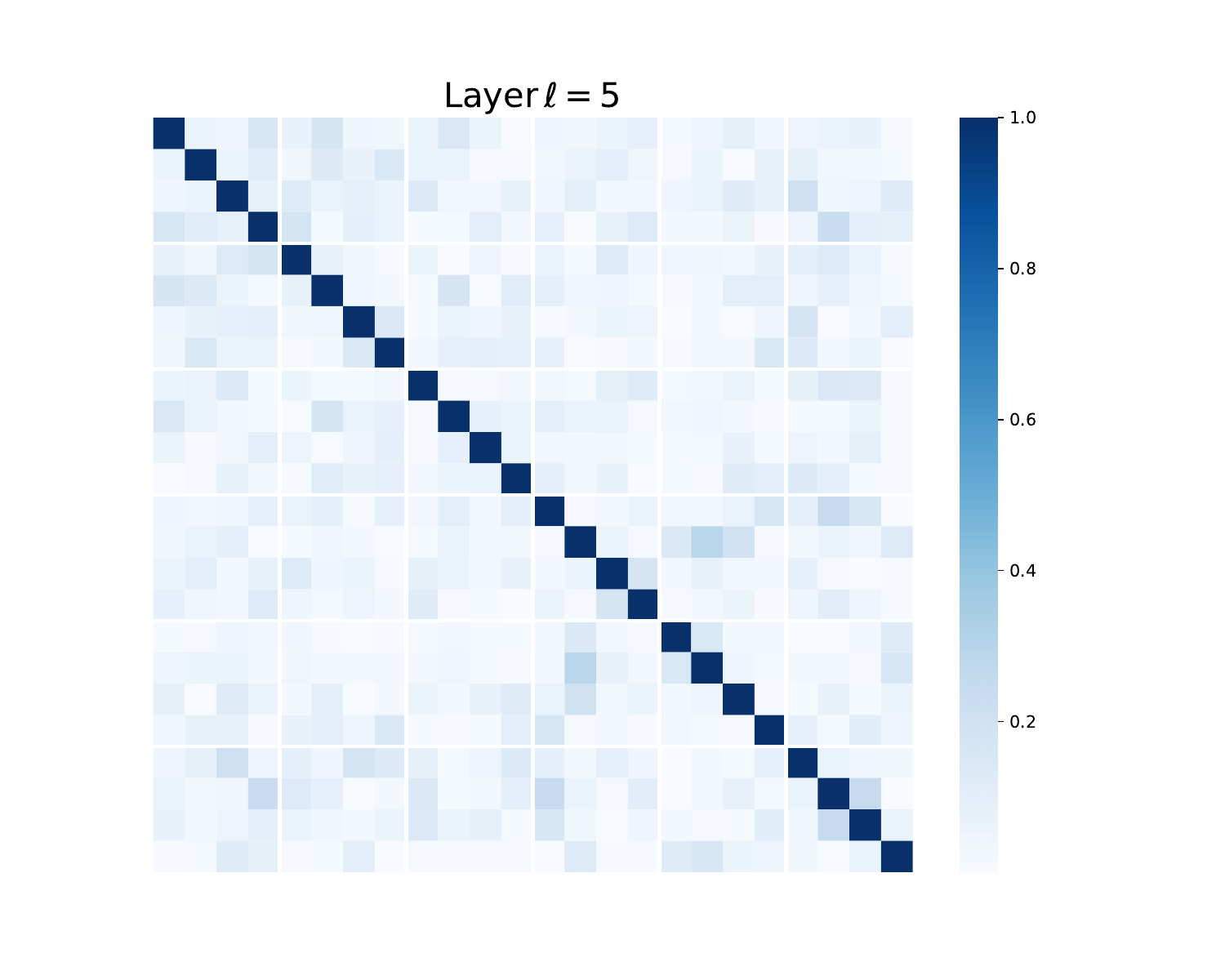}
         \caption{$\ell=5$.}
     \end{subfigure}
     \begin{subfigure}[b]{0.24\textwidth}
         \centering
    \includegraphics[width=\textwidth]{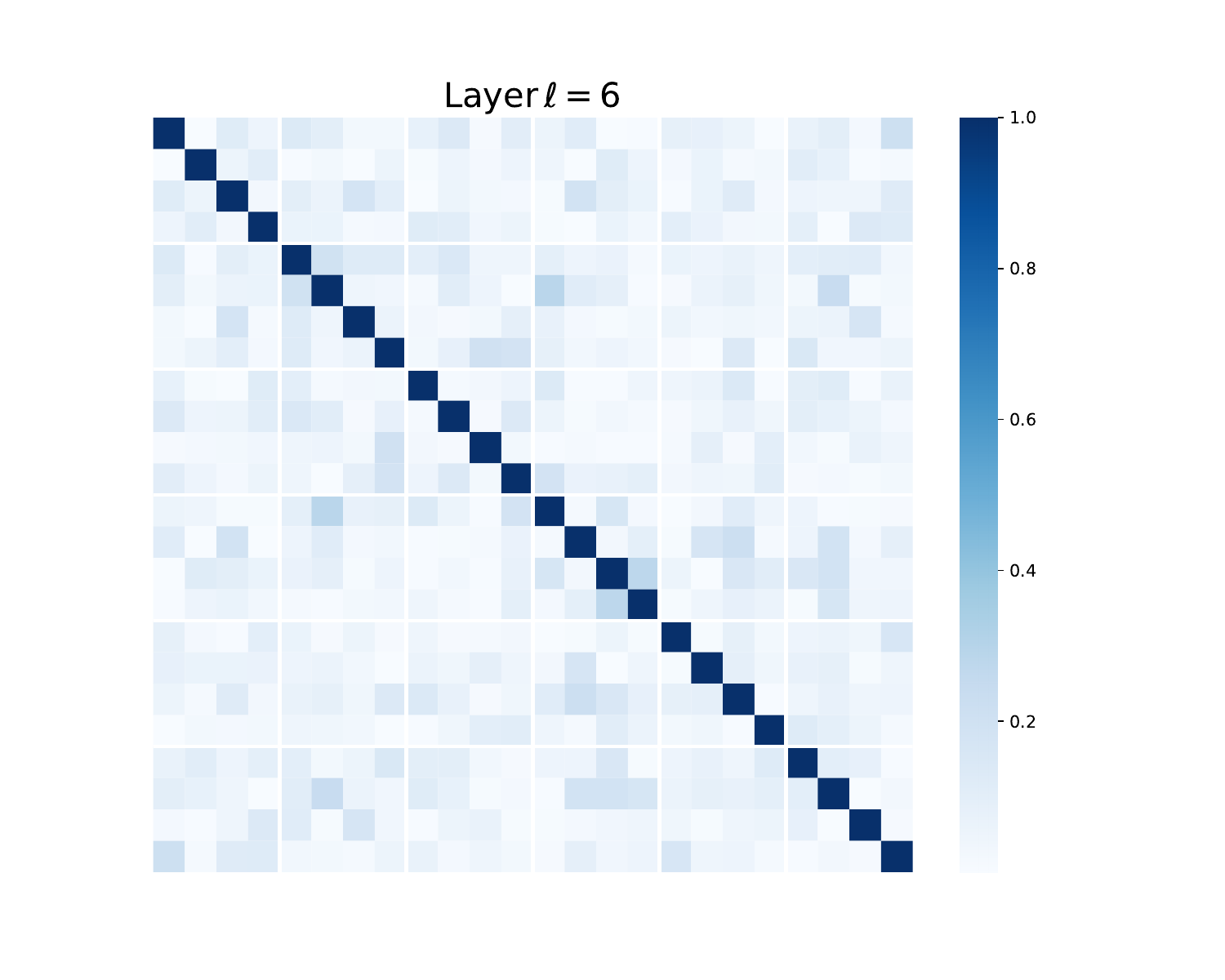}
         \caption{$\ell=6$.}
     \end{subfigure}
     \begin{subfigure}[b]{0.24\textwidth}
         \centering
    \includegraphics[width=\textwidth]{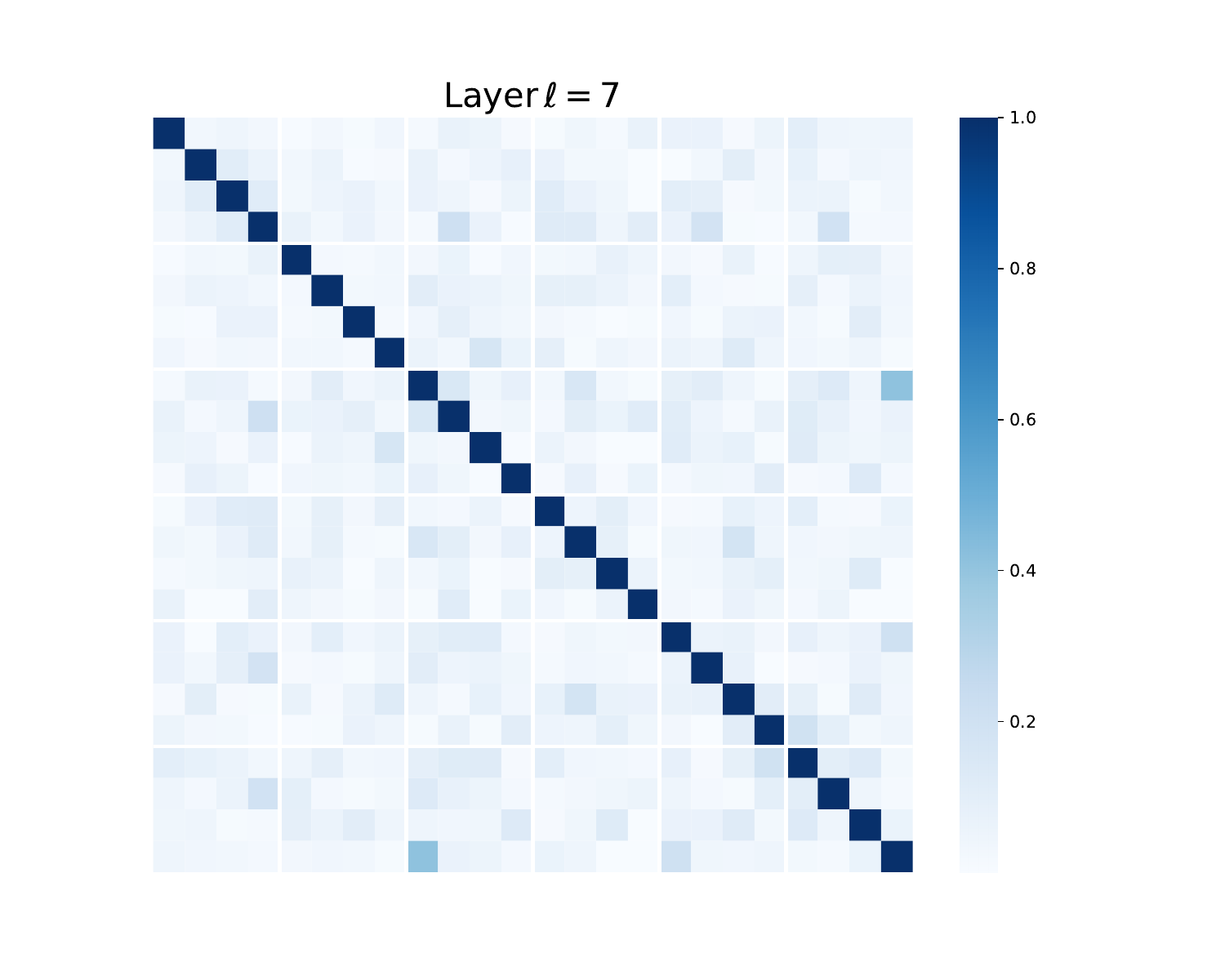}
         \caption{$\ell=7$.}
     \end{subfigure}
     \begin{subfigure}[b]{0.24\textwidth}
         \centering
    \includegraphics[width=\textwidth]{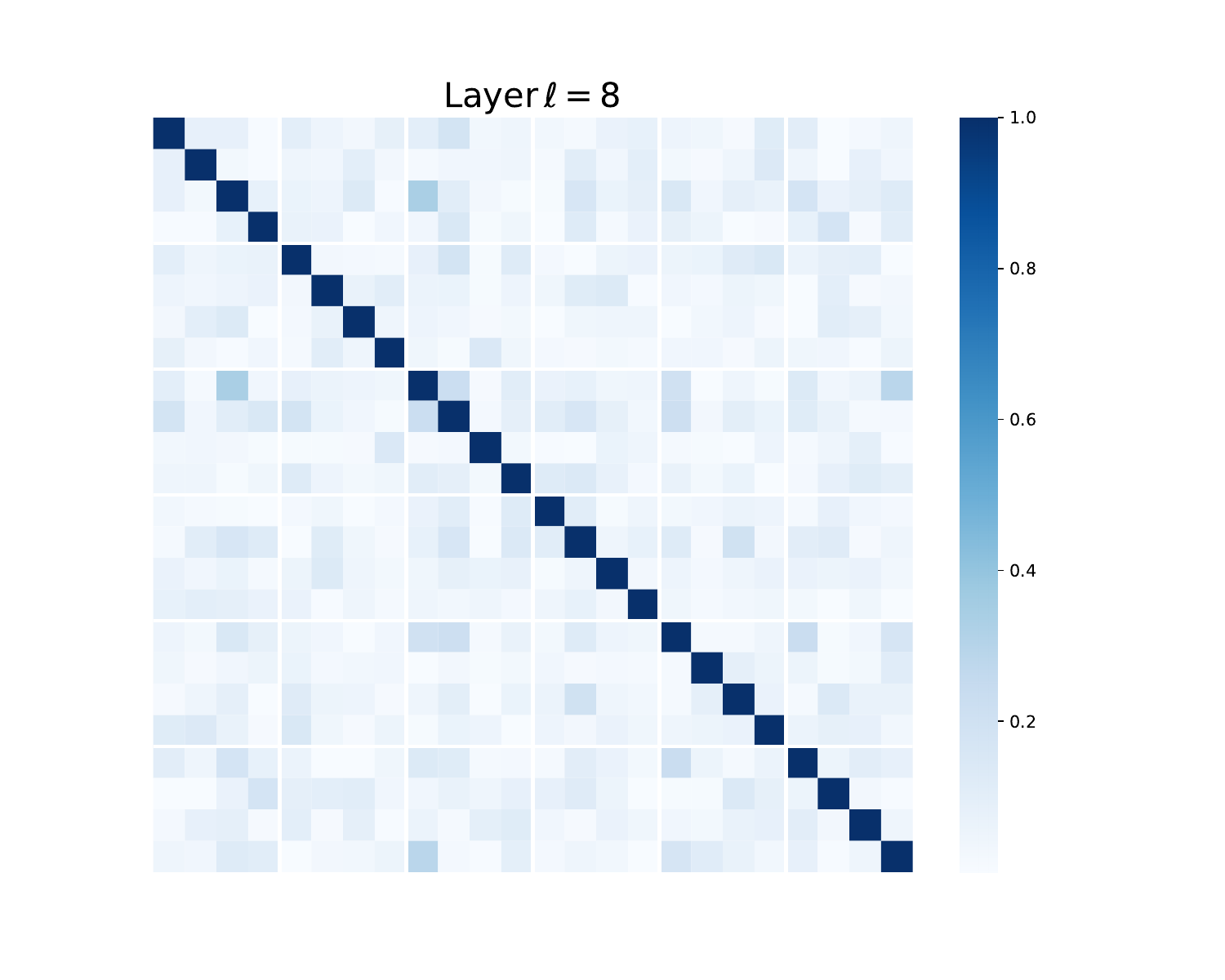}
         \caption{$\ell=8$.}
     \end{subfigure}
     \begin{subfigure}[b]{0.24\textwidth}
         \centering
    \includegraphics[width=\textwidth]{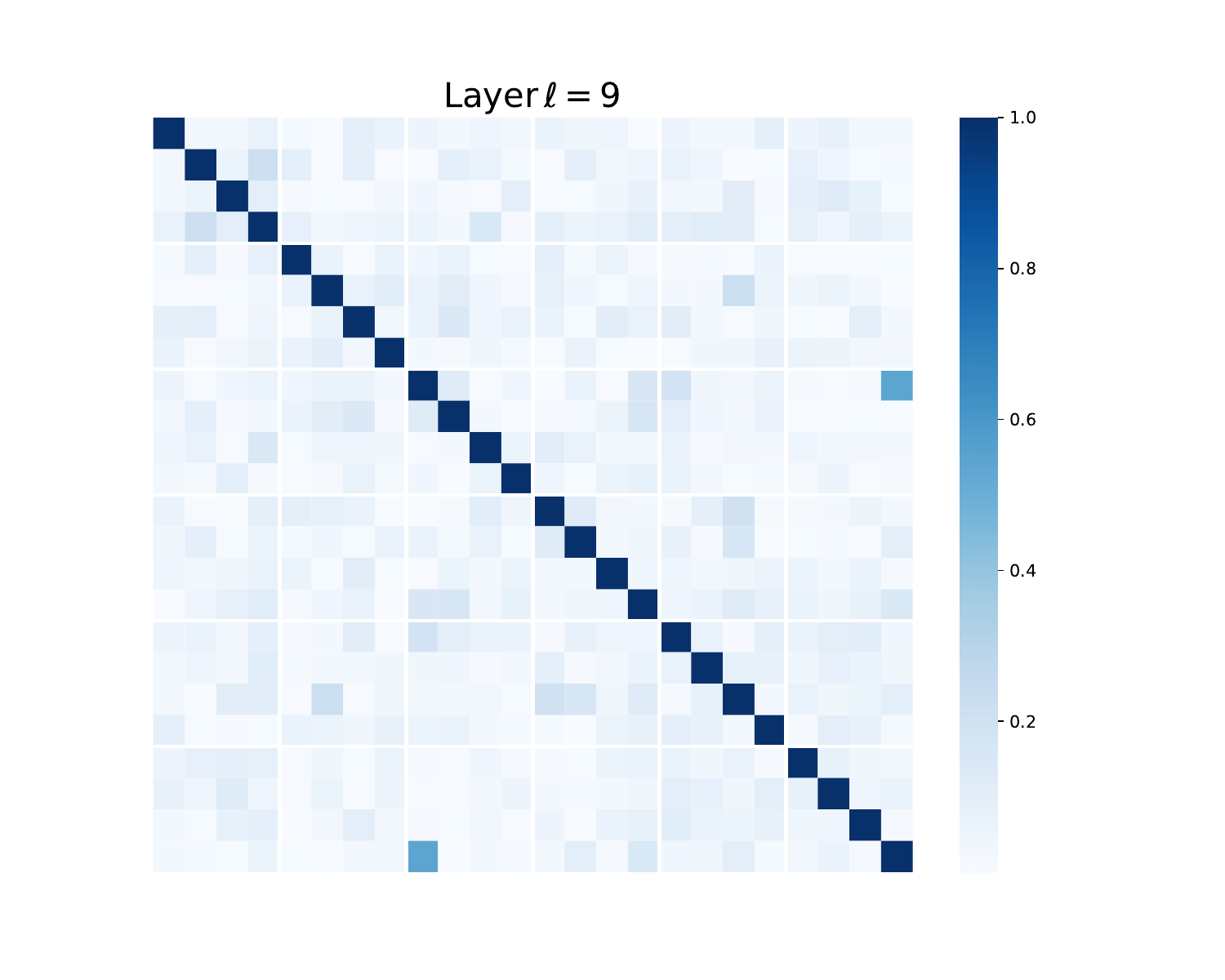}
         \caption{$\ell=9$.}
     \end{subfigure}
     \begin{subfigure}[b]{0.24\textwidth}
         \centering
    \includegraphics[width=\textwidth]{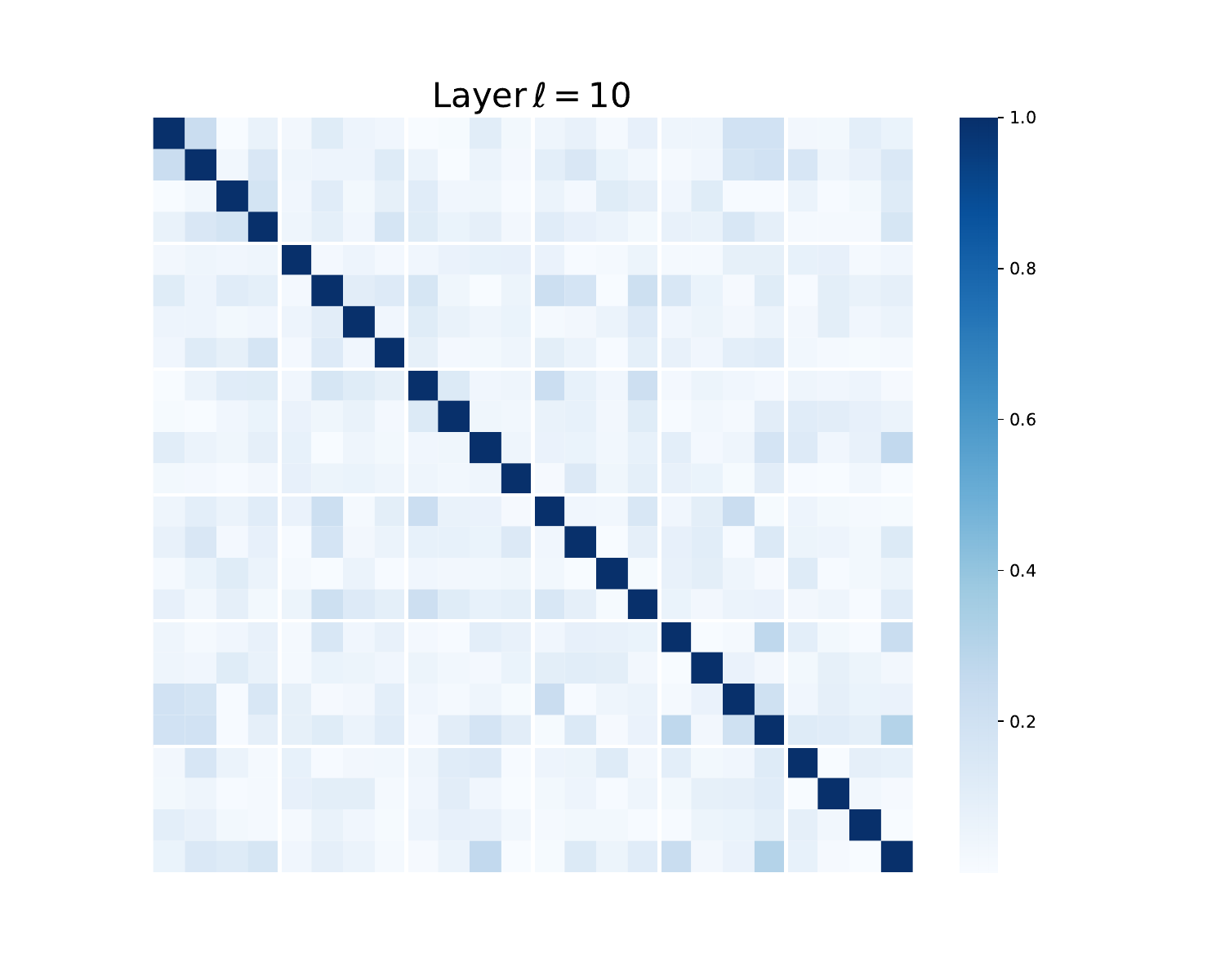}
         \caption{$\ell=10$.}
     \end{subfigure}
     \begin{subfigure}[b]{0.24\textwidth}
         \centering
    \includegraphics[width=\textwidth]{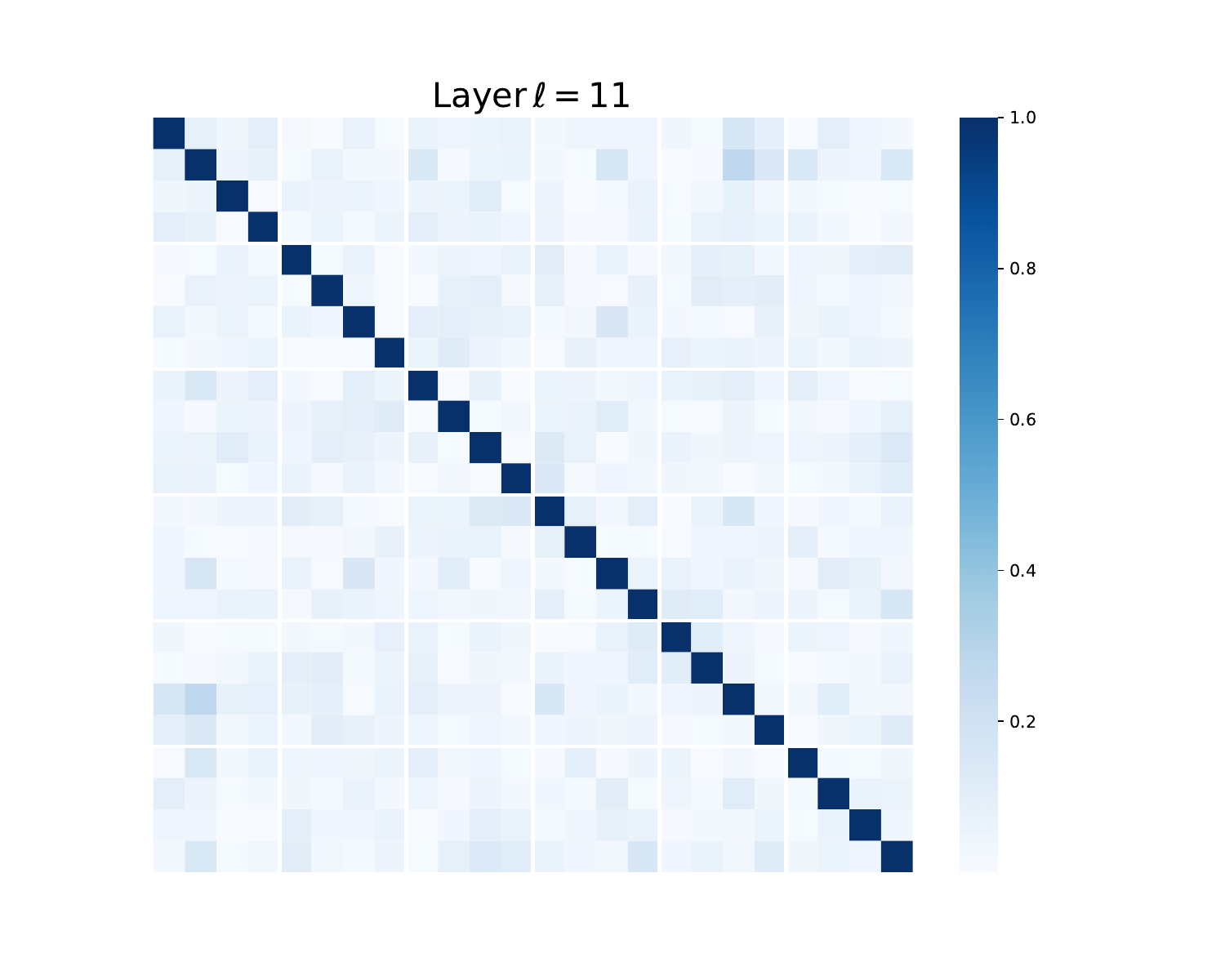}
         \caption{$\ell=11$.}
     \end{subfigure}
     \begin{subfigure}[b]{0.24\textwidth}
         \centering
    \includegraphics[width=\textwidth]{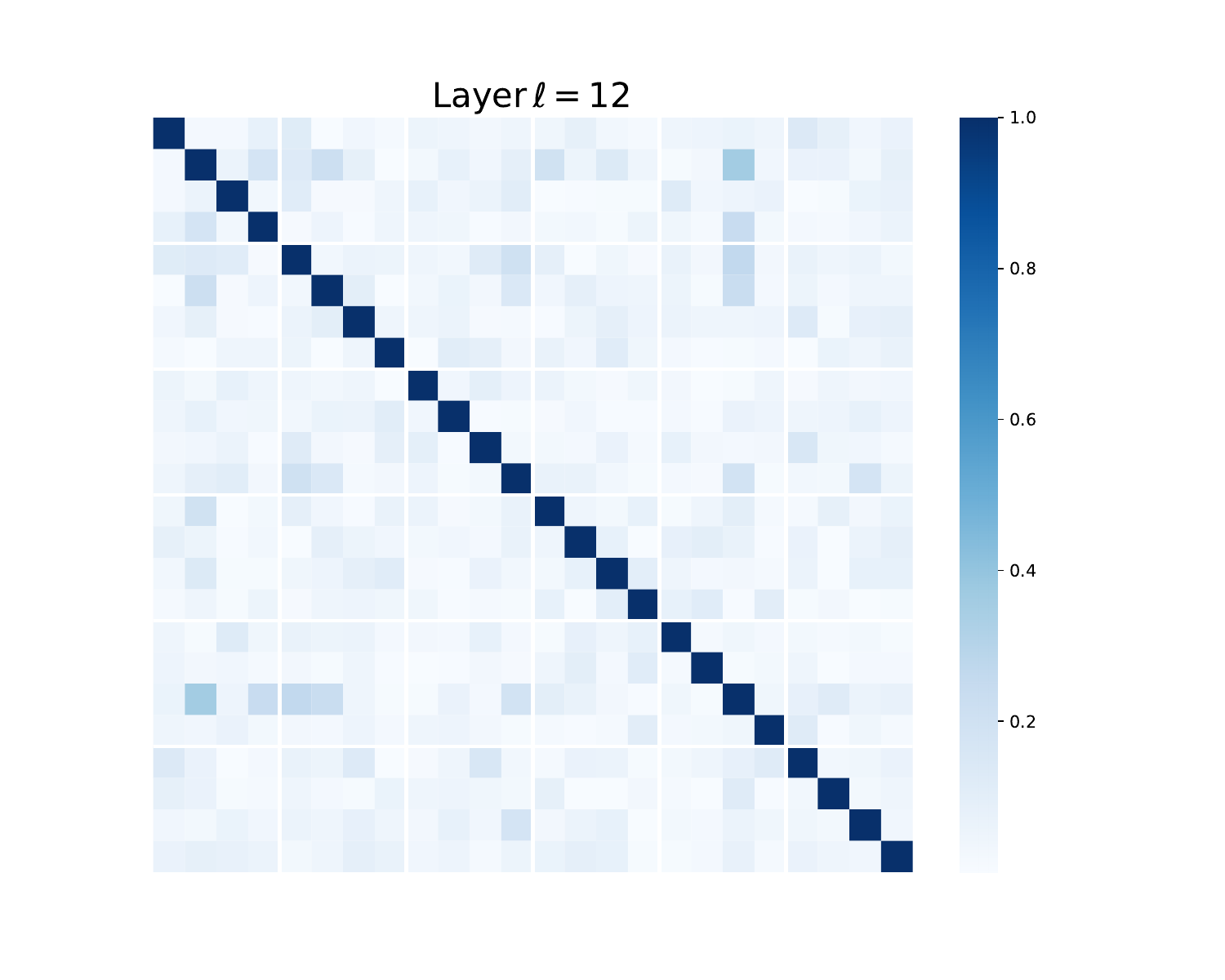}
         \caption{$\ell=12$.}
     \end{subfigure}
        \caption{\small We visualize the $[\vU_1^{\ell}, \dots, \vU_K^{\ell}]^{\adj}[\vU_1^{\ell}, \dots, \vU_K^{\ell}] \in \bR^{pK \times pK}$ at different layers.  
        The $(i, j)$-th block in each sub-figure corresponds to $(\vU_i^{\ell})\adj\vU_j^{\ell}$ for $i, j \in [K]$ at a particular layer $\ell$. To enhance the visual clarity, for each subspace $\vU_i$, we randomly pick 4 directions for display purposes. (Model: \ours{-Tiny})}
        \label{fig:appendix-exp-visualize-UiUj}
\end{figure}

\paragraph{Visualizing layer-wise subspaces in multi-head self-attention.} 
We now visualize the $\vU_{[K]}^{\ell}$ matrices used in the \texttt{MSSA} block. 
In \Cref{sub:compression}, we assumed that $\vU_{[K]}^{\ell}$ were incoherent to capture different ``views'' of the set of tokens. 
In Fig.~\ref{fig:appendix-exp-visualize-UiUj}, we first normalize the columns in each $\vU_k^{\ell}$, then we visualize the $[\vU_1^{\ell}, \dots, \vU_K^{\ell}]^{\adj}[\vU_1^{\ell}, \dots, \vU_K^{\ell}] \in \bR^{pK \times pK}$. 
The $(i, j)\th$ block in each sub-figure corresponds to $(\vU_i^{\ell})\adj\vU_j^{\ell}$ for $i, j \in [K]$ at a particular layer $\ell$. 
We find that the learned $\vU_{[K]}^{\ell}$ are approximately incoherent, which aligns well with our assumptions. One interesting observation is that the $\vU_{[K]}^{\ell}$ becomes more incoherent when the layer index $\ell$ is larger, which suggests that the token representations are more separable. This mirrors the situation in other popular deep networks \citep{he2022law}.

\subsection{Emergence of Semantic Properties in Learned CRATE Attention Maps}
\label{sec:emergence}

\noindent
In this subsection, we analyze the attention maps within \ours{} models trained on vision tasks. 
Previous work~\citep{caron2021emerging} use the self-attention in vision transformers to study semantic segmentation of the input image. 
\citet{caron2021emerging} demonstrated that a specific self-supervised training method, named DINO, can lead to the emergence of segmentation in vision transformers. On the other hand, ViTs trained with supervised learning do not have such properties. 
In contrast, as we will present in this subsection, we find that \textit{the white-box design of \ours{} leads to the emergence of segmentation properties in the network’s self-attention maps, solely through a minimalistic supervised training recipe---the supervised classification training used in vanilla supervised ViTs}. 
We quantify the segmentation properties of \ours{} both qualitatively and quantitatively and compare the results with ViTs. 
Through extensive evaluations, we find that the self-attention maps in white-box transformers (\ours{}) are much more interpretable than vanilla black-box vision transformers.

\subsubsection{Experimental Setup}
\paragraph{Model architecture} We utilize the \ours{} model as described in~\Cref{sub:architecture} at sizes -S/8 and -B/8 (that is, \ours{-Small} and \ours{-Base} with patch size \(8 \times 8\)). We similarly adopt the ViT model from \citet{dosovitskiy2020image} using the same scales (-S/8 and -B/8), ensuring consistent configurations between them. 
Refer to Table~\ref{tab:Comparison_with_VIT_IN21k} for detailed model performance evaluations on classification tasks.

\paragraph{Datasets.} In this subsection, all visual models are trained for classification tasks, using the methodology described in \Cref{subsubsec:image-classification-task}, on the complete ImageNet dataset~\citep{deng2009imagenet}, commonly referred to as ImageNet-21K. This dataset comprises 14,197,122 images distributed across 21,841 classes. We apply the MaskCut~\citep{wang2023cut} pipeline on the COCO {val2017}~\citep{lin2014microsoft}, which consists of 5,000 RGB images, and assess our models' performance for both object detection and instance segmentation tasks. \textit{All evaluation procedures are unsupervised, and we do not update the model weights during the detection and segmentation evaluation processes.}

\begin{figure}
    \centering
    \includegraphics[width=0.95\linewidth]{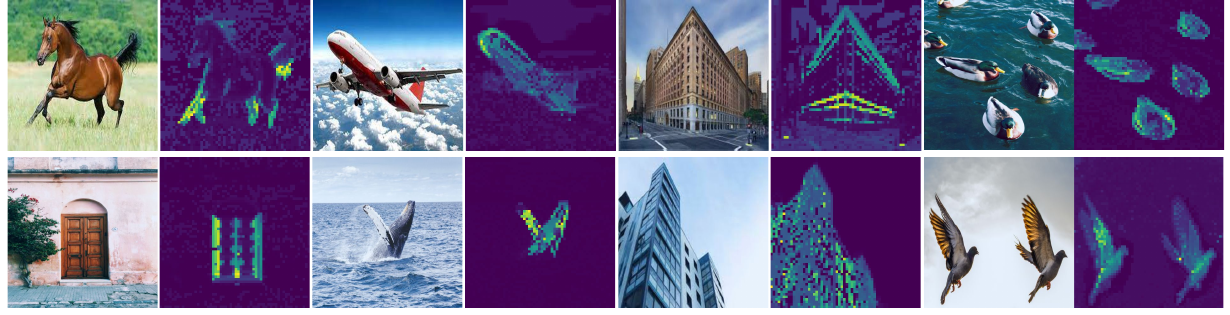}
    \caption{\small \textbf{Self-attention maps from a supervised \ourscaps{} with $8\times
    8$ patches} trained using classification. 
    The \ours{} architecture automatically learns to perform object segmentation without a complex self-supervised training recipe or any fine-tuning with segmentation-related annotations. 
    For each image pair, we visualize the original image on the left and the self-attention map of the image on the right. 
    }
    \label{fig:segmentation-teaser}
\end{figure}

\subsubsection{Measuring the Emergence of Segmentation}
\label{subsubsec:segmentation_measurement}
\paragraph{Visualizing self-attention maps.} To qualitatively measure the emergence phenomenon, we adopt the attention map approach based on the [\texttt{CLS}] token, which has been widely used as a way to interpret and visualize transformer-like architectures~\citep{abnar2020quantifying, caron2021emerging}. 
Indeed, we use the same methodology as \cite{abnar2020quantifying,caron2021emerging}, noting that in \ours{} the query-key-value matrices are all the same; a more formal accounting is deferred to \Cref{app:attn_maps}. 
The visualization results of self-attention maps are summarized in \Cref{fig:segmentation-teaser}. 
We observe that the self-attention maps of the \ours{} model correspond to semantic regions in the input image. 
Our results suggest that the \ours{} model encodes a clear semantic segmentation of each image in the network's internal representations, which is similar to the self-supervised method DINO~\citep{caron2021emerging}. In contrast, as shown in~\Cref{appendix_fig:crate_vs_vit} in the Appendices, the vanilla ViT trained on supervised classification does not exhibit similar segmentation properties.

\paragraph{Object detection and fine-grained segmentation.}\label{subsec_method_maskcut}
To further validate and evaluate the rich semantic information captured by \ours{}, we employ MaskCut~\citep{wang2023cut}, a recent effective approach for object detection and segmentation that does not require human annotations. As usual, we provide a more detailed methodological description in \Cref{app:attn_maps}.
This procedure allows us to extract more fine-grained segmentation for  an image based on the token representations learned by different methods. 
In Figure~\ref{fig:maskcut_visualization}, we visualize the fine-grained segmentation produced by MaskCut on features from ViT trained by classification, \ours{} trained by classification as in Section \ref{subsubsec:image-classification-task}, and \ours{} trained via DINO as in Section \ref{sec:experiment-DINO}, respectively. We compare the segmentation and detection performance in Table~\ref{tab: maskcut_main_result}.
Based on these results, we observe that MaskCut on features learned with supervised ViT  typically fails to produce good  segmentation masks, for example, the first image in Figure~\ref{fig:maskcut_visualization} and the ViT-S/8 row in Table~\ref{tab: maskcut_main_result}. On the other hand, we notice that, regardless of the training task  (supervised or unsupervised), \ours{} is able to capture semantically meaningful boundaries of the main objects in an image.
Compared to ViT, \ours{} provides better internal representation tokens for both segmentation and detection.

\begin{figure}[t!]
    \centering
    \includegraphics[width=1.0\linewidth]{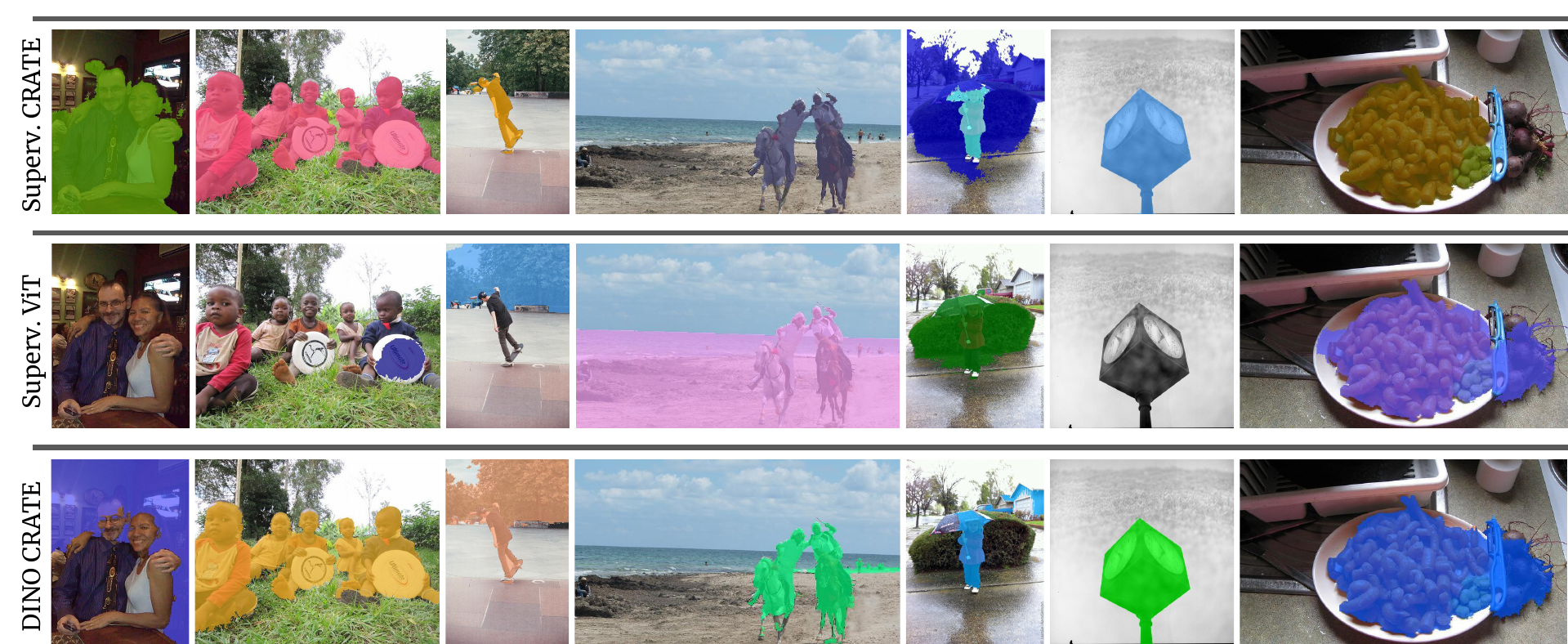}
    \caption{\small \textbf{Visualization of segmentation with MaskCut on COCO {val2017}~\citep{lin2014microsoft}.}
    \textit{Top and Bottom Rows}: \ours{} architecture clearly detects main objects in the image when trained using either supervised classification technique in Section \ref{subsubsec:image-classification-task} or the DINO self-supervised technique in Section \ref{sec:experiment-DINO} ~\citep{caron2021emerging}. \textit{Middle Row}: Note that compared to \ours{}, ViT trained via classification often fails to detect main  objects in the images (columns 2, 3, 4). }
    \label{fig:maskcut_visualization}
\end{figure}

\begin{table*}[t!]
\caption{\small \textbf{Object detection and fine-grained segmentation via MaskCut on COCO {val2017}~\citep{lin2014microsoft}}. We consider models with different scales and evaluate the average precision measured by COCO's official evaluation metric. 
The first four models are pre-trained with image classification tasks under label supervision; the bottom three models are pre-trained via the DINO self-supervised technique~\citep{caron2021emerging}. \ours{} conclusively performs better than the ViT at detection and segmentation metrics when both are trained using supervised classification. 
}
\centering
\small
\setlength{\tabcolsep}{8pt}
\resizebox{0.8\textwidth}{!}{\begin{tabular}{@{}llcccccccc@{}}
\toprule
 &  & \multicolumn{3}{c}{Detection} &  \multicolumn{3}{c}{Segmentation} \\ 
Model & Train & AP$_{50}$ & AP$_{75}$ & AP & AP$_{50}$ & AP$_{75}$ & AP  \\ 
\midrule
\ours{}-S/8 & Supervised & 2.9 & 1.0 & 1.1 & 1.8 & 0.7 & 0.8 \\
\ours{}-B/8 & Supervised & 2.9& 1.0 & 1.3 & 2.2 & 0.7 & 1.0 \\
ViT-S/8 & Supervised & 0.1& 0.1 & 0.0 & 0.0 & 0.0 & 0.0 \\
ViT-B/8 & Supervised & 0.8 & 0.2 & 0.4 & 0.7 & 0.5 & 0.4 \\
\midrule
\rowcolor{lightgray!30}
\color{gray} \ours{}-B/8 & \color{gray} DINO & \color{gray}3.5 & \color{gray}1.1 & \color{gray}1.6 & \color{gray}2.4 & \color{gray}0.5 & \color{gray}1.0 \\
\rowcolor{lightgray!30}
\color{gray} ViT-S/8 & \color{gray} DINO & \color{gray}5.0 & \color{gray}2.0 & \color{gray}2.4 & \color{gray}4.0 & \color{gray}1.3 & \color{gray}1.7 \\
\rowcolor{lightgray!30}
\color{gray} ViT-B/8 & \color{gray} DINO & \color{gray}5.1 & \color{gray}2.3 & \color{gray}2.5 & \color{gray}4.1 & \color{gray}1.3 & \color{gray}1.8 \\
\bottomrule
\end{tabular}}
\label{tab: maskcut_main_result}
\end{table*}

\begin{figure}[t!]
    \centering
    \includegraphics[width=\textwidth]{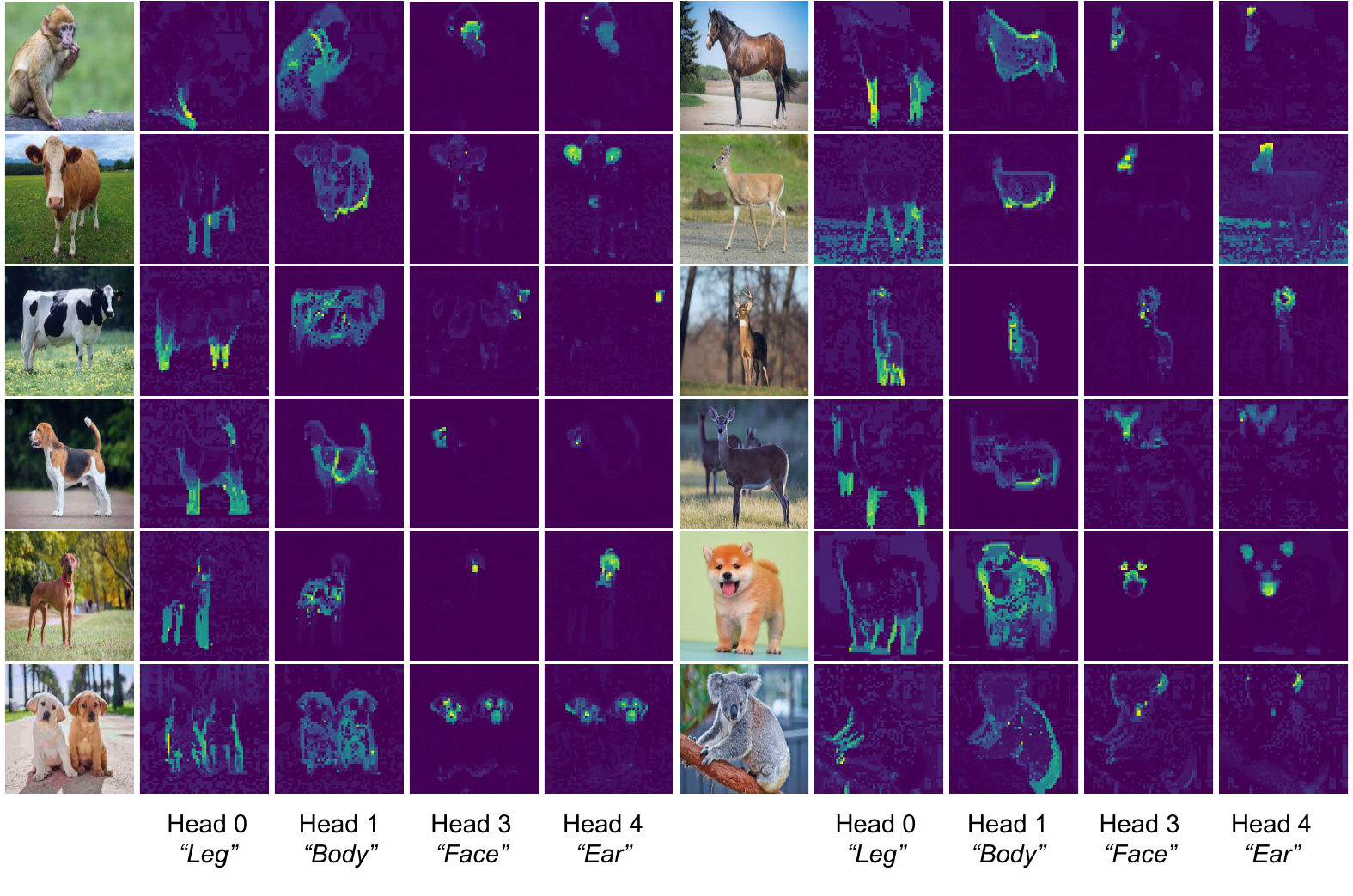}
    \caption{\small \textbf{Visualization of semantic heads.}  We forward a mini-batch of images through a supervised \ours{} and examine the attention maps from all the heads in the penultimate layer. 
    We visualize a selection of attention heads to show that certain heads convey specific semantic meaning, i.e. \textit{head 0} $\leftrightarrow$ \textit{``Legs''}, \textit{head 1} $\leftrightarrow$ \textit{``Body''}, \textit{head 3} $\leftrightarrow$ \textit{``Face''}, \textit{head 4} $\leftrightarrow$ \textit{``Ear''}. }
    \label{fig:semantic_heads}
\end{figure}

\subsubsection{Analysis of Segmentation in \ourscaps{}}
\paragraph{Segmentation emerges through minimalistic design.}
Our empirical results demonstrate that self-supervised learning, as well as the specialized design options in DINO~\citep{caron2021emerging} (e.g., momentum encoder, student and teacher networks, self-distillation, etc.) are not necessary for the emergence of segmentation. We contend that an equally-promising approach to promote segmentation properties in transformer is to \textit{design the transformer architecture with the structure of the input data in mind}. This finding of \ours{} represents the \textit{first supervised vision model with emergent segmentation properties}, and establishes white-box transformers as a promising direction for interpretable data-driven representation learning in foundation models. 

\paragraph{Identifying semantic properties of attention heads.}\label{subsec:semantic_meaning}
We are interested in capturing the semantic meaning of certain attention \textit{heads}; this is an important task for interpretability and is already studied for language transformers \citep{olsson2022context}.
Intuitively, each head captures certain features of the data. 
Given a \ours{} model, we first forward an input image (e.g.\ a horse image as in \Cref{fig:semantic_heads}) and select four attention heads which seem to have semantic meaning by manual inspection. After identifying the attention heads, we visualize the self-attention map of these heads on other input images. 
Interestingly, we find that each of the selected attention heads captures a different part of the object, and even a different semantic meaning. 
For example, the attention head displayed in the first column of \Cref{fig:semantic_heads} captures the legs of different animals, and the attention head displayed in the last column captures the ears and head. 
This parsing of the visual input into a part-whole hierarchy has been a fundamental goal of learning-based recognition architectures since deformable part models~\citep{Felzenszwalb2005-ma,Felzenszwalb2008-ui} and capsule networks~\citep{Hinton2011-yb,Sabour2017-xo}---strikingly, it emerges from the white-box design of \ours{} within our simple supervised training setup.%

\section{Conclusions and Open Directions}\label{sec:conclusion}

In this paper, we propose a new theoretical framework that allows us to derive a deep transformer-like network architecture, named \ours{}, as an incremental optimization scheme to learn compressed and sparse representation of the input data, measured by a principled objective for information gain: {\em the sparse rate reduction}. The so-derived and learned deep architectures are not only fully mathematically interpretable but also consistent on a layer-by-layer level with their design objective. 
{\color{revision} Despite being arguably the simplest among all possible designs, these networks already demonstrate strong performance -- slightly worse than existing black-box models -- on large-scale real-world (image or text) datasets and tasks (discriminative and generative), and in supervised, unsupervised, and self-supervised settings.}
{\color{revision} 
In fact, recent follow-up work~\citep{yang2024scaling} suggested that one can significantly improve the performance of \textsc{crate}-like white-box models with a light training recipe on vision tasks, and to a large extent close the gap between white-box models such as \textsc{crate} and the conventional Transformers on some tasks. 
Still, a gap persists. It would be interesting to investigate how to construct  white-box models that can achieve same level of performance as black-box models, but we leave this for future work.
}

We believe this work truly helps \textit{bridge the gap between theory and practice} of deep neural networks, as well as helps unify seemingly separate approaches to learning and representing data distributions through the perspective of data compression. These approaches include, but are not limited to: rate reduction, denoising-diffusion, and transformers. As we have seen, in these approaches, the role of each layer of the associated deep neural networks can be interpreted mathematically as operators to incrementally compress (denoise) and transform the data distribution according to its low-dimensional structures (modeled as Gaussian mixtures). In particular, our work provides a principled justification for slightly earlier empirical syntheses of some of these approaches, for example the recent success of Transformer-type architectures for denoising in the context of diffusion models \citep{Peebles2022DiT}.%

To a large extent, our work suggests that a universal way to effectively and efficiently learn a data distribution with intrinsically low-dimensional structures in a high-dimensional space is through {\em incremental compression}, as already stated as a slogan by~\citet{Wright-Ma-2022}:
\begin{quotation}
    \centering
    \textit{We compress to learn, and we learn to compress.}
\end{quotation}
This work and the previous work on ReduNet by \cite{chan2021redunet} together strongly suggest that various deep neural networks are simply means to an end---compression. Different optimization schemes for compression are manifested as different network architectures and operators, e.g.\ LeNet, LISTA, ResNet, ReduNet, Transformers, which are now  elucidated by CRATE. 

From a practical standpoint, it is clear that this new framework can now provide us with theoretical guidelines to design and justify new, potentially more powerful,  deep architectures for representation learning that are based on more advanced optimization techniques or strategies, instead of the basic gradient-based technique used to construct \ours{}. Also, in this work, we have only used \ours{} to learn deterministic autoencoding models. We believe that this framework can be extended to learn structured representations of the data distribution based on more advanced structured diffusion and denoising. This could potentially lead to more \textit{controllable and consistent} generative methods, since improving the consistency and efficiency of existing diffusion-based generative models remains a challenging open problem \citep{song2023consistency}. 

Furthermore, as suggested by~\citet{dai2022ctrl,ma2022principles}, for any system to learn a low-dimensional data distribution \textit{autonomously and continuously}, one needs to integrate the encoder and decoder not only as an autoencoder but as a \textit{closed-loop transcription}:
\begin{equation}
     \x \in \mathbb{R}^D \xrightarrow{\hspace{2mm} f(\x)\hspace{2mm}} \z  \in \mathbb{R}^d\xrightarrow{\hspace{2mm} g(\z) \hspace{2mm}} \widehat{\x} \in \mathbb{R}^D \xrightarrow{\hspace{2mm} f(\x)\hspace{2mm}} \widehat{\z} \in \mathbb{R}^d,
     \label{eqn:closed-loop}
\end{equation}
which allows the system to correct itself by minimizing the discrepancy between the internal representations $\z$ of the data and $\widehat{\z}$ of the generated approximation, without any external help (by human) to compare between the data $\x$ and $\widehat \x$. So far, the effectiveness of such a closed-loop transcription system has only been demonstrated with black-box deep architectures such as the ResNet~\citep{dai2022ctrl,tong2022incremental}. It would be possible now to build a closed-loop transcription with purely white-box encoder and decoder designs. This could ultimately enable us to develop \textit{complete}  learning systems that are capable of learning autonomously and continuously an internal data representation that is fully mathematically interpretable, controllable, and eventually self-consistent, or ``aligned'',  with the true external data distribution. The so-learned representation, just like our acquired memory, can support both discriminative and generative tasks, and serve both recognition and prediction purposes. 

Last but not least, we hope that this work provides an alternative viewpoint that may help clarify the ultimate capabilities of modern artificial intelligence (AI) systems, which are typically based on deep networks such as transformers. Just as with all other natural phenomena or technical innovations that were once ``black boxes'' to people, significant confusion and anxiety is arising in society about the potential or implications of emerging new AI systems, including the recent large language models (LLMs) such as GPT-4~\citep{openai2023gpt4}, large image generation models such as Midjourney and DALL-E 3~\citep{Betker2023improving}, and many other multi-modal large language models (MLLMs)~\citep{liu2023visual,gpt4v}. 
From the perspective of this work, in which we developed a clear understanding of transformer-type learning systems, these large models are unlikely to do anything beyond purely mechanical data compression (encoding) and interpolation (decoding). %
That is, this work suggests that regarding essence of these existing large AI models, however magical and mysterious they might appear to be, 
\begin{quotation}
    \centering
    \textit{compression is all there is.}
\end{quotation}
 This leaves us with a much more profound and challenging question: whether \textit{compression alone} is all one needs for a system, natural or artificial, to develop general intelligence or even sentience? We believe that compression, employed correctly, is merely the very first step of a long path towards general intelligence.

\acks{Yaodong Yu would like to thank Kwan Ho Ryan Chan for the valuable discussions they  had regarding visualizing tokens in vision transformers. Yaodong Yu and Yi Ma acknowledge support from the joint Simons Foundation-NSF DMS grant \#2031899, the ONR grant N00014-22-1-2102, and Yi Ma also acknowledges partial support from TBSI, InnoHK, and the University of Hong Kong. Other members of the team were partially supported by NSF 1704458, the Northrop Grumman Mission Systems Research in Applications for Learning Machines (REALM) initiative, NIH NIA 1R01AG067396, and ARO MURI W911NF-17-1-0304.}

\newpage

\appendix

\begin{center}
    \LARGE \textbf{Appendix}
\end{center}

\section{Technical Details for  \Cref{sec:encoding}}
\label{app:section-2}

\subsection{Companion to \Cref{sub:compression}} \label{app:proofs-compression}

We now give the derivation of the approximation alluded to in \Cref{sub:compression}. In the process,
we make some simple approximations and technical assumptions. The validity of these assumptions may be explored, and the approximations refined, altogether providing a more complex (and possibly more performant) resulting self-attention like operator. For the sake of technical clarity and simplicity in this work, we make perhaps the \textit{simplest possible choices}. As a result, we \textit{do not} claim that our network is optimally designed, but rather that the principles we develop in this work (compression, denoising, sparsification, unrolled optimization) can provide the backbone for far superior and more interpretable network architectures in the future on sundry tasks. As it is, with our straightforward, simple, and interpretable design, we still obtain meaningful conceptual results and very solid empirical performance.

Recall that \(\beta = p/(n\eps^{2})\).

\begin{approximation}\label{thm:grad_codingrate_approx} 
    Let \(\vZ \in \bR^{d \times n}\) have unit-norm columns, and \(\vU_{[K]} = (\vU_1, \dots, \vU_K) \) such that each \(\vU_{k} \in \bR^{d \times p}\) is an orthogonal matrix, the \((\vU_{k})_{k = 1}^{K}\) are incoherent, and the columns of \(\vZ\) approximately lie on \(\bigcup_{k = 1}^{K}\mathrm{Span}(\vU_{k})\). Let \(\kappa > 0\). Then
    \begin{equation}
        \vZ - \kappa \nabla_{\vZ}R^{c}(\vZ \mid \vU_{[K]}) \approx (1 - \kappa\beta)\vZ + \kappa\beta\MSSA{\vZ \mid \vU_{[K]}},
    \end{equation}
    where as in \Cref{sub:compression} we have
    \begin{align}
        \SSA{\vZ \mid \vU_{k}} 
        &= (\vU_{k}\adj\vZ)\softmax{(\vU_{k}\adj\vZ)\adj(\vU_{k}\adj\vZ)}, \\
        \MSSA{\vZ \mid \vU_{[K]}} 
        &= \beta\mat{\vU_{1}, \dots, \vU_{K}}\mat{\SSA{\vZ \mid \vU_{1}} \\ \vdots \\ \SSA{\vZ \mid \vU_{K}}},
    \end{align}
    where \(\softmax{\cdot}\) is the softmax operator (applied to each column of an input matrix), i.e.,
    \begin{align}
        \softmax{\vv} 
        &= \frac{1}{\sum_{i = 1}^{n}e^{v_{i}}}\mat{e^{v_{1}} \\ \vdots \\ e^{v_{n}}}, \\
        \softmax{\mat{\vv_{1}, \dots, \vv_{K}}} 
        &= \mat{\softmax{\vv_{1}}, \dots, \softmax{\vv_{K}}}.
    \end{align}
\end{approximation}

\begin{proof}
    According to \Cref{eq:rate-gradient}, the gradient $\nabla_{\vZ}{R}^c(\vZ \mid \vU_{[K]})$ is
    \begin{equation}
     \nabla_{\vZ}{R}^c(\vZ \mid \vU_{[K]}) 
         = \beta \sum_{k=1}^K \vU_k\vU_k\adj\vZ\bp{\I +
    \beta(\vU_k\adj\vZ)\adj(\vU_k\adj\vZ)}^{-1}.
    \end{equation}
    Notice that according to \citep{chan2021redunet}, the gradient is precisely the residual of a ridge regression for each (projected) token $\vU_k\adj\vz_{i}$ using other projected tokens \(\vU_{k}\adj\vz_{j}\) as the regressors, hence being the residual of an auto-regression. 
    
    However, as we have seen in the work of ReduNet \citep{chan2021redunet}, computing the inverse $\bp{\I +
    \beta(\vU_k^*\vZ)^*(\vU_k^*\vZ)}^{-1}$ can be expensive. Hence for computational efficiency, we may approximate it with the first order term of its von Neumann expansion: 
    \begin{align}
    \nabla_{\vZ}{R}^c(\vZ \mid \vU_{[K]}) 
        & = \beta \sum_{k=1}^K \vU_k\vU_k^*\vZ\Big({\I +
    \beta(\vU_k^*\vZ)^*(\vU_k^*\vZ)}\Big)^{-1} \\
     & \approx \beta \sum_{k=1}^K \vU_k\vU_k^*\vZ\Big({\I - 
    \beta(\vU_k^*\vZ)^*(\vU_k^*\vZ)}\Big) \\
        &=  \beta\sum_{k=1}^{K}\vU_k\Big({\vU^*_k\vZ - \beta\vU^*_k\vZ [(\vU_{k}\adj\vZ)\adj(\vU_{k}\adj\vZ)]}\Big)
        \label{eqn:approximation-1}
    \end{align}
    Notice that the term \((\vU_{k}\adj\vZ)\adj(\vU_{k}\adj\vZ)\) is the auto-correlation among the projected tokens. As the tokens $\vZ$ may be from different subspaces, we would prefer to use only tokens that belong to the \textit{same} subspace to regress and compress themselves. Hence we may convert the above correlation term into a subspace-membership indicator with a softmax operation, whence \eqref{eqn:approximation-1} becomes
    \begin{eqnarray}
        \nabla_{\vZ}{R}^c(\vZ \mid \vU_{[K]}) 
        &\approx & \beta\sum_{k=1}^{K}\vU_k\Big({\vU^*_k\vZ - \beta\vU^*_k\vZ [(\vU_{k}\adj\vZ)\adj(\vU_k\adj\vZ)]}\Big)\\
        &\approx & \beta\sum_{k=1}^{K}\vU_k\vU_k^*\vZ - \beta^{2}\sum_{k=1}^{K}\vU_k\Big(\vU_k^*\vZ \softmax{(\vU_{k}\adj\vZ)\adj(\vU_{k}\adj\vZ)}\Big)
        \label{eqn:approximation-2}
    \end{eqnarray}
    
    Then, we can rewrite the above approximation to the gradient of $R^c$ as: 
    \begin{align}\label{eq:appendix-ssa-derivation}
        \nabla_{\vZ}{R}^c(\vZ \mid \vU_{[K]}) 
        &\approx \beta\sum_{k=1}^{K}\vU_k\vU_k^*\vZ - \beta^{2}\sum_{k=1}^{K}\vU_k\bp{\vU_k^*\vZ \softmax{(\vU_{k}\adj\vZ)\adj(\vU_k\adj\vZ)}} \\
         &= \beta\sum_{k=1}^{K}\vU_k\vU_k^*\vZ - \beta^{2}\sum_{k=1}^{K}\vU_k\SSA{\vZ\mid\vU_k} \\
         &= \beta\underbrace{\bp{\sum_{k=1}^{K}\vU_k\vU_k^*}\vZ}_{\approx \vZ} -\beta^{2}\mat{\vU_1, \cdots, \vU_K}
         \mat{\SSA{\vZ\mid\vU_1} \\ \vdots \\  \SSA{\vZ\mid\vU_K} } \\
         &\approx\beta \vZ - \beta^{2}\mat{\vU_{1}, \cdots, \vU_{K}} \mat{\SSA{\vZ \mid \vU_{1}} \\ \vdots \\ \SSA{\vZ \mid \vU_{K}}} \\
         &= \beta \vZ - \beta \MSSA{\vZ \mid \vU_{[K]}}
    \end{align}
    Thus the gradient descent step with learning rate \(\kappa > 0\) gives
    {\color{revision}
    \begin{equation}\label{eq:appendix-mssa-block-variant}
        \vZ - \kappa\nabla_{\vZ}R^{c}(\vZ \mid \vU_{[K]}) \approx (1 - \kappa\beta)\vZ + \kappa\beta \MSSA{\vZ \mid \vU_{[K]}}.
    \end{equation}
    }
\end{proof}

\subsection{Companion to \Cref{sub:sparse}} \label{app:proofs-sparse}

In the main text \Cref{sub:sparse}, our connection between the second step of the alternating minimization and the LASSO optimization was high-level and heuristic. In some sense, the choice to pose the minimization step as a LASSO was a \textit{simple, reliable, and interpretable choice} which works well in practice, but is nonetheless not backed up by rigorous theoretical justification. 

{\color{revision}
In the following subsection, we provide a mathematical justification for a reformulation of the minimization step using a majorization-minimization framework. We further show that the associated unrolled optimization step bears a strong resemblance to the ISTA step.
This confirms our earlier discussion --- we took the \textit{simplest possible choice} in designing \ours{}, but by more rigorous derivation we can uncover \textbf{alternative operators---which differ from the ISTA operator defined in \eqref{eq:ista-block} in Section~\ref{subsec:mlp-block-arch-design}---}that nonetheless have the same conceptual function and may perform better in practice.
}

\paragraph{Assumptions.} In this section, we present a rigorous optimization
analysis of an incremental minimization approach to the objective
\Cref{eqn:sparsification}. We will show that under two simplifying assumptions,
namely
\begin{enumerate}
    \item The columns of $\vZ^{\ell + 1/2}$ are normalized, in the sense that
        $\diag((\vZ^{\ell+1/2})\adj \vZ^{\ell + 1/2}) = \bm{1}$;\footnote{This is
            a natural assumption in transformer-type architectures such as
            \ours{} due to the use of LayerNorm blocks---although these blocks
            (indeed, as we use them in \ours{}) include trainable mean and
            scale offsets as well as an additional mean subtraction operation
            \citep{phuong2022formal}, they are initialized to have zero mean and
            unit norm, hence this assumption corresponds to an analysis of the
        network at its initialization. }
    \item We have $d \geq n$,\footnote{This assumption is without loss
            of generality, as we will see in the analysis below. The reason is that
            $\vZ\adj \vZ$ and $\vZ \vZ\adj$ have the same nonzero eigenvalues
            regardless of the shape of $\vZ$, which implies that $\log\det(\vI + \alpha
            \vZ\adj \vZ) = \log\det(\vI + \alpha \vZ\vZ\adj)$. In particular,
            interpreting the norms appropriately (with a slight abuse of notation), we
            have $\varphi(\vZ) = \varphi(\vZ\adj)$, so for the purposes of analysis
            we can always proceed as though $\vZ$ is a tall matrix (as long as
            we do not use any special properties of  $\alpha$ in our
        derivation).} and the columns of $\vZ^{\ell
        + 1/2}$ are
        orthogonal, so that $(\vZ^{\ell+1/2})\adj \vZ^{\ell + 1/2} =
        \vI$.\footnote{This assumption is strictly stronger than the previous
            one, and strictly stronger than an assumption of incoherence on the
            columns. It corresponds to the representation $\vZ^{\ell+1/2}$ being non-collapsed, which we expect to hold at initialization due to the projections $\vU_{[K]}$ being random. %
        }
\end{enumerate}
the approach leads to an update iteration that is equal to a slightly
simplified version of the ISTA block \Cref{eq:ista-block}. We see this as a
justification for our derivation in \Cref{sub:sparse}, which obtained the ISTA
block by introducing an additional simplifying assumption on the distribution
of the data at layer $\ell$.

\paragraph{Analysis.} Following \Cref{eq:sparse-nonnegative}, we will consider
the natural relaxation of the $\ell^0$ ``norm'' to the $\ell^1$ norm, and
incorporate a nonnegativity constraint.  Consider the objective
\begin{equation}
    \varphi(\vZ) = \lambda \norm{\vZ}_1 + \chi_{\set{\vZ \geq \vZero}}(\vZ)
    - \underbrace{ \frac{1}{2} \log\det\left( \vI + \alpha \vZ\adj \vZ
    \right)}_{R(\vZ)},
\end{equation}
where $\vZ \in \bR^{d \times n}$ and $\alpha = d / (n \veps^2)$, and
$\chi_{\set{\vZ \geq \vZero}}$ denotes the characteristic function for the set
of elementwise-nonnegative matrices $\vZ$. As in
\Cref{app:proofs-compression}, we calculate
\begin{equation}
    \nabla_{\vZ} R(\vZ) = \alpha \vZ \left( \vI + \alpha \vZ\adj \vZ \right)\inv.
\end{equation}
We consider an incremental optimization scheme for the highly nonlinear and
nonconvex objective $\varphi$. Following \Cref{sub:compression}, we optimize
locally at a ``post-compression'' iterate $\vZ^{\ell + 1/2}$. We follow the
standard proximal majorize-minimize framework \citep{Wright-Ma-2022} for
incremental/local optimization: this begins with the second-order Taylor
expansion for the smooth part of $\varphi$ in a neighborhood of the
current iterate $\vZ^{\ell + 1/2}$:
\begin{equation}
    \begin{split}
        R(\vZ)
        =
        R(\vZ^{\ell + 1/2}) 
        &+ \ip*{\nabla_{\vZ} R(\vZ^{\ell + 1/2})}{\vZ - \vZ^{\ell + 1/2}} \\
        &+ \int_0^1 (1 - t) \ip*{\vZ - \vZ^{\ell+1/2}}{\nabla^2 R(\vZ_t)\left(
                \vZ - \vZ^{\ell+1/2}
            \right)
        }\ \mathrm{d}t,
    \end{split}
    \label{eq:R-quadratic-taylor}
\end{equation}
where for any $\vZ \in \bR^{d \times n}$, $\vZ_t = t \vZ^{\ell+1/2} + (1-t)
\vZ$. The proximal majorization-minimization approach alternates two steps to
minimize $\varphi$:
\begin{enumerate}
    \item First, use assumptions on $\vZ^{\ell + 1/2}$ to derive an upper bound
        on the operator norm of the Hessian $\nabla^2 R(\vZ)$ over the
        effective domain of the optimization problem. We will write $L$ for
        this (uniform) upper bound. This yields a quadratic upper bound for the
        smooth part of the objective $\varphi$.
    \item Then, alternately minimize the \textit{smooth part} of the quadratic
        upper bound as a function of $\vZ$, and take a \textit{proximal step}
        on the nonsmooth part. It can be shown \citep{Wright-Ma-2022} that
        corresponds to the iteration
        \begin{equation}
            \vZ^+ = \prox{\frac{\lambda}{L} (\norm{}_1 + \chi_{\set{\vZ \geq \vZero}} ) }\left(
                \vZ + \frac{1}{L} \nabla_{\vZ} R(\vZ)%
            \right)
            \label{eq:mm-iter}
        \end{equation}
        In the alternating minimization setting of this paper for optimizing
        \Cref{eq:sparse-rr}, we only take one such step, starting at
        $\vZ^{\ell+1/2}$.
\end{enumerate}
We will instantiate this program below, showing quantitative error bounds
related to our assumptions above as necessary. Rather than directly applying
the iteration \Cref{eq:mm-iter}, we will derive it below under our
aforementioned assumptions.

Starting at \Cref{eq:R-quadratic-taylor}, our first task is to upper bound the
quadratic residual. 
This corresponds to estimating
\begin{align}
    &\ip*{\vZ - \vZ^{\ell+1/2}}{\nabla^2 R(\vZ_t)\left(
            \vZ - \vZ^{\ell+1/2}
        \right)
    } \\
    &\qquad\leq
    \sup_{t \in [0,1]}
    \norm*{
        \nabla^2 R(\vZ_t)
    }_{\ell^2 \to \ell^2}
    \norm*{
            \vZ - \vZ^{\ell+1/2}
    }_{F}^2
\end{align}
with Cauchy-Schwarz. 
Using
\Cref{lem:logdet-hessian}, we can estimate the operator norm term in the
previous bound in terms of properties of $\vZ^{\ell + 1/2}$. We need to bound 
\begin{equation}
    \alpha \sup_{\norm{\vDelta}_{F} \leq 1}
    \norm*{
        \left(
            \vDelta
            - \alpha \vZ_t(\vI + \alpha \vZ_t\adj \vZ_t)\inv (\vZ_t\adj \vDelta +
            \vDelta\adj \vZ_t)
        \right)(\vI + \alpha \vZ_t\adj \vZ_t)\inv
    }_{F},
\end{equation}
and \Cref{lem:logdet-grad-lipschitz} gives that this term is no larger than
$9\alpha / 4$ for any $\vZ$ and any $t$.
With this
estimate and \Cref{eq:R-quadratic-taylor}, we have a quadratic upper bound for
$-R(\vZ)$:
\begin{equation}
    -R(\vZ)
    \leq
    -R(\vZ^{\ell + 1/2}) 
    + \ip*{-\nabla_{\vZ} R(\vZ^{\ell + 1/2})}{\vZ - \vZ^{\ell + 1/2}} \\
    + \frac{9\alpha}{8} \norm*{
        \vZ - \vZ^{\ell+1/2}
    }_{F}^2.
\end{equation}
Meanwhile, by our assumptions above, we have
\begin{equation}
    -\nabla_{\vZ} R(\vZ^{\ell + 1/2})
    = -\alpha \vZ^{\ell+1/2} \left( \vI + \alpha \vI \right)\inv
    = -\frac{\alpha}{1 + \alpha} \vZ^{\ell + 1/2}.
\end{equation}
We now minimize the preceding quadratic upper bound as a
function of $\vZ$. Differentiating, the minimizer $\vZ_{\mathrm{opt}}$ is
calculated as
\begin{equation}
    \vZ_{\mathrm{opt}}
    = \left(1 + \frac{4}{9(1 + \alpha)}\right) \vZ^{\ell+1/2},
\end{equation}
and it is well-known that the proximal operator of the sum of
$\chi_{\set{\vZ\geq \vZero}}$ and $\lambda \norm{}_{1}$ is simply the one-sided
soft-thresholding operator \citep{Wright-Ma-2022}
\begin{equation}
    \prox{\chi_{\set{\vZ\geq\vZero}} + \lambda \norm{}_1}\left( \vZ \right)
    =
    \max \set{
        \vZ - \lambda \One, \vZero
    },
\end{equation}
where the maximum is applied elementwise. As in \Cref{sub:sparse}, we may write
this elementwise maximum simply as $\operatorname{ReLU}$.  Thus, one step of
proximal majorization-minimization under our simplifying assumptions takes the form
\begin{equation}
    \vZ^{\ell+1}
    =
    \operatorname{ReLU}\left(
        \left(1 + \frac{4}{9(1 + \alpha)}\right) \vZ^{\ell+1/2}
        - \frac{4\lambda}{9\alpha}\One
    \right),
\end{equation}
Finally, we point out one additional elaboration which introduces the
dictionary $\vD$ that appears in the ISTA block in \Cref{sub:sparse}. 
Notice that for any orthogonal $\vD$, one has $R(\vD \vZ) = R(\vZ)$ for every
$\vZ$. This symmetry implies equivariance properties of $\nabla_{\vZ} R(\vZ)$
and $\nabla^2_{\vZ} R(\vZ)$: for every $\vZ$ and every $\vDelta$ and every
orthogonal $\vD$,
\begin{align}
    \vD \nabla_{\vZ} R(\vZ) &= \nabla_{\vZ} R(\vD \vZ), \\
    \ip{\vD \vDelta}{\nabla^2_{\vZ} R(\vZ) \left(\vD \vDelta\right)}
    &= 
    \ip{\vDelta}{\nabla^2_{\vZ} R(\vD \vZ) \left(\vDelta\right)}.
\end{align}
Hence the quadratic Taylor expansion \Cref{eq:R-quadratic-taylor} can be
written equivalently as
\begin{equation}
    \begin{split}
        R(\vZ)
        =
        R(\vD\adj \vZ^{\ell + 1/2}) 
        &+ \ip*{\nabla_{\vZ} R(\vD\adj \vZ^{\ell + 1/2})}{\vZ - 
        {\color{revision} \vD\adj \vZ^{\ell + 1/2}}} \\
        &+ \int_0^1 (1 - t) \ip*{\vZ - {\color{revision} \vD\adj\vZ^{\ell+1/2}}}{\nabla^2 R(\vD\adj\vZ_t)\left(
                \vZ - {\color{revision} \vD\adj \vZ^{\ell+1/2}}
            \right)
        }\ \mathrm{d}t,
    \end{split}
\end{equation}
for any orthogonal $\vD$. The significance of this is that we have obtained an
expression equivalent to \Cref{eq:R-quadratic-taylor}, but with
$\vZ^{\ell+1/2}$ replaced by $\vD\adj \vZ^{\ell + 1/2}$; moreover, because our
approximation arguments above are not affected by left-multiplication of
$\vZ^{\ell + 1/2}$ by an orthogonal matrix (this operation does not change the
norms of the columns of $\vZ^{\ell + 1/2}$, or their correlations, and hence the
matrix's incoherence), we can apply exactly the same line of reasoning above to
obtain that an equivalent proximal majorization-minimization iteration is given
by 
\begin{equation}\label{eq:prox_maj_min_iteration}
    \vZ^{\ell+1}
    =
    \operatorname{ReLU}\left(
        \left(1 + \frac{4}{9(1 + \alpha)}\right) \vD\adj \vZ^{\ell+1/2}
        - \frac{4\lambda}{9\alpha} \One
    \right),
\end{equation}
for any orthogonal dictionary $\vD$. This gives an update quite similar to the
ISTA block \Cref{eq:ista-block} in the case where the dictionary used in
\Cref{sub:sparse} is orthogonal, but without a skip connection. 
{\color{revision} The operator defined in \eqref{eq:prox_maj_min_iteration} is not exactly the same as the ISTA operator defined in \eqref{eq:ista-block} but exhibits substantial similarities.}

We thus obtain a natural
white-box version of this part of the architecture, along with the natural
interpretation \textit{that its purpose is to sparsify the compressed tokens
$\vZ^{\ell+1/2}$ {\color{revision} with respect to} a (learnable) dictionary}, which accords with recent
empirical studies \citep{Li2023-ig}.

\paragraph{Other architectures?} As we mentioned at the start of this section,
the preceding derivation is performed in the most elementary possible setting
in order to demonstrate the majorization-minimization approach for layer
design. More precise approximations or assumptions may lead to superior layer
designs that better optimize the target objective \Cref{eq:sparse-rr} (and in
particular \Cref{eqn:sparsification}). We mention two here:
\begin{enumerate}
    \item \textbf{Beyond exactly-incoherent features}: our derivations above assumed that the
        incoming representations $\vZ^{\ell+1/2}$ were already maximal for the
        expansion term $R$ in \Cref{eqn:sparsification}. It is desirable to obtain
        a `perturbative' derivation, which applies in cases where
        $\vZ^{\ell+1/2}$ is not fully orthogonal, but instead near-orthogonal,
        in particular \textit{incoherent} \citep{Wright-Ma-2022}. The
        derivations above can be adapted to this setting; the perturbation
        bounds become slightly more delicate, and the ultimate layer
        \Cref{eq:prox_maj_min_iteration} changes to involve additional
        normalization.
    \item \textbf{Beyond orthogonal dictionaries}: The symmetries of the
        expansion term $R$ in \Cref{eqn:sparsification} may be followed to lead to
        a pair of dictionaries $\vD$ and $\vD'$ and an objective that
        sparsifies $\vD \vZ \vD'$. This type of transformation is suggestive of
        popular architectures that mix over tokens
        \citep{Tolstikhin2021-yh,Trockman2022-po}, however we consider the
        simpler form $\vD \vZ$ in this work. In addition, we have
        focused for simplicity on orthogonal dictionaries $\vD$; as in the
        previous bullet, one may consider in a similar way dictionaries $\vD$
        which are complete and near-orthogonal. Adapting the derivation to
        \textit{overcomplete dictionaries} is an interesting future direction
        that we expect to improve the scalability of \ours{}; one avenue to
        achieve this could be increasing the number of projections $\vU_{[K]}$
        and their embedding dimensions.
\end{enumerate}

\subsubsection{Auxiliary Lemmas}

\begin{lemma}
    \label{lem:logdet-hessian}
    Consider the function
    \begin{equation}
        R(\vZ) = \frac{1}{2} \log \det \left( \vI + \alpha \vZ\adj \vZ \right),
    \end{equation}
    where $\alpha > 0$ is a constant. Then we have
    \begin{equation}
        \nabla_{\vZ} R(\vZ) = \alpha \vZ \left( \vI + \alpha \vZ\adj \vZ\right)
        \inv,
    \end{equation}
    and the Hessian operator $\nabla_{\vZ}^{2}R(\vZ) \colon \mathbb{R}^{d \times n} \to \mathbb{R}^{d \times n}$ satisfies that for any $\vDelta \in \mathbb{R}^{d \times n}$, 
    \begin{align}
        &\nabla^2_{\vZ} R(\vZ)\left( \vDelta \right) \\
        &=
        \alpha\vDelta \left( \vI + \alpha \vZ\adj \vZ \right)\inv
        - \alpha^2 \vZ 
        \left( \vI + \alpha \vZ\adj \vZ \right)\inv
        \left( \vZ\adj \vDelta + \vDelta\adj \vZ \right)
        \left( \vI + \alpha \vZ\adj \vZ \right)\inv.
    \end{align}
    \begin{proof}
        The gradient calculation follows from \citep{OriginalMCR2}, for example.
        For the Hessian, we use the usual approach to calculating derivatives:
        if $\vDelta$ is any matrix with the same shape as $\vZ$ and $t > 0$,
        \begin{equation}
            \nabla_{\vZ}^2 R(\vZ)\left(
                \vDelta
            \right) = 
            \dac\left[t\mapsto \nabla_{\vZ} R(\vZ + t
            \vDelta)\right],
        \end{equation}
        valid since $R$ is smooth. We have
        {\color{revision}
        \small
        \begin{align}
            &\nabla_{\vZ} R(\vZ + t \vDelta)\\
            =&
            \alpha(\vZ + t \vDelta) \left( \vI + \alpha (\vZ +
            t\vDelta)\adj(\vZ + t\vDelta) \right)\inv \\
            =&
            \alpha(\vZ + t \vDelta) \left( \vI + \alpha \vZ\adj \vZ
                + \alpha t\left[
                    \vZ\adj \vDelta + \vDelta\adj \vZ + t \vDelta\adj \vDelta
                \right]
            \right)\inv \\
            =&
            \alpha(\vZ + t \vDelta) \left( 
                \vI + 
                \alpha t
                \left(\vI + \alpha \vZ\adj \vZ\right)\inv
                \left[
                    \vZ\adj \vDelta + \vDelta\adj \vZ + t \vDelta\adj \vDelta
                \right]
            \right)\inv
            \left( \vI + \alpha \vZ\adj \vZ \right)\inv
            \\
            =&
            \alpha(\vZ + t \vDelta) \left( 
                \sum_{k = 0}^\infty
                (- \alpha t)^k
                \left(
                    \left(\vI + \alpha \vZ\adj \vZ\right)\inv
                    \left[
                        \vZ\adj \vDelta + \vDelta\adj \vZ + t \vDelta\adj \vDelta
                    \right]
                \right)^k
            \right)
            \left( \vI + \alpha \vZ\adj \vZ \right)\inv,
        \end{align}
        }

        \noindent
        where in the fourth line we require that $t$ is sufficiently close to
        $0$ in order to invoke the Neumann series. First, notice that the term
        involving $\vDelta\adj \vDelta$ does not play a role in the final
        expression: after we differentiate with respect to $t$ and take a limit
        $t \to 0$, terms arising due to differentiation of $t \mapsto t
        \vDelta\adj \vDelta$ go to zero, because whenever the summation index
        $k > 0$ we have a term $(-\alpha t)^k$ that goes to zero as $t \to 0$.
        We thus obtain with the product rule
        \begin{align}
            &\dac\left[t\mapsto \nabla_{\vZ} R(\vZ + t
            \vDelta)\right]\\
            =\,\, 
            &\alpha\vDelta \left( \vI + \alpha \vZ\adj \vZ \right)\inv 
             - \alpha^2 \vZ 
            \left( \vI + \alpha \vZ\adj \vZ \right)\inv
            \left( \vZ\adj \vDelta + \vDelta\adj \vZ \right)
            \left( \vI + \alpha \vZ\adj \vZ \right)\inv.
        \end{align}
    \end{proof}
\end{lemma}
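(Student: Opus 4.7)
The gradient formula is a standard computation: using Jacobi's formula $\nabla_{\vA} \log\det(\vA) = \vA^{-\top}$ together with the chain rule applied to $\vA(\vZ) = \vI + \alpha \vZ\adj \vZ$, or equivalently the identity $\log\det(\vI + \alpha \vZ\adj\vZ) = \log\det(\vI + \alpha \vZ\vZ\adj)$ and differentiating, one obtains $\nabla_{\vZ} R(\vZ) = \alpha \vZ(\vI + \alpha \vZ\adj\vZ)\inv$. I would either cite this or do the quick derivation from first principles.

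For the Hessian, my plan is to compute the directional derivative $\nabla^2_{\vZ} R(\vZ)(\vDelta) = \left. \tfrac{d}{dt} \nabla_{\vZ} R(\vZ + t\vDelta) \right|_{t=0}$, which is justified by smoothness of $R$ on the domain where $\vI + \alpha \vZ\adj\vZ$ is invertible (which is everywhere, since it is positive definite). Substituting $\vZ + t\vDelta$ into the gradient formula gives
\[
\nabla_{\vZ} R(\vZ + t\vDelta) = \alpha(\vZ + t\vDelta)\bigl(\vI + \alpha \vZ\adj\vZ + \alpha t \vM(t)\bigr)\inv,
\]
where $\vM(t) \doteq \vZ\adj\vDelta + \vDelta\adj\vZ + t\vDelta\adj\vDelta$.

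The key algebraic step is to factor out $(\vI + \alpha \vZ\adj\vZ)\inv$ on the right and apply the Neumann series expansion $(\vI + t\vB)\inv = \vI - t\vB + O(t^2)$ for $t$ sufficiently small, applied to $\vB = \alpha (\vI + \alpha \vZ\adj\vZ)\inv \vM(t)$. This yields
\[
\nabla_{\vZ} R(\vZ + t\vDelta) = \alpha(\vZ + t\vDelta)\bigl(\vI - \alpha t (\vI + \alpha \vZ\adj\vZ)\inv \vM(t) + O(t^2)\bigr)(\vI + \alpha \vZ\adj\vZ)\inv.
\]
Differentiating in $t$ at $t = 0$ and applying the product rule, the contribution from $\vDelta\adj\vDelta$ inside $\vM(t)$ vanishes because it is multiplied by $t$, and the two surviving terms come from (i) differentiating the prefactor $\vZ + t\vDelta$ (giving the $\alpha\vDelta(\vI+\alpha\vZ\adj\vZ)\inv$ term) and (ii) differentiating the Neumann correction evaluated at $t=0$ (giving the $-\alpha^2 \vZ(\vI+\alpha\vZ\adj\vZ)\inv(\vZ\adj\vDelta + \vDelta\adj\vZ)(\vI+\alpha\vZ\adj\vZ)\inv$ term).

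I don't expect a substantive obstacle: the argument is mostly bookkeeping. The only point requiring mild care is justifying the Neumann expansion uniformly in a neighborhood of $t=0$, which amounts to noting that $\|\alpha t (\vI+\alpha\vZ\adj\vZ)\inv \vM(t)\|_{\mathrm{op}} < 1$ for $|t|$ small enough since $\vM$ is continuous at $0$; this is standard. An alternative route, which I would mention as a sanity check, is to apply the resolvent identity $(\vA + t\vB)\inv - \vA\inv = -t\vA\inv \vB (\vA + t\vB)\inv$ directly to the matrix $\vA = \vI + \alpha \vZ\adj\vZ$ and $\vB = \alpha \vM(t)$, yielding the same leading-order term without explicitly invoking a series expansion.
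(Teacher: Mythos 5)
Your proposal is correct and follows essentially the same route as the paper's proof: substitute $\vZ + t\vDelta$ into the gradient formula, factor out $(\vI + \alpha\vZ\adj\vZ)\inv$, invoke the Neumann expansion, note the $\vDelta\adj\vDelta$ contribution vanishes at first order, and apply the product rule at $t=0$. The resolvent-identity alternative you mention is a reasonable variant of the same idea but not needed.
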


\begin{lemma}
    \label{lem:logdet-grad-lipschitz}
    One has
    \begin{equation}
        \sup_{\norm{\vDelta}_{F} \leq 1}
        \norm*{
            \left(
                \vDelta
                - \alpha \vZ_t(\vI + \alpha \vZ_t\adj \vZ_t)\inv (\vZ_t\adj \vDelta +
                \vDelta\adj \vZ_t)
            \right)(\vI + \alpha \vZ_t\adj \vZ_t)\inv
        }_{F}
        \leq
        \frac{9}{4}.
    \end{equation}
    \begin{proof}
        Fix $\vDelta$ satisfying $\norm{\vDelta}_{F}\leq 1$. By the
        triangle inequality,
        \begin{align}
            &\norm*{
                \left(
                    \vDelta
                    - \alpha \vZ_t(\vI + \alpha \vZ_t\adj \vZ_t)\inv (\vZ_t\adj \vDelta +
                    \vDelta\adj \vZ_t)
                \right)(\vI + \alpha \vZ_t\adj \vZ_t)\inv
            }_{F}
            \\
            &\leq
            \norm*{
                \vDelta (\vI + \alpha \vZ_t\adj \vZ_t)\inv
            }_{F}
            +
            \alpha \norm*{
                \vZ_t(\vI + \alpha \vZ_t\adj \vZ_t)\inv (\vZ_t\adj \vDelta +
                \vDelta\adj \vZ_t)
                (\vI + \alpha \vZ_t\adj \vZ_t)\inv
            }_{F}.
        \end{align}
        For the first term, we note that
        \begin{equation}
            \norm*{
                \vDelta (\vI + \alpha \vZ_t\adj \vZ_t)\inv
            }_{F}
            =
            \norm*{
                \left(
                    (\vI + \alpha \vZ_t\adj \vZ_t)\inv
                    \kron \vI
                \right) \vec(\vDelta)
            }_{F},
        \end{equation}
        and since $(\vI + \alpha \vZ_t\adj \vZ_t)\inv \preceq \vI$,
        we obtain from Cauchy-Schwarz\footnote{Recall that the eigenvalues of a Kronecker product
            of symmetric matrices are the tensor product of the eigenvalues
        (with multiplicity).}
        \begin{equation}
            \norm*{
                \vDelta (\vI + \alpha \vZ_t\adj \vZ_t)\inv
            }_{F}
            \leq \norm{\vDelta}_{F}.
        \end{equation}
        We can use a similar idea to control the second term. We have from the
        triangle inequality
        \begin{align}
            &\norm*{
                \vZ_t(\vI + \alpha \vZ_t\adj \vZ_t)\inv (\vZ_t\adj \vDelta +
                \vDelta\adj \vZ_t)
                (\vI + \alpha \vZ_t\adj \vZ_t)\inv
            }_{F} \\
            &\quad\leq
            \norm*{
                \vZ_t(\vI + \alpha \vZ_t\adj \vZ_t)\inv 
                \vZ_t\adj \vDelta
                (\vI + \alpha \vZ_t\adj \vZ_t)\inv
            }_{F} \\
            &\qquad+
            \norm*{
                (\vI + \alpha \vZ_t\adj \vZ_t)\inv \vZ_t\adj
                \vDelta 
                (\vI + \alpha \vZ_t\adj \vZ_t)\inv
                \vZ_t\adj
            }_{F}.
        \end{align}
        For the first term, we have
        \begin{align}
            &\norm*{
                \vZ_t(\vI + \alpha \vZ_t\adj \vZ_t)\inv 
                \vZ_t\adj \vDelta
                (\vI + \alpha \vZ_t\adj \vZ_t)\inv
            }_{F} \\
            &\quad=
            \norm*{
                \left(
                    (\vI + \alpha \vZ_t\adj \vZ_t)\inv
                    \kron
                    \vZ_t(\vI + \alpha \vZ_t\adj \vZ_t)\inv \vZ_t\adj
                \right)
                \vec(\vDelta)
            }_{F} \\
            &\quad\leq
            \sigma_{\max}\left(
                (\vI + \alpha \vZ_t\adj \vZ_t)\inv
            \right)
            \sigma_{\max}\left(
                \vZ_t(\vI + \alpha \vZ_t\adj \vZ_t)\inv \vZ_t\adj
            \right)
            \norm{\vDelta}_{F} \\
            &\quad\leq \frac{1}{\alpha} \norm{\vDelta}_{F}.
        \end{align}
        The last estimate follows from a computation using the SVD of $\vZ_t$.
        Meanwhile, we have for the second term by a similar argument (using the
        fact that the singular values of $\vA$ and $\vA\adj$ are identical for
        any matrix $\vA$)
        \begin{align}
            \norm*{
                (\vI + \alpha \vZ_t\adj \vZ_t)\inv \vZ_t\adj
                \vDelta 
                (\vI + \alpha \vZ_t\adj \vZ_t)\inv
                \vZ_t\adj
            }_{F}
            &\leq
            \sigma_{\max}\left( (\vI + \alpha \vZ_t\adj \vZ_t)\inv \vZ_t\adj
            \right)^2
            \norm{\vDelta}_{F} \\
            &\leq
            \frac{1}{4\alpha} \norm{\vDelta}_{F},
        \end{align}
        where once again the estimate follows from a computation involving the
        SVD of $\vZ_t$ (together with the fact that the function $\sigma
        \mapsto \sigma / (1 + \alpha \sigma^2)$ is bounded on $\sigma \geq 0$
        by $1/(2 \sqrt{\alpha})$). Putting it together, we have obtained
        \begin{equation}
            \norm*{
                \left(
                    \vDelta
                    - \alpha \vZ_t(\vI + \alpha \vZ_t\adj \vZ_t)\inv (\vZ_t\adj \vDelta +
                    \vDelta\adj \vZ_t)
                \right)(\vI + \alpha \vZ_t\adj \vZ_t)\inv
            }_{F}
            \leq \frac{9}{4} \norm{\vDelta}_{F},
        \end{equation}
        which gives the claim after taking suprema.

    \end{proof}
\end{lemma}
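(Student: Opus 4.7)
The plan is to reduce the bound to three scalar singular-value inequalities via the triangle inequality and the Kronecker-product vectorization identity $\vec(\vA\vX\vB) = (\vB^{\top} \kron \vA)\vec(\vX)$, after which the Frobenius norm of each term becomes $\|(\vB^{\top} \kron \vA)\vec(\vDelta)\|_{2}$ and is controlled by the operator norm of the Kronecker product, which equals the product of operator norms $\sigma_{\max}(\vA)\sigma_{\max}(\vB)$. Fix $\vDelta$ with $\|\vDelta\|_{F} \leq 1$ and split the expression by the triangle inequality into (i) the ``linear'' piece $\vDelta(\vI + \alpha\vZ_{t}\adj\vZ_{t})\inv$ and (ii) the ``quadratic'' piece $\alpha\vZ_{t}(\vI + \alpha\vZ_{t}\adj\vZ_{t})\inv(\vZ_{t}\adj\vDelta + \vDelta\adj\vZ_{t})(\vI + \alpha\vZ_{t}\adj\vZ_{t})\inv$, splitting the latter once more along the two summands $\vZ_{t}\adj\vDelta$ and $\vDelta\adj\vZ_{t}$.

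For piece (i), the bound $(\vI + \alpha\vZ_{t}\adj\vZ_{t})\inv \psdleq \vI$ together with the Kronecker vectorization gives $\|\vDelta(\vI + \alpha\vZ_{t}\adj\vZ_{t})\inv\|_{F} \leq \|\vDelta\|_{F}$. For piece (ii), I would write $\vZ_{t} = \vU\vSigma\vV\adj$ and diagonalize every factor of the form $h(\vZ_{t}\adj\vZ_{t})$ in the same eigenbasis. The two operator-norm computations that drive everything are the elementary scalar maxima
\[
\sup_{\sigma \geq 0}\frac{\sigma^{2}}{1 + \alpha\sigma^{2}} = \frac{1}{\alpha}, \qquad \sup_{\sigma \geq 0}\frac{\sigma}{1 + \alpha\sigma^{2}} = \frac{1}{2\sqrt{\alpha}},
\]
yielding $\sigma_{\max}(\vZ_{t}(\vI + \alpha\vZ_{t}\adj\vZ_{t})\inv\vZ_{t}\adj) \leq 1/\alpha$ and $\sigma_{\max}((\vI + \alpha\vZ_{t}\adj\vZ_{t})\inv\vZ_{t}\adj) \leq 1/(2\sqrt{\alpha})$. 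Applying these to the $\vZ_{t}\adj\vDelta$ subterm (using a Kronecker factor $(\vI + \alpha\vZ_{t}\adj\vZ_{t})\inv \kron \vZ_{t}(\vI + \alpha\vZ_{t}\adj\vZ_{t})\inv\vZ_{t}\adj$) gives an upper bound of $(1/\alpha)\|\vDelta\|_{F}$, and applying them to the $\vDelta\adj\vZ_{t}$ subterm (whose Kronecker factor contains two copies of $(\vI + \alpha\vZ_{t}\adj\vZ_{t})\inv\vZ_{t}\adj$) gives $(1/(4\alpha))\|\vDelta\|_{F}$.

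Assembling the three estimates and multiplying the quadratic piece by its $\alpha$ prefactor yields $1 + \alpha\bp{\tfrac{1}{\alpha} + \tfrac{1}{4\alpha}} = 1 + \tfrac{5}{4} = \tfrac{9}{4}$, which is uniform in $\vZ_{t}$; taking the supremum over $\|\vDelta\|_{F} \leq 1$ gives the claim. The main bookkeeping obstacle is being careful to track which side each factor multiplies $\vDelta$ on (so the Kronecker identity is applied with the correct transpose placement), especially in the $\vDelta\adj\vZ_{t}$ subterm where transposition inside the Frobenius norm is used to put the $\vDelta$ back on the right; beyond that, nothing in the argument uses any structural assumption on $\vZ_{t}$, which is what makes the $9/4$ bound completely uniform.
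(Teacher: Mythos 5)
Your proposal is correct and follows essentially the same route as the paper: triangle inequality to split off the identity term, Kronecker vectorization to convert Frobenius norms to operator-norm bounds, and the two scalar maxima $\sup_{\sigma}\sigma^{2}/(1+\alpha\sigma^{2})=1/\alpha$ and $\sup_{\sigma}\sigma/(1+\alpha\sigma^{2})=1/(2\sqrt{\alpha})$ to handle the two subterms of the quadratic piece, yielding $1+\alpha(1/\alpha+1/(4\alpha))=9/4$.
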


\section{Technical Details for \Cref{sec:autoencoding}}

\subsection{An Overview of Diffusion Processes}\label{app:diffusion}

In this section, we give an overview of the basics of time-reversible It\^{o} diffusion processes, the mathematical foundation for diffusion models. This is to make this paper more self-contained by providing knowledge about general diffusion processes that we will apply to our special models. The coverage adapts that of \cite{millet1989integration,Song2020-xo,karras2022elucidating}. 

Consider a generic It\^{o} diffusion process \((\vz(t))_{t \in [0, T]}\), where \(\vz(t)\) is an \(\R^{m}\)-valued random variable, given by the SDE
\begin{equation}\label{eq:forward_general_diffusion}
    \odif{\vz(t)} = b(\vz(t), t)\odif{t} + \Sigma(\vz(t), t)\odif{\vw(t)}, \qquad \vz(0) \sim P, \qquad \forall t \in [0, T]
\end{equation}
where \(\vw\) is a Brownian motion and \(P\) is some probability measure on \(\R^{m}\) (in this case representing the data distribution). Here the \textit{drift coefficient} \(b \colon \R^{m} \times \R \to \R^{m}\) and \textit{diffusion coefficient} \(\Sigma \colon \R^{m} \times \R \to \R^{m \times m}\) are functions. To make sense of \Cref{eq:forward_general_diffusion} and also verify the existence of strong (i.e., pathwise well-defined) solutions, we need some regularity on them, and we choose the following assumption:
\begin{enumerate}[label={A\arabic*.}, leftmargin=*]
    \item \(b\) and \(\Sigma\) have some spatial smoothness and do not grow too fast, i.e., there is a constant \(K \geq 0\) such that for all \(\vx, \wt{\vz} \in \R^{m}\) we have
    \begin{align}
        &\sup_{t \in [0, T]}\bs{\norm{\Sigma(\vx, t) - \Sigma(\wt{\vz}, t)}_{F} + \norm{b(\vx, t) - b(\wt{\vz}, t)}_{2}} \leq K\norm{\vx - \wt{\vz}}_{2} \\
        &\sup_{t \in [0, T]}\bs{\norm{\Sigma(\vx, t)}_{F} + \norm{b(\vx, t)}_{2}} \leq K(1 + \norm{\vx}_{2}).
    \end{align}
\end{enumerate}
In general, \(\vz(t)\) may not have a density w.r.t.~the Lebesgue measure on \(\R^{m}\). For example, suppose that \(P\) is supported on some low-dimensional linear subspace (or even a Dirac delta measure), and take \(\Sigma\) to be the orthoprojector onto this subspace. Then \(\vz(t)\) will be supported on this subspace for all \(t\) and thus not have a density w.r.t.~the Lebesgue measure. Thus, when further discussing processes of the type \Cref{eq:forward_general_diffusion}, we make the following assumption
\begin{enumerate}[resume, label={A\arabic*.}, leftmargin=*]
    \item \(\vz(t)\) has a probability density function \(p(\cdot, t)\) for all \(t > 0\). 
    
    This is guaranteed by either of the following conditions \citep{millet1989integration}:
    \begin{enumerate}[label={A2.\arabic*}]
        \item \(b\) and \(\Sigma\) are differentiable in \((\vx, t)\) and have H\"{o}lder-continuous derivatives, and \(P\) has a density w.r.t.~the Lebesgue measure;
        \item The event 
        \begin{equation}
            \{\text{\(\rank(\Sigma(\vz(s), s)) = m\) for all \(s\) in some neighborhood of \(0\)}\}
        \end{equation}
        happens \(P\)-almost surely.
    \end{enumerate}
\end{enumerate}

Define \(\Psi \colon \R^{m} \times \R \to \R^{m \times m}\) by
\begin{equation}
    \Psi(\vx, t) \doteq \Sigma(\vx, t)\Sigma(\vx, t)\adj.
\end{equation}
To discuss time-reversibility, we also need the following local integrability condition, which is another measure of sharp growth of the coefficients (or precisely their derivatives):
\begin{enumerate}[resume, label={A\arabic*.}, leftmargin=*]
    \item The functions \((\vx, t) \mapsto \nabla_{\vx} \cdot (\Psi(\vx, t)p(\vx, t))\) are integrable on sets of the form \(D \times [t_{0}, 1]\) for \(t_{0} > 0\) and \(D\) a bounded measurable subset of \(\R^{m}\):
    \begin{align}
        \int_{t_{0}}^{1}\int_{D}\norm{\nabla_{\vx} \cdot (\Psi(\vx, t)p(\vx, t))}_{2}\odif{\vx}\odif{t} < \infty.
    \end{align}
\end{enumerate}
To write the notation out more explicitly, 
\begin{align}
    &\nabla_{\vx} \cdot (\Psi(\vx, t)p(\vx, t)) = \mat{\nabla_{\vx} \cdot (\Psi^{1}(\vx, t)p(\vx, t)) \\ \vdots \\ \nabla_{\vx} \cdot (\Psi^{m}(\vx, t)p(\vx, t))}  \\
    \text{where} \qquad 
    &\nabla_{\vx} \cdot (\Psi^{i}(\vx, t)p(\vx, t)) = \sum_{j = 1}^{m}\pdv*{[\Psi^{ij}(\vx, t)p(\vx, t)]}{x_{j}}
\end{align}
where \(\Psi^{i}\) is the \(i\th\) row of \(\Psi\) transposed to a column, and \(\Psi^{ij}\) is the \((i, j)\th\) entry of \(\Psi\). Note that \cite{millet1989integration} phrases this in terms of an local integrability condition on each \(\abs{\nabla_{\vx} \cdot (\Psi^{i}(\vx, t)p(\vx, t))}\), which would naturally give a local integrability condition on \(\norm{\nabla_{\vx}\cdot (\Psi(\vx, t)p(\vx, t))}_{\infty}\). However, all norms on \(\R^{m}\) are equivalent, and so this leads to a local integrability condition for \(\norm{\nabla_{x} \cdot (\Psi(\vx, t)p(\vx, t))}_{2}\) as produced. Note that the assumptions do not guarantee that the involved derivatives exist, in which case they are taken in the distributional (e.g., weak) sense, whence they should exist \citep{millet1989integration}.

Under assumptions A1---A3, \cite{millet1989integration} guarantees the existence of another process \((\wt{\vz}(t))_{t \in [0, T]}\) such that the laws of \(\vz(t)\) and \(\wt{\vz}(T - t)\) are the same for all \(t \in [0, T]\). This process \((\wt{\vz}(t))_{t \in [0, T]}\) is called the \textit{time reversal} of \((\vz(t))_{t \in [0, T]}\), and is shown to have law given by 
\begin{equation}\label{eq:backward_general_diffusion}
    \odif{\wt{\vz}(t)} = b^{\leftarrow}(\wt{\vz}(t), t)\odif{t} + \Sigma^{\leftarrow}(\wt{\vz}(t), t)\odif{\vw^{\leftarrow}(t)}, \qquad \wt{\vz}(0) \sim p(\cdot, T), \qquad \forall t \in [0, T]
\end{equation}
where \(\vw^{\leftarrow}(t)\) is an independent Brownian motion and
\begin{align}
    b^{\leftarrow}(\vx, t)
    &= -b(\vx, T - t) + \frac{\nabla_{\vx}\cdot[\Psi(\vx, T - t)p(\vx, T - t)]}{p(\vx, T - t)} \\
    &= -b(\vx, T - t) + \nabla_{\vx} \cdot \Psi(\vx, T - t) + \Psi(\vx, T - t)[\nabla_{\vx}\log p(\vx, T - t)], \\
    \Sigma^{\leftarrow}(\vx, t)
    &= \Sigma(\vx, T - t).
\end{align}

We would next like to develop an ODE which transports the probability mass \(P\) in the same way as \Cref{eq:forward_general_diffusion} --- namely, find another process \((\ol{\vz}(t))_{t \in [0, T]}\) which has deterministic dynamics, yet has the same law as \((\vz(t))_{t \in [0, T]}\). \cite{Song2020-xo} looks at the Fokker-Planck equations (which can be defined, at least in a weak sense, under assumptions A1--A2) and manipulates them to get the following dynamics for \(\ol{\vz}(t)\):
\begin{align}\label{eq:general_diffusion_ode}
    \odif{\ol{\vz}(t)} 
    &= \ol{b}(\ol{\vz}(t), t)\odif{t}, \qquad \ol{\vz}(0) \sim P, \qquad \forall t \in [0, T], \\
    \text{where} \qquad \ol{b}(\vx, t)
    &= b(\vx, t) - \frac{1}{2}\cdot \frac{\nabla_{\vx} \cdot [\Psi(\vx, t)p(\vx, t)]}{p(\vx, t)} \\
    &= b(\vx, t) - \frac{1}{2}\nabla_{\vx}\cdot \Psi(\vx, t) - \frac{1}{2}\Psi(\vx, t)[\nabla_{x}\log p(\vx, t)].
\end{align}
Now to get a similar process for \(\wt{\vz}(t)\), namely a process \((\ol{\wt{\vz}}(t))_{t \in [0, T]}\) which evolves deterministically yet has the same law as \((\wt{\vz}(t))_{t \in [0, T]}\), we may either take the time reversal of \Cref{eq:general_diffusion_ode} or apply the Fokker-Planck method to \Cref{eq:backward_general_diffusion}, in both cases obtaining the same dynamics:
\begin{align}\label{eq:backward_general_diffusion_ode}
    \small
    \odif{\ol{\wt{\vz}}(t)}
    &= \ol{b}^{\leftarrow}(\ol{\wt{\vz}}(t), t)\odif{t}, \qquad \ol{\wt{\vz}}(0) \sim p(\cdot, T), \qquad \forall t \in [0, T], 
\end{align}
where
\begin{align}
    \ol{b}^{\leftarrow}(\vx, t)
    &= -\ol{b}(\vx, T - t) \\
    &= -b(\vx, T - t) + \frac{1}{2}\cdot \frac{\nabla_{\vx} \cdot [\Psi(\vx, T - t)p(\vx, T - t)]}{p(\vx, T - t)} \\
    &= -b(\vx, t) + \frac{1}{2}\nabla_{\vx}\cdot \Psi(\vx, T - t) + \frac{1}{2}\Psi(\vx, T - t)[\nabla_{\vx}\log p(\vx, T - t)].
\end{align}
The quantity \(\nabla_{\vx}\log p(\vx, t)\) is of central importance; it is denoted the \textit{score at time \(t\)}, and we use the notation \(s(\vx, t) \doteq \nabla_{x}\log p(\vx, t)\) for it. With this substitution, we have the following dynamics for our four processes:
\begin{align}
    \odif{\vz(t)} 
    &= b(\vz(t), t)\odif{t} + \Sigma(\vz(t), t)\odif{\vw(t)}, \quad \vz(0) \sim P \\
    \odif{\wt{\vz}(t)} 
    &= [-b(\wt{\vz}(t), T - t) + \nabla_{\vx} \cdot \Psi(\wt{\vz}(t), T - t) + \Psi(\wt{\vz}(t), T - t)s(\wt{\vz}(t), T - t)]\odif{t} \\
    &\qquad + \Sigma(\wt{\vz}(t), T - t)\odif{\vw^{\leftarrow}(t)}, \quad \wt{\vz}(0) \sim p(\cdot, T) \\
    \odif{\ol{\vz}(t)}
    &= \bs{b(\ol{\vz}(t), t) - \frac{1}{2}\nabla_{\vx}\cdot \Psi(\ol{\vz}(t), t) - \frac{1}{2}\Psi(\ol{\vz}(t), t)s(\ol{\vz}(t), t)}\odif{t}, \quad \ol{\vz}(0) \sim P \\
    \odif{\ol{\wt{\vz}}(t)}
    &= \bigg[-b(\ol{\wt{\vz}}(t), T - t) + \frac{1}{2}\nabla_{\vx}\cdot \Psi(\ol{\wt{\vz}}(t), T - t) \\
    &\qquad + \frac{1}{2}\Psi(\ol{\wt{\vz}}(t), T - t)s(\ol{\wt{\vz}}(t), T - t)\bigg]\odif{t}, \quad \ol{\wt{\vz}}(0) \sim p(\cdot, T).
\end{align}

In practice, one fits an estimator for \(s(\cdot, \cdot)\) and estimates \(p(\cdot, T)\) and runs a discretization of either \Cref{eq:backward_general_diffusion} or \Cref{eq:backward_general_diffusion_ode} to sample approximately from \(P\). One common instantiation used in diffusion models \citep{karras2022elucidating} is the so-called \textit{variance-exploding} diffusion process, which has the coefficient settings
\begin{equation}
    b(\vx, t) = 0, \qquad \Sigma(\vx, t) = \sqrt{2}\vI
\end{equation}
which implies that
\begin{equation}
    \Psi(\vx, t) = 2\vI.
\end{equation}
This means that the four specified processes are of the form
\begin{align}
    \odif{\vz(t)} 
    &= \sqrt{2}\odif{\vw(t)}, \quad \vz(0) \sim P \\
    \odif{\wt{\vz}(t)} 
    &= 2s(\wt{\vz}(t), T - t)\odif{t} + \sqrt{2}\odif{\vw^{\leftarrow}(t)}, \quad \wt{\vz}(0) \sim p(\cdot, T) \\
    \odif{\ol{\vz}(t)}
    &= -s(\ol{\vz}(t), t)\odif{t}, \quad \ol{\vz}(0) \sim P \\
    \odif{\ol{\wt{\vz}}(t)}
    &= s(\ol{\wt{\vz}}(t), T - t), \quad \ol{\wt{\vz}}(0) \sim p(\cdot, T).
\end{align}
Notice that the determinstic flows are actually gradient flows on the score, which concretely reveals a connection between sampling and optimization, and thus between diffusion models (precisely those which use the probability flow ODE to sample) and unrolled optimization networks.

In this context, we can also establish the connection between diffusion networks and iterative denoising. In the variance-exploding setting, we have
\begin{equation}
    \vz(t) \sim \mathcal{N}(\vz(0), 2t\vI),
\end{equation}
which can be easily computed using results from, e.g., \cite{sarkka2019applied}. Thus \(\vz(t)\) is a noisy version of \(\vz(0)\), with noise level increasing monotonically with \(t\), and sampling \(\vz(0)\) from \(\vz(t)\) conceptually removes this noise. Concretely, \textit{Tweedie's formula} \citep{efron2011tweedie} says that the optimal denoising function \(\E{\vz(0) \mid \vz(t)}\) has a simple form in terms of the score function:
\begin{equation}
    \E{\vz(0) \mid \vz(t)} = \vz(t) + 2t\cdot s(\vz(t), t).
\end{equation}
In other words, the score function \(s\) points from the current iterate \(\vz(t)\) to the value of the optimal denoising function, so it is a negative multiple of the conditionally-expected noise. Following the score by (stochastic) gradient flow or its discretization is thus equivalent to iterative denoising.

\subsection{Companion to \Cref{sub:denoising}} \label{app:proofs-denoising}

Here we produce the approximation result alluded to in \Cref{sub:denoising}.
To simplify the proofs, we use the following notation correspondences: \(\x \mapsto \z^{\ell}\), \(\z \mapsto \z_{\natural}^{\ell}\), and \(\sigma \mapsto \sigma^{\ell}\).

\begin{approximation}\label{thm:opt_denoiser_multi_subspaces}
    Suppose that \(\vz\) follows the low-dimensional Gaussian mixture model \Cref{model:gaussian_tokens} with respect to subspaces \(\vU_{[K]}\), so that \(\vx = \vz + \sigma \vw\), where \(\vw\) is a standard Gaussian vector independent of \(\vz\), follows \Cref{model:gaussian_tokens_noise}. That is, for some random index \(s \sim \pi\) where \(\pi\) is a distribution on \([K]\) and \(s\) is chosen independently of all other randomness, we have
    \begin{equation}
        \vz = \vU_{s}\valpha,
    \end{equation}
    where \(\valpha\) is a zero-mean Gaussian vector with diagonal covariance \(\vLambda\). For each \(k \in [K]\), define 
    \begin{equation}
        \vSigma_{i} \doteq \operatorname{Cov}[\vz \mid s = k] = \vU_{k}\vLambda\vU_{k}\adj.
    \end{equation}
    Assume (for the sake of normalization) that 
    \begin{equation}
        \frac{\pi_{i}}{\sqrt{\det(\vSigma_{i} + \sigma^{2}\vI)}} = \frac{\pi_{j}}{\sqrt{\det(\vSigma_{j} + \sigma^{2}\vI)}}, \qquad \text{for all}\ 1 \leq i \leq j \leq K.
    \end{equation}
    Then we have
    \begin{equation}
        \mathbb{E}[\z \mid \x] \approx \mat{\vU_{1}, \dots, \vU_{K}}\left[\diaglr{\softmax{\frac{1}{2\sigma^{2}}\mat{\norm{\vU_{1}\adj\x}_{2}^{2} \\ \vdots \\ \norm{\vU_{K}\adj\x}_{2}^{2}}}} \otimes \I_{p}\right]\mat{\vU_{1}\adj\x \\ \vdots \\ \vU_{K}\adj\x},
    \end{equation}
    where \(\otimes\) denotes the Kronecker product, i.e., the block matrix defined by
    \begin{equation}
        \bm{A} \otimes \bm{B} = \mat{A_{11}\bm{B} & \cdots & A_{1n}\bm{B} \\ \vdots & \ddots & \vdots \\ A_{m1}\bm{B} & \cdots & A_{mn}\bm{B}}.
    \end{equation}
\end{approximation}

\begin{proof}
    Define \(\vM_{k} \doteq (\vSigma_{k} + \sigma^{2}\vI)^{-1/2}\). From \Cref{thm:score_multi_subspaces}, we have
    \begin{align}
        \nabla_{\x}\log q(\x)
        &= -\sum_{k = 1}^{K}\e_{k}\adj\softmax{-\frac{1}{2}\mat{\norm{\vM_{1}\adj\x}_{2}^{2} \\ \vdots \\ \norm{\vM_{K}\adj\x}_{2}^{2}}}\vM_{k}\vM_{k}\adj\x \\
        &= -\sum_{k = 1}^{K}\e_{k}\adj\softmax{-\frac{1}{2\sigma^{2}}\mat{\norm{\sigma \vM_{1}\adj\x}_{2}^{2} \\ \vdots \\ \norm{\sigma \vM_{K}\adj\x}_{2}^{2}}}\vM_{k}\vM_{k}\adj\x \\
        &= -\sum_{k = 1}^{K}\e_{k}\adj\softmax{\frac{1}{2\sigma^{2}}\mat{\norm{\x}_{2}^{2} - \norm{\sigma \vM_{1}\adj\x}_{2}^{2} \\ \vdots \\ \norm{\x}_{2}^{2} - \norm{\sigma \vM_{K}\adj\x}_{2}^{2}}}\vM_{k}\vM_{k}\adj\x.
    \end{align}
    Now define \(\P_{k} \doteq \I_{d} - \sigma \M_{k}\), and let \(\vU_{k}^{\perp} \in \bR^{d \times (d - p)}\) be an orthogonal complement of \(\vU_{k}\). Then we have
    \begin{align}
        \P_{k}
        &= \I_{d} - \sigma \M_{k} \\
        &= \I_{d} - \sigma \left(\vSigma_{k} + \sigma^{2}\I_{d}\right)^{-1/2} \\
        &= \I_{d} - \sigma \left(\mat{\vU_{k} & \vU_{k}^{\perp}}\mat{\vLambda & \vZero \\ \vZero & \vZero}\mat{\vU_{k}\adj \\ (\vU_{k}^{\perp})\adj} + \sigma^{2}\I_{d}\right)^{-1/2} \\ 
        &= \I_{d} - \sigma \left(\mat{\vU_{k} & \vU_{k}^{\perp}}\mat{\vLambda + \sigma^{2}\I_{p} & \vZero \\ \vZero & \sigma^{2}\I_{d - p}}\mat{\vU_{k}\adj \\ (\vU_{k}^{\perp})\adj}\right)^{-1/2} \\ 
        &= \I_{d} - \mat{\vU_{k} & \vU_{k}^{\perp}} \mat{\sigma(\vLambda + \sigma^{2}\I_{p})^{-1/2} & \vZero \\ \vZero & \sigma \cdot (\sigma^{2})^{-1/2}\I_{d - p}}\mat{\vU_{k}\adj \\ (\vU_{k}^{\perp})\adj} \\ 
        &= \I_{d} - \mat{\vU_{k} & \vU_{k}^{\perp}} \mat{(\sigma^{-2}\vLambda + \I_{p})^{-1/2} & \vZero \\ \vZero & \I_{d - p}}\mat{\vU_{k}\adj \\ (\vU_{k}^{\perp})\adj} \\ 
        &= \mat{\vU_{k} & \vU_{k}^{\perp}} \mat{\I_{p} - (\sigma^{-2}\vLambda + \I_{p})^{-1/2} & \vZero \\ \vZero & \vZero}\mat{\vU_{k}\adj \\ (\vU_{k}^{\perp})\adj} \\
        &\approx \mat{\vU_{k} & \vU_{k}^{\perp}} \mat{\I_{p} & \vZero \\ \vZero & \vZero}\mat{\vU_{k}\adj \\ (\vU_{k}^{\perp})\adj} \\
        &= \vU_{k}\vU_{k}\adj.
    \end{align}
    Thus \(\vP_{k}\) is approximately a projection when \(\sigma\) is small. Under this algebraic relation, we have
    \begin{align}
        &\nabla_{\x}\log q(\x) \\
        &= -\sum_{k = 1}^{K}\e_{k}\adj\softmax{\frac{1}{2\sigma^{2}}\mat{\norm{\x}_{2}^{2} - \norm{\sigma \vM_{1}\adj\x}_{2}^{2} \\ \vdots \\ \norm{\x}_{2}^{2} - \norm{\sigma \vM_{K}\adj\x}_{2}^{2}}}\vM_{k}\vM_{k}\adj\x \\
        &= -\frac{1}{\sigma^{2}}\sum_{k = 1}^{K}\e_{k}\adj\softmax{\frac{1}{2\sigma^{2}}\mat{\norm{\x}_{2}^{2} - \norm{(\I_{d} - \vP_{1})\adj\x}_{2}^{2} \\ \vdots \\ \norm{\x}_{2}^{2} - \norm{(\I_{d} - \vP_{K})\adj\x}_{2}^{2}}}(\I_{d} - \vP_{k})(\I_{d} - \vP_{k})\adj\x \\
        &\approx -\frac{1}{\sigma^{2}}\sum_{k = 1}^{K}\e_{k}\adj\softmax{\frac{1}{2\sigma^{2}}\mat{\norm{\vP_{1}\adj\x}_{2}^{2} \\ \vdots \\ \norm{\vP_{K}\adj\x}_{2}^{2}}}(\I_{d} - \vP_{k})(\I_{d} - \vP_{k})\adj\x \\
        &\approx -\frac{1}{\sigma^{2}}\sum_{k = 1}^{K}\e_{k}\adj\softmax{\frac{1}{2\sigma^{2}}\mat{\norm{\P_{1}\adj\x}_{2}^{2} \\ \vdots \\ \norm{\P_{K}\adj\x}_{2}^{2}}}(\I_{d} - \P_{k})^{*}\x \\
        &= -\frac{\x}{\sigma^{2}}\sum_{k = 1}^{K}\e_{k}\adj\softmax{\frac{1}{2\sigma^{2}}\mat{\norm{\P_{1}\adj\x}_{2}^{2} \\ \vdots \\ \norm{\P_{K}\adj\x}_{2}^{2}}} + \frac{1}{\sigma^{2}}\sum_{k = 1}^{K}\e_{k}\adj\softmax{\frac{1}{2\sigma^{2}}\mat{\norm{\P_{1}\adj\x}_{2}^{2} \\ \vdots \\ \norm{\P_{K}\adj\x}_{2}^{2}}}\P_{k}\adj\x \\
        &= -\frac{1}{\sigma^{2}}\x + \frac{1}{\sigma^{2}}\sum_{k = 1}^{K}\e_{k}\adj\softmax{\frac{1}{2\sigma^{2}}\mat{\norm{\P_{1}\adj\x}_{2}^{2} \\ \vdots \\ \norm{\P_{K}\adj\x}_{2}^{2}}}\P_{k}\adj\x \\
        &\approx -\frac{1}{\sigma^{2}}\x + \frac{1}{\sigma^{2}}\sum_{k = 1}^{K}\e_{k}\adj\softmax{\frac{1}{2\sigma^{2}}\mat{\norm{\vU_{1}\adj\x}_{2}^{2} \\ \vdots \\ \norm{\vU_{K}\adj\x}_{2}^{2}}}\vU_{k}\vU_{k}\adj\x \\
        &= -\frac{1}{\sigma^{2}}\x + \frac{1}{\sigma^{2}}\mat{\vU_{1}, \cdots, \vU_{K}}\left[\diaglr{\softmax{\frac{1}{2\sigma^{2}}\mat{\norm{\vU_{1}\adj\x}_{2}^{2} \\ \vdots \\ \norm{\vU_{K}\adj\x}_{2}^{2}}}} \otimes \I_{p}\right]\mat{\vU_{1}\adj\x \\ \vdots \\ \vU_{K}\adj\x}.
    \end{align}
    Plugging this into Tweedie's formula, we have
    \begin{equation}
        \mathbb{E}[\z \mid \x] \approx \mat{\vU_{1}, \cdots, \vU_{K}}\left[\diaglr{\softmax{\frac{1}{2\sigma^{2}}\mat{\norm{\vU_{1}\adj\x}_{2}^{2} \\ \vdots \\ \norm{\vU_{K}\adj\x}_{2}^{2}}}} \otimes \I_{p}\right]\mat{\vU_{1}\adj\x \\ \vdots \\ \vU_{K}\adj\x}.
    \end{equation}
\end{proof}

\begin{remark}
   Although \Cref{thm:opt_denoiser_multi_subspaces} is stated as an approximation rather than as a proposition, we believe it should be possible without too much extra work to convert it into a statement of asymptotic equivalence as $\sigma \to 0$ (in particular, holding for $\sigma$ below the smallest (nonzero) eigenvalue of $\vLambda$). Most approximations taken in the derivation of \Cref{thm:opt_denoiser_multi_subspaces} can immediately be turned into asymptotic claims; the only slightly delicate point is treating the softmax, which can be accomplished using standard ``high temperature'' convergence behavior of the softmax function
   \citep{Gao2017-sd}
   (in particular, as $\sigma \to 0$ in our expressions, the softmax concentrates on the ``best head''). 
\end{remark}

\subsubsection{Auxiliary Lemmas}

\begin{proposition}\label{thm:score_multi_subspaces}
    Suppose that \(\vz\) follows the low-dimensional Gaussian mixture model \Cref{model:gaussian_tokens} with respect to subspaces \(\vU_{[K]}\), so that \(\vx = \vz + \sigma \vw\), where \(\vw\) is a standard Gaussian vector independent of \(\vz\), follows \Cref{model:gaussian_tokens_noise}. That is, for some random index \(s \sim \pi\) where \(\pi\) is a distribution on \([K]\) and \(s\) is chosen independently of all other randomness, we have
    \begin{equation}
        \vz = \vU_{s}\valpha,
    \end{equation}
    where \(\valpha\) is a zero-mean Gaussian vector with diagonal covariance \(\vLambda\). For each \(k \in [K]\), define 
    \begin{equation}
        \vSigma_{i} \doteq \operatorname{Cov}[\vz \mid s = k] = \vU_{k}\vLambda\vU_{k}\adj, \qquad \vM_{k} \doteq (\vSigma_{k} + \sigma^{2}\vI)^{-1/2}.
    \end{equation}
    Assume (for the sake of normalization) that 
    \begin{equation}
        \pi_{i}\det(\vM_{i}) = \pi_{j}\det(\vM_{j}), \qquad \text{for all}\ 1 \leq i \leq j \leq K.
    \end{equation}
    Then, letting \(\vx \mapsto q(\vx)\) be the density of \(\vx\), we have
    \begin{align}
        &\nabla_{\x}\log q(\x) \\
        &= -\mat{\M_{1}, \cdots, \M_{K}}\left[\diaglr{\softmax{-\frac{1}{2}\mat{\norm{\M_{1}\adj\x}_{2}^{2} \\ \vdots \\ \norm{\M_{K}\adj\x}_{2}^{2}}}} \otimes \I_{d}\right]\mat{\M_{1}\adj\x \\ \vdots \\ \M_{K}\adj\x}.
    \end{align}
\end{proposition}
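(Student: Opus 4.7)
The plan is to compute $\nabla_{\vx}\log q(\vx)$ directly from the mixture-of-Gaussians density, then repackage the resulting sum into the claimed block/Kronecker-product form. Since $\vx = \vz + \sigma\vw$ and $\vz \mid s = k \sim \sN(\vZero, \vSigma_k)$ with $\vw$ independent standard Gaussian, the conditional law $\vx \mid s = k$ is $\sN(\vZero, \vSigma_k + \sigma^2 \vI)$, and so
\begin{equation*}
    q(\vx) = \sum_{k=1}^K \frac{\pi_k}{(2\pi)^{d/2}\sqrt{\det(\vSigma_k + \sigma^2\vI)}} \exp\left(-\tfrac{1}{2}\vx\adj(\vSigma_k + \sigma^2\vI)^{-1}\vx\right).
\end{equation*}

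First I would rewrite this using $\vM_k \doteq (\vSigma_k + \sigma^2\vI)^{-1/2}$, which gives $\det(\vSigma_k + \sigma^2\vI)^{-1/2} = \det(\vM_k)$ and $\vx\adj(\vSigma_k + \sigma^2\vI)^{-1}\vx = \|\vM_k\adj\vx\|_2^2$. The normalization hypothesis $\pi_k \det(\vM_k) \equiv c$ (constant in $k$) then lets me pull the prefactor out of the sum, leaving
\begin{equation*}
    q(\vx) = \frac{c}{(2\pi)^{d/2}} \sum_{k=1}^K \exp\bigl(-\tfrac{1}{2}\|\vM_k\adj\vx\|_2^2\bigr).
\end{equation*}

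Next, a straightforward chain-rule differentiation gives $\nabla_{\vx}\exp(-\tfrac{1}{2}\|\vM_k\adj\vx\|_2^2) = -\vM_k\vM_k\adj\vx \cdot \exp(-\tfrac{1}{2}\|\vM_k\adj\vx\|_2^2)$, so dividing $\nabla_{\vx}q(\vx)$ by $q(\vx)$ produces a convex combination whose weights are exactly the softmax of the vector $-\tfrac{1}{2}(\|\vM_k\adj\vx\|_2^2)_{k=1}^K$:
\begin{equation*}
    \nabla_{\vx}\log q(\vx) = -\sum_{k=1}^K \left[\softmax{-\tfrac{1}{2}\mat{\|\vM_1\adj\vx\|_2^2 \\ \vdots \\ \|\vM_K\adj\vx\|_2^2}}\right]_k \vM_k\vM_k\adj\vx.
\end{equation*}

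Finally I would verify that the stated block/Kronecker expression expands to this same sum: the block matrix $\diaglr{\vw}\otimes\vI_d$ (with $\vw$ the softmax weight vector) is block-diagonal with blocks $w_k \vI_d$, so its right-multiplication against the stacked vector $(\vM_k\adj\vx)_{k=1}^K$ produces $(w_k \vM_k\adj\vx)_{k=1}^K$, and left-multiplication by $\mat{\vM_1,\dots,\vM_K}$ then yields exactly $\sum_k w_k \vM_k\vM_k\adj\vx$. There is no real obstacle here—the whole argument is a bookkeeping exercise, and the only slightly delicate point is making sure the normalization assumption is used correctly to cancel the mixture weights so that the softmax (rather than a more complicated weighted combination) emerges; everything else is direct gradient computation and rewriting in tensor-product notation.
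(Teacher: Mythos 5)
Your proof is correct and follows essentially the same route as the paper's: both compute the log-gradient of the Gaussian mixture density, invoke the normalization hypothesis $\pi_k\det(\vM_k)\equiv\text{const}$ to turn the ratio into a softmax-weighted combination of $\vM_k\vM_k\adj\vx$ terms, and then unpack the block-diagonal Kronecker expression to match. The only cosmetic difference is that you pull the common prefactor out of the sum before differentiating, whereas the paper differentiates first and cancels afterward---the underlying argument is identical.
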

\begin{proof}
    By the law of total probability, we have
    \begin{align}
        \nabla_{\x}\log q(\x) 
        &= \nabla_{\x} \log \sum_{k = 1}^{K}q(\x \mid k)\pi_{k} \\
        &= \frac{\sum_{k = 1}^{K}\pi_{k}\nabla_{\x}q(\x \mid k)}{\sum_{k = 1}^{K}q(\x \mid k)\pi_{k}}
    \end{align}
    where \(q(\x \mid k)\) is the conditional density of \(\x\) given the event \(\{s = k\}\). To compute this quantity, note that \textit{conditional on the value of \(s\)}, we have
    \begin{equation}
        \x = \vz_{s} + \sigma \w \sim \mathcal{N}(\vZero, \vSigma_{s} + \sigma^{2}\I_{d}).
    \end{equation}
    Thus we have
    \begin{equation}
        q(\x \mid k) = \frac{1}{\sqrt{(2\pi)^{d}\det(\vSigma_{k} + \sigma^{2}\I_{d})}}\explr{-\frac{1}{2}\x\adj(\vSigma_{k} + \sigma^{2}\I_{d})^{-1}\x},
    \end{equation}
    This gives
    \begin{equation}
        \nabla_{\x}q(\x \mid k) = -q(\x \mid k) \cdot (\vSigma_{k} + \sigma^{2}\I_{d})^{-1}\x.
    \end{equation}
    Putting this all together, we get
    \begin{align}
        &\nabla_{\x}\log q(\x) \\
        &= -\frac{\sum_{k = 1}^{K}q(\x \mid k)\pi_{k} \cdot (\vSigma_{k} + \sigma^{2}\I_{d})^{-1}\x}{\sum_{k = 1}^{K}q(\x \mid k)\pi_{k}} \\
        &= -\frac{\sum_{k = 1}^{K}\pi_{k}\det(\vSigma_{k} + \sigma^{2}\I_{d})^{-1/2}\explr{-\frac{1}{2}\x\adj(\vSigma_{k} + \sigma^{2}\I_{d})^{-1}\x}\cdot (\vSigma_{k} + \sigma^{2}\I_{d})^{-1}\x}{\sum_{k = 1}^{K}\pi_{k}\det(\vSigma_{k} + \sigma^{2}\I_{d})^{-1/2}\explr{-\frac{1}{2}\x\adj(\vSigma_{k} + \sigma^{2}\I_{d})^{-1}\x}} \\
        &= -\frac{\sum_{k = 1}^{K}\pi_{k}\det(\vM_{k})\explr{-\frac{1}{2}\x\adj\vM_{k}\vM_{k}\adj\x}\cdot \vM_{k}\vM_{k}\adj\x}{\sum_{k = 1}^{K}\pi_{k}\det(\vM_{k})\explr{-\frac{1}{2}\x\adj\vM_{k}\vM_{k}\adj\x}} \\
        &= -\frac{\sum_{k = 1}^{K}\pi_{k}\det(\vM_{k})\explr{-\frac{1}{2}\norm{\vM_{k}\adj\x}_{2}^{2}}\cdot \vM_{k}\vM_{k}\adj\x}{\sum_{k = 1}^{K}\pi_{k}\det(\vM_{k})\explr{-\frac{1}{2}\x\adj\vM_{k}\vM_{k}\adj\x}}.
    \end{align}
    Given our assumption that 
    each \(\pi_{k}\det(\M_{k})\) is the same, we have
    \begin{align}
        &\nabla_{\x}\log q(\x) \\
        &= -\frac{\sum_{k = 1}^{K}\pi_k\det(\vM_{k})\explr{-\frac{1}{2}\norm{\vM_{k}\adj\x}_{2}^{2}}\cdot \vM_{k}\vM_{k}\adj\x}{\sum_{k = 1}^{K}\pi_k\det(\vM_{k})\explr{-\frac{1}{2}\norm{\vM_{k}\adj\x}_{2}^{2}}} \\
        &= -\frac{\sum_{k = 1}^{K}\explr{-\frac{1}{2}\norm{\vM_{k}\adj\x}_{2}^{2}}\cdot \vM_{k}\vM_{k}\adj\x}{\sum_{k = 1}^{K}\explr{-\frac{1}{2}\norm{\vM_{k}\adj\x}_{2}^{2}}} \\
        &= -\sum_{k = 1}^{K}\e_{k}\adj\softmax{-\frac{1}{2}\mat{\norm{\vM_{1}\adj\x}_{2}^{2} \\ \vdots \\ \norm{\vM_{K}\adj\x}_{2}^{2}}}\vM_{k}\vM_{k}\adj\x \\
        &= -\mat{\vM_{1}, \dots, \vM_{K}}\left[\diaglr{\softmax{-\frac{1}{2}\mat{\norm{\vM_{1}\adj\x}_{2}^{2} \\ \vdots \\ \norm{\vM_{K}\adj\x}_{2}^{2}}}} \otimes \I_{d}\right]\mat{\vM_{1}\adj\x \\ \vdots \\ \vM_{K}\adj\x}.
    \end{align}
\end{proof}

\subsection{Companion to \Cref{sub:structured_diffusion}} \label{app:computations_rr_gradient}

In this section, we prove a formal version of the result \Cref{thm:informal_rate_score} stated in \Cref{sub:structured_diffusion}. That is, we examine a basic yet representative instantiation of the signal model \Cref{model:gaussian_tokens_noise}, and show that under this model, in a
natural regime of parameter scales motivated by the architecture of \ours{}
applied to standard image classification benchmarks, the operation implemented
by taking a gradient step on the compression term of the sparse rate reduction
objective \Cref{eq:sparse-rr} corresponds to a projection operation at
quantization scales $\veps^2$ proportional to the size of the deviation.
This leads us in particular to a formal version of the result
\Cref{thm:informal_rate_score}.

\paragraph{Signal model.}

We consider an instantiation of the model \Cref{model:gaussian_tokens_noise}, elaborated here. That is, we fix a distribution over tokens $\vZ \in \bbR^{d \times n}$ induced by
the following natural signal model: each token $\vz_i$ is drawn independently from the
normalized isotropic Gaussian measure on one of $K$ $p$-dimensional
subspaces with orthonormal bases $\vU_1, \dots, \vU_K \in
\bbR^{d \times p}$,\footnote{More precisely, $\vz_i$ is distributed according to the
pushforward of the normalized isotropic Gaussian measure $\sN(\vZero, \tfrac{1}{p}\vI)$ on $\bbR^p$ by the bases $\vU_k$.}
which comprise the low-dimensional structure in the observed tokens, then
corrupted with i.i.d.\ Gaussian noise $\sN(\vZero, \tfrac{\sigma^2}{d} \vI)$;
the subspace each token is drawn from is selected uniformly at random,
independently of all other randomness in the problem.
This signal model therefore corresponds to the setting of uncorrelated tokens,
with maximum entropy coordinate distributions within subspaces.
It is natural to first develop our
theoretical understanding of the connection between compression and the score
function in the uncorrelated setting, although in general, the ability of \ours{} to capture correlations in the
data through the MSSA block is essential. In connection with the latter issue, we note that our proofs will generalize straightforwardly to the setting of ``well-dispersed'' correlated tokens: see the discussion in \Cref{rmk:assumptions}.

We make the further following assumptions within this model:
\begin{enumerate}
    \item Inspired by an ablation in \Cref{fig:appendix-exp-visualize-UiUj}, which suggests that the learned
        \ours{} model on supervised classification on Imagenet has signal models $\vU_k$
        which are near-incoherent, we will assume that the subspaces $\vU_k$ have pairwise orthogonal column spaces.
        Our proofs will generalize straightforwardly to the setting where the subspaces are merely incoherent: see the discussion in \Cref{rmk:assumptions}.
    \item We assume that the relative scales of these parameters conform to the
        \ours{}-Base settings, trained on Imagenet: from
        \Cref{tab:model_configs}, these parameters are
        \begin{enumerate}
            \item $d = 768$;
            \item $n = 196$;
            \item $K = 12$;
            \item $p = d / K = 64$.
        \end{enumerate}
        In particular, $d \gg n \gg p$ and $Kp=d$. 
\end{enumerate}
These precise parameter values will not play a role in our analysis. We merely require the following quantitative relationships between the parameter values, which are more general than the above precise settings.
\begin{assumption}\label{ass:parameter_config}
    We have $\veps \leq 1$, \(\vU_{k}^{\top}\vU_{k^{\prime}} = \Ind{k = k^{\prime}}\vI\) for all \(k \neq k^{\prime}\), and the following parameter settings and scales:
    \begin{itemize}
        \item \(d \geq n \geq p \geq K \geq 2\);
        \item \(Kp = d\);
        \item \(C_{1}\sqrt{n \log n} \leq \frac{1}{2}n/K\), where \(C_{1}\) is the same as the universal constant \(C_{1}\) in the statement of \Cref{prop:binom_concentration};
        \item \(6C_{2}^{2}n \leq d\), where \(C_{2}\) is the same as the universal constant \(C_{3}\) in the statement of \Cref{prop:gaussian_covariance_concentration_opnorm};
        \item \(2C_{4}^{2}n \leq d\), where \(C_{4}\) is the same as the universal constant \(C_{1}\) in \Cref{prop:xktxk_concentration_opnorm};
    \end{itemize}
\end{assumption}
\textbf{note:} there is no self-reference, as the third inequality is not used to prove \Cref{prop:binom_concentration}, the fourth is not used to prove \Cref{prop:gaussian_covariance_concentration_opnorm}, and the fifth is not used to prove \Cref{prop:xktxk_concentration_opnorm}.

The first and second inequalities together imply in particular that \(p \geq n / K\). The third inequality implies that \(C_{1}\sqrt{n \log n} < n/K\).  The first, second, and and third inequalities together imply that \(p > C_{1}\sqrt{n \log n}\), and that \(0 < n/K - C_{1}\sqrt{n \log n} < n/K < n/K + C_{1}\sqrt{n \log n} < n\).

These inequalities are verifiable in practice if one wishes to explicitly compute the absolute constants \(C_{1}, C_{2}, C_{3}, C_{4}\), and indeed they hold for our \ours{}-Base model.

Formally, let $\mu(K, p, \sigma^2)$ denote the probability measure on $\bbR^{d \times n}$ 
corresponding to the noisy Gaussian mixture distribution specified above.
We let $\vZ_{\base} \sim \mu$ denote an observation distributed according to
this signal model: formally, there exists a (random) map $i \mapsto s_{i}$, for $i
\in [n]$ and $s_{i} \in [K]$, such that
\begin{equation}
    \vz_{\base i} = \vU_{s_{i}} \valpha_i + \vdelta_i, \quad i = 1, \dots, n,
    \label{eq:signal-model-vector-form}
\end{equation}
where $\vDelta = \begin{bmatrix} \vdelta_1 & \hdots & \vdelta_{n} \end{bmatrix}
\simiid \sN(\vZero, \tfrac{\sigma^2}{d} \vI)$, and (independently) $\valpha_i
\simiid \sN(\vZero,
\tfrac{1}{p}\vI)$.
It is convenient to write this observation model in block form. 
To this end, let $K_k = \sum_{i=1}^n \Ind{s_{i} = k}$ for $k \in [K]$
denote the number of times the $k$-th subspace is represented amongst the
columns of $\vZ_{\base}$ (a random variable).
Then by rotational invariance of the Gaussian distribution, we have
\begin{equation}
    \vZ_{\base}
    \equid
    \begin{bmatrix}
        \vU_1 \vA_1 & \hdots & \vU_K \vA_K
    \end{bmatrix}
    \vPi
    + \vDelta,
    \label{eq:signal-model-block-form}
\end{equation}
where $\equid$ denotes equality in distribution, $\vPi \in \bbR^{n \times n}$ is
a uniformly random permutation matrix, and each $\vA_k \in \bbR^{p \times K_k}$. We also %
define \(\vX_{\natural}\) to be the noise-free version of \(\vZ_{\base}\).

Because of this equality in distribution, we will commit the mild abuse of
notation of identifying the block representation
\Cref{eq:signal-model-block-form} with the observation model
\Cref{eq:signal-model-vector-form} that follows the distribution $\mu$.

\paragraph{Denoising in the uncorrelated tokens model.}
In the uncorrelated tokens model \Cref{eq:signal-model-block-form}, the marginal
distribution of each column of $\vZ_{\base}$ is identical, and equal to an
equiproportional mixture of (normalized) isotropic Gaussians on the subspaces
$\vU_1, \dots \vU_k$, convolved with the noise distribution $\sN(\vZero,
\tfrac{\sigma^2}{d} \vI)$. 
This marginal distribution was studied in \Cref{app:proofs-denoising}, where it was shown that
when the perturbation level $\sigma^2 \to 0$, the score function for this
marginal distribution approximately implements a projection operation onto the
nearest subspace $\vU_k$.

Hence, we can connect compression, as implemented in the MSSA block of the
\ours{} architecture, to denoising in the uncorrelated tokens model by showing
that at similar local scales, and for suitable settings of the model parameters,
the compression operation implements a projection onto the low-dimensional
structure of the distribution, as well.

\paragraph{Compression operation.} 
The MSSA block of the \ours{} architecture arises from taking an (approximate)
gradient step on the $R^c$ term of the sparse rate reduction objective
\Cref{eq:sparse-rr}. This term writes
\begin{equation}
    R^c(\vZ \mid \vU_{[K]}) = \frac{1}{2} \sum_{k=1}^K \log\det \left( \vI + \beta
    (\vU_k \adj \vZ)\adj \vU_k\adj \vZ \right),
\end{equation}
where
\begin{align}
    \beta &= \frac{p}{n \veps^2},
\end{align}
and $\veps > 0$ is the quantization error. 
Calculating the gradient, we have
\begin{equation}
    \nabla_{\vZ} R^c(\vZ \mid \vU_{[K]})
    =
    \sum_{k=1}^K \vU_k\vU_k\adj \vZ \left(
        \beta\inv \vI + (\vU_k\adj \vZ)\adj \vU_k\adj \vZ
    \right)\inv.
    \label{eq:deltar-gradient}
\end{equation}
Minimizing the sparse rate reduction objective corresponds to taking a gradient
descent step on $R^c(\spcdot \mid \vU_{[K]})$. Performing this operation at the
observation from the uncorrelated tokens model $\vZ_{\base}$, the output can be
written as
\begin{equation}
    \vZ^+ = \vZ_{\base} - \eta \nabla R^c(\vZ_{\base} \mid \vU_{[K]}),
    \label{eq:tokens-compressed}
\end{equation}
where $\eta > 0$ is the step size.

\paragraph{Main result on projection.} We will see shortly that the behavior of
the compression output \Cref{eq:tokens-compressed} depends on the relative
scales of the perturbation about the low-dimensional structure $\sigma^2$ and
the target quantization error $\veps^2$.

\begin{theorem}\label{lem:inverse-term}
    There are universal constants $C_{1}, C_{2}, C_{3}, C_{4} > 0$ such that the following holds.
    Suppose \Cref{ass:parameter_config} holds, and moreover suppose that 
    $\sigma \leq 1$ and $C_{1}\beta \sigma \leq \half$.
    Then with probability at least $1 - K C_{2}\bigl( e^{-C_{3} d} + e^{-C_{4}n/K} + 
    n^{-2}\bigr)$,
    it holds
    \begin{align}
        &\norm*{
        \vZ^{+}
        -
        \left[
            \left(\vDelta - \eta\sP_{\vU_{[K]}}(\beta\vDelta \vPi\adj) \vPi\right) + \frac{1 +\beta\inv - \eta}{1 + \beta^{-1}}\vX_{\natural}
        \right]}
        \\
        &\leq C_{5}
        K\eta\left(
        \sigma^2 \beta^2 + \sigma(1 + \sqrt{n/d}) + \sqrt{K} \beta
        \sigma^2(1 + \sqrt{n/d}) + \sqrt{n/d}
        \right).
    \end{align}
    Here,
    $\sP_{\vU_{[K]}}$ implements a projection onto the relevant subspaces for
    each token in the limiting case as $\veps \to 0$, and is precisely defined
    in \Cref{eq:proj-defn-1,eq:proj-defn-2}.
\end{theorem}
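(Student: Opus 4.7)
The strategy exploits the orthogonality $\vU_{k}\adj\vU_{k'} = \Ind{k=k'}\vI$ and the block form \Cref{eq:signal-model-block-form} to reduce the inverse appearing in the gradient \Cref{eq:deltar-gradient} to a well-conditioned block-diagonal computation. First, $\nabla R^{c}(\spcdot \mid \vU_{[K]})$ is column-equivariant under permutations, so working in the reordered frame $\ol{\vZ}_{\base} \doteq \vZ_{\base}\vPi\adj$ gives $\nabla R^{c}(\vZ_{\base}\mid \vU_{[K]}) = \nabla R^{c}(\ol{\vZ}_{\base}\mid \vU_{[K]})\vPi$, and in this frame $\ol{\vZ}_{\base} = \ol{\vX}_{\natural} + \ol{\vDelta}$ where the $k$-th column block of $\ol{\vX}_{\natural}$ is precisely $\vU_{k}\vA_{k}$. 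By orthogonality, $\vU_{k}\adj\ol{\vZ}_{\base}$ contains a signal component only in block $k$, so the $n\times n$ Gram matrix $\vG_{k} \doteq (\vU_{k}\adj\ol{\vZ}_{\base})\adj(\vU_{k}\adj\ol{\vZ}_{\base})$ has a clean $K\times K$ block structure, and the idealized reference $\vG_{k}^{\mathrm{main}}$ is the block-diagonal matrix with $\vI_{K_{k}}$ in the $(k,k)$-block and zero elsewhere.

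The deviation $\vG_{k} - \vG_{k}^{\mathrm{main}}$ is controlled block-by-block using the three Gaussian concentration lemmas to which the theorem's constants $C_{1}$-$C_{4}$ already refer: Proposition \ref{prop:binom_concentration} pins each $K_{k}$ within $C_{1}\sqrt{n\log n}$ of $n/K$; Propositions \ref{prop:gaussian_covariance_concentration_opnorm} and \ref{prop:xktxk_concentration_opnorm} respectively give $\norm{\vU_{k}\adj\vDelta^{(k')}}_{\op} \lesssim \sigma(1+\sqrt{n/d})$ for $k'\neq k$ and $\norm{\vA_{k}\adj\vA_{k} - \vI}_{\op} \lesssim \sqrt{n/d}$; and standard Gaussian bilinear tail bounds handle cross-blocks such as $\vA_{k}\adj\vU_{k}\adj\vDelta^{(k)}$. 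A union bound over $k \in [K]$ makes all of these events simultaneous with total failure probability $KC_{2}(e^{-C_{3}d} + e^{-C_{4}n/K} + n^{-2})$, matching the statement.

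With this structural control in hand, I would invert $\beta\inv\vI + \vG_{k}$ via a block Neumann expansion about $\beta\inv\vI + \vG_{k}^{\mathrm{main}}$. The unperturbed inverse is itself block-diagonal: $(1+\beta\inv)\inv\vI_{K_{k}}$ on the signal block and $\beta\vI$ on each of the other $K-1$ diagonal blocks. Convergence of the series is exactly the role of the hypothesis $C_{1}\beta\sigma \leq 1/2$: since the noise-side contributions to $\vG_{k} - \vG_{k}^{\mathrm{main}}$ that multiply the $\beta$-scale unperturbed blocks are of order $\sigma$, their product remains below $1/2$ and the series converges geometrically. Multiplying by $\vU_{k}\vU_{k}\adj\ol{\vZ}_{\base}$ on the left and summing over $k$, the zeroth-order contribution from the signal blocks combines via the identity $\sum_{k}\vU_{k}\vU_{k}\adj = \vI$ (valid since $Kp=d$ with orthogonal $\vU_{k}$) to yield $(1+\beta\inv)\inv\ol{\vX}_{\natural}$, while the zeroth-order contributions from the noise-only diagonal blocks assemble into $\sP_{\vU_{[K]}}(\beta\ol{\vDelta})$ according to the definition in \Cref{eq:proj-defn-1,eq:proj-defn-2}. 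Permuting back by $\vPi$ produces the target expression.

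The main obstacle is the careful propagation of block errors through the Neumann expansion, since the unperturbed inverse operates at two very different scales. A signal-side deviation of size $O(\sqrt{n/d})$ contributes at unit scale after multiplication by the $(1+\beta\inv)\inv$ block, but the same-sized deviation on a noise diagonal block contributes at scale $\beta$; cross-blocks of size $O(\sigma(1+\sqrt{n/d}))$ produce mixed terms where one side is multiplied by a unit block and the other by a $\beta$-block, generating the factors of $\sqrt{K}\beta\sigma^{2}$ and $\sigma^{2}\beta^{2}$ that appear in the bound. Each of the distinct scaling combinations in $C_{5}K\eta(\sigma^{2}\beta^{2} + \sigma(1+\sqrt{n/d}) + \sqrt{K}\beta\sigma^{2}(1+\sqrt{n/d}) + \sqrt{n/d})$ corresponds to a specific pairing of block-deviation type with block-inverse scale and one of the first-, second-, or higher-order Neumann terms, and the main algebraic burden of the proof is tabulating these pairings, with the overall factor $K$ arising from the outer sum over subspaces.
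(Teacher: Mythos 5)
Your overall strategy matches the paper's proof almost line for line: permute into the block frame via $\vPi$, use orthogonality of the $\vU_k$ to make the Gram matrix block-structured, split into a block-diagonal reference plus a perturbation, invoke a Neumann expansion, and control everything on a high-probability event built from Propositions~\ref{prop:binom_concentration}, \ref{prop:gaussian_covariance_concentration_opnorm}, and \ref{prop:xktxk_concentration_opnorm} via a union bound over $k\in[K]$. The one cosmetic deviation is your reference $\vG_k^{\mathrm{main}}$, which replaces $\vA_k\adj\vA_k$ by $\vI$ in the signal block; the paper instead keeps $\vA_k\adj\vA_k$ inside its reference $\vD_k$ and only approximates $(\beta\inv\vI+\vA_k\adj\vA_k)\inv$ by $\tfrac{1}{1+\beta\inv}\vI$ at the end (Lemma~\ref{lem:blockwise-pinv}). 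That reorganization is harmless: the resulting discrepancy is $O(\sqrt{n/d})$, which fits the budget.

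The genuine gap is your assertion that the zeroth-order Neumann term alone assembles into $\sP_{\vU_{[K]}}(\beta\ol{\vDelta})$. Look at \Cref{eq:proj-defn-1}: $\sP_k$ contains the factor $\vI - \vA_{k'}(\beta\inv\vI + \vA_{k'}\adj\vA_{k'})\inv\vA_{k'}\adj$, which is not the pure orthogonal projector $\vU_{k'}\vU_{k'}\adj$. The zeroth-order noise-block contribution you compute is only $\sum_{k\neq k'}\beta\,\vU_k\vU_k\adj\vDelta_{k'}$. The subtracted piece $-\vA_k(\cdots)\inv\vA_k\adj$ in $\sP$ arises from the \emph{first-order} Neumann term: the cross blocks $\vA_k\adj\vU_k\adj\vDelta_{k'}$ of the perturbation multiply the signal row $\vA_k(\beta\inv\vI+\vA_k\adj\vA_k)\inv$ and yield $-\vA_k(\cdots)\inv\vA_k\adj\vU_k\adj\,\beta\vDelta_{k'}$. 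This is exactly why the paper's main term $\vM_k$ keeps the explicit factor $(\vI - \vXi_k\vD_k\inv)$ multiplying the signal row, rather than truncating at zeroth order. If you stop at zeroth order, the mismatch between your main term and the theorem's target is of size $\beta\sigma(1+\sqrt{n/d})$ per subspace — compare to the budgeted $\sqrt{K}\beta\sigma^2(1+\sqrt{n/d})$, which is smaller by a factor of roughly $\sqrt{K}\sigma$ and so cannot absorb it. You need to carry the first-order term into the main expression; only the \emph{second-and-higher} order Neumann tail, of size $O((\beta\sigma)^2)$, belongs in the residual.
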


We give the proof of \Cref{lem:inverse-term} below. First, we make three remarks
on interpreting the result, our technical assumptions, and our analysis.

\begin{remark}
    \Cref{lem:inverse-term} admits the following interesting interpretation in
    an asymptotic setting, where we can identify the leading-order behavior of
    the gradient and confirm our hypothesis about the connection between
    compression and score-following.
    Choose $\eta = \beta\inv$, so that the guarantee in \Cref{lem:inverse-term}
    incurs some cancellation, and moreover delineate more precise dependencies
    on the RHS of the guarantee:
    \begin{align}
        &\norm*{
        \vZ^{+}
        -
        \left[
            \left(\vDelta - \sP_{\vU_{[K]}}(\vDelta \vPi\adj)
            \vPi\right) + \frac{1}{1 + \beta^{-1}}\vX_{\natural}
        \right]}
        \\
        &\quad\ltsim
        \frac{nK^2 \veps^2}{d}\left(
        \frac{\sigma^2 d^2}{n^2K^2 \veps^4} + \sigma(1 + \sqrt{n/d}) +
        \frac{d\sigma^2}{n\sqrt{K}\veps^2} (1 + \sqrt{n/d}) + \sqrt{n/d}
        \right)
        \\
        &\quad\ltsim
        K^{3/2}\sigma^2 
        + \frac{\sigma^2 d}{n\veps^2} 
        + \frac{nK^2}{d}\left(
        \sigma + \sqrt{\frac{n}{d}}
        \right)\veps^2,
    \end{align}
    where we used \Cref{ass:parameter_config}, which implies $p = d/K$ and
    $n/d\leq 1$.
    We will check in due course whether we have satisfied the hypotheses of
    \Cref{lem:inverse-term}, so that this guarantee indeed applies.
    To this end, we optimize this bound as a function of $\veps > 0$, since this is a
    parameter of the compression model.
    The optimal $\veps$ is straightforward to compute using calculus: it
    satisfies
    \begin{align}
        \veps^2 &= \sqrt{\left.\frac{\sigma^2 d}{n}\right/\frac{K^2 n}{d} \left(
        \sigma + \sqrt{\frac{n}{d}}\right)}
        \\
        &=
        \frac{\sigma d}{n K \sqrt{\sigma + \sqrt{\frac{n}{d}}}},
    \end{align}
    and the value of the residual arising from \Cref{lem:inverse-term} with this
    choice of $\veps$ is no larger than an absolute constant multiple of
    \begin{equation}
        K^{3/2}\sigma^2 
        + \sqrt{
            \frac{K^2 \sigma^2 d}{n} \left(
                \frac{n\sigma}{d} + \left( \frac{n}{d} \right)^{3/2}
            \right)
        }
        =
        K \sigma \left(
        \sqrt{K} \sigma
        + \sqrt{\sigma + \sqrt{\frac{n}{d}}}
        \right).
    \end{equation}
    Moreover, with this choice of $\veps$, $\beta$ satisfies
    \begin{equation}
        \beta\inv = \frac{\veps^2 nK}{d} = \sqrt{\frac{\sigma}{1 +
        \sqrt{\frac{n}{d\sigma^2}}}}.
    \end{equation}
    In particular, the condition $\beta \sigma \ltsim 1$ in
    \Cref{lem:inverse-term} demands
    \begin{equation}
        \sqrt{\sigma + \sqrt{\frac{n}{d}}} \ltsim 1,
    \end{equation}
    which holds for sufficiently small $\sigma$ and sufficiently large $d \geq
    n$, showing that \Cref{lem:inverse-term} can be nontrivially applied in this
    setting.
    If we consider a simplifying limiting regime where $n, d \to +\infty$ such
    that $n/d \to 0$ and $n/K \to +\infty$, we observe the following asymptotic
    behavior of the guarantee of \Cref{lem:inverse-term}:
    \begin{align}
        \norm*{
        \vZ^{+}
        -
        \left[
            \left(\vDelta - \sP_{\vU_{[K]}}(\vDelta \vPi\adj)
            \vPi\right) + \frac{1}{1 + \sqrt{\sigma}}\vX_{\natural}
        \right]}
        \ltsim
        K  \sigma^{3/2} \left( 1 + \sqrt{K\sigma} \right).
    \end{align}
    This demonstrates that a gradient step on $R^c$ performs denoising: there is
    a noise-level-dependent shrinkage effect applied to the signal
    $\vX_{\natural}$, which vanishes as $\sigma \to 0$, and meanwhile the noise
    term $\vDelta$ is reduced.

    Moreover, as $\sigma \to 0$, we can express the limiting form of
    $\sP_{\vU_{[K]}}$ exactly as an orthogonal projection, since this drives
    $\beta\inv \to 0$:
    following \Cref{eq:proj-defn-1,eq:proj-defn-2}, we have here
    \begin{equation}
        \sP_{U_{[K]}}
        =
        \begin{bmatrix}
            \sP_1 & \hdots & \sP_K   
        \end{bmatrix},
    \end{equation}
    where
    \begin{equation}
        \sP_k \to 
        \sum_{k' \neq k}
        \vU_{k'} \proj{\im(\vA_{k'})^\perp} \vU_{k'}\adj.
    \end{equation}
    This shows that, in an asymptotic sense, a gradient step on $R^c$ serves to \textit{suppress
    the effect of the perturbation applied to the observations $\vZ_{\base}$
    about the local signal model $\vX_{\natural}$.} This verifies our claim
    previously that in this setting, there is a correspondence between a
    score-following algorithm and a compression-based approach: locally, both
    project the observations onto the structures of the signal model.

    It can be shown moreover that the shrinkage effect on $\vX_{\natural}$
    demonstrated here appears as a consequence of using the $R^c$
    ``compression'' term for the gradient step in \ours{}; when the gradient
    step is taken instead on the full $\Delta R$ rate reduction objective (which
    is computationally prohibitive, of course), there is zero shrinkage, and
    perfect denoising is performed for a wider variety of step sizes $\eta$ than the choice made here. We see the introduction of this shrinkage
    effect this as the price of constructing an efficient and interpretable
    network architecture. In practice, the ISTA block of \ours{} counteracts
    this shrinkage effect, which is anyways minor at reasonable parameter scales.
\end{remark}

\begin{remark}\label{rmk:assumptions}
     We have made two assumptions which may not hold exactly in practice: namely, we have assumed that the \(\vU_{k}\)'s have orthogonal columns, namely \(\vU_{k}^{\top}\vU_{k^{\prime}} = \Ind{k = k^{\prime}}\vI\), and we have assumed that the linear combination coefficients \(\vA_{k}\) that form the matrix \(\vX_{\natural}\) are i.i.d.~samples from Gaussian distributions. Both these assumptions can be made more realistic, at the cost of additional (non-instructive) complexity in the analysis; we briefly go over how.
     
     Relaxing the orthogonality condition \(\vU_{k}^{\top}\vU_{k^{\prime}} = \Ind{k = k^{\prime}}\vI\) to near-orthogonality, namely \(\norm{\vU_{k}^{\top}\vU_{k^{\prime}} - \Ind{k = k^{\prime}}\vI} \leq \nu\) for a small \(\nu\), as observed in practice (recall \Cref{fig:appendix-exp-visualize-UiUj}) would introduce additional small error terms in the proof, say polynomial in \(\nu\). The magnitudes of these errors could in principle be precisely tracked, whence one could obtain a similar result to \Cref{lem:inverse-term}.

    Secondly, we have assumed that the \(\vA_{k}\)'s have independent columns which are sampled from (the same) Gaussian distribution. However, in the conceptual framework for \ours{}, we exploit the joint distribution (and in particular the correlations) between the tokens in order to obtain good performance for our model. Our analysis is not completely agnostic to this fact; as we will see, the proof of \Cref{lem:inverse-term} only leverages the independence of the columns of each \(\vA_{k}\)'s in order to obtain high-probability upper bounds on the smallest and largest singular value of the token matrices. If these bounds were ensured by some other method, such as appropriate normalization and incoherence, a similar conclusion to \Cref{lem:inverse-term} could hold in the more realistic correlated tokens model. Going beyond well-conditioned token matrices for each subspace would require additional modeling assumptions, and additional investigative experimental work to determine a realistic basis for such assumptions.
\end{remark}

\begin{remark}
    We have not attempted to optimize constants or rates of concentration in the
    proof of \Cref{lem:inverse-term}, preferring instead to pursue a
    straightforward analysis that leads to a qualitative interpretation of the
    behavior of the rate reduction gradient in our model problem. Minor
    improvements to the concentration analysis would enable the parameter
    scaling requirements in \Cref{ass:parameter_config} to be relaxed slightly,
    and the probability bound in \Cref{lem:inverse-term} that scales as $K /
    n^2$ can easily be improved to any positive power of $1/n$.
\end{remark}

\begin{proof}[Proof of \Cref{lem:inverse-term}]
    We start by noticing that, by orthonormality of the subspaces $\vU_k$,
    we have by \Cref{eq:signal-model-block-form}
    \begin{equation}
        \vU_k \adj \vZ_{\base}
        =
        \begin{bmatrix}
            \vZero & \hdots & \vA_k & \hdots & \vZero
        \end{bmatrix}
        \vPi
        + \vU_k\adj \vDelta,
    \end{equation}
    so that
    \begin{equation}
    \small
        \left(
        \beta\inv \vI + (\vU_k\adj \vZ_{\base})\adj \vU_k\adj
        \vZ_{\base}
        \right)\inv
        =
        \vPi\adj
        \left(
        \underbrace{
            \begin{bmatrix}
                \beta\inv\vI  &&&& \\
                &\ddots &&& \\
                && \beta\inv \vI + \vA_k\adj \vA_k && \\
                &&&\ddots & \\
                &&&&\beta\inv \vI  \\
            \end{bmatrix}
        }_{\vD_k}
        + \vXi_k
        \right)\inv
        \vPi,\label{eq:ak-defn}
    \end{equation}
    because permutation matrices are orthogonal matrices, and where the
    perturbation $\vXi_k$ is defined by
    \begin{equation}
        \vXi_k = 
        \vPi \vDelta\adj \vU_k \vU_k\adj \vDelta\vPi\adj
        +
        \begin{bmatrix}
            \vZero & \hdots & \vDelta_1\adj \vU_k \vA_k & \hdots & \vZero \\
            \vdots & & \vdots & & \vdots \\
            \vA_k\adj \vU_k\adj \vDelta_1 & \hdots & \vDelta_k\adj \vU_k \vA_k +
            \vA_k\adj \vU_k\adj \vDelta_k & \hdots & \vA_k\adj \vU_k\adj \vDelta_K\\
            \vdots & & \vdots & & \vdots \\
            \vZero & \hdots & \vDelta_K\adj \vU_k \vA_k & \hdots & \vZero \\
        \end{bmatrix},
        \label{eq:xik-defn}
    \end{equation}
    and where we have defined (implicitly) in addition
    \begin{equation}
        \begin{bmatrix}
            \vDelta_1 & \hdots & \vDelta_K
        \end{bmatrix}
        = \vDelta \vPi\adj.
    \end{equation}
    The matrix $\vD_k \succ \vZero$, so we can write
    \begin{equation}
        \left(
        \beta\inv \vI + (\vU_k\adj \vZ_{\base})\adj \vU_k\adj
        \vZ_{\base}
        \right)\inv
        =
        \vPi\adj
        \vD_k\inv
        \left(
        \vI + \vXi_k\vD_k\inv
        \right)\inv \vPi,
    \end{equation}
    from which it follows
    \begin{align}
        &\vU_k\adj \vZ_{\base}
        \left(
        \beta\inv \vI + (\vU_k\adj \vZ_{\base})\adj \vU_k\adj
        \vZ_{\base}
        \right)\inv
        \\
        &\quad=
        \left(
        \begin{bmatrix}
            \vZero & \hdots & \vA_k(\beta\inv \vI + \vA_k\adj \vA_k)\inv &
            \hdots & \vZero
        \end{bmatrix}
        + \vU_k\adj \vDelta\vPi\adj \vD_k\inv
        \right)
        \left(
        \vI + \vXi_k\vD_k\inv
        \right)\inv \vPi.
    \end{align}
    The task before us is therefore to control $\norm{\vXi_k \vD_k \inv} <
    1$, in order to apply the Neumann series to further simplify this
    expression. We will do this in stages: first, we invoke several
    auxiliary lemmas to construct a high-probability event on which the
    random quantities in the preceding expression are controlled about their
    nominal values; next, we show that the Neumann series can be applied on
    this event and a main term extracted; finally, we simplify this main
    term further in order to establish the claimed expression.

    \paragraph{High-probability event construction.} In order to achieve the appropriate control on all random quantities, we would like to construct a high-probability event on which the random quantities are not too large. By \Cref{prop:delta_concentration_opnorm,prop:xk_concentration_opnorm,prop:xktxk_concentration_opnorm} and union bound, there exist universal constants \(C_{i} > 0\) for which
    \begin{equation}
        \Pr*{
            \begin{array}{rl}
                & \norm{\vDelta} \leq \sigma(C_{1} + \sqrt{n/d}) \\ 
                \forall k \in [K] \colon& \norm{\vA_{k}} \leq 1 + C_{2}\sqrt{n/d} \\ 
                \forall k \in [K] \colon& \norm{\vA_{k}^{\top}\vA_{k} - \vI} \leq C_{3}\sqrt{n/d}
            \end{array}
        } \geq 1 - C_{4}K(e^{-C_{5}d} + e^{-C_{6}n/K} + n^{-2}).
    \end{equation}
    The event we compute the probability of, which we denote \(E^{\star}\), is precisely the good event that we want. Formally, 
    \begin{equation}
        E^{\star} \doteq \bc{
            \begin{array}{rl}
                & \norm{\vDelta} \leq \sigma(C_{1} + \sqrt{n/d}) \\ 
                \forall k \in [K] \colon& \norm{\vA_{k}} \leq 1 + C_{2}\sqrt{n/d} \\ 
                \forall k \in [K] \colon& \norm{\vA_{k}^{\top}\vA_{k} - \vI} \leq C_{3}\sqrt{n/d}
            \end{array}
        }. \label{eq:good_event}
    \end{equation}
    We know that \(E^{\star}\) occurs with high probability, and are able to strongly control the random quantities to the degree desired.

    \paragraph{Main term extraction.}
    By \Cref{lem:neumann-op-norm} and our hypotheses on the problem parameters, we have on \(E^{\star}\) that
    \begin{equation}
        \norm{\vXi_k \vD_k \inv} 
        \leq
        C \beta \sigma
        <
        1.
    \end{equation}  
    We can therefore apply the Neumann series to obtain
    \begin{align}
        &\vU_k\adj \vZ_{\base}
        \left(
        \beta\inv \vI + (\vU_k\adj \vZ_{\base})\adj \vU_k\adj
        \vZ_{\base}
        \right)\inv
        \\
        =&
        \scalebox{0.8}{\(\left(
        \begin{bmatrix}
            \vZero & \hdots & \vA_k(\beta\inv \vI + \vA_k\adj \vA_k)\inv &
            \hdots & \vZero
        \end{bmatrix}
        + \vU_k\adj \vDelta\vPi\adj \vD_k\inv
        \right)
        \left(
        \vI - 
        \vXi_k \vD_k^{-1}
        +
        \sum_{j=2}^\infty
        (-1)^j \left( \vXi_k \vD_k^{-1} \right)^j
        \right) \vPi\)}.
    \end{align}
    Again on \(E^{\star}\), we have
    \begin{equation}
        \norm*{
            \sum_{j=2}^\infty
            (-1)^j \left( \vXi_k \vD_k^{-1} \right)^j
        }
        \leq
        \sum_{j=2}^\infty
        \norm*{
            \vXi_k \vD_k^{-1}
        }^j
        \leq
        C (\beta \sigma)^2
        \frac{
            1
        }{
            1 - C \beta \sigma
        }
        \leq
        C' (\beta \sigma)^2.
    \end{equation}
    Moreover, as in the proof of \Cref{lem:neumann-op-norm}, we have on the previous event
    that
    \begin{equation}
        \norm*{
            \vU_k\adj \vDelta\vPi\adj \vD_k\inv
        }
        \leq
        C \beta \sigma.
    \end{equation}
    Thus, if we define a ``main term''
    \begin{equation}
        \vM_k
        =
        \left[
            \begin{bmatrix}
                \vZero & \hdots & \vA_k(\beta\inv \vI + \vA_k\adj \vA_k)\inv &
                \hdots & \vZero
            \end{bmatrix}
            \left(
            \vI - 
            \vXi_k \vD_k^{-1}
            \right)
            + \vU_k\adj \vDelta\vPi\adj \vD_k\inv
            \right]\vPi,
    \end{equation}
    we have on the same event as previously
    \begin{equation}
        \norm*{
            \vU_k\adj \vZ_{\base}
            \left(
            \beta\inv \vI + (\vU_k\adj \vZ_{\base})\adj \vU_k\adj
            \vZ_{\base}
            \right)\inv
            - \vM_k
        }
        \leq C (\beta \sigma)^2.
    \end{equation}
    To conclude, we need only study this main term, since $\vU_k$ has operator norm $1$.

    \paragraph{Simplifying the main term.}
    Our approach will be to control the main term $\vM_k$ around a simpler
    expression, using basic perturbation theory; by the triangle inequality
    for the operator norm, this will give control of the desired gradient
    term.
    After distributing, $\vM_k$ is a sum of three terms; we will start with
    the simplest term. We first compute
    \begin{equation}
        \vU_k\adj \vDelta\vPi\adj \vD_k\inv
        =
        \vU_k\adj
        \begin{bmatrix}
            \beta \vDelta_1 & \hdots 
            \vDelta_k \left(
            \beta\inv \vI + \vA_k\adj \vA_k\right)\inv
            & \hdots & \beta \vDelta_K
        \end{bmatrix}.
    \end{equation}
    We are going to argue that the residual
    \begin{equation}
        \norm*{
            \vU_k\adj \vDelta\vPi\adj \vD_k\inv
            -
            \vU_k\adj
            \begin{bmatrix}
                \beta \vDelta_1 & \hdots &
                \vZero
                & \hdots & \beta \vDelta_K
            \end{bmatrix}
        }
    \end{equation}
    is small. To this end, note that by the fact that $\vU_k$ has unit
    operator norm,
    \begin{align}
        &\norm*{
            \vU_k\adj \vDelta\vPi\adj \vD_k\inv
            -
            \vU_k\adj
            \begin{bmatrix}
                \beta \vDelta_1 & \hdots &
                \vZero
                & \hdots & \beta \vDelta_K
            \end{bmatrix}
        } \\
        &\leq
        \norm*{
            \begin{bmatrix}
                \vZero & \hdots &
                \vDelta_k \left(
                \beta\inv \vI + \vA_k\adj \vA_k\right)\inv
                & \hdots & \vZero
            \end{bmatrix}
        }
        \\
        &=
        \norm*{
            \vDelta_k \left(
            \beta\inv \vI + \vA_k\adj \vA_k\right)\inv
        } \\
        &\leq
        \norm*{
            \vDelta_k 
        }
        \norm*{
            \left(
            \beta\inv \vI + \vA_k\adj \vA_k\right)\inv
        }.
    \end{align}
    By \Cref{eq:good_event} and \Cref{eq:pseudoinverse-ipart-small} from
    \Cref{lem:blockwise-pinv}, the second term 
    here is controlled on $E^\star$.
    For the first term, we note that by definition and the fact that
    the unit sphere is invariant to rotations (and permutations are
    rotations),
    \begin{align}
        \norm*{\vDelta}
        =
        \sup_{\norm{\vu}_2 \leq 1}
        \norm*{\vDelta \vu}_2
        &=
        \sup_{\norm{\vu}_2 \leq 1}
        \norm*{\begin{bmatrix}
            \vDelta_1 & \hdots & \vDelta_K
        \end{bmatrix} \vu
        }_2
        \\
        &=
        \sup_{\norm{\vu}_2 \leq 1}
        \norm*{
            \sum_{i=1}^K
            \vDelta_i \vu_i
        }_2,
    \end{align}
    where $\vu_i$ are coordinate-subset-induced partitions of the vector
    $\vu$ induced by those of $\vDelta\vPi\adj$.
    This yields immediately
    \begin{equation}
        \norm*{\vDelta}
        \leq
        \sup_{\norm{\vu}_2 \leq 1}
        \sum_{i=1}^K 
        \norm*{
            \vDelta_i \vu_i
        }_2
        \leq
        \left( \max_{k \in [K]}\, \norm{\vDelta_k}\right)
        \sup_{\norm{\vu}_2 \leq 1}
        \sum_{i=1}^K \norm{\vu_i}_2
        \leq
        \sqrt{K}
        \left( \max_{k \in [K]}\, \norm{\vDelta_k}\right),
        \label{eq:control-op-by-blocks}
    \end{equation}
    by the triangle inequality and inequalities for $\ell^p$ norms.
    Similarly, choosing a specific $\vu$ in the operator norm expression,
    namely one that is supported entirely on one of the coordinate
    partitions $\vu_i$, shows that
    \begin{align}
        \norm*{\vDelta}
        &\geq
        \norm*{
            \vDelta_i \vu_i
        }_2
    \end{align}
    for any $i$, whence
    \begin{equation}
        \max_{k \in [K]}\, \norm{\vDelta_k}
        \leq
        \norm*{\vDelta}.
        \label{eq:blocks-opnorm-controlled-overall}
    \end{equation}
    It follows that we control the first term above on $E^\star$.
    Combining this reasoning, we conclude from the above
    \begin{align}
        &\norm*{
            \vU_k\adj \vDelta\vPi\adj \vD_k\inv
            -
            \vU_k\adj
            \begin{bmatrix}
                \beta \vDelta_1 & \hdots &
                \vZero
                & \hdots & \beta \vDelta_K
            \end{bmatrix}
        }\\
        \leq&
        \sigma(C + \sqrt{n/d})
        \left(
        \frac{1}{1 + \beta\inv}
        +
        \frac{C' \sqrt{n/d}}{1 + \beta\inv}
        \right)
        \\
         \ltsim &\sigma(1 + C\sqrt{n/d}),
    \end{align}
    where the second line uses \Cref{ass:parameter_config} to 
    remove the higher-order residual.

    next, we recall that $\vXi_k$ is a sum of two terms; we will do one term
    at a time for concision. We have first
    \begin{align}
        &\begin{bmatrix}
            \vZero & \hdots & \vA_k(\beta\inv \vI + \vA_k\adj \vA_k)\inv &
            \hdots & \vZero
        \end{bmatrix}
        \vPi \vDelta\adj \vU_k \vU_k\adj \vDelta\vPi\adj
        \\
        &=
        \begin{bmatrix}
            \vZero & \hdots & \vA_k(\beta\inv \vI + \vA_k\adj \vA_k)\inv &
            \hdots & \vZero
        \end{bmatrix}
        \begin{bmatrix}
            \vDelta_1\adj \\
            \vdots\\
            \vDelta_K\adj
        \end{bmatrix}
        \vU_k \vU_k\adj
        \begin{bmatrix}
            \vDelta_1\adj \\
            \vdots\\
            \vDelta_K\adj
        \end{bmatrix}\adj
        \\
        &=
        \vA_k(\beta\inv \vI + \vA_k\adj \vA_k)\inv \vDelta_k\adj \vU_k
        \vU_k\adj
        \begin{bmatrix}
            \vDelta_1 &
            \hdots&
            \vDelta_K
        \end{bmatrix}.
    \end{align}
    We then multiply this term by $\vD_k\inv$ on the right to get the term
    that appears in $\vM_k$ (ignoring the multiplication on the right by
    $\vPi$, because it does not change operator norms). 
    In particular, we will control
    \begin{equation}
        \norm*{
            \vA_k(\beta\inv \vI + \vA_k\adj \vA_k)\inv \vDelta_k\adj \vU_k
            \vU_k\adj
            \begin{bmatrix}
                \vDelta_1 &
                \hdots&
                \vDelta_K
            \end{bmatrix}\vD_k\inv
        },
    \end{equation}
    showing that this term is small.
    We will accomplish this with the block diagonal structure of $\vD_k$:
    indeed, this gives that $\vD_k\inv$ is obtained by blockwise inversion,
    and hence
    \begin{align}
        &\norm*{
            \vA_k(\beta\inv \vI + \vA_k\adj \vA_k)\inv \vDelta_k\adj \vU_k
            \vU_k\adj
            \begin{bmatrix}
                \vDelta_1 &
                \hdots&
                \vDelta_K
            \end{bmatrix}\vD_k\inv
        }
        \\
        &\quad=
        \scalebox{0.9}{\(\norm*{
            \vA_k(\beta\inv \vI + \vA_k\adj \vA_k)\inv \vDelta_k\adj \vU_k
            \vU_k\adj
            \begin{bmatrix}
                \beta\vDelta_1
                &\hdots&
                \vDelta_k \left( \beta\inv \vI + \vA_k\adj \vA_k \right)\inv
                &\hdots&
                \beta \vDelta_K
            \end{bmatrix}
        }\)}
        \\
        &\quad\leq
        \sqrt{K}\max \Bigl\{
            \norm*{
                \vA_k(\beta\inv \vI + \vA_k\adj \vA_k)\inv \vDelta_k\adj \vU_k
                \vU_k\adj\vDelta_{k}(\beta\inv \vI + \vA_k\adj \vA_k)\inv
            },
            \\
            &\quad\qquad\qquad
            \max_{k' \neq k}\,
            \beta \norm*{
                \vA_k(\beta\inv \vI + \vA_k\adj \vA_k)\inv \vDelta_k\adj \vU_k
                \vU_k\adj\vDelta_{k'}
            }
            \Bigr\},
    \end{align}
    where the second line uses \Cref{eq:control-op-by-blocks}.
    We will give a coarse control of this term---the error could be improved
    further by exploiting more thoroughly independence of the blocks
    $\vDelta_k$ to show that the maximum over $k' \neq k$ in the last line
    of the preceding expression is smaller.
    We have by submultiplicativity of the operator norm
    and the triangle inequality
    \begin{align}
        &\norm*{
            \vA_k(\beta\inv \vI + \vA_k\adj \vA_k)\inv \vDelta_k\adj \vU_k
            \vU_k\adj\vDelta_{k}(\beta\inv \vI + \vA_k\adj \vA_k)\inv
        }\\
        \leq&
        \left(
        \frac{1}{1 + \beta \inv}
        +
        \frac{C \sqrt{n/d}}{1 + \beta \inv}
        \right)^2
        \norm*{
            \vDelta_k\adj \vU_k \vU_k\adj\vDelta_{k}
        }
        \\
        \leq&
        \left(1 + C \sqrt{n/d} \right)
        \norm*{
            \vDelta_k\adj \vU_k \vU_k\adj\vDelta_{k}
        },
    \end{align}
    where the first line uses \Cref{lem:blockwise-pinv}, and the second line
    uses \Cref{ass:parameter_config} to simplify the residual as above.
    We have meanwhile
    from the definition of $E^\star$
    \begin{align}
        \norm*{
            \vDelta_k\adj \vU_k \vU_k\adj\vDelta_{k}
        }
        &\leq
        \norm*{
            \vDelta_k
        }^2
        \ltsim \sigma^2\left(1 + \sqrt{n/d}\right),
    \end{align}
    because $\vU_k\vU_k\adj$ is an orthogonal projection, and again using
    \Cref{ass:parameter_config} to simplify the residual.
    We can argue analogously to simplify the other term in the maximum
    appearing above, and this yields
    \begin{align}
        &\norm*{
            \vA_k(\beta\inv \vI + \vA_k\adj \vA_k)\inv \vDelta_k\adj \vU_k
            \vU_k\adj
            \begin{bmatrix}
                \vDelta_1 &
                \hdots&
                \vDelta_K
            \end{bmatrix}\vD_k\inv
        }
        \\
        &\quad\ltsim
        \sqrt{K}\beta\sigma^2 \left(1 + C \sqrt{n/d} \right) \left( 1 +
        \sqrt{n/d} \right),
    \end{align}
    where we used the fact that $\veps \leq 1$ and the rest of
    \Cref{ass:parameter_config} implies that $\beta \geq 1$.
    This residual simplifies using \Cref{ass:parameter_config} to
    \begin{equation}
        \norm*{
            \vA_k(\beta\inv \vI + \vA_k\adj \vA_k)\inv \vDelta_k\adj \vU_k
            \vU_k\adj
            \begin{bmatrix}
                \vDelta_1 &
                \hdots&
                \vDelta_K
            \end{bmatrix}\vD_k\inv
        }
        \ltsim
        \sqrt{K}\beta\sigma^2 \left(1 + C \sqrt{n/d} \right).
    \end{equation}

    next, we examine the last term, which is the other summand arising in
    the definition of $\vXi_k$. We have
    \begin{align}
        &\scalebox{0.85}{\(\begin{bmatrix}
            \vZero & \hdots & \vA_k(\beta\inv \vI + \vA_k\adj \vA_k)\inv &
            \hdots & \vZero
        \end{bmatrix}
        \begin{bmatrix}
            \vZero & \hdots & \vDelta_1\adj \vU_k \vA_k & \hdots & \vZero \\
            \vdots & & \vdots & & \vdots \\
            \vA_k\adj \vU_k\adj \vDelta_1 & \hdots & \vDelta_k\adj \vU_k \vA_k +
            \vA_k\adj \vU_k\adj \vDelta_k & \hdots & \vA_k\adj \vU_k\adj \vDelta_K\\
            \vdots & & \vdots & & \vdots \\
            \vZero & \hdots & \vDelta_K\adj \vU_k \vA_k & \hdots & \vZero \\
        \end{bmatrix}
        \)}
        \\
        &=
        \scalebox{0.95}{\(\vA_k(\beta\inv \vI + \vA_k\adj \vA_k)\inv
        \begin{bmatrix}
            \vA_k\adj \vU_k\adj \vDelta_1
            & \hdots 
            & 
            \left(
            \vDelta_k\adj \vU_k \vA_k + \vA_k\adj\vU_k\adj \vDelta_k
            \right)
            & \hdots 
            &%
            \vA_k\adj \vU_k\adj \vDelta_K
        \end{bmatrix}\)}.
    \end{align}
    Now multiplying on the right by $\vD_k\inv$ gives the term (again
    ignoring the right-multiplication by $\vPi$, which does not affect the
    operator norm)
    \begin{equation}
        \scalebox{0.8}{\(\vA_k(\beta\inv \vI + \vA_k\adj \vA_k)\inv
        \begin{bmatrix}
            \beta
            \vA_k\adj \vU_k\adj \vDelta_1
            & \hdots 
            & 
            \left(
            \vDelta_k\adj \vU_k \vA_k + \vA_k\adj\vU_k\adj \vDelta_k
            \right)
            \left(
            \beta\inv \vI + \vA_k \adj \vA_k
            \right)\inv
            & \hdots 
            &%
            \beta \vA_k\adj \vU_k\adj \vDelta_K
        \end{bmatrix}\)}.
        \label{eq:other-term-of-xik}
    \end{equation}
    We will argue that this term is close to the term
    \begin{equation}
        \vA_k(\beta\inv \vI + \vA_k\adj \vA_k)\inv
        \begin{bmatrix}
            \beta
            \vA_k\adj \vU_k\adj \vDelta_1
            & \hdots 
            & 
            \vZero
            & \hdots 
            &%
            \beta \vA_k\adj \vU_k\adj \vDelta_K
        \end{bmatrix}
        \label{eq:other-term-of-xik-nom}
    \end{equation}
    in operator norm. The argument is similar to the preceding arguments:
    for this, it suffices to bound
    \begin{equation}
        \scalebox{0.9}{\(\norm*{
            \begin{bmatrix}
                \vZero
                & \hdots 
                & 
                \vA_k(\beta\inv \vI + \vA_k\adj \vA_k)\inv
                \left(
                \vDelta_k\adj \vU_k \vA_k + \vA_k\adj\vU_k\adj \vDelta_k
                \right)
                \left(
                \beta\inv \vI + \vA_k \adj \vA_k
                \right)\inv
                & \hdots 
                &%
                \vZero
            \end{bmatrix}
        }\)},
    \end{equation}
    which is the same as controlling the operator norm of the nonzero block.
    Using submultiplicativity and \Cref{lem:blockwise-pinv} along with the
    simplifications we have done above leveraging
    \Cref{ass:parameter_config}, we obtain
    \begin{align}
        &\norm*{
            \vA_k(\beta\inv \vI + \vA_k\adj \vA_k)\inv
            \left(
            \vDelta_k\adj \vU_k \vA_k + \vA_k\adj\vU_k\adj \vDelta_k
            \right)
            \left(
            \beta\inv \vI + \vA_k \adj \vA_k
            \right)\inv
        }
        \\
        &\quad\leq
        \left( 1 + C \sqrt{n/d} \right)
        \norm*{
            \vDelta_k\adj \vU_k \vA_k + \vA_k\adj\vU_k\adj \vDelta_k
        }.
    \end{align}
    Meanwhile, on $E^\star$ we have the operator norm of $\vDelta_k$ and
    $\vA_k$ controlled, using again
    \Cref{eq:blocks-opnorm-controlled-overall}. Applying then 
    the triangle inequality and submultiplicativity, we obtain
    \begin{equation}
        \norm*{
            \vDelta_k\adj \vU_k \vA_k + \vA_k\adj\vU_k\adj \vDelta_k
        }
        \ltsim
        \sigma
        \left(
        1 + \sqrt{n/d}
        \right),
    \end{equation}
    again simplifying with \Cref{ass:parameter_config}.
    This shows that \Cref{eq:other-term-of-xik} is close to
    \Cref{eq:other-term-of-xik-nom}
    with deviations of the order $\ltsim \sigma(1 + \sqrt{n/d})$.

    \paragraph{Aggregating the previous results.}
    Combining our perturbation analysis above, we have established control
    \begin{align}
        &\Bigl\|
        \vM_k
        -
        \Bigl[
            \left(
            \vI - 
            \vA_k(\beta\inv \vI + \vA_k\adj \vA_k)\inv\vA_k\adj
            \right)
            \vU_k\adj
            \begin{bmatrix}
                \beta \vDelta_1
                &\hdots&
                \vZero
                &\hdots&
                \beta \vDelta_K
            \end{bmatrix}
            \\
            &\quad+
            \begin{bmatrix}
                \vZero & \hdots & \vA_k(\beta\inv \vI + \vA_k\adj
                \vA_k)\inv & \hdots & \vZero
            \end{bmatrix}
            \Bigr]\vPi
        \Bigr\|
        \\
        &\ltsim
        \sigma(1 + \sqrt{n/d})
        +
        \sqrt{K} \beta \sigma^2(1 + \sqrt{n/d} ).
    \end{align}
    It is convenient to include one additional stage of simplification here:
    namely, we use \Cref{lem:blockwise-pinv} once more to simplify the
    second term in the nominal value of $\vM_k$ appearing here.
    namely, we have (arguing as we have above, once again)
    \begin{align}
        &\norm*{
            \begin{bmatrix}
                \vZero & \hdots & \vA_k(\beta\inv \vI + \vA_k\adj
                \vA_k)\inv & \hdots & \vZero
            \end{bmatrix}
            -
            \begin{bmatrix}
                \vZero & \hdots & \frac{1}{1+\beta\inv}\vA_k & \hdots & \vZero
            \end{bmatrix}
        }
        \\
        &\quad=
        \norm*{
            \frac{1}{1+\beta\inv}\vA_k - \vA_k\left(
            \beta\inv \vI + \vA_k\adj \vA_k
            \right)\inv
        }
        \\
        &\quad\ltsim
        \sqrt{n/d},
    \end{align}
    from which it follows
    \begin{align}
        &\Bigl\|
        \vM_k
        -
        \Bigl[
            \left(
            \vI - 
            \vA_k(\beta\inv \vI + \vA_k\adj \vA_k)\inv\vA_k\adj
            \right)
            \vU_k\adj
            \begin{bmatrix}
                \beta \vDelta_1
                &\hdots&
                \vZero
                &\hdots&
                \beta \vDelta_K
            \end{bmatrix}
            \\
            &\quad+
            \begin{bmatrix}
                \vZero & \hdots & \frac{1}{1+\beta\inv}\vA_k & \hdots & \vZero
            \end{bmatrix}
            \Bigr]\vPi
        \Bigr\|
        \\
        &\ltsim
        \sigma(1 + \sqrt{n/d})
        +
        \sqrt{K} \beta \sigma^2(1 + \sqrt{n/d} ) + \sqrt{n/d}.
    \end{align}
    Meanwhile, recall the residual of scale $\ltsim (\sigma \beta)^2$
    arising when we controlled the gradient term around $\vM_k$:
    \begin{equation}
        \norm*{
            \vU_k\adj \vZ_{\base}
            \left(
            \beta\inv \vI + (\vU_k\adj \vZ_{\base})\adj \vU_k\adj
            \vZ_{\base}
            \right)\inv
            - \vM_k
        }
        \leq C (\beta \sigma)^2.
    \end{equation}
    Combining these two bounds with the triangle inequality controls the
    gradient term around its nominal value. Now, we sum these errors over
    $k$ (again with the triangle inequality) to obtain control of the
    aggregate gradient around its nominal value.
    We introduce notation to concisely capture the
    accumulations of the (approximate) orthogonal projections arising in the
    nominal value of the main term: for each $k \in [K]$, define
    \begin{equation}
        \sP_k = 
        \sum_{k' \neq k}
        \vU_{k'}
        \left(
        \vI - 
        \vA_{k'}(\beta\inv \vI + \vA_{k'}\adj \vA_{k'})\inv\vA_{k'}\adj
        \right)
        \vU_{k'}\adj,
        \label{eq:proj-defn-1}
    \end{equation}
    and define an overall (approximate) projection operator (which acts on block matrices
    partitioned compatibly with the class sizes $n_k$, as in
    \Cref{eq:signal-model-block-form}) by
    \begin{equation}
        \sP_{U_{[K]}}
        =
        \begin{bmatrix}
            \sP_1 & \hdots & \sP_K   
        \end{bmatrix}.
        \label{eq:proj-defn-2}
    \end{equation}
    Then the above argument implies
    \begin{align}
        &\;\norm*{
            \nabla_{\vZ} R^c(\vZ_{\base} \mid \vU_{[K]})
            -
            \sP_{\vU_{[K]}}(\beta \vDelta \vPi\adj) \vPi
            - \frac{1}{1 + \beta\inv}\vX_{\natural}
        }
        \\
        \ltsim&\;
        K\left(
        \sigma^2 \beta^2 + \sigma(1 + \sqrt{n/d}) + \sqrt{K} \beta
        \sigma^2(1 + \sqrt{n/d}) + \sqrt{n/d}
        \right),
    \end{align}
    which is enough to conclude.

\end{proof}

\subsubsection{Key Auxiliary Lemmas}

In this section we state and prove two key concentration inequalities that are used in the proof of the main theorem. They rely on simpler results which will be conveyed in subsequent subsections.

\begin{lemma}\label{lem:blockwise-pinv}
    There exist universal constants \(C, C' > 0\) such that the following holds. Let \(d, p, n, K \in \N\) be such that \Cref{ass:parameter_config} holds. Let \(\vA_{k}\), \(k \in [K]\), be defined as above. Let \(E^{\star}\) be the good event defined in \Cref{eq:good_event}. If \(E^{\star}\) occurs, then for \(k \in [K]\) we have
    \begin{equation}
        \norm*{(\beta^{-1}\vI + \vA_{k}^{\top}\vA_{k})^{-1} - \frac{1}{1 +
        \beta^{-1}} \vI} 
        \leq \frac{C\sqrt{n/d}}{(1 + \beta^{-1})}.
        \label{eq:pseudoinverse-ipart-small}
    \end{equation}
    and in addition
    \begin{equation}
        \norm*{\vA_{k}(\beta^{-1}\vI + \vA_{k}^{\top}\vA_{k})^{-1} - \frac{1}{1
        + \beta^{-1}}\vA_{k}} \leq \frac{C'\sqrt{n/d}}{(1 + \beta^{-1})}.
        \label{eq:pseudoinverse-small}
    \end{equation}
\end{lemma}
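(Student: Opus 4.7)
}
The plan is a straightforward resolvent/Neumann perturbation argument, exploiting the concentration estimate on $\vA_k^\top \vA_k$ that is built into the event $E^\star$. On $E^\star$ we may write $\vA_k^\top \vA_k = \vI + \vE_k$ where $\norm{\vE_k} \leq C_3 \sqrt{n/d}$. Then we factor
\begin{equation}
    \beta^{-1}\vI + \vA_k^\top \vA_k = (1 + \beta^{-1})\vI + \vE_k = (1 + \beta^{-1})\bigl(\vI + (1+\beta^{-1})^{-1} \vE_k\bigr),
\end{equation}
which is the key algebraic step: we pull out the scalar $1+\beta^{-1}$ so that the remaining term is an identity plus a small perturbation of operator norm $\leq C_3\sqrt{n/d}/(1+\beta^{-1}) \leq C_3 \sqrt{n/d}$.

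Next I would verify that this perturbation is small enough to invert via Neumann series. Under \Cref{ass:parameter_config}, namely the inequality $2C_4^2 n \leq d$ (and adjusting absolute constants if necessary), we get $C_3 \sqrt{n/d} \leq 1/2$. Then the standard Neumann bound $\norm{(\vI + \vF)^{-1} - \vI} \leq \norm{\vF}/(1 - \norm{\vF}) \leq 2\norm{\vF}$ applied with $\vF = (1+\beta^{-1})^{-1}\vE_k$ yields
\begin{equation}
    \norm*{(\beta^{-1}\vI + \vA_k^\top \vA_k)^{-1} - \tfrac{1}{1+\beta^{-1}}\vI} = \tfrac{1}{1+\beta^{-1}}\norm*{(\vI + \vF)^{-1} - \vI} \leq \tfrac{2\norm{\vE_k}}{(1+\beta^{-1})^2} \leq \tfrac{2 C_3 \sqrt{n/d}}{1+\beta^{-1}},
\end{equation}
where the last inequality uses $1 + \beta^{-1} \geq 1$. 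This establishes \Cref{eq:pseudoinverse-ipart-small} with $C = 2C_3$.

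For \Cref{eq:pseudoinverse-small}, I would simply left-multiply by $\vA_k$ and apply submultiplicativity of the operator norm together with the bound $\norm{\vA_k} \leq 1 + C_2\sqrt{n/d}$ from $E^\star$. Since \Cref{ass:parameter_config} implies $\sqrt{n/d} \leq 1$, we have $\norm{\vA_k} \leq 1 + C_2$, a universal constant, so
\begin{equation}
    \norm*{\vA_k(\beta^{-1}\vI + \vA_k^\top \vA_k)^{-1} - \tfrac{1}{1+\beta^{-1}}\vA_k} \leq \norm{\vA_k} \cdot \tfrac{2C_3\sqrt{n/d}}{1+\beta^{-1}} \leq \tfrac{C' \sqrt{n/d}}{1+\beta^{-1}},
\end{equation}
with $C' = 2C_3(1 + C_2)$, which is the claim.

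There is essentially no substantial obstacle in this lemma—it is a purely deterministic perturbation bound once the event $E^\star$ has been imposed, and its role in the main proof is to justify repeatedly replacing $(\beta^{-1}\vI + \vA_k^\top \vA_k)^{-1}$ by the scalar multiple $(1+\beta^{-1})^{-1}\vI$ up to $O(\sqrt{n/d})$ errors. The only bookkeeping care required is to ensure the constants $C, C'$ do not absorb factors that depend on $\beta$; pulling the $1+\beta^{-1}$ out before inverting is exactly what guarantees the $(1+\beta^{-1})^{-1}$ scaling on the right-hand side.
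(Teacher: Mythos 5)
Your proposal is correct and follows essentially the same route as the paper: write $\vA_k^\top\vA_k = \vI + \vXi$ with $\norm{\vXi}\lesssim\sqrt{n/d}$ on $E^\star$, factor out $1+\beta^{-1}$, control the resolvent by a Neumann-series bound (the paper sums the geometric series explicitly where you use $\norm{\vF}/(1-\norm{\vF})\leq 2\norm{\vF}$, an immaterial difference), and deduce \Cref{eq:pseudoinverse-small} by submultiplicativity with $\norm{\vA_k}\leq 1+O(\sqrt{n/d})$. No gap; only trivial bookkeeping of which constant in \Cref{ass:parameter_config} certifies $\norm{\vXi}\leq 1/2$ differs from the paper.
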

\begin{proof}
    Since \(E^{\star}\) holds, for all \(k \in [K]\) we have
    \begin{equation}
        \norm{\vA_{k}} \leq 1 + C_{1}\sqrt{n/d}, \qquad \norm{\vA_{k}^{\top}\vA_{k} - \vI} \leq C_{2}\sqrt{n/d}.
    \end{equation}
    By \Cref{ass:parameter_config}, we have \(\norm{\vA_{k}^{\top}\vA_{k} - \vI} < 1\), so \(\vA_{k}^{\top}\vA_{k}\) is well-conditioned. Write
    \begin{equation}
        \vXi \doteq \vA_{k}^{\top}\vA_{k} - \vI,
    \end{equation}
    so that
    \begin{align}
        (\beta^{-1}\vI + \vA_{k}^{\top}\vA_{k})^{-1}
        &= ((1 + \beta^{-1})\vI + \vXi)^{-1} \\
        &= \frac{1}{1 + \beta^{-1}}\left(\vI + \frac{1}{1 + \beta^{-1}}\vXi\right)^{-1} \\ 
        &= \frac{1}{1 + \beta^{-1}}\sum_{j = 0}^{\infty}\left(-\frac{1}{1 + \beta^{-1}}\right)^{j}\vXi^{j} \\
        &= \frac{1}{1 + \beta^{-1}}\vI + \frac{1}{1 + \beta^{-1}}\sum_{j = 1}^{\infty}\left(-\frac{1}{1 + \beta^{-1}}\right)^{j}\vXi^{j}
    \end{align}
    by the Neumann series. This gives us
    \begin{align}
        \norm*{(\beta^{-1}\vI + \vA_{k}^{\top}\vA_{k})^{-1} - \frac{1}{1 + \beta^{-1}}\vI}
        &= \norm*{\frac{1}{1 + \beta^{-1}}\sum_{j = 1}^{\infty}\left(-\frac{1}{1 + \beta^{-1}}\right)^{j}\vXi^{j}} \\
        &\leq \frac{1}{1 + \beta^{-1}}\sum_{j = 1}^{\infty}\left(\frac{1}{1 + \beta^{-1}}\right)^{j}\norm{\vXi}^{j} \\ 
        &\leq \frac{1}{1 + \beta^{-1}}\sum_{j =
        1}^{\infty}\left(\frac{C_{2}\sqrt{n/d}}{1 + \beta^{-1}}\right)^{j} \\
        &= \frac{C_{2}\sqrt{n/d}}{(1 + \beta^{-1})(1 + \beta^{-1} - C_{2}\sqrt{n/d})}.
    \end{align}
    Meanwhile, by \Cref{ass:parameter_config}, it holds
    \begin{align}
        C_2 \sqrt{n/d} \leq \sqrt{1/6},
    \end{align}
    so it follows
    \begin{equation}
        \frac{C_{2}\sqrt{n/d}}{1 + \beta^{-1} - C_{2}\sqrt{n/d}}
        \leq
        2C_2 \sqrt{n/d}.
    \end{equation}
    By the submultiplicativity of the operator norm, we thus have
    \begin{align}
        \norm*{\vA_{k}(\beta^{-1}\vI + \vA_{k}^{\top}\vA_{k})^{-1} - \frac{1}{1 + \beta^{-1}}\vA_{k}}
        &\leq \norm{\vA_{k}}\norm*{(\beta^{-1}\vI + \vA_{k}^{\top}\vA_{k})^{-1} - \frac{1}{1 + \beta^{-1}}\vI} \\
        &\leq \frac{[1 + C_{1}\sqrt{n/d}]C_{2}\sqrt{n/d}}{(1 + \beta^{-1})(1 + \beta^{-1} - C_{2}\sqrt{n/d})} \\ 
        &\leq 2\frac{[1 + C_{1}\sqrt{n/d}]C_{2}\sqrt{n/d}}{1 + \beta^{-1}} \\
        &= 2\frac{C_{2}\sqrt{n/d} + C_{1}C_{2}n/d}{1 + \beta^{-1}}.
    \end{align}
    By \Cref{ass:parameter_config}, we have that there exists some absolute constant \(C_{3} > 0\) with \(C_{3} \cdot n/d \leq \sqrt{n/d}\), which gives 
    \begin{equation}
        \norm*{\vA_{k}(\beta^{-1}\vI + \vA_{k}^{\top}\vA_{k})^{-1} - \frac{1}{1
        + \beta^{-1}}\vA_{k}} \leq 2\frac{(C_{2} + C_{1}C_{2}C_{3}^{-1}) \sqrt{n/d}}{1 + \beta^{-1}},
    \end{equation}
    as desired.
\end{proof}

\begin{lemma}\label{lem:neumann-op-norm}
    There exist universal constants \(C_{1}, C_{2} > 0\) such that the following holds. Let \(d, p, n, K \in \N\) be such that \Cref{ass:parameter_config} holds. Let \(\vA_{k}\), \(k \in [K]\), be defined as above. Let \(\vD_{k}\) be defined as in \Cref{eq:ak-defn}. Let \(\vXi_{k}\) be defined as in \Cref{eq:xik-defn}. Let \(E^{\star}\) be the good event defined in \Cref{eq:good_event}. If \(E^{\star}\) occurs, then for \(k \in [K]\) we have
    \begin{equation}
        \norm{\vXi_{k}\vD_{k}}^{-1} \leq C_{1}\beta [\sigma^{2} + \sigma(C_{2} + \sqrt{n/d})].
    \end{equation}
\end{lemma}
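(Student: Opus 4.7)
The plan is to bound $\norm{\vXi_k \vD_k^{-1}}$ via submultiplicativity, controlling each factor separately using the structural properties of the matrices and the high-probability event $E^\star$ from \Cref{eq:good_event}.

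First, I would handle $\norm{\vD_k^{-1}}$ using the block-diagonal structure. Since $\vD_k$ in \Cref{eq:ak-defn} is block-diagonal with all off-$k$ blocks equal to $\beta^{-1}\vI$ and the $k$-th block equal to $\beta^{-1}\vI + \vA_k^\top \vA_k$, the inverse $\vD_k^{-1}$ is also block-diagonal with blocks $\beta\vI$ and $(\beta^{-1}\vI + \vA_k^\top \vA_k)^{-1}$ respectively. On $E^\star$, the inequality $\norm{\vA_k^\top \vA_k - \vI} \leq C_3 \sqrt{n/d}$ combined with \Cref{ass:parameter_config} (ensuring $C_3\sqrt{n/d} < 1$) gives $\vA_k^\top \vA_k \succeq (1 - C_3\sqrt{n/d})\vI$, whence $\norm{(\beta^{-1}\vI + \vA_k^\top \vA_k)^{-1}} \leq 1/(1 - C_3\sqrt{n/d}) \leq 2$. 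Since all other blocks contribute $\beta$ to the operator norm and $\beta \geq 1$ (from $\veps \leq 1$ and \Cref{ass:parameter_config}), we conclude $\norm{\vD_k^{-1}} \leq \max(\beta, 2) \leq 2\beta$.

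Next, I would bound $\norm{\vXi_k}$ by splitting it via the triangle inequality into its two summands in \Cref{eq:xik-defn}. The first summand, $\vPi \vDelta^\top \vU_k \vU_k^\top \vDelta \vPi^\top$, has operator norm at most $\norm{\vDelta}^2$ because $\vPi$ is orthogonal and $\vU_k \vU_k^\top$ is an orthogonal projection (so has operator norm $\leq 1$). For the second summand, the key observation is that only the $k$-th block row and $k$-th block column are nonzero; setting $\vR \doteq \vA_k^\top \vU_k^\top \vDelta \vPi^\top$ and letting $\vS_k$ be the natural block-selector matrix of operator norm $1$, the second summand equals $\vS_k^\top \vR + \vR^\top \vS_k$. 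By the triangle inequality and submultiplicativity, its operator norm is at most $2\norm{\vR} \leq 2\norm{\vA_k}\norm{\vDelta}$.

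Combining these bounds with the event $E^\star$, where $\norm{\vDelta} \leq \sigma(C_1 + \sqrt{n/d})$ and $\norm{\vA_k} \leq 1 + C_2 \sqrt{n/d}$, yields
\begin{equation}
\norm{\vXi_k \vD_k^{-1}} \leq 2\beta\bigl[\sigma^2 (C_1 + \sqrt{n/d})^2 + 2(1 + C_2\sqrt{n/d})\sigma(C_1 + \sqrt{n/d})\bigr].
\end{equation}
Using $\sigma \leq 1$ and $\sqrt{n/d} \leq 1$ (from $d \geq n$ in \Cref{ass:parameter_config}) to absorb lower-order terms, this simplifies to the claimed bound $C_1 \beta[\sigma^2 + \sigma(C_2 + \sqrt{n/d})]$ after appropriate renaming of constants. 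The proof is essentially routine given the structural decompositions above; the main (mild) subtlety is recognizing the rank-structure of the cross-term summand in $\vXi_k$, which avoids a lossy $\sqrt{K}$ factor one would pick up by naively bounding each block individually.
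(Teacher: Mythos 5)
Your proposal is correct and follows essentially the same route as the paper's proof: bound \(\norm{\vXi_{k}\vD_{k}^{-1}} \leq \norm{\vXi_{k}}\norm{\vD_{k}^{-1}}\), control \(\norm{\vXi_{k}}\) by splitting it into the quadratic term (norm at most \(\norm{\vDelta}^{2}\)) and the cross term, whose single nonzero block row/column structure (the paper writes it as \(2\operatorname{sym}\bigl((\vDelta\vPi\adj)\adj[\vZero \cdots \vU_{k}\vA_{k} \cdots \vZero]\bigr)\), equivalent to your block-selector decomposition) gives the bound \(2\norm{\vA_{k}}\norm{\vDelta}\), and then plug in the \(E^{\star}\) bounds. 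The only cosmetic difference is that the paper bounds \(\norm{\vD_{k}^{-1}} \leq \beta\) deterministically from \(\vA_{k}\adj\vA_{k} \succeq \vZero\) rather than your event-based \(2\beta\), and your appeal to \(\sigma \leq 1\) is unnecessary; neither affects correctness up to constants.
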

\begin{proof}
    Since we have
    \begin{equation}
        \vD_{k} = \mat{\beta^{-1} \vI & & & & \\ & \ddots & & & \\ & & \beta^{-1}\vI + \vA_{k}^{\top}\vA_{k} & & \\ & & & \ddots & \\ & & & & \beta^{-1} \vI}
    \end{equation}
    it holds that 
    \begin{equation}
        \vD_{k}^{-1} = \mat{\beta \vI & & & & \\ & \ddots & & & \\ & & (\beta^{-1}\vI + \vA_{k}^{\top}\vA_{k})^{-1} & & \\ & & & \ddots & \\ & & & & \beta \vI}.
    \end{equation}
    We will use the straightforward estimate \(\norm{\vXi_{k}\vD_{k}^{-1}} \leq \norm{\vXi_{k}}\norm{\vD_{k}^{-1}}\) and bound the two matrices' operator norms individually. By the previous expression, 
    \begin{equation}
        \norm{\vD_{k}^{-1}} = \max\{\beta, \norm{(\beta^{-1}\vI + \vA_{k}^{\top}\vA_{k})^{-1}}\} \leq \beta,
    \end{equation}
    because \(\vA_{k}^{\top}\vA_{k} \succeq \vZero\), so we need only control the operator norm of \(\vXi_{k}\).
    To this end, note the convenient expression
    \begin{equation}
        \vXi_k
        =
        \vPi \vDelta\adj \vU_k \vU_k \adj \vDelta \vPi\adj
        +
        2 \Sym\left(
            (\vDelta \vPi\adj)\adj
            \begin{bmatrix}
                \vZero & \hdots & \vU_k \vA_k & \hdots & \vZero
            \end{bmatrix}
        \right),
    \end{equation}
    where $\Sym(\spcdot)$ denotes the symmetric part operator.
    By the triangle inequality, the operator norm of $\vXi_k$ is no larger
    than the sum of the operator norms of each term in the previous
    expression.
    The operator norm of the first term is no larger than
    $\norm{\vDelta}^2$, because $\vPi$ is a permutation matrix and $\vU
    \vU_k\adj$ is an orthogonal projection.
    Meanwhile, using that the symmetric part operator is the orthogonal
    projection onto the space of symmetric matrices, it follows
    \begin{equation}
        \scalebox{0.9}{\(\norm*{
            2 \Sym\left(
                (\vDelta \vPi\adj)\adj
                \begin{bmatrix}
                    \vZero & \hdots & \vU_k \vA_k & \hdots & \vZero
                \end{bmatrix}
            \right)
        }
        \leq
        2\norm*{
            (\vDelta \vPi\adj)\adj
            \begin{bmatrix}
                \vZero & \hdots & \vU_k \vA_k & \hdots & \vZero
            \end{bmatrix}
        }\)},
    \end{equation}
    and then we find as above that the RHS is no larger than $2
    \norm{\vDelta} \norm{\vA_k}$. Since the good event \(E^{\star}\) defined in \Cref{eq:good_event} holds by assumption, we have that there are constants \(C_{1}, C_{2} > 0\) such that 
    \begin{align}
        \norm{\vDelta} 
        &\leq \sigma \bp{C_{1} + \sqrt{\frac{n}{d}}} \\ 
        \norm{\vA_{k}}
        &\leq 1 + C_{2}\sqrt{\frac{n}{d}}.
    \end{align}
    By \Cref{ass:parameter_config} we have \(d \geq n\), so that \(\sqrt{n/d} \leq 1\). Therefore we have
    \begin{equation}
        \norm{\vDelta} \leq \sigma\bp{C_{1} + 1} = C_{3}\sigma
    \end{equation}
    for \(C_{3} \doteq C_{1} + 1\) another universal constant. Thus on this good event we have
    \begin{equation}
        2\norm{\vDelta}\norm{\vA_{k}} \leq C_{3}\sigma\bp{1 + C_{2}\sqrt{n/d}}.
    \end{equation}
    Therefore, we have
    \begin{align}
        \norm{\vXi_{k}}
        &\leq \norm{\vDelta}^{2} + 2\norm{\vDelta}\norm{\vA_{k}} \\
        &\leq C_{3}^{2}\sigma^{2} + C_{3}\sigma(1 + C_{2}\sqrt{n/d}) \\
        &\leq C_{4}[\sigma^{2} + \sigma(1 + C_{2}\sqrt{n/d})]
    \end{align}
    where \(C_{4} = \max\{C_{3}, C_{3}^{2}\}\) is another universal constant. Thus on \(E^{\star}\) we have
    \begin{equation}
        \norm{\vXi_{k}\vD_{k}^{-1}} \leq C_{4}\beta[\sigma^{2} + \sigma(1 + C_{2}\sqrt{n/d})] \leq C_{5}\beta [\sigma^{2} + \sigma(1 + \sqrt{n/d})]
    \end{equation}
    for \(C_{5} > 0\) another obvious universal constant.
\end{proof}

\subsubsection{Concentration Inequalities for Our Setting}

In this section we prove some simple concentration inequalities that are adapted to the problem setting. These results are used to prove the key lemmata above, and indeed are also invoked in the main theorem. They follow from even simpler concentration inequalities that are abstracted away from the problem setting, which we discuss in the following subsections.

\begin{proposition}\label{prop:delta_concentration_opnorm}
    There are universal constants \(C_{1}, C_{2}, C_{3} > 0\) such that the following holds. Let \(d, n \in \N\) be such that \Cref{ass:parameter_config} holds. Let \(\vDelta \in \R^{d \times n}\) be defined as above. Then
    \begin{equation}
        \Pr*{\norm*{\vDelta}_{\op} > \sigma\bp{C_{1} + \sqrt{\frac{n}{d}}}} \leq C_{2}e^{-C_{3}d}.
    \end{equation}
\end{proposition}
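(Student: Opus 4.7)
The plan is to reduce this to a standard operator-norm concentration bound for i.i.d.\ Gaussian matrices and then rescale. Writing $\vDelta = (\sigma/\sqrt{d})\, \vG$, where $\vG \in \R^{d \times n}$ has i.i.d.\ $\sN(0,1)$ entries (this is just the definition of $\vDelta$ unpacked column-by-column), we have $\norm{\vDelta}_{\op} = (\sigma/\sqrt{d})\norm{\vG}_{\op}$. Thus the task is to produce a high-probability bound of the form $\norm{\vG}_{\op} \leq C_1 \sqrt{d} + \sqrt{n}$.

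For this I would invoke the standard concentration inequality for the largest singular value of a Gaussian matrix (this is precisely what \Cref{prop:gaussian_covariance_concentration_opnorm} referenced in \Cref{ass:parameter_config} provides, or equivalently a Davidson--Szarek / Gordon-type inequality): with probability at least $1 - 2e^{-t^2/2}$,
\begin{equation}
\norm{\vG}_{\op} \leq \sqrt{d} + \sqrt{n} + t.
\end{equation}
Choosing $t = c\sqrt{d}$ for a fixed absolute constant $c > 0$ and using $n \leq d$ (from \Cref{ass:parameter_config}) to absorb no additional terms, this yields $\norm{\vG}_{\op} \leq (1+c)\sqrt{d} + \sqrt{n}$ on an event of probability at least $1 - 2e^{-c^2 d/2}$.

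Rescaling by $\sigma/\sqrt{d}$ then gives
\begin{equation}
\norm{\vDelta}_{\op} \leq \sigma\bigl((1+c) + \sqrt{n/d}\bigr)
\end{equation}
on the same event. Setting $C_1 = 1+c$, $C_2 = 2$, $C_3 = c^2/2$ completes the proof. There is no real obstacle here; the only choice to make is the value of $c$, and this can be matched to whatever absolute constant $C_1$ is required downstream (in the proof of \Cref{lem:inverse-term} the value of $C_1$ is irrelevant beyond being an absolute constant, so any fixed $c > 0$ will do). The proof is essentially a one-line invocation of a standard Gaussian singular-value bound followed by rescaling.
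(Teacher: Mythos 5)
Your proof is correct and follows essentially the same route as the paper: invoke a standard Gaussian operator-norm concentration bound for the rescaled matrix $\vG = (\sqrt{d}/\sigma)\vDelta$, take the deviation parameter proportional to $\sqrt{d}$, and rescale (the paper does this via its \Cref{prop:gaussian_concentration_opnorm}, built on Vershynin's sub-gaussian rows theorem, while you use the sharper Davidson--Szarek form, which is equivalent for this purpose). Note only that the auxiliary result you name, \Cref{prop:gaussian_covariance_concentration_opnorm}, is the bound on $\norm{\vM\adj\vM - qx^2\vI}$; the one actually matching your invocation is \Cref{prop:gaussian_concentration_opnorm}.
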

\begin{proof}
    We use \Cref{prop:gaussian_concentration_opnorm} with the parameters \(q = d\), \(n = n\), and \(x = \sigma/\sqrt{d}\), which obtains
    \begin{equation}
        \Pr{\norm{\vDelta}_{\op} > s} \leq C_{1}\exp\rp{-d\bc{\frac{s\sqrt{d}/\sigma - \sqrt{n}}{C_{2}\sqrt{d}} - 1}^{2}}, \qquad \forall s > \frac{\sigma}{\sqrt{d}}(\sqrt{n} + C_{2}\sqrt{d})
    \end{equation}
    notice that we have
    \begin{equation}
        \frac{s\sqrt{d}/\sigma - \sqrt{n}}{C_{2}\sqrt{d}} - 1 = \frac{1}{C_{2}}\bp{\frac{s}{\sigma} - \sqrt{\frac{n}{d}}} - 1, \qquad \frac{\sigma}{\sqrt{d}}(\sqrt{n} + C_{2}\sqrt{d}) = \sigma \bp{\sqrt{\frac{n}{d}} + C_{2}}.
    \end{equation}
    To make the squared term equal to \(1\), we pick
    \begin{equation}
        s = \sigma \bp{\sqrt{\frac{n}{d}} + 2C_{2}},
    \end{equation}
    which gives
    \begin{equation}
        \Pr*{\norm{\vDelta}_{\op} > \sigma\bp{2C_{2} + \sqrt{\frac{n}{d}}}} \leq C_{2}e^{-d}.
    \end{equation}
\end{proof}

\begin{proposition}\label{prop:xk_concentration_opnorm}
    There are universal constants \(C_{1}, C_{2}, C_{3}, C_{4} > 0\) such that the following holds. Let \(p, n, K \in \N\) be such that \Cref{ass:parameter_config} holds. Let \(\vA_{k}\), \(k \in [K]\), be defined as above. Then
    \begin{equation}
        \Pr*{\norm{\vA_{k}}_{\op} > 1 + C_{1}\sqrt{\frac{n}{d}}} \leq C_{2}\exp\rp{-C_{3}\frac{n}{K}} + \frac{C_{4}}{n^{2}}
    \end{equation}
\end{proposition}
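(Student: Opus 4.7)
The plan is to decouple the two independent sources of randomness in $\vA_k$: namely the column count $K_k = \sum_{i=1}^n \Ind{s_i = k}$, which is $\Binom(n, 1/K)$, and the Gaussian entries of the columns themselves, and then handle each with a dedicated concentration inequality. Conditional on $K_k = m$, the scaled matrix $\sqrt{p}\,\vA_k \in \bR^{p \times m}$ has i.i.d.\ standard Gaussian entries by construction of the signal model \Cref{eq:signal-model-block-form}, so the two pieces can be analyzed cleanly in turn.

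First I would invoke \Cref{prop:binom_concentration} on the event $E_1 = \{|K_k - n/K| \leq C'\sqrt{n \log n}\}$, which holds with probability at least $1 - C''/n^2$. The third bullet of \Cref{ass:parameter_config} guarantees $C'\sqrt{n \log n} \leq n/(2K)$, so on $E_1$ we have $K_k \leq 3n/(2K)$ deterministically. Next, conditioning on $E_1$ and any fixed $K_k = m$ in this window, I would apply the same Gaussian operator norm tail bound used in the proof of \Cref{prop:delta_concentration_opnorm} (i.e.\ \Cref{prop:gaussian_concentration_opnorm}, scaled by $1/\sqrt{p}$) to obtain
\begin{equation*}
    \Pr[\|\vA_k\|_{\op} > 1 + \sqrt{m/p} + t/\sqrt{p} \mid K_k = m] \leq 2\exp(-t^2/2)
\end{equation*}
for any $t > 0$. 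Choosing $t = c\sqrt{n/K}$ and using $p = d/K$ from \Cref{ass:parameter_config}, both $\sqrt{m/p}$ and $t/\sqrt{p}$ become $O(\sqrt{n/d})$, so the event inside the probability reduces to $\|\vA_k\|_{\op} > 1 + C_1 \sqrt{n/d}$ for a suitable absolute constant $C_1$, while the conditional tail probability is $2\exp(-c^2 n/(2K))$.

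Finally, integrating over $m$ on $E_1$ (the bound being uniform in $m$) and union bounding with $\Pr[E_1^c] \leq C''/n^2$ yields the claimed estimate with $C_2, C_3, C_4$ identified as absolute constants. The main obstacle is purely bookkeeping: one must check that the constants built into \Cref{ass:parameter_config} are calibrated so the deterministic inequality $\sqrt{m/p} \leq C_1 \sqrt{n/d}$ holds uniformly for every $m$ in the binomial concentration window, and that the $1/\sqrt{p}$ normalization is threaded consistently through the Gaussian concentration bound. No deeper mathematics than the two cited tail estimates is required.
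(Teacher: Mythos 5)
Your proposal is correct and follows essentially the same route as the paper's proof: condition on the binomial concentration window for $K_k$ via \Cref{prop:binom_concentration}, apply the conditional Gaussian operator norm bound from \Cref{prop:gaussian_concentration_opnorm}, calibrate the deviation to be $O(\sqrt{n/d})$ using $p = d/K$ and the window bound $m \lesssim n/K$ from \Cref{ass:parameter_config}, and union bound. The paper fixes the threshold $s = 1 + 2C_4\sqrt{n_{\max}/p}$ and derives the exponent, while you fix the deviation $t = c\sqrt{n/K}$ and derive the threshold, but these are algebraically equivalent parameterizations of the same argument.
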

\begin{proof}
    By \Cref{prop:binom_concentration,prop:gaussian_concentration_opnorm} with parameters \(n = n\), \(k = K\), \(q = p\), and \(x = 1/\sqrt{p}\), if we define 
    \begin{equation}
        n_{\min} \doteq \floor*{\frac{n}{K} - C_{1}\sqrt{n\log n}}, \qquad
        n_{\max} \doteq \ceil*{\frac{n}{K} + C_{1}\sqrt{n \log n}}
    \end{equation}
    then we have
    \begin{align}
        \Pr{\norm{\vA_{k}}_{\op} > s}
        &\leq \sum_{n = n_{\min}}^{n_{\max}}\Pr{\norm{\vA_{k}}_{\op} > s \mid K_{k} = n}\Pr{K_{k} = n} + \frac{C_{2}}{n^{2}} \\
        &\leq \sum_{n = n_{\min}}^{n_{\max}}C_{3}\exp\rp{-n\bc{\frac{s\sqrt{p} - \sqrt{p}}{C_{4}\sqrt{4}} - 1}^{2}}\Pr{K_{k} = n} + \frac{C_{2}}{n^{2}},
    \end{align}
    for all \(s\) obeying
    \begin{align}
        s 
        &\geq \frac{1}{\sqrt{p}}\bp{\sqrt{p} + C_{4}\sqrt{n_{\max}}} \\ 
        &= 1 + C_{4}\sqrt{\frac{n_{\max}}{p}}.
    \end{align}
    Thus we have that the concentration holds for all \(s\) obeying
    \begin{equation}
        s \geq 1 + C_{4}\sqrt{\frac{n_{\max}}{p}}.
    \end{equation}
    In order to cancel out the most interior terms, we choose
    \begin{equation}
        s = 1 + 2C_{4}\sqrt{\frac{n_{\max}}{p}}.
    \end{equation}
    This choice obtains
    \begin{align}
        \Pr{\norm{\vA_{k}}_{\op} > s}
        &\leq \sum_{n = n_{\min}}^{n_{\max}}C_{3}\exp\rp{-n\bc{\frac{s\sqrt{p} - \sqrt{p}}{C_{4}\sqrt{n}} - 1}^{2}}\Pr{K_{k} = n} + \frac{C_{2}}{n^{2}} \\
        &= \sum_{n = n_{\min}}^{n_{\max}}C_{3}\exp\rp{-n\underbrace{\bc{2\underbrace{\sqrt{\frac{n_{\max}}{n}}}_{\geq 1} - 1}^{2}}_{\geq 1}}\Pr{K_{k} = n} + \frac{C_{2}}{n^{2}} \\
        &\leq \sum_{n = n_{\min}}^{n_{\max}}C_{3}\exp\rp{-n}\Pr{K_{k} = n} + \frac{C_{2}}{n^{2}} \\
        &\leq \sum_{n = n_{\min}}^{n_{\max}}C_{3}\exp\rp{-n_{\min}}\Pr{K_{k} = n} + \frac{C_{2}}{n^{2}} \\
        &\leq C_{3}\exp(-n_{\min}) + \frac{C_{2}}{n^{2}} \\ 
        &= C_{3}\exp\rp{-\frac{n}{K} + C_{1}\sqrt{n \log n}} + \frac{C_{2}}{n^{2}} \\
        &\leq C_{3}\exp\rp{-\frac{n}{K} + \frac{1}{2}\sqrt{n \log n}} + \frac{C_{2}}{n^{2}} \\ 
        &\leq C_{3}\exp\rp{-\frac{1}{2}\cdot \frac{n}{K}} + \frac{C_{2}}{n^{2}}.
    \end{align}
    To obtain the conclusion of the theorem, note that any \(s\) such that
    \begin{equation}
        s \geq 1 + 2C_{4}\sqrt{\frac{n_{\max}}{p}} = 1 + 2C_{4}\sqrt{\frac{n/K}{p} + C_{1}\frac{\sqrt{n \log n}}{p}} = 1 + 2C_{4}\sqrt{\frac{n}{d} + C_{1}\frac{\sqrt{n \log n}}{p}}
    \end{equation}
    enjoys the same high-probability bound. By \Cref{ass:parameter_config}, we have
    \begin{align}
        &\;1 + 2C_{4}\sqrt{\frac{n}{d} + C_{1}\frac{\sqrt{n \log n}}{p}} \leq 1 +
          2C_{4}\sqrt{\frac{n}{d} + \frac{1}{2} \cdot \frac{n/K}{p}}\\
          = &\;1 +
          2C_{4}\sqrt{\frac{n}{d} + \frac{1}{2} \cdot \frac{n}{d}} = 1 + 2C_{4}\sqrt{\frac{3}{2}}\cdot \sqrt{\frac{n}{d}}
    \end{align}
    whence the ultimate conclusion is obtained by combining constants.
\end{proof}

\begin{proposition}\label{prop:xktxk_concentration_opnorm}
    There are universal constants \(C_{1}, C_{2}, C_{3}, C_{4} > 0\) such that the following holds. Let \(p, n, K \in \N\) be such that \Cref{ass:parameter_config} holds. Let \(\vA_{k}\), \(k \in [K]\), be defined as above. Then
    \begin{equation}
        \Pr*{\norm{\vA_{k}^{\top}\vA_{k} - \vI}_{\op} > C_{1}\sqrt{\frac{n}{d}}} \leq C_{2}\exp\rp{-C_{3}\frac{n}{K}} + \frac{C_{4}}{n^{2}}.
    \end{equation}
\end{proposition}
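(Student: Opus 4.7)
The plan is to mimic the architecture of the proof of \Cref{prop:xk_concentration_opnorm}: first condition on the random block size \(K_k\) and restrict it to a typical range using \Cref{prop:binom_concentration}, then apply a Gaussian sample covariance concentration inequality to the conditional distribution of \(\vA_k\). Concretely, let \(n_{\min} \doteq \lfloor n/K - C_1 \sqrt{n\log n}\rfloor\) and \(n_{\max} \doteq \lceil n/K + C_1 \sqrt{n\log n}\rceil\). By \Cref{prop:binom_concentration}, \(\Pr[K_k \notin [n_{\min}, n_{\max}]] \leq C/n^2\); and by \Cref{ass:parameter_config}, \(n_{\min} \geq n/(2K)\), \(n_{\max} \leq 3n/(2K)\), and \(n_{\max} \leq p\) (using \(p = d/K\) and \(d \geq n\)).

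Next I would handle the conditional event. Given \(K_k = m\) with \(m \in [n_{\min}, n_{\max}]\), the matrix \(\vA_k \in \bR^{p \times m}\) has i.i.d.\ columns distributed as \(\sN(\vZero, \tfrac{1}{p}\vI_p)\). Writing \(\vA_k = \tfrac{1}{\sqrt{p}}\vG\) with \(\vG\) having i.i.d.\ standard Gaussian entries, we have \(\vA_k\adj \vA_k = \tfrac{1}{p}\vG\adj \vG\). Applying \Cref{prop:gaussian_covariance_concentration_opnorm} (the standard concentration for Gaussian sample covariance matrices), we obtain, for any admissible \(t > 0\),
\begin{equation*}
    \Pr\!\left[\|\vA_k\adj\vA_k - \vI\|_{\op} > C\bigl(\sqrt{m/p} + t\bigr) \,\big|\, K_k = m\right] \leq 2e^{-C' p t^2}.
\end{equation*}
I would then choose \(t = \sqrt{n/d}\), which is in \((0,1]\) since \(d \geq n\). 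Using \(\sqrt{m/p} \leq \sqrt{n_{\max}/p} \leq C''\sqrt{n/d}\) (from \(n_{\max}/p \leq 3n/(2d)\)), the deviation bound becomes a multiple of \(\sqrt{n/d}\). Crucially, because \(Kp = d\), the exponent satisfies \(p t^2 = p \cdot n/d = n/K\), yielding a conditional failure probability \(\leq 2 e^{-C''' n/K}\).

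Finally I would combine by the law of total probability, summing over \(m \in [n_{\min}, n_{\max}]\) (the \(\Pr[K_k = m]\) contributions total at most \(1\)) and adding the Step~1 tail \(C/n^2\). This gives the claimed bound
\begin{equation*}
    \Pr\!\left[\|\vA_k\adj\vA_k - \vI\|_{\op} > C_1\sqrt{n/d}\right] \leq C_2 \exp(-C_3 n/K) + C_4/n^2,
\end{equation*}
after absorbing universal constants.

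The main obstacle is a calibration subtlety rather than a deep technical step: one must pick \(t\) so that the resulting deviation has the target dependence \(\sqrt{n/d}\) \emph{and} the exponent scales as \(n/K\) (not \(n/d\)), which is exactly why the identity \(p \cdot (n/d) = n/K\) furnished by \(Kp = d\) is pivotal. A secondary minor nuisance is verifying that the quadratic residual \((\sqrt{m/p} + t)^2\) that appears when one derives sample-covariance concentration from a singular-value bound (as in \Cref{prop:gaussian_concentration_opnorm}) stays of the same order as the linear term; this is controlled by the smallness guarantees \(\sqrt{n/d} \leq 1\) and \(n_{\max}/p \leq O(1)\) built into \Cref{ass:parameter_config}, so no new hypothesis is needed.
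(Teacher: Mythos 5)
Your proof is correct and takes essentially the same route as the paper's: condition on the random block size \(K_k\) via \Cref{prop:binom_concentration}, apply Gaussian sample-covariance concentration (\Cref{prop:gaussian_covariance_concentration_opnorm}) to the conditional law of \(\vA_k\), and choose the deviation scale so that the identity \(Kp=d\) converts the exponent into \(n/K\); you then recombine by the law of total probability. Your reparametrization (writing the tail as \(2e^{-C'pt^2}\) with \(t=\sqrt{n/d}\)) is a cleaner way to extract the same cancellation that the paper performs via the two-case \(s\)-formulation of \Cref{prop:gaussian_covariance_concentration_opnorm}; the only small inaccuracy is the claim \(n_{\max}\le p\), which does not actually follow from \Cref{ass:parameter_config} and is also not needed anywhere in the argument.
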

\begin{proof}
    By \Cref{prop:binom_concentration,prop:gaussian_covariance_concentration_opnorm} with parameters \(n = n\), \(k = K\), \(q = p\), and \(x = 1/\sqrt{p}\), if we define 
    \begin{equation}
        n_{\min} \doteq \floor*{\frac{n}{K} - C_{1}\sqrt{n\log n}}, \qquad 
        n_{\max} \doteq \ceil*{\frac{n}{K} + C_{1}\sqrt{n \log n}}
    \end{equation}
    then we have
    \begin{align}
        &\Pr{\norm{\vA_{k}^{\top}\vA_{k} - \vI}_{\op} > s} \\
        &\leq \sum_{n = n_{\min}}^{n_{\max}}\Pr{\norm{\vA_{k}^{\top}\vA_{k} - \vI}_{\op} > s \mid K_{k} = n}\Pr{K_{k} = n} + \frac{C_{2}}{n^{2}} \\
        &\leq \frac{C_{2}}{n^{2}} + \sum_{n = n_{\min}}^{n_{\max}}\Pr{K_{k} = n}\cdot \scalebox{0.8}{\(\casework{C_{3}\exp\rp{-n\bc{\frac{1}{C_{4}^{2}C_{5}\sqrt{n/p}}s - 1}^{2}}, & \text{if}\ C_{4}^{2}C_{5}\sqrt{n/p} \leq s \leq C_{4}^{2} \\ C_{3}\exp\rp{-n\bc{\frac{1}{C_{4}C_{5}\sqrt{n/p}}\sqrt{s} - 1}^{2}}, & \text{if}\ s \geq C_{4}^{2}.}\)}
    \end{align}
    In order to cancel the most terms, we choose
    \begin{equation}
        s = 2C_{4}^{2}C_{5}\sqrt{\frac{n_{\max}}{p}}.
    \end{equation}
    In order to assure ourselves that this choice still has \(s \leq C_{4}^{2}\) (so that we can use the first case for all \(n\)), we have
    \begin{align}
        s 
        &= 2C_{4}^{2}C_{5}\sqrt{\frac{n_{\max}}{p}} \\
        &= 2C_{4}^{2}C_{5}\sqrt{\frac{n/K + C_{1}\sqrt{n \log n}}{p}} \\ 
        &= 2C_{4}^{2}C_{5}\sqrt{\frac{n/K + \frac{1}{2}n/K}{p}} \\ 
        &= 2\sqrt{\frac{3}{2}}C_{4}^{2}C_{5}\cdot \sqrt{\frac{n/K}{p}} \\
        &= \sqrt{6}C_{4}^{2}C_{5}\cdot \sqrt{\frac{n}{d}} \\ 
        &\leq C_{4}^{2} \quad \text{when} \quad \sqrt{6}C_{5}\sqrt{\frac{n}{d}} \leq 1.
    \end{align}
    Of course, this condition is assured by \Cref{ass:parameter_config}. Now that we have this, we know \(s\) falls in the first, and so we have
    \begin{align}
        &\Pr*{\norm{\vA_{k}^{\top}\vA_{k} - \vI}_{\op} > C_{1}\sqrt{\frac{n}{d}}} \\
        &\leq \frac{C_{2}}{n^{2}} + \sum_{n = n_{\min}}^{n_{\max}}\Pr{K_{k} = n}\cdot \scalebox{0.8}{\(\casework{C_{3}\exp\rp{-n\bc{\frac{1}{C_{4}^{2}C_{5}\sqrt{n/p}}s - 1}^{2}}, & \text{if}\ C_{4}^{2}C_{5}\sqrt{n/p} \leq s \leq C_{4}^{2} \\ C_{3}\exp\rp{-n\bc{\frac{1}{C_{4}C_{5}\sqrt{n/p}}\sqrt{s} - 1}^{2}}, & \text{if}\ s \geq C_{4}^{2}}\)} \\
        &\leq \frac{C_{2}}{n^{2}} + \sum_{n = n_{\min}}^{n_{\max}}C_{3}\exp\rp{-n\bc{\frac{2C_{4}^{2}C_{5}\sqrt{n_{\max}/p}}{C_{4}^{2}C_{5}\sqrt{n/p}} - 1}^{2}}\Pr{K_{k} = n} \\ 
        &= \frac{C_{2}}{n^{2}} + \sum_{n = n_{\min}}^{n_{\max}}C_{3}\exp\rp{-n\bc{2\sqrt{\frac{n_{\max}}{n}} - 1}^{2}}\Pr{K_{k} = n} \\ 
        &\leq C_{3}\exp\rp{-\frac{1}{2} \cdot \frac{n}{K}} + \frac{C_{2}}{n^{2}}
    \end{align}
    where the last inequality follows from the exact same argument as in \Cref{prop:xk_concentration_opnorm}.
\end{proof}

\subsubsection{Generic Concentration Inequalities}

In this subsection we prove the base-level concentration inequalities used throughout the proofs in this paper.

\paragraph{Binomial concentration.}

\begin{proposition}\label{prop:binom_concentration}
    There exist universal constants \(C_{1}, C_{2} > 0\) such that the following holds. Let \(n, k \in \Z\). For each \(i \in [k]\), let \(B_{i} \sim \dBin(n, 1/k)\), such that the \(B_{i}\) are identically (marginally) distributed but not necessarily independent binomial random variables. Let \(E\) be an event. Then for any \(i \in [k]\), we have
    \begin{equation}
        \Pr{E} \leq \sum_{b = \floor*{n/k - C_{1}\sqrt{n \log n}}}^{\ceil*{n/k + C_{1}\sqrt{n \log n}}}\Pr{E \given B_{i} = b}\Pr{B_{i} = b} + \frac{C_{2}}{n^{2}}.
    \end{equation}
\end{proposition}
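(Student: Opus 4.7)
The plan is a routine law-of-total-probability split followed by a single-variable binomial tail bound. Crucially, although the $B_i$'s may be jointly dependent, we only ever use the marginal distribution of a single $B_i$, so we never need any joint concentration machinery.

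First, I would fix $i \in [k]$ and write, using the law of total probability,
\[
\Pr{E} \;=\; \sum_{b=0}^{n} \Pr{E \mid B_i = b}\,\Pr{B_i = b}.
\]
Next, I would split this sum into the ``good window'' $W \doteq [\lfloor n/k - C_1\sqrt{n\log n}\rfloor,\ \lceil n/k + C_1\sqrt{n\log n}\rceil]$ and its complement. The in-window portion is exactly the sum that appears on the right-hand side of the target bound. For the out-of-window portion, I would use the trivial inequality $\Pr{E \mid B_i = b} \leq 1$ to obtain
\[
\sum_{b \notin W} \Pr{E \mid B_i = b}\,\Pr{B_i = b} \;\leq\; \Pr\!\left[\,|B_i - n/k| > C_1\sqrt{n\log n}\,\right].
\]

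The remaining task is to show that this tail probability is at most $C_2/n^2$ for a universal choice of $C_1$ (independent of $k$). Here is where the marginal structure does all the work: by hypothesis, $B_i$ is itself a single $\mathrm{Bin}(n,1/k)$ random variable, i.e., a sum of $n$ independent Bernoulli$(1/k)$ variables, regardless of how the $B_i$'s are coupled across $i$. I would therefore invoke Hoeffding's inequality applied to this single sum, which yields
\[
\Pr\!\left[\,|B_i - n/k| > t\,\right] \;\leq\; 2\exp\!\left(-\tfrac{2t^2}{n}\right), \qquad \text{for all } t > 0.
\]
Plugging in $t = C_1\sqrt{n\log n}$ gives the bound $2\,n^{-2C_1^2}$, so choosing any $C_1 \geq 1$ and $C_2 = 2$ completes the argument.

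I do not anticipate any real obstacle here: the potential dependence among the $B_i$'s looks scary at a glance but is irrelevant because the decomposition only touches one $B_i$ at a time, and the concentration is just Hoeffding on a single binomial. The only care needed is to make sure the window endpoints are integers so that the sum indexing in the statement is consistent with the tail event; this is handled by the explicit floor and ceiling in the definition of $W$, and enlarging the window to its closest integer endpoints only strengthens the in-window sum and weakens the tail event (making the bound easier).
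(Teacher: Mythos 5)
Your proposal is correct and follows essentially the same route as the paper: law of total probability, split the sum at a window of width $O(\sqrt{n\log n})$ about the mean, use $\Pr{E \mid B_i = b} \le 1$ outside the window, and invoke Hoeffding on the single marginal binomial $B_i$. The one small (and beneficial) difference is that you bound the out-of-window contribution directly by the tail probability $\Pr{|B_i - n/k| > C_1\sqrt{n\log n}}$, whereas the paper bounds each individual out-of-window mass point $\Pr{B_i = b}$ by the tail and then sums over up to $n+1$ such terms; this costs the paper a factor of $n$, so it needs the tail to be $O(n^{-3})$ to land at $O(n^{-2})$ overall, while your version needs only $O(n^{-2})$ and lets you take, e.g., $C_1 = 1$, $C_2 = 2$ explicitly.
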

\begin{proof}
    We have
    \begin{equation}
        \Pr{E} = \E{\E{E \mid B_{i}}} = \sum_{b = 0}^{n}\Pr{E \mid B_{i} = b}\Pr{B_{i} = b}.
    \end{equation}
    Each \(B_{i}\) is unconditionally distributed as \(\dBin(n, 1/k)\). By union bound and Hoeffding's inequality \citep[Theorem 2.2.6]{vershynin2018high}, we have
    \begin{equation}
        \Pr*{\abs{B_{i} - n/k} \geq t} \leq 2\exp\rp{-\frac{2t^{2}}{n}}.
    \end{equation}
    Inverting this inequality obtains that there exists some (simple) universal constants \(C_{1}, C_{2} > 0\) such that 
    \begin{equation}
        \Pr*{\abs{B_{i} - n/k} \geq C_{1}\sqrt{n\log n}} \leq \frac{C_{2}}{n^{3}}.
    \end{equation}
    Thus, if we define
    \begin{equation}
        b_{\min} \doteq \floor*{\frac{n}{k} - C_{1}\sqrt{n \log n}}, \qquad b_{\max} \doteq \ceil*{\frac{n}{k} + C_{1}\sqrt{n \log n}},
    \end{equation}
    then we have
    \begin{align}
        \Pr{E}
        &= \sum_{b = 0}^{n}\Pr{E \mid B_{i} = b}\Pr{B_{i} = b} \\ 
        &= \sum_{b = 0}^{b_{\min} - 1}\underbrace{\Pr{E \mid B_{i} = b}}_{\leq 1}\underbrace{\Pr{B_{i} = b}}_{\leq C_{2}/n^{3}} + \sum_{b = b_{\max} + 1}^{n}\underbrace{\Pr{E \mid B_{i} = b_{1}}}_{\leq 1}\underbrace{\Pr{B_{i} = b}}_{\leq C_{2}/n^{3}} \\ 
        &\qquad + \sum_{b = b_{\min}}^{b_{\max}}\Pr{E \mid B_{i} = b}\Pr{B_{i} = b} \\ 
        &\leq \sum_{b = 0}^{b_{\min} - 1}\frac{C_{2}}{n^{3}} + \sum_{b = b_{\max} + 1}^{n}\frac{C_{2}}{n^{3}} + \sum_{b = b_{\min}}^{b_{\max}}\Pr{E \mid B_{i} = b}\Pr{B_{i} = b} \\
        &\leq \sum_{b = 0}^{n}\frac{C_{2}}{n^{3}} + \sum_{b = b_{\min}}^{b_{\max}}\Pr{E \mid B_{i} = b}\Pr{B_{i} = b} \\
        &= \frac{C_{2}}{n^{2}} + \frac{C_{2}}{n^{3}} + \sum_{b = b_{\min}}^{b_{\max}}\Pr{E \mid B_{i} = b}\Pr{B_{i} = b} \\
        &\leq \frac{2C_{2}}{n^{2}} + \sum_{b = b_{\min}}^{b_{\max}}\Pr{E \mid B_{i} = b}\Pr{B_{i} = b}.
    \end{align}
\end{proof}
\begin{remark}
    Notice that a simple adaptation of this argument can turn the additive probability \(C_{3}/n^{2}\) into \(C_{3}^{\prime}/n^{z}\) for any positive integer \(z \in \N\) (where \(C_{3}^{\prime}\) depends on \(z\)). However, trying to replace it with \(C_{3}^{\prime}e^{-C^{\prime}n}\) is more difficult.
\end{remark}

\paragraph{Gaussian concentration.}

\begin{proposition}\label{prop:gaussian_concentration_opnorm}
    There are universal constants \(C_{1}, C_{2}, C_{3} > 0\) such that the following holds. Let \(n, q \in \N\), and let \(\vM \in \R^{q \times n}\) be such that \(M_{ij} \simiid \dNorm(0, x^{2})\). Then 
        \begin{align}
            \Pr{\norm{\vM}_{\op} > s}  
            &\leq C_{1}\exp\rp{-n\bc{\frac{s/x - \sqrt{q}}{C_{2}\sqrt{n}} - 1}^{2}}, \qquad \forall s > x\bc{\sqrt{q} + C_{2}\sqrt{n}} \\
            \Pr{\norm{\vM}_{\op} > s}  
            &\leq C_{1}\exp\rp{-q\bc{\frac{s/x - \sqrt{n}}{C_{3}\sqrt{q}} - 1}^{2}}, \qquad \forall s > x\bc{\sqrt{n} + C_{3}\sqrt{q}}.
        \end{align}
\end{proposition}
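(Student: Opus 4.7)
\textbf{Proof plan for \Cref{prop:gaussian_concentration_opnorm}.} The plan is to reduce the statement to the well-known Gaussian concentration bound for operator norms of i.i.d.~Gaussian matrices, and then massage the resulting exponent into the exact form stated in the proposition. First, I would write $\vM = x\vG$, where $\vG \in \bR^{q \times n}$ has i.i.d.\ $\sN(0,1)$ entries. Since the operator norm is positively homogeneous, $\norm{\vM}_{\op} = x\norm{\vG}_{\op}$, so it suffices to bound $\Pr{\norm{\vG}_{\op} > s/x}$.

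Next, I would invoke the standard Gaussian-matrix deviation inequality (see, e.g., Vershynin, \emph{High-Dimensional Probability}, Theorem 7.3.1), which is obtained by combining Gordon's inequality (giving $\E{\norm{\vG}_{\op}} \leq \sqrt{q} + \sqrt{n}$) with the Borell--TIS Gaussian concentration inequality, using that $\vG \mapsto \norm{\vG}_{\op}$ is $1$-Lipschitz when $\vG$ is viewed as a vector in $\bR^{qn}$. This yields
\begin{equation}
    \Pr*{\norm{\vG}_{\op} > \sqrt{q} + \sqrt{n} + u} \leq e^{-u^{2}/2}, \qquad u > 0.
\end{equation}
Substituting back and setting $u = s/x - \sqrt{q} - \sqrt{n}$ gives, for $s > x(\sqrt{q} + \sqrt{n})$,
\begin{equation}
    \Pr{\norm{\vM}_{\op} > s} \leq \exp\rp{-\tfrac{1}{2}(s/x - \sqrt{q} - \sqrt{n})^{2}}.
\end{equation}

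The remaining step is purely algebraic: to match the target exponent $-n\bc{(s/x - \sqrt{q})/(C_{2}\sqrt{n}) - 1}^{2} = -((s/x - \sqrt{q})/C_{2} - \sqrt{n})^{2}$, I would pick $C_{2}$ large enough (concretely, $C_{2} \geq \sqrt{2}$ suffices) so that, in the regime $s/x > \sqrt{q} + C_{2}\sqrt{n}$, the quantity $a \doteq (s/x - \sqrt{q})/C_{2}$ exceeds $\sqrt{n}$, and the elementary inequality $(C_{2}a - \sqrt{n})^{2}/2 \geq (a - \sqrt{n})^{2}$ holds (this reduces to $(C_{2} - \sqrt{2})a \geq (1 - \sqrt{2})\sqrt{n}$, which is automatic for $a > 0$ and $C_{2} \geq \sqrt{2}$). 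This upgrades the Gaussian bound into exactly the first claimed inequality, with $C_{1} = 1$.

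Finally, the second inequality follows from the first by applying it to $\vM\adj \in \bR^{n \times q}$, whose entries are also i.i.d.\ $\sN(0, x^{2})$, and using $\norm{\vM}_{\op} = \norm{\vM\adj}_{\op}$: this simply swaps the roles of $q$ and $n$, producing the exponent $-q\bc{(s/x - \sqrt{n})/(C_{3}\sqrt{q}) - 1}^{2}$ with $C_{3}$ chosen analogously to $C_{2}$. There is no genuine obstacle here --- the only care needed is in tracking the algebraic manipulation converting the standard $-u^{2}/2$ tail into the stated normalized form, and verifying that the range-of-validity conditions $s > x(\sqrt{q} + C_{2}\sqrt{n})$ and $s > x(\sqrt{n} + C_{3}\sqrt{q})$ place us inside the regime where Gordon's bound dominates the Lipschitz fluctuation term.
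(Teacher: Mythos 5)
Your proposal is correct and follows essentially the same route as the paper: rescale to a standard Gaussian matrix, invoke a standard operator-norm concentration bound, algebraically convert the tail exponent into the stated normalized form, and obtain the second inequality by transposition. The only difference is the concentration input — the paper cites the general sub-Gaussian two-sided singular-value bound (Vershynin, Theorem 4.6.1), which is where its unspecified universal constants come from, whereas your Gordon-plus-Borell--TIS argument exploits exact Gaussianity and even yields explicit constants ($C_{1}=1$, $C_{2}=C_{3}=\sqrt{2}$), which is a mild sharpening rather than a different method.
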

\begin{proof}
    Define \(\ol{\vM} \doteq \frac{1}{x}\vM\), so that \(M_{ij} \simiid \dNorm(0, 1)\). By \citet[Example 2.5.8, Lemma 3.4.2]{vershynin2018high}, we see that each row \(\ol{\vM}_{i}\) has Orlicz norm \(\norm{\ol{\vM}_{i}}_{\psi_{2}} \leq C_{1}\) for some universal constant \(C_{1} > 0\). 

    By \citet[Theorem 4.6.1]{vershynin2018high} we have for some other universal constant \(C_{2} > 0\) that for all \(t > 0\),
    \begin{equation}
        \sqrt{q} - C_{1}^{2}C_{2}(\sqrt{n} + t) \leq \sigma_{\min(n, q)}(\ol{\vM}) \leq \sigma_{1}(\ol{\vM}) \leq \sqrt{q} + C_{1}^{2}C_{2}(\sqrt{n} + t)
    \end{equation}
    with probability at least \(1 - 2e^{-t^{2}}\). Defining \(C_{3} \doteq C_{1}^{2}C_{2}\) and noting that \(\norm{\cdot}_{\op} = \sigma_{1}(\cdot)\), we have with the same probability that
    \begin{equation}
        \norm{\ol{\vM}}_{\op} - \sqrt{q} \leq C_{3}\bp{\sqrt{n} + t}.
    \end{equation}
    Simplifying, we obtain
    \begin{align}
        \norm{\ol{\vM}}_{\op} - \sqrt{q}
        &\leq C_{3}\bp{\sqrt{n} + t} \\
        \frac{1}{x}\norm{\vM}_{\op} - \sqrt{q} 
        &\leq C_{3}\bp{\sqrt{n} + t} \\
        \norm{\vM}_{\op} - x\sqrt{q} 
        &\leq C_{3}x\bp{\sqrt{n} + t} \\
        \norm{\vM}_{\op} 
        &\leq x \bc{\sqrt{q} + C_{3}\bp{\sqrt{n} + t}}.
    \end{align}
    Define \(s > 0\) by 
    \begin{equation}
        s \doteq x\bc{\sqrt{q} + C_{3}\bp{\sqrt{n} + t}} \iff t = \frac{s/x - \sqrt{q}}{C_{3}} - \sqrt{n}.
    \end{equation}
    note that the range of validity is
    \begin{equation}
        t > 0 \iff s > x\bc{\sqrt{q} + C_{3}\sqrt{n}}.
    \end{equation}
    For \(s\) in this range, we have
    \begin{equation}
        \Pr{\norm{\vM}_{\op} > s} \leq 2\exp\rp{-\bc{\frac{s/x - \sqrt{q}}{C_{3}} - \sqrt{n}}^{2}} = 2\exp\rp{-n\bc{\frac{s/x - \sqrt{q}}{C_{3}\sqrt{n}} - 1}^{2}}.
    \end{equation}
    The other inequality follows from applying this inequality to \(\vM^{\top}\).
\end{proof}

\begin{proposition}\label{prop:gaussian_covariance_concentration_opnorm}
    There are universal constants \(C_{1}, C_{2}, C_{3} > 0\) such that the following holds. Let \(n, q \in \N\), and let \(\vM \in \R^{q \times n}\) be such that \(M_{ij} \simiid \dNorm(0, x^{2})\). Then 
    \begin{align}
        &\Pr*{\norm*{\vM^{\top}\vM - qx^{2}\vI}_{\op} > s} \\
        &\leq \casework{C_{1}\exp\rp{-n\bc{\frac{1}{C_{2}^{2}C_{3}\sqrt{nq}x^{2}}s - 1}^{2}}, & \text{if}\ C_{2}^{2}C_{3}\sqrt{nq}x^{2} \leq s \leq C_{2}^{2}qx^{2} \\ C_{1}\exp\rp{-n\bc{\frac{1}{C_{2}C_{3}\sqrt{n}x}\sqrt{s} - 1}^{2}}, & \text{if}\ s \geq C_{2}^{2}qx^{2}.}
    \end{align}
\end{proposition}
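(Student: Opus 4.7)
The strategy is to reduce to the standard isotropic case and invoke a standard non-asymptotic concentration inequality for the sample covariance of a sub-Gaussian random matrix, then invert the tail bound as in the proof of \Cref{prop:gaussian_concentration_opnorm}. First, I would rescale by defining $\ol{\vM} \doteq \vM / x$, so $\ol{M}_{ij} \simiid \dNorm(0, 1)$ and
\begin{equation}
\vM\adj \vM - q x^{2} \vI = x^{2}\bp{\ol{\vM}\adj \ol{\vM} - q \vI}.
\end{equation}
The $q$ rows of $\ol{\vM}$ are i.i.d.\ isotropic sub-Gaussian vectors in $\bR^{n}$ with $\psi_{2}$-norm bounded by a universal constant (as in the proof of \Cref{prop:gaussian_concentration_opnorm}).

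Next, I would apply the standard sample-covariance concentration theorem \citep[Theorem 4.6.1 and the consequent Exercise 4.7.1]{vershynin2018high}, which gives universal constants $K_{1}, K_{2} > 0$ such that, for all $t \geq 0$, with probability at least $1 - 2 e^{-t^{2}}$,
\begin{equation}
\norm*{\ol{\vM}\adj \ol{\vM} - q \vI}_{\op} \leq K_{1} \max\{\delta, \delta^{2}\} \cdot q, \qquad \delta \doteq K_{2}\bp{\sqrt{n/q} + t/\sqrt{q}}.
\end{equation}
Multiplying through by $x^{2}$, with the same probability,
\begin{equation}
\norm*{\vM\adj \vM - q x^{2} \vI}_{\op} \leq K_{1} x^{2} \max\bc{K_{2}(\sqrt{nq} + t\sqrt{q}),\; K_{2}^{2}(\sqrt{n} + t)^{2}}.
\end{equation}

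The final step is to invert this tail bound in $t$, exactly mirroring the end of the proof of \Cref{prop:gaussian_concentration_opnorm}, with the twist that there are now two regimes, corresponding to which of $\delta$ or $\delta^{2}$ dominates. The boundary $\delta = 1$ corresponds (after multiplying by $K_{1} x^{2} q$) to $s$ of order $q x^{2}$, which will match the regime split $s \leq C_{2}^{2} q x^{2}$ versus $s \geq C_{2}^{2} q x^{2}$ in the statement. For the ``linear'' regime I set $s = K_{1} K_{2} x^{2}(\sqrt{nq} + t\sqrt{q})$ and solve for $t$ to obtain
\begin{equation}
t = \frac{1}{\sqrt{q}}\bp{\frac{s}{K_{1} K_{2} x^{2}} - \sqrt{nq}}, \qquad t^{2} = n\bc{\frac{s}{K_{1} K_{2} \sqrt{nq}\, x^{2}} - 1}^{2},
\end{equation}
which yields the first case on identifying the constants $C_{2}, C_{3}$ appropriately. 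For the ``quadratic'' regime I set $s = K_{1} K_{2}^{2} x^{2}(\sqrt{n} + t)^{2}$ and solve for $t$ to obtain
\begin{equation}
t = \frac{\sqrt{s}}{K_{2}\sqrt{K_{1}}\, x} - \sqrt{n}, \qquad t^{2} = n \bc{\frac{\sqrt{s}}{K_{2}\sqrt{K_{1} n}\, x} - 1}^{2},
\end{equation}
which yields the second case.

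The main bookkeeping obstacle is choosing the labeling of constants $C_{1}, C_{2}, C_{3}$ so that (i) the boundary between the two cases lines up cleanly at $s = C_{2}^{2} q x^{2}$ and (ii) the lower threshold $s \geq C_{2}^{2} C_{3} \sqrt{nq}\, x^{2}$ in the linear regime corresponds exactly to the validity condition $t \geq 0$ in the inverted bound. This is pure algebra once the form of Vershynin's bound is in hand, but because the quantities $K_{1}, K_{2}$ from the off-the-shelf covariance concentration theorem must be combined and redistributed to produce the precise constants $C_{2}, C_{3}$ in the statement, some care is needed to keep the inequalities sharp enough that the regime boundary in the conclusion is genuinely determined by $\delta = 1$ and not by an artificial gap introduced during the inversion. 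With that matching carried out, the two cases of the tail bound in the proposition follow directly from the two cases in the inversion above.
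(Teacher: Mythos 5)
Your proposal follows essentially the same route as the paper's proof: rescale to $\ol{\vM} = \vM/x$, invoke Vershynin's sub-Gaussian sample-covariance concentration (the $\max\{\delta,\delta^2\}$ bound, which the paper cites as Eq.~4.22 and you cite via Theorem~4.6.1 / Exercise~4.7.1 — the same result), and then invert the tail bound separately in the $\delta\leq 1$ and $\delta\geq 1$ regimes, obtaining the same expressions for $t^2$ in each case. The constant matching you flag as "pure algebra" is exactly what the paper carries out, and your identification of the regime boundary with $\delta=1$ and the lower threshold with $t\geq 0$ is correct, so the plan is sound and complete in outline.
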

\begin{proof}
    Define \(\ol{\vM} \doteq \frac{1}{x}\vM\), so that \(\ol{M}_{ij} \simiid \dNorm(0, 1)\). By \citet[Example 2.5.8, Lemma 3.4.2]{vershynin2018high}, we see that each row has Orlicz norm \(\norm{\ol{\vM}_{i}}_{\psi_{2}} \leq C_{1}\) for some universal constant \(C_{1} > 0\). 

    By \citet[Eq.~4.22]{vershynin2018high} we have for some other universal constant \(C_{2} > 0\) that for all \(t > 0\),
    \begin{equation}
        \norm*{\frac{1}{q}\ol{\vM}^{\top}\ol{\vM} - \vI}_{\op} \leq C_{1}^{2}\max\{\delta, \delta^{2}\} \quad \text{where} \quad \delta \doteq C_{2}\frac{\sqrt{n} + t}{\sqrt{q}}.
    \end{equation}
    with probability at least \(1 - 2e^{-t^{2}}\). Simplifying, we obtain
    \begin{align}
        \norm*{\frac{1}{q}\ol{\vM}^{\top}\ol{\vM} - \vI}_{\op} 
        &\leq C_{1}^{2}\max\{\delta, \delta^{2}\} \\
        \norm*{\ol{\vM}^{\top}\ol{\vM} - q\vI}_{\op} 
        &\leq C_{1}^{2}q\max\{\delta, \delta^{2}\} \\
        \norm*{(x^{-1}\vM)^{\top}(x^{-1}\vM) - q\vI}_{\op} 
        &\leq C_{1}^{2}q\max\{\delta, \delta^{2}\} \\
        \norm*{x^{-2}\vM^{\top}\vM - q\vI}_{\op} 
        &\leq C_{1}^{2}q\max\{\delta, \delta^{2}\} \\
        \norm*{\vM^{\top}\vM - qx^{2}\vI}_{\op} 
        &\leq C_{1}^{2}qx^{2}\cdot\max\{\delta, \delta^{2}\}.
    \end{align}
    Now from simple algebra and the fact that \(n \geq 1\), we have
    \begin{align}
        \max\bc{\delta, \delta^{2}} = \delta 
        &\iff 0 \leq t \leq C_{2}^{-1}\sqrt{q} - \sqrt{n} \\
        \max\bc{\delta, \delta^{2}} = \delta^{2}
        &\iff t \geq C_{2}^{-1}\sqrt{q} - \sqrt{n}.
    \end{align}
    Now define \(s \geq 0\) by
    \begin{equation}
        s \doteq C_{1}^{2}qx^{2} \cdot \max\{\delta, \delta^{2}\}.
    \end{equation}
    Thus in the first case we have
    \begin{equation}
        s \doteq C_{1}^{2}C_{2}\sqrt{q}x^{2}(\sqrt{n} + t) \iff t = \frac{1}{C_{1}^{2}C_{2}\sqrt{q}x^{2}}s - \sqrt{n},
    \end{equation}
    and in particular the first case holds when
    \begin{equation}
        \max\{\delta, \delta^{2}\} = \delta  \iff 0 \leq t \leq C_{2}^{-1}\sqrt{q} - \sqrt{n} \iff C_{1}^{2}C_{2}\sqrt{nq}x^{2} \leq s \leq C_{1}^{2}qx^{2}.
    \end{equation}
    Meanwhile, in the second case, we have
    \begin{equation}
        s \doteq C_{1}^{2}C_{2}^{2}(\sqrt{n} + t)^{2}x^{2} \iff t = \frac{1}{C_{1}C_{2}x}\sqrt{s} - \sqrt{n}
    \end{equation}
    where we obtain only one solution to the quadratic equation by requiring \(t \geq 0\), and in particular the second case holds when 
    \begin{equation}
        \max\{\delta, \delta^{2}\} = \delta \iff t \geq C_{2}^{-1}\sqrt{q} - \sqrt{n} \iff s \geq C_{1}^{2}qx^{2}.
    \end{equation}
    Thus we have
    \begin{align}
        &\Pr{\norm{\vM^{\top}\vM - qx^{2}\vI}_{\op} > s} \\
        &\leq \casework{2\exp\rp{-\bc{\frac{1}{C_{1}^{2}C_{2}\sqrt{q}x^{2}}s - \sqrt{n}}^{2}}, & \text{if}\ C_{1}^{2}C_{2}\sqrt{nq}x^{2} \leq s \leq C_{1}^{2}qx^{2} \\ 2\exp\rp{-\bc{\frac{1}{C_{1}C_{2}x}\sqrt{s} - \sqrt{n}}^{2}}, & \text{if}\ s \geq C_{1}^{2}qx^{2}} \\
        &= \casework{2\exp\rp{-n\bc{\frac{1}{C_{1}^{2}C_{2}\sqrt{nq}x^{2}}s - 1}^{2}}, & \text{if}\ C_{1}^{2}C_{2}\sqrt{nq}x^{2} \leq s \leq C_{1}^{2}qx^{2} \\ 2\exp\rp{-n\bc{\frac{1}{C_{1}C_{2}\sqrt{n}x}\sqrt{s} - 1}^{2}}, & \text{if}\ s \geq C_{1}^{2}qx^{2}.}
    \end{align}
\end{proof}

\subsection{Companion to \Cref{sub:structured_diffusion_impl}}
\label{app:structured-diffusion-calcs}

In this section, we justify the scaling applied to \(\nabla R^{c}\) in \Cref{sub:structured_diffusion}, and supply the discretization scheme used in \Cref{sub:structured_diffusion_impl}.

First, suppose that \(\vZ_{\natural}^{\ell}\) satisfies \Cref{model:gaussian_tokens}, and \(\vZ_{t} \doteq \vZ_{\natural}^{\ell} + \sigma_{t}\vW\), where \(\vW\) is a standard Gaussian matrix, so that \(\vZ_{t}\) satisfies \Cref{model:gaussian_tokens_noise} with noise level \(\sigma_{t} > 0\). Let \(q_{t}\) be the density of \(\vZ_{t}\). Theoretical analysis from \citep{lu2023understanding} and empirical analysis from \citep{Song2019-ww} demonstrates that under generic conditions, we have that 
\begin{equation}
    \norm{\nabla q_{t}(\vZ_{t})}_{2} \propto \frac{1}{\sigma_{t}^{2}},
\end{equation}
ignoring all terms in the right-hand side except for those involving \(\sigma_{t}\). On the other hand, from the proof of \Cref{lem:inverse-term}, we obtain that \(-\nabla R^{c}(\vZ_{t})\) has constant (in \(\sigma_{t}\)) magnitude with high probability. Thus, in order to have them be the same magnitude, we need to divide \(-\nabla R^{c}(\vZ_{t})\) by \(\sigma_{t}^{2}\) to have it be a drop-in replacement for the score function, as alluded to in \Cref{sub:structured_diffusion}.

Second, we wish to explicitly state our discretization scheme given in \Cref{sub:structured_diffusion_impl}. To wit, we provide a discretization scheme that turns the structured diffusion ODE \Cref{eq:structured_diffusion_ode_prediscretized} into its gradient descent analogue \Cref{eq:structured_diffusion_discretized}; the other discretization, namely of the structured denoising ODE, from \Cref{eq:structured_denoising_ode_prediscretized} to \Cref{eq:structured_denoising_ode_discretized} occurs similarly. To begin with, define the shorthand notation 
\begin{equation}
    f(t, \wt{\vZ}(t)) \doteq \nabla R^{c}(\wt{\vZ}(t) \mid \vU_{[K]}(T - t)),
\end{equation}
so that we have
\begin{equation*}
    \odif{\wt{\vZ}(t)} = \frac{1}{2t}f(t, \wt{\vZ}(t))\odif{t}. \tag{\ref{eq:structured_diffusion_ode_prediscretized}}
\end{equation*}
Fix \(L\), and let \(0 < t_{1} < t_{2} < \cdots < t_{L} = T\), such that \(t_{1}\) is small. (These will be specified shortly in order to supply the discretization scheme.) A suitable first-order discretization is given by
\begin{equation}
    \wt{\vZ}^{\ell + 1} \approx \wt{\vZ}^{\ell} + \frac{t_{\ell + 1} - t_{\ell}}{2t_{\ell}}f(t_{\ell}, \wt{\vZ}^{\ell}).
\end{equation}
Thus it remains to set \(t_{1}, \dots, t_{L}\) such that
\begin{equation}
    \frac{t_{\ell + 1} - t_{\ell}}{2t_{\ell}} = \kappa
\end{equation}
for some constant \(\kappa\), we observe that we must set 
\begin{equation}
    t_{\ell + 1} = (1 + 2\kappa)t_{\ell},
\end{equation}
so that the time grows exponentially in the index. The reverse process time decays exponentially in the index, which matches practical discretization schemes for ordinary diffusion models \citep{Song2019-ww}. Finally, we have \(T = t_{L} = (1 + 2\kappa)^{L}t_{1}\), so that \(t_{1} = \frac{T}{(1 + 2\kappa)^{L}}\).

\newpage
\section{Additional Implementation Details and Experimental Results}\label{sec:appendix-exp}

In this section, we provide details about our experiments, and report the results of additional experiments that were not covered in the main text. \ours{} takes arguably the most basic design choices possible, and so we do \textit{not} attempt to directly compete with state-of-the-art performance from heavily engineered and empirically designed transformers. 
The results of our experiments are meant to convey a few core messages:
\begin{itemize}
    \item \textit{Despite not being engineered to compete with the state-of-the-art, \ours{} performs strongly on large-scale real-world datasets}, including classification on ImageNet-1K. \ours{} also achieves strong transfer learning performance.
    \item \textit{Because our model is designed through unrolled optimization of a well-understood objective, each layer is interpretable}. In particular, we can analyze the performance of \ours{}, as well as design network modifications, on a \textit{layer-wise basis}. This is powered by an arguably unparalleled level of insight into the role of each operator in our network.
    \item \textit{We make the simplest possible choices during the design of \ours{}, but these can be changed easily while keeping the same framework}. We study a few modifications later in this section (\Cref{subsec:appendix-crate-ablation}) and show that they do not significantly hurt empirical performance, but emphasize here that there is significant potential for improvement with different architecture choices (and in particular a different theoretical analysis).
\end{itemize}

\subsection{Details about \ourscaps{} for Image Classification} \label{app:subsec-experiment-supervise-classification-details}
In this subsection, we provide more details for implementing \ours{} on vision tasks.

\paragraph{Training setup.} 
We fine-tune our pre-trained \ours{} on the following target datasets: CIFAR10/CIFAR100~\citep{krizhevsky2009learning}, Oxford Flowers-102~\citep{nilsback2008automated}, Oxford-IIIT-Pets~\citep{parkhi2012cats}. 
For each fine-tuning task, we employ the AdamW optimizer~\citep{loshchilov2017decoupled}. 
We configure the learning rate as $5 \times 10^{-5}$, weight decay as $0.01$, and batch size as $256$. 
To allow transfer learning, in all training and evaluations setups we first resize our input data to 224 height and width. For data augmentations during pre-training and fine-tuning, we also adopt several standard techniques: random cropping, random horizontal flipping, and random augmentation (with number of transformations $n=2$ and magnitude of transformations $m=14$).

\paragraph{Pre-training on ImageNet-1K.} 
We apply the Lion optimizer~\citep{chen2023symbolic} for pre-training both \ours{} and ViT models. 
We configure the learning rate as $2.4 \times 10^{-4}$, weight decay as 0.5, and batch size as 2,048. 
We incorporate a warm-up strategy with a linear increase over 5 epochs, followed by training the models for a total of 150 epochs with cosine decay. 
For data augmentation, we only apply the standard techniques, random cropping and random horizontal flipping, on the ImageNet-1K dataset. 
We apply label smoothing with smoothing parameter $0.1$. 
One training epoch of \ours{-Base} takes around 240 seconds using 16 A100 40GB GPUs.

\paragraph{Fine-tuning on image classification tasks.} 
For each fine-tuning task, we use the AdamW optimizer~\citep{loshchilov2017decoupled}. 
We configure the learning rate as $5 \times 10^{-5}$, weight decay as 0.01, and batch size as 512. 
To allow transfer learning, we first resize our input data to 224. For data augmentations, we also adopt several standard techniques: random cropping, random horizontal flipping, and random augmentation (with number of transformations $n=2$ and magnitude of transformations $m=14$).\footnote{\url{https://github.com/huggingface/pytorch-image-models/blob/main/timm/data/auto_augment.py}}

\subsection{Details about \ourscaps{-MAE} for Image Completion}\label{app:subsec-experiment-MAE}

\begin{table}[t!]
    \centering
    \caption{\small Image classification model configurations for different sizes of \ours{}, parameter counts, and comparisons to ViT models.}
    \label{tab:model_configs_classification}
    \footnotesize
    \setlength{\tabcolsep}{12.6pt}
        \begin{tabular}{@{}lcccccc@{}}
            \toprule
            \textbf{Model Size} & \(L\) & \(d\) & \(K\) & \(N\) & \ours{} \# Parameters & ViT \# Parameters
            \\ 
            \midrule
            \midrule
            Tiny & 12 & 384 & 6 & 196 & 6.1M & 5.72M \\
            \midrule
            Small & 12 & 576 & 12 & 196 &  13.1M & 22.05M \\
            \midrule
            Base & 12 & 768 & 12 & 196 & 22.8M & 86.54M \\
            \midrule
            Large & 24 & 1024 & 16 & 196 & 77.6M & 307M \\
            \bottomrule
        \end{tabular}%
\end{table}

\begin{table}[t!]
    \centering
    \caption{\small Model configurations for different sizes of \ours{-MAE}, parameter counts, and comparisons to MAE models. Note that MAE does not provide model configurations smaller than Base \citep{he2022masked}. We observe that \ours{-MAE}-Base uses around 30\% of the parameters of MAE-Base. $^{\dagger}$Note that MAE-Large has more than 12 encoder layers --- we use the model configuration from \citet{he2022masked} for the MAE models --- but it is the next-smallest MAE model after Base, so it is suitable for a comparison with \ours{-MAE}-Large.}
    \label{tab:model_configs}
    \footnotesize
    \setlength{\tabcolsep}{12.6pt}
        \begin{tabular}{@{}lcccccc@{}}
            \toprule
            \textbf{Model Size} & \(L\) & \(d\) & \(K\) & \(N\) & \ours{} \# Parameters & MAE \# Parameters
            \\ 
            \midrule
            \midrule
            Small & 12 & 576 & 12 & 196 & 25.4M & 47.5M \\
            \midrule
            Base & 12 & 768 & 12 & 196 & 44.6M & 143.8M \\
            \midrule
            Large & 12 & 1024 & 16 & 196 & 78.5M & 406.0M$^{\dagger}$ \\
            \bottomrule
        \end{tabular}%
\end{table}

\paragraph{Pre-training on ImageNet-1K.} 
We apply the AdamW optimizer~\citep{loshchilov2017decoupled} for pre-training both \ours{-MAE} models on ImageNet-1K. 
We configure the learning rate as $3 \times 10^{-5}$, weight decay as $0.1$, and batch size as \(4,096\).
We incorporate a warm-up strategy with a linear increase over 40 epochs, followed by training the models for a total of 800 epochs with cosine decay. 
For data augmentation, we apply the standard augmentations as used in \citet{he2022masked}, random cropping and random horizontal flipping.

\paragraph{Fine-tuning \ourscaps{-MAE} models.} 
Recall that in \Cref{subsubsec:image-completion-task}, we described two methods of fine-tuning our model: full fine-tuning, which updates all the weights, and linear probing via logistic regression, which only updates the classification head. For full fine-tuning, we apply the same fine-tuning configuration (data augmentation, training epochs, optimization, etc.) as described in \Cref{app:subsec-experiment-supervise-classification-details}. For linear probing, we apply the optimization solver in \texttt{scikit-learn}, i.e., \texttt{linear\_model.LogisticRegression}, to learn a logistic regression model with $\ell^2$ regularization, and use cross-validation to select the $\ell^2$ regularization parameter for each model-dataset pair. In our experiments, we consider $\ell^2$ regularization parameters from the set $\{10^{-3}, 10^{-2}, 10^{-1}, 1, 10, 10^{2}, 10^{3}\}$.

\subsection{Details about \ourscaps{-DINO} for Self-Supervised Learning}\label{app:subsec-experiment-dino}

\paragraph{DINO pre-training.} 
To train \ours{} with the DINO self-supervised learning objective, we use the ImageNet-1K dataset and match our training recipe closely to the official implementation with some minor variations. Specifically, we choose the base learning rate to be $0.001$ and the weight decay parameter to be $0.005$ in our experiments.
Other hyper-parameters such as the temperature parameters $\tau_t$ and $\tau_s$ are chosen to exactly match with the publicly available implementation of DINO\footnote{\url{https://github.com/facebookresearch/dino}}. Due to computational limitations, we only train our models for 100 epochs, compared to 400 to 800 epochs of training for the official implementation.

\paragraph{Linear probing models trained via DINO.}
As introduced in \Cref{sec:experiment-DINO}, we use both weighted nearest neighbor ($k-$NN) and logistic regression on top of the features extracted from the teacher networks after pre-training with DINO. For $k-$NN, we search the number of neighbors $k$ from the candidate set of $\{10,20,100,200\}$ and choose the model that achieves the highest training accuracy. For logistic regression, we use the same procedure as introduced earlier for \ours{-MAE} in \Cref{app:subsec-experiment-MAE}.

\subsection{Details about \ourscaps{-BERT} and \ourscaps{-GPT} on Natural Language}\label{app:subsec-experiment-text}

\begin{table*}[t!]
    \centering
    \caption{\small (\textit{upper}) Configuration of the \ours{} BERT-family models. (\textit{lower}) Configuration of the \ours{} GPT-family models. $d$ represents the dimension of the token, $K$ represents the number of attentions head in the attention block, $L$ represents the model depth, and $h$ represents the hidden dimension of the MLP block in standard transformers. For \ours{} models, there is no hidden dimension.}
    \label{tab:configs-crate-text}
    \footnotesize
    \setlength{\tabcolsep}{9.6pt}
        \begin{tabular}{llcccccc}
            \toprule
            {Model Size} & Pre-training & $d$ & $K$ & $L$ & $h$ & \ours{} & BERT 
            \\ 
            \midrule
            Medium & Masked LM & 512 & 8 & 8 & 2,048 &  33M & 41M 
            \\
            Base & Masked LM & 768 & 12 & 12 & 3,072 &  61M & 110M 
            \\
            Large & Masked LM & 1,024 & 16 & 24 & 4,096 &  129M & 335M 
            \\
            \midrule
            \midrule
            {Model Size} & Pre-training & $d$ & $K$ & $L$ & $h$ & \ours{} & GPT-2 
            \\
            \midrule
            {\color{revision}Small} & {\color{revision}Causal LM} & {\color{revision}512} & {\color{revision}8} & {\color{revision}12} & {\color{revision}2,048} & {\color{revision}-}  & {\color{revision}64M}
            \\
            Base & Causal LM & 768 & 12 & 12 & 3,072 & 60M  & 124M
            \\
            Large & Causal LM & 1,280 & 20 & 36 & 5,120 & 242M & 774M
            \\
            \bottomrule
        \end{tabular}%
\end{table*}

\paragraph{Details about pre-training for \ourscaps{}-BERT.} 
We use a batch size of 8,096 and train for 30,000 steps with the Adam optimizer~\citep{kingma2014adam}. 
For the Adam optimizer, we use $(\beta_1, \beta_2)=(0.9, 0.98)$, and a weight decay of $0.01$. 
For the learning rate scheduler, we apply the linear warm-up and linear decay, with the peak learning rate  at the iteration of 1,800 steps with value $\eta=10^{-3}$.

\paragraph{Details about pre-training for \ourscaps{-GPT}.} 
We use a batch size of 384 and train for 600,000 steps with the Adam optimizer~\citep{kingma2014adam}. 
For the Adam optimizer, we use $(\beta_1, \beta_2)=(0.9, 0.95)$, and weight decay of $0.1$. 
For the learning rate scheduler, we apply the linear warm-up and cosine decay, with a peak value of $\eta=6\times 10^{-4}$ at the $2,000$ iteration, and minimum value $6\times 10^{-5}$. 
The training and validation losses over iterations are shown in Figure~\ref{fig:crate-text-evals}~(right).
The training/validation loss converges around $3.37$ after training with a batch size of 384 and 600,000 iterations. 
In comparison, the open GPT-2 implementation is pre-trained on OpenWebText with a batch size of 512 and 600,000 steps, and converges to a validation loss of $2.85$ \citep{nanogpt}.

\paragraph{Details about fine-tuning \ourscaps{}-BERT on GLUE.} 
For all the tasks in GLUE, we use a learning rate of $2\times 10^{-5}$ without any hyperparameter sweep. We fine-tune the three models (BERT-Base, BERT-Medium and \ours{}-BERT-Base) using the same fine-tuning configuration.
We fine-tune for 8 epochs on MNLI, for 5 epochs on WNLI and MRPC (because these two datasets are tiny), and for 3 epochs on all other tasks. For all tasks, we use a batch size of 32. 
The maximum sequence length is set to 128 for WNLI and 512 for all other tasks. We find that the performance of BERT on WNLI is very sensitive to the sequence length, so we picked the sequence length to be the same as~\citet{huggingface_transformers} (while \ours{}-BERT is very stable to the sequence length). We do not observe a significant difference of performance using different sequence length for other tasks.
Evaluations shown in Figure \ref{fig:crate-text-evals} are meant to demonstrate the effectiveness of learning, so we finetune 3 epochs for all tasks (except MNLI in the 24000 and 30000 step), and we use the sequence length of 512 for all tasks.

\paragraph{Details about \ourscaps{}-GPT evaluation.}

We use the test split as the validation set for all tasks. The evaluation datasets WikiText2 and WikiText103 has the same test split so we merge them to WikiText.  For both models (GPT2-Base and \ours{}-GPT2-Base) on all tasks, we use a block size of $1,024$, evaluation batch size of $4$, and evaluate for $1,000$ iterations to get a more accurate estimation.

\subsection{Ablation Study of \ourscaps{} on Image Classification}\label{subsec:appendix-crate-ablation}

\paragraph{Training hyperparameters for \ourscaps{} on image classification.}
In \Cref{tab:ablation-parameters}, we present evaluation of \ours{} trained with various parameters.
More specifically, we investigate the effect of number of epochs, weight decay, learning rate, step size $(\eta)$ and the regularization term $(\lambda)$ in \texttt{ISTA} block. 
As shown in \Cref{tab:ablation-parameters}, \ours{} demonstrates consistently satisfactory performance across a diverse range of hyperparameters.

\begin{table*}[ht]
\centering
\caption{\small Top 1 accuracy of \ours{} on various datasets with different architecture design variants 
when trained on ImageNet. }
\label{tab:ablation-parameters}
\small
    \setlength{\tabcolsep}{13.6pt}
\resizebox{0.98\textwidth}{!}{%
\begin{tabular}{@{}l|ccc|cc|cc@{}}
\toprule
\textbf{Model} & \texttt{epoch} & \texttt{weight decay}  &   \texttt{lr} &   $\eta$ (\texttt{ISTA}) &   $\lambda$ (\texttt{ISTA}) & ImageNet  \\ 
\midrule
\midrule
 \ours{-B} & 150 (default) & 0.5 (default) & $2.4\times 10^{-4}$ & 0.1 & 0.1 & 70.8 \\
 \midrule
 \midrule
 \ours{-B} & 150 & 0.5 & $2.4\times 10^{-4}$ & \textit{\color{gray} 0.02} & 0.1 & 70.7 \\
 \midrule
 \ours{-B} & 150 & 0.5 & $2.4\times 10^{-4}$ & \textit{\color{gray}0.5} & 0.1 & 66.7 \\
 \midrule
 \ours{-B} & 150 & 0.5 & $2.4\times 10^{-4}$ & 0.1 & \textit{\color{gray}0.02} & 70.8 \\
 \midrule
 \ours{-B} & 150 & 0.5 & $2.4\times 10^{-4}$ & 0.1 & \textit{\color{gray}0.5} & 70.5 \\
 \midrule
 \midrule
 \ours{-B} & \textit{\color{gray}90} & 0.5 & $2.4\times 10^{-4}$ & 0.1 & 0.1 & 69.5 \\
 \midrule
 \ours{-B} & \textit{\color{gray}300} & 0.5 & $2.4\times 10^{-4}$ & 0.1 & 0.1 & 70.9 \\
 \midrule
 \ours{-B} & 150 & \textit{\color{gray}1.0} & $2.4\times 10^{-4}$ & 0.1 & 0.1 & 70.3 \\
 \midrule
 \ours{-B} & 150 & \textit{\color{gray}0.05} & $2.4\times 10^{-4}$ & 0.1 & 0.1 & 70.2 \\
 \midrule
 \ours{-B} & 150 & 0.5 & \textit{\color{gray}$4.8\times 10^{-4}$} & 0.1 & 0.1 & 70.2 \\
 \midrule
 \ours{-B} & 150 & 0.5 & \textit{\color{gray}$1.2\times 10^{-4}$} & 0.1 & 0.1 & 70.3 \\
 \bottomrule
\end{tabular}%
}
\end{table*}

\paragraph{Exploring architecture variants of \ourscaps{}.}
In this section, we explore the two following alternative architectures. One architecture involves a modification to the attention mechanism, while the other involves a modification to the sparsification mechanism. Again, we re-emphasize that these choices, although principled, are entirely modular and the choices we make here still lead to very simple architectures. A more sophisticated analysis may lead to different, more complicated architectures that perform better in practice. The architectures we experiment with are:
\begin{itemize}
    \item Compression-inspired attention mechanism: revert the change in \Cref{eq:mssa_trainable_w_main_paper}. That is, the attention mechanism implements \Cref{eq:SSA,eq:Multi-Head-SSA} directly.
    \item Majorization-minimization proximal step sparsification: instead of \Cref{eq:ista-block}, implement \Cref{eq:prox_maj_min_iteration}.
\end{itemize}

We obtain the following classification results in \Cref{tab:ablation-arch-variants}. 
After conducting additional simplifications to the network architecture (i.e., imposing additional constraints to the network architecture design), we discover that \ours{} maintains reasonable performance on ImageNet-1K.

\begin{table*}[ht]
\centering
\caption{\small Top 1 accuracy of \ours{} on various datasets with different architecture design variants 
when trained on ImageNet. }
\label{tab:ablation-arch-variants}
\small
    \setlength{\tabcolsep}{13.6pt}
\resizebox{0.7\textwidth}{!}{%
\begin{tabular}{@{}lcc|cc|cc@{}}
\toprule
\textbf{Model} & \texttt{MSSA}-block  &   \texttt{ISTA}-block& ImageNet  \\ 
\midrule
\midrule
 \ours{-B} & default & default & 70.8 \\
 \midrule
 \ours{-B} & Eq.~\Cref{eq:SSA,eq:Multi-Head-SSA} & default & 63.3 \\
 \ours{-B} & default & Eq.~\Cref{eq:prox_maj_min_iteration} & 68.6 \\
 \bottomrule
\end{tabular}%
}
\end{table*}

{\color{revision}
\paragraph{Empirical evaluation of soft-thresholding-based architecture.} We summarize the results of CRATE with soft-thresholding activation in Table~\ref{tab:ablation-arch-variants-soft-thresholding}. We use $\lambda=10$ in $S_{\lambda \eta}$ and set all other hyperparameters the same as in the original CRATE-Base evaluation on ImageNet-1K. We find that such a soft-thresholding model achieves slightly worse performance–a drop of 3.2\% top-1 accuracy—compared to the default CRATE-Base (with ReLU activation).
}

\begin{table*}[ht]
\centering
\caption{\small \color{revision} Top 1 accuracy of \ours{} on ImageNet-1k with different architecture design variants. The {\color{blue!70!black}\texttt{SoftShrink} activation $S_{\lambda \eta}$} is defined as $S_{\lambda\eta}(x) = \text{sgn}(x)\cdot (|x| - \lambda \eta)_{+}$, where $\text{sgn}$ is the sign function and $(\cdot)_{+} = \max(\cdot, 0)$.}
\label{tab:ablation-arch-variants-soft-thresholding}
\vspace{-1mm}
\small
    \setlength{\tabcolsep}{13.6pt}
\resizebox{0.98\textwidth}{!}{%
\begin{tabular}{@{}lcc|cccc@{}}
\toprule
{\color{revision}\textbf{Model}} & {\color{revision}\texttt{MSSA}-block}  &   {\color{revision}\texttt{ISTA}-block} & {\color{revision}ImageNet} & {\color{revision}CIFAR 10*} & {\color{revision}CIFAR 100*} \\ 
\midrule
\midrule
 {\color{revision}\ours{-B}} & {\color{revision}default} & {\color{revision}\texttt{ReLU} activation (default)} & {\color{revision}70.8\%} & {\color{revision}96.8\%} & {\color{revision}82.7\%} \\
 \midrule
 {\color{revision}\ours{-B}} & {\color{revision}default} & {\color{revision}{\color{blue!70!black}\texttt{SoftShrink} activation}} & {\color{revision}67.6\%} & {\color{revision}96.0\%} & {\color{revision}76.8\%} \\
 \bottomrule
\end{tabular}%
}
\vspace{-0.1in}
\end{table*}

\begin{table*}[ht]
\caption{\small Top 1 accuracy of \ours{} on various datasets with different model scales 
when pre-trained on ImageNet-21K and fine-tuned on the downstream datasets. }
\centering
\small
    \setlength{\tabcolsep}{13.6pt}
\resizebox{0.9\textwidth}{!}{%
\begin{tabular}{@{}lccc|cc@{}}
\toprule
 & \ours{-T}  &  \ours{-S} & \ours{-B} & { \color{gray} ViT-T} &  { \color{gray}ViT-B } \\ 
\midrule
\midrule
 \# parameters & 5.74M & 14.12M & 38.83M & { \color{gray} 10.36M} & { \color{gray} 102.61M} \\
\midrule
 ImageNet-1K & 62.7 & 74.2 & 79.5 & { \color{gray} 71.8} & { \color{gray} 85.8} \\
 CIFAR10 & 94.1 & 97.2 & 98.1 & { \color{gray} 97.2} & { \color{gray} 98.9} \\
 CIFAR100 & 76.7 & 84.1 & 87.9 & { \color{gray} 84.4} & { \color{gray} 90.1}\\
 Oxford Flowers-102 & 82.2 & 92.2 & 96.7 & { \color{gray} 92.1} & { \color{gray} 99.5}\\
 Oxford-IIIT-Pets & 77.0 & 86.4 & 90.7 & { \color{gray} 86.2} & { \color{gray} 91.8} \\
 \bottomrule
\end{tabular}%
}
\label{tab:Comparison_with_VIT_IN21k}
\end{table*}

\paragraph{Pre-training on ImageNet-21K.} 
We study \ours{} when pre-training with supervised learning on a larger pre-training dataset, ImageNet-21K~\citep{deng2009imagenet}. 
This dataset comprises 14,197,122 images distributed across 21,841 classes. 
For training, each RGB image was resized to dimensions $3\times 224\times 224$, normalized using means of $(0.485, 0.456, 0.406)$ and standard deviations of $(0.229, 0.224, 0.225)$, and then subjected to center cropping and random flipping. 
We set the mini-batch size as 4,096 and apply the Lion optimizer~\citep{chen2023symbolic} with learning rate $9.6\times 10^{-5}$ and weight decay $0.05$. 
All the models, including \ours{s} and ViTs are pre-trained with 90 epochs on ImageNet-21K. 
For fine-tuning on downstream tasks, we use the AdamW optimizer~\cite{loshchilov2017decoupled} and configure the learning rate to $5 \times 10^{-5}$, weight decay as 0.01. Due to memory constraints, we set the batch size to be 128 for all experiments conducted for the base models and set it to be 256 for the other smaller models. 

As summarized in Table \ref{tab:Comparison_with_VIT_IN21k}, we evaluate transfer learning performance of \ours{} by fine-tuning models that are pre-trained on ImageNet-21k for the following downstream vision classification tasks: ImageNet-1k~\citep{deng2009imagenet}, CIFAR10/CIFAR100~\citep{krizhevsky2009learning}, Oxford Flowers-102~\citep{nilsback2008automated}, Oxford-IIIT-Pets~\citep{parkhi2012cats}. We also evaluate the transfer learning fine-tuning performance of two ViT models (-T/8 and -B/8) pre-trained on ImageNet-21K for reference. For both \ours{} and ViT models, we consider the image patch with size $8\times8$.

\newpage
\subsection{\color{revision}Ablation Study of ISTA Layer in CRATE}\label{subsec:appendix-ablation-ista}

{\color{revision}
We study the role of the ISTA block in our white-box architecture design. Specifically, we remove the ISTA block in \textsc{crate} and compare its performance with the default \textsc{crate} in both vision and language tasks. 
The results are summarized in Table~\ref{tab:ablation-arch-variants-ista-vision} and \ref{tab:gpt-eval-ablation-ista}. 
We can find that the constructed white-box transformers without the ISTA block is able to achieve non-trivial performance on ImageNet. 
On the other hand, the white-box transformer \textsc{crate} (with the ISTA sparsity block) achieves much better performance than \textsc{crate} without the ISTA block, demonstrating the effectiveness of the sparsity block in our architecture design.
}
{\color{revision}
\begin{table*}[ht]
\centering
\caption{\small \color{revision} Top 1 accuracy of \ours{}  with different architecture design variants 
when trained on ImageNet. }
\label{tab:ablation-arch-variants-ista-vision}
\small
\setlength{\tabcolsep}{13.6pt}
\resizebox{0.7\textwidth}{!}{%
\begin{tabular}{@{}lcc|cc|cc@{}}
\toprule
\textbf{\color{revision}Model} & {\color{revision}\texttt{MSSA}-block}  &   {\color{revision}\texttt{ISTA}-block} & {\color{revision}ImageNet}  \\ 
\midrule
\midrule
{\color{revision}\ours{-B}} & {\color{revision}default} & {\color{revision}default} & {\color{revision}70.8} \\
\midrule
{\color{revision}\ours{-B}} & {\color{revision}default} & {\color{revision}\xmark} & {\color{revision}61.9} \\
\bottomrule
\end{tabular}%
}

\end{table*}
}

{
\begin{table}[ht]
\def\arraystretch{1.1}
    \small
    \caption{\small \color{revision} Zero-shot cross-entropy loss of the \ours{}-GPT2-Base model \textit{with and without the ISTA block} (lower is better). 
    }
    \centering
    \begin{tabular}{cccc}
    \hline
      & {\color{revision}\texttt{MSSA}-block}  &   {\color{revision}\texttt{ISTA}-block} & {\color{revision}\textbf{OWT}} \\
     \hline
     {\color{revision}\ours{}-GPT2-Base} & {\color{revision}default} & {\color{revision}default}  & {\color{revision}3.37}  \\
     {\color{revision}\ours{}-GPT2-Base} & {\color{revision}default} & {\color{revision}\xmark}  & {\color{revision}3.68} \\
     \hline
    \end{tabular}
    \label{tab:gpt-eval-ablation-ista}
\end{table}

}

\subsection{\color{revision}Ablation Study of MSSA Layer and ISTA Layer in CRATE and Comparison with ViT}

\begin{table*}[ht]
\centering
\vspace{0.1in}
\small
\setlength{\tabcolsep}{8pt}
\resizebox{0.9\textwidth}{!}{\begin{tabular}{@{}llcccc|cccc|c@{}}
\toprule
 &  & & \multicolumn{3}{c}{{\color{revision}COCO Detection}}  & {\color{revision}VOC Seg.} & {\color{revision}ImageNet-1K}\\ 
{\color{revision}Model} & {\color{revision}Attention} & {\color{revision}Nonlinearity}  & {\color{revision}AP$_{50}$} & {\color{revision}AP$_{75}$} & {\color{revision}AP}  & {\color{revision}mIoU} &  {\color{revision}Accuracy} \\ 
\midrule
{\color{revision}\ours{}} & {\color{blue!50!black}\texttt{MSSA}} & {\color{blue!50!black}\texttt{ISTA}} & {\color{revision}2.1} & {\color{revision}0.7} & {\color{revision}0.8} & {\color{revision}23.9} & {\color{revision}74.2} \\
{\color{revision}\ours{-\texttt{MLP}}} & {\color{blue!50!black}\texttt{MSSA}} & {\color{red!60!black}\texttt{MLP}} & {\color{revision}0.2} & {\color{revision}0.2} & {\color{revision}0.2} & {\color{revision}22.0} & {\color{revision}79.2}\\
{\color{revision}\ours{-\texttt{MHSA}}} & {\color{red!60!black}\texttt{MHSA}} & {\color{blue!50!black}\texttt{ISTA}} & {\color{revision}0.1} & {\color{revision}0.1} & {\color{revision}0.0} & {\color{revision}18.4} & {\color{revision}77.8} \\
{\color{revision}ViT} & {\color{red!60!black}\texttt{MHSA}} & {\color{red!60!black}\texttt{MLP}} & {\color{revision}0.1} & {\color{revision}0.1} & {\color{revision}0.0} & {\color{revision}14.1} & {\color{revision}80.9} \\
\bottomrule
\end{tabular}}
\caption{\color{revision}\textbf{Ablation study of different \ours{} variants.} We use the \texttt{Small-Patch8} (\texttt{S-8}) model configuration across all experiments in this table.}
\label{tab:ablationstudy_mhsa_ista}
\vspace{-0.2in}
\end{table*}

{\color{revision}
\vspace{0.2in}
\noindent
Both the attention block (\texttt{MSSA}) and the MLP block (\texttt{ISTA}) in \ours{} are different from the ones in the ViT.  
In order to understand the effect of each component of \ours{}, we study three different variants of \ours{}: \ours{}, \ours{-\texttt{MHSA}}, \ours{-\texttt{MLP}}, where we denote the attention block and MLP block in ViT as \texttt{MHSA} and \texttt{MLP} respectively. We summarize different model architectures in \Cref{tab:ablationstudy_mhsa_ista}. 
}

{\color{revision}For all models in \Cref{tab:ablationstudy_mhsa_ista}, we first apply the same pre-training setup on the ImageNet-21k dataset. The detection and segmentation results are evaluated on models pre-trained on ImageNet-21k. Secondly, we further fine-tune the ImageNet-21k pre-trained models on ImageNet-1k with 50 epochs and summarize the accuracy of different models in Table \Cref{tab:ablationstudy_mhsa_ista}.
We then apply the coarse segmentation evaluation and MaskCut evaluation (described in \Cref{subsubsec:segmentation_measurement}) to quantitatively compare the performance of different models. 
As shown in \Cref{tab:ablationstudy_mhsa_ista}, \ours{} significantly outperforms other model architectures across all tasks. 
Interestingly, we find that the coarse segmentation performance (i.e., VOC Seg) of the ViT can be significantly improved by simply replacing the {\color{red!60!black}\texttt{MHSA}} in ViT with the {\color{blue!50!black}\texttt{MSSA}} in \ours{}, despite the architectural differences between {\color{red!60!black}\texttt{MHSA}} and {\color{blue!50!black}\texttt{MSSA}} being small. 
This demonstrates the effectiveness of the white-box design.}

\newpage
\subsection{Additional Experimental Results of Layer-Wise Analysis}\label{subsec:appendix-exp-results}
{In this subsection, we provide additional experimental results on \ours{}, including layer-wise measurements and visualizations.

\paragraph{Layer-wise evaluation of compression and sparsity.} 
Similar to \Cref{fig:exp-rc-sparisty-small}, we conduct the layer-wise evaluation of compression term and sparsity for \ours{-Tiny}, \ours{-Base}, and \ours{-Large} in \Cref{fig:appendix-exp-rc-sparisty-all-model-size}.
We observe similar behavior as mentioned in \Cref{subsec:exp-in-depth-analysis}: both the compression term and the sparsity term improves as the layer index increases.

}

\begin{figure}[ht]
     \centering
     \begin{subfigure}[b]{0.47\textwidth}
         \centering
    \includegraphics[width=\textwidth]{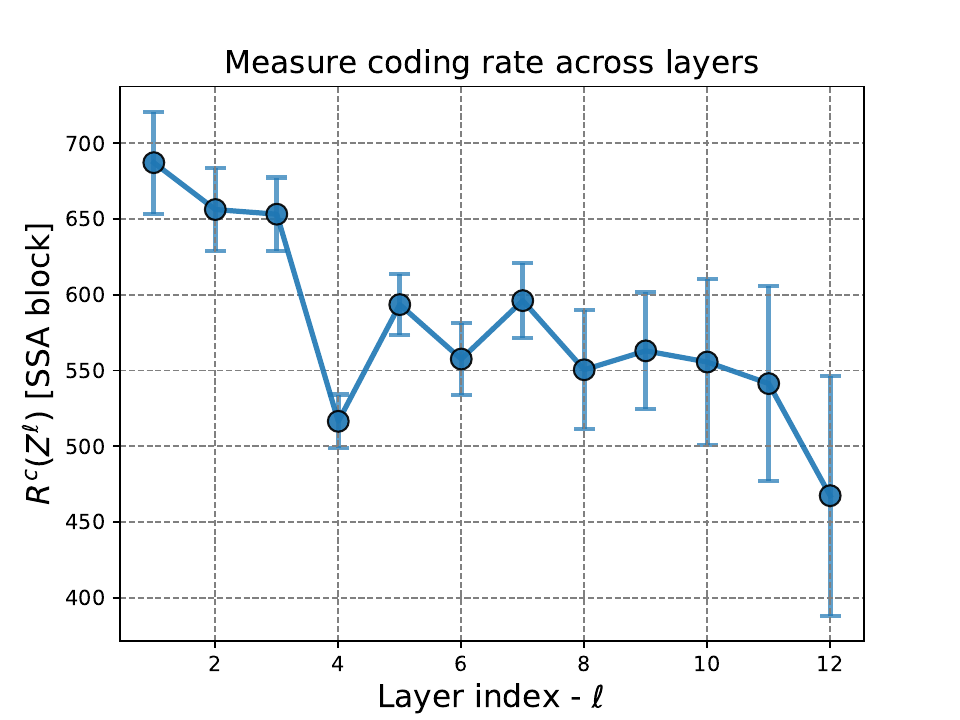}
         \caption{Compression (Model: \ours{-Tiny}).}
     \end{subfigure}
     \begin{subfigure}[b]{0.482\textwidth}
         \centering
    \includegraphics[width=\textwidth]{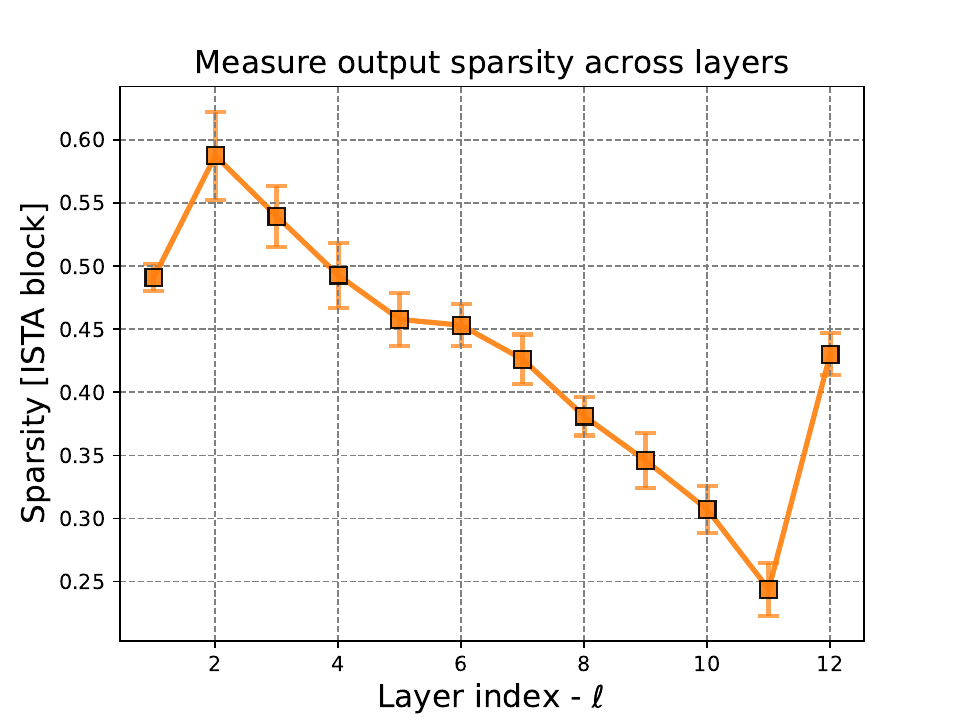}
         \caption{Sparsity (Model: \ours{-Tiny}).}
     \end{subfigure}
     \begin{subfigure}[b]{0.47\textwidth}
         \centering
    \includegraphics[width=\textwidth]{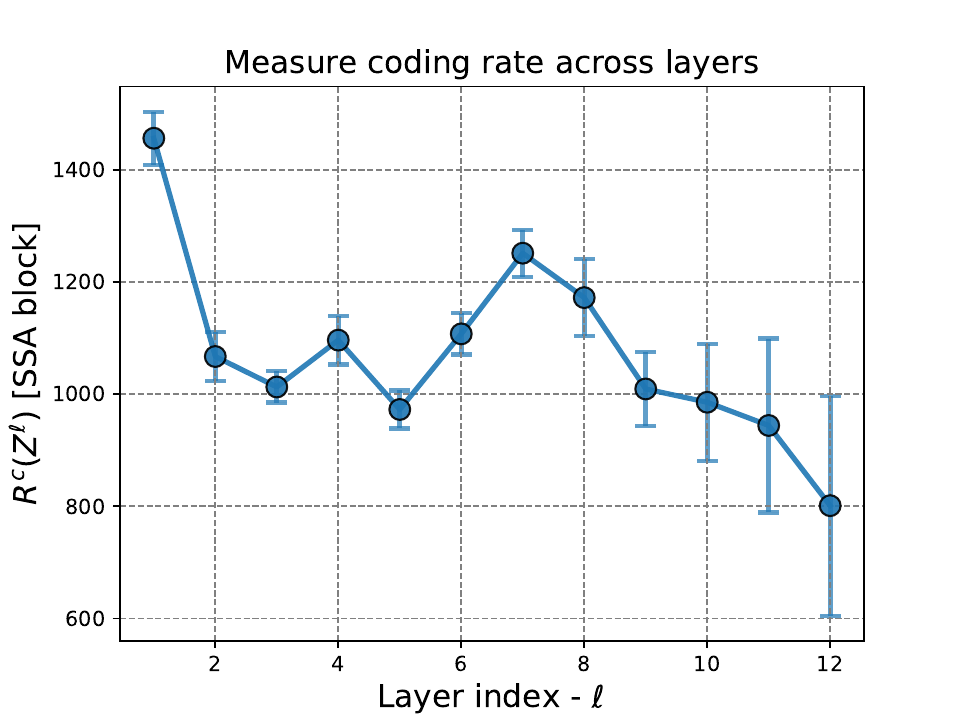}
         \caption{Compression (Model: \ours{-Base}).}
     \end{subfigure}
     \begin{subfigure}[b]{0.482\textwidth}
         \centering
    \includegraphics[width=\textwidth]{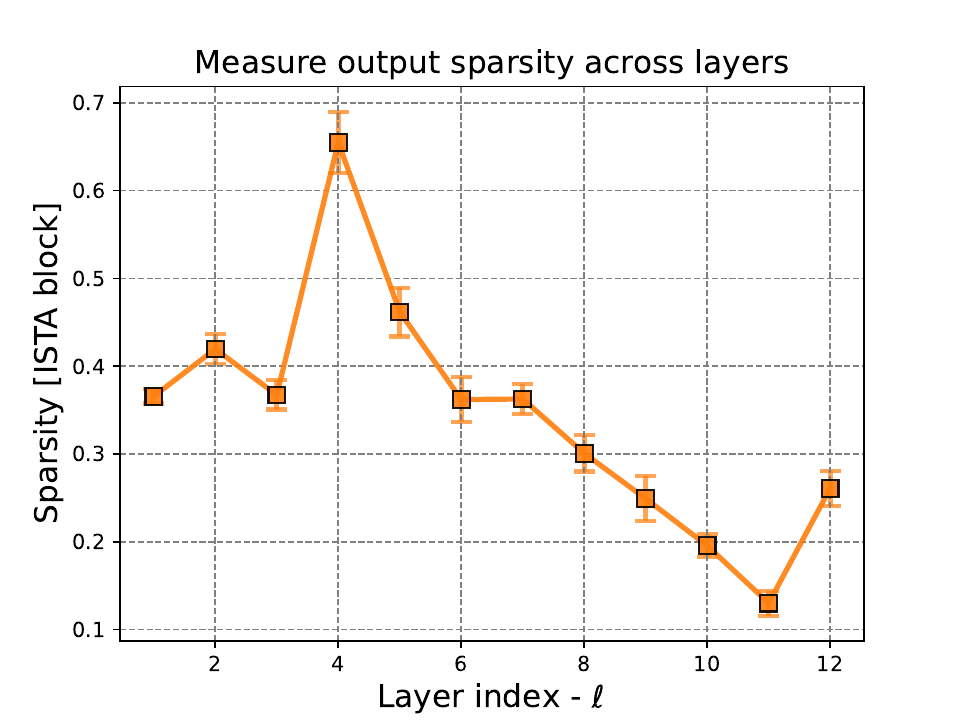}
         \caption{Sparsity (Model: \ours{-Base}).}
     \end{subfigure}
     \begin{subfigure}[b]{0.47\textwidth}
         \centering
    \includegraphics[width=\textwidth]{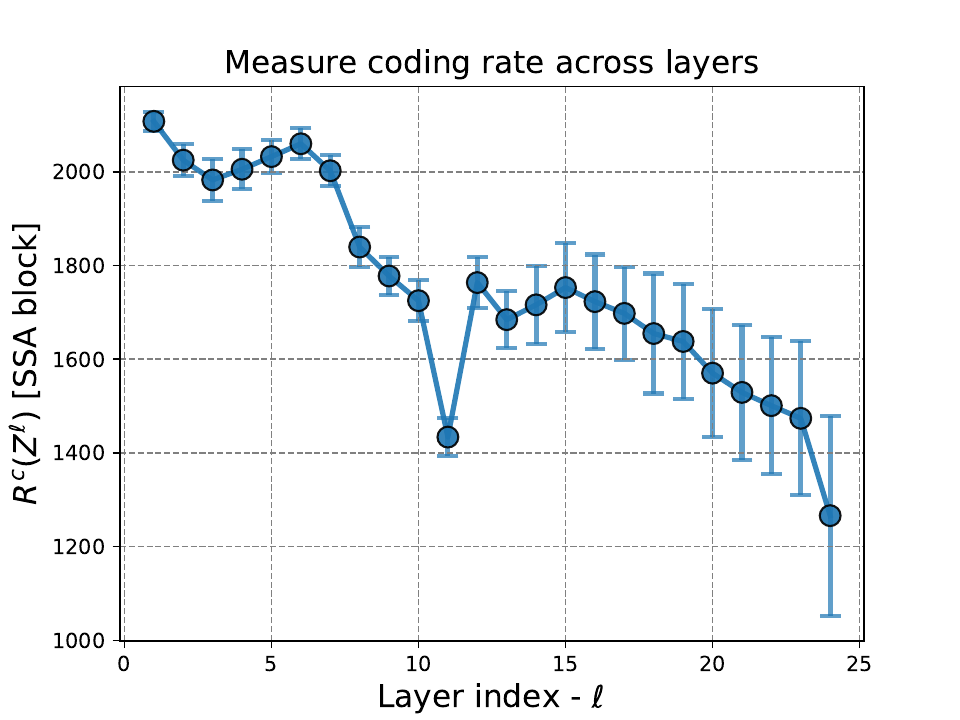}
         \caption{Compression (Model: \ours{-Large}).}
     \end{subfigure}
     \begin{subfigure}[b]{0.482\textwidth}
         \centering
    \includegraphics[width=\textwidth]{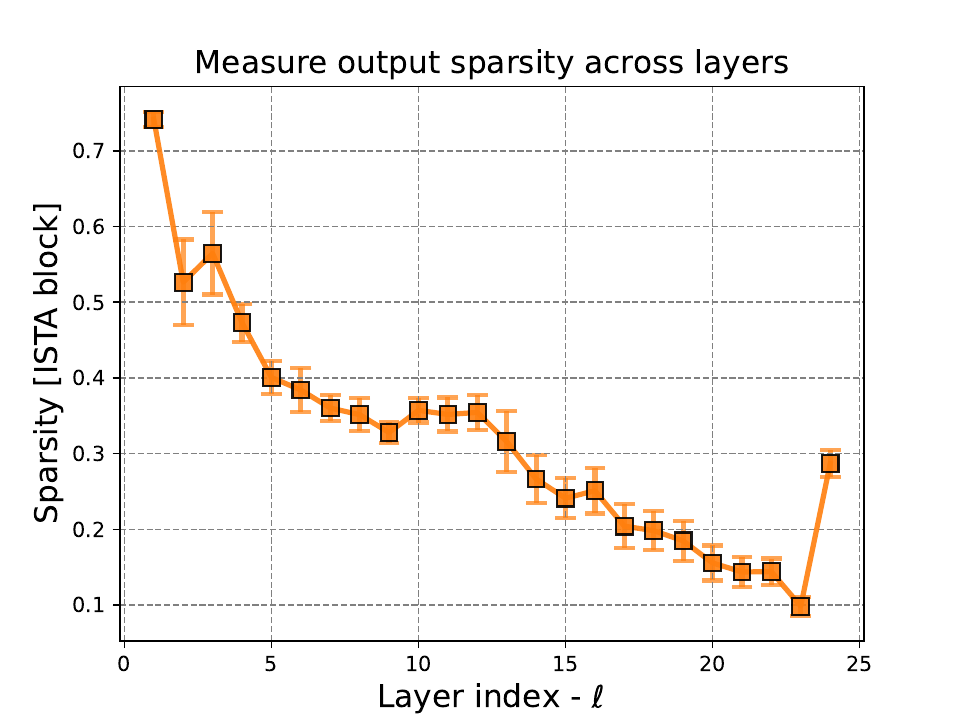}
         \caption{Sparsity (Model: \ours{-Large}).}
     \end{subfigure}
        \caption{\small \textit{Left}: The compression term $R^{c}(\vZ^{\ell+1/2})$ of the \texttt{MSSA} outputs at different layers. \textit{Right}: the sparsity of the \texttt{ISTA} output block, $\norm{\vZ^{\ell+1}}_0 / (d\cdot N)$, at different layers.}
        \label{fig:appendix-exp-rc-sparisty-all-model-size}
\end{figure}

\paragraph{Visualizing layer-wise token representations.} 
In \Cref{fig:appendix-exp-ista-sparsity-heatmap}, we visualize the token representations $\vZ^{\ell}$ at different layers $\ell \in \{1, \dots, 12\}$. We provide more results of the layer-wise token representation visualization on other samples in \Cref{fig:appendix-exp-ista-sparsity-heatmap-sample1}, \Cref{fig:appendix-exp-ista-sparsity-heatmap-sample2}, \Cref{fig:appendix-exp-ista-sparsity-heatmap-sample3},  and \Cref{fig:appendix-exp-ista-sparsity-heatmap-sample4} (Model: \ours{-Base}).

\begin{figure}[ht]
     \centering
     \begin{subfigure}[b]{0.2\textwidth}
         \centering
    \includegraphics[width=\textwidth]{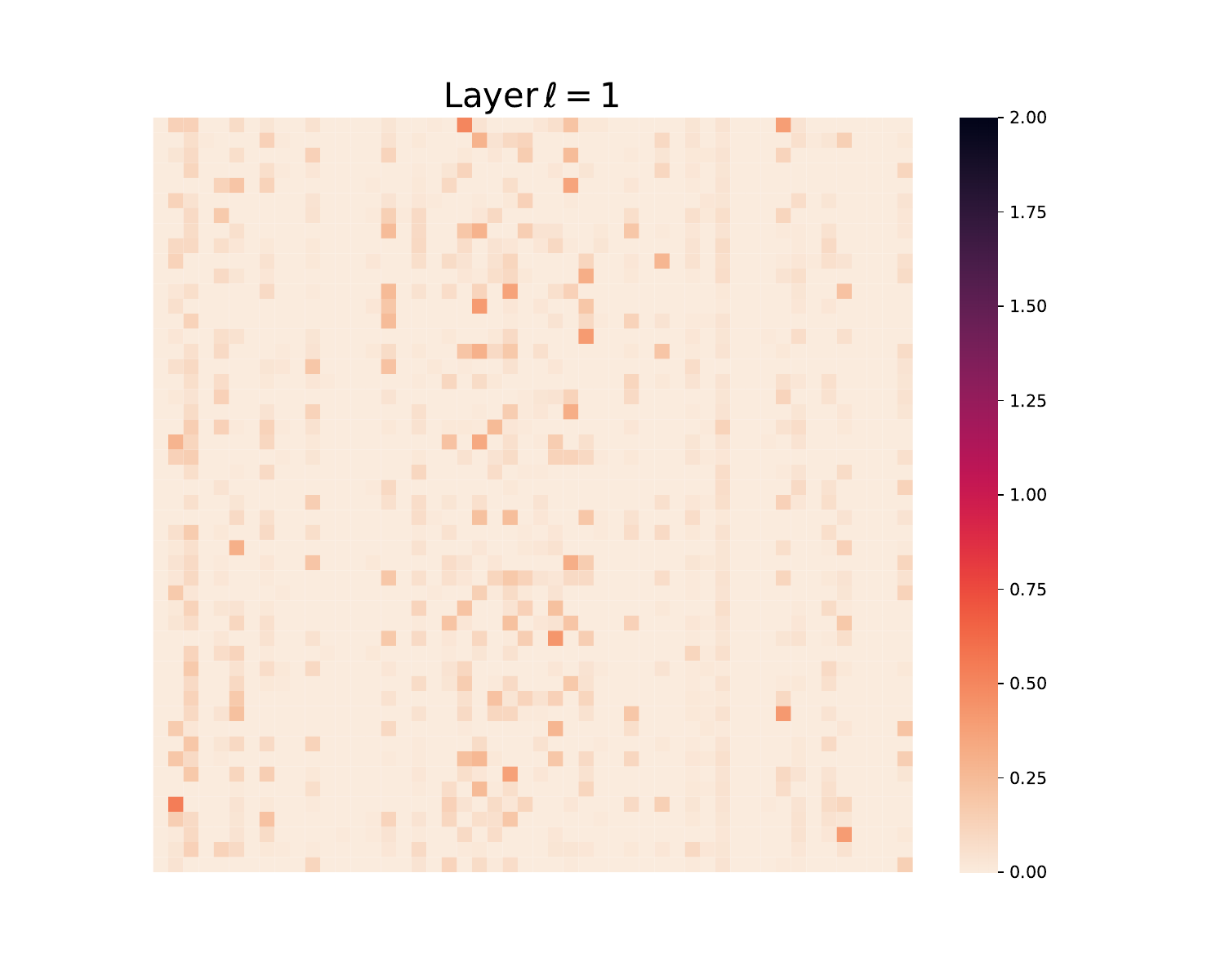}
     \end{subfigure}
     \begin{subfigure}[b]{0.2\textwidth}
         \centering
    \includegraphics[width=\textwidth]{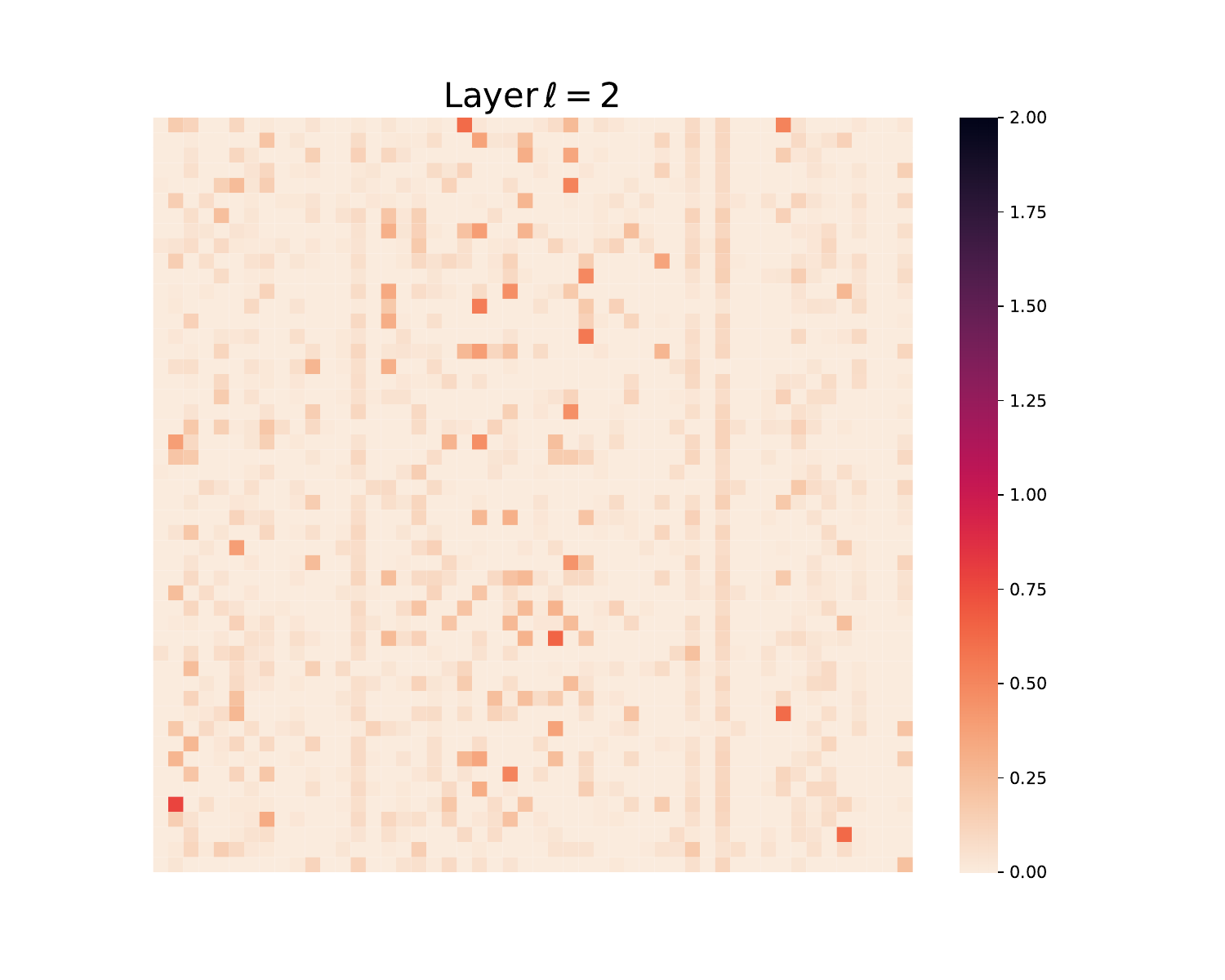}
     \end{subfigure}
     \begin{subfigure}[b]{0.2\textwidth}
         \centering
    \includegraphics[width=\textwidth]{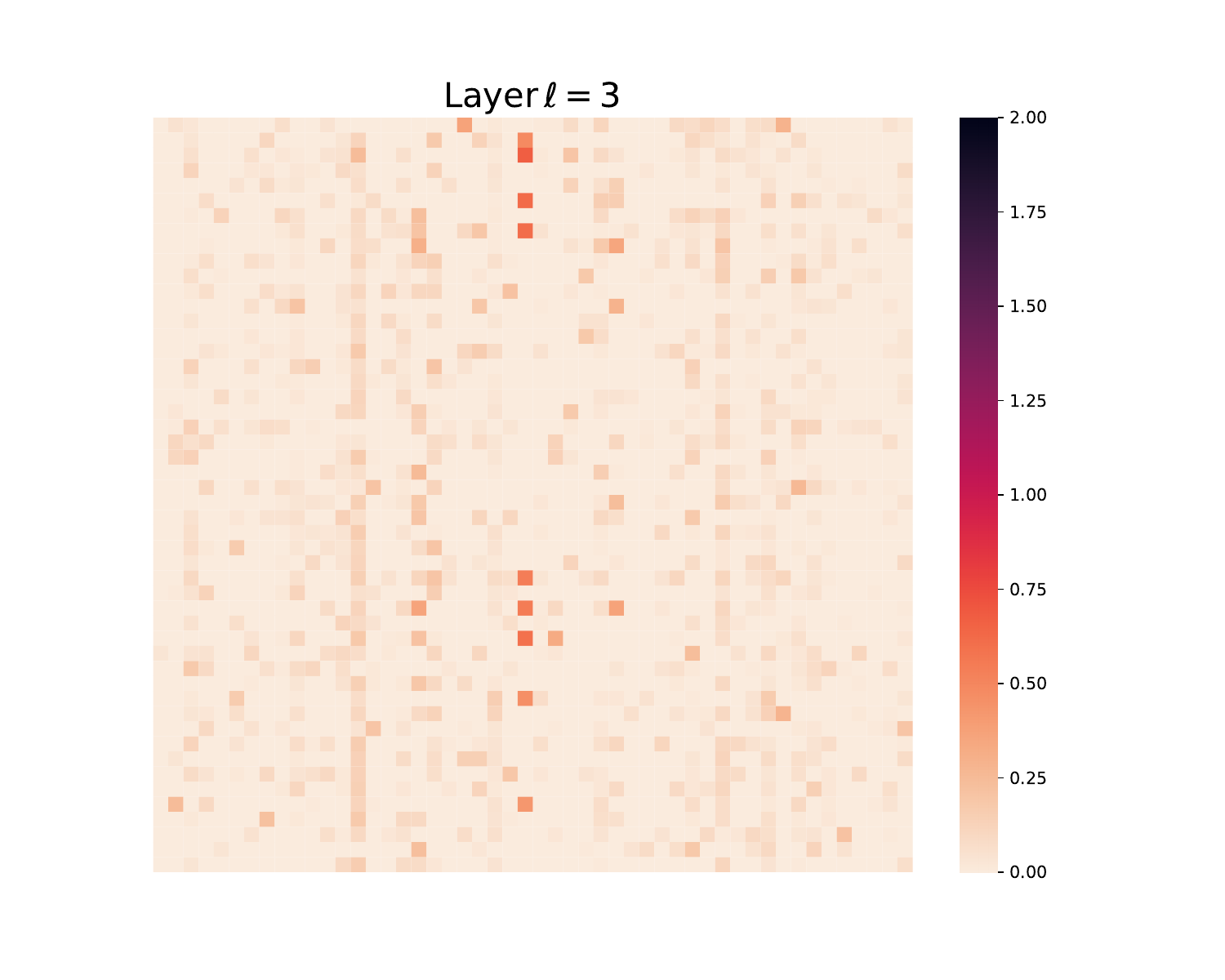}
     \end{subfigure}
     \begin{subfigure}[b]{0.2\textwidth}
         \centering
    \includegraphics[width=\textwidth]{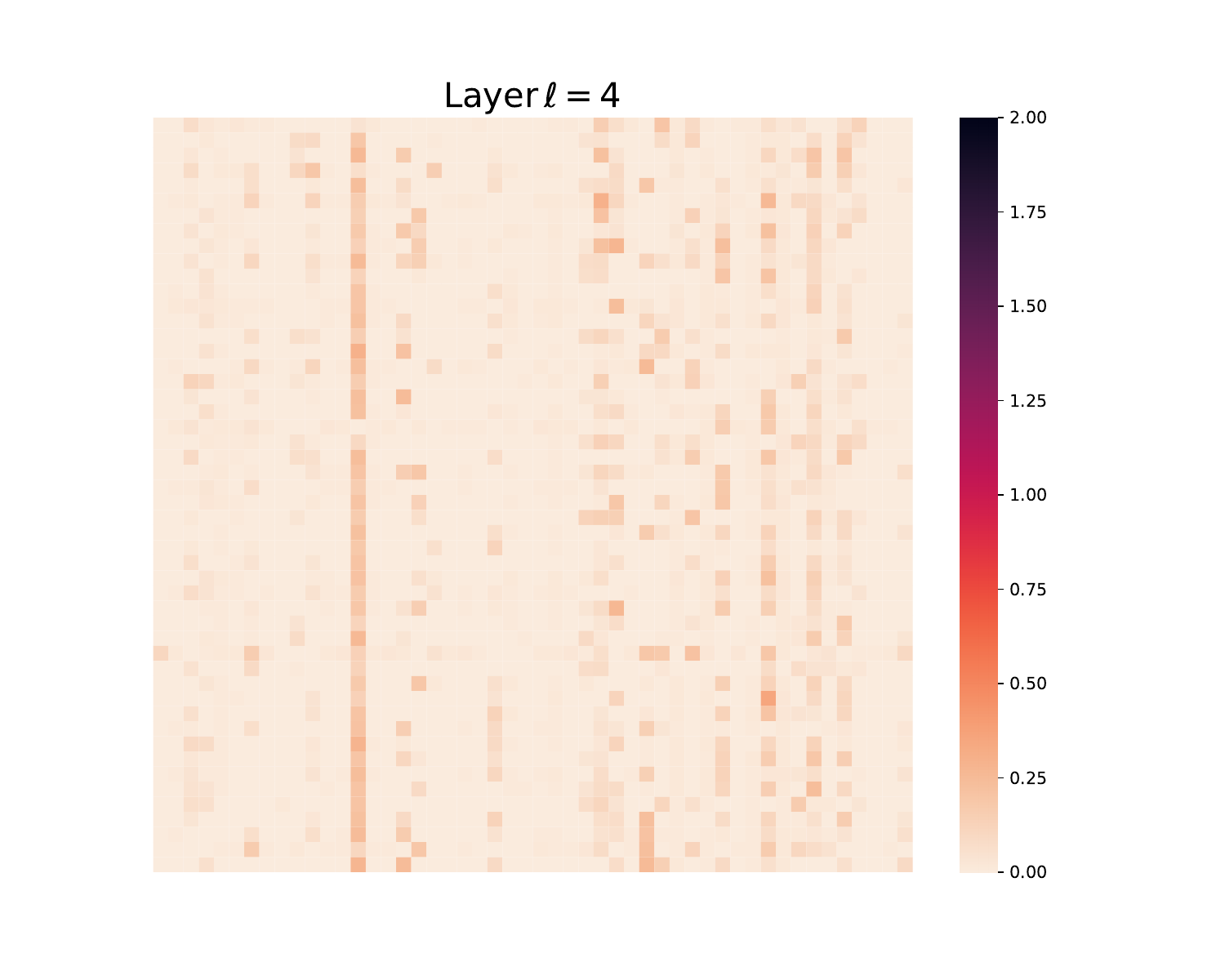}
     \end{subfigure}
     \begin{subfigure}[b]{0.2\textwidth}
         \centering
    \includegraphics[width=\textwidth]{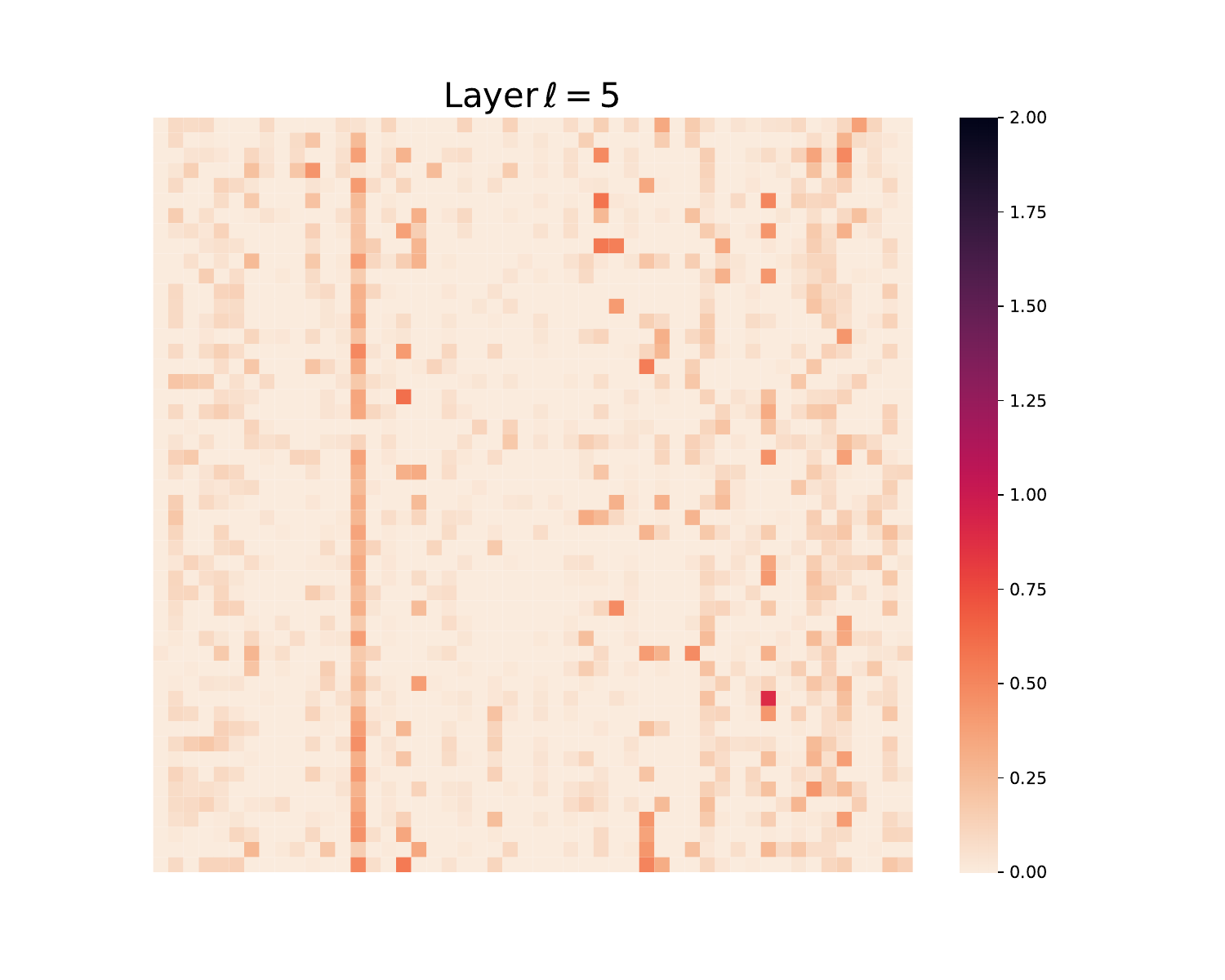}
     \end{subfigure}
     \begin{subfigure}[b]{0.2\textwidth}
         \centering
    \includegraphics[width=\textwidth]{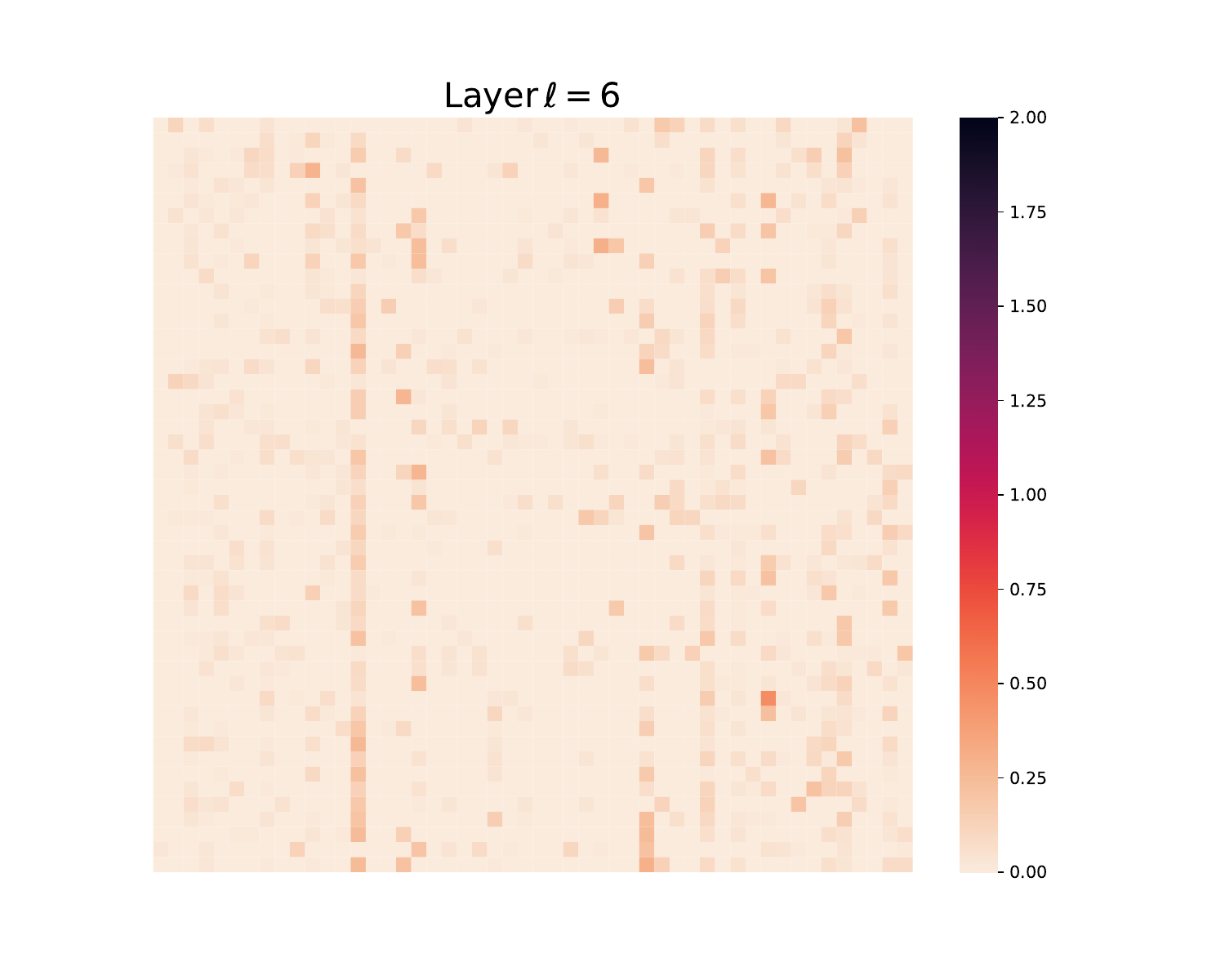}
     \end{subfigure}
     \begin{subfigure}[b]{0.2\textwidth}
         \centering
    \includegraphics[width=\textwidth]{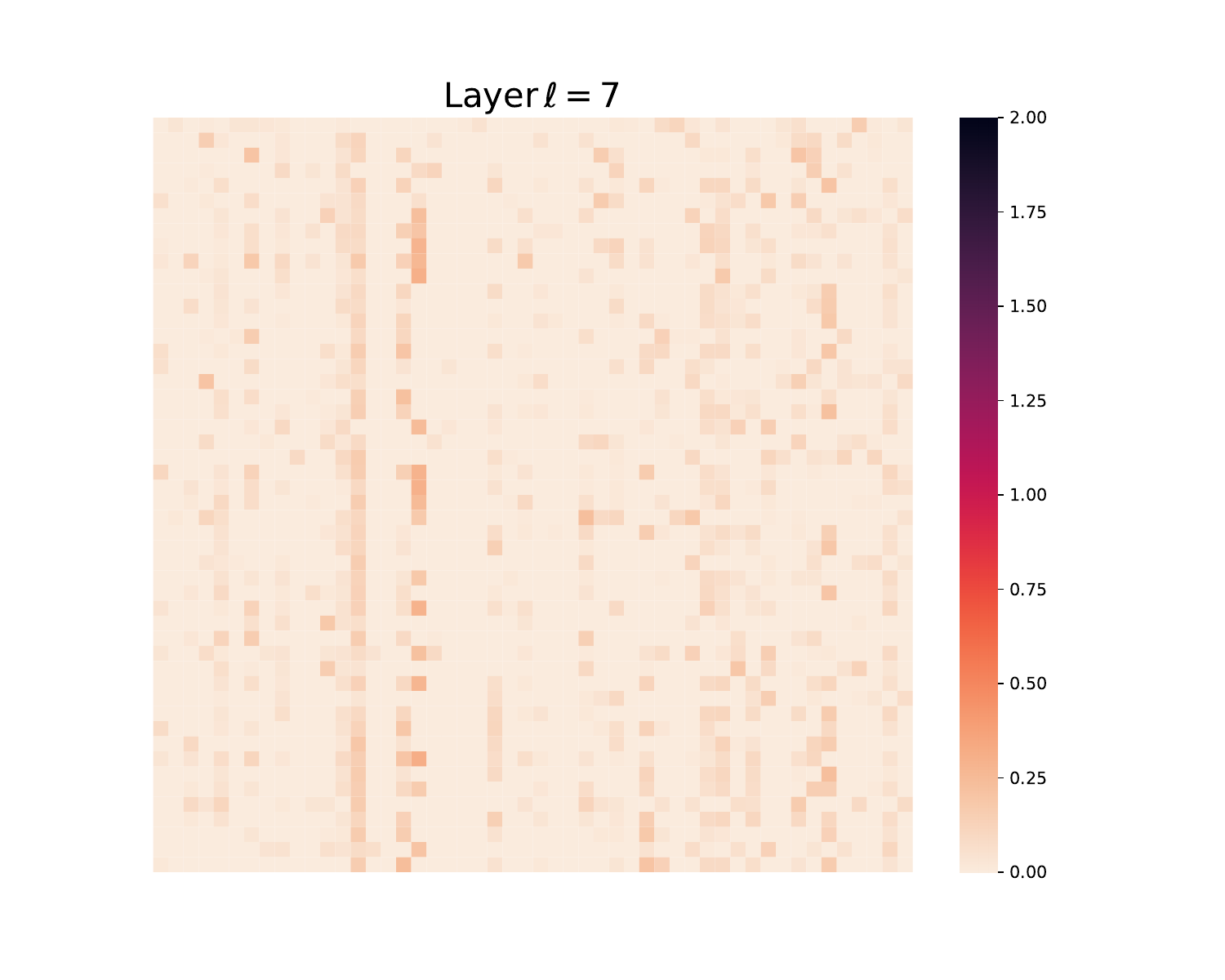}
     \end{subfigure}
     \begin{subfigure}[b]{0.2\textwidth}
         \centering
    \includegraphics[width=\textwidth]{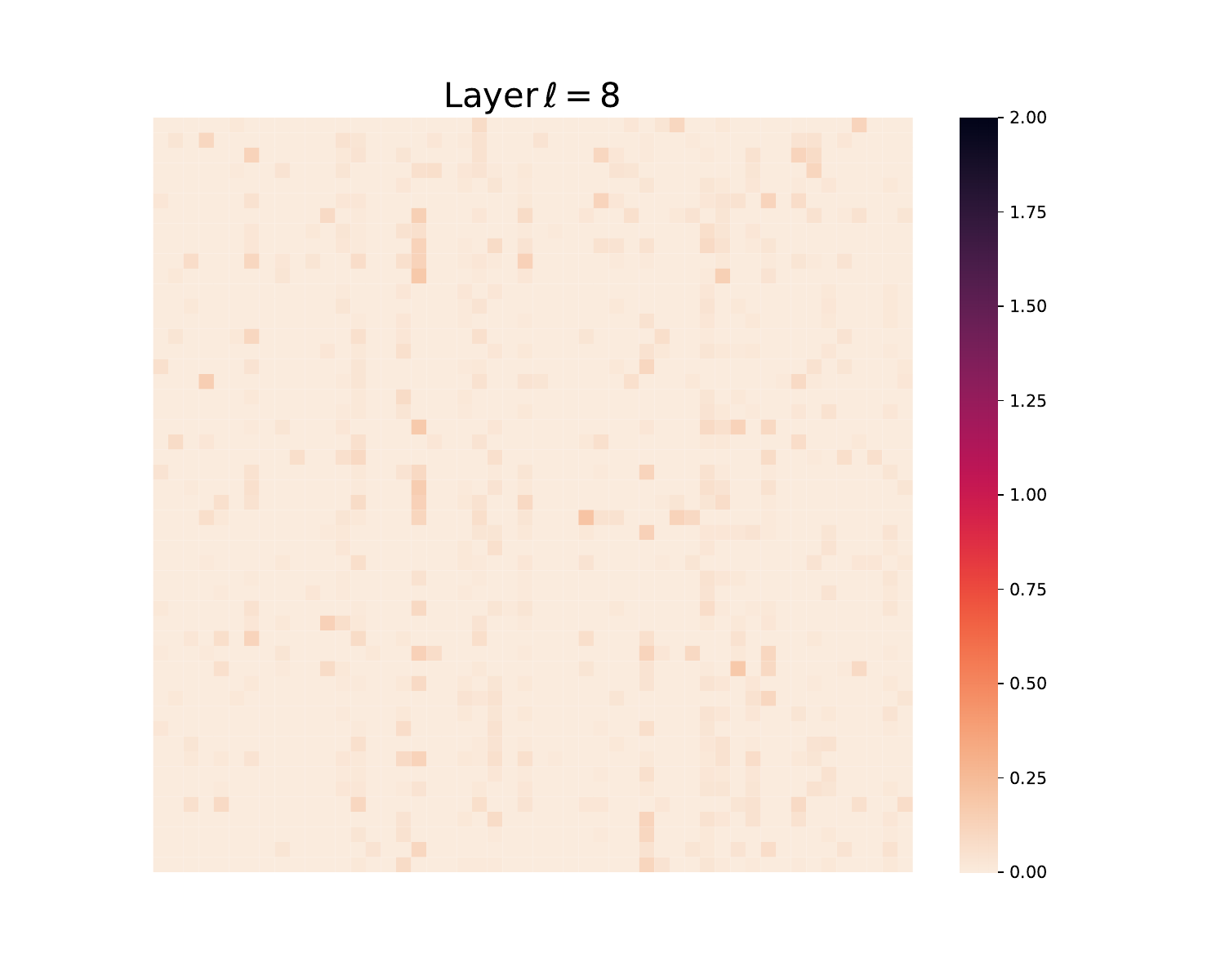}
     \end{subfigure}
     \begin{subfigure}[b]{0.2\textwidth}
         \centering
    \includegraphics[width=\textwidth]{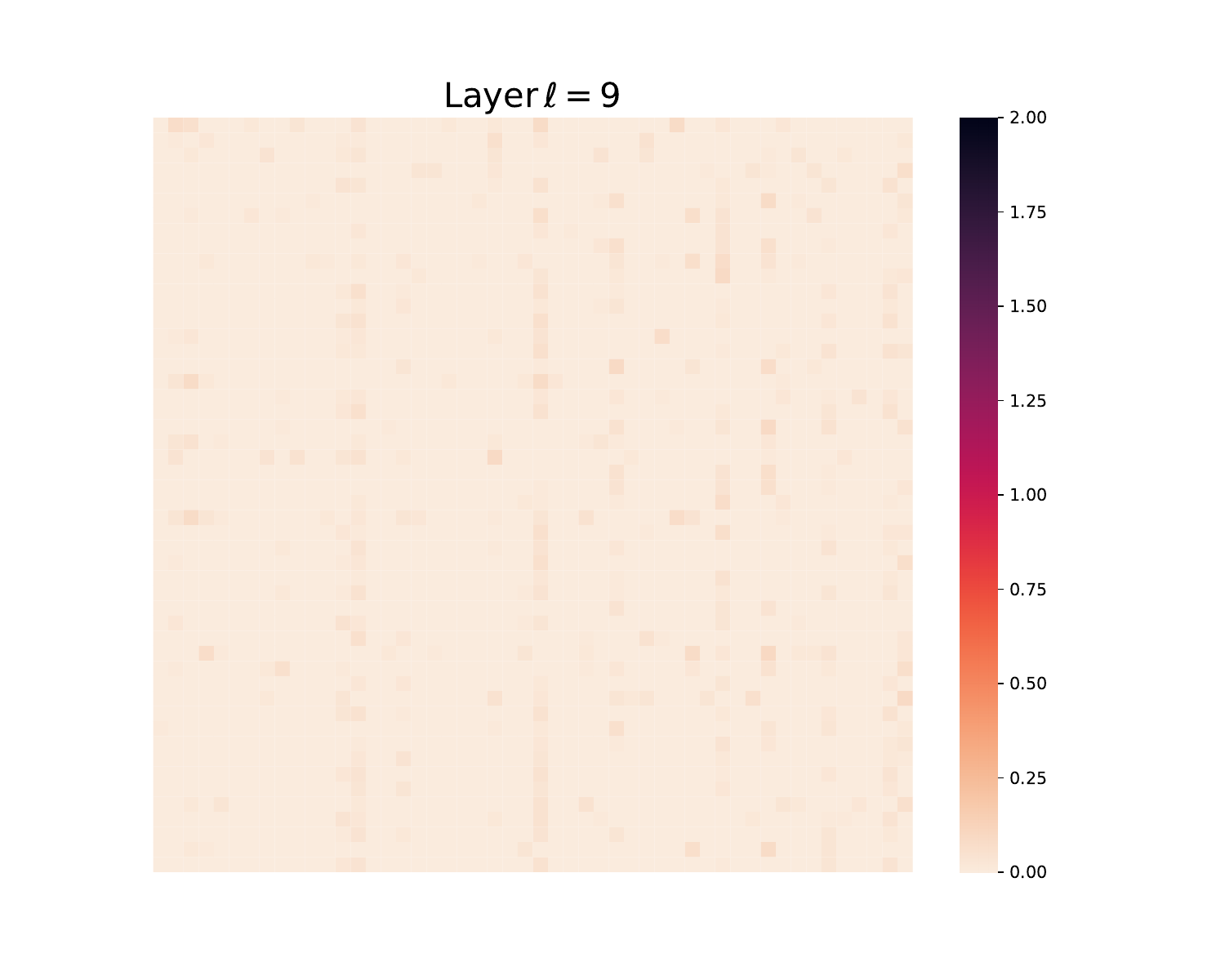}
     \end{subfigure}
     \begin{subfigure}[b]{0.2\textwidth}
         \centering
    \includegraphics[width=\textwidth]{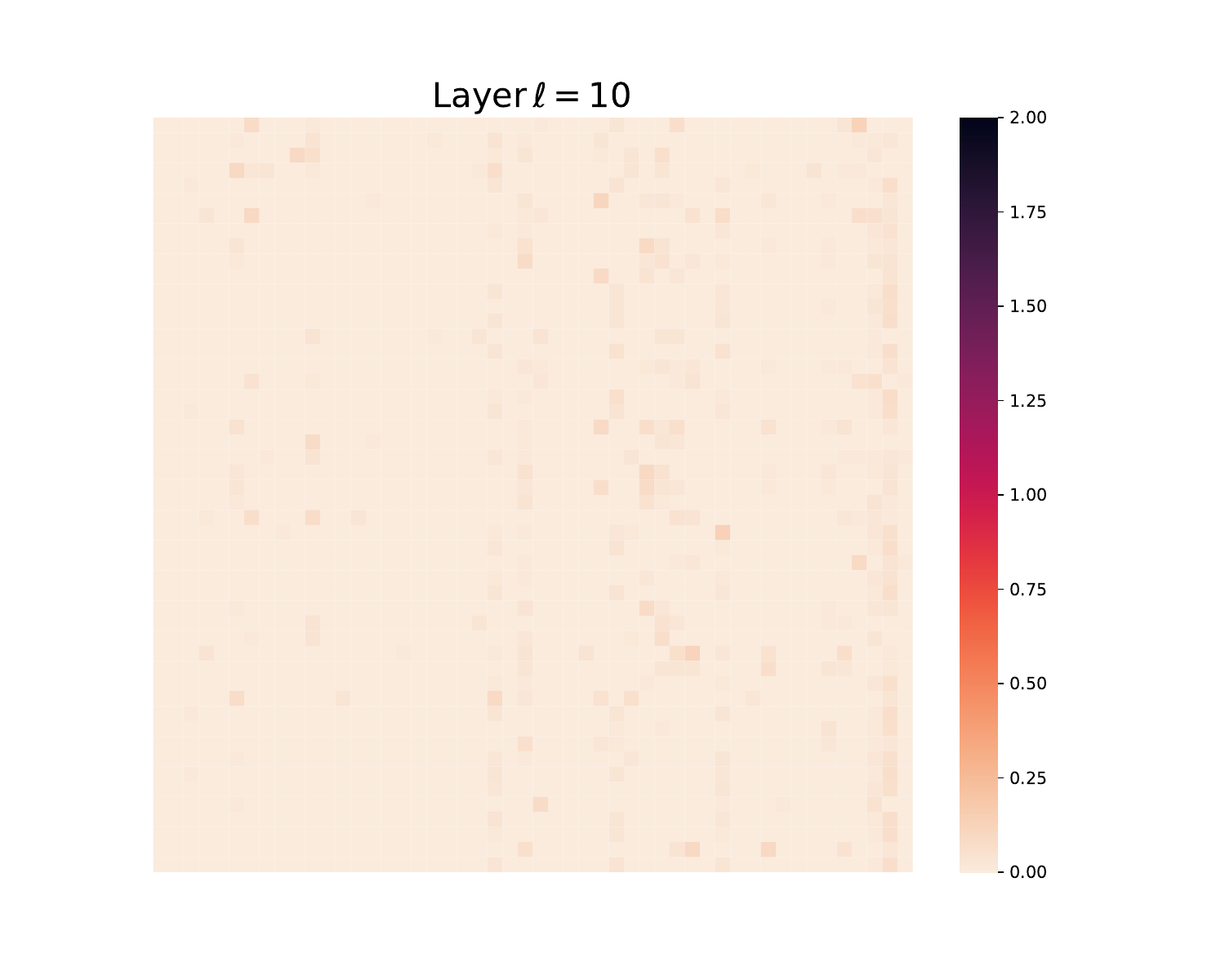}
     \end{subfigure}
     \begin{subfigure}[b]{0.2\textwidth}
         \centering
    \includegraphics[width=\textwidth]{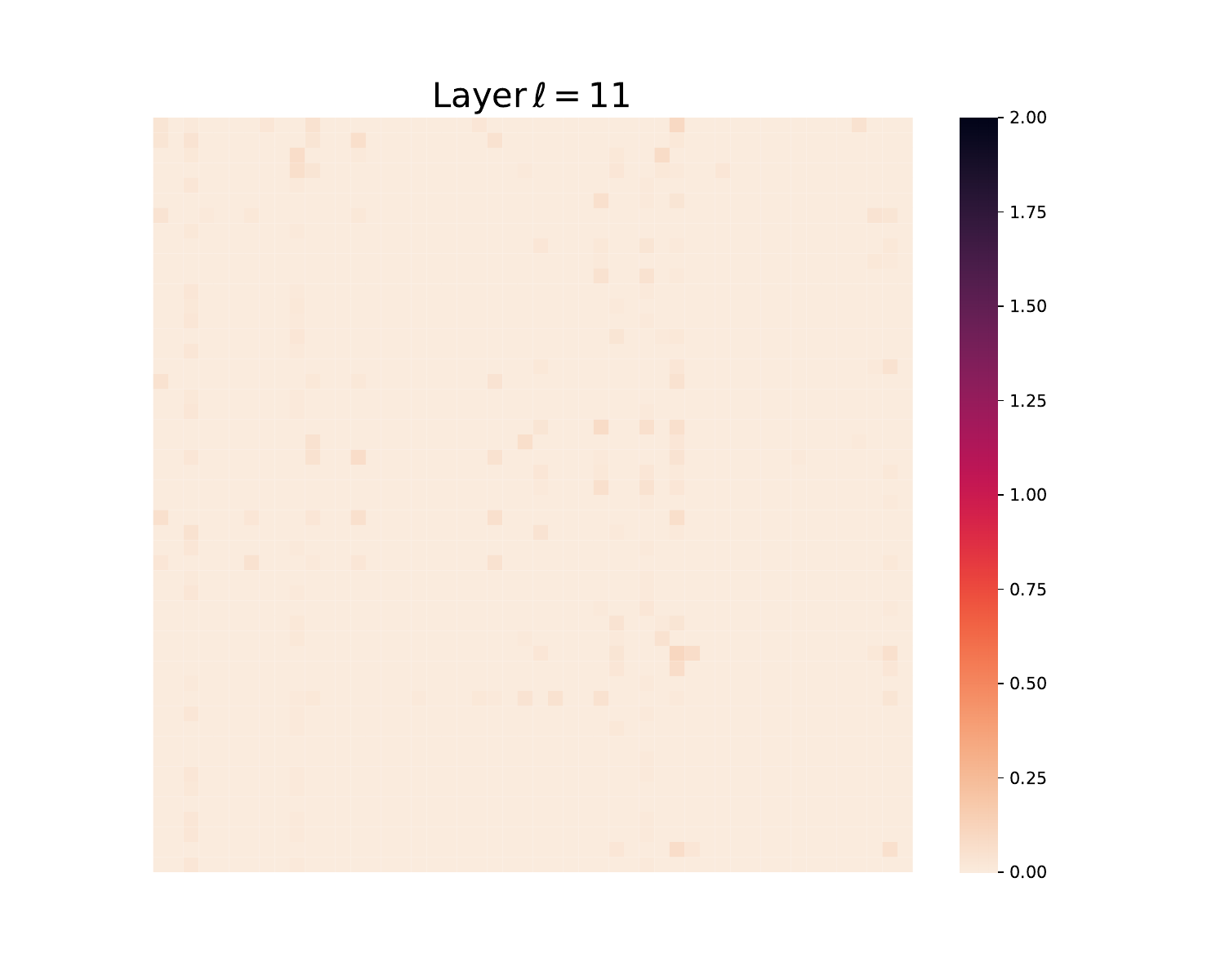}
     \end{subfigure}
     \begin{subfigure}[b]{0.2\textwidth}
         \centering
    \includegraphics[width=\textwidth]{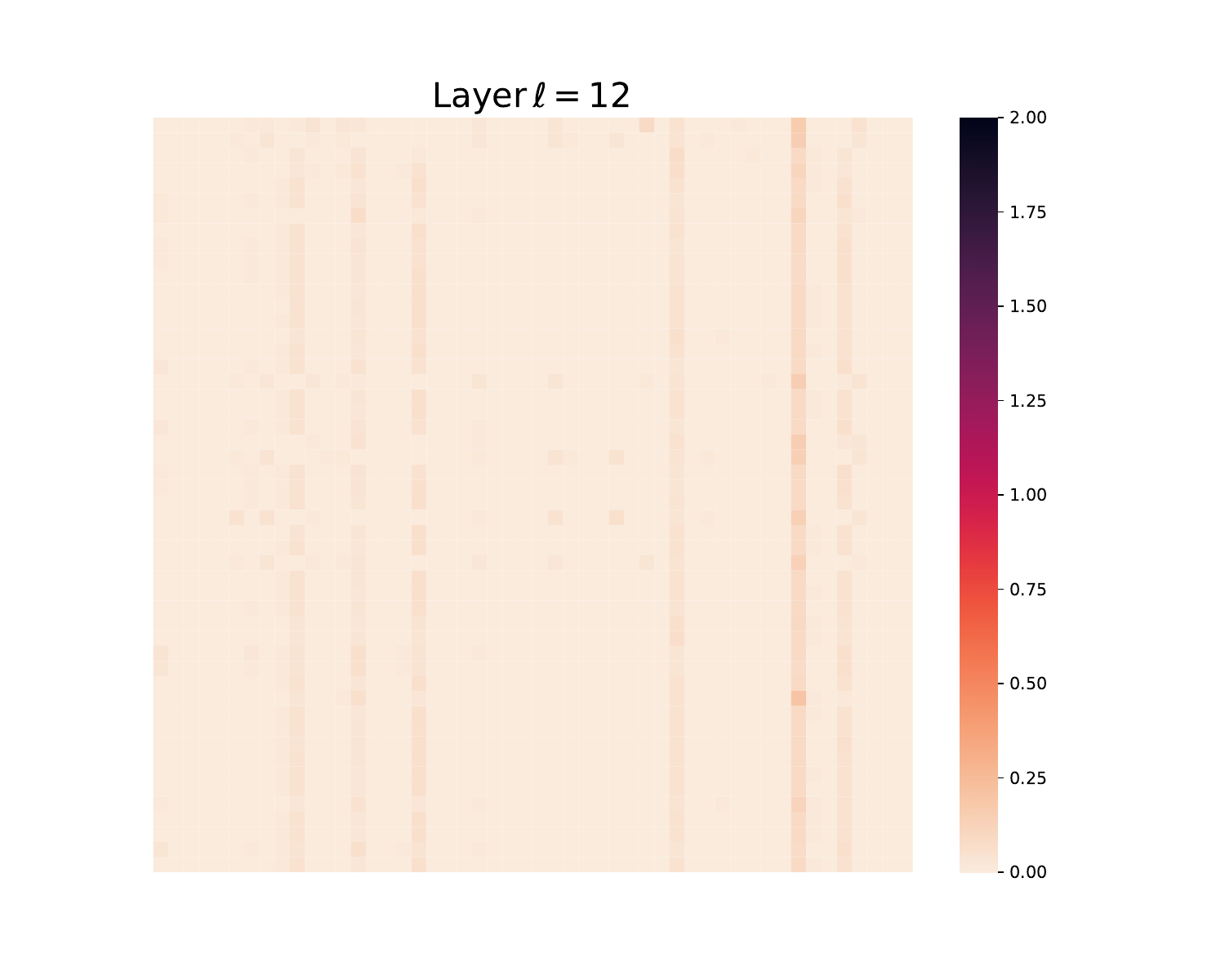}
     \end{subfigure}
        \caption{\small Visualizing layer-wise token $\vZ^{\ell}$ representations at each layer $\ell$. To enhance the visual clarity, we randomly extract a 50$\times$50 sub-matrix from $\vZ^{\ell}$ for display purposes. (\textit{Sample 1})}
        \label{fig:appendix-exp-ista-sparsity-heatmap-sample1}
\end{figure}

\begin{figure}[ht]
     \centering
     \begin{subfigure}[b]{0.2\textwidth}
         \centering
    \includegraphics[width=\textwidth]{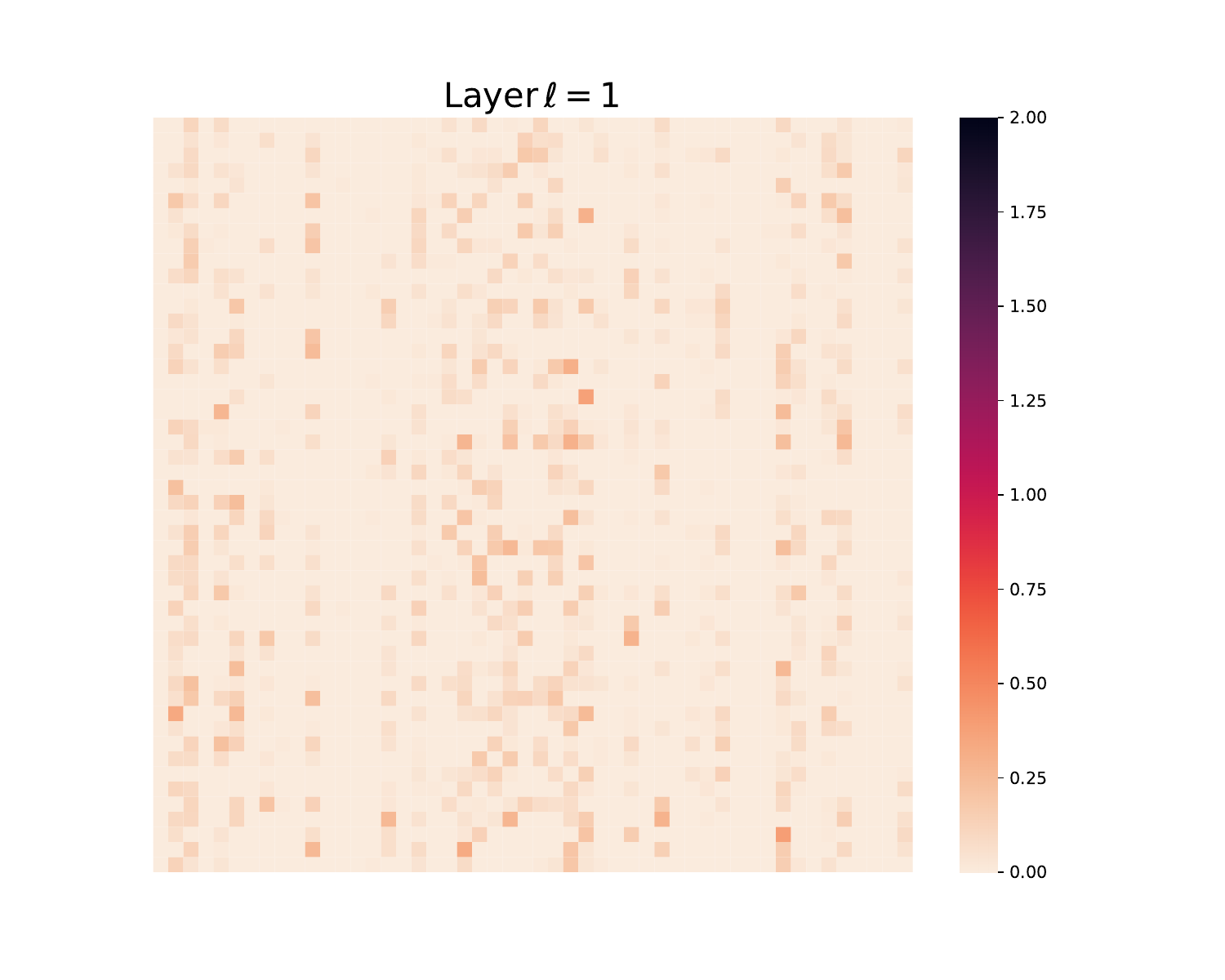}
     \end{subfigure}
     \begin{subfigure}[b]{0.2\textwidth}
         \centering
    \includegraphics[width=\textwidth]{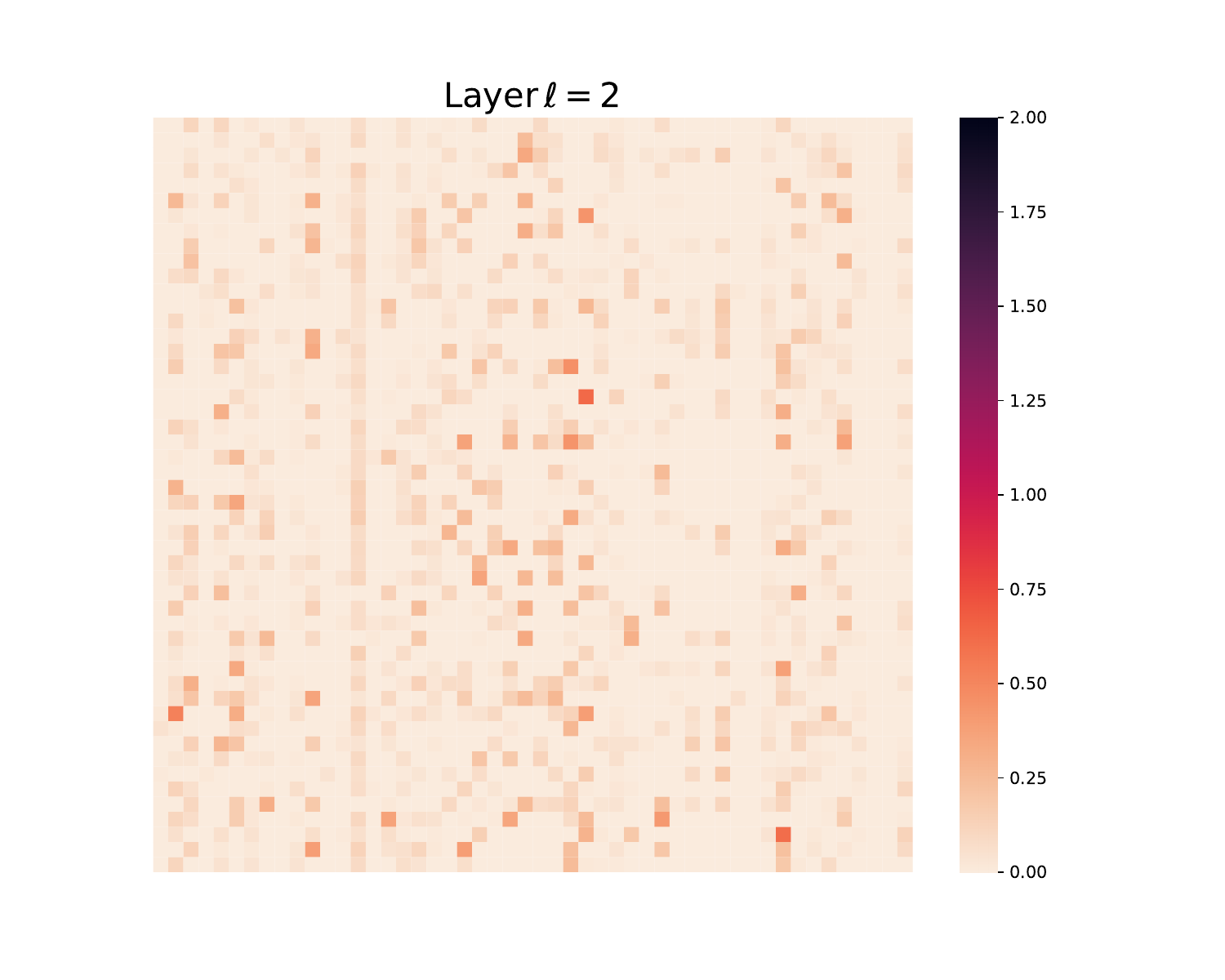}
     \end{subfigure}
     \begin{subfigure}[b]{0.2\textwidth}
         \centering
    \includegraphics[width=\textwidth]{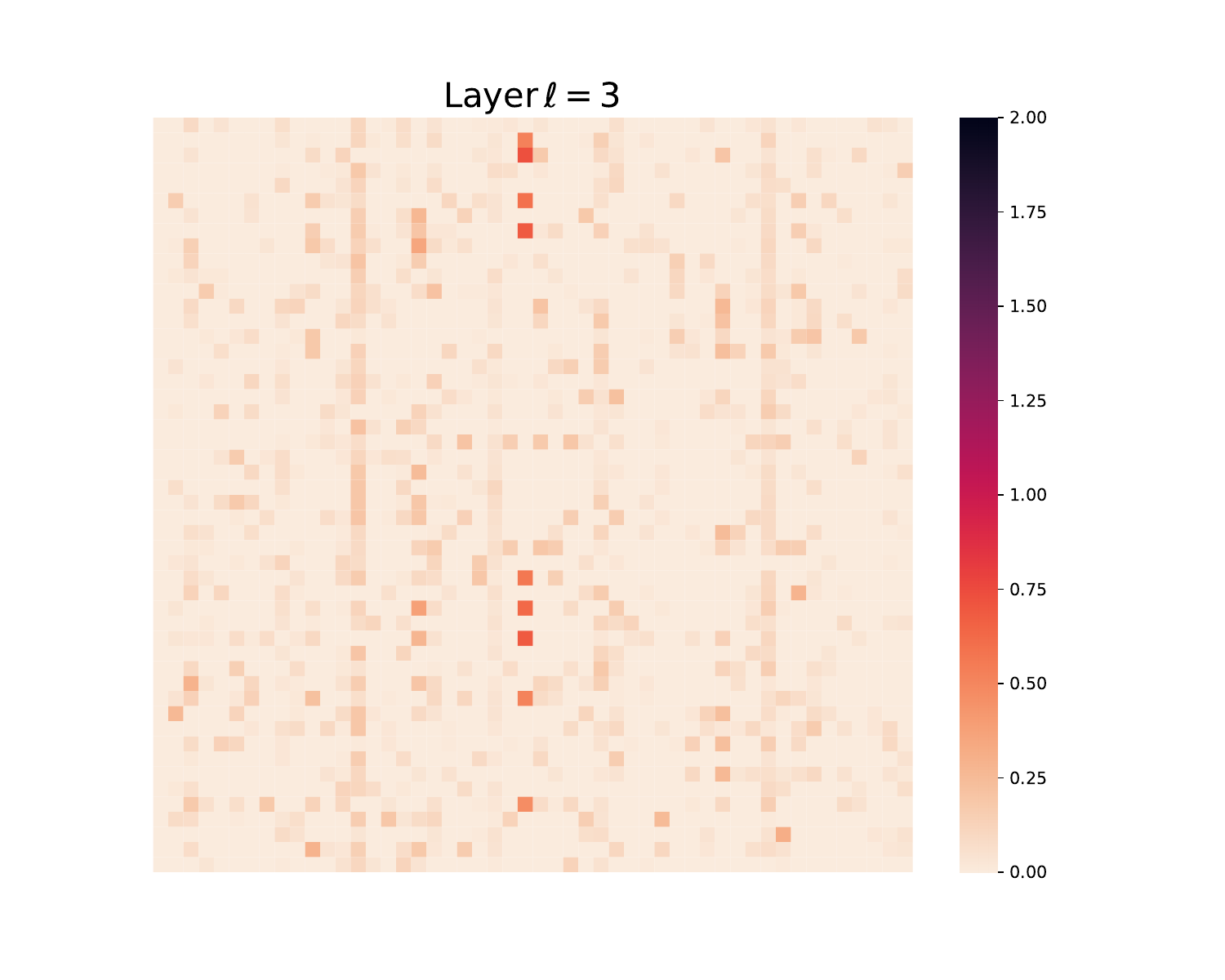}
     \end{subfigure}
     \begin{subfigure}[b]{0.2\textwidth}
         \centering
    \includegraphics[width=\textwidth]{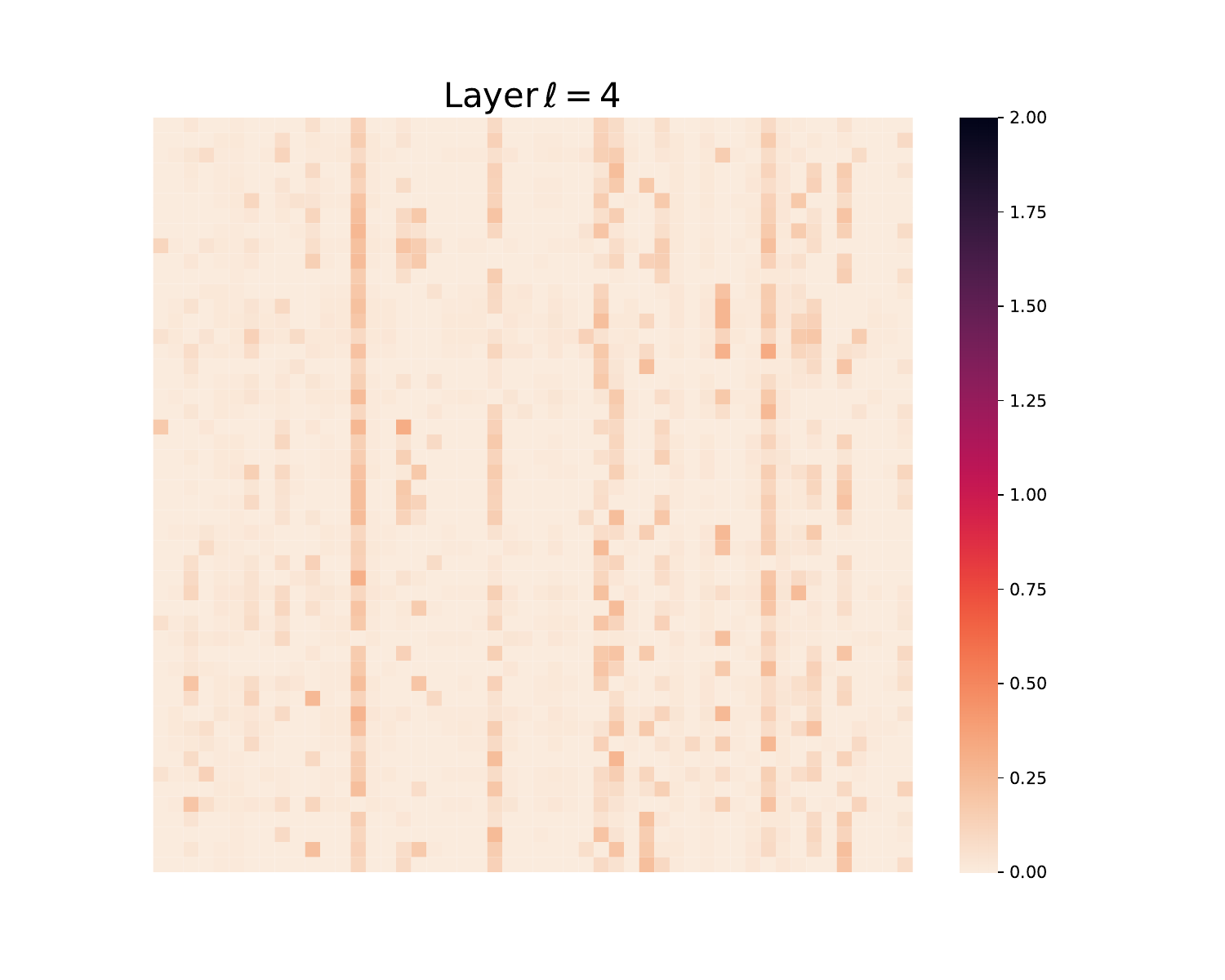}
     \end{subfigure}
     \begin{subfigure}[b]{0.2\textwidth}
         \centering
    \includegraphics[width=\textwidth]{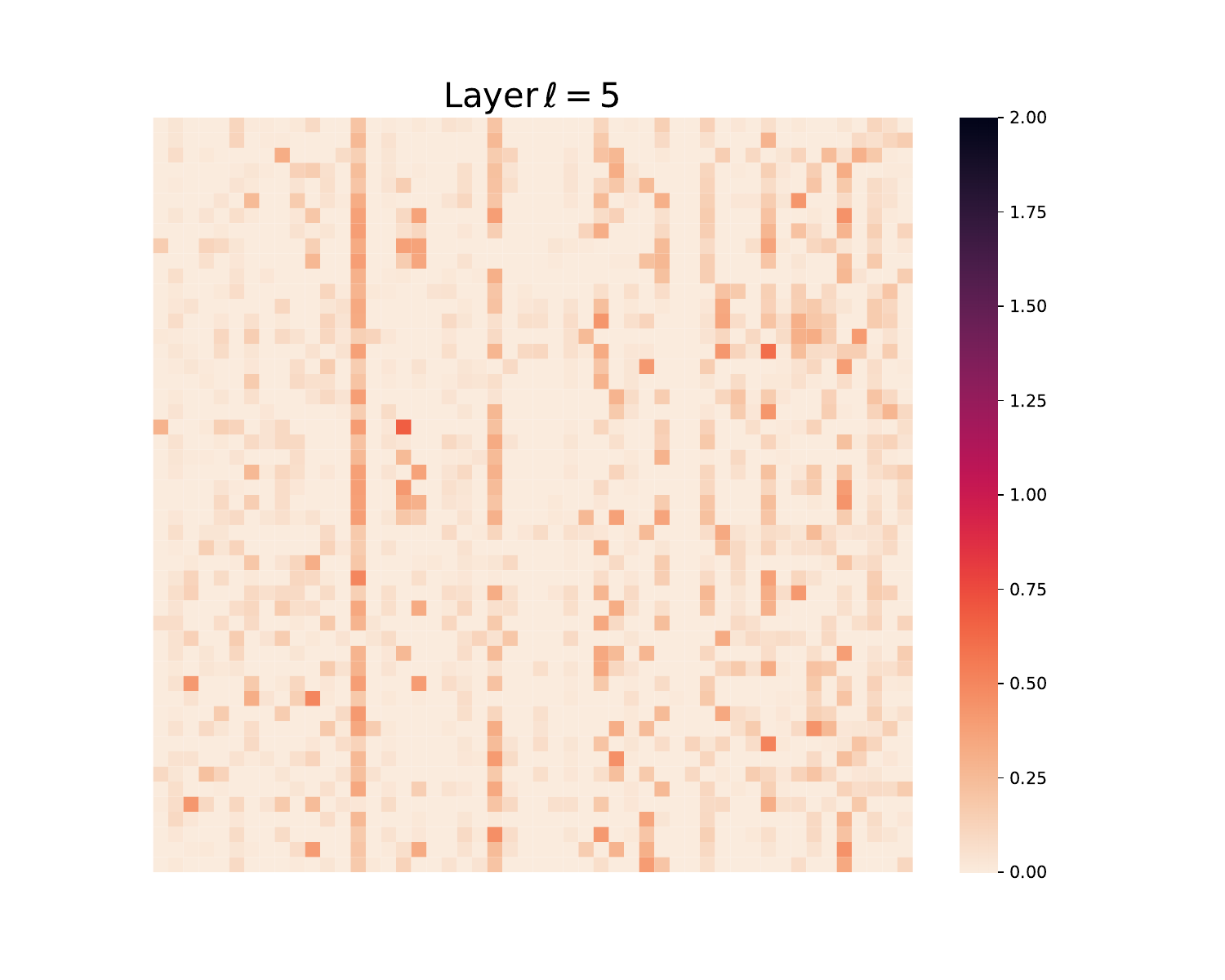}
     \end{subfigure}
     \begin{subfigure}[b]{0.2\textwidth}
         \centering
    \includegraphics[width=\textwidth]{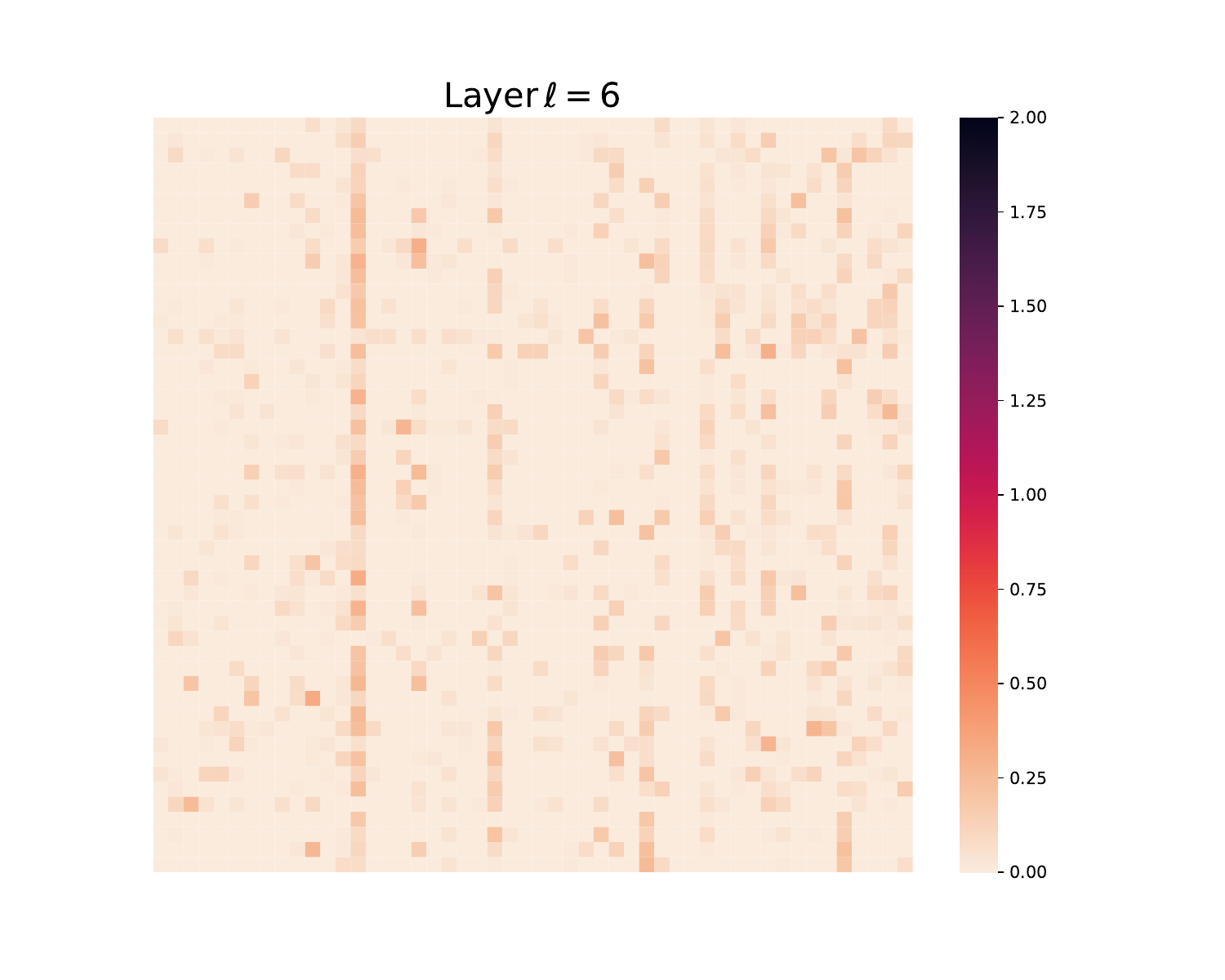}
     \end{subfigure}
     \begin{subfigure}[b]{0.2\textwidth}
         \centering
    \includegraphics[width=\textwidth]{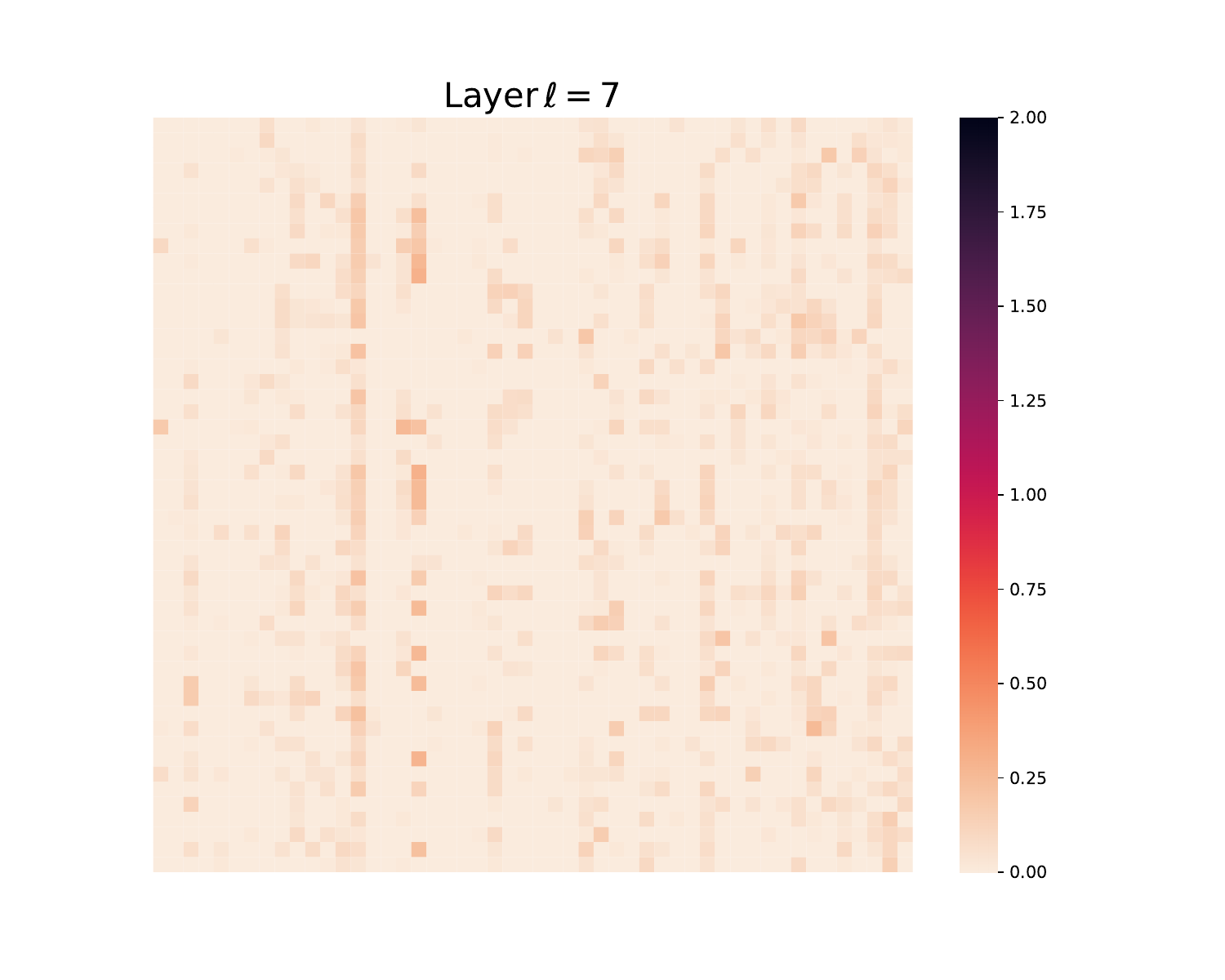}
     \end{subfigure}
     \begin{subfigure}[b]{0.2\textwidth}
         \centering
    \includegraphics[width=\textwidth]{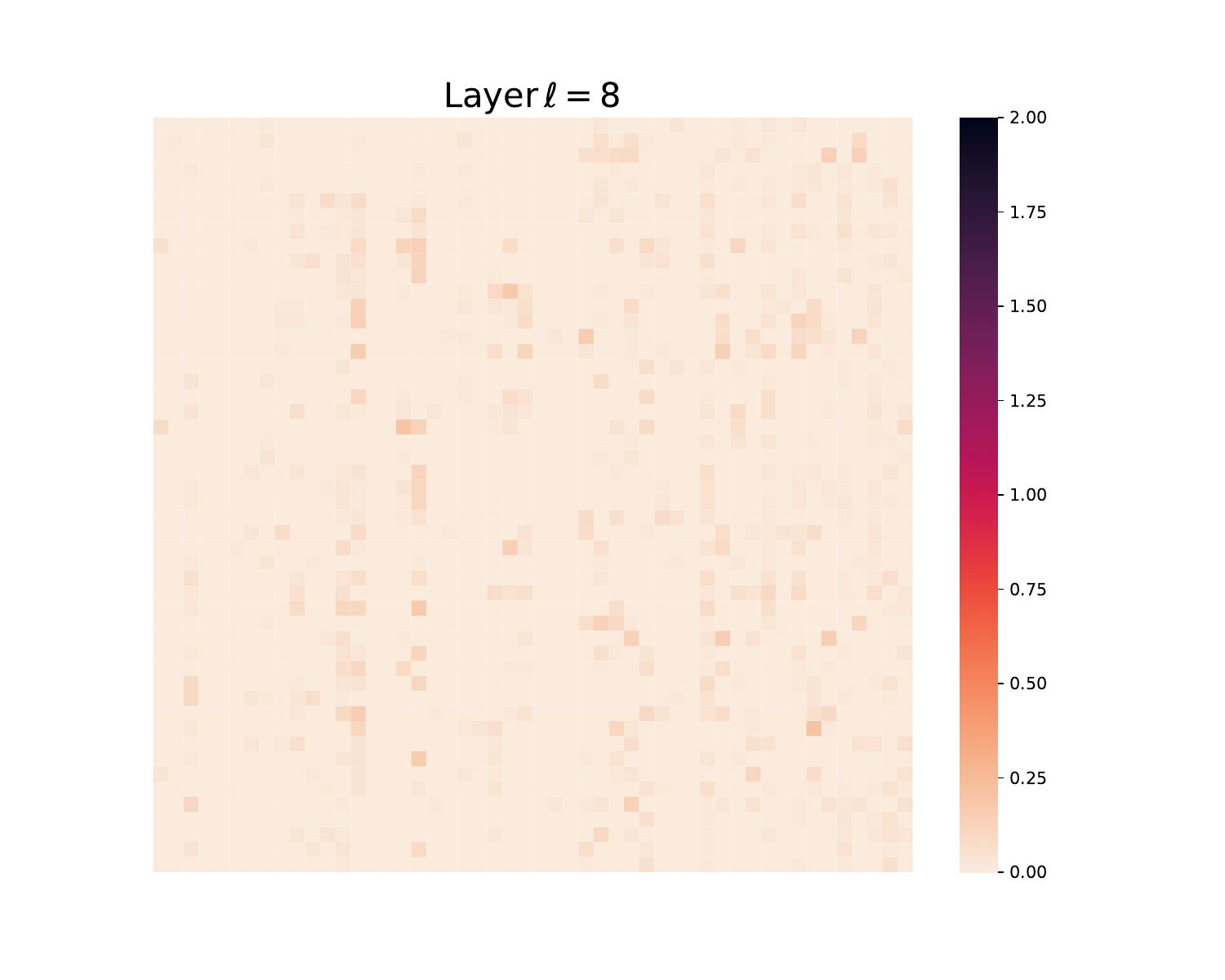}
     \end{subfigure}
     \begin{subfigure}[b]{0.2\textwidth}
         \centering
    \includegraphics[width=\textwidth]{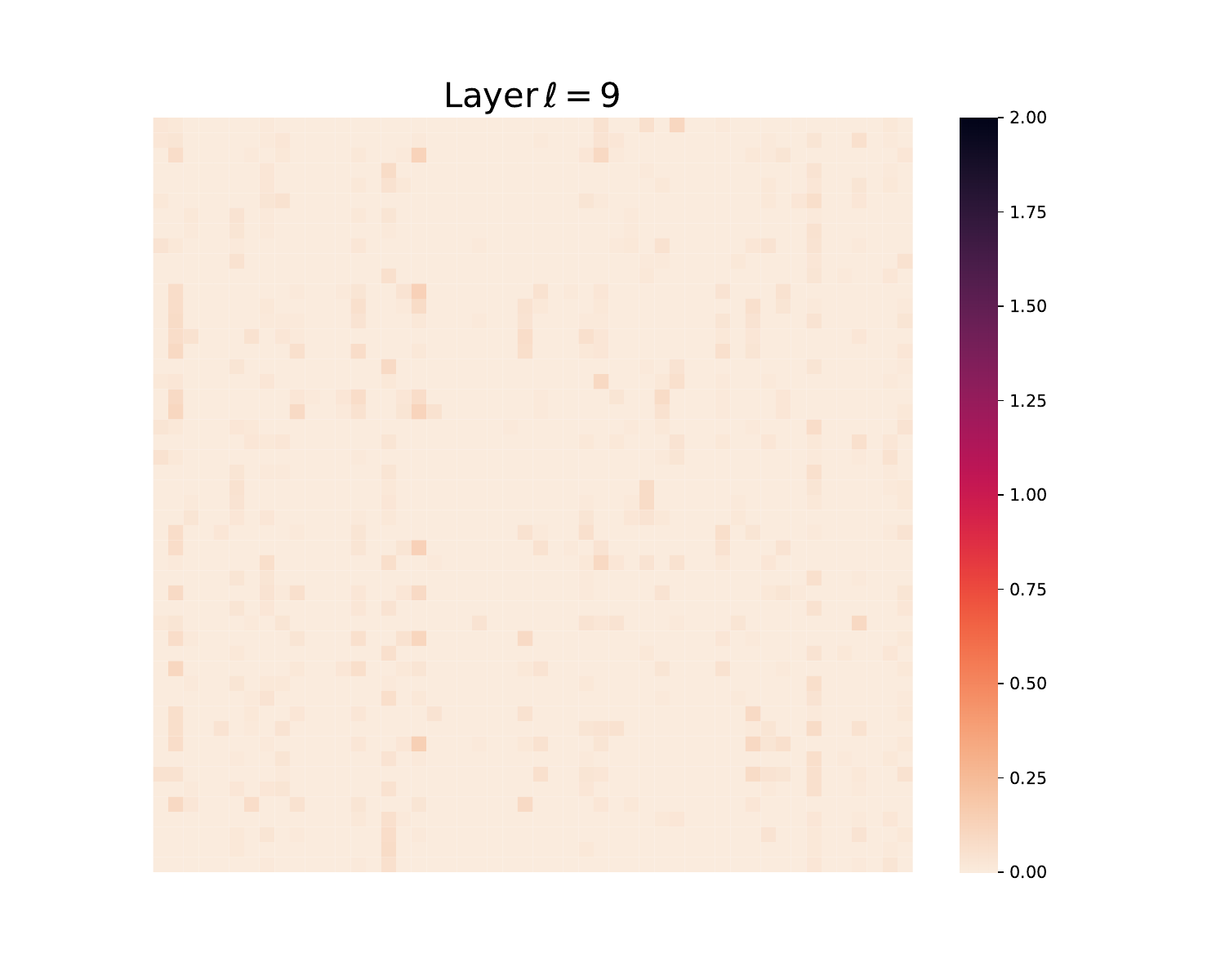}
     \end{subfigure}
     \begin{subfigure}[b]{0.2\textwidth}
         \centering
    \includegraphics[width=\textwidth]{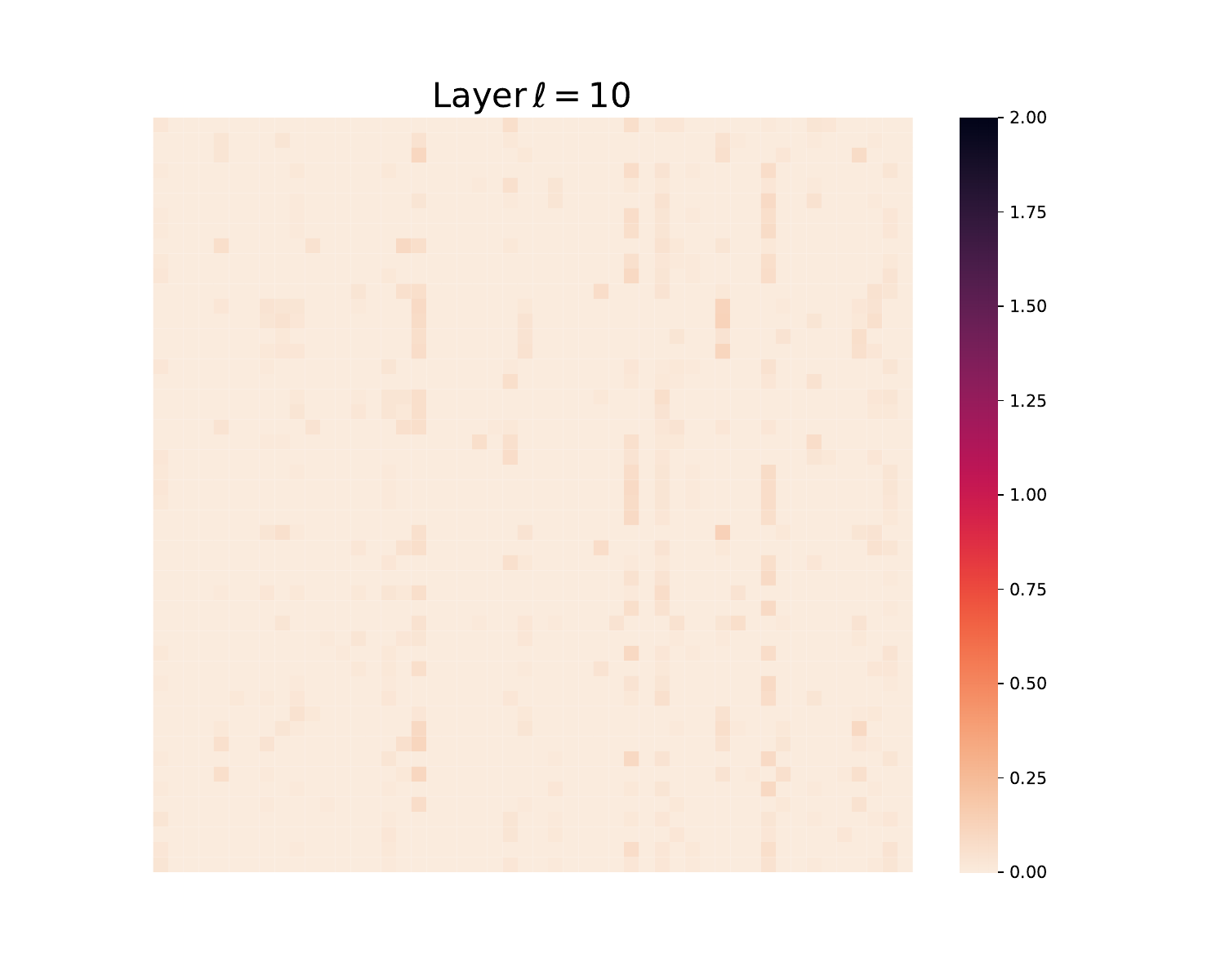}
     \end{subfigure}
     \begin{subfigure}[b]{0.2\textwidth}
         \centering
    \includegraphics[width=\textwidth]{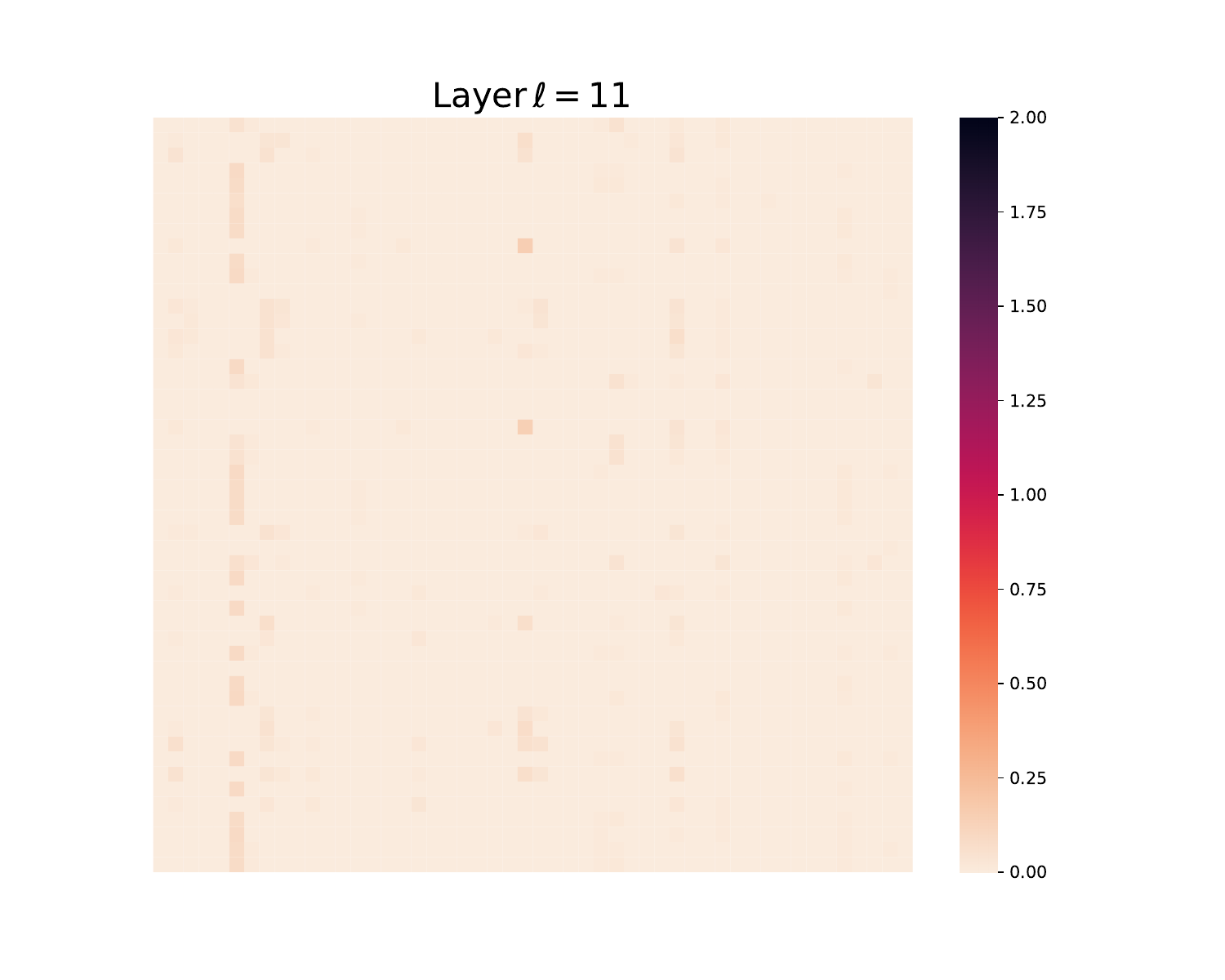}
     \end{subfigure}
     \begin{subfigure}[b]{0.2\textwidth}
         \centering
    \includegraphics[width=\textwidth]{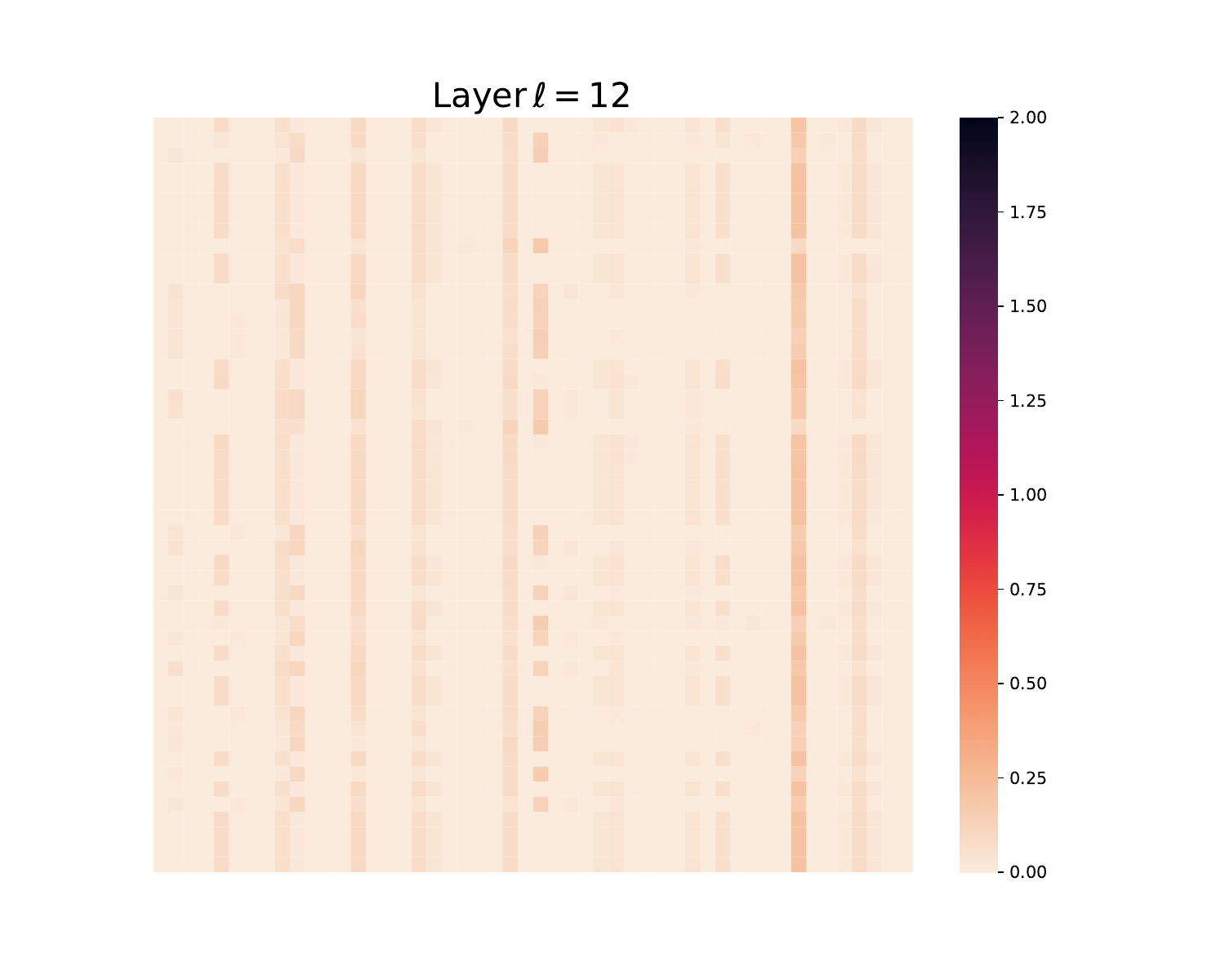}
     \end{subfigure}
        \caption{\small Visualizing layer-wise token $\vZ^{\ell}$ representations at each layer $\ell$. To enhance the visual clarity, we randomly extract a 50$\times$50 sub-matrix from $\vZ^{\ell}$ for display purposes. (\textit{Sample 2})}
        \label{fig:appendix-exp-ista-sparsity-heatmap-sample2}
\end{figure}

\begin{figure}[ht]
     \centering
     \begin{subfigure}[b]{0.2\textwidth}
         \centering
    \includegraphics[width=\textwidth]{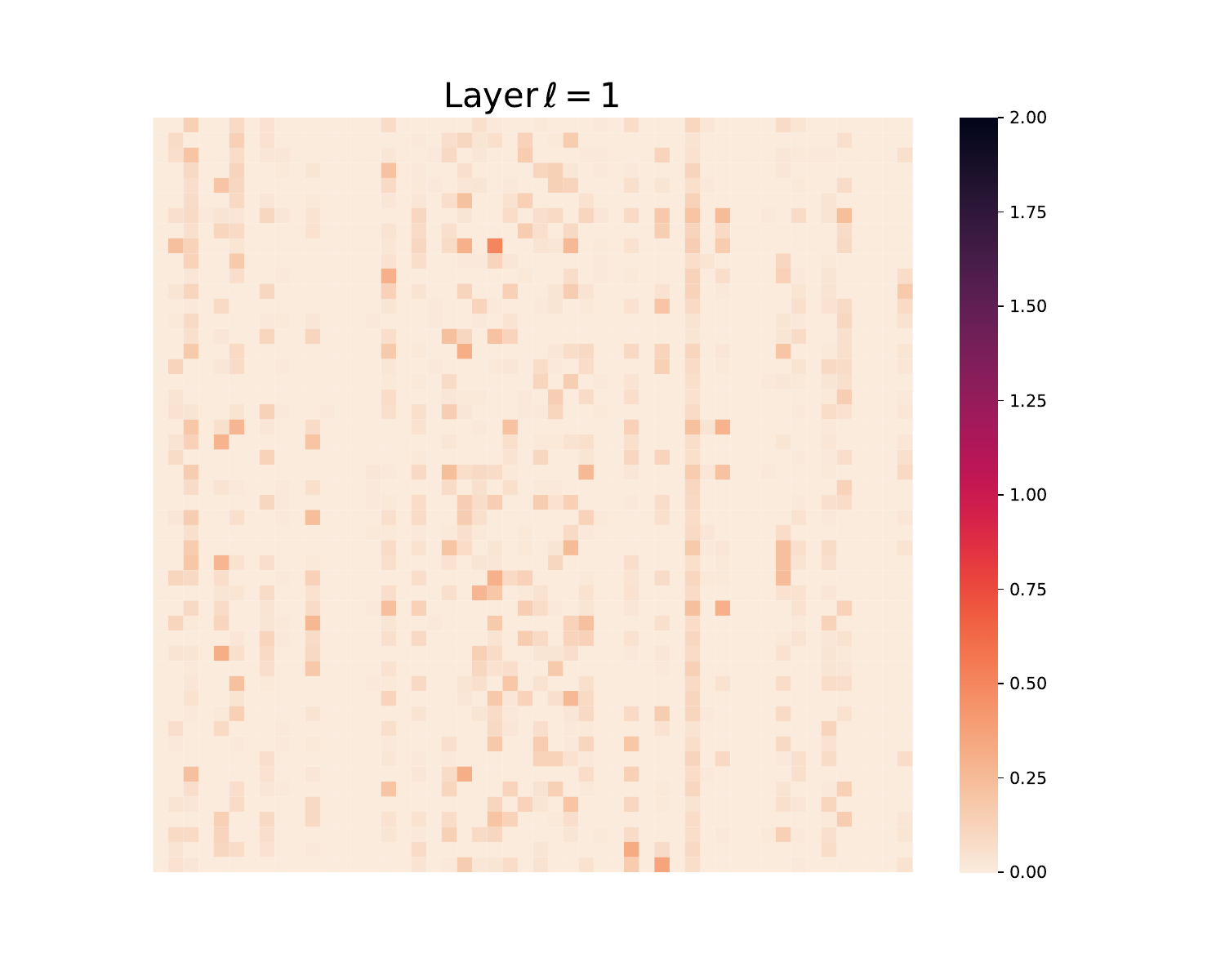}
     \end{subfigure}
     \begin{subfigure}[b]{0.2\textwidth}
         \centering
    \includegraphics[width=\textwidth]{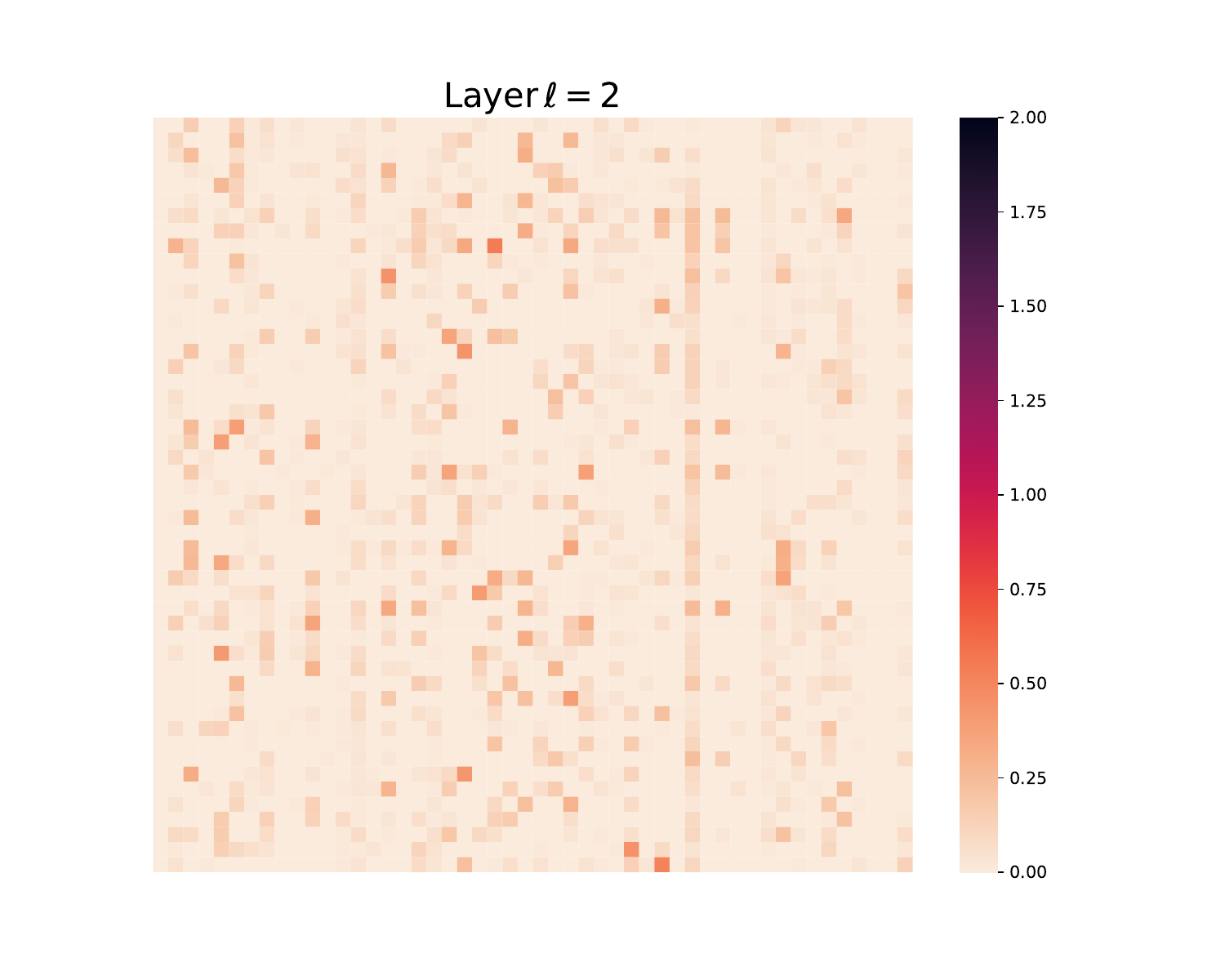}
     \end{subfigure}
     \begin{subfigure}[b]{0.2\textwidth}
         \centering
    \includegraphics[width=\textwidth]{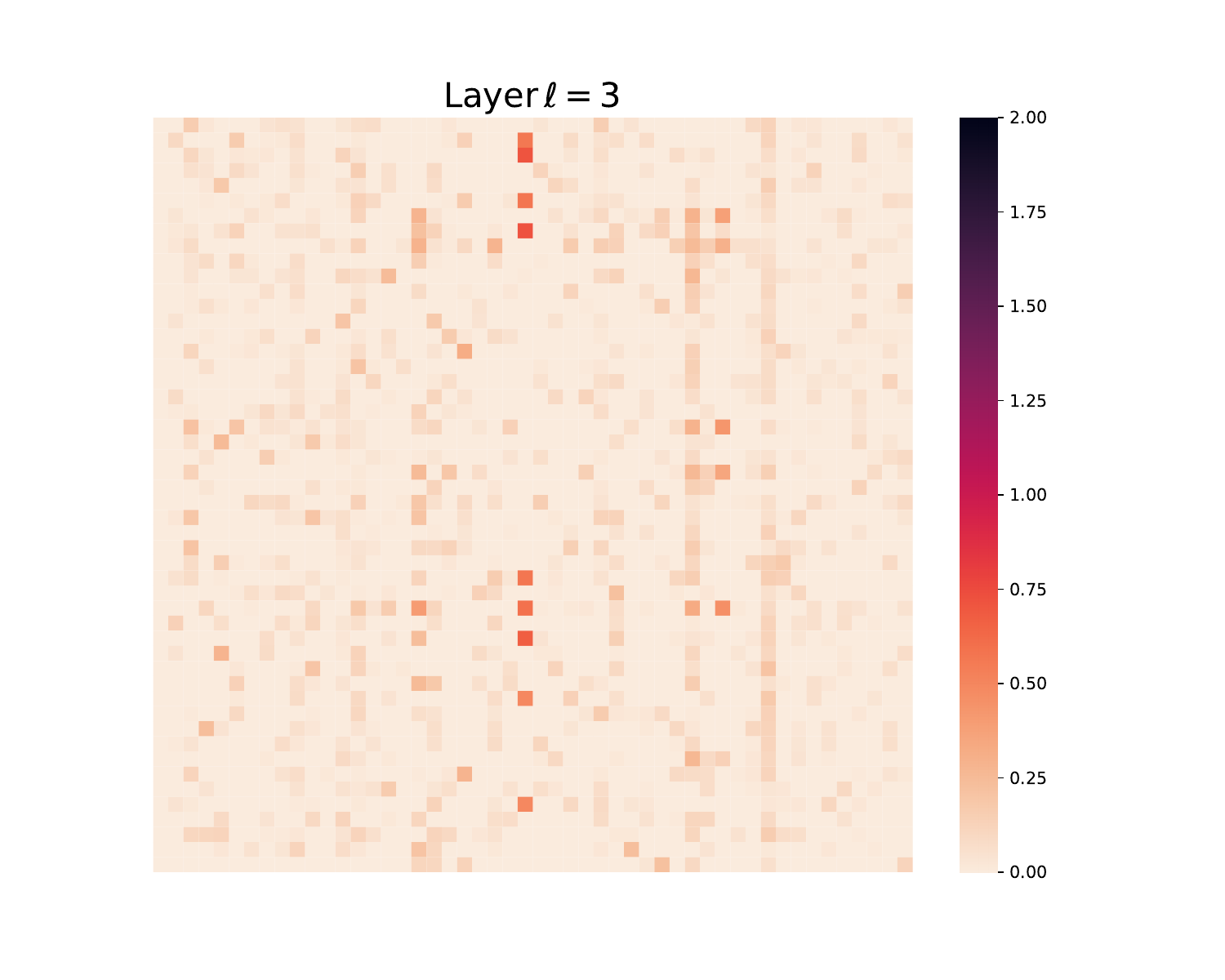}
     \end{subfigure}
     \begin{subfigure}[b]{0.2\textwidth}
         \centering
    \includegraphics[width=\textwidth]{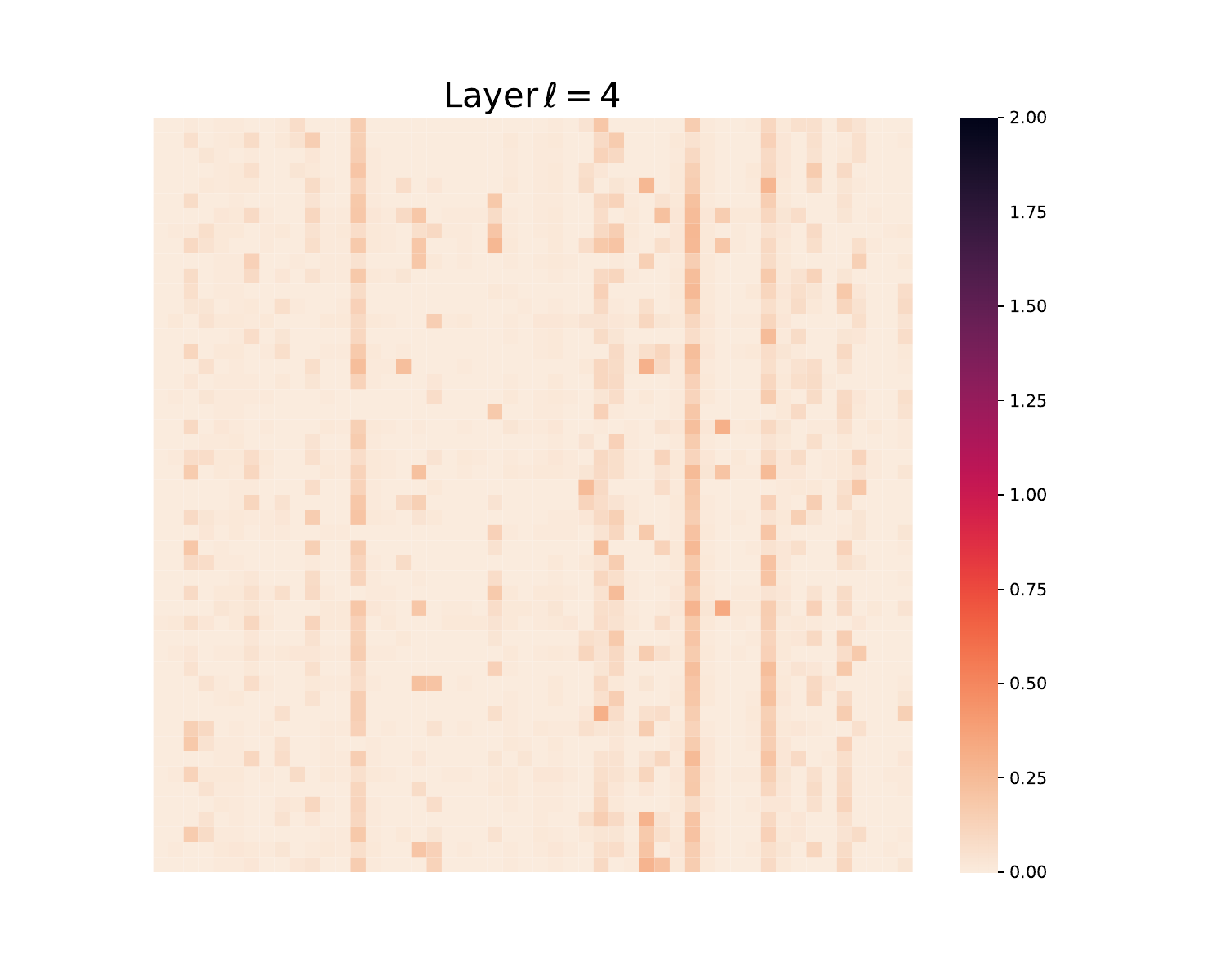}
     \end{subfigure}
     \begin{subfigure}[b]{0.2\textwidth}
         \centering
    \includegraphics[width=\textwidth]{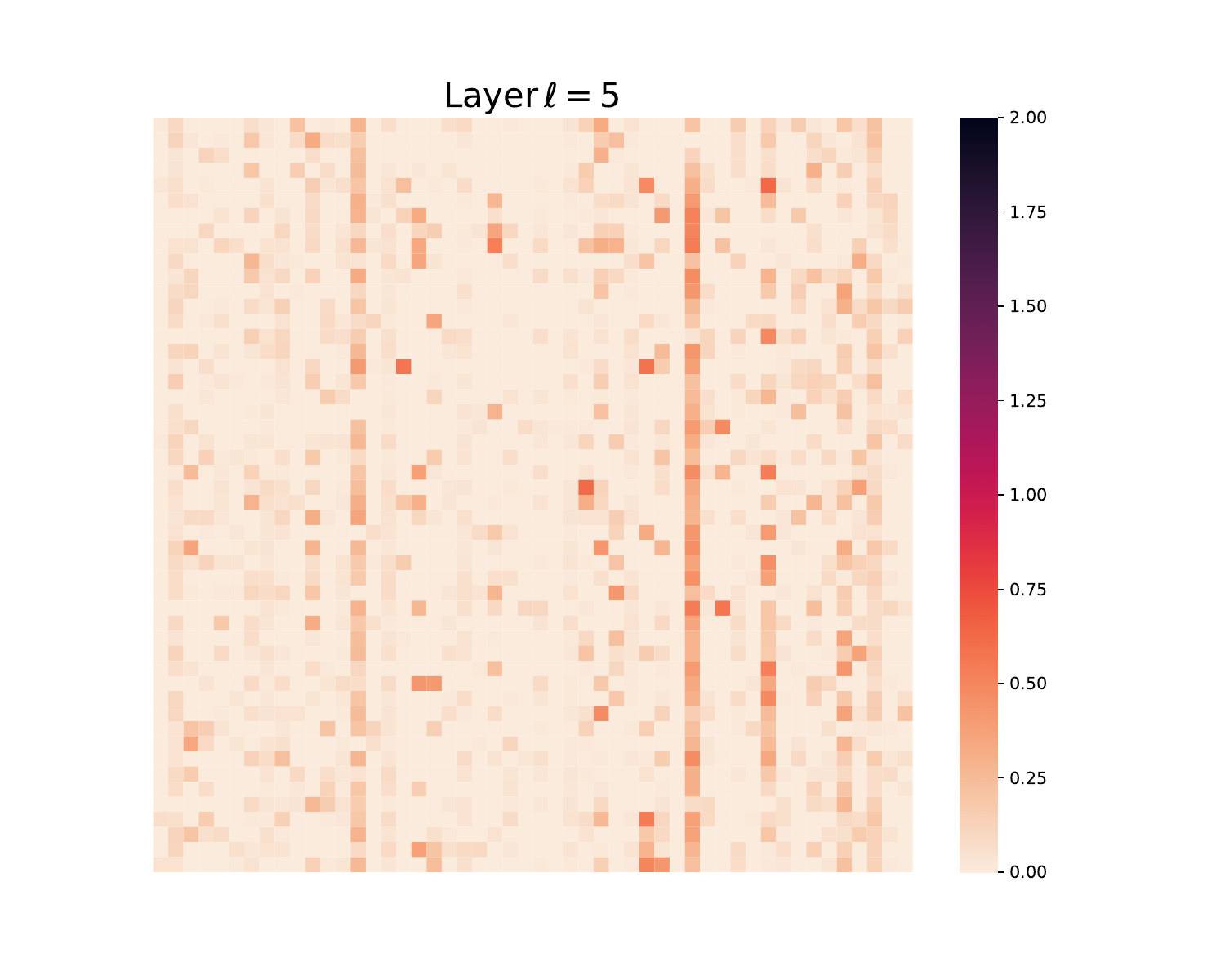}
     \end{subfigure}
     \begin{subfigure}[b]{0.2\textwidth}
         \centering
    \includegraphics[width=\textwidth]{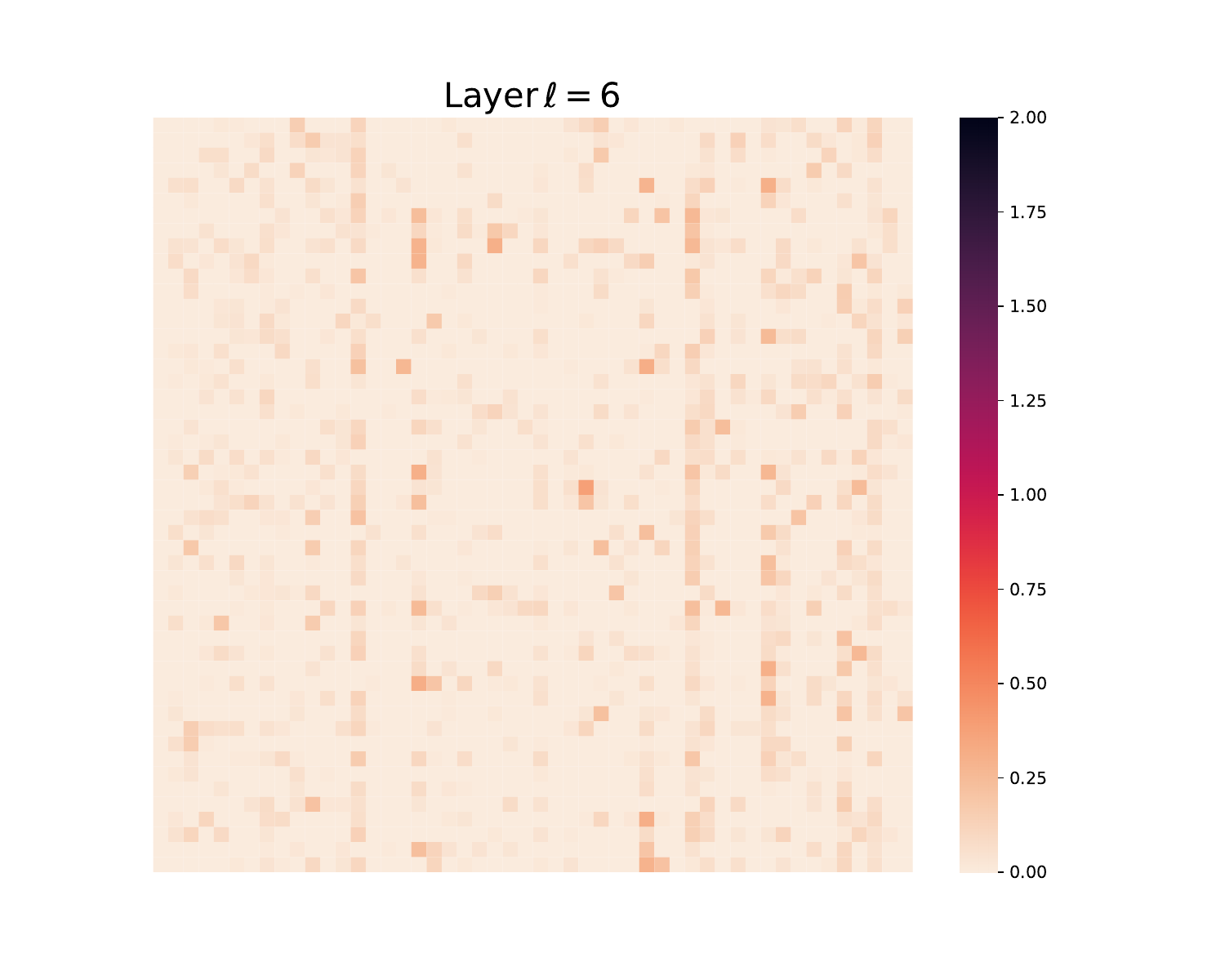}
     \end{subfigure}
     \begin{subfigure}[b]{0.2\textwidth}
         \centering
    \includegraphics[width=\textwidth]{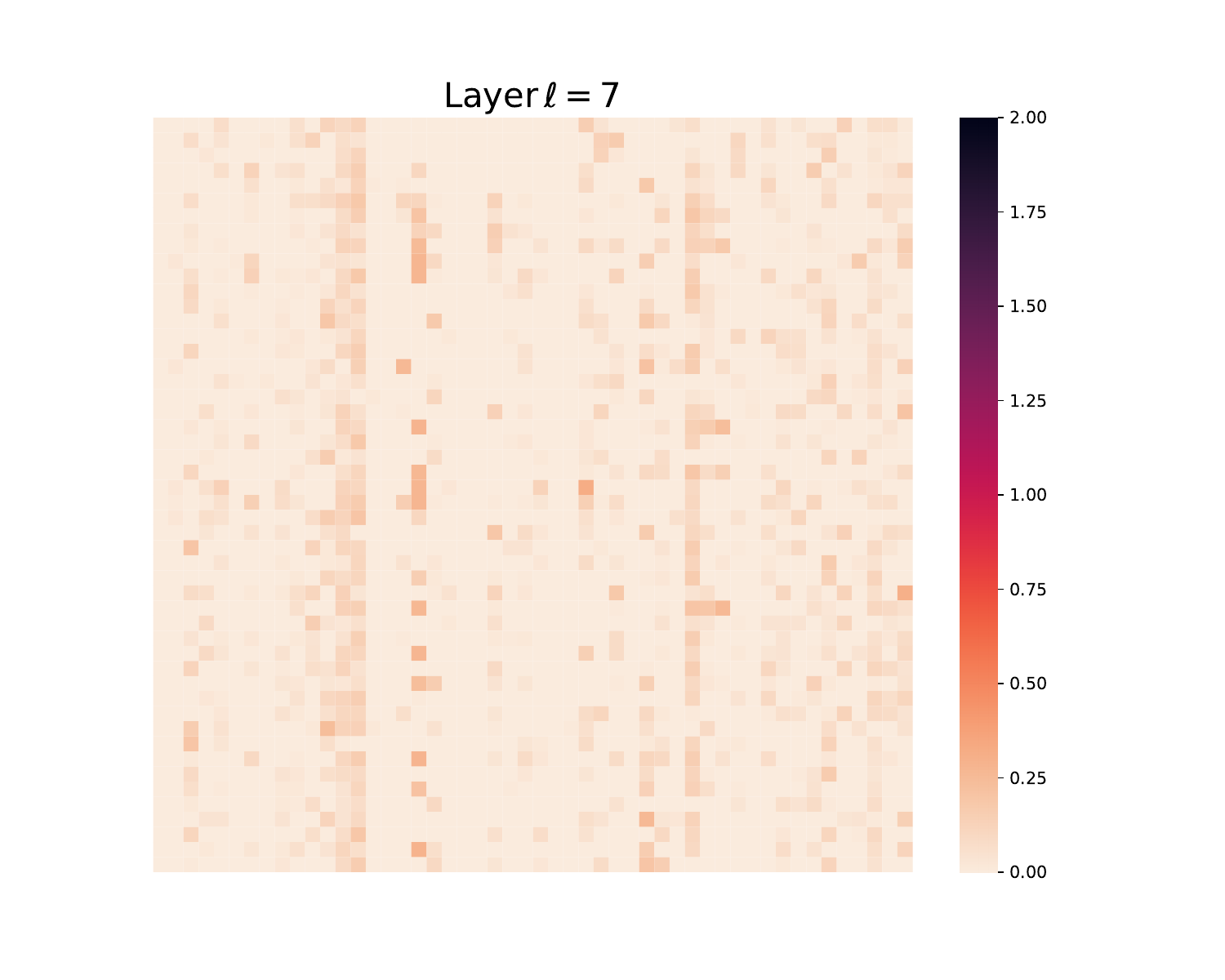}
     \end{subfigure}
     \begin{subfigure}[b]{0.2\textwidth}
         \centering
    \includegraphics[width=\textwidth]{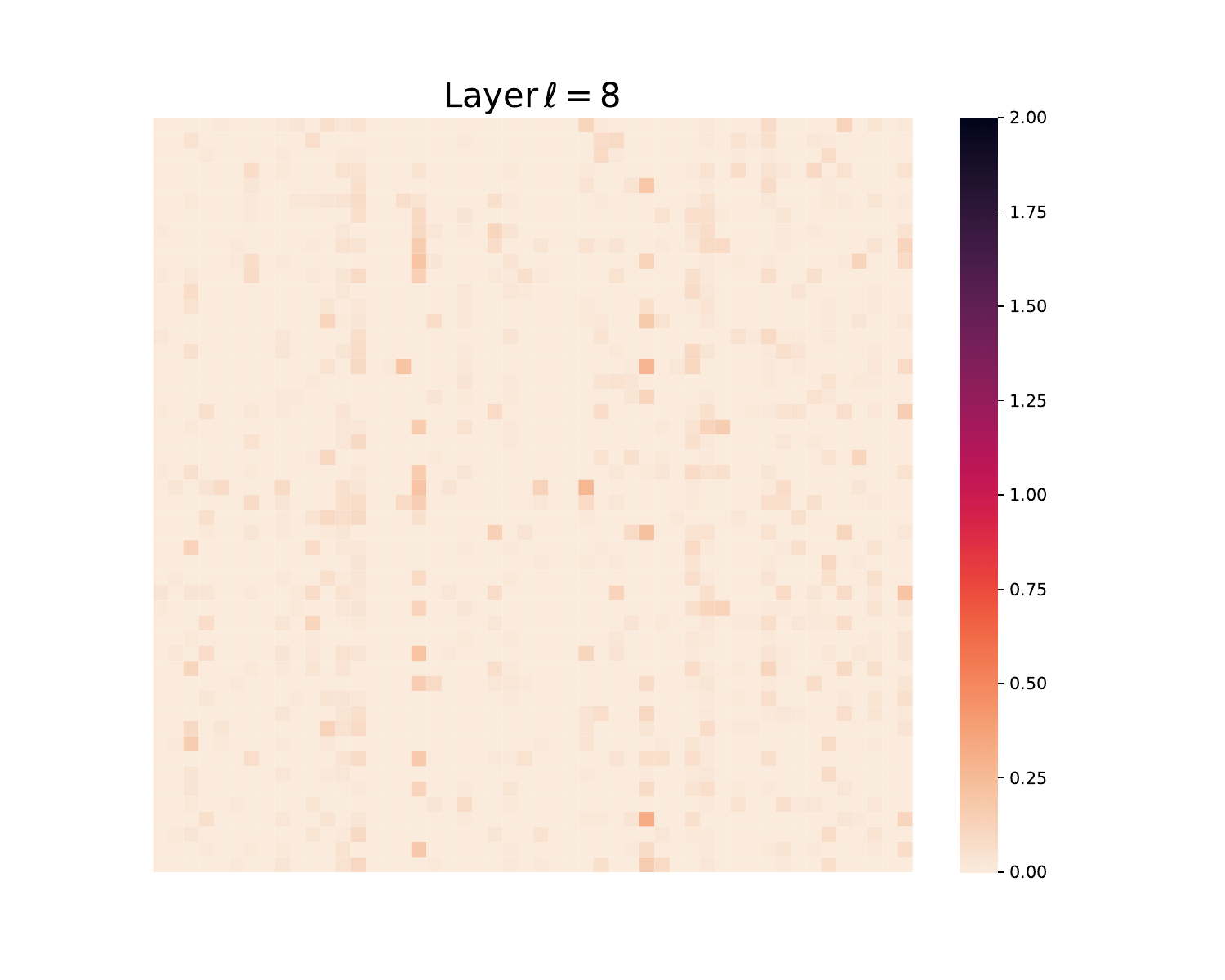}
     \end{subfigure}
     \begin{subfigure}[b]{0.2\textwidth}
         \centering
    \includegraphics[width=\textwidth]{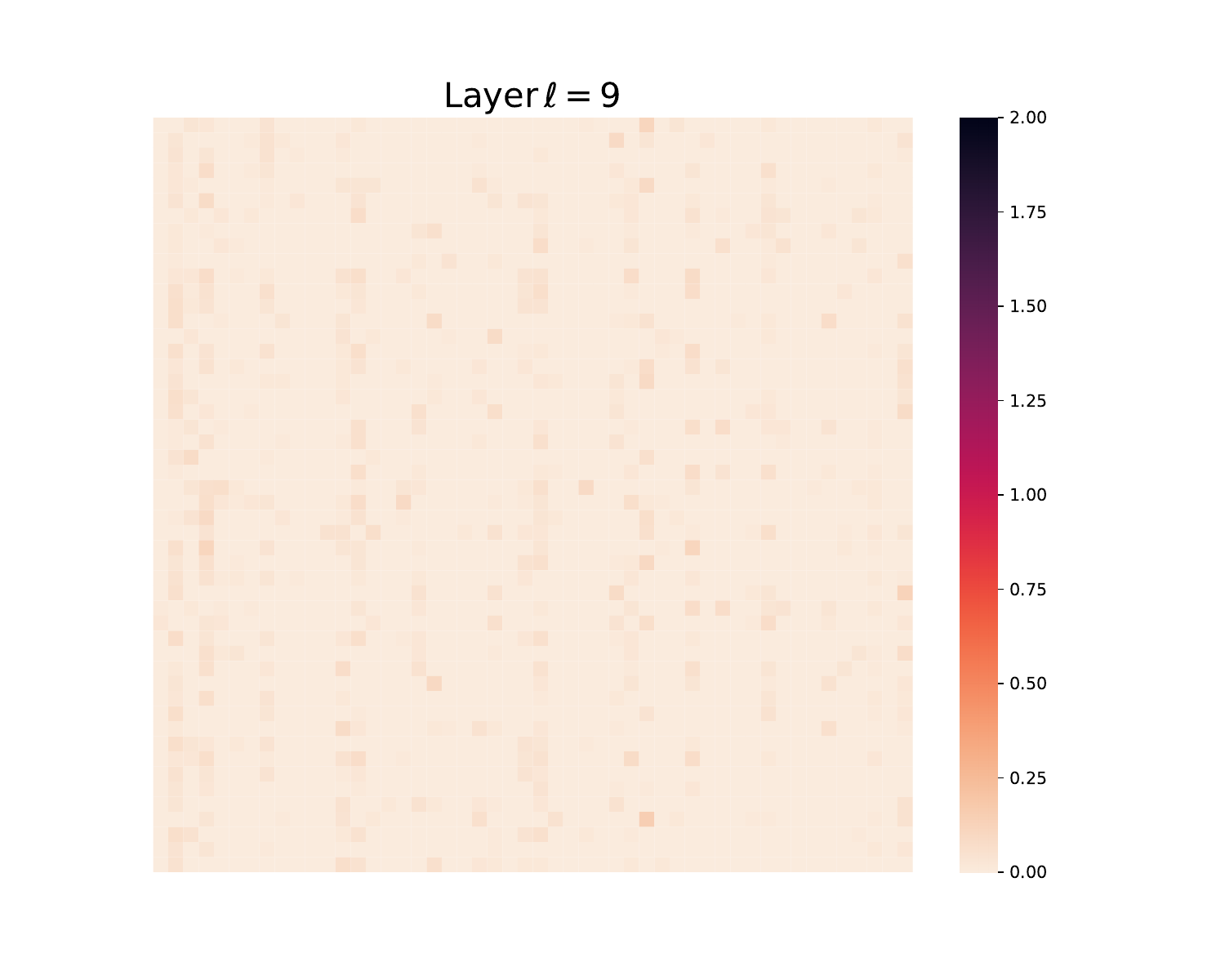}
     \end{subfigure}
     \begin{subfigure}[b]{0.2\textwidth}
         \centering
    \includegraphics[width=\textwidth]{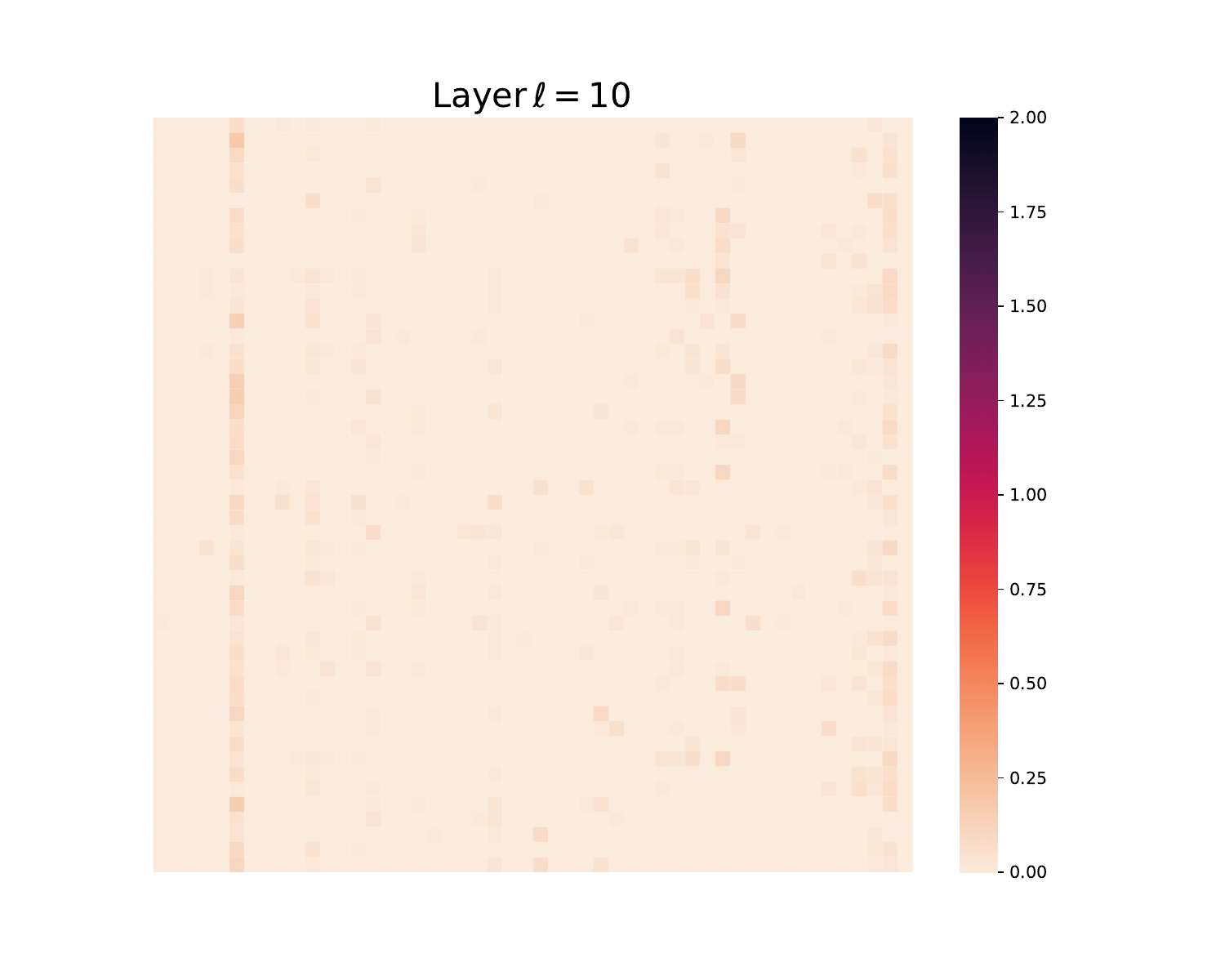}
     \end{subfigure}
     \begin{subfigure}[b]{0.2\textwidth}
         \centering
    \includegraphics[width=\textwidth]{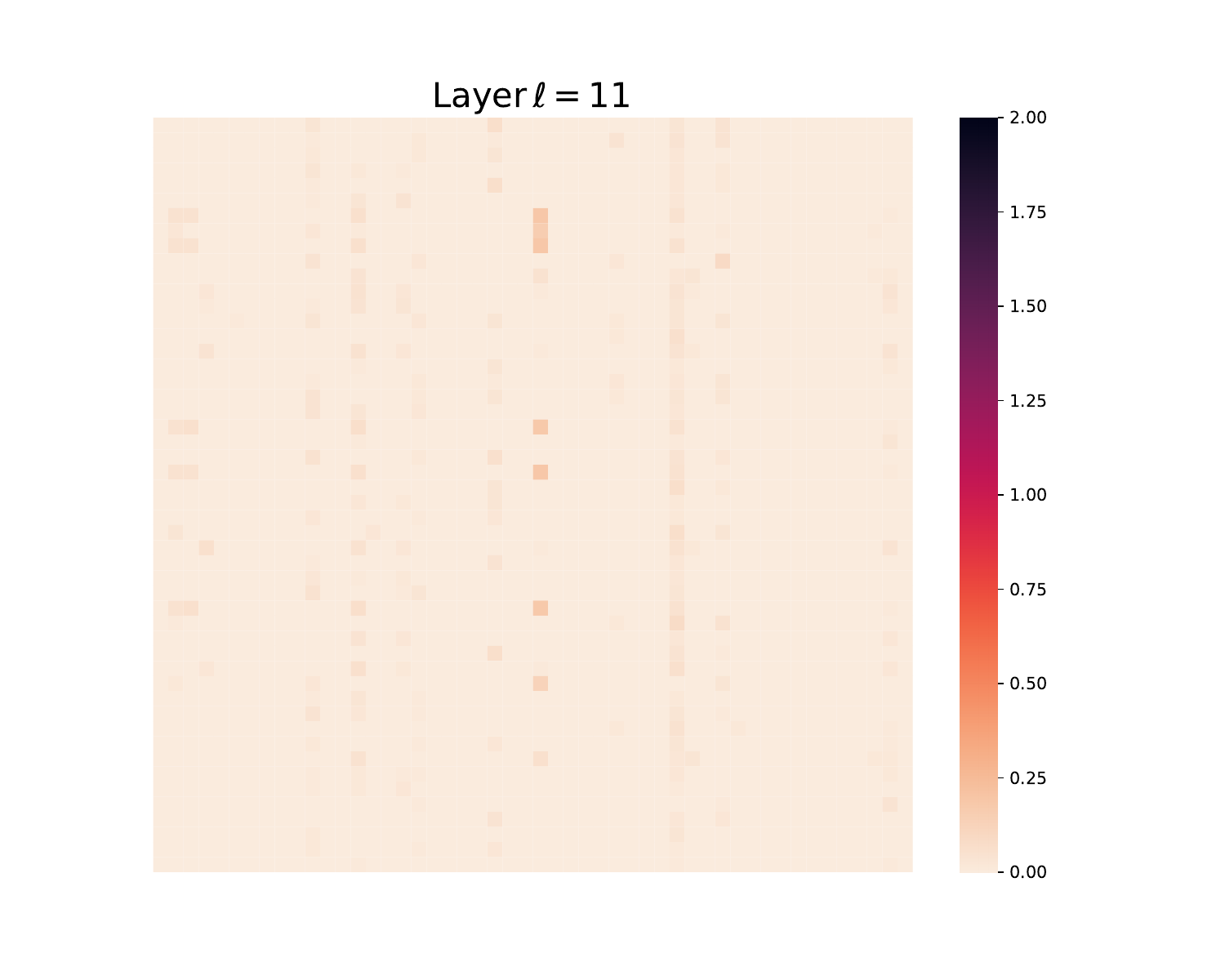}
     \end{subfigure}
     \begin{subfigure}[b]{0.2\textwidth}
         \centering
    \includegraphics[width=\textwidth]{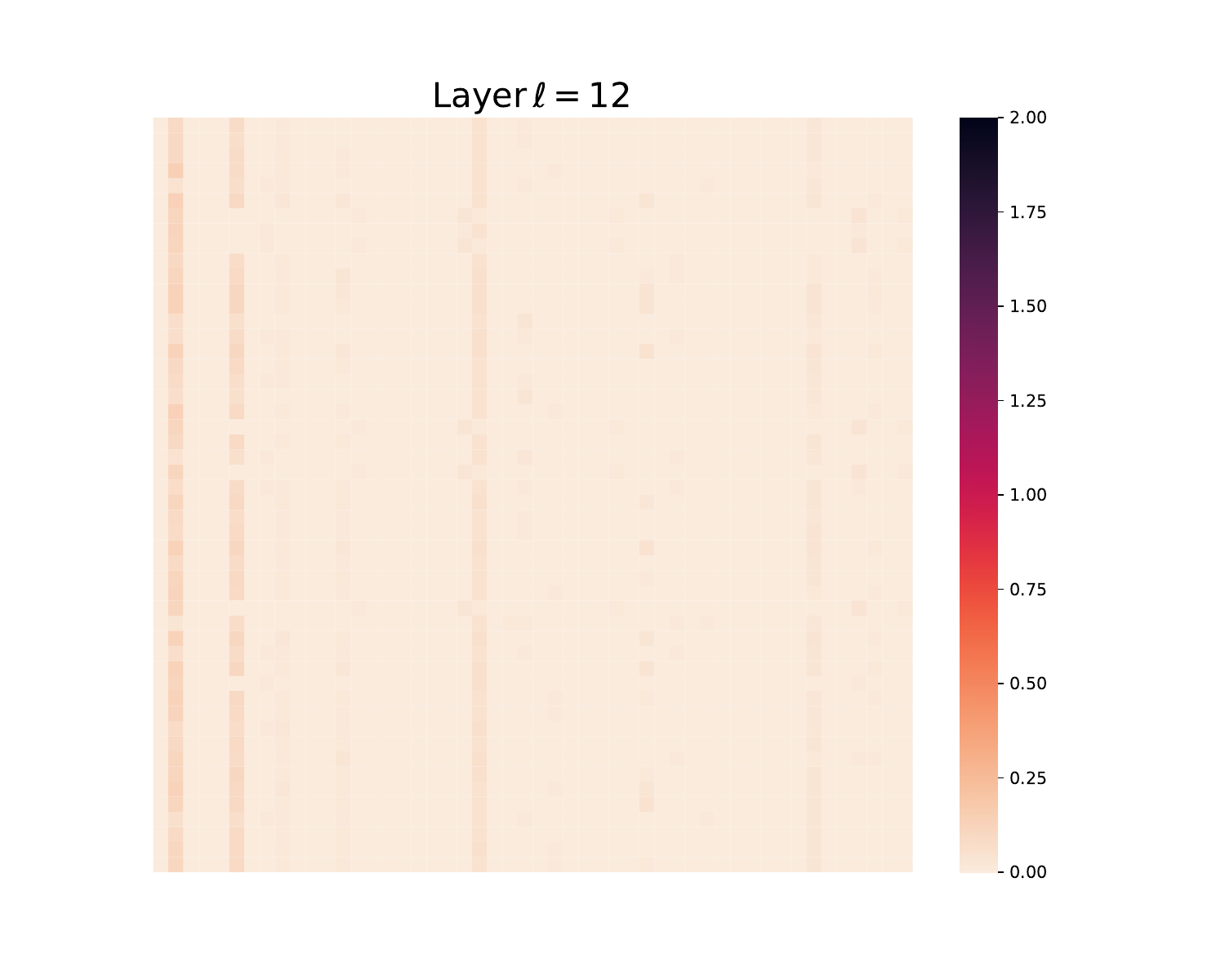}
     \end{subfigure}
        \caption{\small Visualizing layer-wise token $\vZ^{\ell}$ representations at each layer $\ell$. To enhance the visual clarity, we randomly extract a 50$\times$50 sub-matrix from $\vZ^{\ell}$ for display purposes. (\textit{Sample 3})}
        \label{fig:appendix-exp-ista-sparsity-heatmap-sample3}
\end{figure}

\begin{figure}[ht]
     \centering
     \begin{subfigure}[b]{0.2\textwidth}
         \centering
    \includegraphics[width=\textwidth]{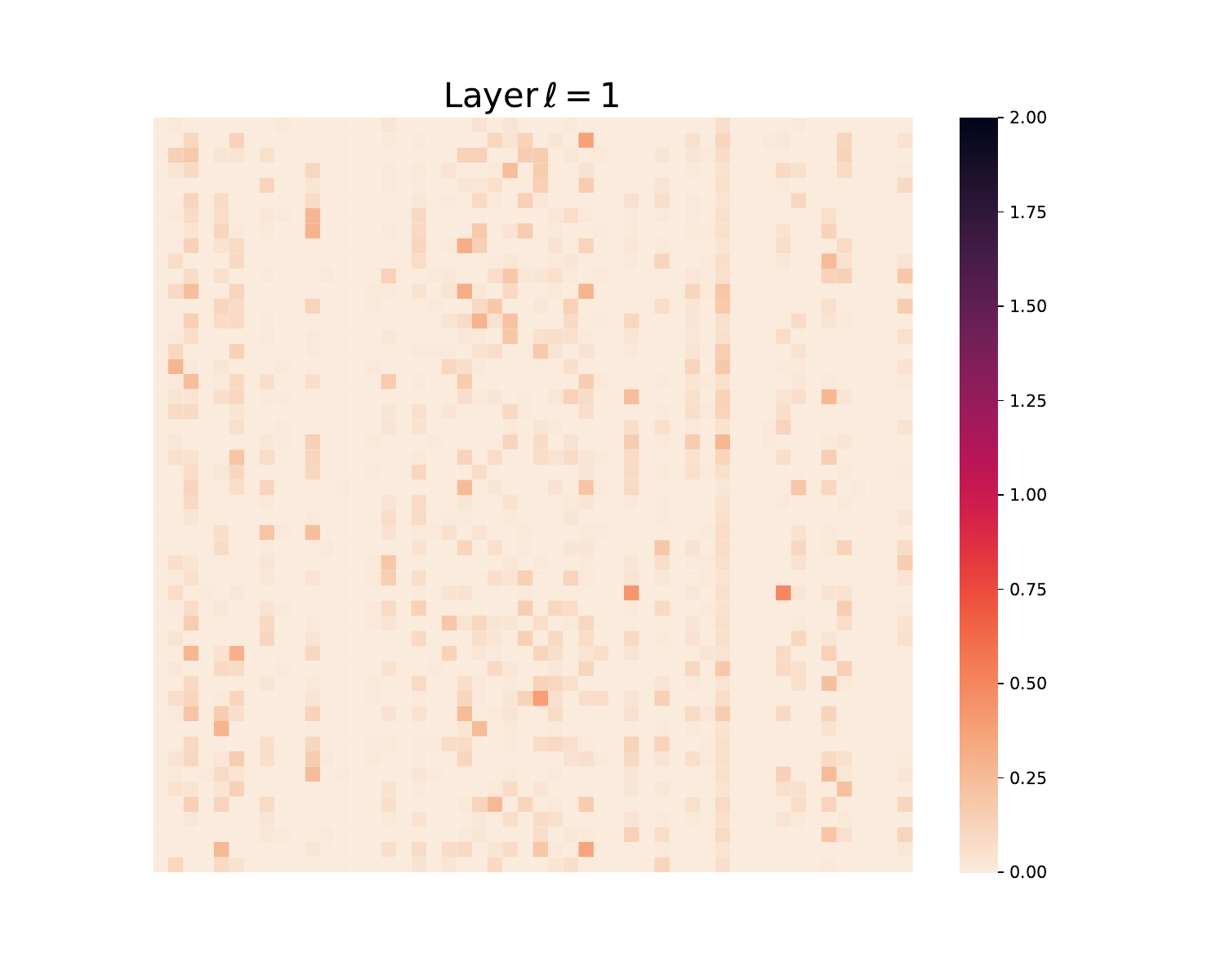}
     \end{subfigure}
     \begin{subfigure}[b]{0.2\textwidth}
         \centering
    \includegraphics[width=\textwidth]{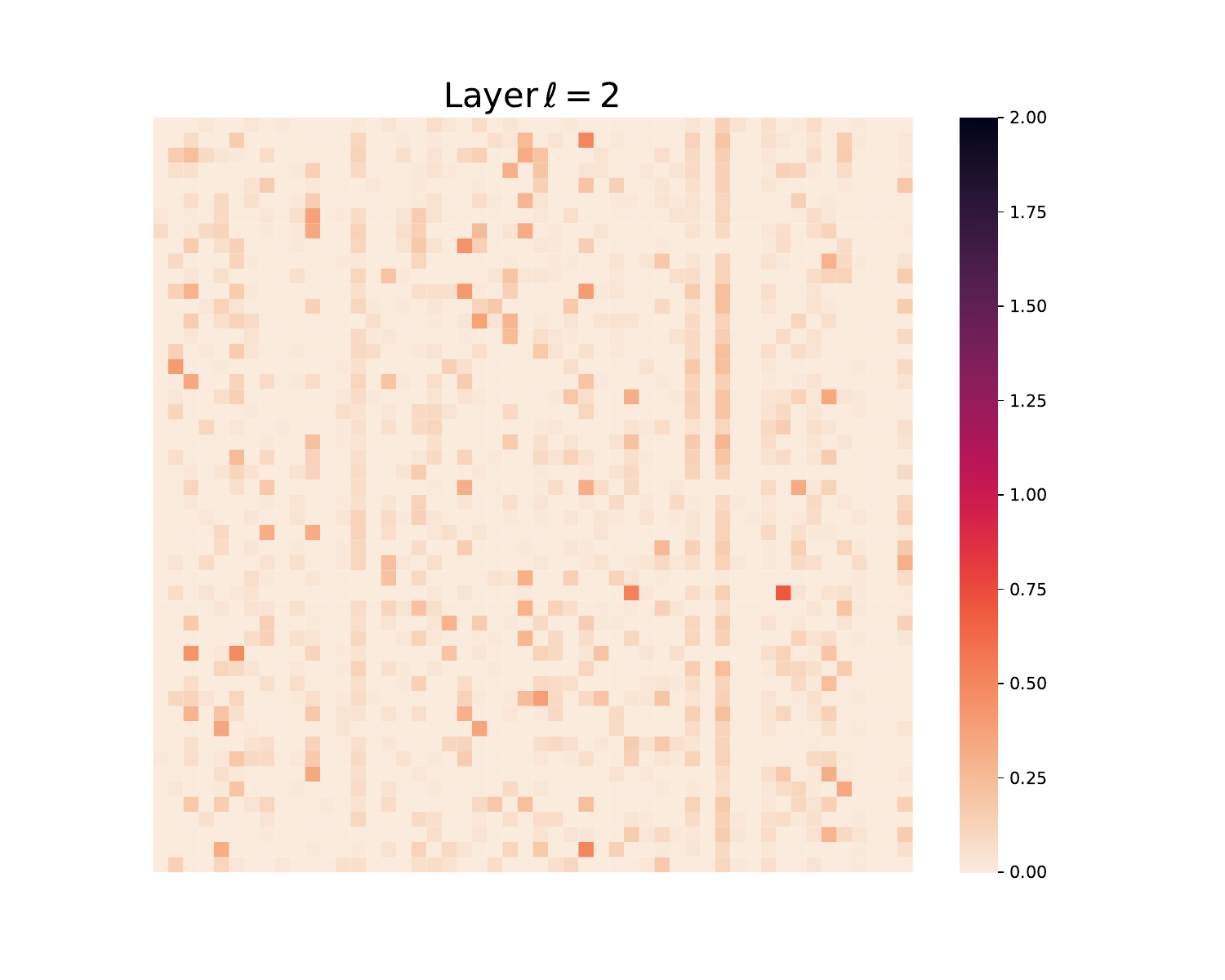}
     \end{subfigure}
     \begin{subfigure}[b]{0.2\textwidth}
         \centering
    \includegraphics[width=\textwidth]{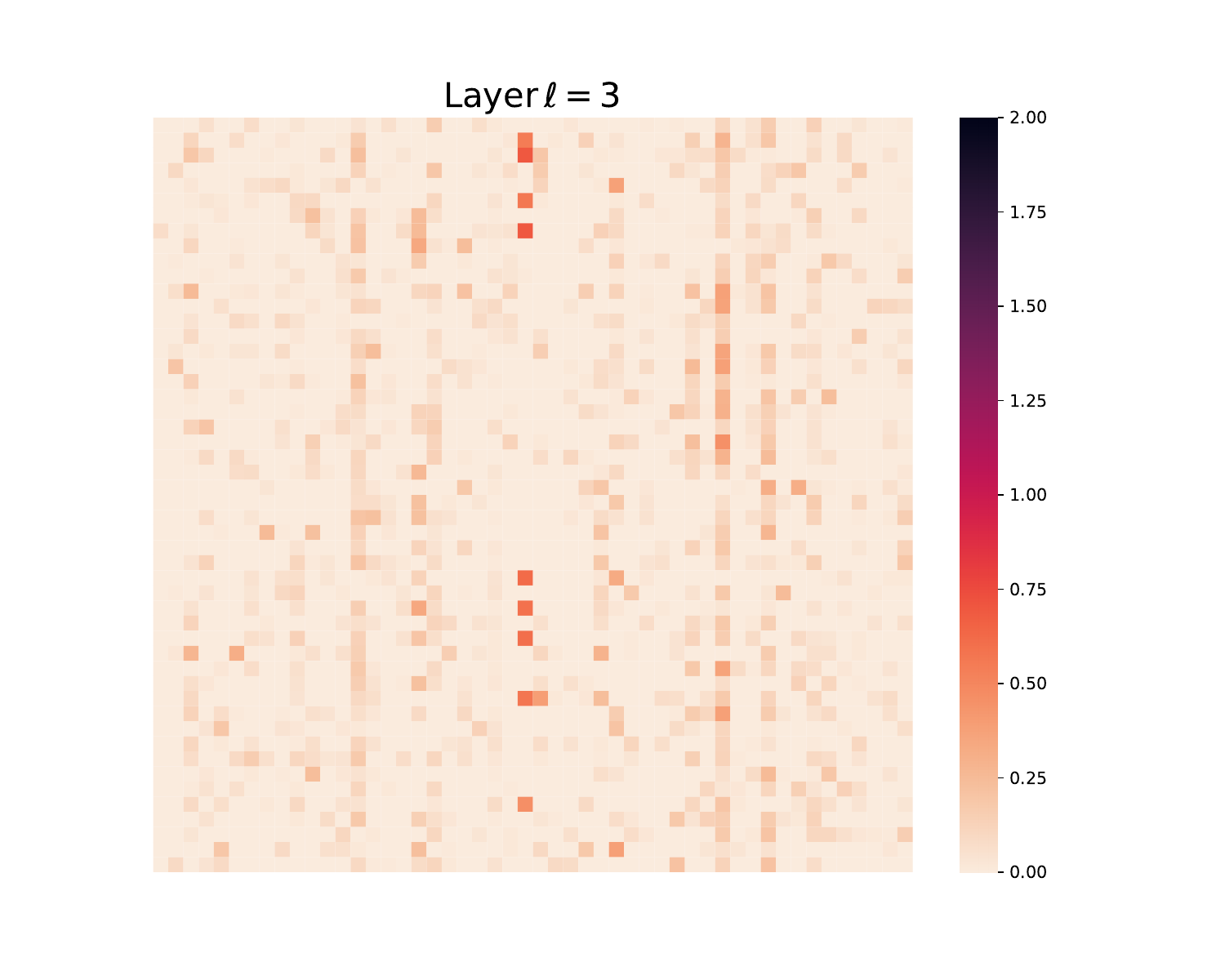}
     \end{subfigure}
     \begin{subfigure}[b]{0.2\textwidth}
         \centering
    \includegraphics[width=\textwidth]{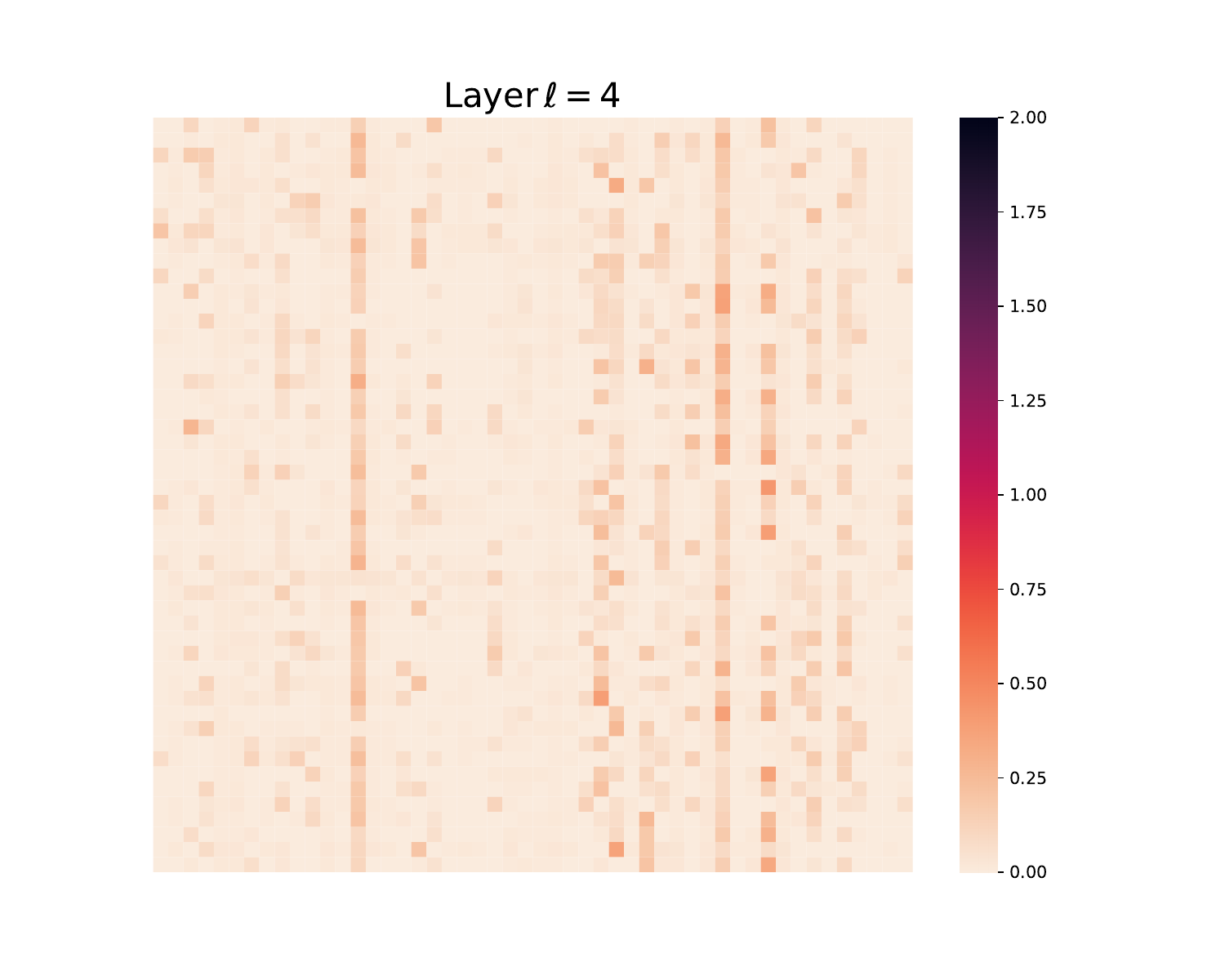}
     \end{subfigure}
     \begin{subfigure}[b]{0.2\textwidth}
         \centering
    \includegraphics[width=\textwidth]{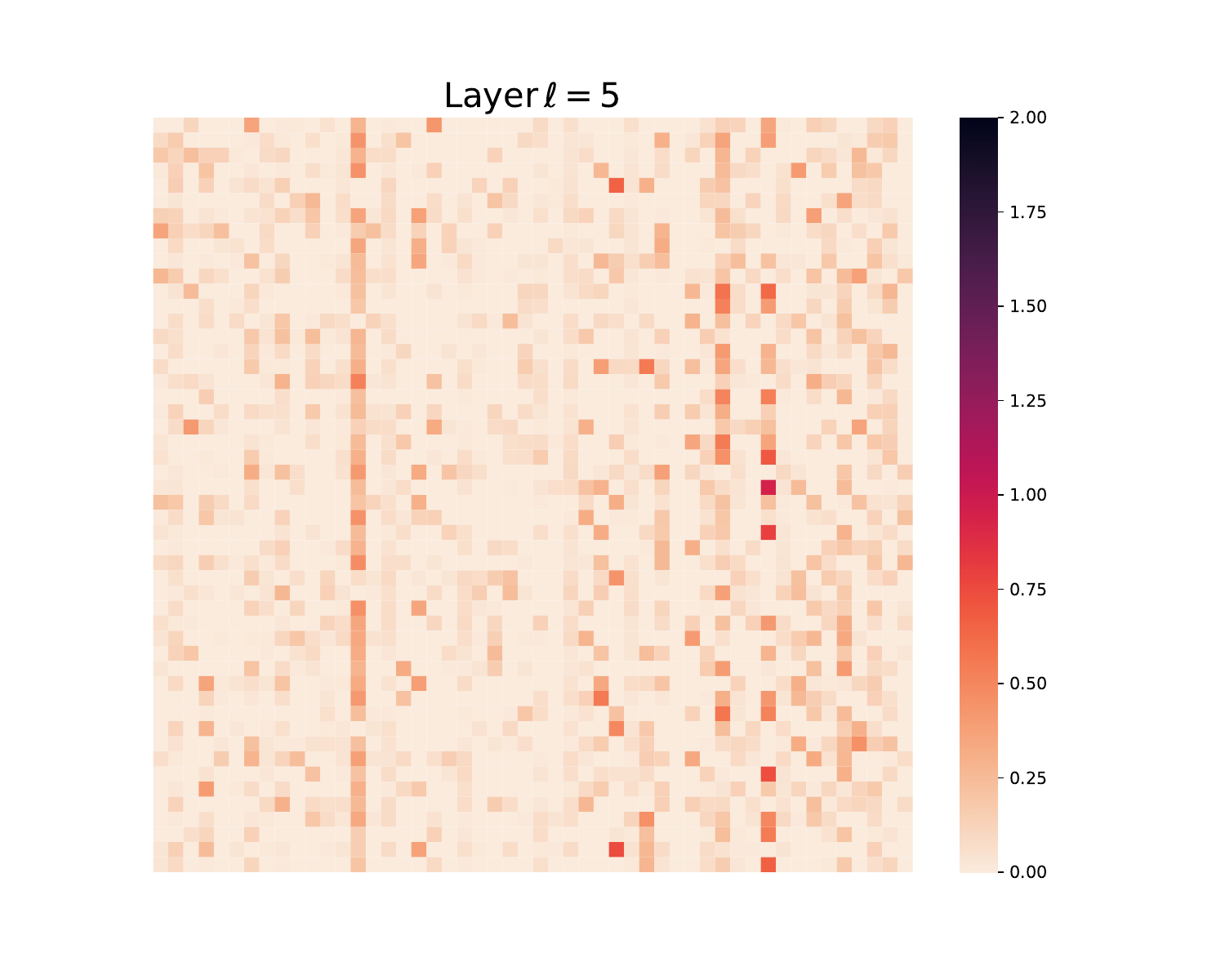}
     \end{subfigure}
     \begin{subfigure}[b]{0.2\textwidth}
         \centering
    \includegraphics[width=\textwidth]{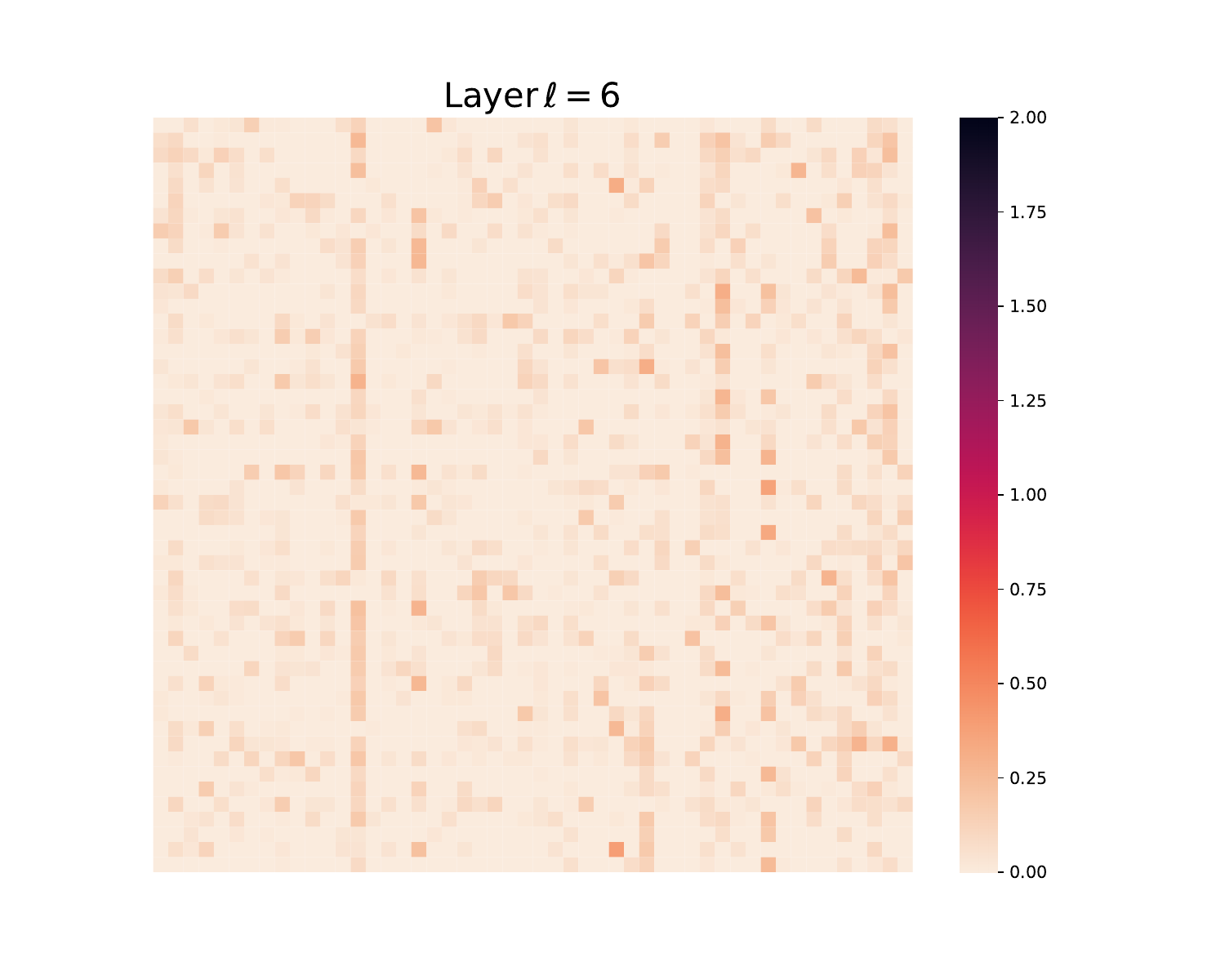}
     \end{subfigure}
     \begin{subfigure}[b]{0.2\textwidth}
         \centering
    \includegraphics[width=\textwidth]{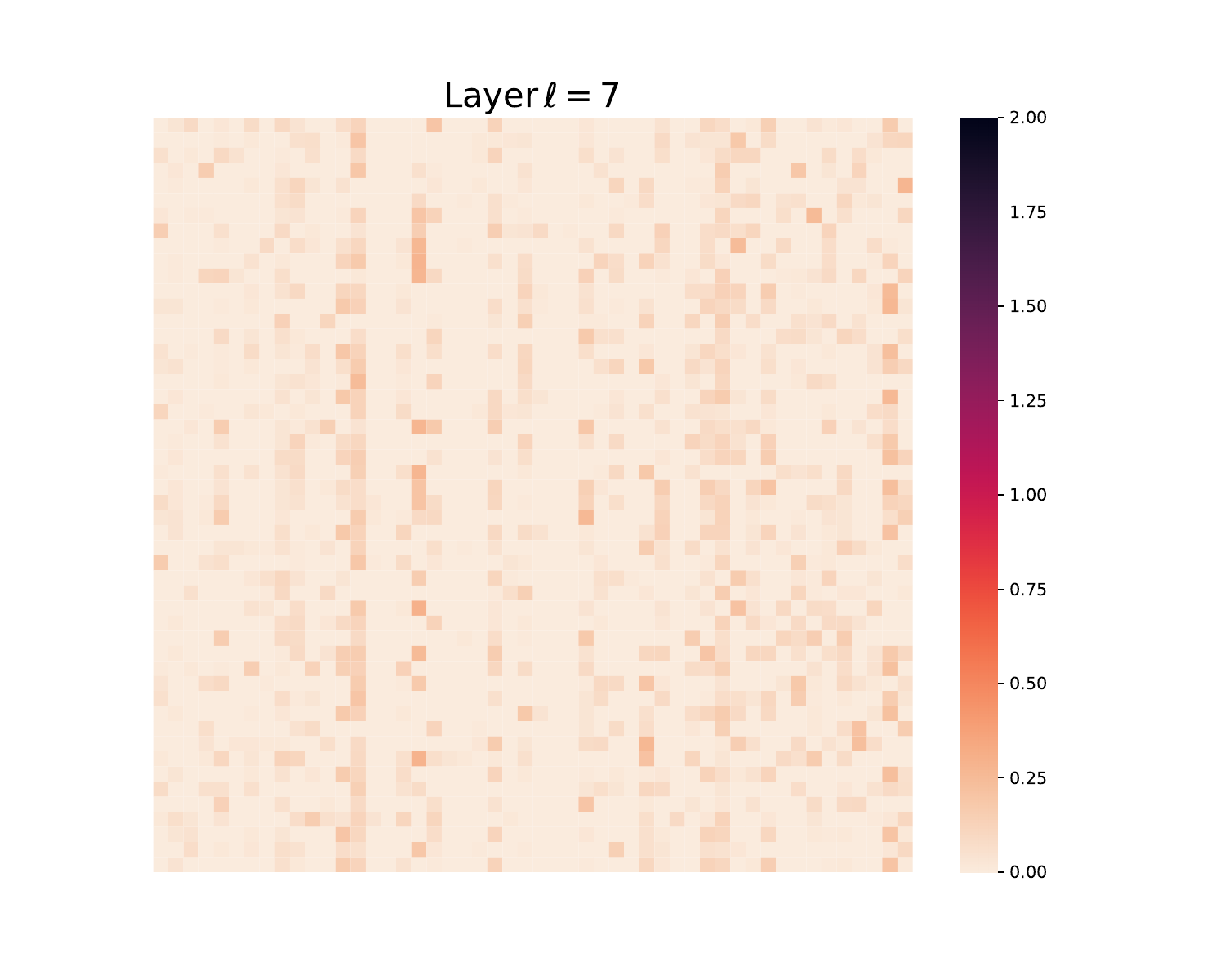}
     \end{subfigure}
     \begin{subfigure}[b]{0.2\textwidth}
         \centering
    \includegraphics[width=\textwidth]{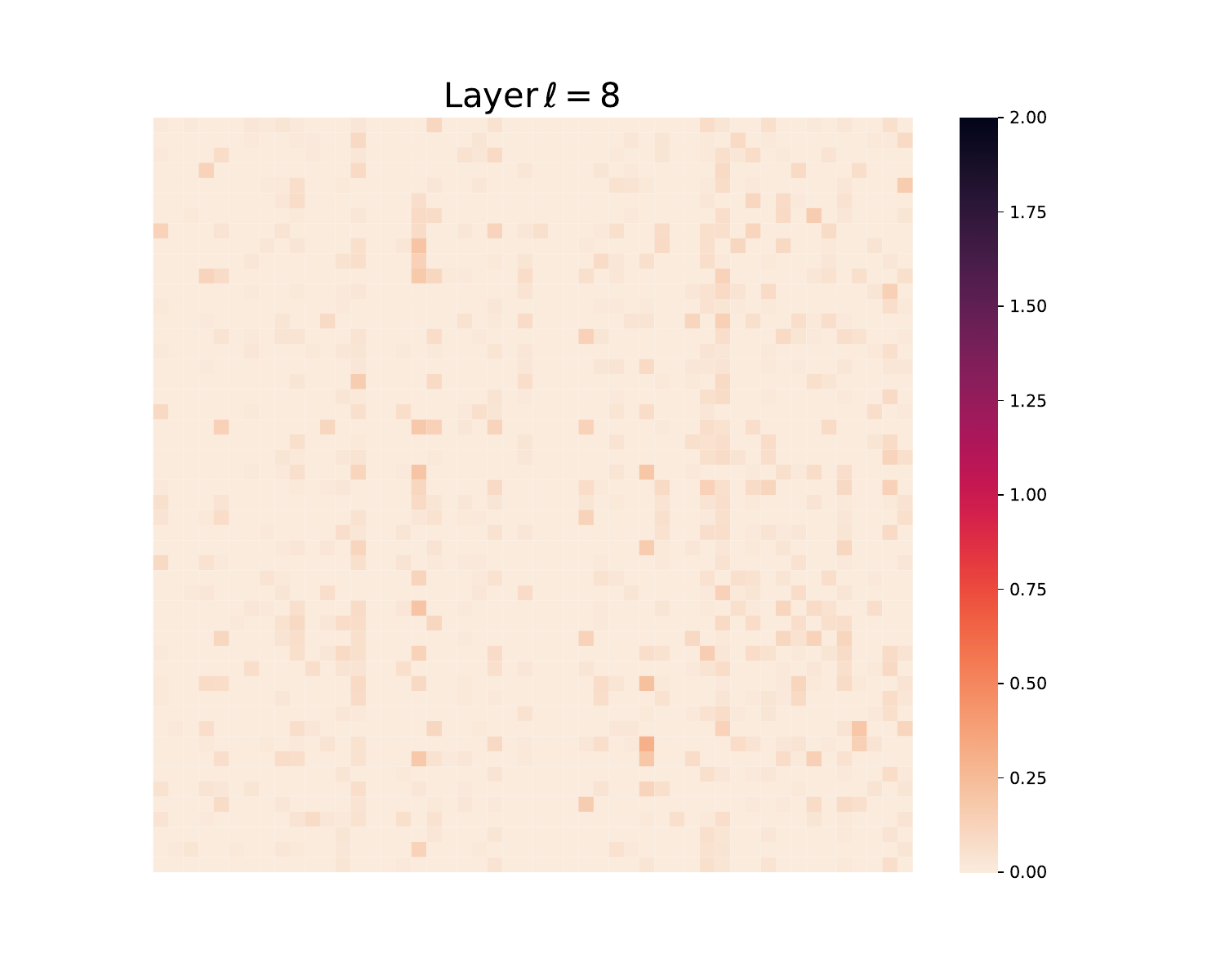}
     \end{subfigure}
     \begin{subfigure}[b]{0.2\textwidth}
         \centering
    \includegraphics[width=\textwidth]{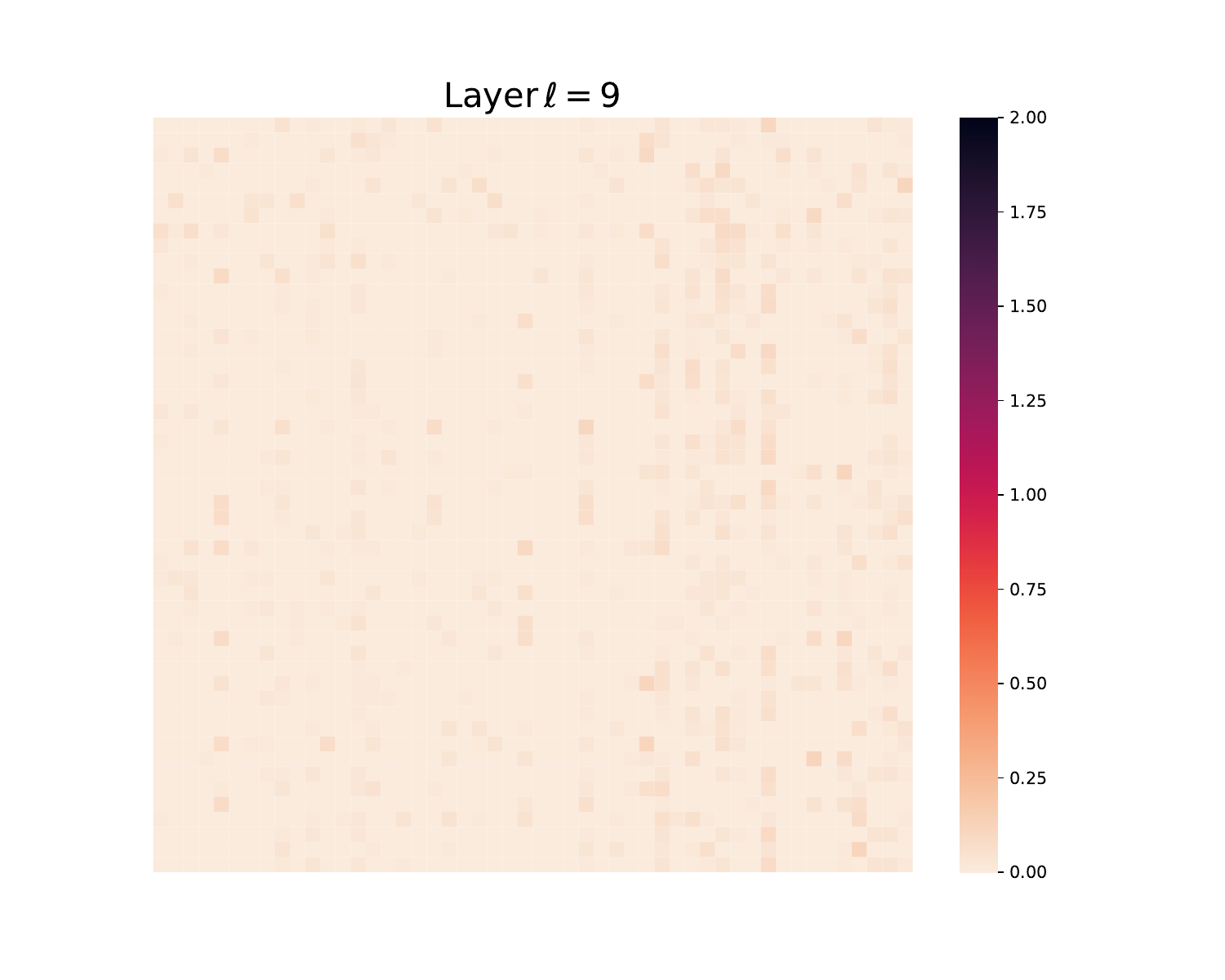}
     \end{subfigure}
     \begin{subfigure}[b]{0.2\textwidth}
         \centering
    \includegraphics[width=\textwidth]{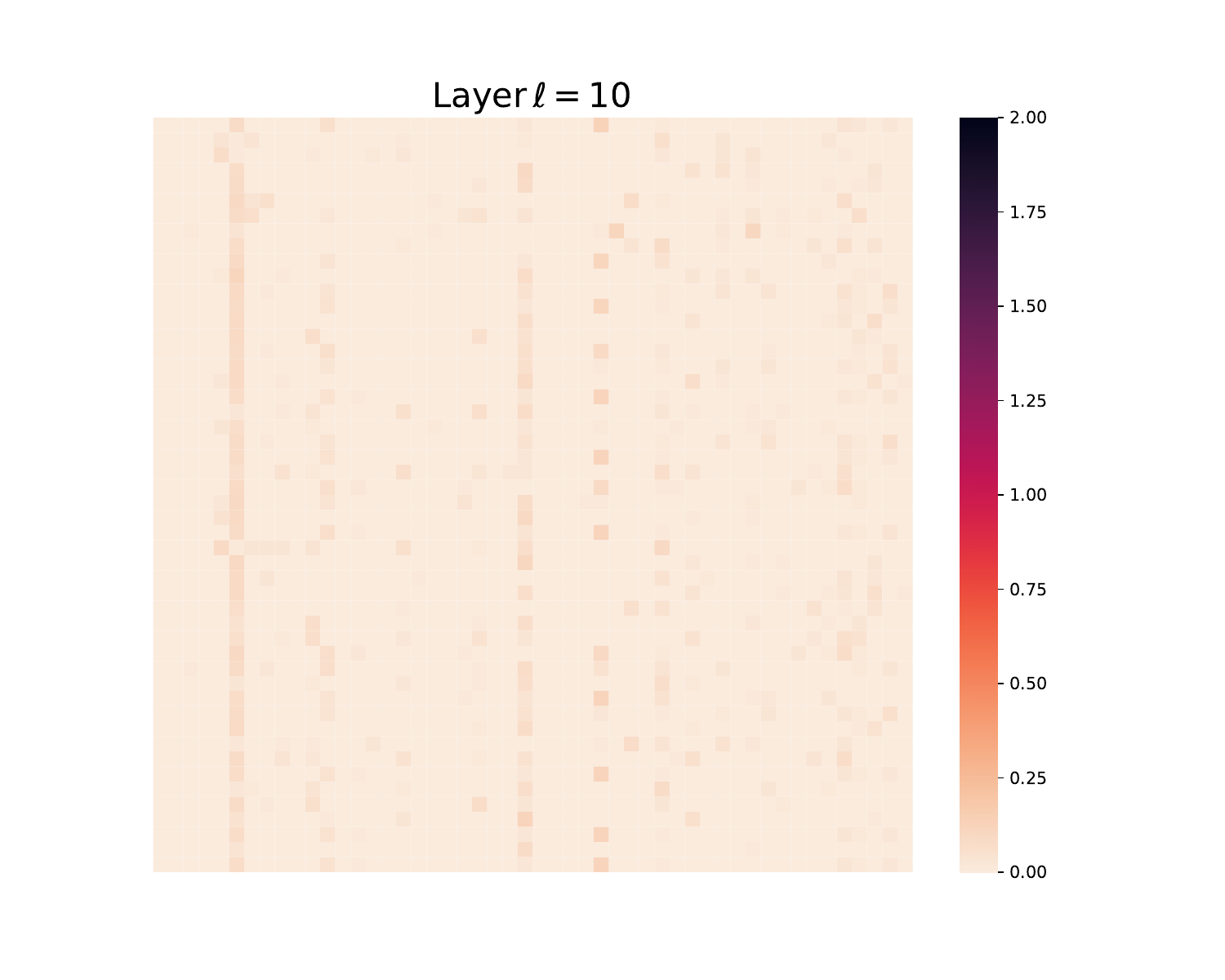}
     \end{subfigure}
     \begin{subfigure}[b]{0.2\textwidth}
         \centering
    \includegraphics[width=\textwidth]{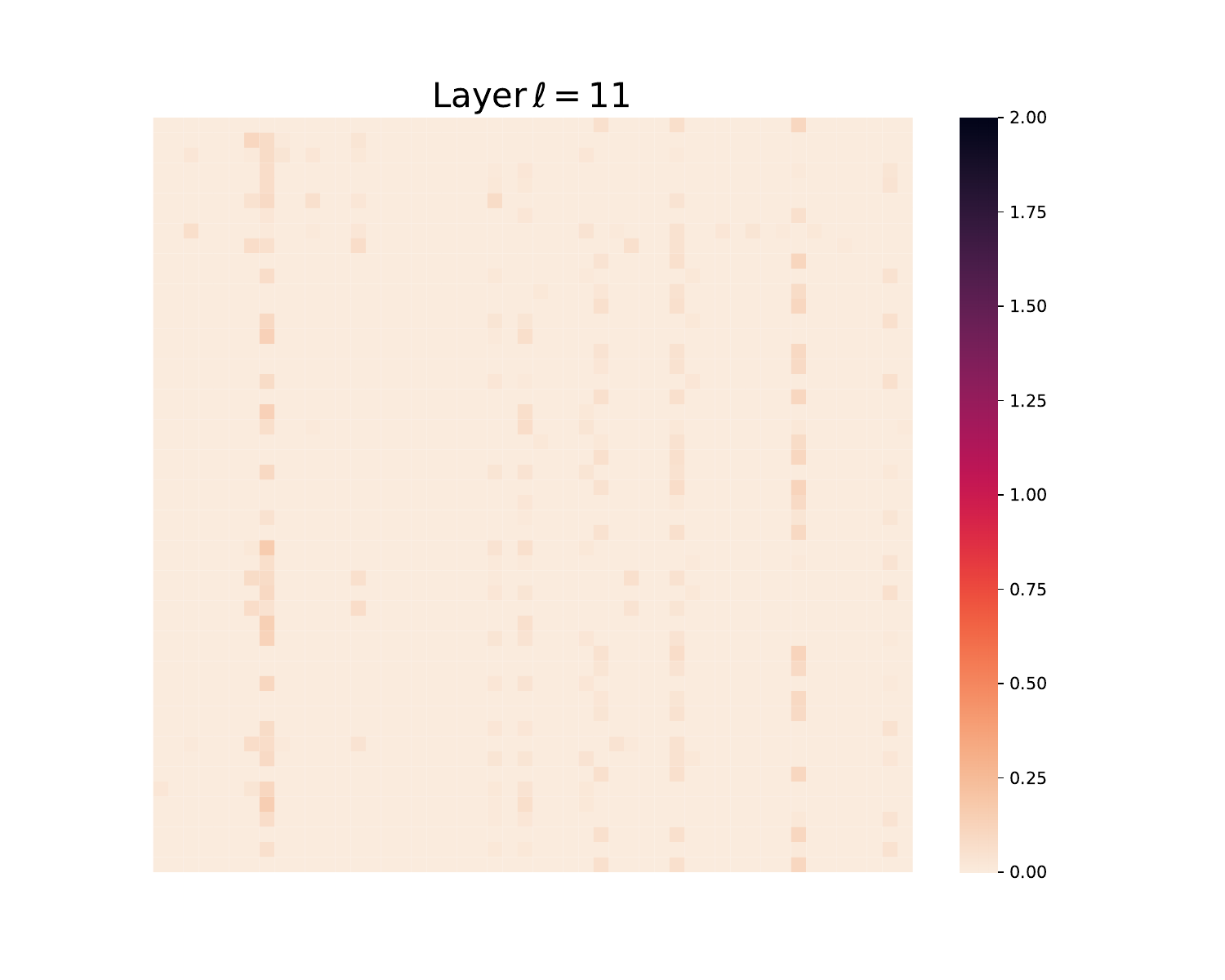}
     \end{subfigure}
     \begin{subfigure}[b]{0.2\textwidth}
         \centering
    \includegraphics[width=\textwidth]{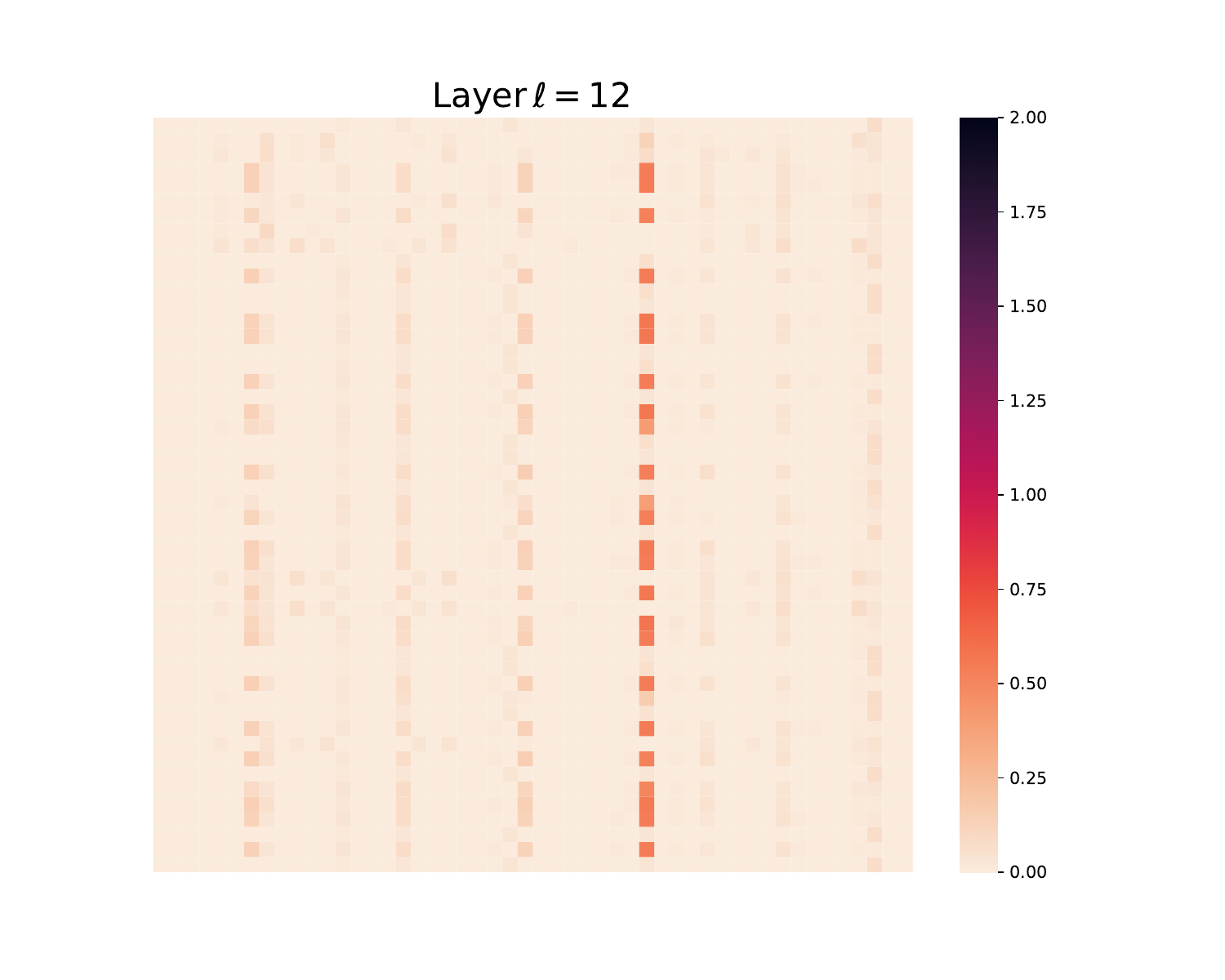}
     \end{subfigure}
        \caption{\small Visualizing layer-wise token $\vZ^{\ell}$ representations at each layer $\ell$. To enhance the visual clarity, we randomly extract a 50$\times$50 sub-matrix from $\vZ^{\ell}$ for display purposes. (\textit{Sample 4})}
        \label{fig:appendix-exp-ista-sparsity-heatmap-sample4}
\end{figure}

\clearpage
\subsection{\color{revision} Additional Experimental Results of Evaluating Compression and Sparsity for ViT}
{\color{revision}
We conduct  experiments to evaluate the compression ($R^{c}$) and sparsity ($\ell^{0}$) of token representations from each layer of a pre-trained ViT-Base (downloaded from \url{https://github.com/huggingface/pytorch-image-models}). 
We summarize the results in Figure~\ref{fig:exp-rc-sparisty-small-new}. 
We find that without our white-box design, the vanilla ViT does not optimize our proposed sparse rate reduction objective. 
This contrasts with the results shown in Figures~\ref{fig:exp-rc-sparisty-small} and \ref{fig:exp-rc-sparisty-small-epochs} of the work, wherein we can observe that the compression term $R^{c}$ and sparsity value decrease layerwise for \ours{}, in accordance with our theory. 
}

\begin{figure*}[htb!]
     \centering
     \begin{subfigure}[b]{0.3\textwidth}
         \centering
    \includegraphics[width=\textwidth]{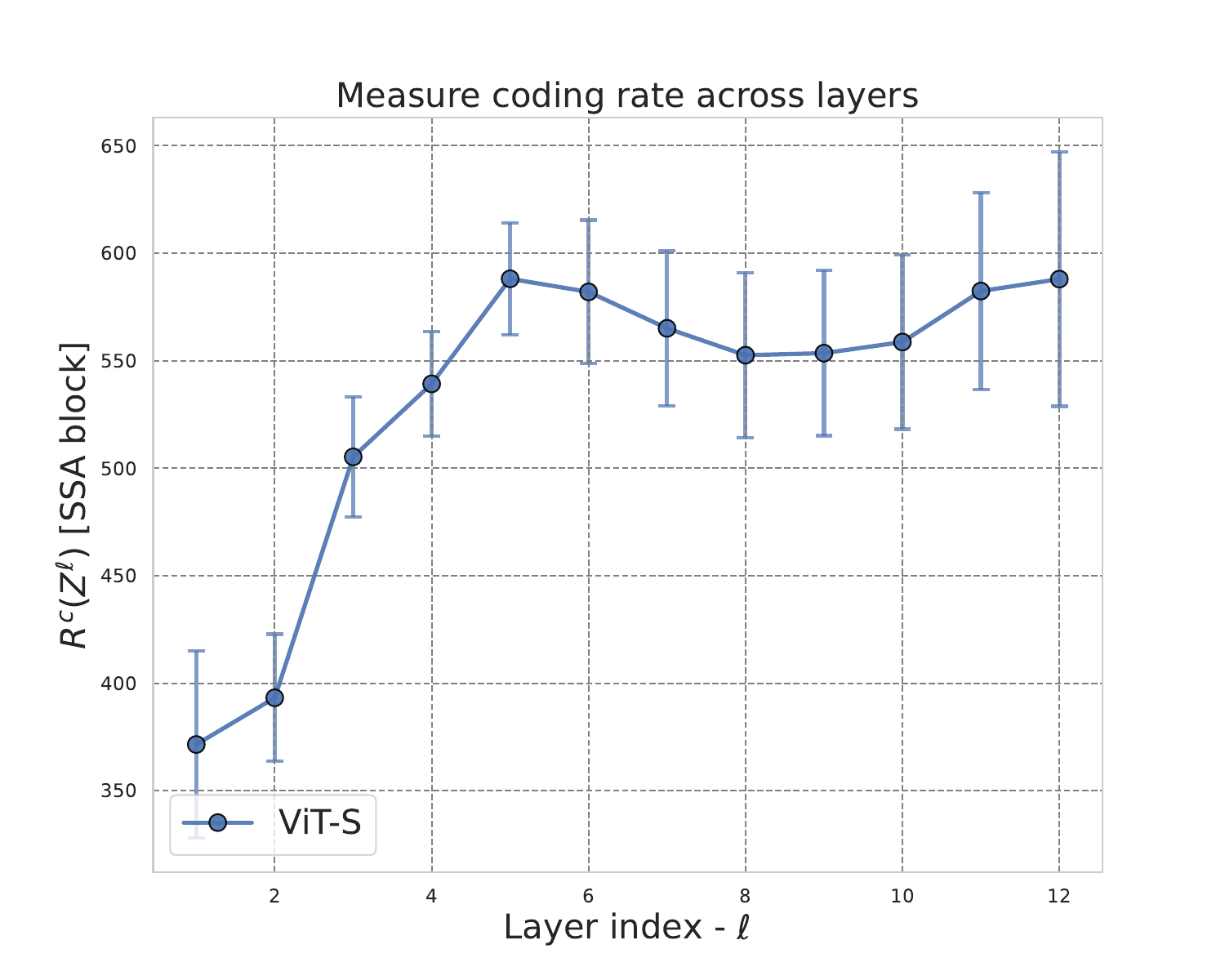}
     \end{subfigure}
     \begin{subfigure}[b]{0.3\textwidth}
         \centering
    \includegraphics[width=\textwidth]{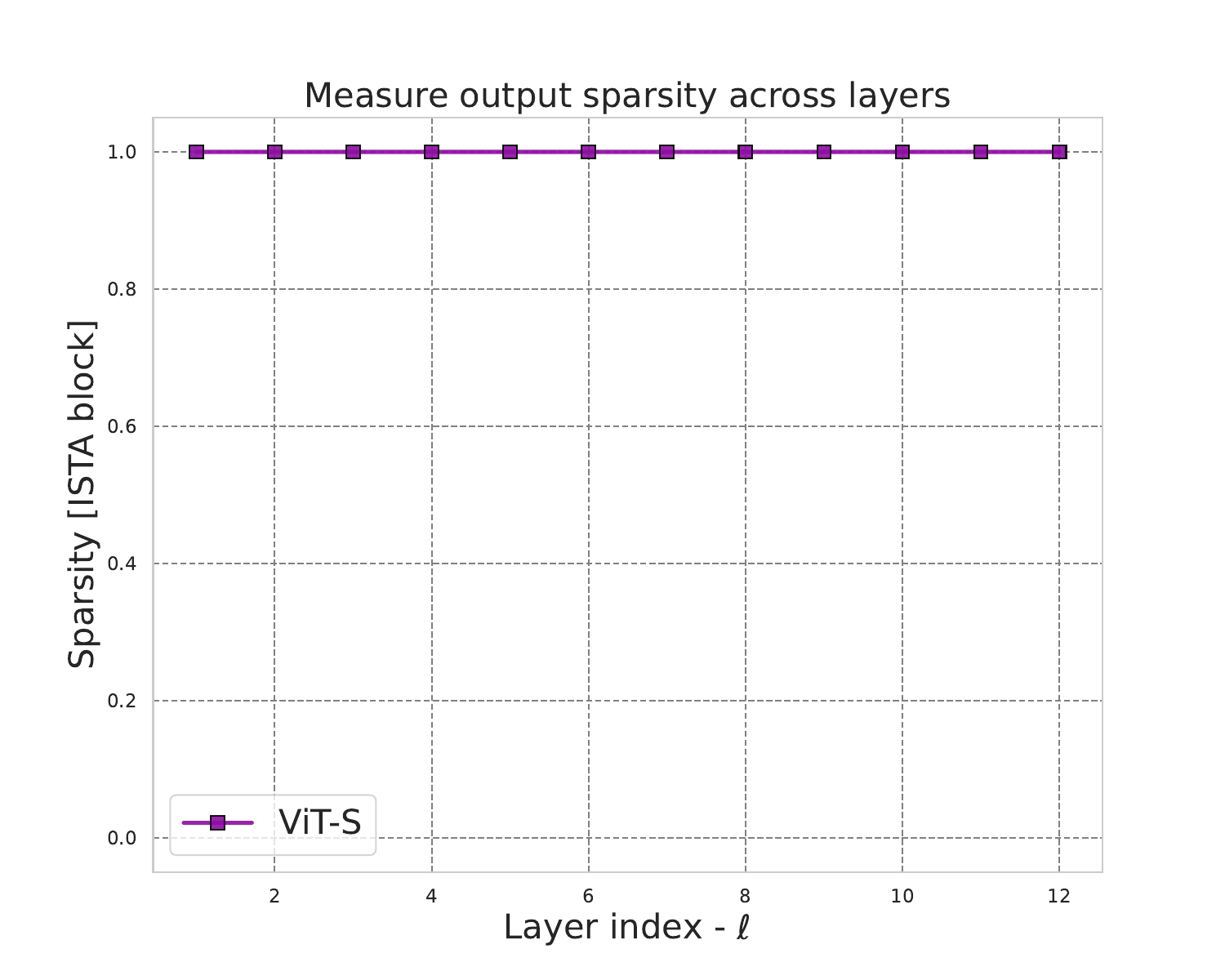}
     \end{subfigure}
     \begin{subfigure}[b]{0.3\textwidth}
         \centering
    \includegraphics[width=\textwidth]{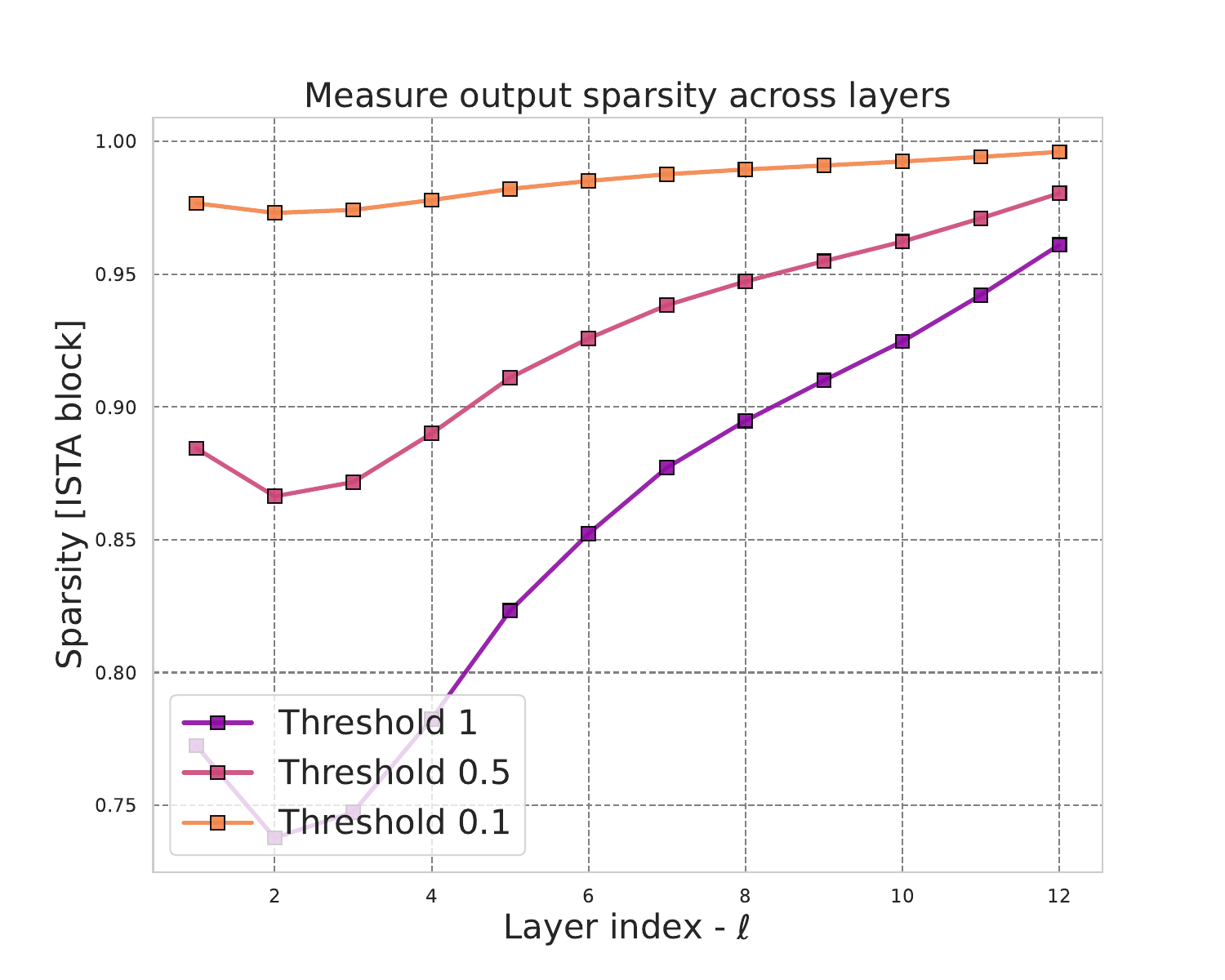}
     \end{subfigure}
     \begin{subfigure}[b]{0.3\textwidth}
         \centering
    \includegraphics[width=\textwidth]{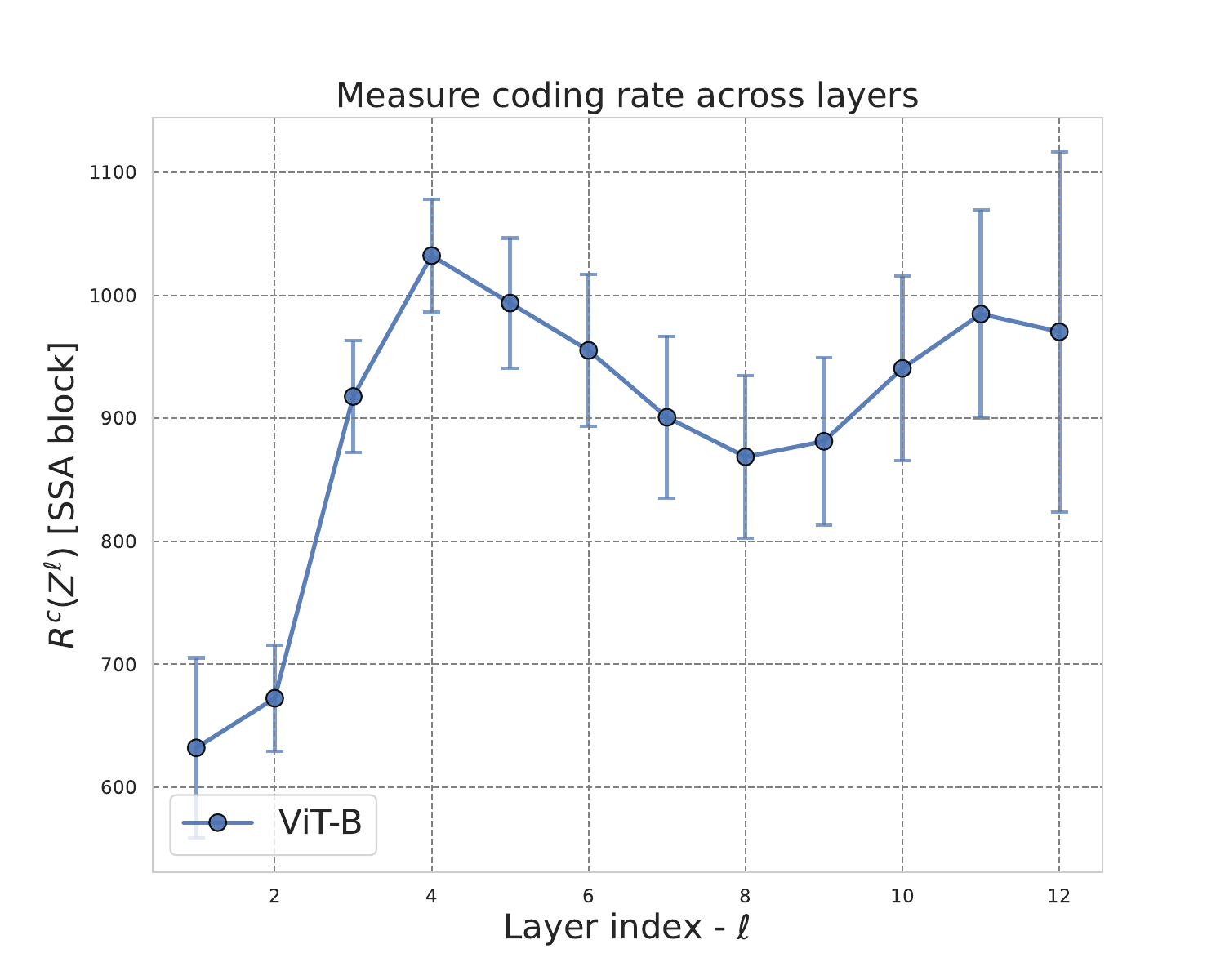}
     \end{subfigure}
     \begin{subfigure}[b]{0.3\textwidth}
         \centering
    \includegraphics[width=\textwidth]{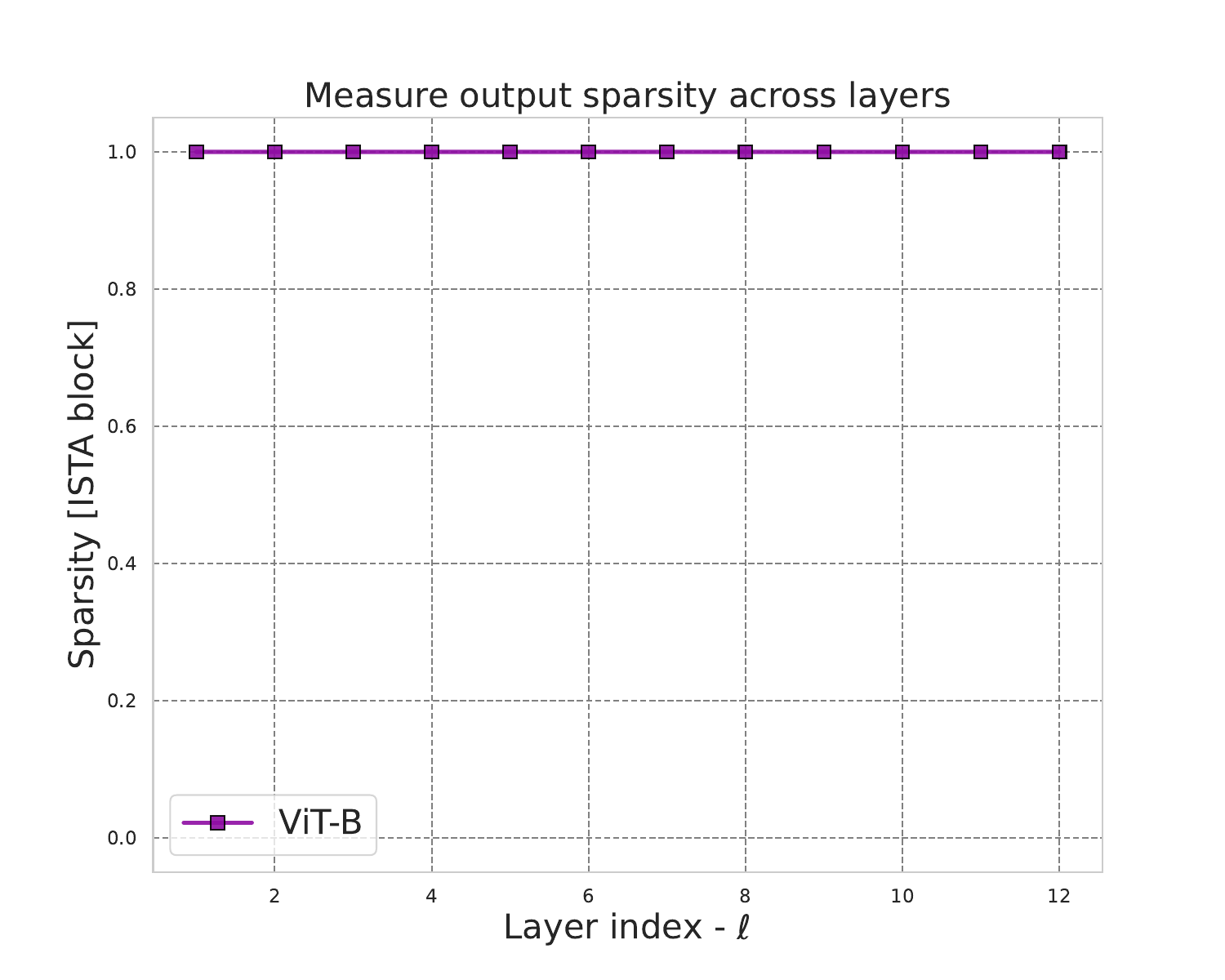}
     \end{subfigure}
     \begin{subfigure}[b]{0.3\textwidth}
         \centering
    \includegraphics[width=\textwidth]{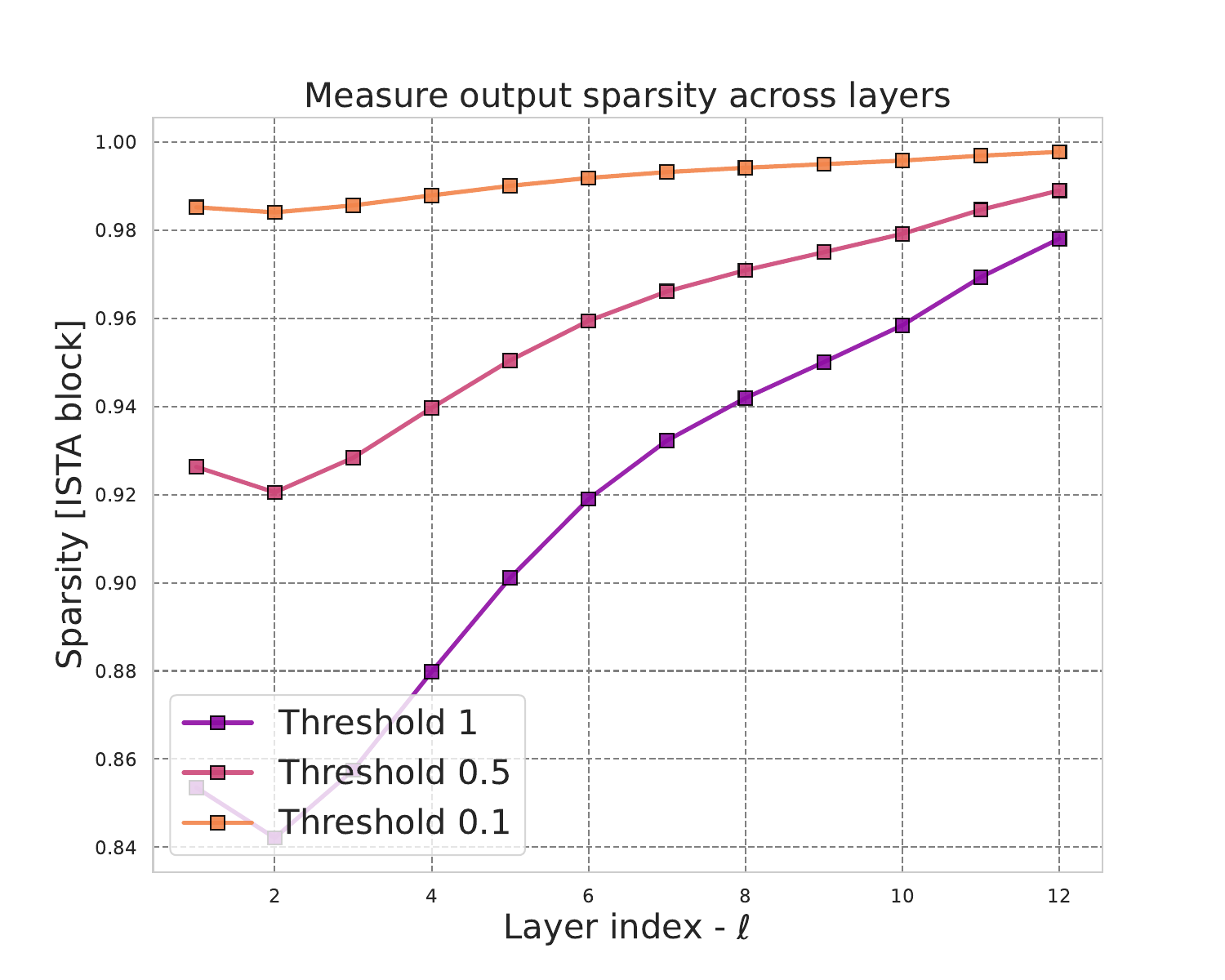}
     \end{subfigure}
     \begin{subfigure}[b]{0.3\textwidth}
         \centering
    \includegraphics[width=\textwidth]{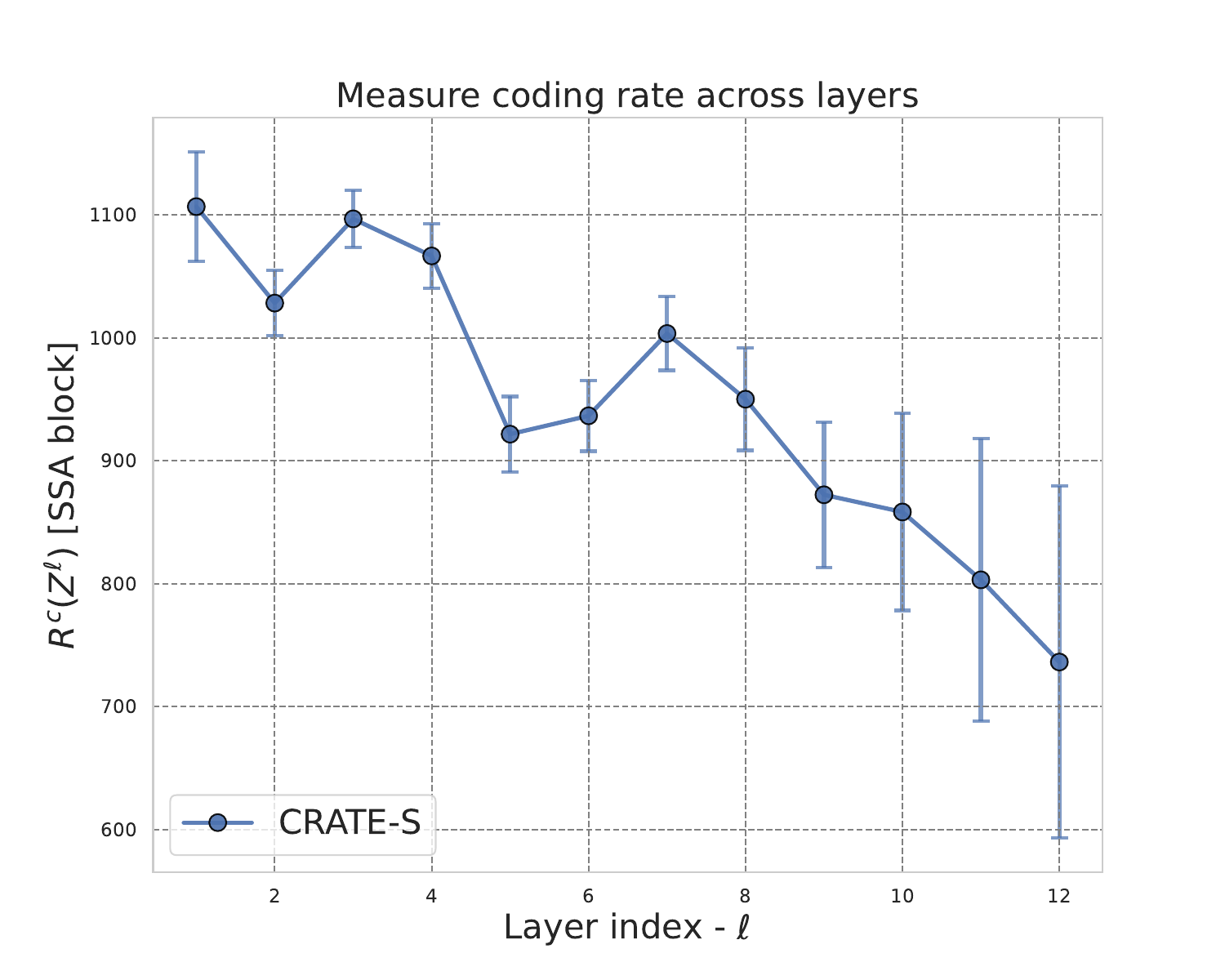}
     \end{subfigure}
     \begin{subfigure}[b]{0.3\textwidth}
         \centering
    \includegraphics[width=\textwidth]{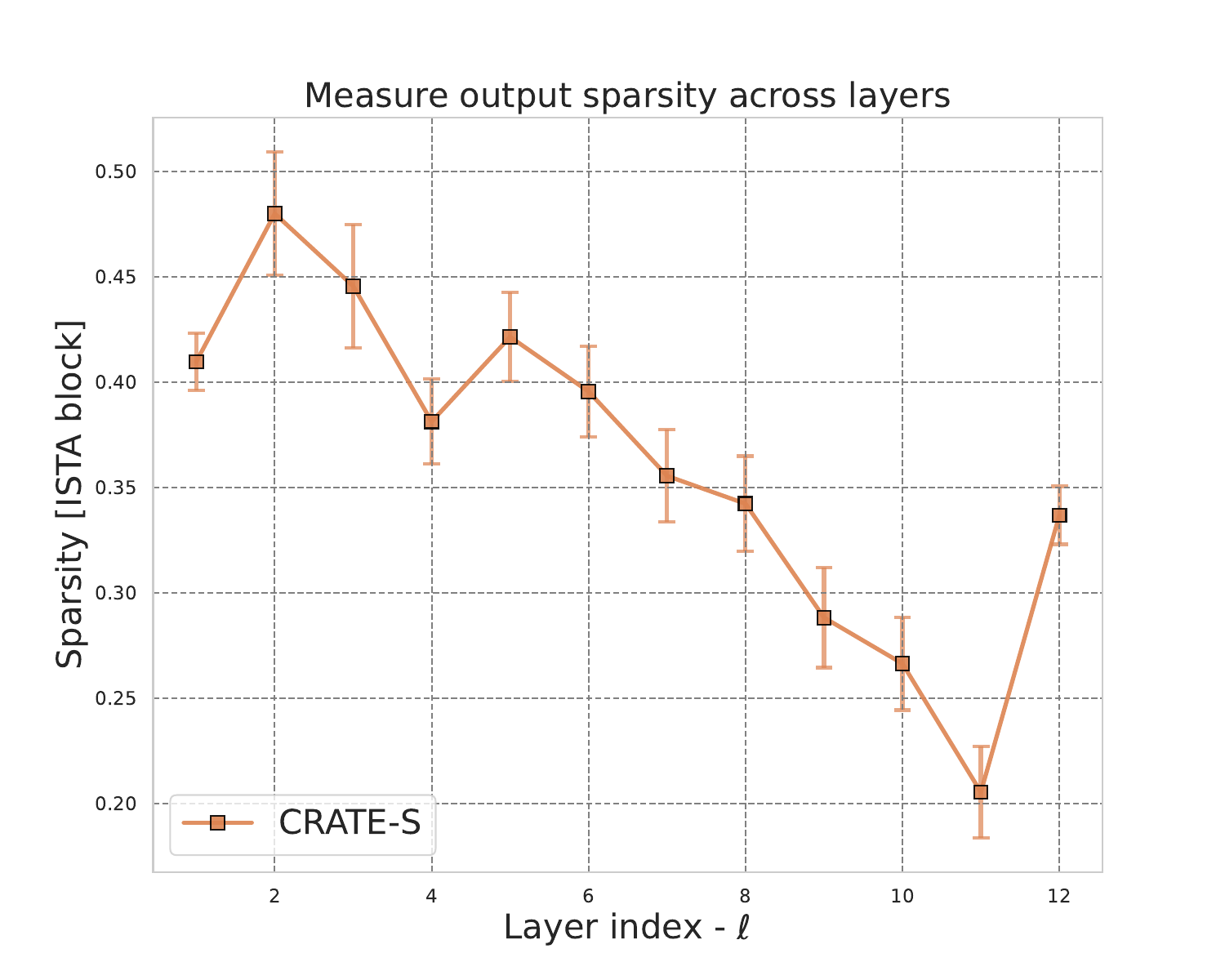}
     \end{subfigure}
     \begin{subfigure}[b]{0.3\textwidth}
         \centering
    \includegraphics[width=\textwidth]{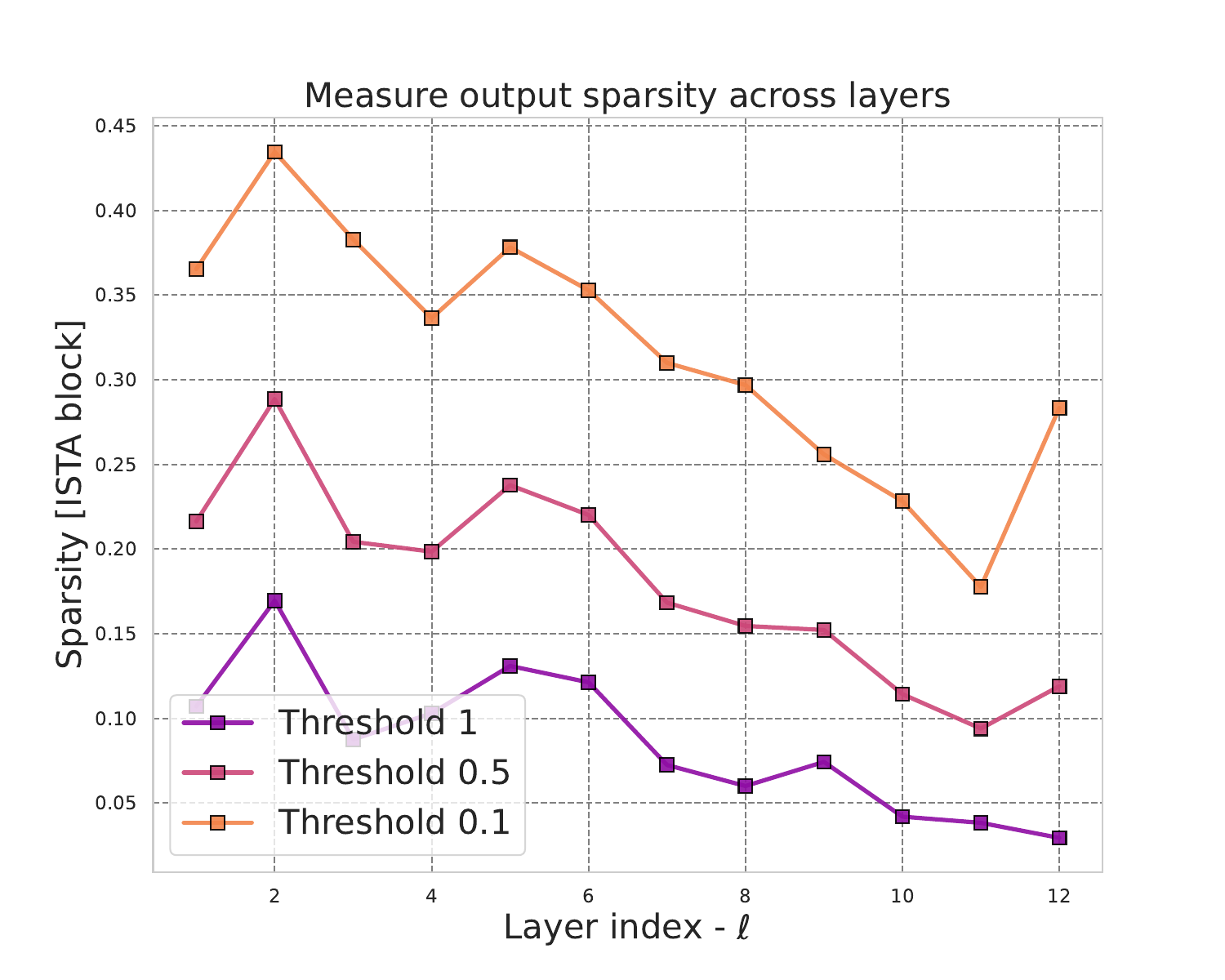}
     \end{subfigure}
     \vspace{-2mm}
        \caption{\small  \color{revision} \textit{Left}: The compression term $R^{c}(\bm{Z}^{\ell+1/2})$ of the multi-head self-attention outputs at different layers. \textit{Middle}: The sparsity of outputs of the \texttt{MLP} block, $\|\bm{Z}^{\ell+1}\|_0 / (d\cdot N)$, at different layers. \textit{Right}: To get a more fine-grained understanding of the sparsity of \texttt{MLP} block outputs of ViT, we use three different thresholds $\tau \in \{1.0, 0.5, 0.1\}$ and measure $\sum_{i,j} \mathbf{1}\{|\bm{Z}^{\ell+1}_{i, j}| < \tau\} / (d\cdot N)$, where $\bm{Z}^{\ell+1}_{i, j}$ represents the $j$-th element in the $i$-th token representation. 
        ({First row model: ViT{-Small}; second row model: ViT{-Base};} third row model: our proposed \ours{-Small}).}
        \label{fig:exp-rc-sparisty-small-new}
        \vspace{-0.2in}
\end{figure*}

\clearpage
\subsection{Details and Experimental Results of Attention Map Visualization}\label{app:attn_maps}

We recapitulate the method to visualize attention maps in \citet{abnar2020quantifying, caron2021emerging}, at first specializing their use to instances of the \ours{} model before generalizing to the ViT.

\noindent
For the $k^{\mathrm{th}}$ head at the $\ell^{\mathrm{th}}$ layer of \ours{}, we compute the \textit{self-attention matrix} $\A_{k}^{\ell}\in\bR^{n}$ defined as follows:
\begin{equation}\label{eq:def_attention_map_crate}
   \A_{k}^{\ell}= \mat{A_{k, 1}^{\ell} \\ \vdots \\ A_{k, N}^{\ell}} \in \bR^{N}, \quad \text{where} \quad A_{k, i}^{\ell} = \frac{\exp(\langle \vU^{\ell*}_k\z^{\ell}_{i}, \vU^{\ell*}_k\z^{\ell}_{\cls} \rangle)}{\sum_{j=1}^{N}\exp(\langle \vU^{\ell*}_k\z^{\ell}_{j}, \vU^{\ell*}_k\z^{\ell}_{\cls} \rangle)}.   
\end{equation}
We then reshape the attention matrix $\A_{k}^{\ell}$ into a ${\sqrt{n}\times\sqrt{n}}$ matrix and visualize the heatmap as shown in \Cref{fig:semantic_heads}. 
For example, the $i^{\mathrm{th}}$ row and the $j^{\mathrm{th}}$ column element of the heatmap in \Cref{fig:semantic_heads} corresponds to the $m^{\mathrm{th}}$ component of $\A_{k}^{\ell}$ if $m=(i - 1)\cdot \sqrt{n} + j$. 
In \Cref{fig:semantic_heads}, we select 4 attention heads of \ours{} and visualize the attention matrix $\A_{k}^{\ell}$ for each image. 

For the ViT, the entire methodology remains the same, except that the attention map is defined in the following reasonable way:
\begin{equation}\label{eq:def_attention_map_vit}
   \A_{k}^{\ell} = \mat{A_{k, 1}^{\ell} \\ \vdots \\ A_{k, N}^{\ell}} \in \bR^{N}, \quad \text{where} \quad A_{k, i}^{\ell} = \frac{\exp(\langle \vK^{\ell*}_k\z^{\ell}_{i}, \vQ^{\ell*}_k\z^{\ell}_{\cls} \rangle)}{\sum_{j=1}^{N}\exp(\langle \vK^{\ell*}_k\z^{\ell}_{j}, \vQ^{\ell*}_k\z^{\ell}_{\cls} \rangle)}.   
\end{equation}
where the ``query'' and ``key'' parameters of the standard transformer at head \(k\) and layer \(\ell\) are denoted \(\vK_{k}^{\ell}\) and \(\vQ_{k}^{\ell}\) respectively.

\paragraph{Details about MaskCut.}
We apply the MaskCut pipeline (Algorithm~\ref{alg:maskcut}) to generate segmentation masks and detection bounding box (discussed in \Cref{subsubsec:segmentation_measurement}). 
As described by \citet{wang2023cut}, we iteratively apply Normalized Cuts~\citep{shi2000normalized} on the patch-wise affinity matrix $\vM^{\ell}$, where $\vM_{ij}^{\ell} = \sum_{k=1}^{K}\langle \vU^{\ell*}_{k} \vz_{i}^{\ell}, \vU^{\ell*}_{k} \vz_{j}^{\ell}\rangle$. 
At each iterative step, we mask out the identified patch-wise entries on $\vM^{\ell}$. To obtain more fine-grained segmentation masks, MaskCut employs Conditional Random Fields (CRF)~\citep{krahenbuhl2011efficient} to post-process the masks, which smooths the edges and filters out unreasonable masks. Correspondingly, the detection bounding box is defined by the rectangular region that tightly encloses a segmentation mask.

\begin{algorithm}[H]
\caption{MaskCut}
\label{alg:maskcut}

\SetKwInput{Hyperparameter}{Hyperparameter}
\SetKwFunction{FMain}{MaskCut}
\SetKwProg{Fn}{Function}{:}{\textbf{return} \texttt{masks}}
\SetKwFor{For}{for}{do}{endfor}
\Hyperparameter{\(S\), the number of objects to segment.}

\Fn{\FMain{$\vM$}}{
    \For{\(i \in \{1, \dots, S\}\)}{
        \texttt{mask} $\gets$ \textsc{NCut}(\vM) \tcp*{\(\texttt{mask}\) is a boolean array}
        $\vM \gets \vM \odot \texttt{mask}$ \tcp*{Equivalent to applying the mask to \(\vM\)}
        \texttt{masks}[i] $\gets$ \texttt{mask}\;
    }

}
\end{algorithm}

\paragraph{Additional visualization of attention map.} We provide additional experiments on comparing the attention maps of \ours{} and ViT in \Cref{appendix_fig:crate_vs_vit}.
\begin{figure}
    \centering
    \includegraphics[width=1.0\linewidth]{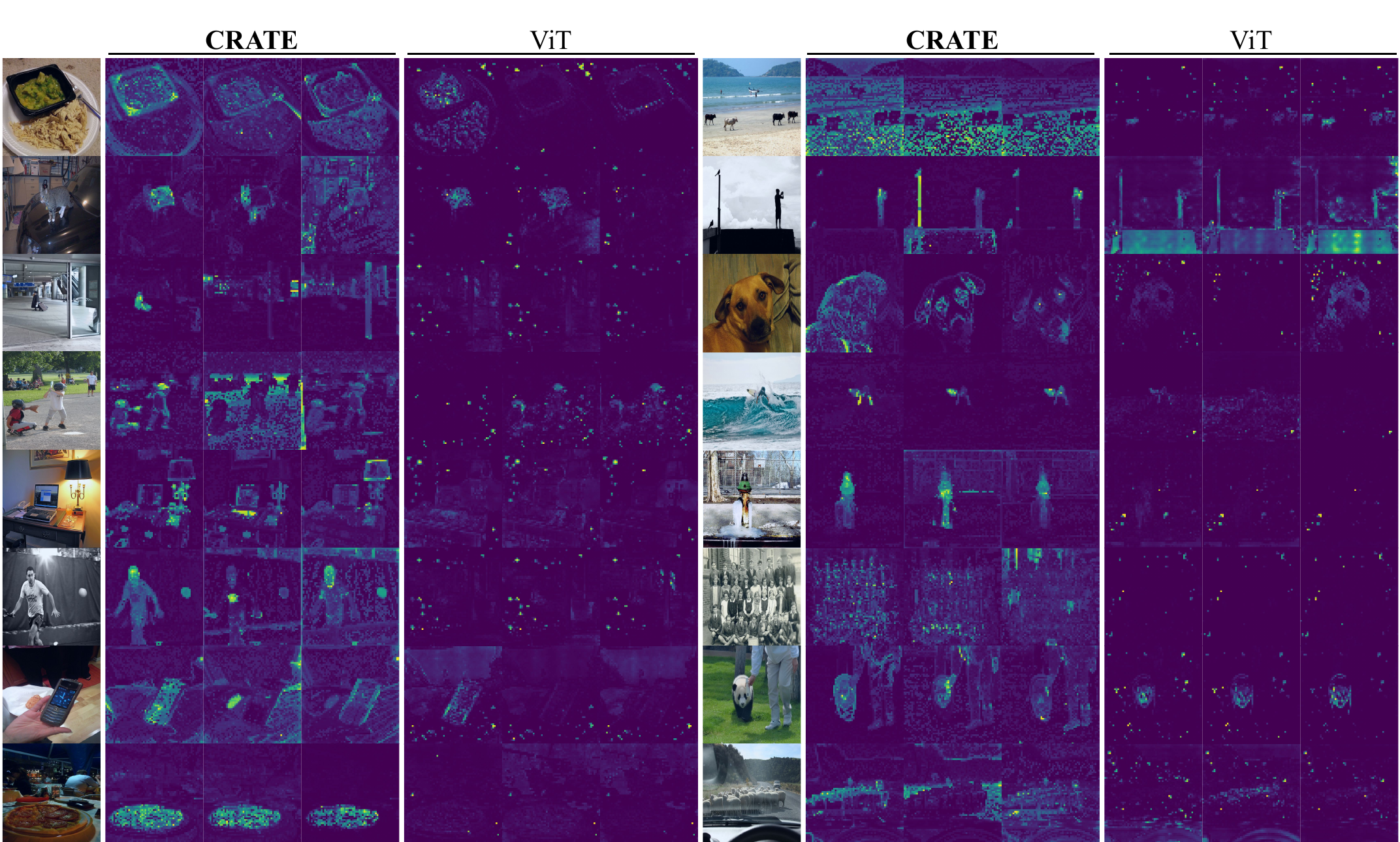}
    \caption{\textbf{More attention maps of supervised \ours{} and ViT } on images from COCO {val2017}. We select the second-to-last layer attention maps to visualize for \ours{} and the last layer for ViT.}
    \label{appendix_fig:crate_vs_vit}
\end{figure}

\clearpage
\section{\texttt{PyTorch} code for \ourscaps{}} 
\label{app:code}
We provide \texttt{PyTorch}-style code for implementing our proposed network architecture \ours{}, including the encoder architecture in \Cref{sec:encoding} and the decoder architecture described in \Cref{sec:autoencoding}.

\begin{itemize}
    \item In \Cref{app:experiment_pseudocode_primitives}, we provide the \texttt{PyTorch}-style pseudocode for the \texttt{MSSA} and \texttt{ISTA} blocks.
    \item In \Cref{app:experiment_pseudocode_encoder}, we provide the \texttt{PyTorch}-style pseudocode for an encoder layer \(f^{\ell}\) of the \ours{} transformer.
    \item In \Cref{app:experiment_pseudocode_decoder}, we provide the \texttt{PyTorch} style pseudocode for a decoder layer \(g^{\ell}\) of the \ours{} transformer.
    \item In \Cref{app:experiment_pseudocode_classifier}, as an example of all the top-level architectures we implement for experiments in \Cref{sec:exp}, we implement the \ours{} image classifier.
\end{itemize}

\subsection{PyTorch-Like Pseudocode for MSSA and ISTA Blocks}\label{app:experiment_pseudocode_primitives}

\begin{algorithm}
\caption{\texttt{PyTorch}-style pseudocode for \texttt{MSSA} block in transformer.}
    \begin{lstlisting}[language=Python]
class MSSA:
    # initialization
    def __init__(self, dim, heads = 8, dim_head = 64, dropout = 0.):
        inner_dim = dim_head * heads
        project_out = not (heads == 1 and dim_head == dim)
        self.heads = heads
        self.scale = dim_head ** -0.5
        self.attend = Softmax(dim = -1)
        self.dropout = Dropout(dropout)
        self.qkv = Linear(dim, inner_dim, bias=False)
        self.to_out = Sequential(Linear(inner_dim, dim), Dropout(dropout)) if project_out else nn.Identity()
        
    # forward pass
    def forward(self, x):
        w = rearrange(self.qkv(x), 'b n (h d) -> b h n d', h = self.heads)
        dots = matmul(w, w.transpose(-1, -2)) * self.scale
        attn = self.attend(dots)
        attn = self.dropout(attn)
        out = matmul(attn, w)
        out = rearrange(out, 'b h n d -> b n (h d)')
        return self.to_out(out)
    \end{lstlisting}
\end{algorithm}

\begin{algorithm} \caption{\texttt{PyTorch}-style pseudocode for \texttt{ISTA} block in transformer.}
    \begin{lstlisting}[language=Python]
class ISTA:
    # initialization
    def __init__(self, dim, dropout = 0., step_size=0.1, lambd=0.1):
        super().__init__()
        self.weight = nn.Parameter(torch.Tensor(dim, dim))
        with torch.no_grad():
            init.kaiming_uniform_(self.weight)
        self.step_size = step_size
        self.lambd = lambd
        
    # forward pass
    def forward(self, x):
        x1 = F.linear(x, self.weight, bias=None)
        grad_update = self.step_size * F.linear(x1 - x, self.weight.t(), bias=None)
        output = F.relu(x - grad_update - self.step_size * self.lambd)
        return output
    \end{lstlisting}
\end{algorithm}

\subsection{PyTorch-Like Pseudocode for \ourscaps{} Encoder}\label{app:experiment_pseudocode_encoder}

\begin{algorithm}
    \caption{\texttt{PyTorch}-style pseudocode for \ours{} encoder.}
    \begin{lstlisting}[language=Python]
class CRATEEncoder(Module):
    # initialization
    def __init__(self, dim, depth, heads, dim_head, dropout = 0.):
        self.layers = []
        self.depth = depth
        for _ in range(depth):
            self.layers.extend([LayerNorm(dim), MSSA(dim, heads, dim_head, dropout)])
            self.layers.extend([LayerNorm(dim), ISTA(dim, dropout)])
    
    # forward pass
    def forward(self, x):
        for ln1, attn, ln2, ff in self.layers:
            x_ = attn(ln1(x)) + ln1(x)
            x = ff(ln2(x_))
        return x
    \end{lstlisting}
\end{algorithm}

\subsection{PyTorch-Like Pseudocode for \ourscaps{} Decoder}\label{app:experiment_pseudocode_decoder}

\begin{algorithm}
    \caption{\texttt{PyTorch}-style pseudocode for \ours{} decoder.}
    \begin{lstlisting}[language=Python]
class CRATEDecoder(Module):
    # initialization
    def __init__(self, dim, depth, heads, dim_head, dropout = 0.):
        self.layers = []
        self.depth = depth
        for _ in range(depth):
            self.layers.extend([LayerNorm(dim), Linear(dim)])
            self.layers.extend([LayerNorm(dim), MSSA(dim, heads, dim_head, dropout)])
    
    # forward pass
    def forward(self, x):
        for ln1, lin, ln2, attn in self.layers:
            x_ = lin(ln1(x))
            x = ln2(x_) - attn(ln2(x_))
        return x
    \end{lstlisting}
\end{algorithm}

\subsection{PyTorch-Like Pseudocode for \ourscaps{} Image Classifier}\label{app:experiment_pseudocode_classifier}

\begin{algorithm}
    \caption{\texttt{PyTorch}-style pseudocode for \ours{} image classifier.}
    \begin{lstlisting}[language=Python]
class CRATEClassifier(Module):
    # initialization
    def init(self, image_size, patch_size, num_classes, dim, depth, heads, channels = 3, dim_head = 64, dropout = 0., emb_dropout = 0.):
        # define patch, image dimensions and number of patches
        image_height, image_width = pair(image_size)
        patch_height, patch_width = pair(patch_size)
        num_patches = (image_height // patch_height) * (image_width // patch_width)
        patch_dim = channels * patch_height * patch_width

        # define patch embedding, positional embedding, dropout, and transformer
        self.to_patch_embedding = Sequential(Rearrange, LayerNorm(patch_dim), Linear(patch_dim, dim), LayerNorm(dim))
        self.pos_embedding = Parameter(random(1, num_patches + 1, dim))
        self.cls_token = Parameter(random(1, 1, dim))
        self.dropout = Dropout(emb_dropout)
        self.transformer = CRATEEncoder(dim, depth, heads, dim_head, dropout)

        # define pooling, latent layer, and MLP head
        self.pool = pool
        self.to_latent = Identity()
        self.mlp_head = Sequential(LayerNorm(dim), Linear(dim, num_classes))

    # forward pass 
    def forward(self, img):
        x = self.to_patch_embedding(img)
        b, n, _ = shape(x)
        cls_tokens = repeat(self.cls_token, '1 1 d -> b 1 d', b = b)
        x = concatenate((cls_tokens, x), dim=1)
        x += self.pos_embedding[:, :(n + 1)]
        x = self.dropout(x)
        x = self.transformer(x)
        x = mean(x, dim = 1) if self.pool == 'mean' else x[:, 0]
        x = self.to_latent(x)
        return self.mlp_head(x)
    \end{lstlisting}
\end{algorithm}

\clearpage

\bibliography{reference.bib}

\begin{thebibliography}{157}
\providecommand{\natexlab}[1]{#1}
\providecommand{\url}[1]{\texttt{#1}}
\expandafter\ifx\csname urlstyle\endcsname\relax
  \providecommand{\doi}[1]{doi: #1}\else
  \providecommand{\doi}{doi: \begingroup \urlstyle{rm}\Url}\fi

\bibitem[Abnar and Zuidema(2020)]{abnar2020quantifying}
Samira Abnar and Willem~H. Zuidema.
\newblock {Q}uantifying {A}ttention {F}low in {T}ransformers.
\newblock In Dan Jurafsky, Joyce Chai, Natalie Schluter, and Joel~R. Tetreault,
  editors, \emph{Proceedings of the 58th Annual Meeting of the Association for
  Computational Linguistics, {ACL} 2020, Online, July 5-10, 2020}, pages
  4190--4197. Association for Computational Linguistics, 2020.
\newblock \doi{10.18653/V1/2020.ACL-MAIN.385}.
\newblock URL \url{https://doi.org/10.18653/v1/2020.acl-main.385}.

\bibitem[Aharon et~al.(2006)Aharon, Elad, and Bruckstein]{aharon2006k}
Michal Aharon, Michael Elad, and Alfred~M. Bruckstein.
\newblock {K-SVD:} {A}n algorithm for designing overcomplete dictionaries for
  sparse representation.
\newblock \emph{{IEEE} Trans. Signal Process.}, 54\penalty0 (11):\penalty0
  4311--4322, 2006.
\newblock \doi{10.1109/TSP.2006.881199}.
\newblock URL \url{https://doi.org/10.1109/TSP.2006.881199}.

\bibitem[Arnab et~al.(2021)Arnab, Dehghani, Heigold, Sun, Lucic, and
  Schmid]{arnab2021vivit}
Anurag Arnab, Mostafa Dehghani, Georg Heigold, Chen Sun, Mario Lucic, and
  Cordelia Schmid.
\newblock {V}i{V}i{T}: {A} {V}ideo {V}ision {T}ransformer.
\newblock In \emph{2021 {IEEE/CVF} International Conference on Computer Vision,
  {ICCV} 2021, Montreal, QC, Canada, October 10-17, 2021}, pages 6816--6826.
  {IEEE}, 2021.
\newblock \doi{10.1109/ICCV48922.2021.00676}.
\newblock URL \url{https://doi.org/10.1109/ICCV48922.2021.00676}.

\bibitem[Bakry et~al.(2016)Bakry, Gentil, and Ledoux]{Bakry2016-pl}
Dominique Bakry, Ivan Gentil, and Michel Ledoux.
\newblock \emph{{A}nalysis and {G}eometry of {M}arkov {D}iffusion {O}perators}.
\newblock Springer International Publishing, August 2016.
\newblock ISBN 9783319343235.
\newblock URL
  \url{https://play.google.com/store/books/details?id=tQICvgAACAAJ}.

\bibitem[Bardes et~al.(2022)Bardes, Ponce, and LeCun]{bardes2022vicreg}
Adrien Bardes, Jean Ponce, and Yann LeCun.
\newblock {VICR}eg: {V}ariance-{I}nvariance-covariance {R}egularization for
  {S}elf-supervised {L}earning.
\newblock In \emph{International Conference on Learning Representations}, 2022.
\newblock URL \url{https://openreview.net/forum?id=xm6YD62D1Ub}.

\bibitem[Beck and Teboulle(2009)]{beck2009fast}
Amir Beck and Marc Teboulle.
\newblock {A} {F}ast {I}terative {S}hrinkage-thresholding {A}lgorithm for
  {L}inear {I}nverse {P}roblems.
\newblock \emph{{SIAM} J. Imaging Sci.}, 2\penalty0 (1):\penalty0 183--202,
  2009.
\newblock \doi{10.1137/080716542}.
\newblock URL \url{https://doi.org/10.1137/080716542}.

\bibitem[Betker et~al.(2023)Betker, Goh, Jing, et~al.]{Betker2023improving}
James Betker, Gabriel Goh, Li~Jing, et~al.
\newblock {I}mproving {I}mage {G}eneration with {B}etter {C}aptions.
\newblock 2023.

\bibitem[Bommasani et~al.(2021)Bommasani, Hudson, Adeli, Altman, Arora, von
  Arx, Bernstein, Bohg, Bosselut, Brunskill, Brynjolfsson, Buch, Card,
  Castellon, Chatterji, Chen, Creel, Davis, Demszky, Donahue, Doumbouya,
  Durmus, Ermon, Etchemendy, Ethayarajh, Fei{-}Fei, Finn, Gale, Gillespie,
  Goel, Goodman, Grossman, Guha, Hashimoto, Henderson, Hewitt, Ho, Hong, Hsu,
  Huang, Icard, Jain, Jurafsky, Kalluri, Karamcheti, Keeling, Khani, Khattab,
  Koh, Krass, Krishna, Kuditipudi, and et~al.]{Bommasani2021-vm}
Rishi Bommasani, Drew~A. Hudson, Ehsan Adeli, Russ~B. Altman, Simran Arora,
  Sydney von Arx, Michael~S. Bernstein, Jeannette Bohg, Antoine Bosselut, Emma
  Brunskill, Erik Brynjolfsson, Shyamal Buch, Dallas Card, Rodrigo Castellon,
  Niladri~S. Chatterji, Annie~S. Chen, Kathleen Creel, Jared~Quincy Davis,
  Dorottya Demszky, Chris Donahue, Moussa Doumbouya, Esin Durmus, Stefano
  Ermon, John Etchemendy, Kawin Ethayarajh, Li~Fei{-}Fei, Chelsea Finn, Trevor
  Gale, Lauren Gillespie, Karan Goel, Noah~D. Goodman, Shelby Grossman, Neel
  Guha, Tatsunori Hashimoto, Peter Henderson, John Hewitt, Daniel~E. Ho, Jenny
  Hong, Kyle Hsu, Jing Huang, Thomas Icard, Saahil Jain, Dan Jurafsky,
  Pratyusha Kalluri, Siddharth Karamcheti, Geoff Keeling, Fereshte Khani, Omar
  Khattab, Pang~Wei Koh, Mark~S. Krass, Ranjay Krishna, Rohith Kuditipudi, and
  et~al.
\newblock {O}n the {O}pportunities and {R}isks of {F}oundation {M}odels.
\newblock \emph{CoRR}, abs/2108.07258, 2021.
\newblock URL \url{https://arxiv.org/abs/2108.07258}.

\bibitem[Bovier et~al.(2005)Bovier, Gayrard, and Klein]{Bovier2005-yo}
Anton Bovier, V{\'e}ronique Gayrard, and Markus Klein.
\newblock Metastability in reversible diffusion processes {II}: precise
  asymptotics for small eigenvalues.
\newblock \emph{Journal of the European Mathematical Society}, 7\penalty0
  (1):\penalty0 69--99, March 2005.
\newblock ISSN 1435-9855.
\newblock \doi{10.4171/JEMS/22}.
\newblock URL \url{https://ems.press/content/serial-article-files/31531}.

\bibitem[Brown et~al.(2020)Brown, Mann, Ryder, Subbiah, Kaplan, Dhariwal,
  Neelakantan, Shyam, Sastry, Askell, Agarwal, Herbert{-}Voss, Krueger,
  Henighan, Child, Ramesh, Ziegler, Wu, Winter, Hesse, Chen, Sigler, Litwin,
  Gray, Chess, Clark, Berner, McCandlish, Radford, Sutskever, and
  Amodei]{brown2020language}
Tom~B. Brown, Benjamin Mann, Nick Ryder, Melanie Subbiah, Jared Kaplan,
  Prafulla Dhariwal, Arvind Neelakantan, Pranav Shyam, Girish Sastry, Amanda
  Askell, Sandhini Agarwal, Ariel Herbert{-}Voss, Gretchen Krueger, Tom
  Henighan, Rewon Child, Aditya Ramesh, Daniel~M. Ziegler, Jeffrey Wu, Clemens
  Winter, Christopher Hesse, Mark Chen, Eric Sigler, Mateusz Litwin, Scott
  Gray, Benjamin Chess, Jack Clark, Christopher Berner, Sam McCandlish, Alec
  Radford, Ilya Sutskever, and Dario Amodei.
\newblock {L}anguage {M}odels are {F}ew-shot {L}earners.
\newblock In Hugo Larochelle, Marc'Aurelio Ranzato, Raia Hadsell,
  Maria{-}Florina Balcan, and Hsuan{-}Tien Lin, editors, \emph{Advances in
  Neural Information Processing Systems 33: Annual Conference on Neural
  Information Processing Systems 2020, NeurIPS 2020, December 6-12, 2020,
  virtual}, 2020.
\newblock URL
  \url{https://proceedings.neurips.cc/paper/2020/hash/1457c0d6bfcb4967418bfb8ac142f64a-Abstract.html}.

\bibitem[Bruna and Mallat(2013)]{Bruna2013-on}
Joan Bruna and St{\'{e}}phane Mallat.
\newblock {I}nvariant {S}cattering {C}onvolution {N}etworks.
\newblock \emph{{IEEE} Trans. Pattern Anal. Mach. Intell.}, 35\penalty0
  (8):\penalty0 1872--1886, 2013.
\newblock \doi{10.1109/TPAMI.2012.230}.
\newblock URL \url{https://doi.org/10.1109/TPAMI.2012.230}.

\bibitem[Carion et~al.(2020)Carion, Massa, Synnaeve, Usunier, Kirillov, and
  Zagoruyko]{Carion2020-fm}
Nicolas Carion, Francisco Massa, Gabriel Synnaeve, Nicolas Usunier, Alexander
  Kirillov, and Sergey Zagoruyko.
\newblock {E}nd-to-end {O}bject {D}etection with {T}ransformers.
\newblock In Andrea Vedaldi, Horst Bischof, Thomas Brox, and Jan{-}Michael
  Frahm, editors, \emph{Computer Vision - {ECCV} 2020 - 16th European
  Conference, Glasgow, UK, August 23-28, 2020, Proceedings, Part {I}}, volume
  12346 of \emph{Lecture Notes in Computer Science}, pages 213--229. Springer,
  2020.
\newblock \doi{10.1007/978-3-030-58452-8\_13}.
\newblock URL \url{https://doi.org/10.1007/978-3-030-58452-8\_13}.

\bibitem[Caron et~al.(2020)Caron, Misra, Mairal, Goyal, Bojanowski, and
  Joulin]{caron2020unsupervised}
Mathilde Caron, Ishan Misra, Julien Mairal, Priya Goyal, Piotr Bojanowski, and
  Armand Joulin.
\newblock {U}nsupervised {L}earning of {V}isual {F}eatures by {C}ontrasting
  {C}luster {A}ssignments.
\newblock In Hugo Larochelle, Marc'Aurelio Ranzato, Raia Hadsell,
  Maria{-}Florina Balcan, and Hsuan{-}Tien Lin, editors, \emph{Advances in
  Neural Information Processing Systems 33: Annual Conference on Neural
  Information Processing Systems 2020, NeurIPS 2020, December 6-12, 2020,
  virtual}, 2020.
\newblock URL
  \url{https://proceedings.neurips.cc/paper/2020/hash/70feb62b69f16e0238f741fab228fec2-Abstract.html}.

\bibitem[Caron et~al.(2021)Caron, Touvron, Misra, J{\'{e}}gou, Mairal,
  Bojanowski, and Joulin]{caron2021emerging}
Mathilde Caron, Hugo Touvron, Ishan Misra, Herv{\'{e}} J{\'{e}}gou, Julien
  Mairal, Piotr Bojanowski, and Armand Joulin.
\newblock {E}merging {P}roperties in {S}elf-supervised {V}ision {T}ransformers.
\newblock In \emph{2021 {IEEE/CVF} International Conference on Computer Vision,
  {ICCV} 2021, Montreal, QC, Canada, October 10-17, 2021}, pages 9630--9640.
  {IEEE}, 2021.
\newblock \doi{10.1109/ICCV48922.2021.00951}.
\newblock URL \url{https://doi.org/10.1109/ICCV48922.2021.00951}.

\bibitem[Chan et~al.(2022)Chan, Yu, You, Qi, Wright, and Ma]{chan2021redunet}
Kwan Ho~Ryan Chan, Yaodong Yu, Chong You, Haozhi Qi, John Wright, and Yi~Ma.
\newblock Redu{N}et: {A} {W}hite-box {D}eep {N}etwork from the {P}rinciple of
  {M}aximizing {R}ate {R}eduction.
\newblock \emph{Journal of Machine Learning Research}, 23\penalty0
  (114):\penalty0 1--103, 2022.
\newblock URL \url{http://jmlr.org/papers/v23/21-0631.html}.

\bibitem[Chen et~al.(2023{\natexlab{a}})Chen, Lee, and Lu]{chen2022improved}
Hongrui Chen, Holden Lee, and Jianfeng Lu.
\newblock {I}mproved {A}nalysis of {S}core-based {G}enerative {M}odeling:
  {U}ser-friendly {B}ounds under {M}inimal {S}moothness {A}ssumptions.
\newblock In Andreas Krause, Emma Brunskill, Kyunghyun Cho, Barbara Engelhardt,
  Sivan Sabato, and Jonathan Scarlett, editors, \emph{International Conference
  on Machine Learning, {ICML} 2023, 23-29 July 2023, Honolulu, Hawaii, {USA}},
  volume 202 of \emph{Proceedings of Machine Learning Research}, pages
  4735--4763. {PMLR}, 2023{\natexlab{a}}.
\newblock URL \url{https://proceedings.mlr.press/v202/chen23q.html}.

\bibitem[Chen et~al.(2023{\natexlab{b}})Chen, Chewi, Li, Li, Salim, and
  Zhang]{chen2023-uo}
Sitan Chen, Sinho Chewi, Jerry Li, Yuanzhi Li, Adil Salim, and Anru Zhang.
\newblock Sampling is as easy as learning the score: theory for diffusion
  models with minimal data assumptions.
\newblock In \emph{The Eleventh International Conference on Learning
  Representations}, 2023{\natexlab{b}}.
\newblock URL \url{https://openreview.net/forum?id=zyLVMgsZ0U_}.

\bibitem[Chen et~al.(2020)Chen, Kornblith, Norouzi, and Hinton]{Chen2020-ha}
Ting Chen, Simon Kornblith, Mohammad Norouzi, and Geoffrey~E. Hinton.
\newblock {A} {S}imple {F}ramework for {C}ontrastive {L}earning of {V}isual
  {R}epresentations.
\newblock In \emph{Proceedings of the 37th International Conference on Machine
  Learning, {ICML} 2020, 13-18 July 2020, Virtual Event}, volume 119 of
  \emph{Proceedings of Machine Learning Research}, pages 1597--1607. {PMLR},
  2020.
\newblock URL \url{http://proceedings.mlr.press/v119/chen20j.html}.

\bibitem[Chen et~al.(2023{\natexlab{c}})Chen, Liang, Huang, Real, Wang, Liu,
  Pham, Dong, Luong, Hsieh, Lu, and Le]{chen2023symbolic}
Xiangning Chen, Chen Liang, Da~Huang, Esteban Real, Kaiyuan Wang, Yao Liu, Hieu
  Pham, Xuanyi Dong, Thang Luong, Cho{-}Jui Hsieh, Yifeng Lu, and Quoc~V. Le.
\newblock {S}ymbolic {D}iscovery of {O}ptimization {A}lgorithms.
\newblock \emph{CoRR}, abs/2302.06675, 2023{\natexlab{c}}.
\newblock \doi{10.48550/ARXIV.2302.06675}.
\newblock URL \url{https://doi.org/10.48550/arXiv.2302.06675}.

\bibitem[Chen et~al.(2018)Chen, Paiton, and Olshausen]{chen2018sparse}
Yubei Chen, Dylan~M. Paiton, and Bruno~A. Olshausen.
\newblock {T}he {S}parse {M}anifold {T}ransform.
\newblock In Samy Bengio, Hanna~M. Wallach, Hugo Larochelle, Kristen Grauman,
  Nicol{\`{o}} Cesa{-}Bianchi, and Roman Garnett, editors, \emph{Advances in
  Neural Information Processing Systems 31: Annual Conference on Neural
  Information Processing Systems 2018, NeurIPS 2018, December 3-8, 2018,
  Montr{\'{e}}al, Canada}, pages 10534--10545, 2018.
\newblock URL
  \url{https://proceedings.neurips.cc/paper/2018/hash/8e19a39c36b8e5e3afd2a3b2692aea96-Abstract.html}.

\bibitem[Cover(1999)]{cover1999elements}
Thomas~M Cover.
\newblock \emph{{E}lements of information theory}.
\newblock John Wiley \& Sons, 1999.

\bibitem[Dai et~al.(2022)Dai, Tong, Li, Wu, Psenka, Chan, Zhai, Yu, Yuan, Shum,
  and Ma]{dai2022ctrl}
Xili Dai, Shengbang Tong, Mingyang Li, Ziyang Wu, Michael Psenka, Kwan Ho~Ryan
  Chan, Pengyuan Zhai, Yaodong Yu, Xiaojun Yuan, Heung{-}Yeung Shum, and Yi~Ma.
\newblock {CTRL:} {C}losed-loop {T}ranscription to an {LDR} via {M}inimaxing
  {R}ate {R}eduction.
\newblock \emph{Entropy}, 24\penalty0 (4):\penalty0 456, 2022.
\newblock \doi{10.3390/E24040456}.
\newblock URL \url{https://doi.org/10.3390/e24040456}.

\bibitem[Dai et~al.(2023)Dai, Chen, Tong, Zhang, Gao, Li, Pai, Zhai, Yuan,
  Shum, Ni, and Ma]{dai2023closed}
Xili Dai, Ke~Chen, Shengbang Tong, Jingyuan Zhang, Xingjian Gao, Mingyang Li,
  Druv Pai, Yuexiang Zhai, Xiaojun Yuan, Heung{-}Yeung Shum, Lionel~M. Ni, and
  Yi~Ma.
\newblock {C}losed-loop {T}ranscription via {C}onvolutional {S}parse {C}oding.
\newblock \emph{CoRR}, abs/2302.09347, 2023.
\newblock \doi{10.48550/ARXIV.2302.09347}.
\newblock URL \url{https://doi.org/10.48550/arXiv.2302.09347}.

\bibitem[Dehghani et~al.(2023)Dehghani, Djolonga, Mustafa, Padlewski, Heek,
  Gilmer, Steiner, Caron, Geirhos, Alabdulmohsin, Jenatton, Beyer, Tschannen,
  Arnab, Wang, Ruiz, Minderer, Puigcerver, Evci, Kumar, van Steenkiste,
  Elsayed, Mahendran, Yu, Oliver, Huot, Bastings, Collier, Gritsenko, Birodkar,
  Vasconcelos, Tay, Mensink, Kolesnikov, Pavetic, Tran, Kipf, Lucic, Zhai,
  Keysers, Harmsen, and Houlsby]{dehghani2023scaling}
Mostafa Dehghani, Josip Djolonga, Basil Mustafa, Piotr Padlewski, Jonathan
  Heek, Justin Gilmer, Andreas~Peter Steiner, Mathilde Caron, Robert Geirhos,
  Ibrahim Alabdulmohsin, Rodolphe Jenatton, Lucas Beyer, Michael Tschannen,
  Anurag Arnab, Xiao Wang, Carlos~Riquelme Ruiz, Matthias Minderer, Joan
  Puigcerver, Utku Evci, Manoj Kumar, Sjoerd van Steenkiste, Gamaleldin~Fathy
  Elsayed, Aravindh Mahendran, Fisher Yu, Avital Oliver, Fantine Huot, Jasmijn
  Bastings, Mark Collier, Alexey~A. Gritsenko, Vighnesh Birodkar,
  Cristina~Nader Vasconcelos, Yi~Tay, Thomas Mensink, Alexander Kolesnikov,
  Filip Pavetic, Dustin Tran, Thomas Kipf, Mario Lucic, Xiaohua Zhai, Daniel
  Keysers, Jeremiah~J. Harmsen, and Neil Houlsby.
\newblock {S}caling {V}ision {T}ransformers to 22 {B}illion {P}arameters.
\newblock In Andreas Krause, Emma Brunskill, Kyunghyun Cho, Barbara Engelhardt,
  Sivan Sabato, and Jonathan Scarlett, editors, \emph{International Conference
  on Machine Learning, {ICML} 2023, 23-29 July 2023, Honolulu, Hawaii, {USA}},
  volume 202 of \emph{Proceedings of Machine Learning Research}, pages
  7480--7512. {PMLR}, 2023.
\newblock URL \url{https://proceedings.mlr.press/v202/dehghani23a.html}.

\bibitem[Deng et~al.(2009)Deng, Dong, Socher, Li, Li, and
  Fei{-}Fei]{deng2009imagenet}
Jia Deng, Wei Dong, Richard Socher, Li{-}Jia Li, Kai Li, and Li~Fei{-}Fei.
\newblock {I}mage{N}et: {A} large-scale hierarchical image database.
\newblock In \emph{2009 {IEEE} Computer Society Conference on Computer Vision
  and Pattern Recognition {(CVPR} 2009), 20-25 June 2009, Miami, Florida,
  {USA}}, pages 248--255. {IEEE} Computer Society, 2009.
\newblock \doi{10.1109/CVPR.2009.5206848}.
\newblock URL \url{https://doi.org/10.1109/CVPR.2009.5206848}.

\bibitem[Devlin et~al.(2019)Devlin, Chang, Lee, and Toutanova]{devlin2018bert}
Jacob Devlin, Ming{-}Wei Chang, Kenton Lee, and Kristina Toutanova.
\newblock {BERT:} {P}re-training of {D}eep {B}idirectional {T}ransformers for
  {L}anguage {U}nderstanding.
\newblock In Jill Burstein, Christy Doran, and Thamar Solorio, editors,
  \emph{Proceedings of the 2019 Conference of the North American Chapter of the
  Association for Computational Linguistics: Human Language Technologies,
  {NAACL-HLT} 2019, Minneapolis, MN, USA, June 2-7, 2019, Volume 1 (Long and
  Short Papers)}, pages 4171--4186. Association for Computational Linguistics,
  2019.
\newblock \doi{10.18653/V1/N19-1423}.
\newblock URL \url{https://doi.org/10.18653/v1/n19-1423}.

\bibitem[Donoho and Grimes(2005)]{Donoho2005-ag}
David~L. Donoho and Carrie Grimes.
\newblock {I}mage {M}anifolds which are {I}sometric to {E}uclidean {S}pace.
\newblock \emph{J. Math. Imaging Vis.}, 23\penalty0 (1):\penalty0 5--24, 2005.
\newblock \doi{10.1007/S10851-005-4965-4}.
\newblock URL \url{https://doi.org/10.1007/s10851-005-4965-4}.

\bibitem[Dosovitskiy et~al.(2021)Dosovitskiy, Beyer, Kolesnikov, Weissenborn,
  Zhai, Unterthiner, Dehghani, Minderer, Heigold, Gelly, Uszkoreit, and
  Houlsby]{dosovitskiy2020image}
Alexey Dosovitskiy, Lucas Beyer, Alexander Kolesnikov, Dirk Weissenborn,
  Xiaohua Zhai, Thomas Unterthiner, Mostafa Dehghani, Matthias Minderer, Georg
  Heigold, Sylvain Gelly, Jakob Uszkoreit, and Neil Houlsby.
\newblock {A}n {I}mage is {W}orth 16x16 {W}ords: {T}ransformers for {I}mage
  {R}ecognition at {S}cale.
\newblock In \emph{9th International Conference on Learning Representations,
  {ICLR} 2021, Virtual Event, Austria, May 3-7, 2021}. OpenReview.net, 2021.
\newblock URL \url{https://openreview.net/forum?id=YicbFdNTTy}.

\bibitem[Efron(2011{\natexlab{a}})]{Efron2011-wn}
Bradley Efron.
\newblock {T}weedie's {F}ormula and {S}election {B}ias.
\newblock \emph{Journal of the American Statistical Association}, 106\penalty0
  (496):\penalty0 1602--1614, 2011{\natexlab{a}}.
\newblock ISSN 0162-1459.
\newblock \doi{10.1198/jasa.2011.tm11181}.
\newblock URL \url{http://dx.doi.org/10.1198/jasa.2011.tm11181}.

\bibitem[Efron(2011{\natexlab{b}})]{efron2011tweedie}
Bradley Efron.
\newblock {T}weedie’s formula and selection bias.
\newblock \emph{Journal of the American Statistical Association}, 106\penalty0
  (496):\penalty0 1602--1614, 2011{\natexlab{b}}.

\bibitem[Elad(2010)]{elad2010sparse}
Michael Elad.
\newblock \emph{{S}parse and {R}edundant {R}epresentations - {F}rom {T}heory to
  {A}pplications in {S}ignal and {I}mage {P}rocessing}.
\newblock Springer, 2010.
\newblock ISBN 978-1-4419-7010-7.
\newblock \doi{10.1007/978-1-4419-7011-4}.
\newblock URL \url{https://doi.org/10.1007/978-1-4419-7011-4}.

\bibitem[Elad et~al.(2010)Elad, Figueiredo, and Ma]{elad2010role}
Michael Elad, M{\'{a}}rio A.~T. Figueiredo, and Yi~Ma.
\newblock {O}n the {R}ole of {S}parse and {R}edundant {R}epresentations in
  {I}mage {P}rocessing.
\newblock \emph{Proc. {IEEE}}, 98\penalty0 (6):\penalty0 972--982, 2010.
\newblock \doi{10.1109/JPROC.2009.2037655}.
\newblock URL \url{https://doi.org/10.1109/JPROC.2009.2037655}.

\bibitem[Esser et~al.(2020)Esser, Rombach, and Ommer]{Esser2020-do}
Patrick Esser, Robin Rombach, and Björn Ommer.
\newblock Taming transformers for high-resolution image synthesis.
\newblock \emph{arXiv [cs.CV]}, December 2020.
\newblock URL \url{http://arxiv.org/abs/2012.09841}.

\bibitem[Fang et~al.(2021)Fang, He, Long, and Su]{fang2021exploring}
Cong Fang, Hangfeng He, Qi~Long, and Weijie~J Su.
\newblock {E}xploring deep neural networks via layer-peeled model: {M}inority
  collapse in imbalanced training.
\newblock \emph{Proceedings of the National Academy of Sciences}, 118\penalty0
  (43):\penalty0 e2103091118, 2021.

\bibitem[Felzenszwalb and Huttenlocher(2005)]{Felzenszwalb2005-ma}
Pedro~F Felzenszwalb and Daniel~P Huttenlocher.
\newblock {P}ictorial {S}tructures for {O}bject {R}ecognition.
\newblock \emph{International journal of computer vision}, 61\penalty0
  (1):\penalty0 55--79, January 2005.
\newblock ISSN 0920-5691, 1573-1405.
\newblock \doi{10.1023/B:VISI.0000042934.15159.49}.
\newblock URL \url{https://doi.org/10.1023/B:VISI.0000042934.15159.49}.

\bibitem[Felzenszwalb et~al.(2008)Felzenszwalb, McAllester, and
  Ramanan]{Felzenszwalb2008-ui}
Pedro~F. Felzenszwalb, David~A. McAllester, and Deva Ramanan.
\newblock {A} discriminatively trained, multiscale, deformable part model.
\newblock In \emph{2008 {IEEE} Computer Society Conference on Computer Vision
  and Pattern Recognition {(CVPR} 2008), 24-26 June 2008, Anchorage, Alaska,
  {USA}}. {IEEE} Computer Society, 2008.
\newblock \doi{10.1109/CVPR.2008.4587597}.
\newblock URL \url{https://doi.org/10.1109/CVPR.2008.4587597}.

\bibitem[Gao and Pavel(2017)]{Gao2017-sd}
Bolin Gao and Lacra Pavel.
\newblock {O}n the {P}roperties of the {S}oftmax {F}unction with {A}pplication
  in {G}ame {T}heory and {R}einforcement {L}earning.
\newblock \emph{CoRR}, abs/1704.00805, 2017.
\newblock URL \url{http://arxiv.org/abs/1704.00805}.

\bibitem[Ge et~al.(2018)Ge, Lee, and Risteski]{Ge2018-ny}
Rong Ge, Holden Lee, and Andrej Risteski.
\newblock {S}imulated {T}empering {L}angevin {M}onte {C}arlo {II:} {A}n
  {I}mproved {P}roof using {S}oft {M}arkov {C}hain {D}ecomposition.
\newblock \emph{CoRR}, abs/1812.00793, 2018.
\newblock URL \url{http://arxiv.org/abs/1812.00793}.

\bibitem[Geshkovski et~al.(2023)Geshkovski, Letrouit, Polyanskiy, and
  Rigollet]{DBLP:journals/corr/abs-2312-10794}
Borjan Geshkovski, Cyril Letrouit, Yury Polyanskiy, and Philippe Rigollet.
\newblock A mathematical perspective on transformers.
\newblock \emph{CoRR}, abs/2312.10794, 2023.
\newblock \doi{10.48550/ARXIV.2312.10794}.
\newblock URL \url{https://doi.org/10.48550/arXiv.2312.10794}.

\bibitem[Gokaslan and Cohen(2019)]{Gokaslan2019OpenWeb}
Aaron Gokaslan and Vanya Cohen.
\newblock {O}pen{W}eb{T}ext {C}orpus, 2019.

\bibitem[Gong et~al.(2023)Gong, Rouditchenko, Liu, Harwath, Karlinsky, Kuehne,
  and Glass]{gong2022contrastive}
Yuan Gong, Andrew Rouditchenko, Alexander~H. Liu, David Harwath, Leonid
  Karlinsky, Hilde Kuehne, and James~R. Glass.
\newblock {C}ontrastive {A}udio-visual {M}asked {A}utoencoder.
\newblock In \emph{The Eleventh International Conference on Learning
  Representations, {ICLR} 2023, Kigali, Rwanda, May 1-5, 2023}. OpenReview.net,
  2023.
\newblock URL \url{https://openreview.net/pdf?id=QPtMRyk5rb}.

\bibitem[Google(2021)]{huggingface_bert}
Google.
\newblock huggingface {B}{E}{R}{T} releases.
\newblock
  \url{https://huggingface.co/google/bert_uncased_L-8_H-512_A-8/tree/main},
  2021.

\bibitem[Gregor and LeCun(2010)]{gregor2010learning}
Karol Gregor and Yann LeCun.
\newblock {L}earning {F}ast {A}pproximations of {S}parse {C}oding.
\newblock In Johannes F{\"{u}}rnkranz and Thorsten Joachims, editors,
  \emph{Proceedings of the 27th International Conference on Machine Learning
  (ICML-10), June 21-24, 2010, Haifa, Israel}, pages 399--406. Omnipress, 2010.
\newblock URL \url{https://icml.cc/Conferences/2010/papers/449.pdf}.

\bibitem[Gribonval et~al.(2015)Gribonval, Jenatton, and Bach]{Gribonval2014-zr}
R{\'{e}}mi Gribonval, Rodolphe Jenatton, and Francis~R. Bach.
\newblock {S}parse and {S}purious: {D}ictionary {L}earning {W}ith {N}oise and
  {O}utliers.
\newblock \emph{{IEEE} Trans. Inf. Theory}, 61\penalty0 (11):\penalty0
  6298--6319, 2015.
\newblock \doi{10.1109/TIT.2015.2472522}.
\newblock URL \url{https://doi.org/10.1109/TIT.2015.2472522}.

\bibitem[Guth et~al.(2022)Guth, Zarka, and Mallat]{guth2021phase}
Florentin Guth, John Zarka, and St{\'{e}}phane Mallat.
\newblock {P}hase {C}ollapse in {N}eural {N}etworks.
\newblock In \emph{The Tenth International Conference on Learning
  Representations, {ICLR} 2022, Virtual Event, April 25-29, 2022}.
  OpenReview.net, 2022.
\newblock URL \url{https://openreview.net/forum?id=iPHLcmtietq}.

\bibitem[Haussmann and Pardoux(1986)]{Haussmann1986-wa}
U~G Haussmann and E~Pardoux.
\newblock {T}ime {R}eversal of {D}iffusions.
\newblock \emph{The Annals of Probability}, 14\penalty0 (4):\penalty0
  1188--1205, October 1986.
\newblock ISSN 0091-1798, 2168-894X.
\newblock \doi{10.1214/aop/1176992362}.
\newblock URL
  \url{https://projecteuclid.org/journals/annals-of-probability/volume-14/issue-4/Time-Reversal-of-Diffusions/10.1214/aop/1176992362.full}.

\bibitem[He and Su(2022{\natexlab{a}})]{he2022law}
Hangfeng He and Weijie~J. Su.
\newblock {A} {L}aw of {D}ata {S}eparation in {D}eep {L}earning.
\newblock \emph{CoRR}, abs/2210.17020, 2022{\natexlab{a}}.
\newblock \doi{10.48550/ARXIV.2210.17020}.
\newblock URL \url{https://doi.org/10.48550/arXiv.2210.17020}.

\bibitem[He and Su(2022{\natexlab{b}})]{he2023law}
Hangfeng He and Weijie~J. Su.
\newblock {A} {L}aw of {D}ata {S}eparation in {D}eep {L}earning.
\newblock \emph{CoRR}, abs/2210.17020, 2022{\natexlab{b}}.
\newblock \doi{10.48550/ARXIV.2210.17020}.
\newblock URL \url{https://doi.org/10.48550/arXiv.2210.17020}.

\bibitem[He et~al.(2016)He, Zhang, Ren, and Sun]{He2016-lc}
Kaiming He, Xiangyu Zhang, Shaoqing Ren, and Jian Sun.
\newblock {D}eep {R}esidual {L}earning for {I}mage {R}ecognition.
\newblock In \emph{2016 {IEEE} Conference on Computer Vision and Pattern
  Recognition, {CVPR} 2016, Las Vegas, NV, USA, June 27-30, 2016}, pages
  770--778. {IEEE} Computer Society, 2016.
\newblock \doi{10.1109/CVPR.2016.90}.
\newblock URL \url{https://doi.org/10.1109/CVPR.2016.90}.

\bibitem[He et~al.(2020)He, Gkioxari, Doll{\'{a}}r, and Girshick]{He2017-be}
Kaiming He, Georgia Gkioxari, Piotr Doll{\'{a}}r, and Ross~B. Girshick.
\newblock {M}ask {R-CNN}.
\newblock \emph{{IEEE} Trans. Pattern Anal. Mach. Intell.}, 42\penalty0
  (2):\penalty0 386--397, 2020.
\newblock \doi{10.1109/TPAMI.2018.2844175}.
\newblock URL \url{https://doi.org/10.1109/TPAMI.2018.2844175}.

\bibitem[He et~al.(2021)He, Chen, Xie, Li, Doll{\'{a}}r, and
  Girshick]{he2022masked}
Kaiming He, Xinlei Chen, Saining Xie, Yanghao Li, Piotr Doll{\'{a}}r, and
  Ross~B. Girshick.
\newblock {M}asked {A}utoencoders {A}re {S}calable {V}ision {L}earners.
\newblock \emph{CoRR}, abs/2111.06377, 2021.
\newblock URL \url{https://arxiv.org/abs/2111.06377}.

\bibitem[He et~al.(2022)He, Chen, Xie, Li, Doll{\'{a}}r, and
  Girshick]{He2021-lb}
Kaiming He, Xinlei Chen, Saining Xie, Yanghao Li, Piotr Doll{\'{a}}r, and
  Ross~B. Girshick.
\newblock {M}asked {A}utoencoders {A}re {S}calable {V}ision {L}earners.
\newblock In \emph{{IEEE/CVF} Conference on Computer Vision and Pattern
  Recognition, {CVPR} 2022, New Orleans, LA, USA, June 18-24, 2022}, pages
  15979--15988. {IEEE}, 2022.
\newblock \doi{10.1109/CVPR52688.2022.01553}.
\newblock URL \url{https://doi.org/10.1109/CVPR52688.2022.01553}.

\bibitem[Hinton(2021)]{hinton2021represent}
Geoffrey Hinton.
\newblock {H}ow to represent part-whole hierarchies in a neural network, 2021.

\bibitem[Hinton(2022)]{hinton2022forwardforward}
Geoffrey Hinton.
\newblock {T}he {F}orward-forward {A}lgorithm: {S}ome {P}reliminary
  {I}nvestigations, 2022.

\bibitem[Hinton and Zemel(1993)]{Hinton-NIPS1993}
Geoffrey~E Hinton and Richard Zemel.
\newblock {A}utoencoders, {M}inimum {D}escription {L}ength and {H}elmholtz
  {F}ree {E}nergy.
\newblock In J.~Cowan, G.~Tesauro, and J.~Alspector, editors, \emph{Advances in
  Neural Information Processing Systems}, volume~6. Morgan-Kaufmann, 1993.
\newblock URL
  \url{https://proceedings.neurips.cc/paper_files/paper/1993/file/9e3cfc48eccf81a0d57663e129aef3cb-Paper.pdf}.

\bibitem[Hinton et~al.(2011)Hinton, Krizhevsky, and Wang]{Hinton2011-yb}
Geoffrey~E. Hinton, Alex Krizhevsky, and Sida~D. Wang.
\newblock {T}ransforming {A}uto-encoders.
\newblock In Timo Honkela, Wlodzislaw Duch, Mark~A. Girolami, and Samuel Kaski,
  editors, \emph{Artificial Neural Networks and Machine Learning - {ICANN} 2011
  - 21st International Conference on Artificial Neural Networks, Espoo,
  Finland, June 14-17, 2011, Proceedings, Part {I}}, volume 6791 of
  \emph{Lecture Notes in Computer Science}, pages 44--51. Springer, 2011.
\newblock \doi{10.1007/978-3-642-21735-7\_6}.
\newblock URL \url{https://doi.org/10.1007/978-3-642-21735-7\_6}.

\bibitem[Hinton et~al.(2015)Hinton, Vinyals, and Dean]{hinton2015distilling}
Geoffrey~E. Hinton, Oriol Vinyals, and Jeffrey Dean.
\newblock {D}istilling the {K}nowledge in a {N}eural {N}etwork.
\newblock \emph{CoRR}, abs/1503.02531, 2015.
\newblock URL \url{http://arxiv.org/abs/1503.02531}.

\bibitem[Ho et~al.(2020)Ho, Jain, and Abbeel]{ho2020denoising}
Jonathan Ho, Ajay Jain, and Pieter Abbeel.
\newblock {D}enoising {D}iffusion {P}robabilistic {M}odels.
\newblock In Hugo Larochelle, Marc'Aurelio Ranzato, Raia Hadsell,
  Maria{-}Florina Balcan, and Hsuan{-}Tien Lin, editors, \emph{Advances in
  Neural Information Processing Systems 33: Annual Conference on Neural
  Information Processing Systems 2020, NeurIPS 2020, December 6-12, 2020,
  virtual}, 2020.
\newblock URL
  \url{https://proceedings.neurips.cc/paper/2020/hash/4c5bcfec8584af0d967f1ab10179ca4b-Abstract.html}.

\bibitem[Hoover et~al.(2023)Hoover, Liang, Pham, Panda, Strobelt, Chau, Zaki,
  and Krotov]{Hoover2023-lc}
Benjamin Hoover, Yuchen Liang, Bao Pham, Rameswar Panda, Hendrik Strobelt,
  Duen~Horng Chau, Mohammed~J. Zaki, and Dmitry Krotov.
\newblock {E}nergy {T}ransformer.
\newblock \emph{CoRR}, abs/2302.07253, 2023.
\newblock \doi{10.48550/ARXIV.2302.07253}.
\newblock URL \url{https://doi.org/10.48550/arXiv.2302.07253}.

\bibitem[Huggingface(2020)]{huggingface_transformers}
Huggingface.
\newblock huggingface text classification examples.
\newblock
  \url{https://github.com/huggingface/transformers/tree/main/examples/pytorch/text-classification},
  2020.

\bibitem[Huggingface(2023)]{huggingface_cola_issue}
Huggingface.
\newblock huggingface {C}o{L}{A} {G}it{H}ub {I}ssue.
\newblock \url{https://github.com/huggingface/transformers/issues/25043}, 2023.

\bibitem[Hyv{\"{a}}rinen(2005)]{Hyvarinen2005-fi}
Aapo Hyv{\"{a}}rinen.
\newblock {E}stimation of {N}on-normalized {S}tatistical {M}odels by {S}core
  {M}atching.
\newblock \emph{J. Mach. Learn. Res.}, 6:\penalty0 695--709, 2005.
\newblock URL \url{http://jmlr.org/papers/v6/hyvarinen05a.html}.

\bibitem[Jolliffe(2002)]{Jolliffe2002}
Ian~T. Jolliffe.
\newblock \emph{{P}rincipal {C}omponent {A}nalysis}.
\newblock Springer-Verlag, 2002.

\bibitem[Kadkhodaie and Simoncelli(2020)]{Kadkhodaie2020-fc}
Zahra Kadkhodaie and Eero~P. Simoncelli.
\newblock {S}olving {L}inear {I}nverse {P}roblems {U}sing the {P}rior
  {I}mplicit in a {D}enoiser.
\newblock \emph{CoRR}, abs/2007.13640, 2020.
\newblock URL \url{https://arxiv.org/abs/2007.13640}.

\bibitem[Karpathy(2022)]{nanogpt}
Andrej Karpathy.
\newblock nano{G}{P}{T}.
\newblock \url{https://github.com/karpathy/nanoGPT}, 2022.

\bibitem[Karras et~al.(2019)Karras, Laine, and Aila]{Karras2018-si}
Tero Karras, Samuli Laine, and Timo Aila.
\newblock {A} {S}tyle-based {G}enerator {A}rchitecture for {G}enerative
  {A}dversarial {N}etworks.
\newblock In \emph{{IEEE} Conference on Computer Vision and Pattern
  Recognition, {CVPR} 2019, Long Beach, CA, USA, June 16-20, 2019}, pages
  4401--4410. Computer Vision Foundation / {IEEE}, 2019.
\newblock \doi{10.1109/CVPR.2019.00453}.
\newblock URL
  \url{http://openaccess.thecvf.com/content\_CVPR\_2019/html/Karras\_A\_Style-Based\_Generator\_Architecture\_for\_Generative\_Adversarial\_Networks\_CVPR\_2019\_paper.html}.

\bibitem[Karras et~al.(2022)Karras, Aittala, Aila, and
  Laine]{karras2022elucidating}
Tero Karras, Miika Aittala, Timo Aila, and Samuli Laine.
\newblock {E}lucidating the {D}esign {S}pace of {D}iffusion-based {G}enerative
  {M}odels.
\newblock In \emph{NeurIPS}, 2022.
\newblock URL
  \url{http://papers.nips.cc/paper\_files/paper/2022/hash/a98846e9d9cc01cfb87eb694d946ce6b-Abstract-Conference.html}.

\bibitem[Keener(2010)]{keener2010theoretical}
Robert~W Keener.
\newblock \emph{{T}heoretical statistics: {T}opics for a core course}.
\newblock Springer, 2010.

\bibitem[Kingma and Ba(2015)]{kingma2014adam}
Diederik~P. Kingma and Jimmy Ba.
\newblock {A}dam: {A} {M}ethod for {S}tochastic {O}ptimization.
\newblock In Yoshua Bengio and Yann LeCun, editors, \emph{3rd International
  Conference on Learning Representations, {ICLR} 2015, San Diego, CA, USA, May
  7-9, 2015, Conference Track Proceedings}, 2015.
\newblock URL \url{http://arxiv.org/abs/1412.6980}.

\bibitem[Kirillov et~al.(2023)Kirillov, Mintun, Ravi, Mao, Rolland, Gustafson,
  Xiao, Whitehead, Berg, Lo, Doll{\'{a}}r, and Girshick]{Kirillov2023-pm}
Alexander Kirillov, Eric Mintun, Nikhila Ravi, Hanzi Mao, Chlo{\'{e}} Rolland,
  Laura Gustafson, Tete Xiao, Spencer Whitehead, Alexander~C. Berg, Wan{-}Yen
  Lo, Piotr Doll{\'{a}}r, and Ross~B. Girshick.
\newblock {S}egment {A}nything.
\newblock \emph{CoRR}, abs/2304.02643, 2023.
\newblock \doi{10.48550/ARXIV.2304.02643}.
\newblock URL \url{https://doi.org/10.48550/arXiv.2304.02643}.

\bibitem[Koehler et~al.(2023)Koehler, Heckett, and Risteski]{Koehler2022-ed}
Frederic Koehler, Alexander Heckett, and Andrej Risteski.
\newblock {S}tatistical {E}fficiency of {S}core {M}atching: {T}he {V}iew from
  {I}soperimetry.
\newblock In \emph{The Eleventh International Conference on Learning
  Representations, {ICLR} 2023, Kigali, Rwanda, May 1-5, 2023}. OpenReview.net,
  2023.
\newblock URL \url{https://openreview.net/pdf?id=TD7AnQjNzR6}.

\bibitem[Kr{\"{a}}henb{\"{u}}hl and Koltun(2011)]{krahenbuhl2011efficient}
Philipp Kr{\"{a}}henb{\"{u}}hl and Vladlen Koltun.
\newblock {E}fficient {I}nference in {F}ully {C}onnected {C}{R}{F}s with
  {G}aussian {E}dge {P}otentials.
\newblock In John Shawe{-}Taylor, Richard~S. Zemel, Peter~L. Bartlett, Fernando
  C.~N. Pereira, and Kilian~Q. Weinberger, editors, \emph{Advances in Neural
  Information Processing Systems 24: 25th Annual Conference on Neural
  Information Processing Systems 2011. Proceedings of a meeting held 12-14
  December 2011, Granada, Spain}, pages 109--117, 2011.
\newblock URL
  \url{https://proceedings.neurips.cc/paper/2011/hash/beda24c1e1b46055dff2c39c98fd6fc1-Abstract.html}.

\bibitem[Kramer(1991)]{Kramer1991NonlinearPC}
Mark~A. Kramer.
\newblock {N}onlinear principal component analysis using autoassociative neural
  networks.
\newblock \emph{AIChE Journal}, 1991.

\bibitem[Krizhevsky et~al.(2009)Krizhevsky, Hinton,
  et~al.]{krizhevsky2009learning}
Alex Krizhevsky, Geoffrey Hinton, et~al.
\newblock {L}earning multiple layers of features from tiny images.
\newblock 2009.

\bibitem[Krizhevsky et~al.(2012)Krizhevsky, Sutskever, and
  Hinton]{Krizhevsky2012-um}
Alex Krizhevsky, Ilya Sutskever, and Geoffrey~E. Hinton.
\newblock {I}mage{N}et {C}lassification with {D}eep {C}onvolutional {N}eural
  {N}etworks.
\newblock In Peter~L. Bartlett, Fernando C.~N. Pereira, Christopher J.~C.
  Burges, L{\'{e}}on Bottou, and Kilian~Q. Weinberger, editors, \emph{Advances
  in Neural Information Processing Systems 25: 26th Annual Conference on Neural
  Information Processing Systems 2012. Proceedings of a meeting held December
  3-6, 2012, Lake Tahoe, Nevada, United States}, pages 1106--1114, 2012.
\newblock URL
  \url{https://proceedings.neurips.cc/paper/2012/hash/c399862d3b9d6b76c8436e924a68c45b-Abstract.html}.

\bibitem[LeCun et~al.(1998)LeCun, Bottou, Bengio, and
  Haffner]{lecun1998gradient}
Yann LeCun, L{\'{e}}on Bottou, Yoshua Bengio, and Patrick Haffner.
\newblock {G}radient-based learning applied to document recognition.
\newblock \emph{Proc. {IEEE}}, 86\penalty0 (11):\penalty0 2278--2324, 1998.
\newblock \doi{10.1109/5.726791}.
\newblock URL \url{https://doi.org/10.1109/5.726791}.

\bibitem[LeCun et~al.(2006)LeCun, Chopra, Hadsell, Ranzato, and
  Huang]{lecun2006tutorial}
Yann LeCun, Sumit Chopra, Raia Hadsell, M~Ranzato, and Fujie Huang.
\newblock {A} tutorial on energy-based learning.
\newblock \emph{Predicting structured data}, 1\penalty0 (0), 2006.

\bibitem[Li et~al.(2023{\natexlab{a}})Li, Wang, Liu, and
  Chen]{li2023theoretical}
Hongkang Li, Meng Wang, Sijia Liu, and Pin{-}Yu Chen.
\newblock {A} {T}heoretical {U}nderstanding of {S}hallow {V}ision
  {T}ransformers: {L}earning, {G}eneralization, and {S}ample {C}omplexity.
\newblock In \emph{The Eleventh International Conference on Learning
  Representations, {ICLR} 2023, Kigali, Rwanda, May 1-5, 2023}. OpenReview.net,
  2023{\natexlab{a}}.
\newblock URL \url{https://openreview.net/pdf?id=jClGv3Qjhb}.

\bibitem[Li et~al.(2023{\natexlab{b}})Li, You, Bhojanapalli, Li, Rawat, Reddi,
  Ye, Chern, Yu, Guo, and Kumar]{Li2023-ig}
Zonglin Li, Chong You, Srinadh Bhojanapalli, Daliang Li, Ankit~Singh Rawat,
  Sashank~J. Reddi, Ke~Ye, Felix Chern, Felix~X. Yu, Ruiqi Guo, and Sanjiv
  Kumar.
\newblock {T}he {L}azy {N}euron {P}henomenon: {O}n {E}mergence of {A}ctivation
  {S}parsity in {T}ransformers.
\newblock In \emph{The Eleventh International Conference on Learning
  Representations, {ICLR} 2023, Kigali, Rwanda, May 1-5, 2023}. OpenReview.net,
  2023{\natexlab{b}}.
\newblock URL \url{https://openreview.net/pdf?id=TJ2nxciYCk-}.

\bibitem[Lin et~al.(2014)Lin, Maire, Belongie, Hays, Perona, Ramanan,
  Doll{\'{a}}r, and Zitnick]{lin2014microsoft}
Tsung{-}Yi Lin, Michael Maire, Serge~J. Belongie, James Hays, Pietro Perona,
  Deva Ramanan, Piotr Doll{\'{a}}r, and C.~Lawrence Zitnick.
\newblock {M}icrosoft {COCO:} {C}ommon {O}bjects in {C}ontext.
\newblock In David~J. Fleet, Tom{\'{a}}s Pajdla, Bernt Schiele, and Tinne
  Tuytelaars, editors, \emph{Computer Vision - {ECCV} 2014 - 13th European
  Conference, Zurich, Switzerland, September 6-12, 2014, Proceedings, Part
  {V}}, volume 8693 of \emph{Lecture Notes in Computer Science}, pages
  740--755. Springer, 2014.
\newblock \doi{10.1007/978-3-319-10602-1\_48}.
\newblock URL \url{https://doi.org/10.1007/978-3-319-10602-1\_48}.

\bibitem[Liu et~al.(2023)Liu, Li, Wu, and Lee]{liu2023visual}
Haotian Liu, Chunyuan Li, Qingyang Wu, and Yong~Jae Lee.
\newblock {V}isual {I}nstruction {T}uning.
\newblock \emph{CoRR}, abs/2304.08485, 2023.
\newblock \doi{10.48550/ARXIV.2304.08485}.
\newblock URL \url{https://doi.org/10.48550/arXiv.2304.08485}.

\bibitem[Liu et~al.(2019)Liu, Ott, Goyal, Du, Joshi, Chen, Levy, Lewis,
  Zettlemoyer, and Stoyanov]{liu2019roberta}
Yinhan Liu, Myle Ott, Naman Goyal, Jingfei Du, Mandar Joshi, Danqi Chen, Omer
  Levy, Mike Lewis, Luke Zettlemoyer, and Veselin Stoyanov.
\newblock {R}o{B}{E}{R}{T}a: {A} {R}obustly {O}ptimized {BERT} {P}retraining
  {A}pproach.
\newblock \emph{CoRR}, abs/1907.11692, 2019.
\newblock URL \url{http://arxiv.org/abs/1907.11692}.

\bibitem[Loshchilov and Hutter(2019)]{loshchilov2017decoupled}
Ilya Loshchilov and Frank Hutter.
\newblock {D}ecoupled {W}eight {D}ecay {R}egularization.
\newblock In \emph{7th International Conference on Learning Representations,
  {ICLR} 2019, New Orleans, LA, USA, May 6-9, 2019}. OpenReview.net, 2019.
\newblock URL \url{https://openreview.net/forum?id=Bkg6RiCqY7}.

\bibitem[Lu et~al.(2023)Lu, Wang, and Bal]{lu2023understanding}
Yibin Lu, Zhongjian Wang, and Guillaume Bal.
\newblock {U}nderstanding the diffusion models by conditional expectations.
\newblock \emph{arXiv preprint arXiv:2301.07882}, 2023.

\bibitem[Ma et~al.(2007)Ma, Derksen, Hong, and Wright]{ma2007segmentation}
Yi~Ma, Harm Derksen, Wei Hong, and John Wright.
\newblock {S}egmentation of {M}ultivariate {M}ixed {D}ata via {L}ossy {D}ata
  {C}oding and {C}ompression.
\newblock \emph{{IEEE} Trans. Pattern Anal. Mach. Intell.}, 29\penalty0
  (9):\penalty0 1546--1562, 2007.
\newblock \doi{10.1109/TPAMI.2007.1085}.
\newblock URL \url{https://doi.org/10.1109/TPAMI.2007.1085}.

\bibitem[Ma et~al.(2022)Ma, Tsao, and Shum]{ma2022principles}
Yi~Ma, Doris Tsao, and Heung{-}Yeung Shum.
\newblock {O}n the principles of {P}arsimony and {S}elf-consistency for the
  emergence of intelligence.
\newblock \emph{Frontiers Inf. Technol. Electron. Eng.}, 23\penalty0
  (9):\penalty0 1298--1323, 2022.
\newblock \doi{10.1631/FITEE.2200297}.
\newblock URL \url{https://doi.org/10.1631/FITEE.2200297}.

\bibitem[Milanfar(2013)]{Milanfar2013-js}
Peyman Milanfar.
\newblock {A} {T}our of {M}odern {I}mage {F}iltering: {N}ew {I}nsights and
  {M}ethods, {B}oth {P}ractical and {T}heoretical.
\newblock \emph{{IEEE} Signal Process. Mag.}, 30\penalty0 (1):\penalty0
  106--128, 2013.
\newblock \doi{10.1109/MSP.2011.2179329}.
\newblock URL \url{https://doi.org/10.1109/MSP.2011.2179329}.

\bibitem[Millet et~al.(1989{\natexlab{a}})Millet, Nualart, and
  Sanz]{Millet1989-ee}
A~Millet, D~Nualart, and M~Sanz.
\newblock {I}ntegration by {P}arts and {T}ime {R}eversal for {D}iffusion
  {P}rocesses.
\newblock \emph{Annals of probability}, 17\penalty0 (1):\penalty0 208--238,
  1989{\natexlab{a}}.
\newblock ISSN 0091-1798.
\newblock URL \url{http://www.jstor.org/stable/2244207}.

\bibitem[Millet et~al.(1989{\natexlab{b}})Millet, Nualart, and
  Sanz]{millet1989integration}
Annie Millet, David Nualart, and Marta Sanz.
\newblock {I}ntegration by parts and time reversal for diffusion processes.
\newblock \emph{The Annals of Probability}, pages 208--238, 1989{\natexlab{b}}.

\bibitem[Nilsback and Zisserman(2008)]{nilsback2008automated}
Maria{-}Elena Nilsback and Andrew Zisserman.
\newblock {A}utomated {F}lower {C}lassification over a {L}arge {N}umber of
  {C}lasses.
\newblock In \emph{Sixth Indian Conference on Computer Vision, Graphics {\&}
  Image Processing, {ICVGIP} 2008, Bhubaneswar, India, 16-19 December 2008},
  pages 722--729. {IEEE} Computer Society, 2008.
\newblock \doi{10.1109/ICVGIP.2008.47}.
\newblock URL \url{https://doi.org/10.1109/ICVGIP.2008.47}.

\bibitem[Olshausen and Field(1997)]{olshausen1997sparse}
Bruno~A Olshausen and David~J Field.
\newblock {S}parse coding with an overcomplete basis set: {A} strategy employed
  by {V}1?
\newblock \emph{Vision research}, 37\penalty0 (23):\penalty0 3311--3325, 1997.

\bibitem[Olsson et~al.(2022)Olsson, Elhage, Nanda, Joseph, DasSarma, Henighan,
  Mann, Askell, Bai, Chen, Conerly, Drain, Ganguli, Hatfield{-}Dodds,
  Hernandez, Johnston, Jones, Kernion, Lovitt, Ndousse, Amodei, Brown, Clark,
  Kaplan, McCandlish, and Olah]{olsson2022context}
Catherine Olsson, Nelson Elhage, Neel Nanda, Nicholas Joseph, Nova DasSarma,
  Tom Henighan, Ben Mann, Amanda Askell, Yuntao Bai, Anna Chen, Tom Conerly,
  Dawn Drain, Deep Ganguli, Zac Hatfield{-}Dodds, Danny Hernandez, Scott
  Johnston, Andy Jones, Jackson Kernion, Liane Lovitt, Kamal Ndousse, Dario
  Amodei, Tom Brown, Jack Clark, Jared Kaplan, Sam McCandlish, and Chris Olah.
\newblock {I}n-context {L}earning and {I}nduction {H}eads.
\newblock \emph{CoRR}, abs/2209.11895, 2022.
\newblock \doi{10.48550/ARXIV.2209.11895}.
\newblock URL \url{https://doi.org/10.48550/arXiv.2209.11895}.

\bibitem[OpenAI(2019)]{huggingface_gpt}
OpenAI.
\newblock huggingface {G}{P}{T} {M}odel {C}ard.
\newblock \url{https://huggingface.co/gpt2}, 2019.

\bibitem[OpenAI(2023{\natexlab{a}})]{gpt4v}
OpenAI.
\newblock {G}{P}{T}-4{V}(ision) {S}ystem {C}ard, 2023{\natexlab{a}}.
\newblock URL \url{https://cdn.openai.com/papers/GPTV_System_Card.pdf}.

\bibitem[OpenAI(2023{\natexlab{b}})]{openai2023gpt4}
OpenAI.
\newblock {GPT-4} {T}echnical {R}eport.
\newblock \emph{CoRR}, abs/2303.08774, 2023{\natexlab{b}}.
\newblock \doi{10.48550/ARXIV.2303.08774}.
\newblock URL \url{https://doi.org/10.48550/arXiv.2303.08774}.

\bibitem[Oquab et~al.(2023)Oquab, Darcet, Moutakanni, Vo, Szafraniec, Khalidov,
  Fernandez, Haziza, Massa, El{-}Nouby, Assran, Ballas, Galuba, Howes, Huang,
  Li, Misra, Rabbat, Sharma, Synnaeve, Xu, J{\'{e}}gou, Mairal, Labatut,
  Joulin, and Bojanowski]{Oquab2023-ff}
Maxime Oquab, Timoth{\'{e}}e Darcet, Th{\'{e}}o Moutakanni, Huy Vo, Marc
  Szafraniec, Vasil Khalidov, Pierre Fernandez, Daniel Haziza, Francisco Massa,
  Alaaeldin El{-}Nouby, Mahmoud Assran, Nicolas Ballas, Wojciech Galuba,
  Russell Howes, Po{-}Yao Huang, Shang{-}Wen Li, Ishan Misra, Michael~G.
  Rabbat, Vasu Sharma, Gabriel Synnaeve, Hu~Xu, Herv{\'{e}} J{\'{e}}gou, Julien
  Mairal, Patrick Labatut, Armand Joulin, and Piotr Bojanowski.
\newblock {D}{I}{N}{O}v2: {L}earning {R}obust {V}isual {F}eatures without
  {S}upervision.
\newblock \emph{CoRR}, abs/2304.07193, 2023.
\newblock \doi{10.48550/ARXIV.2304.07193}.
\newblock URL \url{https://doi.org/10.48550/arXiv.2304.07193}.

\bibitem[Pai et~al.(2023)Pai, Psenka, Chiu, Wu, Dobriban, and
  Ma]{pai2022pursuit}
Druv Pai, Michael Psenka, Chih{-}Yuan Chiu, Manxi Wu, Edgar Dobriban, and
  Yi~Ma.
\newblock {P}ursuit of a discriminative representation for multiple subspaces
  via sequential games.
\newblock \emph{J. Frankl. Inst.}, 360\penalty0 (6):\penalty0 4135--4171, 2023.
\newblock \doi{10.1016/J.JFRANKLIN.2023.02.011}.
\newblock URL \url{https://doi.org/10.1016/j.jfranklin.2023.02.011}.

\bibitem[Papyan et~al.(2018)Papyan, Romano, Sulam, and Elad]{Papyan2018-qc}
Vardan Papyan, Yaniv Romano, Jeremias Sulam, and Michael Elad.
\newblock {T}heoretical {F}oundations of {D}eep {L}earning via {S}parse
  {R}epresentations: {A} {M}ultilayer {S}parse {M}odel and {I}ts {C}onnection
  to {C}onvolutional {N}eural {N}etworks.
\newblock \emph{{IEEE} Signal Process. Mag.}, 35\penalty0 (4):\penalty0 72--89,
  2018.
\newblock \doi{10.1109/MSP.2018.2820224}.
\newblock URL \url{https://doi.org/10.1109/MSP.2018.2820224}.

\bibitem[Papyan et~al.(2020)Papyan, Han, and Donoho]{papyan2020prevalence}
Vardan Papyan, X.~Y. Han, and David~L. Donoho.
\newblock {P}revalence of {N}eural {C}ollapse during the terminal phase of deep
  learning training.
\newblock \emph{CoRR}, abs/2008.08186, 2020.
\newblock URL \url{https://arxiv.org/abs/2008.08186}.

\bibitem[Parkhi et~al.(2012)Parkhi, Vedaldi, Zisserman, and
  Jawahar]{parkhi2012cats}
Omkar~M. Parkhi, Andrea Vedaldi, Andrew Zisserman, and C.~V. Jawahar.
\newblock {C}ats and dogs.
\newblock In \emph{2012 {IEEE} Conference on Computer Vision and Pattern
  Recognition, Providence, RI, USA, June 16-21, 2012}, pages 3498--3505. {IEEE}
  Computer Society, 2012.
\newblock \doi{10.1109/CVPR.2012.6248092}.
\newblock URL \url{https://doi.org/10.1109/CVPR.2012.6248092}.

\bibitem[Peebles and Xie(2022)]{Peebles2022DiT}
William Peebles and Saining Xie.
\newblock Scalable diffusion models with transformers.
\newblock \emph{arXiv preprint arXiv:2212.09748}, 2022.

\bibitem[Phuong and Hutter(2022)]{phuong2022formal}
Mary Phuong and Marcus Hutter.
\newblock {F}ormal {A}lgorithms for {T}ransformers.
\newblock \emph{CoRR}, abs/2207.09238, 2022.
\newblock \doi{10.48550/ARXIV.2207.09238}.
\newblock URL \url{https://doi.org/10.48550/arXiv.2207.09238}.

\bibitem[Qin and Risteski(2023)]{Qin2023-wn}
Yilong Qin and Andrej Risteski.
\newblock {F}it {L}ike {Y}ou {S}ample: {S}ample-efficient {G}eneralized {S}core
  {M}atching from {F}ast {M}ixing {M}arkov {C}hains.
\newblock \emph{CoRR}, abs/2306.09332, 2023.
\newblock \doi{10.48550/ARXIV.2306.09332}.
\newblock URL \url{https://doi.org/10.48550/arXiv.2306.09332}.

\bibitem[Radford et~al.(2018)Radford, Narasimhan, Salimans, Sutskever,
  et~al.]{radford2018improving}
Alec Radford, Karthik Narasimhan, Tim Salimans, Ilya Sutskever, et~al.
\newblock {I}mproving language understanding by generative pre-training.
\newblock 2018.

\bibitem[Radford et~al.(2019)Radford, Wu, Child, Luan, Amodei, Sutskever,
  et~al.]{radford2019language}
Alec Radford, Jeffrey Wu, Rewon Child, David Luan, Dario Amodei, Ilya
  Sutskever, et~al.
\newblock {L}anguage models are unsupervised multitask learners.
\newblock \emph{OpenAI blog}, 1\penalty0 (8):\penalty0 9, 2019.

\bibitem[Radford et~al.(2021)Radford, Kim, Hallacy, Ramesh, Goh, Agarwal,
  Sastry, Askell, Mishkin, Clark, Krueger, and Sutskever]{Radford2021-ir}
Alec Radford, Jong~Wook Kim, Chris Hallacy, Aditya Ramesh, Gabriel Goh,
  Sandhini Agarwal, Girish Sastry, Amanda Askell, Pamela Mishkin, Jack Clark,
  Gretchen Krueger, and Ilya Sutskever.
\newblock {L}earning {T}ransferable {V}isual {M}odels {F}rom {N}atural
  {L}anguage {S}upervision.
\newblock In Marina Meila and Tong Zhang, editors, \emph{Proceedings of the
  38th International Conference on Machine Learning, {ICML} 2021, 18-24 July
  2021, Virtual Event}, volume 139 of \emph{Proceedings of Machine Learning
  Research}, pages 8748--8763. {PMLR}, 2021.
\newblock URL \url{http://proceedings.mlr.press/v139/radford21a.html}.

\bibitem[Radford et~al.(2023)Radford, Kim, Xu, Brockman, McLeavey, and
  Sutskever]{Radford2022-en}
Alec Radford, Jong~Wook Kim, Tao Xu, Greg Brockman, Christine McLeavey, and
  Ilya Sutskever.
\newblock {R}obust {S}peech {R}ecognition via {L}arge-scale {W}eak
  {S}upervision.
\newblock In Andreas Krause, Emma Brunskill, Kyunghyun Cho, Barbara Engelhardt,
  Sivan Sabato, and Jonathan Scarlett, editors, \emph{International Conference
  on Machine Learning, {ICML} 2023, 23-29 July 2023, Honolulu, Hawaii, {USA}},
  volume 202 of \emph{Proceedings of Machine Learning Research}, pages
  28492--28518. {PMLR}, 2023.
\newblock URL \url{https://proceedings.mlr.press/v202/radford23a.html}.

\bibitem[Raphan and Simoncelli(2011)]{Raphan2011-by}
Martin Raphan and Eero~P. Simoncelli.
\newblock {L}east {S}quares {E}stimation {W}ithout {P}riors or {S}upervision.
\newblock \emph{Neural Comput.}, 23\penalty0 (2):\penalty0 374--420, 2011.
\newblock \doi{10.1162/NECO\_A\_00076}.
\newblock URL \url{https://doi.org/10.1162/NECO\_a\_00076}.

\bibitem[Romano et~al.(2017)Romano, Elad, and Milanfar]{Romano2017-rp}
Yaniv Romano, Michael Elad, and Peyman Milanfar.
\newblock {T}he {L}ittle {E}ngine {T}hat {C}ould: {R}egularization by
  {D}enoising {(RED)}.
\newblock \emph{{SIAM} J. Imaging Sci.}, 10\penalty0 (4):\penalty0 1804--1844,
  2017.
\newblock \doi{10.1137/16M1102884}.
\newblock URL \url{https://doi.org/10.1137/16M1102884}.

\bibitem[Rombach et~al.(2022)Rombach, Blattmann, Lorenz, Esser, and
  Ommer]{rombach2022high}
Robin Rombach, Andreas Blattmann, Dominik Lorenz, Patrick Esser, and
  Bj{\"{o}}rn Ommer.
\newblock {H}igh-resolution {I}mage {S}ynthesis with {L}atent {D}iffusion
  {M}odels.
\newblock In \emph{{IEEE/CVF} Conference on Computer Vision and Pattern
  Recognition, {CVPR} 2022, New Orleans, LA, USA, June 18-24, 2022}, pages
  10674--10685. {IEEE}, 2022.
\newblock \doi{10.1109/CVPR52688.2022.01042}.
\newblock URL \url{https://doi.org/10.1109/CVPR52688.2022.01042}.

\bibitem[Ruderman(1994)]{Ruderman1994-he}
Daniel~L Ruderman.
\newblock {T}he statistics of natural images.
\newblock \emph{Network: Computation in Neural Systems}, 5\penalty0
  (4):\penalty0 517--548, January 1994.
\newblock ISSN 0954-898X.
\newblock \doi{10.1088/0954-898X\_5\_4\_006}.
\newblock URL \url{https://doi.org/10.1088/0954-898X_5_4_006}.

\bibitem[Sabour et~al.(2017)Sabour, Frosst, and Hinton]{Sabour2017-xo}
Sara Sabour, Nicholas Frosst, and Geoffrey~E. Hinton.
\newblock {D}ynamic {R}outing {B}etween {C}apsules.
\newblock In Isabelle Guyon, Ulrike von Luxburg, Samy Bengio, Hanna~M. Wallach,
  Rob Fergus, S.~V.~N. Vishwanathan, and Roman Garnett, editors, \emph{Advances
  in Neural Information Processing Systems 30: Annual Conference on Neural
  Information Processing Systems 2017, December 4-9, 2017, Long Beach, CA,
  {USA}}, pages 3856--3866, 2017.
\newblock URL
  \url{https://proceedings.neurips.cc/paper/2017/hash/2cad8fa47bbef282badbb8de5374b894-Abstract.html}.

\bibitem[Saharia et~al.(2022)Saharia, Chan, Saxena, Li, Whang, Denton,
  Ghasemipour, Lopes, Ayan, Salimans, Ho, Fleet, and Norouzi]{Saharia2022-na}
Chitwan Saharia, William Chan, Saurabh Saxena, Lala Li, Jay Whang, Emily~L.
  Denton, Seyed Kamyar~Seyed Ghasemipour, Raphael~Gontijo Lopes, Burcu~Karagol
  Ayan, Tim Salimans, Jonathan Ho, David~J. Fleet, and Mohammad Norouzi.
\newblock {P}hotorealistic {T}ext-to-image {D}iffusion {M}odels with {D}eep
  {L}anguage {U}nderstanding.
\newblock In \emph{NeurIPS}, 2022.
\newblock URL
  \url{http://papers.nips.cc/paper\_files/paper/2022/hash/ec795aeadae0b7d230fa35cbaf04c041-Abstract-Conference.html}.

\bibitem[Sander et~al.(2022)Sander, Ablin, Blondel, and
  Peyr{\'{e}}]{DBLP:conf/aistats/SanderABP22}
Michael~E. Sander, Pierre Ablin, Mathieu Blondel, and Gabriel Peyr{\'{e}}.
\newblock Sinkformers: Transformers with doubly stochastic attention.
\newblock In Gustau Camps{-}Valls, Francisco J.~R. Ruiz, and Isabel Valera,
  editors, \emph{International Conference on Artificial Intelligence and
  Statistics, {AISTATS} 2022, 28-30 March 2022, Virtual Event}, volume 151 of
  \emph{Proceedings of Machine Learning Research}, pages 3515--3530. {PMLR},
  2022.
\newblock URL \url{https://proceedings.mlr.press/v151/sander22a.html}.

\bibitem[S{\"a}rkk{\"a} and Solin(2019)]{sarkka2019applied}
Simo S{\"a}rkk{\"a} and Arno Solin.
\newblock \emph{{A}pplied stochastic differential equations}, volume~10.
\newblock Cambridge University Press, 2019.

\bibitem[Sennrich et~al.(2016)Sennrich, Haddow, and Birch]{sennrich2015neural}
Rico Sennrich, Barry Haddow, and Alexandra Birch.
\newblock {N}eural {M}achine {T}ranslation of {R}are {W}ords with {S}ubword
  {U}nits.
\newblock In \emph{Proceedings of the 54th Annual Meeting of the Association
  for Computational Linguistics, {ACL} 2016, August 7-12, 2016, Berlin,
  Germany, Volume 1: Long Papers}. The Association for Computer Linguistics,
  2016.
\newblock \doi{10.18653/V1/P16-1162}.
\newblock URL \url{https://doi.org/10.18653/v1/p16-1162}.

\bibitem[Shi and Malik(2000)]{shi2000normalized}
Jianbo Shi and Jitendra Malik.
\newblock {N}ormalized {C}uts and {I}mage {S}egmentation.
\newblock \emph{{IEEE} Trans. Pattern Anal. Mach. Intell.}, 22\penalty0
  (8):\penalty0 888--905, 2000.
\newblock \doi{10.1109/34.868688}.
\newblock URL \url{https://doi.org/10.1109/34.868688}.

\bibitem[Shwartz{-}Ziv and LeCun(2023)]{shwartz2023compress}
Ravid Shwartz{-}Ziv and Yann LeCun.
\newblock {T}o {C}ompress or {N}ot to {C}ompress - {S}elf-supervised {L}earning
  and {I}nformation {T}heory: {A} {R}eview.
\newblock \emph{CoRR}, abs/2304.09355, 2023.
\newblock \doi{10.48550/ARXIV.2304.09355}.
\newblock URL \url{https://doi.org/10.48550/arXiv.2304.09355}.

\bibitem[{Simons Institute}(2023)]{Sutskever2023Observation}
{Simons Institute}.
\newblock {A}n {O}bservation on {G}eneralization, Aug. 2023.
\newblock URL \url{https://www.youtube.com/watch?v=AKMuA_TVz3A}.

\bibitem[Sohl{-}Dickstein et~al.(2015)Sohl{-}Dickstein, Weiss, Maheswaranathan,
  and Ganguli]{Sohl-Dickstein2015-kz}
Jascha Sohl{-}Dickstein, Eric~A. Weiss, Niru Maheswaranathan, and Surya
  Ganguli.
\newblock {D}eep {U}nsupervised {L}earning using {N}onequilibrium
  {T}hermodynamics.
\newblock In Francis~R. Bach and David~M. Blei, editors, \emph{Proceedings of
  the 32nd International Conference on Machine Learning, {ICML} 2015, Lille,
  France, 6-11 July 2015}, volume~37 of \emph{{JMLR} Workshop and Conference
  Proceedings}, pages 2256--2265. JMLR.org, 2015.
\newblock URL \url{http://proceedings.mlr.press/v37/sohl-dickstein15.html}.

\bibitem[Song et~al.(2021{\natexlab{a}})Song, Meng, and Ermon]{Song2020-hb}
Jiaming Song, Chenlin Meng, and Stefano Ermon.
\newblock {D}enoising {D}iffusion {I}mplicit {M}odels.
\newblock In \emph{9th International Conference on Learning Representations,
  {ICLR} 2021, Virtual Event, Austria, May 3-7, 2021}. OpenReview.net,
  2021{\natexlab{a}}.
\newblock URL \url{https://openreview.net/forum?id=St1giarCHLP}.

\bibitem[Song and Ermon(2019)]{Song2019-ww}
Yang Song and Stefano Ermon.
\newblock {G}enerative {M}odeling by {E}stimating {G}radients of the {D}ata
  {D}istribution.
\newblock In Hanna~M. Wallach, Hugo Larochelle, Alina Beygelzimer, Florence
  d'Alch{\'{e}}{-}Buc, Emily~B. Fox, and Roman Garnett, editors, \emph{Advances
  in Neural Information Processing Systems 32: Annual Conference on Neural
  Information Processing Systems 2019, NeurIPS 2019, December 8-14, 2019,
  Vancouver, BC, Canada}, pages 11895--11907, 2019.
\newblock URL
  \url{https://proceedings.neurips.cc/paper/2019/hash/3001ef257407d5a371a96dcd947c7d93-Abstract.html}.

\bibitem[Song et~al.(2021{\natexlab{b}})Song, Sohl{-}Dickstein, Kingma, Kumar,
  Ermon, and Poole]{Song2020-xo}
Yang Song, Jascha Sohl{-}Dickstein, Diederik~P. Kingma, Abhishek Kumar, Stefano
  Ermon, and Ben Poole.
\newblock {S}core-based {G}enerative {M}odeling through {S}tochastic
  {D}ifferential {E}quations.
\newblock In \emph{9th International Conference on Learning Representations,
  {ICLR} 2021, Virtual Event, Austria, May 3-7, 2021}. OpenReview.net,
  2021{\natexlab{b}}.
\newblock URL \url{https://openreview.net/forum?id=PxTIG12RRHS}.

\bibitem[Song et~al.(2023)Song, Dhariwal, Chen, and
  Sutskever]{song2023consistency}
Yang Song, Prafulla Dhariwal, Mark Chen, and Ilya Sutskever.
\newblock {C}onsistency {M}odels.
\newblock In Andreas Krause, Emma Brunskill, Kyunghyun Cho, Barbara Engelhardt,
  Sivan Sabato, and Jonathan Scarlett, editors, \emph{International Conference
  on Machine Learning, {ICML} 2023, 23-29 July 2023, Honolulu, Hawaii, {USA}},
  volume 202 of \emph{Proceedings of Machine Learning Research}, pages
  32211--32252. {PMLR}, 2023.
\newblock URL \url{https://proceedings.mlr.press/v202/song23a.html}.

\bibitem[Spielman et~al.(2012)Spielman, Wang, and Wright]{Spielman2012-le}
Daniel~A. Spielman, Huan Wang, and John Wright.
\newblock {E}xact {R}ecovery of {S}parsely-used {D}ictionaries.
\newblock In Shie Mannor, Nathan Srebro, and Robert~C. Williamson, editors,
  \emph{{COLT} 2012 - The 25th Annual Conference on Learning Theory, June
  25-27, 2012, Edinburgh, Scotland}, volume~23 of \emph{{JMLR} Proceedings},
  pages 37.1--37.18. JMLR.org, 2012.
\newblock URL \url{http://proceedings.mlr.press/v23/spielman12/spielman12.pdf}.

\bibitem[Stein(1981)]{Stein1981-jv}
Charles~M Stein.
\newblock {E}stimation of the {M}ean of a {M}ultivariate {N}ormal
  {D}istribution.
\newblock \emph{The Annals of Statistics}, 9\penalty0 (6):\penalty0 1135--1151,
  November 1981.
\newblock ISSN 0090-5364, 2168-8966.
\newblock \doi{10.1214/aos/1176345632}.
\newblock URL
  \url{https://projecteuclid.org/journals/annals-of-statistics/volume-9/issue-6/Estimation-of-the-Mean-of-a-Multivariate-Normal-Distribution/10.1214/aos/1176345632.full}.

\bibitem[Sun et~al.(2018)Sun, Nasrabadi, and Tran]{sun2018supervised}
Xiaoxia Sun, Nasser~M. Nasrabadi, and Trac~D. Tran.
\newblock {S}upervised {D}eep {S}parse {C}oding {N}etworks.
\newblock In \emph{2018 {IEEE} International Conference on Image Processing,
  {ICIP} 2018, Athens, Greece, October 7-10, 2018}, pages 346--350. {IEEE},
  2018.
\newblock \doi{10.1109/ICIP.2018.8451701}.
\newblock URL \url{https://doi.org/10.1109/ICIP.2018.8451701}.

\bibitem[Tian et~al.(2020)Tian, Sun, Poole, Krishnan, Schmid, and
  Isola]{tian2020makes}
Yonglong Tian, Chen Sun, Ben Poole, Dilip Krishnan, Cordelia Schmid, and
  Phillip Isola.
\newblock {W}hat {M}akes for {G}ood {V}iews for {C}ontrastive {L}earning?
\newblock In Hugo Larochelle, Marc'Aurelio Ranzato, Raia Hadsell,
  Maria{-}Florina Balcan, and Hsuan{-}Tien Lin, editors, \emph{Advances in
  Neural Information Processing Systems 33: Annual Conference on Neural
  Information Processing Systems 2020, NeurIPS 2020, December 6-12, 2020,
  virtual}, 2020.
\newblock URL
  \url{https://proceedings.neurips.cc/paper/2020/hash/4c2e5eaae9152079b9e95845750bb9ab-Abstract.html}.

\bibitem[Tibshirani(1996)]{tibshirani1996regression}
Robert Tibshirani.
\newblock Regression shrinkage and selection via the lasso.
\newblock \emph{Journal of the Royal Statistical Society Series B: Statistical
  Methodology}, 58\penalty0 (1):\penalty0 267--288, 1996.

\bibitem[Tishby and Zaslavsky(2015)]{Tishby-ITW2015}
Naftali Tishby and Noga Zaslavsky.
\newblock {D}eep learning and the information bottleneck principle.
\newblock In \emph{2015 {IEEE} Information Theory Workshop, {ITW} 2015,
  Jerusalem, Israel, April 26 - May 1, 2015}, pages 1--5. {IEEE}, 2015.
\newblock \doi{10.1109/ITW.2015.7133169}.
\newblock URL \url{https://doi.org/10.1109/ITW.2015.7133169}.

\bibitem[Tolooshams and Ba(2022)]{tolooshams2021stable}
Bahareh Tolooshams and Demba~E. Ba.
\newblock {S}table and {I}nterpretable {U}nrolled {D}ictionary {L}earning.
\newblock \emph{Trans. Mach. Learn. Res.}, 2022, 2022.
\newblock URL \url{https://openreview.net/forum?id=e3S0Bl2RO8}.

\bibitem[Tolstikhin et~al.(2021)Tolstikhin, Houlsby, Kolesnikov, Beyer, Zhai,
  Unterthiner, Yung, Steiner, Keysers, Uszkoreit, Lucic, and
  Dosovitskiy]{Tolstikhin2021-yh}
Ilya~O. Tolstikhin, Neil Houlsby, Alexander Kolesnikov, Lucas Beyer, Xiaohua
  Zhai, Thomas Unterthiner, Jessica Yung, Andreas Steiner, Daniel Keysers,
  Jakob Uszkoreit, Mario Lucic, and Alexey Dosovitskiy.
\newblock {M}{L}{P}-{M}ixer: {A}n all-{M}{L}{P} {A}rchitecture for {V}ision.
\newblock In Marc'Aurelio Ranzato, Alina Beygelzimer, Yann~N. Dauphin, Percy
  Liang, and Jennifer~Wortman Vaughan, editors, \emph{Advances in Neural
  Information Processing Systems 34: Annual Conference on Neural Information
  Processing Systems 2021, NeurIPS 2021, December 6-14, 2021, virtual}, pages
  24261--24272, 2021.
\newblock URL
  \url{https://proceedings.neurips.cc/paper/2021/hash/cba0a4ee5ccd02fda0fe3f9a3e7b89fe-Abstract.html}.

\bibitem[Tong et~al.(2023)Tong, Dai, Wu, Li, Yi, and Ma]{tong2022incremental}
Shengbang Tong, Xili Dai, Ziyang Wu, Mingyang Li, Brent Yi, and Yi~Ma.
\newblock {I}ncremental {L}earning of {S}tructured {M}emory via {C}losed-loop
  {T}ranscription.
\newblock In \emph{The Eleventh International Conference on Learning
  Representations, {ICLR} 2023, Kigali, Rwanda, May 1-5, 2023}. OpenReview.net,
  2023.
\newblock URL \url{https://openreview.net/pdf?id=XrgjF5-M3xi}.

\bibitem[Trockman et~al.(2023)Trockman, Willmott, and Kolter]{Trockman2022-po}
Asher Trockman, Devin Willmott, and J.~Zico Kolter.
\newblock {U}nderstanding the {C}ovariance {S}tructure of {C}onvolutional
  {F}ilters.
\newblock In \emph{The Eleventh International Conference on Learning
  Representations, {ICLR} 2023, Kigali, Rwanda, May 1-5, 2023}. OpenReview.net,
  2023.
\newblock URL \url{https://openreview.net/pdf?id=WGApODQvwRg}.

\bibitem[Vaswani et~al.(2017)Vaswani, Shazeer, Parmar, Uszkoreit, Jones, Gomez,
  Kaiser, and Polosukhin]{vaswani2017attention}
Ashish Vaswani, Noam Shazeer, Niki Parmar, Jakob Uszkoreit, Llion Jones,
  Aidan~N. Gomez, Lukasz Kaiser, and Illia Polosukhin.
\newblock {A}ttention is {A}ll you {N}eed.
\newblock In Isabelle Guyon, Ulrike von Luxburg, Samy Bengio, Hanna~M. Wallach,
  Rob Fergus, S.~V.~N. Vishwanathan, and Roman Garnett, editors, \emph{Advances
  in Neural Information Processing Systems 30: Annual Conference on Neural
  Information Processing Systems 2017, December 4-9, 2017, Long Beach, CA,
  {USA}}, pages 5998--6008, 2017.
\newblock URL
  \url{https://proceedings.neurips.cc/paper/2017/hash/3f5ee243547dee91fbd053c1c4a845aa-Abstract.html}.

\bibitem[Venkatakrishnan et~al.(2013)Venkatakrishnan, Bouman, and
  Wohlberg]{Venkatakrishnan2013-rt}
Singanallur~V. Venkatakrishnan, Charles~A. Bouman, and Brendt Wohlberg.
\newblock {P}lug-and-play priors for model based reconstruction.
\newblock In \emph{{IEEE} Global Conference on Signal and Information
  Processing, GlobalSIP 2013, Austin, TX, USA, December 3-5, 2013}, pages
  945--948. {IEEE}, 2013.
\newblock \doi{10.1109/GLOBALSIP.2013.6737048}.
\newblock URL \url{https://doi.org/10.1109/GlobalSIP.2013.6737048}.

\bibitem[Vershynin(2018)]{vershynin2018high}
Roman Vershynin.
\newblock \emph{{H}igh-dimensional probability: {A}n introduction with
  applications in data science}, volume~47.
\newblock Cambridge university press, 2018.

\bibitem[Vidal(2022)]{vidal2022attention}
Rene Vidal.
\newblock {A}ttention: {S}elf-expression {I}s {A}ll {Y}ou {N}eed, 2022.
\newblock URL \url{https://openreview.net/forum?id=MmujBClawFo}.
\newblock Unpublished; available:
  \url{https://openreview.net/forum?id=MmujBClawFo}.

\bibitem[Vidal et~al.(2016)Vidal, Ma, and Sastry]{Vidal:Springer16}
Ren{\'{e}} Vidal, Yi~Ma, and S.~Shankar Sastry.
\newblock \emph{{G}eneralized {P}rincipal {C}omponent {A}nalysis}, volume~40 of
  \emph{Interdisciplinary applied mathematics}.
\newblock Springer, 2016.
\newblock ISBN 978-0-387-87810-2.
\newblock \doi{10.1007/978-0-387-87811-9}.
\newblock URL \url{https://doi.org/10.1007/978-0-387-87811-9}.

\bibitem[Vincent(2011)]{Vincent2011-dr}
Pascal Vincent.
\newblock {A} {C}onnection {B}etween {S}core {M}atching and {D}enoising
  {A}utoencoders.
\newblock \emph{Neural Comput.}, 23\penalty0 (7):\penalty0 1661--1674, 2011.
\newblock \doi{10.1162/NECO\_A\_00142}.
\newblock URL \url{https://doi.org/10.1162/NECO\_a\_00142}.

\bibitem[Wakin et~al.(2005)Wakin, Donoho, Choi, and
  Baraniuk]{wakin2005multiscale}
Michael~B Wakin, David~L Donoho, Hyeokho Choi, and Richard~G Baraniuk.
\newblock {T}he multiscale structure of non-differentiable image manifolds.
\newblock In \emph{Wavelets XI}, volume 5914, pages 413--429. SPIE, 2005.

\bibitem[Wang et~al.(2019)Wang, Singh, Michael, Hill, Levy, and
  Bowman]{wang2018glue}
Alex Wang, Amanpreet Singh, Julian Michael, Felix Hill, Omer Levy, and
  Samuel~R. Bowman.
\newblock {GLUE:} {A} {M}ulti-task {B}enchmark and {A}nalysis {P}latform for
  {N}atural {L}anguage {U}nderstanding.
\newblock In \emph{7th International Conference on Learning Representations,
  {ICLR} 2019, New Orleans, LA, USA, May 6-9, 2019}. OpenReview.net, 2019.
\newblock URL \url{https://openreview.net/forum?id=rJ4km2R5t7}.

\bibitem[Wang et~al.(2022)Wang, Guo, Deng, and Lu]{wang2022rethinking}
Haoqing Wang, Xun Guo, Zhi{-}Hong Deng, and Yan Lu.
\newblock {R}ethinking {M}inimal {S}ufficient {R}epresentation in {C}ontrastive
  {L}earning.
\newblock In \emph{{IEEE/CVF} Conference on Computer Vision and Pattern
  Recognition, {CVPR} 2022, New Orleans, LA, USA, June 18-24, 2022}, pages
  16020--16029. {IEEE}, 2022.
\newblock \doi{10.1109/CVPR52688.2022.01557}.
\newblock URL \url{https://doi.org/10.1109/CVPR52688.2022.01557}.

\bibitem[Wang et~al.(2023{\natexlab{a}})Wang, Li, Yaras, Zhu, Balzano, Hu, and
  Qu]{wang2023understanding}
Peng Wang, Xiao Li, Can Yaras, Zhihui Zhu, Laura Balzano, Wei Hu, and Qing Qu.
\newblock {U}nderstanding {D}eep {R}epresentation {L}earning via {L}ayerwise
  {F}eature {C}ompression and {D}iscrimination.
\newblock \emph{CoRR}, abs/2311.02960, 2023{\natexlab{a}}.
\newblock \doi{10.48550/ARXIV.2311.02960}.
\newblock URL \url{https://doi.org/10.48550/arXiv.2311.02960}.

\bibitem[Wang et~al.(2023{\natexlab{b}})Wang, Girdhar, Yu, and
  Misra]{wang2023cut}
Xudong Wang, Rohit Girdhar, Stella~X. Yu, and Ishan Misra.
\newblock {C}ut and {L}earn for {U}nsupervised {O}bject {D}etection and
  {I}nstance {S}egmentation.
\newblock In \emph{{IEEE/CVF} Conference on Computer Vision and Pattern
  Recognition, {CVPR} 2023, Vancouver, BC, Canada, June 17-24, 2023}, pages
  3124--3134. {IEEE}, 2023{\natexlab{b}}.
\newblock \doi{10.1109/CVPR52729.2023.00305}.
\newblock URL \url{https://doi.org/10.1109/CVPR52729.2023.00305}.

\bibitem[Weerdt et~al.(2023)Weerdt, Eldar, and Deligiannis]{De_Weerdt2023-ej}
Brent~De Weerdt, Yonina~C. Eldar, and Nikos Deligiannis.
\newblock {D}esigning {T}ransformer {N}etworks for {S}parse {R}ecovery of
  {S}equential {D}ata {U}sing {D}eep {U}nfolding.
\newblock In \emph{{IEEE} International Conference on Acoustics, Speech and
  Signal Processing {ICASSP} 2023, Rhodes Island, Greece, June 4-10, 2023},
  pages 1--5. {IEEE}, 2023.
\newblock \doi{10.1109/ICASSP49357.2023.10094712}.
\newblock URL \url{https://doi.org/10.1109/ICASSP49357.2023.10094712}.

\bibitem[Wright and Ma(2022)]{Wright-Ma-2022}
John Wright and Yi~Ma.
\newblock \emph{{H}igh-Dimensional {D}ata {A}nalysis with {L}ow-Dimensional
  {M}odels: {P}rinciples, {C}omputation, and {A}pplications}.
\newblock Cambridge University Press, 2022.

\bibitem[Wu et~al.(2018)Wu, Xiong, Yu, and Lin]{wu2018unsupervised}
Zhirong Wu, Yuanjun Xiong, Stella~X. Yu, and Dahua Lin.
\newblock {U}nsupervised {F}eature {L}earning via {N}on-parametric {I}nstance
  {D}iscrimination.
\newblock In \emph{2018 {IEEE} Conference on Computer Vision and Pattern
  Recognition, {CVPR} 2018, Salt Lake City, UT, USA, June 18-22, 2018}, pages
  3733--3742. Computer Vision Foundation / {IEEE} Computer Society, 2018.
\newblock \doi{10.1109/CVPR.2018.00393}.
\newblock URL
  \url{http://openaccess.thecvf.com/content\_cvpr\_2018/html/Wu\_Unsupervised\_Feature\_Learning\_CVPR\_2018\_paper.html}.

\bibitem[Yang et~al.(2024)Yang, Li, Pai, Zhou, Ma, Yu, and
  Xie]{yang2024scaling}
Jinrui Yang, Xianhang Li, Druv Pai, Yuyin Zhou, Yi~Ma, Yaodong Yu, and Cihang
  Xie.
\newblock Scaling white-box transformers for vision.
\newblock \emph{arXiv preprint arXiv:2405.20299}, 2024.

\bibitem[Yang et~al.(2022)Yang, Huang, and Wipf]{Yang2022-yi}
Yongyi Yang, Zengfeng Huang, and David~P. Wipf.
\newblock {T}ransformers from an {O}ptimization {P}erspective.
\newblock In \emph{NeurIPS}, 2022.
\newblock URL
  \url{http://papers.nips.cc/paper\_files/paper/2022/hash/efd1e27afcb94addd03b9e14c8d9f78f-Abstract-Conference.html}.

\bibitem[Yaras et~al.(2022)Yaras, Wang, Zhu, Balzano, and Qu]{yaras2022neural}
Can Yaras, Peng Wang, Zhihui Zhu, Laura Balzano, and Qing Qu.
\newblock {N}eural {C}ollapse with {N}ormalized {F}eatures: {A} {G}eometric
  {A}nalysis over the {R}iemannian {M}anifold.
\newblock In \emph{NeurIPS}, 2022.
\newblock URL
  \url{http://papers.nips.cc/paper\_files/paper/2022/hash/4b3cc0d1c897ebcf71aca92a4a26ac83-Abstract-Conference.html}.

\bibitem[Yu et~al.(2020)Yu, Chan, You, Song, and Ma]{OriginalMCR2}
Yaodong Yu, Kwan Ho~Ryan Chan, Chong You, Chaobing Song, and Yi~Ma.
\newblock {L}earning {D}iverse and {D}iscriminative {R}epresentations via the
  {P}rinciple of {M}aximal {C}oding {R}ate {R}eduction.
\newblock In Hugo Larochelle, Marc'Aurelio Ranzato, Raia Hadsell,
  Maria{-}Florina Balcan, and Hsuan{-}Tien Lin, editors, \emph{Advances in
  Neural Information Processing Systems 33: Annual Conference on Neural
  Information Processing Systems 2020, NeurIPS 2020, December 6-12, 2020,
  virtual}, 2020.
\newblock URL
  \url{https://proceedings.neurips.cc/paper/2020/hash/6ad4174eba19ecb5fed17411a34ff5e6-Abstract.html}.

\bibitem[Zarka et~al.(2020)Zarka, Thiry, Angles, and Mallat]{zarka2019deep}
John Zarka, Louis Thiry, Tom{\'{a}}s Angles, and St{\'{e}}phane Mallat.
\newblock {D}eep {N}etwork {C}lassification by {S}cattering and {H}omotopy
  {D}ictionary {L}earning.
\newblock In \emph{8th International Conference on Learning Representations,
  {ICLR} 2020, Addis Ababa, Ethiopia, April 26-30, 2020}. OpenReview.net, 2020.
\newblock URL \url{https://openreview.net/forum?id=SJxWS64FwH}.

\bibitem[Zhai et~al.(2020{\natexlab{a}})Zhai, Mehta, Zhou, and
  Ma]{zhai2020understanding}
Yuexiang Zhai, Hermish Mehta, Zhengyuan Zhou, and Yi~Ma.
\newblock {U}nderstanding l4-based {D}ictionary {L}earning: {I}nterpretation,
  {S}tability, and {R}obustness.
\newblock In \emph{8th International Conference on Learning Representations,
  {ICLR} 2020, Addis Ababa, Ethiopia, April 26-30, 2020}. OpenReview.net,
  2020{\natexlab{a}}.
\newblock URL \url{https://openreview.net/forum?id=SJeY-1BKDS}.

\bibitem[Zhai et~al.(2020{\natexlab{b}})Zhai, Yang, Liao, Wright, and
  Ma]{zhai2020complete}
Yuexiang Zhai, Zitong Yang, Zhenyu Liao, John Wright, and Yi~Ma.
\newblock {C}omplete dictionary learning via l 4-norm maximization over the
  orthogonal group.
\newblock \emph{The Journal of Machine Learning Research}, 21\penalty0
  (1):\penalty0 6622--6689, 2020{\natexlab{b}}.

\bibitem[Zhu et~al.(2015)Zhu, Kiros, Zemel, Salakhutdinov, Urtasun, Torralba,
  and Fidler]{zhu2015aligning}
Yukun Zhu, Ryan Kiros, Richard~S. Zemel, Ruslan Salakhutdinov, Raquel Urtasun,
  Antonio Torralba, and Sanja Fidler.
\newblock {A}ligning {B}ooks and {M}ovies: {T}owards {S}tory-like {V}isual
  {E}xplanations by {W}atching {M}ovies and {R}eading {B}ooks.
\newblock In \emph{2015 {IEEE} International Conference on Computer Vision,
  {ICCV} 2015, Santiago, Chile, December 7-13, 2015}, pages 19--27. {IEEE}
  Computer Society, 2015.
\newblock \doi{10.1109/ICCV.2015.11}.
\newblock URL \url{https://doi.org/10.1109/ICCV.2015.11}.

\bibitem[Zhu et~al.(2021)Zhu, Ding, Zhou, Li, You, Sulam, and
  Qu]{zhu2021geometric}
Zhihui Zhu, Tianyu Ding, Jinxin Zhou, Xiao Li, Chong You, Jeremias Sulam, and
  Qing Qu.
\newblock {A} {G}eometric {A}nalysis of {N}eural {C}ollapse with
  {U}nconstrained {F}eatures.
\newblock In Marc'Aurelio Ranzato, Alina Beygelzimer, Yann~N. Dauphin, Percy
  Liang, and Jennifer~Wortman Vaughan, editors, \emph{Advances in Neural
  Information Processing Systems 34: Annual Conference on Neural Information
  Processing Systems 2021, NeurIPS 2021, December 6-14, 2021, virtual}, pages
  29820--29834, 2021.
\newblock URL
  \url{https://proceedings.neurips.cc/paper/2021/hash/f92586a25bb3145facd64ab20fd554ff-Abstract.html}.

\end{thebibliography}

\end{document}